\def\1{\bm{1}}
\DeclareMathAlphabet{\mathsfit}{\encodingdefault}{\sfdefault}{m}{sl}
\SetMathAlphabet{\mathsfit}{bold}{\encodingdefault}{\sfdefault}{bx}{n}
\def\sI{{\mathbb{I}}}
\newcommand{\R}{\mathbb{R}}
\DeclareMathOperator*{\argmin}{arg\,min}
\DeclareMathOperator{\sign}{sign}
\DeclareMathOperator{\Tr}{Tr}
\newcommand*{\one}{{\bm 1}}
\newcommand{\rank}{\mathrm{rank}}
\newcommand{\norm}[1]{\left\|#1\right\|}
\def\abs#1{\left| #1 \right|}
\newcommand{\diff}{\mathrm{d}}
\newcommand*{\E}{\mathbb{E}}
\newcommand*{\prob}{\mathbb{P}}
\newcommand{\eps}{\epsilon}
\newcommand{\dist}{\mathrm{dist}}
\newcommand{\xinit}{x_{\text{init}}}
\newcommand{\projn}[1][x]{P_{#1, \Gamma}}
\newcommand{\projt}[1][x]{P_{#1, \Gamma}^\perp}
\newcommand{\maxloss}[1][\rho]{L^{\textup{Max}}_{#1}}
\newcommand{\avgloss}[1][\rho]{L^{\textup{Avg}}_{#1}}
\newcommand{\ascloss}[1][\rho]{L^{\textup{Asc}}_{#1}}
\newcommand{\maxlimit}{S^{\textup{Max}}}
\newcommand{\avglimit}{S^{\textup{Avg}}}
\newcommand{\asclimit}{S^{\textup{Asc}}}
\newcommand{\stomaxlimit}{\widetilde S^{\textup{Max}}}
\newcommand{\stoavglimit}{\widetilde S^{\textup{Avg}}}
\newcommand{\stoasclimit}{\widetilde S^{\textup{Asc}}}
\newcommand{\stmaxloss}[1][\rho]{\E_k[L^{\textup{Max}}_{k,#1}]}
\newcommand{\maxsharpness}[1][\rho]{R^{\textup{Max}}_{#1}}
\newcommand{\avgsharpness}[1][\rho]{R^{\textup{Avg}}_{#1}}
\newcommand{\ascsharpness}[1][\rho]{R^{\textup{Asc}}_{#1}}
\newcommand{\stascsharpness}[1][\rho]{\E_k[R^{\textup{Asc}}_{k,#1}]}
\newcommand{\stmaxsharpness}[1][\rho]{\E_k[R^{\textup{Max}}_{k,#1}]}
\newcommand{\stavgsharpness}[1][\rho]{\E_k[R^{\textup{Avg}}_{k,#1}]}
\newcommand{\continuous}[1]{\mathcal{C}^{#1}}
\newcommand{\lspectraltwo}{\zeta}
\newcommand{\lspectralthree}{\nu}
\newcommand{\lspectralfour}{\Upsilon}
\newcommand{\phispectraltwo}{\xi}
\newcommand{\phispectralthree}{\chi}
\newcommand{\sIj}{\cap \sI_j}
\newcommand{\tx}{\tilde x}
\newcommand{\hx}{\hat x}
\newcommand{\ths}{\frac{\eta \lambda_1^2 }{2 - \eta \lambda_1}}
\newcommand{\best}{\eta \lambda_i \sqrt{\frac{ \frac{1}{2}\lambda_i^2 + \lambda_1^2}{1 - \eta \lambda_1}}}
\newcommand{\Pjd}{P^{(j:D)}}
\newcommand{\Pd}[1]{P^{( #1 :D)}}
\newcommand{\Ptd}{P^{(2:D)}}
\newcommand{\dl}[1]{\nabla L\left( #1 \right)}
\newcommand{\dpo}[1]{\partial \Phi \left( #1 \right)}
\newcommand{\dpt}[1]{\partial^2 \Phi \left( #1 \right)}
\newcommand{\dlt}[1]{\nabla^2 L\left( #1 \right)}
\newcommand{\proj}{P_{X,\Gamma}^{\perp}}
\newcommand{\ndl}[1]{\frac{\nabla L\left( #1 \right)}{\| \nabla L\left( #1 \right)\|}}
\newcommand{\ndli}[1]{\frac{\nabla L_k\left( #1 \right)}{\| \nabla L_k\left( #1 \right)\|}}
\newcommand{\dli}[1]{\nabla L_k\left( #1 \right)}
\newcommand{\dlti}[1]{\nabla^2 L_k\left( #1 \right)}
\newcommand{\nexts}{\mathrm{next}}
\newcommand{\tgf}{t_{\mathrm{GF}}}
\newcommand{\xgf}{x(t_{\mathrm{GF}})}
\newcommand{\tdec}{t_{\mathrm{DEC}}}
\newcommand{\xdec}{x(t_{\mathrm{DEC}})}
\newcommand{\xdecs}{x(t_{\mathrm{DEC2}})}
\newcommand{\tdecs}{t_{\mathrm{DEC2}}}
\newcommand{\xloc}{x'}
\newcommand{\tinv}{t_{\mathrm{INV}}}
\newcommand{\sIq}{\sI^{\mathrm{quad}}}
\newcommand{\sx}{x_{\mathrm{sur}}}
\newcommand{\sxq}{x'_{\mathrm{sur}}}
\newcommand{\talign}{t_{\mathrm{ALIGN}}}
\newcommand{\talignmid}{t_{\mathrm{ALIGNMID}}}
\newcommand{\tlocal}{t_{\mathrm{LOCAL}}}
\newcommand{\tphase}{0}
\newcommand{\indicatordec}[1]{\mathcal{A}(#1)}
\newcommand{\tphaseone}{t_{\mathrm{PHASE}}}
\newtheorem{theorem}{Theorem}[section]
\newtheorem{definition}[theorem]{Definition}
\newtheorem{lemma}[theorem]{Lemma}
\newtheorem{corollary}[theorem]{Corollary}
\newtheorem{assumption}[theorem]{Assumption}
\newtheorem{condition}[theorem]{Condition}
\newtheorem{example}[theorem]{Example}
\title{How Does Sharpness-Aware Minimization\\ Minimize Sharpness?}
\date{}
\author{Kaiyue Wen \\
Tsinghua University\\
\texttt{wenky20@mails.tsinghua.edu.cn} 
\and
Tengyu Ma  \\
Stanford University\\
\texttt{tengyuma@stanford.edu} \\
\and
Zhiyuan Li \\
Stanford University\\
\texttt{zhiyuanli@stanford.edu} \\
}
\begin{document}

\maketitle

\begin{abstract}

Sharpness-Aware Minimization (SAM) is a highly effective regularization technique for improving the generalization of deep neural networks for various settings. However, the underlying working of SAM remains elusive because of various intriguing approximations in the theoretical characterizations. SAM intends to penalize a notion of sharpness of the model but implements a computationally efficient variant; moreover, a third notion of sharpness was used for proving generalization guarantees. The subtle differences in these notions of sharpness can indeed lead to significantly different empirical results. This paper rigorously nails down the exact sharpness notion that SAM regularizes and clarifies the underlying mechanism. We also show that the two steps of approximations in the original motivation of SAM individually lead to inaccurate local conclusions, but their combination accidentally reveals the correct effect, when full-batch gradients are applied. Furthermore, we also prove that the stochastic version of SAM in fact regularizes the third notion of sharpness mentioned above, which is most likely to be the preferred notion for practical performance. The key mechanism behind this intriguing phenomenon is  the alignment between the gradient and the top eigenvector of Hessian when SAM is applied.

\end{abstract}

\section{Introduction}\label{sec:intro}
Modern deep nets are often overparametrized and have the capacity to fit even randomly labeled data~\citep{zhang2016understanding}. Thus, a small training loss does not necessarily imply good generalization.
Yet, standard gradient-based training algorithms such as SGD are able to find generalizable models.
Recent empirical and theoretical studies suggest that generalization is well-correlated with the sharpness of the loss landscape at the learned parameter~\citep{keskar2016large,dinh2017sharp,dziugaite2017computing,neyshabur2017exploring,jiang2019fantastic}. 
Partly motivated by these studies, \citet{foret2021sharpnessaware,wu2020adversarial,zheng2021regularizing,norton2021diametrical} propose to penalize the sharpness of the landscape to improve the generalization. %
We refer this method to \emph{Sharpness-Aware Minimization}~(SAM) and focus on the version of \citet{foret2021sharpnessaware} in this paper. 

Despite its empirical success, the underlying working of SAM remains elusive because of the various intriguing approximations made in its derivation and analysis. There are three different notions of sharpness involved --- SAM intends to optimize the first notion, the sharpness along the worst direction, but actually implements a computationally efficient notion, the sharpness along the direction of the gradient. But in the analysis  of generalization, a third notion of sharpness is actually used to prove generalization guarantees, which admits the first notion as an upper bound. The subtle difference between the three notions can lead to very different  biases (see \Cref{fig:demo} for demonstration).

More concretely, let $L$ be the training loss, $x$ be the parameter and $\rho$ be the \emph{perturbation radius}, a hyperparameter requiring tuning. The first notion corresponds to the following optimization problem~\eqref{eq:max_sharpness}, where we call $\maxsharpness(x) =\maxloss(x)-L(x)$ the \emph{worst-direction sharpness} at $x$.  SAM intends to minimize the original training loss plus the worst-direction sharpness at $x$. 
\begin{align}\label{eq:max_sharpness}
\min_{x}  \maxloss(x),\quad  \textrm{where}\quad \maxloss(x)	= \max_{\norm{v}_2\le 1} L(x+\rho v)\,.
\end{align}

However, even evaluating $\maxloss(x)$ is computationally expensive, not to mention optimization. Thus \citet{foret2021sharpnessaware,zheng2021regularizing} have introduced a second notion of sharpness, which approximates the worst-case direction in \eqref{eq:max_sharpness} by the direction of gradient, as defined below in \eqref{eq:asc_sharpness}.  We call $\ascsharpness(x)=\ascloss(x)-L(x)$ the \emph{ascent-direction sharpness} at $x$.
\begin{align}\label{eq:asc_sharpness}
\min_{x}  \ascloss(x),\quad  \textrm{where}\quad \ascloss(x)	= L\left(x+\rho \frac{\nabla L(x)}{\norm{\nabla L(x)}_2}\right)\,.
\end{align}
 
For further acceleration, \citet{foret2021sharpnessaware,zheng2021regularizing} omit the gradient through other occurrence of $x$ and approximate the gradient of ascent-direction sharpness by gradient taken after one-step ascent, \emph{i.e.}, $\nabla \ascloss(x) \approx \nabla L\left(x+\rho \frac{\nabla L(x)}{\norm{\nabla L(x)}_2}\right)$ and derive the update rule of SAM, where $\eta$ is the learning rate. 
\begin{align}\label{eq:sam}
\textrm{Sharpness-Aware Minimization (SAM):} \quad x(t+1) = x(t) -\eta  \nabla L\left(x+\rho \frac{\nabla L(x)}{\norm{\nabla L(x)}_2}\right)\,.
\end{align}

Intriguingly, the generalization bound of SAM upperbounds the generalization error by the third notion of sharpness, called \emph{average-direction sharpness}, $\avgsharpness(x)$ and defined formally below.
\begin{align}
\label{eq:avg_sharpness}
\avgsharpness(x)=\avgloss(x)-L(x), 	\textrm{ where }\avgloss(x)	= \E_{g \sim N(0,I)}L\left(x+\rho g/\|g\|_2 \right).
\end{align}

The worst-case sharpness is an upper bound of the average case sharpness and thus it is a looser bound for generalization error. In other words, according to the generalization theory in \citet{foret2021sharpnessaware,wu2020adversarial} in fact motivates us to directly minimize the average case sharpness (as opposed to the worst-case sharpness that SAM intends to optimize).
\begin{table}[t]
  \vspace{-0.5cm}
\centering\setlength{\tabcolsep}{4pt}
 \begin{tabular}{c| c c c} 
 \hline
\!\!Type of Sharpness-Aware Loss & \!\! Notation \!\!& \!\!Definition & \!\!\!\!  Biases (among minimizers) \\ 
 \hline
 Worst-direction  & $\maxloss$ & $\max_{\norm{v}_2\le 1} L(x+\rho v)$ & $\min_x \lambda_1(\nabla^ 2L(x))$~(Thm~\ref{thm:maxsharp}) \\ 
 Ascent-direction  & $\ascloss$ & $L\left(x+\rho \frac{\nabla L(x)}{\norm{\nabla L(x)}_2}\right)$ & $\min_x \lambda_{\textrm{min}}(\nabla^ 2L(x))$~(Thm~\ref{thm:ascsharp}) \\ 
  Average-direction  & $\avgloss$ & $\E_{g\sim N(0,I)} L(x+\rho \frac{g}{\norm{g}_2})$ & $\min_x \mathrm{Tr}(\nabla^ 2L(x))$~(Thm~\ref{thm:avgsharp}) \\
   \hline
 \end{tabular}
 \caption{\textbf{Definitions and biases of different notions of sharpness-aware loss}. The corresponding sharpness is defined as the difference between sharpness-aware loss and the original loss. Here $\lambda_1$ denotes the largest eigenvalue and $\lambda_{\textrm{min}}$ denotes the smallest \emph{non-zero} eigenvalue.
 }\label{tb:main} 
\end{table}

In this paper, we analyze the  biases introduced by penalizing these various notions of sharpness as well as the bias of SAM (\Cref{eq:sam}). Our analysis for SAM is performed for small perturbation radius $\rho$ and learning rate $\eta$ under the setting where the minimizers of loss form a manifold following the setup of~\citet{fehrman2020convergence,li2021happens}.  In particular, we make the following theoretical contributions.
\begin{enumerate}
	\item We prove that full-batch  SAM indeed minimizes worst-direction sharpness. (\Cref{thm:nsam})
	\item Surprisingly, when batch size is 1, SAM minimizes  average-direction sharpness. (\Cref{thm:1sam})\!
	\item We provide a characterization (\Cref{thm:explicit_bias_deterministic_main,thm:explicit_bias_stochastic_main}) of what a few sharpness regularizers bias towards among the minimizers (including all the three notions of the sharpness in \Cref{tb:main}), when the perturbation radius $\rho$ goes to zero. Surprisingly, both heuristic approximations made for SAM lead to inaccurate conclusions: (1) Minimizing worst-direction sharpness and ascent-direction sharpness induce different biases among minimizers, and (2) SAM doesn't minimize ascent-direction sharpness.

\end{enumerate}

The key mechanism behind this bias of SAM is the alignment between  gradient and the top eigenspace of Hessian of the original loss in the latter phase of training---the angle between them decreases gradually to the level of $O(\rho)$. 
It turns out that the worst-direction sharpness starts to decrease once such alignment is established (see \Cref{sec:proof_full}). 
Interestingly, such an alignment is not implied by the minimization problem \eqref{eq:asc_sharpness}, but rather, it is an implicit property of the specific update rule of SAM. Interestingly, such an alignment property holds for SAM with full batch and SAM with batch size one, but does not necessarily hold for the mini-batch case.

\section{Related Works}\label{sec:related_works}
 
\begin{figure}[t]  
 \vspace{-1cm}
\centering
\includegraphics[width=1\textwidth]{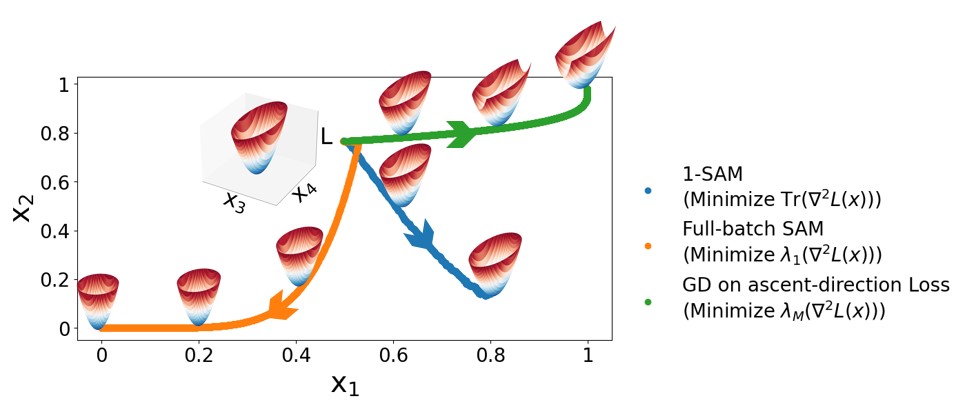}
\caption{\textbf{Visualization of the different biases of different sharpness notions on a 4D-toy example.} Let $F_1,F_2:\R^2\to \R^+$ be two positive functions satisfying that $F_1>F_2$ on $[0,1]^2$.  For $x \in\R^4$, consider loss $L(x) = F_1(x_1,x_2)x_3^2 + F_2(x_1,x_2)x_4^2$. The loss $L$ has a zero loss  manifold $\{x_3 = x_4 = 0\}$ of codimension $M=2$ and the two non-zero eigenvalues of $\nabla ^2 L$ of any point $x$ on the manifold are $\lambda_1(\nabla^2 L(x))= F_1(x_1,x_2)$ and $\lambda_2(\nabla^2 L(x)) = F_2(x_1,x_2)$. We test three optimization algorithms on this 4D-toy model with small learning rates. 
They all quickly converge to zero loss,  \emph{i.e.}, $x_3(t),x_4(t)\approx 0$, and after that  $x_1(t),x_2(t)$ still change slowly, \emph{i.e.}, moving along the zero loss manifold.
We visualize the loss restricted to $(x_3,x_4)$ as the 3D shape at various $(x_1,x_2)$'s where $x_1=x_1(t), x_2=x_2(t)$ follows the trajectories of the three algorithms. In other words, each of the 3D surface visualize the function $g(x_3,x_4) = L(x_1(t), x_2(t), x_3, x_4)$. 
As our theory predicts, (1) \textbf{Full-batch SAM} (\Cref{eq:sam}) finds the minimizer with \textbf{the smallest top eigenvalue}, $F_1(x_1,x_2)$;  (2) \textbf{GD on ascent-direction loss} $\ascloss$ (\Cref{eq:asc_sharpness}) finds the minimizer with \textbf{the smallest bottom eigenvalue}, $F_2(x_1,x_2)$; (3) \textbf{1-SAM} (\Cref{eq:1sam}) (with $L_0(x) = F_1(x_1,x_2)x_3^2$ and $L_1(x) = F_2(x_1,x_2)x_4^2$) finds the minimizer with \textbf{the smallest trace of Hessian}, $F_1(x_1,x_2)+F_2(x_1,x_2)$. See more details in \Cref{app:detail_caption}. 
}
\label{fig:demo}
\end{figure}

\paragraph{Sharpness and Generalization.} The study on the connection between sharpness and  generalization can be traced back to~\citet{hochreiter1997flat}. \citet{keskar2016large} observe a positive correlation between the batch size, the generalization error, and the sharpness of the loss landscape when changing the batch size. \citet{jastrzkebski2017three} extend this by finding a correlation between the sharpness
and the ratio between learning rate to batch size. \citet{dinh2017sharp} show that one can easily construct networks with good generalization but with arbitrary large sharpness by reparametrization. \citet{dziugaite2017computing,neyshabur2017exploring,wei2019data,wei2019improved} give theoretical guarantees on the generalization error using sharpness-related measures. \citet{jiang2019fantastic} perform a large-scale empirical study on various generalization measures and show that sharpness-based measures have the highest correlation with generalization.

\paragraph{Background on Sharpness-Aware Minimization.}~\citet{foret2021sharpnessaware,zheng2021regularizing} concurrently propose to minimize the loss at the perturbed from current parameter towards the worst direction to improve generalization. \citet{wu2020adversarial} propose an almost identical method for a different purpose, robust generalization of adversarial training. \citet{kwon2021asam} propose a different metric for SAM to fix the rescaling problem pointed out by \citet{dinh2017sharp}. \citet{liu2022towards} propose a more computationally efficient version of SAM. \citet{zhuang2022surrogate} proposes a variant of SAM, which improves generalization by simultaneously optimizing the surrogate gap and the sharpness-aware loss. \citet{zhao2022penalizing} propose to improve generalization by penalizing gradient norm. Their proposed algorithm can be viewed as a generalization of SAM. \citet{andriushchenko2022towards} study a variant of SAM where the step size of ascent step is $\rho$ instead of $\frac{\rho}{\norm{\nabla L(x)}_2}$. They show that for a simple model this variant of SAM has a stronger regularization effect when batch size is 1 compared to the full-batch case and argue that this might be the explanation that SAM generalizes better with small batch sizes.

In a concurrent work, \citet{bartlett2022dynamics} prove that on quadratic loss, the iterate of SAM (\Cref{eq:sam_quadratic}) and its gradient converges to the top eigenvector of Hessian, which is almost the same as our \Cref{thm:nsamquad}. Assuming such alignment for a general loss, the work of \citet{bartlett2022dynamics} shows that the largest eigenvalue of Hessian decreases in the next step. This paper also proves such a Hessian-gradient alignment for general loss functions (\Cref{lem:final_alignment}) and an end-to-end theorem showing that the largest eigenvalue of Hessian and worst-direction sharpness decrease along the trajectory of SAM (\Cref{thm:nsam}), which are not shown in \citet{bartlett2022dynamics}. Moreover, this paper also characterize implicit bias of stochastic SAM with batch size $1$, which is minimizing the average-direction sharpness, while \citet{bartlett2022dynamics} only considers the deterministic case.

\paragraph{Implicit Bias of Sharpness Minimization.} Recent theoretical works \citep{blanc2019implicit,damian2021label,li2021happens} show that SGD with label noise implicitly biased toward local minimizers with a smaller trace of Hessian under the assumption that the minimizers locally connect as a manifold. \citet{arora2022understanding} show that normalized GD implicitly penalizes the largest eigenvalue of the Hessian. \citet{ma2022multiscale} argues that such flatness driven phenomenon can also be caused by a multi-scale loss landscape. \citet{lyu2022understanding} show that GD with weight decay  on a scale invariant loss function implicitly decreases penalize the spherical sharpness, \emph{i.e.}, the largest eigenvalue of the Hessian evaluated at the normalized parameter. 

Another line of works study the sharpness minimization effect of  large learning rate assuming the (stochastic) gradient descent converges in the end of training, where the analysis is mainly based on linear stability~\citep{wu2018sgd,cohen2021gradient,ma2021linear,cohen2022adaptive}. Recent theoretical analysis~\citep{damian2022self,li2022analyzing} show that the sharpness minimization effect of large learning rate in gradient descent do not necessarily rely on the convergence assumption and linear stability via a four-phase characterization of the dynamics at the so-called Edge of Stability regime~\citep{cohen2021gradient}.

\paragraph{Comparison with~\citet{arora2022understanding}.} Our proof uses a similar framework as~\citet{arora2022understanding}. However, our analysis has its own difficulty for the following reasons.
First, \citet{arora2022understanding} only deal with the deterministic case, while our analysis extends to stochastic SAM as well (\Cref{sec:1sam}). Second, our analysis for the deterministic case is different from that of \citet{arora2022understanding} in the following two aspects. First, the alignment analysis is more complicated because we have two hyperparameters,learning rate $\eta$ and perturbation radius $\rho$, while \citet{arora2022understanding} only needs to deal with one hyperparameter, learning rate $\eta$.
Second, the mechanism of penalizing worst-direction sharpness is different, which can be seen from the dependency of the sharpness-reduction rate over learning rate $\eta$.  In \citet{arora2022understanding}, normalized GD reduces the sharpness via a second-order effect of GD and thus the sharpness is reduced by $O(\eta^2)$ per step. In our analysis, for fixed small perturbation radius $\rho$, the sharpness is reduced by $O(\rho^2 \eta)$ per step, which is linear in $\eta$.

\paragraph{Analyzing Discrete-time Dynamics via Continuous-time Approaches. } There is a long line of research that shows the trajectory of stochastic discrete iterations with \textit{decaying step size} eventually tracks the solution of some ODE  (see~\citet{kushner2003stochastic,borkar2009new,duchi2018stochastic} and the reference therein). However, those results mainly focus on the convergence property of the stochastic iterates (e.g., convergence to stationary points), while we are interested in characterizing the trajectory especially when the process is running for \textit{a long time} even after the iterate reaches the neighborhood of the manifold of stationary points. 

Recently there has been an effort of modeling the discrete-time trajectory of (stochastic) gradient methods by continuous-time approximations~\citep{su2014differential,mandt2017stochastic,li2017stochastic,li2019stochastic}. Notably, \citet{li2019stochastic} presents a general and rigorous mathematical framework to prove such continuous-time approximation. More specifically, \citet{li2019stochastic} proves for various stochastic gradient-based methods, the discrete-time weakly converges to the continuous-time one when LR $\eta\to 0$ in $\Theta(1/\eta)$ steps. The main difference between our results with these results (e.g., Theorem 9 in \cite{li2019stochastic}) is that we focus on a much longer training regime, \emph{i.e.}, $T= \Theta(\eta^{-1}\rho^{-2})$ steps where the previous continuous-time approximation results no longer holds throughout the entire training. As a result, their continuous approximation is only equivalent to the Phase I dynamics in our \Cref{thm:nsam,thm:1sam} and cannot capture the dynamics of SAM in Phase II, when the sharpness-reduction implicit bias happens. The latter requires a more fine-grained analysis to capture the effects of higher-order terms in $\eta$ and $\rho$ in SAM~\Cref{eq:sam}.

\section{Notations and Assumptions}
\label{notation}

For any natural number $k$, we say a function is  $\continuous{k}$ if it is $k$-times continuously differentiable and is $\overline{\mathcal{C}}^k$ if its $k$th order derivatives are locally lipschitz.  We say a subset of $\R^D$ is compact if  each of its open covers has a finite subcover. It is well known that a subset of $\R^D$ is compact if and only if it is closed and bounded.
For any positive definite symmetric matrix $A \in \R^{D\times D}$, define $\{\lambda_i(A), v_i(A)\}_{i \in [D]}$ as all its eigenvalues and eigenvectors satisfying $\lambda_1(A) \ge \lambda_2(A) ... \ge \lambda_D(A)$ and $\| v_i(A)\|_2 = 1$. 
 For any mapping $F$, we define $\partial F(x)$ as the Jacobian where $[\partial F(x)]_{ij} = \partial_j F_i(x)$. Thus the directional derivative of $F$ along the vector $u$ at $x$ can be written as $\partial F(x) u$. We further define the second order directional derivative of $F$ along the vectors $u$ and $v$ at $x$, $\partial^2 F(x)[u,v]$, $\partial (\partial F \cdot u)(x) v$, that is, the directional derivative of $\partial F \cdot u$ along the vector $v$ at $x$.

\begin{definition}[Differentiable Submanifold of $\R^D$]\label{defi:manifold} We call a subset $\Gamma\subset \R^D$ a $\continuous{k}$ submanifold of $\R^D$ if and only if for every $x\in\Gamma$, there exists a open neighborhood $U$ of $x$ and an invertible $\continuous{k}$ map $\psi:U\to \R^D$, such that $\psi(\Gamma\cap U) = (\R^n\times \{0\})\cap \psi(U)$.
	
\end{definition}

Given a $\continuous{1}$ submanifold $\Gamma$ of $\R^D$ and a point $x \in \Gamma$, define $\projn$ as the projection operator onto the manifold of the normal space of $\Gamma$ at $x$ and $\projt = I_D - \projn$. We fix our initialization as $\xinit$ and our loss function as $L: \R^D \to \R$. Given the loss function, its gradient flow is denoted  by mapping $\phi: \R^D \times [0, \infty) \to \R^D$. Here, $\phi(x, \tau)$ denotes the iterate at time $\tau$ of a gradient flow starting at $x$ and is defined as the unique solution of  $\phi(x,\tau) = x - \int_0^{\tau} \nabla L(\phi(x,t))dt$, $\forall x\in\R^D$.
We further define the limiting map $\Phi$ as $\Phi(x) = \lim_{\tau \to \infty} \phi(x,\tau)$, that is, $\Phi(x)$ denotes the convergent point of the gradient flow starting from $x$. When $L(x)$ is small, $\Phi(x)$ and $x$ are near. Hence in our analysis, we regularly use $\Phi(x(t))$ as a surrogate to analyze the dynamics of $x(t)$. \Cref{lem:property_of_Phi_away_manifold} is an important property of $\Phi$ from \citet{li2021happens} (Lemma C.2), which is repeatedly used in our analysis. For completeness, we attach its proof below.

\begin{lemma}
\label{lem:property_of_Phi_away_manifold}
For any $x$ at which $\Phi$ is defined and differentiable, we have that $\partial \Phi(x) \nabla L(x) = 0$. 
\end{lemma}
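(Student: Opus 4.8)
The plan is to exploit the semigroup (flow) property of gradient flow: the trajectory started at $\phi(x,s)$ is just the trajectory started at $x$ shifted in time by $s$, so both converge to the same limit. Concretely, by uniqueness of solutions to the gradient-flow ODE $\dot y = -\nabla L(y)$ (valid since $\nabla L$ is locally Lipschitz under the paper's smoothness assumptions), we have the identity $\phi(\phi(x,s),\tau) = \phi(x,s+\tau)$ for all $s,\tau \ge 0$. Taking $\tau \to \infty$ on both sides, the left side tends to $\Phi(\phi(x,s))$ and the right side tends to $\Phi(x)$ (the limit defining $\Phi(x)$ exists by hypothesis, and shifting the start of the limit does not change it). Hence $\Phi(\phi(x,s)) = \Phi(x)$ for all $s\ge 0$; that is, $\Phi$ is constant along the gradient flow trajectory through $x$.

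Next I would differentiate this identity in $s$ at $s=0$. The curve $s\mapsto \phi(x,s)$ is differentiable at $s=0$ with derivative $\frac{\partial}{\partial s}\phi(x,s)\big|_{s=0} = -\nabla L(\phi(x,0)) = -\nabla L(x)$, directly from the integral equation defining $\phi$. Since $\Phi$ is assumed differentiable at $x = \phi(x,0)$, the chain rule gives
\[
0 = \left.\frac{\diff}{\diff s}\right|_{s=0}\Phi(\phi(x,s)) = \partial\Phi(x)\left.\frac{\partial}{\partial s}\phi(x,s)\right|_{s=0} = -\,\partial\Phi(x)\,\nabla L(x),
\]
so $\partial\Phi(x)\nabla L(x) = 0$, as claimed.

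The only genuinely delicate point is the first step: making sure the flow identity $\phi(\phi(x,s),\tau)=\phi(x,s+\tau)$ and the passage to the limit are legitimate at the particular point $x$ under consideration. This requires that the gradient flow from $x$ is defined for all $\tau\ge 0$ (so that $\Phi(x)$ makes sense, which is part of the hypothesis "$\Phi$ is defined at $x$"), and that $\phi(x,s)$ stays in a region where the flow is likewise globally defined — which holds for $s$ in a neighborhood of $0$ because $L(\phi(x,s))$ is nonincreasing in $s$ and the trajectory through $x$ already converges. Once the constancy $\Phi\circ\phi(x,\cdot)\equiv\Phi(x)$ is established on $[0,\varepsilon)$ for some $\varepsilon>0$, the chain-rule computation is routine. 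One should also note that we only need the one-sided derivative at $s=0$, which is all the chain rule and the integral equation for $\phi$ supply, and this suffices to conclude $\partial\Phi(x)\nabla L(x)=0$.
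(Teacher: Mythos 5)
Your proof is correct and follows essentially the same route as the paper's: establish $\Phi(\phi(x,s))=\Phi(x)$ via the semigroup property of the flow, differentiate at $s=0$, and use $\partial_s\phi(x,s)\vert_{s=0}=-\nabla L(x)$. The extra care you take about the validity of the flow identity and the one-sided derivative is a welcome but inessential refinement of the paper's argument.
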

\begin{proof}[Proof of \Cref{lem:property_of_Phi_away_manifold}]
Since $\Phi$ is defined the limit map of gradient flow, it holds that for any $t \ge 0$, $\Phi(\phi(x,t))=\Phi(x)$.  Differentiating both sides at $t = 0$, we have $\partial \Phi(\phi(x,0))\frac{\partial \phi(x,t)}{\partial t} = 0$. The proof is completed by noting that $\frac{\partial \phi(x,t)}{\partial t} = -\nabla L (\phi(x,t))$ by definition of $\phi$.
 \end{proof}

Recent empirical studies have shown that  there are essentially no barriers in loss landscape between different  minimizers, that is, the set of minimizers are path-connected~\citep{draxler2018essentially,garipov2018loss}. Motivated by this empirical discovery, we make the assumption below following \citet{fehrman2020convergence,li2021happens,arora2022understanding}, which is theoretically justified by \citet{cooper2018loss} under a generic setting.

\begin{restatable}[]{assumption}{assumsmooth}
\label{assump:smoothness}
Assume loss $L:\R^D \to \R$ is  $\continuous{4}$, and there exists a $\continuous{2}$ submanifold $\Gamma$ of $\R^D$ that is a $(D-M)$-dimensional  for some integer $1 \le M \le D$, where for all $x \in \Gamma$, $x$ is a local minimizer of $L$ and $\mathrm{rank}(\nabla^2 L(x)) = M$.

\end{restatable}

The connectivity of the set of local minimizers implied by the manifold assumption above allows us to take limits of perturbation radius $\rho\to 0$ while still yield interesting and insightful implicit bias results in the end-to-end analysis. 
So far almost all analysis of implicit bias for general model parameterizations relies on Taylor expansion, \emph{e.g.}~\cite{blanc2019implicit,damian2021label,li2021happens,arora2022understanding}, so does the derivation of the SAM algorithm~\citet{foret2020sharpness,wu2020adversarial}. Thus it's crucial to consider small perturbation size $\rho$. On the contrary, if the set of global minimizers are a set of discrete points, then with small perturbation radius $\rho$, implicit bias of optimizers is not sufficient to drive the iterate from global minimum to the other one.

It can be shown that for a minimum loss manifold, the rank of Hessian plus the dimension of the manifold is at most the environmental dimension $D$, and thus our assumption about Hessian rank essentially says the the rank is maximal. This assumption is necessary for the analysis to guarantee the differentiability of $\Phi$. 

Though our analysis for the full-batch setting are performed under the general and abstract setting, \Cref{assump:smoothness}, our analysis for stochastic setting uses a more concrete one, \Cref{setting:1sam}, where we can prove that \Cref{assump:smoothness} holds. (see \Cref{thm:derive_manifold_assumption})

\begin{restatable}[Attraction Set]{definition}{attractionset}\label{defi:attraction_set}
Let $U$ be the attraction set of $\Gamma$ under gradient flow, that is, a neighborhood of $\Gamma$ containing all points starting from which gradient flow w.r.t. loss $L$ converges to some point in $\Gamma$, or mathematically, $U\triangleq  \{x \in \R^D | \Phi(x) \text{ exists and } \Phi(x) \in  \Gamma \}$. 
\end{restatable}

\Cref{assump:smoothness} implies that $U$ is open and $\Phi$ is $\overline{\mathcal{C}}^2$ on $U$~\citep[Lemma B.15]{arora2022understanding}. 	

By definition, $\Phi(x)=x$ for any $x\in\Gamma$. Differentiating this equality yields the following important lemma about the property of $\partial \Phi$ on manifold $\Gamma$.
\begin{lemma}[\citet{li2021happens}, Lemma 4.3]\label{lem:property_of_Phi_on_manifold}
For $x \in \Gamma$, $\partial \Phi(x) = \projt$, the orthogonal projection matrix onto the tangent space of $\Gamma$ at $x$. Since $d$   $\partial \Phi (x)\nabla ^2 L(x) =0$.
\end{lemma}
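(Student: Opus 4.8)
The plan is to show that, for $x\in\Gamma$, the linear map $\partial\Phi(x)$ acts as the identity on the tangent space $T_x\Gamma$ and annihilates the normal space $N_x\Gamma$, and moreover that $T_x\Gamma=\ker\nabla^2L(x)$ while $N_x\Gamma=\mathrm{range}\,\nabla^2L(x)$. Since $T_x\Gamma$ and $N_x\Gamma$ are orthogonal complements in $\R^D$, a linear map that is the identity on one and zero on the other is forced to be the orthogonal projection onto $T_x\Gamma$, i.e. $\projt$; and the identity $\partial\Phi(x)\nabla^2L(x)=0$ will fall out of the same computation used to handle the normal directions. Note at the outset that $x\in U$ because $\Phi(x)=x\in\Gamma$, so $\Phi$ is (twice) differentiable at $x$ by the $\overline{\mathcal{C}}^2$ regularity of $\Phi$ on $U$.

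For the tangent directions I would differentiate the identity $\Phi(y)=y$, which holds for all $y\in\Gamma$, exactly as suggested in the text. Given an arbitrary $v\in T_x\Gamma$, pick a $\continuous{1}$ curve $\gamma$ in $\Gamma$ with $\gamma(0)=x$ and $\dot\gamma(0)=v$ (such a curve exists since $\Gamma$ is a $\continuous{2}$ submanifold); differentiating $\Phi(\gamma(t))=\gamma(t)$ at $t=0$ gives $\partial\Phi(x)v=v$. Hence $\partial\Phi(x)$ restricts to the identity on $T_x\Gamma$, a subspace of dimension $D-M$. To identify this subspace, use that every point of $\Gamma$ is a local minimizer of the $\continuous{4}$ loss $L$, so $\nabla L\equiv 0$ on $\Gamma$; differentiating $\nabla L(\gamma(t))=0$ at $t=0$ gives $\nabla^2L(x)v=0$, i.e. $T_x\Gamma\subseteq\ker\nabla^2L(x)$. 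A dimension count closes this: $\dim T_x\Gamma=D-M=D-\mathrm{rank}\,\nabla^2L(x)=\dim\ker\nabla^2L(x)$ by \Cref{assump:smoothness}, so $T_x\Gamma=\ker\nabla^2L(x)$, and by symmetry of $\nabla^2L(x)$ its orthogonal complement is $N_x\Gamma=\mathrm{range}\,\nabla^2L(x)$.

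For the normal directions I would differentiate the other identity at our disposal, namely $\partial\Phi(y)\nabla L(y)=0$ from \Cref{lem:property_of_Phi_away_manifold}, which holds on the open set $U$. Differentiating in an arbitrary direction $w$ and evaluating at $y=x$, the term involving $\partial^2\Phi(x)[\,\cdot\,,\nabla L(x)]$ vanishes because $\nabla L(x)=0$, leaving $\partial\Phi(x)\nabla^2L(x)w=0$ for every $w$, i.e. $\partial\Phi(x)\nabla^2L(x)=0$. This is precisely the second asserted identity, and it also shows $N_x\Gamma=\mathrm{range}\,\nabla^2L(x)\subseteq\ker\partial\Phi(x)$. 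Combining with the previous paragraph, $\partial\Phi(x)$ is the identity on $T_x\Gamma$ and zero on the orthogonal complement $N_x\Gamma$, hence equals the orthogonal projection onto $T_x\Gamma$, which is $\projt$ by definition.

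All the computations above are one line each; the only point needing a bit of care — and the closest thing to an obstacle — is justifying the two differentiations, which require $\Phi$ to be $\continuous{1}$ near $x$ (for differentiating $\Phi(y)=y$) and $\continuous{2}$ near $x$ (for differentiating $\partial\Phi(y)\nabla L(y)=0$). Both are delivered by the stated fact that $\Phi$ is $\overline{\mathcal{C}}^2$ on the open attraction set $U\supseteq\Gamma$, which is itself a consequence of \Cref{assump:smoothness} via \citet[Lemma B.15]{arora2022understanding}.
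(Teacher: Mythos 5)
Your proof is correct. Note that the paper itself does not prove this lemma; it imports it from \citet{li2021happens} with only the one-line hint that differentiating $\Phi(y)=y$ on $\Gamma$ gives the result, and that hint by itself only delivers the tangential part ($\partial\Phi(x)v=v$ for $v\in T_x\Gamma$), which you reproduce via the curve argument. Where your write-up genuinely adds value is in the normal directions: rather than following the route of the cited reference (which identifies $\partial\Phi(x)$ by analyzing the linearized gradient flow, essentially $\lim_{t\to\infty}e^{-t\nabla^2L(x)}=$ projection onto $\ker\nabla^2L(x)$), you differentiate the already-established identity $\partial\Phi(y)\nabla L(y)=0$ of \Cref{lem:property_of_Phi_away_manifold} and use $\nabla L(x)=0$ to kill the $\partial^2\Phi$ term, obtaining $\partial\Phi(x)\nabla^2L(x)=0$ for free and hence that $\partial\Phi(x)$ vanishes on $N_x\Gamma=\mathrm{range}\,\nabla^2L(x)=(\ker\nabla^2L(x))^\perp=(T_x\Gamma)^\perp$. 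This is the same computation the paper later invokes as $\partial\Phi(x)\nabla^2L(x)\nabla L(x)=-\partial^2\Phi(x)[\nabla L(x),\nabla L(x)]$ in \Cref{lem:relatelphi}, so your argument is fully consistent with the surrounding machinery; it is shorter and more self-contained than the dynamical argument in the reference, at the cost of requiring (as you correctly flag) the $\overline{\mathcal{C}}^2$ regularity of $\Phi$ on $U$ and the maximal-rank assumption $\mathrm{rank}(\nabla^2L(x))=M$ for the dimension count $T_x\Gamma=\ker\nabla^2L(x)$. No gaps.
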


\paragraph{Implicit versus Explicit Bias.} 
If an algorithm or optimizer has a bias towards certain type of global/local minima of the loss over other  minima of the loss, and this bias is not encoded in the loss function, then we call such bias an \textit{implicit bias}. 
On the other hand, a  bias emerges as solely a consequence of successfully minimizing certain regularized loss regardless of the optimizers (as long as the optimzers minimize the loss), we say such bias is an \emph{explicit bias} of the regularized loss (or the regularizer).

As a concrete example, we will prove that full-batch SAM~(\Cref{eq:sam}) prefers local minima with certain sharpness property. The bias stems from the particular update rule of full-batch SAM~(\Cref{eq:sam}), and \textit{not} all optimizers for the intended target loss function $\ascloss$ (\Cref{eq:asc_sharpness}) has this bias. Therefore, it's considered as an implicit bias. As an example for explicit bias, all optimizers minimizing a loss combined with $\ell_2$ regularization will prefer model with smaller parameter norm and this is considered as an explicit bias of $\ell_2$ regularization.

\paragraph{Usage of $O(\cdot)$ Notation:} Our analysis assumes  small $\eta$ and $\rho$ while treating all other problem-dependent parameters as constants, such as the dimension of parameter space and the maximum possible value of derivatives (of different orders) of loss function $L$ and the limit map $\Phi$. In $O(\cdot),\Omega(\cdot),o(\cdot),\omega(\cdot),\Theta(\cdot)$, we hide all the dependency related to the problem, e.g., the (unique) initialization $\xinit$, the manifold $\Gamma$, compact set $\overline{U'}$ in \Cref{thm:explicit_bias_deterministic_main}, and the continuous time $T_3$ in \Cref{thm:nsam,thm:1sam}, and only keep the dependency on $\rho$ and $\eta$. For example, $O(f(\rho))$ is a placeholder for some function $g(\rho)$ such that there exists problem-dependent constant $C>0$, $\forall \rho>0, |g(\rho)|\le C|f(\rho)|$.  In informal equations such as \Cref{eq:taylor_expansion_of_Phi} in the proof sketch section, we are a bit more sloppy and hide dependency on $x(t)$ in $O(\cdot)$ notation as well. But these will be formally dealt with in the proofs. 

\paragraph{Ill-definedness of SAM with Zero Gradient.}  The update rule of SAM (\Cref{eq:sam,eq:1sam}) is  ill-defined when the gradient is zero. However, our analysis in \Cref{appsec:well_definedness} shows that when the stationary point of loss $L$, $\{x\mid\nabla L(x)=0\}$, is a zero-measure set, for any perturbation radius $\rho$, except for countably many learning rates, full-batch SAM is well-defined for almost all initialization and all steps~(\Cref{thm:nsam_well_definedness}). A similar result is shown for stochastic SAM if the stationary points of each stochastic loss form a zero-measure set~(\Cref{thm:1sam_well_definedness}). Thus SAM is generically well-defined. For the sake of rigorousness, when SAM encountering zero gradients,  we modify the algorithm via replacing the ill-defined normalized gradient by an arbitrary vector with unit norm and our analysis for implicit bias of SAM still holds.

\section{Explicit and Implicit Bias in the Full-Batch Setting}
\label{sec:nsam}

In this section, we present our main results in the full-batch setting. \Cref{sec:explicit_full} provides characterization of explicit bias of \emph{worst-direction}, \emph{ascent-dircetion}, and average-direction sharpness. In particular, we show that ascent-direction sharpness and worst-direction sharpness have different explicit biases. 
However, it turns out the explicit bias of ascent-direction sharpness is not the effective bias of SAM (that approximately optimizes the ascent-direction sharpness), because the particular implementation of SAM imposes additional, different biases, which is the main focus of \Cref{sec:implicit_full}. We provide our main theorem in the full-batch setting, that SAM implicitly minimizes the worst-direction sharpness, via characterizing its limiting dynamics as learning rate $\rho$ and $\eta$ goes to $0$ with a Riemmanian gradient flow with respect to the top eigenvalue of the Hessian of the loss on the manifold of local minimizers. In \Cref{sec:proof_full}, we sketch the proof of the implicit bias of SAM and identify a key property behind the implicit bias, which we call the \emph{implicit alignment} between the gradient and the top eigenvector of the Hessian.

\subsection{Worst- and Ascent-direction Sharpness Have Different Explicit Biases}
\label{sec:explicit_full}

In this subsection, we show that  the explicit biases of three notions of sharpness are all different under \Cref{assump:smoothness}. We first recap  the heuristic derivation of ascent-direction sharpness $\ascsharpness$.

The intuition of approximating $\maxsharpness$ by $\ascsharpness$ comes from the following Taylor expansions~\citep{foret2021sharpnessaware,wu2020adversarial}. Consider any compact set, for sufficiently small $\rho$, the following holds uniformly for all $x$ in the compact set:
\begin{align}\thickmuskip=0\thickmuskip
\maxsharpness(x) = \sup_{\norm{v}_2\le 1} L(x+\rho v)-L(x) = \sup_{\norm{v}_2\le 1} \bigl( \rho v^\top \nabla L(x) + \frac{\rho^2}{2}v^\top\nabla ^2 L(x)v + O(\rho^3)\bigr)\,, \label{eq:maxsharp}\\
\!\!\ascsharpness(x) \!= \! L\bigl(x+\rho \frac{\nabla L(x)}{\norm{\nabla L(x)}_2}\bigr)\!\!-\!L(x) 
= \!\! \rho \norm{\nabla L(x)}_2 \!+\! \frac{\rho^2}{2}  \frac{\nabla L(x)^\top \nabla ^2 L(x) \nabla L(x)}{\norm{\nabla L(x)}_2^2} \!+  \!O(\rho^3)\,. \label{eq:ascsharp}
\end{align}

Here, the preference among the local or global minima is what we are mainly concerned with.
Since $\sup_{\norm{v}_2\le 1} v^\top \nabla L(x) = \norm{\nabla L(x)}_2$ when $\norm{\nabla L(x)}_2> 0$, the leading terms in \Cref{eq:maxsharp,eq:ascsharp} are both the first order term, $\rho \norm{\nabla L(x)}_2$, and are the same. However, it is erroneous to think that the first order term decides the explicit bias, as the first order term $\norm{\nabla L(x)}_2$ vanishes at the local  minimizers of the loss $L$ and thus the second order term becomes the leading term.  Any global minimizer $x$ of the original loss $L$ is an $O(\rho^2)$-approximate minimizer of the sharpness-aware loss because $\nabla L(x) =0$. Therefore, the sharpness-aware loss needs to be of order $\rho^2$ so that we can guarantee the second-order terms in \Cref{eq:maxsharp} and/or \Cref{eq:ascsharp} to be non-trivially small. Our main result in this subsection (\Cref{thm:explicit_bias_deterministic_main}) gives an explicit characterization for this phenomenon. The corresponding explicit biases for each type of sharpness is given below in \Cref{defi:example_limiting_regularizer}. As we will see later, they can be derived from a general notion of \emph{limiting regularizer} (\Cref{defi:limiting_regularizer}).

\begin{definition}\label{defi:example_limiting_regularizer}
	For $x\in\R^D$, we define $\maxlimit(x) = \lambda_1(\nabla^2 L(x))/2$, $\asclimit(x) = \lambda_M(\nabla^2 L(x))/2$ and $\avglimit(x) = \Tr(\nabla^2 L(x))/(2D)$.
\end{definition}

\begin{theorem}\label{thm:explicit_bias_deterministic_main}
Under \Cref{assump:smoothness}, let $U'$ be any bounded open set such that its closure $\overline{U'}\subseteq U$ and  $\overline {U'}\cap \Gamma \subseteq\overline{ U'\cap \Gamma}$. For any $\mathrm{type}\in\{\mathrm{Max},\mathrm{Asc},\mathrm{Avg}\}$ and any optimality gap $\Delta>0$,  there is a function $\eps:\R^+\to \R^+$ with $\lim_{\rho\to 0}\eps(\rho)=0$, such that for all sufficiently small $\rho>0$ and all $u\in U'$ satisfying that 
$$L(u) +  R^\mathrm{type}_\rho(u) -\inf\limits_{x\in U'}\bigl({L(x) + R^\mathrm{type}_\rho(x)}\bigr)  \le \Delta \rho^2,\footnote{We note that $\ascsharpness(x)$ is undefined when $\norm{\nabla L(x)}_2=0$. In such cases, we set $\ascsharpness(x)=\infty$.}$$ 
it holds that $ L(u) - \inf_{x\in U'} L(x) \le (\Delta+\eps(\rho))\rho^2$ and  that 
$$ S^\mathrm{type}(u)-\inf_{x\in U'\cap \Gamma}S^\mathrm{type}(x) \in[-\eps(\rho), \Delta+\eps(\rho)].$$ 
\end{theorem}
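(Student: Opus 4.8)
\textbf{Proof proposal for \Cref{thm:explicit_bias_deterministic_main}.}

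The plan is to reduce all three cases to a single statement about a "limiting regularizer" $S^\mathrm{type}$ and then argue via a compactness-plus-Taylor-expansion scheme. First I would set up the local picture: near any point $y\in\Gamma$, use \Cref{defi:manifold} to write the loss in Fermi-type coordinates, so that $\nabla^2 L(y)$ has exactly $M$ nonzero eigenvalues and the normal bundle decomposes $\R^D$ into the tangent space $\tangent_y\Gamma$ (kernel of the Hessian) and its orthogonal complement. For $u$ near $\Gamma$ I would use the gradient-flow projection $\Phi(u)\in\Gamma$ as the surrogate point (this is where \Cref{assump:smoothness} and the $\overline{\mathcal{C}}^2$-regularity of $\Phi$ on $U$ enter), and Taylor-expand $L(u)$ around $\Phi(u)$: since $\nabla L(\Phi(u))=0$, we get $L(u) = \tfrac12 (u-\Phi(u))^\top \nabla^2 L(\Phi(u))(u-\Phi(u)) + O(\|u-\Phi(u)\|^3)$, and one checks $u-\Phi(u)$ lies (to leading order) in the normal space. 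The two halves of the theorem then come from: (i) the hypothesis $L(u)+R^\mathrm{type}_\rho(u)-\inf(\cdots)\le\Delta\rho^2$ forces $L(u)=O(\rho^2)$, hence $\|u-\Phi(u)\|=O(\rho)$, which gives the first conclusion $L(u)-\inf_{U'}L\le(\Delta+\eps(\rho))\rho^2$; and (ii) on the scale $\|u-\Phi(u)\|=O(\rho)$ the sharpness term $R^\mathrm{type}_\rho(u)$ is, up to $o(\rho^2)$, equal to $\rho^2 S^\mathrm{type}(\Phi(u)) + (\text{a nonnegative quadratic-in-}(u-\Phi(u))\text{ term})$ plus the $\rho\|\nabla L(u)\|$ first-order piece — and the first-order piece is itself controlled because $\|\nabla L(u)\| = O(\|u-\Phi(u)\|) = O(\rho)$, so $\rho\|\nabla L(u)\| = O(\rho^2)$ and does not dominate.

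The technical core is to prove, uniformly over the compact set $\overline{U'}$, the asymptotic identity $R^\mathrm{type}_\rho(x) = \rho^2 S^\mathrm{type}(x) + o(\rho^2)$ for $x\in\Gamma$, and more generally a two-sided bound $R^\mathrm{type}_\rho(u) \ge \rho^2 S^\mathrm{type}(\Phi(u)) - o(\rho^2)$ and $R^\mathrm{type}_\rho(u) \le \rho^2 S^\mathrm{type}(\Phi(u)) + C\|u-\Phi(u)\|^2/\rho^2\cdot\rho^2 + o(\rho^2)$ whenever $L(u)=O(\rho^2)$. For $\mathrm{Max}$: $\maxsharpness(x) = \sup_{\|v\|\le1}(\rho v^\top\nabla L(x) + \tfrac{\rho^2}{2}v^\top\nabla^2L(x)v) + O(\rho^3)$; when $\nabla L(x)=0$ this is exactly $\tfrac{\rho^2}{2}\lambda_1(\nabla^2 L(x))$, and the $O(\rho^3)$ error plus the perturbation from moving $x$ to $\Phi(x)$ is absorbed into $\eps(\rho)$. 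For $\mathrm{Asc}$: when $\nabla L(x)=0$ the direction $\nabla L/\|\nabla L\|$ degenerates, which is exactly the subtlety flagged in the footnote — so for $u$ with small but nonzero gradient I would show $\nabla L(u)/\|\nabla L(u)\|$ converges (as $\rho\to0$, along the constraint $L(u)=O(\rho^2)$) to an eigenvector of $\nabla^2 L(\Phi(u))$ in the range of the Hessian, and the Rayleigh quotient $\tfrac{\nabla L^\top\nabla^2 L\,\nabla L}{\|\nabla L\|^2}$ is pinched between $\lambda_M$ and $\lambda_1$; the infimum over the $\Delta\rho^2$-approximate-minimizer set picks out $\lambda_M$, giving $\asclimit = \lambda_M/2$. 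For $\mathrm{Avg}$: expand $\E_{g}L(x+\rho g/\|g\|) - L(x) = \tfrac{\rho^2}{2}\E_g \tfrac{g^\top\nabla^2L(x)g}{\|g\|^2} + O(\rho^3) = \tfrac{\rho^2}{2D}\Tr(\nabla^2L(x)) + O(\rho^3)$ using the isotropy of the Gaussian (and justifying the interchange of expectation and Taylor remainder, which needs the uniform $\overline{\mathcal{C}}^4$ bound from \Cref{assump:smoothness} and a tail bound on $\|g\|^{-1}$ — $g/\|g\|$ is uniform on the sphere so this is fine).

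Having established the uniform asymptotics, I would close the argument by a standard approximate-minimizer comparison: let $x^\star\in\overline{U'\cap\Gamma}$ be a near-minimizer of $S^\mathrm{type}$ over $U'\cap\Gamma$ (using $\overline{U'}\cap\Gamma\subseteq\overline{U'\cap\Gamma}$ to ensure such points are available in $U'$), so $L(x^\star)+R^\mathrm{type}_\rho(x^\star) \le \inf_{U'}L + \rho^2\inf_{U'\cap\Gamma}S^\mathrm{type} + o(\rho^2)$; plugging $x^\star$ as a competitor into the hypothesis on $u$ yields $L(u)+R^\mathrm{type}_\rho(u) \le \inf_{U'}L + \rho^2\inf_{U'\cap\Gamma}S^\mathrm{type} + (\Delta+o(1))\rho^2$; subtracting $L(u)\ge\inf_{U'}L$ gives the upper bound $R^\mathrm{type}_\rho(u)\le\rho^2(\inf S^\mathrm{type}+\Delta)+o(\rho^2)$, and combining with the lower bound $R^\mathrm{type}_\rho(u)\ge\rho^2 S^\mathrm{type}(\Phi(u)) - o(\rho^2)$ and the fact (from \Cref{lem:property_of_Phi_on_manifold} and continuity of $\nabla^2 L$) that $S^\mathrm{type}(\Phi(u))\to S^\mathrm{type}$ evaluated on $\Gamma$ gives $S^\mathrm{type}(\Phi(u)) - \inf_{U'\cap\Gamma}S^\mathrm{type}\le\Delta+\eps(\rho)$; the matching lower bound $\ge -\eps(\rho)$ is just $S^\mathrm{type}(\Phi(u))\ge\inf_{U'\cap\Gamma}S^\mathrm{type}-\eps(\rho)$ since $\Phi(u)\in\Gamma$ is within $o(1)$ of $U'$. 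Finally one replaces $S^\mathrm{type}(\Phi(u))$ by $S^\mathrm{type}(u)$ using uniform continuity of $x\mapsto\lambda_1,\lambda_M,\Tr$ of $\nabla^2 L$ and $\|u-\Phi(u)\|=O(\rho)$, folding the discrepancy into $\eps(\rho)$. The main obstacle I anticipate is the $\mathrm{Asc}$ case: controlling the direction of $\nabla L(u)$ near the manifold requires showing that $u-\Phi(u)$, after normalization, aligns with an eigenvector of the normal Hessian — essentially a quantitative version of "gradient of a quadratic form points along an eigendirection only in the limit" — and handling the infimum correctly so that the relevant eigenvalue is $\lambda_M$ (smallest nonzero) rather than some intermediate value; this is the step where the footnote convention $\ascsharpness=\infty$ when $\nabla L=0$ and the $\Delta\rho^2$-slack in the hypothesis must be used carefully.
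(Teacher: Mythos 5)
Your overall route is the same as the paper's: the paper abstracts your "uniform asymptotics $R_\rho \approx \rho^2 S$ near $\Gamma$" into the notion of a \emph{good limiting regularizer} (\Cref{defi:limiting_regularizer,defi:good_limiting_regularizer}), proves a single general comparison theorem (\Cref{thm:generalreg}), and then verifies the three sharpness notions separately (\Cref{thm:maxsharp,thm:ascsharp,thm:avgsharp}); the ingredients — Taylor expansion around $\Phi(u)$, the PL-type bound $\|u-\Phi(u)\|=O(\rho)$ once $L(u)=O(\rho^2)$, the alignment of $\nabla L(u)/\|\nabla L(u)\|$ with the column space of $\nabla^2 L(\Phi(u))$ via \Cref{lem:directiondl}, and Gaussian isotropy for the average case — are exactly yours.

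There is, however, one step in your closing argument that fails as written. You take the competitor $x^\star \in \overline{U'\cap\Gamma}$ and assert $L(x^\star)+R^{\mathrm{type}}_\rho(x^\star) \le \inf_{U'}L + \rho^2\inf_{U'\cap\Gamma}S^{\mathrm{type}} + o(\rho^2)$. For $\mathrm{type}=\mathrm{Asc}$ this left-hand side is $+\infty$, because $\nabla L(x^\star)=0$ on $\Gamma$ and the footnote convention sets $\ascsharpness(x^\star)=\infty$. The competitor must be taken slightly \emph{off} the manifold — e.g.\ $x^\star + r\,v_M(\nabla^2 L(x^\star))$ with $r\to 0$, so that the normalized gradient tends to $v_M$ and the Rayleigh quotient tends to $\lambda_M$ — which is precisely why the paper defines the limiting regularizer through $\lim_{\rho\to0}\lim_{r\to0}\inf_{\|x'-x\|\le r}R_\rho(x')/\rho^2$ and proves the upper bound on $\inf_{U'}(L+R_\rho)$ via such a perturbed point (\Cref{lem:boundregularizedloss}). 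You flag this subtlety as an "anticipated obstacle" but your written comparison does not yet incorporate the fix. Two smaller points: (a) your step "the hypothesis forces $L(u)=O(\rho^2)$" silently uses a uniform lower bound $R_\rho(u)\ge -C\rho^2$ on all of $U'$ (the paper's \Cref{cond:regular_R}), which must be checked for each type and is needed to break the circularity between bounding $L(u)$ and bounding $R_\rho(u)$ — the paper does this with a three-case split on $\dist(u,\Gamma)$; (b) the normalized gradient converges to the \emph{column space} of $\nabla^2 L(\Phi(u))$, not to an eigenvector, which is all you need for the pinching $\lambda_M\le$ Rayleigh quotient $\le\lambda_1$, so your conclusion there is right even though the intermediate claim is stated too strongly.
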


\Cref{thm:explicit_bias_deterministic_main} suggests a sharp phase transition of the property of the solution of $\min_{x} L(x)+R_\rho(x)$ when the optimization error drops from $\omega(\rho^2)$ to $O(\rho^2)$. When the optimization error is larger than $\omega(\rho^2)$, no regularization effect happens and any minimizer satisfies the requirement. When the error becomes $O(\rho^2)$, there is a non-trivial restriction on the coefficients in the second-order term.

Next we give a heuristic derivation for the above defined $S^\mathrm{type}$. First, for worst- and average-direction sharpness, the calculations are fairly straightforward and well-known in literature~\citep{keskar2016large,kaur2022maximum,zhuang2022surrogate,orvieto2022explicit}, and we sketch them here. In the limit of perturbation radius $\rho\to0$, we know that the minimizer of the sharpness-aware loss will also converges to $\Gamma$, the manifold of minimizers of the original loss $L$. Thus to decide to which  $x\in\Gamma$ the minimizers will converge to as $\rho\to 0$, it suffices to take Taylor expansion of $\ascloss$ or $\avgloss$ at each $x\in \Gamma$ and compare the second-order coefficients, \emph{e.g.}, we have that $\avgsharpness(x)=  \frac{\rho^2}{2D}\mathrm{Tr}(\nabla ^2 L(x))+ O(\rho^3)$ and $\maxsharpness(x)=  \frac{\rho^2}{2}\mathrm{\lambda_1}(\nabla ^2 L(x)) + O(\rho^3)$ by \Cref{eq:maxsharp}.

However, the analysis for ascent-direction sharpness is more tricky because $\ascsharpness(x)=\infty$ for any $x\in\Gamma$ and thus is not continuous around such $x$. Thus we have to aggregate information from neighborhood to capture  the explicit bias of $R_\rho$ around manifold $\Gamma$. This motivates the following definition of \emph{limiting regularizer} which allows us to compare the regularization strength of $R_\rho$ around each point on manifold $\Gamma$ as $\rho\to 0$.

\begin{restatable}[Limiting Regularizer]{definition}{limitingregularizer}
\label{defi:limiting_regularizer}
 We define the \emph{limiting regularizer} of $\{R_\rho\}$ as the function\footnote{Here we implicitly assume the zeroth and first order term varnishes, which holds for all three sharpness notions. If not, then the notion of limiting regularizer is undefined.} 
\begin{align}
    S:\Gamma \to \R,\quad S(x) =\lim_{ \rho \to 0} \lim_{r\to 0}\inf_{\norm{x'-x}_2\le r} R_\rho(x')/\rho^2. \nonumber
\end{align}
\end{restatable} 
To minimize $\ascsharpness$ around $x$, we can pick $x'\to x$ satisfying that $\norm{\nabla L(x')}_2\to 0$ yet strictly being non-zero. 
By \Cref{eq:ascsharp},  we have $\ascsharpness(x') \approx  \frac{\rho^2}2\frac{\cdot \nabla L(x')^\top \nabla ^2 L(x) \nabla L(x')}{\|\nabla L(x') \|_2^2}$. Here the crucial step of the proof is that because of \Cref{assump:smoothness}, $\nabla  L(x)/\norm{\nabla  L(x)}_2$ must almost lie in the column span of $\nabla ^2 L(x)$, which implies that $\inf_{x'} \nabla L(x')^\top \nabla ^2 L(x) \nabla L(x') /\|\nabla L(x') \|_2^2 \overset{\rho\to 0}{\to} \lambda_M(\nabla ^2 L(x))$, where $\rank(\nabla^2 L(x))=M$ by \Cref{assump:smoothness}. The above alignment property between the gradient and the column space of Hessian can be checked directly for any non-negative quadratic function. The maximal Hessian rank assumption in \Cref{assump:smoothness} ensures that this property extends to general losses.

We defer the proof of \Cref{thm:explicit_bias_deterministic_main} into \Cref{sec:explicit_full_appendix}, where we develop a sufficient condition where the notion of limiting regularizer characterizes the explicit bias of $R_\rho$ as $\rho \to 0$.

\subsection{SAM Provably Decreases Worst-direction Sharpness}
\label{sec:implicit_full}
Though ascent-direction sharpness has different explicit bias from worst-direction sharpness, in this subsection we will show that surprisingly, SAM (\Cref{eq:sam}), a heuristic method designed to minimize ascent-direction sharpness,  provably decreases worst-direction sharpness. The main result here is an exact characterization of the trajectory of SAM (\Cref{eq:sam}) via the following ordinary differential equation (ODE) (\Cref{lambda1ode}), when learning rate $\eta$ and perturbation radius $\rho$ are small and the initialization $x(0)=\xinit$ is in $U$, the attraction set of manifold $\Gamma$. 
\begin{align}
\label{lambda1ode}
    X(\tau) = X(0) - \frac{1}{2}\int_{s=0}^{\tau} \projt[X(s)] \nabla \lambda_1(\nabla^2 L(X(s))) ds,\quad  X(0) = \Phi(\xinit).
\end{align}

We assume ODE (\Cref{lambda1ode}) has a solution till time $T_3$, that is, \Cref{lambda1ode} holds for all $t\le T_3$. We call the solution of \Cref{lambda1ode} the \emph{limiting flow} of SAM, which is exactly the Riemannian Gradient Flow on the manifold $\Gamma$ with respect to the loss $\lambda_1(\nabla^2 L(\cdot))$. In other words, the ODE (\Cref{lambda1ode}) is essentially a projected gradient descent algorithm with loss $\lambda_1(\nabla^2 L(\cdot))$ on the constraint set $\Gamma$ and an infinitesimal learning rate. Note $\lambda_1(\nabla^2 L(x))$ may not be differentiable at $x$ if $\lambda_1(\nabla^2 L(x)) = \lambda_2(\nabla^2 L(x))$, thus to ensure \Cref{lambda1ode} is well-defined, we assume there is a positive eigengap for $L$ on $\Gamma$.\footnote{In fact we only need to assume the positive eigengap along the solution of the ODE. If $\Gamma$ doesn't satisfy \Cref{assum_eigengap}, we can simply perform the same analysis on its submanifold $\{x\in\Gamma \mid \textrm{eigengap is positive at }x\}$.}

\begin{restatable}[]{assumption}{assumeigen}
\label{assum_eigengap}
For all $x \in \Gamma$, there exists a positive eigengap, i.e., $\lambda_1(\nabla^2 L(x)) > \lambda_2(\nabla^2 L(x))$.
\end{restatable}

\Cref{thm:nsam} is the main result of this section, which is a direct combination of \Cref{thm:nsamphase1,thm:nsamphase2}. The proof is deferred to~\Cref{sec:nsam_proof}.

\begin{restatable}[Main]{theorem}{thmnsam}
\label{thm:nsam}
Let $\{x(t)\}$ be the iterates of full-batch SAM (\Cref{eq:sam}) with $x(0) =\xinit \in U$.  Under Assumptions~\ref{assump:smoothness} and~\ref{assum_eigengap}, for all $\eta,\rho$ such that $\eta \ln(1/\rho)$ and $\rho/\eta$ are sufficiently small, the dynamics of SAM can be characterized in the following two phases: 
\begin{itemize}
    \item Phase I: (\Cref{thm:nsamphase1}) Full-batch SAM (\Cref{eq:sam}) follows Gradient Flow with respect to $L$ until entering an $O(\eta\rho)$ neighborhood of the manifold $\Gamma$ in $O(\ln(1/\rho)/\eta)$ steps;
    \item Phase II: (\Cref{thm:nsamphase2})  Under a mild non-degeneracy assumption (\Cref{assum:reg}) on the initial point of phase II, full-batch SAM (\Cref{eq:sam}) tracks the solution $X$ of \Cref{lambda1ode}, the Riemannian Gradient Flow with respect to the loss $\lambda_1(\nabla^2 L(\cdot))$ in an $O(\eta\rho)$ neighborhood of manifold $\Gamma$. Quantitatively, the approximation error between the iterates $x$ and the corresponding limiting flow $X$ is $O(\eta \ln (1/\rho))$, that is,
    \begin{align*}
        \|x\bigl(\lceil \frac{T_3} {\eta\rho^2} \rceil\bigr) - X(T_3) \|_2 = O(\eta \ln (1/\rho))\,.
    \end{align*}
 Moreover, the angle between $\nabla L\bigl(x(\lceil \frac{T_3} {\eta\rho^2}\rceil\bigr)$ and the top eigenspace of $\nabla^2 L(x(\lceil\frac{T_3} {\eta\rho^2}\rceil))$  is at most $O(\rho)$.
\end{itemize} 
\end{restatable}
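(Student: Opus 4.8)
The plan is to split the analysis into the two phases exactly as stated, and to obtain each phase from a more detailed companion theorem (what the statement calls \Cref{thm:nsamphase1} and \Cref{thm:nsamphase2}). For Phase I, I would argue that when $x(0)\in U$ is a bounded distance from $\Gamma$, the SAM update $x(t+1)=x(t)-\eta\nabla L(x(t)+\rho\,\nabla L(x(t))/\|\nabla L(x(t))\|)$ is an $O(\eta\rho)$-perturbation of a gradient-descent step on $L$, since $\nabla L(x+\rho u)=\nabla L(x)+O(\rho)$ by smoothness (\Cref{assump:smoothness}). Standard discretization/Gronwall arguments (as in the continuous-time approximation literature cited, e.g.\ \citet{li2019stochastic}, or the framework of \citet{arora2022understanding}) then show $x(t)$ stays $O(\eta)$-close to the gradient flow $\phi(\xinit,\eta t)$ over $O(1/\eta)$ continuous time; since gradient flow from $U$ converges to $\Gamma$ exponentially fast in a neighborhood of $\Gamma$ (the Hessian has $M$ strictly positive eigenvalues there, the rest being the tangent directions), after $O(\ln(1/\rho)/\eta)$ steps the iterate is within $O(\eta\rho)$ of $\Gamma$. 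The $\rho$ enters the neighborhood size because the $O(\eta\rho)$ per-step perturbation accumulates against the contraction until it balances at scale $\eta\rho$.

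For Phase II, the heart of the argument, I would track the surrogate $\Phi(x(t))$ (the gradient-flow limit, which lands on $\Gamma$) together with the local coordinate measuring displacement in the normal space. Using the Taylor expansion $\Phi(x(t+1))=\Phi(x(t))+\partial\Phi(x(t))(x(t+1)-x(t))+\tfrac12\partial^2\Phi(x(t))[x(t+1)-x(t),x(t+1)-x(t)]+\dots$, and the key identities $\partial\Phi(x)\nabla L(x)=0$ (\Cref{lem:property_of_Phi_away_manifold}) and $\partial\Phi(x)\nabla^2 L(x)=0$ on $\Gamma$ (\Cref{lem:property_of_Phi_on_manifold}), the first-order drift of $\Phi(x(t))$ vanishes to leading order and the net motion along $\Gamma$ is driven by the second-order term. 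One must show this second-order term, after averaging, equals $-\tfrac12\eta\rho^2\,\projt\nabla\lambda_1(\nabla^2 L(\cdot))$ per step up to lower order — i.e.\ that over $O(1/(\eta\rho^2))$ steps the iterate tracks the Riemannian gradient flow \Cref{lambda1ode}. This requires the alignment claim: $\nabla L(x(t))$ must become $O(\rho)$-aligned with the top eigenvector $v_1(\nabla^2 L(\Phi(x(t))))$, because only then does the one-step-ascent direction $\rho\nabla L/\|\nabla L\|$ probe the $\lambda_1$ direction and produce the $\nabla\lambda_1$ drift. I would establish this alignment as a separate lemma (the analogue of \Cref{lem:final_alignment}): in the normal space the SAM dynamics on the quadratic model $\tfrac12 y^\top H y$ is a power-iteration-like map that contracts the angle to $v_1(H)$ geometrically down to the $O(\rho)$ floor set by the curvature of $\Gamma$ and the gap $\lambda_1-\lambda_2$ (here \Cref{assum_eigengap} and the non-degeneracy \Cref{assum:reg} are used), and this persists under the slow drift along $\Gamma$.

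Assembling the pieces: once alignment holds, a clean computation of the averaged normal-space energy and of $\partial^2\Phi$ applied to the SAM increment yields that $\Phi(x(t))$ performs, in continuous time $T_3=\eta\rho^2 t$, the flow \Cref{lambda1ode}; a Gronwall-type comparison bounds $\|x(\lceil T_3/(\eta\rho^2)\rceil)-X(T_3)\|$ by $O(\eta\ln(1/\rho))$, the $\ln(1/\rho)$ coming from the length of Phase I and the logarithmic time to re-establish alignment after any perturbation. The final sentence about the angle being $O(\rho)$ is then exactly the alignment lemma evaluated at the terminal time. The main obstacle is the Phase II alignment-plus-drift analysis with \emph{two} small parameters $\eta$ and $\rho$: unlike \citet{arora2022understanding}, one must carefully separate the fast normal-space relaxation (timescale $1/\eta$ in steps) from the slow manifold drift (timescale $1/(\eta\rho^2)$), control the accumulation of $O(\rho^3)$ and $O(\eta^2)$ error terms over the long horizon, and handle the possible ill-definedness of the normalized gradient — hence the hypotheses that $\eta\ln(1/\rho)$ and $\rho/\eta$ both be small.
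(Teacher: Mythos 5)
Your proposal follows essentially the same route as the paper: Phase I via a Gronwall comparison with gradient flow followed by exponential contraction of the loss down to the $O(\eta\rho)$ scale, and Phase II via tracking $\Phi(x(t))$, establishing Hessian--gradient alignment by reducing to the quadratic normal-space dynamics (where the eigengap and the non-degeneracy assumption enter), and then comparing the resulting $-\tfrac{\eta\rho^2}{2}\projt\nabla\lambda_1(\nabla^2 L(\cdot))$ per-step drift to the Riemannian flow \Cref{lambda1ode} by an ODE-approximation argument, with the $\ln(1/\rho)$ in the final error coming from the lengths of Phase I and of the alignment phase. One small caution: the $\Theta(\eta\rho^2)$ drift arises from $\partial\Phi(x(t))$ applied to the $\tfrac{\rho^2}{2}\partial^2(\nabla L)(x(t))[u,u]$ term of the perturbed gradient, not from $\tfrac12\partial^2\Phi[\Delta x,\Delta x]$ (which is only $O(\eta^2\rho^2)$ and is treated as an error term here, in contrast to the normalized-GD mechanism of \citet{arora2022understanding}) --- but this is exactly the computation the rest of your sketch describes.
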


Theorem~\ref{thm:nsam} shows that SAM decreases the largest eigenvalue of Hessian of loss locally around the  manifold of local minimizers. Phase I uses standard approximation analysis as in~\citet{hairer2008solving}. In Phase II, as $T_3$ is arbitrary, the approximation and alignment properties hold simultaneously for all $X(t)$ along the trajectory, provided that $\eta \ln (1/\rho)$ and $\rho/\eta$ are sufficiently small. The subtlety here is that the threshold of being "sufficiently small" on $\eta \ln (1/\rho)$ and $\rho/\eta$ actually depends on $T_3$, which decreases when $T_3 \to 0$ or $\to \infty$.
 We defer the proof of \Cref{thm:nsam} to \Cref{app:nsam}.

As a corollary of \Cref{thm:nsam}, we can also show that the largest eigenvalue of the limiting flow closely tracks the worst-direction sharpness.   

\begin{corollary}
\label{corr:loss_ode_n}
 In the setting of \Cref{thm:nsam}, the difference between the worst-direction sharpness of the iterates and the corresponding scaled largest eigenvalues along the limiting flow is at most $ O(\eta\rho^2\ln(1/\rho))$. That is, 
 \begin{align}
\left| \maxsharpness(x(\lceil T_3/\eta\rho^2 \rceil)) - \rho^2 \lambda_1(\nabla^2 L(X(T_3))/2\right| =  O(\eta\rho^2\ln(1/\rho))\,.
 \end{align}
 
\end{corollary}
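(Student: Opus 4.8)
\textbf{Proof proposal for Corollary~\ref{corr:loss_ode_n}.}
The plan is to combine the two conclusions of Theorem~\ref{thm:nsam} --- the $O(\eta\ln(1/\rho))$ approximation between $x(\lceil T_3/\eta\rho^2\rceil)$ and $X(T_3)$, and the $O(\rho)$ alignment between $\nabla L$ at the iterate and the top eigenspace of the Hessian there --- with the Taylor expansion of the worst-direction sharpness from \Cref{eq:maxsharp}. Write $x^\star \triangleq x(\lceil T_3/\eta\rho^2\rceil)$ and $X^\star \triangleq X(T_3)$. The target is to show $\maxsharpness(x^\star)$ is within $O(\eta\rho^2\ln(1/\rho))$ of $\rho^2\lambda_1(\nabla^2 L(X^\star))/2$.

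First I would control $\maxsharpness(x^\star) - \tfrac{\rho^2}{2}\lambda_1(\nabla^2 L(x^\star))$, i.e.\ compare the sharpness at the iterate to the Hessian eigenvalue \emph{at the same point}. By \Cref{eq:maxsharp}, $\maxsharpness(x^\star) = \sup_{\norm{v}_2\le 1}\bigl(\rho v^\top\nabla L(x^\star) + \tfrac{\rho^2}{2}v^\top\nabla^2 L(x^\star)v\bigr) + O(\rho^3)$. The issue is the first-order term $\rho v^\top\nabla L(x^\star)$: since $x^\star$ is only in an $O(\eta\rho)$ neighborhood of $\Gamma$ (not exactly on it), $\nabla L(x^\star)$ need not vanish, and in fact $\norm{\nabla L(x^\star)}_2 = \Theta(\rho)$ typically, so this term is $\Theta(\rho^2)$ --- the same order as the quantity we are estimating. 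This is where the alignment property is essential: because $\nabla L(x^\star)$ makes angle $O(\rho)$ with the top eigenspace of $\nabla^2 L(x^\star)$, choosing $v$ near that eigenspace makes both the first-order and second-order terms nearly maximal simultaneously. Quantitatively, I would decompose any unit $v$ into its component along the top eigenspace and its orthogonal component, use that $\nabla L(x^\star)$ is $O(\rho\norm{\nabla L(x^\star)}_2) = O(\rho^2)$-close to that eigenspace, and optimize; the cross terms one loses by this near-alignment are of order $\rho\cdot O(\rho)\cdot\norm{\nabla L(x^\star)}_2 = O(\rho^3)$ relative corrections, hence $O(\rho^3)$ in absolute terms after the $\rho$ prefactor, which is absorbed. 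One should double-check here that the value of the maximization, including the genuine first-order contribution $\rho\norm{\nabla L(x^\star)}_2$, is itself accounted for --- but along the limiting flow $\norm{\nabla L(x^\star)}_2$ is of order $\eta\rho$ (the iterate sits in the $O(\eta\rho)$ tube), so $\rho\norm{\nabla L(x^\star)}_2 = O(\eta\rho^2)$, within the claimed error. This yields $\bigl|\maxsharpness(x^\star) - \tfrac{\rho^2}{2}\lambda_1(\nabla^2 L(x^\star))\bigr| = O(\eta\rho^2 + \rho^3)$.

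Second I would transfer from $x^\star$ to $X^\star$: since $\lambda_1(\nabla^2 L(\cdot))$ is Lipschitz on a neighborhood of $\Gamma$ (it is $\overline{\mathcal C}^2$ there under \Cref{assump:smoothness,assum_eigengap}, using the eigengap to get smoothness of the top eigenvalue), and $\norm{x^\star - X^\star}_2 = O(\eta\ln(1/\rho))$ by Theorem~\ref{thm:nsam}, we get $\bigl|\lambda_1(\nabla^2 L(x^\star)) - \lambda_1(\nabla^2 L(X^\star))\bigr| = O(\eta\ln(1/\rho))$, hence $\tfrac{\rho^2}{2}\bigl|\lambda_1(\nabla^2 L(x^\star)) - \lambda_1(\nabla^2 L(X^\star))\bigr| = O(\eta\rho^2\ln(1/\rho))$. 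Combining the two steps via the triangle inequality gives the claimed bound $O(\eta\rho^2\ln(1/\rho))$, since $\rho^3 = o(\eta\rho^2\ln(1/\rho))$ when $\rho/\eta$ is small (indeed $\rho^3/(\eta\rho^2\ln(1/\rho)) = (\rho/\eta)/\ln(1/\rho) \to 0$).

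The main obstacle is the first step: handling the non-vanishing first-order term $\rho v^\top\nabla L(x^\star)$ in the sup defining $\maxsharpness$. Naively, both terms in the Taylor expansion are $\Theta(\rho^2)$ and there is no reason a single $v$ nearly maximizes both --- a generic gradient pointing away from the top eigenspace would force a genuine $\Theta(\rho^2)$ discrepancy between $\maxsharpness(x^\star)$ and $\tfrac{\rho^2}{2}\lambda_1$. It is precisely the implicit-alignment conclusion of Theorem~\ref{thm:nsam} (angle $O(\rho)$) that rescues this, and one must be careful to use the \emph{combination} of alignment ($O(\rho)$ angle) and the magnitude bound $\norm{\nabla L(x^\star)}_2 = O(\eta\rho)$ to push the residual below $O(\eta\rho^2\ln(1/\rho))$; a weaker alignment rate would not suffice. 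Everything else is routine perturbation theory of eigenvalues plus the Lipschitz transfer along the ODE trajectory.
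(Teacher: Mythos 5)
Your proof is correct and follows the same route as the paper's: Taylor-expand $\maxsharpness(x^\star)$, control the first-order term, identify the quadratic sup with $\lambda_1/2$, and transfer to $X(T_3)$ by Lipschitz continuity of $\lambda_1(\nabla^2 L(\cdot))$ together with $\norm{x^\star - X(T_3)}_2 = O(\eta\ln(1/\rho))$. The one substantive issue is that the lengthy alignment argument in your first step is a detour, and your closing claim that the $O(\rho)$ Hessian--gradient alignment is essential (``a weaker alignment rate would not suffice'') is false. Since Phase II keeps the iterate in an $O(\eta\rho)$ tube around $\Gamma$, one has $\norm{\nabla L(x^\star)}_2 = O(\norm{x^\star-\Phi(x^\star)}_2)=O(\eta\rho)$ by \Cref{lem:revertbounddl}, and for any two bounded functions $f,g$ one has $\bigl|\sup_v(f+g) - \sup_v g\bigr|\le \sup_v|f|$; applied with $f(v) = \rho v^\top \nabla L(x^\star)$ this bounds the entire first-order contribution to the sup by $\rho\norm{\nabla L(x^\star)}_2 = O(\eta\rho^2)$, inside the error budget, with no information whatsoever about the direction of the gradient. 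This is exactly what the paper does. Your hypothetical of ``a generic gradient pointing away from the top eigenspace forcing a genuine $\Theta(\rho^2)$ discrepancy'' cannot occur here because the gradient's magnitude, not its direction, controls the discrepancy, and that magnitude is $O(\eta\rho)$, not $\Theta(\rho)$. You do eventually invoke the magnitude bound yourself, so the chain of estimates you write down is valid; it is only the claimed necessity of alignment that should be dropped.
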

Since $\eta\ln (1/\rho)$ is assumed to be sufficiently small, the error $O(\eta\ln (1/\rho)\cdot \rho^2)$ is only $o(\rho^2)$, meaning that  penalizing the top eigenvalue on the manifold does lead to non-trivial reduction of worst-direction sharpness, in the sense of \Cref{sec:explicit_full}.

 Hence we can show that full-batch SAM (\Cref{eq:sam}) provably minimizes \emph{worst-direction sharpness} around the manifold if we additionally assume the limiting flow converges to a minimizer of the top eigenvalue of Hessian in the following \Cref{corr:loss_opt_n}.

\begin{corollary}
\label{corr:loss_opt_n}
Under Assumptions~\ref{assump:smoothness} and~\ref{assum_eigengap}, define $U'$ as in Theorem~\ref{thm:explicit_bias_deterministic_main} and  suppose $X(\infty) = \lim \limits_{t \to \infty} X(t)$ exists and is a minimizer of $\lambda_1(\nabla^2 L(x))$ in $U'\cap \Gamma$. Then for all $\epsilon > 0$, there exists  $T_{\eps} > 0$, such that for all $\rho,\eta$ such that $\eta \ln(1/\rho)$ and $\rho/\eta$ are sufficiently small, we have that
\begin{align}
    \maxloss(x(\lceil T_{\eps} / (\eta\rho^2)\rceil)) \le \epsilon \rho^2 + \inf_{x \in U'} \maxloss(x) \,.\nonumber
\end{align}
\end{corollary}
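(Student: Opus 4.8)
The plan is to combine three ingredients that are already available: (i) the approximation/alignment guarantee of Theorem~\ref{thm:nsam} in Phase II, which says that after $\lceil T/(\eta\rho^2)\rceil$ steps the iterate $x$ is $O(\eta\ln(1/\rho))$-close to $X(T)$ and its gradient is $O(\rho)$-aligned with the top eigenspace of the Hessian; (ii) Corollary~\ref{corr:loss_ode_n}, which converts this into a statement about the worst-direction sharpness, $\maxsharpness(x(\lceil T/(\eta\rho^2)\rceil)) = \rho^2\lambda_1(\nabla^2 L(X(T)))/2 + O(\eta\rho^2\ln(1/\rho))$; and (iii) Theorem~\ref{thm:explicit_bias_deterministic_main} applied with $\mathrm{type}=\mathrm{Max}$, which tells us that $\inf_{x\in U'}\maxloss(x) = \inf_{x\in U'} L(x) + \rho^2\maxlimit(x^\star)/2 + o(\rho^2)$ where $x^\star$ is a minimizer of $\lambda_1(\nabla^2 L)$ on $U'\cap\Gamma$ — more precisely, the theorem guarantees that $\maxlimit$ of any near-optimal point of the regularized loss is within $o(1)$ of $\inf_{x\in U'\cap\Gamma}\maxlimit(x)$. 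The hypothesis that $X(\infty)$ exists and is a global minimizer of $\lambda_1(\nabla^2 L(\cdot))$ on $U'\cap\Gamma$ is what lets us match these two quantities.

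Concretely, first I would fix $\eps>0$. By the assumed convergence $X(t)\to X(\infty)$ and continuity of $\lambda_1(\nabla^2 L(\cdot))$ along $\Gamma$, choose $T_\eps$ so that $\lambda_1(\nabla^2 L(X(T_\eps)))/2 \le \maxlimit(X(\infty)) + \eps/4 = \inf_{x\in U'\cap\Gamma}\maxlimit(x) + \eps/4$. Next, write
\[
\maxloss(x(\lceil T_\eps/(\eta\rho^2)\rceil)) = L(x(\lceil T_\eps/(\eta\rho^2)\rceil)) + \maxsharpness(x(\lceil T_\eps/(\eta\rho^2)\rceil)).
\]
The first term is $O(\eta^2\rho^2)$ (the iterate stays in an $O(\eta\rho)$-neighborhood of $\Gamma$ by Phase II of Theorem~\ref{thm:nsam}, and $L$ vanishes to second order on $\Gamma$), which is $o(\rho^2)$, hence at most $(\eps/4)\rho^2$ for small $\eta$. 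The second term, by Corollary~\ref{corr:loss_ode_n}, is $\rho^2\lambda_1(\nabla^2 L(X(T_\eps)))/2 + O(\eta\rho^2\ln(1/\rho)) \le \rho^2(\inf_{x\in U'\cap\Gamma}\maxlimit(x) + \eps/4) + (\eps/4)\rho^2$ for $\eta\ln(1/\rho)$ small enough. Finally, Theorem~\ref{thm:explicit_bias_deterministic_main} (with $\mathrm{type}=\mathrm{Max}$, optimality gap $\Delta\to 0$) gives $\inf_{x\in U'}\maxloss(x) \ge \inf_{x\in U'} L(x) + \rho^2(\inf_{x\in U'\cap\Gamma}\maxlimit(x) - \eps(\rho))$, and since $\inf_{x\in U'}L(x) = 0$ (assuming WLOG $\Gamma$ consists of global minima with loss $0$, or absorbing $\inf L$ as a constant), we get $\inf_{x\in U'}\maxloss(x) \ge \rho^2\inf_{x\in U'\cap\Gamma}\maxlimit(x) - o(\rho^2)$. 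Combining, $\maxloss(x(\lceil T_\eps/(\eta\rho^2)\rceil)) - \inf_{x\in U'}\maxloss(x) \le (3\eps/4)\rho^2 + o(\rho^2) \le \eps\rho^2$ once $\rho,\eta$ are small enough, which is the claim.

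The main obstacle I anticipate is bookkeeping the order of quantifiers so that "sufficiently small $\eta\ln(1/\rho)$ and $\rho/\eta$" can be chosen \emph{after} $T_\eps$ is fixed: Theorem~\ref{thm:nsam} warns that the smallness threshold depends on the terminal time $T_3$, so one must fix $T_\eps$ first (using only the limiting ODE, which does not see $\eta,\rho$) and then invoke Theorem~\ref{thm:nsam} and Corollary~\ref{corr:loss_ode_n} with $T_3 = T_\eps$. A secondary subtlety is the lower bound direction: Theorem~\ref{thm:explicit_bias_deterministic_main} is stated for near-minimizers of the regularized loss $L + \maxloss - L = \maxloss$, so one needs to check that its conclusion indeed yields a lower bound on $\inf_{x\in U'}\maxloss$ of the form $\inf_{x\in U'}L + \rho^2\inf_{U'\cap\Gamma}\maxlimit - o(\rho^2)$; this follows by applying the theorem to an $o(\rho^2)$-approximate minimizer of $\maxloss$ and reading off the sharpness bound, but it should be spelled out. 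Everything else is continuity of eigenvalues along $\Gamma$ and the elementary estimate $L = O(\mathrm{dist}(\cdot,\Gamma)^2)$ near the manifold.
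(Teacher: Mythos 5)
Your proposal is correct and follows essentially the same route as the paper: fix $T_\eps$ from the convergence of the limiting flow and continuity of $\lambda_1(\nabla^2 L(\cdot))$, control $\maxsharpness$ at the iterate via Corollary~\ref{corr:loss_ode_n} with $T_3=T_\eps$, note the iterate's loss is $o(\rho^2)$, and lower-bound $\inf_{x\in U'}\maxloss$ via the explicit-bias machinery (the paper cites Theorem~\ref{thm:generalreg} together with Theorem~\ref{thm:maxsharp}, whose main display gives directly the two-sided estimate you extract from Theorem~\ref{thm:explicit_bias_deterministic_main}). Your bookkeeping of the quantifier order and the $O(\eta^2\rho^2)$ bound on the loss term is, if anything, slightly more explicit than the paper's own write-up.
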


We defer the proof of Corollaries~\ref{corr:loss_ode_n} and~\ref{corr:loss_opt_n} to \Cref{app:nsamcorr}.

\subsection{Analysis Overview For Sharpness Reduction in Phase II of Theorem~\ref{thm:nsam}}
\label{sec:proof_full}

Now we give an overview of the analysis for the trajectory of full-batch SAM (\Cref{eq:sam}) in Phase II (in Theorem~\ref{thm:nsam}). The framework of the analysis is similar to \citet{arora2022understanding,lyu2022understanding,damian2021label}, where the high-level idea is to use $\Phi(x(t))$ as a proxy for $x(t)$ and study the dynamics of $\Phi(x(t))$ via Taylor expansion. 
We will first closely follow the machinery developed in~\citet{arora2022understanding} to arrive at \Cref{eq:Taylor_expansion_of_Phi_simplified}, starting from which we will discuss the key innovation in this paper regarding implicit Hessian-gradient alignment.

\paragraph{Dynamics of $\Phi(x(t))$ via Taylor expansion.}
In Phase II, $x(t)$ is $O(\eta\rho)$-close to the manifold $\Gamma$ and therefore it can be shown that $\|x(t) - \Phi(x(t))\|_2 = O(\eta\rho)$ holds for every step in Phase II. This also implies that $\norm{x(t+1)-x(t)}_2 =  O(\eta\rho)$ (See \Cref{lem:boundedphi}). 
Using Taylor expansion around $x(t)$, we  have that
\begin{align}\label{eq:taylor_expansion_of_Phi}
	\Phi(x(t + 1)) - \Phi(x(t)) 
	=&  \partial \Phi(x(t)) (x(t+1) - x(t)) + O(\|x(t+1) - x(t)\|_2^2)\notag\\
	= & -\eta \partial \Phi(x(t))\nabla L\bigl(x(t) + \rho\frac{\nabla L(x(t))}{\norm{\nabla L(x(t))}_2}\bigr)  + O(\eta^2\rho^2)\,.
\end{align}
For any $x\in\R^D$, applying Taylor expansion on $\nabla L\bigl(x + \rho\frac{\nabla L(x)}{\norm{\nabla L(x)}_2}\bigr)$ around $x$, we have that
\begin{align}\label{eq:taylor_expansion_of_ascent_grad}
	&\nabla L\bigl(x + \rho\frac{\nabla L(x)}{\norm{\nabla L(x)}_2}\bigr)\notag\\
	 = &\nabla L(x) + \rho \nabla ^2 L(x) \frac{\nabla L(x)}{\norm{\nabla L(x)}_2} 
	+  \frac{\rho^2}{2}\partial^2 (\nabla L) (x)\bigl[\frac{\nabla L(x)}{\norm{\nabla L(x)}_2},\frac{\nabla L(x)}{\norm{\nabla L(x)}_2}\bigr] + O(\rho^3).
\end{align}
Using \Cref{eq:taylor_expansion_of_ascent_grad} with $x = x(t)$, plugging in \Cref{eq:taylor_expansion_of_Phi} and then rearranging, we have that 
\begin{align}
	&\Phi(x(t + 1)) - \Phi(x(t)) + \frac{\eta\rho^2}{2}\partial \Phi(x(t))\partial^2  (\nabla L)(x(t))\bigl[\frac{\nabla L(x(t))}{\norm{\nabla L(x(t))}_2},\frac{\nabla L(x(t))}{\norm{\nabla L(x(t))}_2}\bigr] \notag  \\
	= & -\eta \partial \Phi(x(t)) \nabla L(x(t)) - \eta\rho  \partial \Phi(x(t)) \nabla ^2L(x(t)) \frac{\nabla L(x(t))}{\norm{\nabla L(x(t))}_2} + O(\eta^2\rho^2+\eta\rho^3)\,.\notag \notag 
\end{align}

By \Cref{lem:property_of_Phi_away_manifold}, we have that $\partial \Phi(x(t)) \nabla L(x(t))=0$. Furthermore, by \Cref{lem:property_of_Phi_on_manifold}, 
we have that  $\partial \Phi(\Phi(x(t))) \nabla ^2L(\Phi(x(t)))=0$. This implies that 
\begin{align*}
    \partial \Phi(x(t)) \nabla ^2L(x(t)) = \partial \Phi(\Phi(x(t))) \nabla ^2L(\Phi(x(t)))+O(\norm{x(t)-\Phi(x(t))}_2) = O(\eta\rho)\,.
\end{align*}
Thus we conclude that 
\begin{align}
	\Phi(x(t + 1)) - \Phi(x(t))
	= - &\frac{\eta\rho^2}{2}\partial \Phi(x(t))\partial^2  (\nabla L)(x(t))\bigl[\frac{\nabla L(x(t))}{\norm{\nabla L(x(t))}_2},\frac{\nabla L(x(t))}{\norm{\nabla L(x(t))}_2}\bigr] \notag\\
	+ & O(\eta^2\rho^2+\eta\rho^3)\,.
	\label{eq:Taylor_expansion_of_Phi_simplified}
\end{align}

Now, to understand how $\Phi(x(t))$ moves over time, we need to understand what the direction of the RHS of \Cref{eq:Taylor_expansion_of_Phi_simplified} corresponds to---we will prove that it corresponds to the Riemannian gradient of the loss function $\nabla \lambda_1(\nabla^2 L(x))$ at $x = \Phi(x(t))$. To achieve this, the key is to understand the direction $\frac{\nabla L(x(t))}{\norm{\nabla L(x(t))}_2}$. It turns out that we will prove $\frac{\nabla L(x(t))}{\norm{\nabla L(x(t))}_2}$ is close to the top eigenvector of the Hessian up to sign flip, that is $\|\frac{\nabla L(x(t))}{\norm{\nabla L(x(t))}_2} - s\cdot v_1(\nabla^2 L(x))\|_2 \le O(\rho)$ for some $s\in \{-1,1\}$. We call this phenomenon Hessian-gradient alignment and will discuss it in more detail at the end of this subsection. 

Using this property, we can proceed with the derivation: 
\begin{align}\label{eq:Taylor_expansion_of_Phi_pre_final}
 &\Phi(x(t + 1)) - \Phi(x(t)) \notag \\
=& -\frac{\eta\rho^2}{2}\partial \Phi(x(t))\partial^2  (\nabla L)(x(t))\bigl[\frac{\nabla L(x(t))}{\norm{\nabla L(x(t))}_2},\frac{\nabla L(x(t))}{\norm{\nabla L(x(t))}_2}\bigr] + O(\eta^2\rho^2+\eta\rho^3)\notag \\
=& -\frac{\eta\rho^2}{2}\partial \Phi(x(t))\partial^2  (\nabla L)(x(t))\bigl[v_1(\nabla^2 L(x(t))),v_1(\nabla^2 L(x(t)))\bigr] + O(\eta^2\rho^2+\eta\rho^3)\notag \\
=& - \frac{\eta\rho^2}{2}\partial \Phi(x(t)) \nabla \lambda_1(\nabla ^2 L(x(t))) + O(\eta^2\rho^2+\eta\rho^3)\notag\\
= & -  \frac{\eta\rho^2}{2}\partial \Phi(\Phi(x(t))) \nabla \lambda_1(\nabla ^2 L(\Phi(x(t)))) + O(\eta^2\rho^2+\eta\rho^3),
\end{align}
where the second to last step we use the property of the derivative of eigenvalue (\Cref{lem:top_eigenvector}) and the last step is due to Taylor expansion of $\partial \Phi (\cdot)\nabla \lambda_1(\nabla^2 L (\cdot))$ at $\Phi(x(t))$ and the fact that $\norm{\Phi(x(t))-x(t)}=O(\eta\rho)$. 

\paragraph{Implicit Hessian-gradient Alignment.} It remains to explain why the gradient  implicitly aligns to the top eigenvector of the Hessian, which is the key component of the analysis in Phase II. The proof strategy here is to first show alignment for a quadratic loss function, and then generalize its proof to general loss functions satisfying \Cref{assump:smoothness}. Below we first give the formal statement of the implicit alignment on quadratic loss, \Cref{thm:nsamquad} and defer the result for general case (\Cref{lem:final_alignment}) to appendix. Note this alignment property is an implicit property of the SAM algorithm as it is not explicitly enforced by the objective that SAM is intended to minimize, $\ascloss$. Indeed optimizing $\ascloss$ would rather explicitly align gradient  to the smallest non-zero eigenvector (See proofs of \Cref{thm:avgsharp})! 
\begin{restatable}[]{theorem}{thmnsamquad}
\label{thm:nsamquad}
Suppose $A$ is a positive definite symmetric matrix with unique top eigenvalue. Consider running full-batch SAM (\Cref{eq:sam}) on loss $L(x) \coloneqq \frac{1}{2}x^TAx$ as in \Cref{eq:sam_quadratic} below. 
\begin{align}\label{eq:sam_quadratic}
x(t+1) &= x(t) - \eta A \bigl(x(t) + \rho \frac{Ax(t)}{\| Ax(t)\|_2}\bigr)\,.    
\end{align}
Then, for almost every $x(0)$, we have $x(t)$ converges in direction to $v_1(A)$ up to a sign flip and $\lim_{t \to \infty} \|x(t)\|_2 = \frac{\eta \rho \lambda_1(A)}{2-\eta \lambda_1(A)}$ with $\eta \lambda_1(A) < 1$.
\end{restatable}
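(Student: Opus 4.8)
Since $A$ is symmetric positive definite, I work in its eigenbasis: writing $x(t)=\sum_{i=1}^D c_i(t)v_i(A)$ and $\lambda_i:=\lambda_i(A)$, substituting into \eqref{eq:sam_quadratic} and using $A^2v_i(A)=\lambda_i^2v_i(A)$ shows that each coordinate is simply rescaled by a factor that depends only on the scalar $r(t):=\|Ax(t)\|_2$:
\begin{equation}\label{eq:coordrec}
c_i(t+1)=c_i(t)\,g_i(t),\qquad g_i(t):=(1-\eta\lambda_i)-\frac{\eta\rho\,\lambda_i^2}{r(t)},\qquad r(t)=\Bigl(\textstyle\sum_i\lambda_i^2c_i(t)^2\Bigr)^{1/2}.
\end{equation}
The whole argument is an analysis of this one-parameter family of scalar maps. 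The set of $x(0)$ with $c_1(0)=0$, or for which $r(t)=\eta\rho\lambda_1^2/(1-\eta\lambda_1)$ (i.e.\ $g_1(t)=0$) or $Ax(t)=0$ for some $t$, is a countable union of Lebesgue-null sets (a hyperplane, and proper level sets of real-analytic functions of $x(0)$ since $x(0)\mapsto x(t)$ is real-analytic off a null set), hence null; outside it $c_1(t)\neq0$ for all $t$, which I assume henceforth. Granting that $r(t)\to r^\ast$ and $x(t)/\|x(t)\|_2\to\pm v_1(A)$, the norm claim is immediate since then $\|x(t)\|_2=r(t)/\lambda_1+o(r(t))$; the work is in establishing these two (intertwined) limits.

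\noindent\textbf{The scale.} From \eqref{eq:sam_quadratic}, $Ax(t+1)=(I-\eta A)Ax(t)-\eta\rho A^2(Ax(t))/r(t)$; using $\eta\lambda_1<1$, so $\|I-\eta A\|_{\mathrm{op}}=1-\eta\lambda_D<1$, together with $\|A^2u\|_2\le\lambda_1^2\|u\|_2$, gives $r(t+1)\le(1-\eta\lambda_D)r(t)+\eta\rho\lambda_1^2$, so after $O(\eta^{-1}\log(1/\rho))$ steps $r(t)$ lies in a $\Theta(\rho)$ window, and a matching lower bound $r(t)=\Omega(\rho)$ holds (if $r(t)$ were tiny then $|g_i(t)|$ is huge, forcing $r(t+1)$ back up). The target value is the equilibrium of \eqref{eq:coordrec} when $x$ is aligned with $v_1(A)$: then $r=\lambda_1|c_1|$ and the recursion becomes $|c_1|\mapsto\bigl|(1-\eta\lambda_1)|c_1|-\eta\rho\lambda_1\bigr|$, whose unique positive fixed point is $|c_1^\ast|=\eta\rho\lambda_1/(2-\eta\lambda_1)$ and which is, on $\{|c_1|>0\}$, globally convergent to it (an affine contraction with rate $1-\eta\lambda_1$ on the relevant branch). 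At this fixed point $g_1=-1$, which is exactly the source of the sign flip in the statement, and $r^\ast:=\lambda_1|c_1^\ast|=\eta\rho\lambda_1^2/(2-\eta\lambda_1)$.

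\noindent\textbf{The direction.} For $i\ge2$, \eqref{eq:coordrec} gives $|c_i(t+1)/c_1(t+1)|=|c_i(t)/c_1(t)|\cdot|g_i(t)/g_1(t)|$; equivalently, for $p_i(t):=\lambda_i^2c_i(t)^2$ one has the replicator-type update $p_i(t+1)=p_i(t)g_i(r(t))^2$ with $\sum_ip_i(t)=r(t)^2$, so alignment amounts to $p_1(t)/r(t)^2\to1$, which follows once $\max_{i\ge2}g_i(r(t))^2/g_1(r(t))^2$ is eventually $\le\gamma<1$. A short computation gives $g_i(r)-g_1(r)=\eta(\lambda_1-\lambda_i)\bigl(1+\rho(\lambda_1+\lambda_i)/r\bigr)>0$ and then the clean criterion
\[
|g_i(r)|<|g_1(r)|\iff g_1(r)+g_i(r)<0\iff r<\rho\,T_i,\qquad T_i:=\frac{\eta(\lambda_1^2+\lambda_i^2)}{2-\eta\lambda_1-\eta\lambda_i},
\]
and one checks $r^\ast<\rho\,T_i$ for all $i\ge2$ (elementary, using $\lambda_i<\lambda_1$ and $\eta\lambda_1<1$). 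Hence on a fixed neighborhood $\{|r-r^\ast|\le\delta\}$ all ratios $|g_i(r)/g_1(r)|$ ($i\ge2$) are $\le\gamma<1$, so once $r(t)$ is trapped there the $v_1(A)$-component strictly dominates geometrically; substituting the aligned direction back into \eqref{eq:coordrec} reduces the $r$-recursion to an $O(\delta)$-perturbation of the contracting scalar map above, yielding $r(t)\to r^\ast$ and $\|x(t)\|_2\to|c_1^\ast|=\eta\rho\lambda_1/(2-\eta\lambda_1)$.

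\noindent\textbf{Main obstacle.} The genuine difficulty is that the last two steps are coupled: alignment is only guaranteed while $r(t)<\rho\min_{i\ge2}T_i$, the a priori bound only confines $r(t)$ to a possibly larger $\Theta(\rho)$ window, and the exact limit of $r(t)$ needs the direction to have aligned. The right picture, which I would make quantitative, is a mild two-timescale phenomenon: when $r(t)$ is small, $g_1(r(t))$ is large and negative, the $v_1(A)$-component grows, and the direction aligns \emph{fast} (the contraction factor for $p_1(t)/r(t)^2-1$ is $\approx(\lambda_2/\lambda_1)^4$); when $r(t)$ is large, $r(t)$ decays geometrically ($r(t+1)^2\le(1-\delta(\epsilon))r(t)^2$ whenever $r(t)\ge r^\ast+\epsilon$, since then every $|g_i(r(t))|<1$) while the direction can drift away from $v_1(A)$ only \emph{slowly} (rate $(1-\eta\lambda_D)/(1-\eta\lambda_1)\approx1$). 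Balancing these — e.g.\ via a combined Lyapunov function in $\bigl(r(t),\,p_1(t)/r(t)^2\bigr)$, or by showing that $\max_{i\ge2}g_i(r(t))^2/g_1(r(t))^2>1$ can hold only on a sparse set of times and with controlled excess — is the crux, and it is exactly here that the genericity hypothesis $c_1(0)\neq0$ enters, ruling out the degenerate trajectory that collapses onto $\mathrm{span}\{v_2(A),\dots,v_D(A)\}$ and aligns with a smaller eigenvector instead.
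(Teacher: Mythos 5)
Your setup matches the paper's: diagonalizing in the eigenbasis of $A$, reducing to the scalar recursions $c_i(t+1)=c_i(t)g_i(t)$, identifying the fixed point $r^\ast=\eta\rho\lambda_1^2/(2-\eta\lambda_1)$ with $g_1=-1$ there (the source of the sign flip), and deriving the alignment criterion $|g_i(r)|<|g_1(r)|\iff r<\rho T_i$ together with the check $r^\ast<\rho T_i$ — all of this is correct and is essentially the paper's formulation (the paper works with $\tilde x(t)=Ax(t)/\rho$ and the thresholds $\frac{\eta\lambda_1^2}{2-\eta\lambda_1}$, $\frac{\eta(\lambda_1^2+\lambda_i^2)}{2-\eta\lambda_1-\eta\lambda_i}$, which are your $r^\ast/\rho$ and $T_i$).

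However, there is a genuine gap: you explicitly stop at what you yourself call the crux — the coupling between norm convergence and direction alignment — and offer only candidate strategies (``a combined Lyapunov function \dots or by showing that \dots can hold only on a sparse set of times''). This is not a minor finishing step; it is where all the work in the paper lives, and the mechanism the paper uses is different from the two-timescale picture you sketch. The paper first establishes the invariant sets $\sI_j=\{\|P^{(j:D)}\tilde x\|_2\le\eta\lambda_j^2\}$ (Lemmas \ref{lem:invariant_set}--\ref{lem:quad_prepare}), which give a priori bounds on \emph{every} tail projection, not just the total norm; with these in hand, Lemma \ref{lem:norminequal} shows (via a delicate three-case computation with AM--GM, Lemmas \ref{lem:technorminequal}--\ref{lem:norminequal3}) that whenever $\|\tilde x(t)\|$ exceeds the fixed-point value it drops back below it in a \emph{single} step, so overshoots are isolated rather than forming long excursions during which the direction could drift. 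The actual Lyapunov function is then $|\tilde x_1(t)|$ restricted to the subsequence $S$ of below-threshold times: Lemma \ref{lem:monotoneproj} shows it is monotone non-decreasing, including across a one-step overshoot, because the product of the two consecutive $g_1$-factors has modulus at least one — a computation that reduces to $((2-\eta\lambda_1)\|\tilde x\|-\eta\lambda_1^2)((1-\eta\lambda_1)\|\tilde x\|-\eta\lambda_1^2)\ge0$. Monotonicity plus boundedness forces $\|\tilde x(t)\|\to\ths$ and then geometric decay of the orthogonal components. Without the invariant-set bounds and the one-step-overshoot lemma, your claim that ``once $r(t)$ is trapped there the $v_1(A)$-component strictly dominates geometrically'' has no proof that $r(t)$ ever gets and stays trapped below $\rho\min_iT_i$, so the argument as written does not close.
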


The proof of Theorem~\ref{thm:nsamquad} relies on a two-phase analysis of the behavior of \Cref{eq:sam_quadratic}, where we first show that $x(t)$ enters an invariant set from any initialization and in the second phase, we construct a potential function to show alignment. The proof is deferred to \Cref{app:nsamquad}.

Below we briefly discuss why the case with general loss is closely related to the quadratic loss case. We claim that, in the general loss function case, the analog of \Cref{eq:sam_quadratic} is the update rule for the gradient: 
\begin{align}\label{eq:sam_gradient_taylor}
\! \!\!\!\!\! \nabla L(x(t+1)) \!\!
    = &  \nabla L(x(t)) \!-\! \eta \nabla^2 L(x(t)) \bigl(\nabla L(x(t)) + \rho \nabla^2 L(x(t)) \frac{\nabla L(x(t))}{\| \nabla L(x(t)))\|_2}\bigr) + O(\eta\rho^2) \,.\!\!
\end{align}
We first note that indeed in the quadratic case where $\nabla L(x) = Ax$ and $\nabla^2 L(x) = A$, \Cref{eq:sam_gradient_taylor} is equivalent to \Cref{eq:sam_quadratic} because they only differ by a multiplicative factor $A$ on both sides. 

Hence, in the general case, the update of the gradient (\Cref{eq:sam_gradient_taylor}) can be viewed as an $O(\eta\rho^2)$-perturbed version of the update of the iterate in the quadratic case. Note $O(\eta\rho^2)$ is a higher order term comparing to the other two terms, which are  on the order of $\Theta(\eta^2\rho)$ and $\Theta(\eta\rho)$ respectively. By controlling the error terms, the mechanism and analysis of the implicit alignment between Hessian and gradient still apply to the general case. We can also show that once this alignment happens, it will be kept until the end of our analysis, which is $\Theta(\eta^{-1}\rho^{-2})$ steps.

Finally, we derive \Cref{eq:sam_gradient_taylor} by Taylor expansion. We first apply Taylor expansion~(\Cref{eq:taylor_expansion_of_ascent_grad}) on the update rule of the iterate of SAM (\Cref{eq:sam}):
\begin{align}\label{eq:sam_taylor}
    x(t+1) = x(t) - \eta \nabla L(x(t)) -\eta \rho\nabla^2L(x(t))\frac{\nabla L(x(t))}{\norm{\nabla L(x(t))}_2} +O(\eta\rho^2).
\end{align}

Since  phase II happens in an $O(\eta\rho)$-neighborhood of manifold $\Gamma$, we have $\norm{x(t+1) - x(t)}_2 = O(\eta\rho)$. 
Then by \Cref{eq:sam_taylor} and Taylor expansion on $\nabla L(x(t+1))$ at $x(t)$, we have that
\begin{align}
 \!\!\!\!  \!\!\! \nabla L(x(t+1)) \!\!
    =& \nabla L(x(t))\! - \!\nabla ^2 L(x(t)) \bigl(x(t+1) - x(t) \bigr)+ O(\eta^2\rho^2)\notag\\
    = &  \nabla L(x(t)) \!-\! \eta \nabla^2 L(x(t)) \bigl(\nabla L(x(t)) + \rho \nabla^2 L(x(t)) \frac{\nabla L(x(t))}{\| \nabla L(x(t)))\|_2}\bigr) + O(\eta\rho^2) \,.\!\!
\end{align}
 
\section{Explicit and Implicit Biases in the Stochastic Setting}\label{sec:1sam}

In practice, people usually use  SAM in the stochastic mini-batch setting, and the test accuracy improves as the batch size decreases~\citep{foret2021sharpnessaware}. Towards explaining this phenomenon, \citet{foret2021sharpnessaware} argue intuitively that stochastic SAM minimizes stochastic worst-direction sharpness. Given our results in Section~\ref{sec:nsam}, it is natural to ask if we can justify the above intuition by showing the Hessian-gradient alignment in the stochastic setting. Unfortunately, such alignment is not possible in the most general setting.  
Yet when the batch size is 1,  we can prove rigorously in \Cref{sec:implicitstoc} that stochastic SAM minimizes \emph{stochastic worst-direction sharpness}, which is the expectation of the worst-direction sharpness of loss over each data (defined in \Cref{sec:explicitstoc}), which is the main result in this section. We stress that the stochastic worst-direction sharpness has a different explicit bias to the worst-direction sharpness, which full-batch SAM implicitly penalizes. When perturbation radius $\rho\to 0$, the former corresponds to $\Tr(\nabla^2 L(\cdot))$, the same as average-direction sharpness, and the latter corresponds to $\lambda_1(\nabla^2 L(\cdot))$.

Below we start by introducing our setting for SAM with batch size $1$, or $1$\emph{-SAM}. We still need \Cref{assump:smoothness} in this section. We first analyze the explicit bias of the stochastic ascent- and worst-direction sharpness in Section~\ref{sec:explicitstoc} via the tools developed in \Cref{sec:explicit_full}. It turns out they are all proportional to the trace of hessian as $\rho \to 0$. In \Cref{sec:implicitstoc}, we show that 1-SAM penalizes the trace of Hessian. Below we formally state our setting for stochastic loss of batch size one~(\Cref{setting:1sam}).

\begin{restatable}{setting}{stochasticsetting}\label{setting:1sam}
Let the total number of data be $M$. Let $f_k(x)$ be the model output on the $k$-th data where $f_k$ is a $\mathcal{C}^4$-smooth function and $y_k$ be the $k$-th label, for $k=1,\ldots,M$. We define the loss on the $k$-th data as $L_k(x) = \ell(f_k(x),y_k)$ and the total loss $L= \sum_{k=1}^M L_k/M$, where function $\ell(y',y)$ is $\mathcal{C}^4$-smooth in $y'$. 
We also assume for any $y\in\R$, it holds that $\argmin_{y'\in\R}\ell(y',y) = y$ and that $\frac{\partial ^2\ell(y',y)}{(\partial y')^2}\vert_{y'=y} >0$.
Finally, we denote the set of global minimizers of $L$ with full-rank Jacobian by $\Gamma$ and assume that it is non-empty, that is, 
\begin{align}
	\Gamma\triangleq \left\{ x\in\R^D \mid f_k(x) = y_k, \forall k\in[M] \textup{ and }\{\nabla f_k(x)\}_{k=1}^M \textup{ are linearly independent}\right\} \neq \emptyset. \notag
\end{align}
 \end{restatable}
 
 We remark that given training data (\emph{i.e.}, $\{f_k\}_{k=1}^M$),   $\Gamma$ defined above is just equal to the set of global minimizers, $\left\{ x\in \R^D \mid f_k(x) = y_k, \forall k\in[M]\right\}$, except for a zero measure set of labels $(y_k)_{k=1}^M$ when $f_k$ are $\continuous{\infty}$ smooth, by Sard's Theorem. Thus \citet{cooper2018loss} argued that the global minimizers form a differentiable manifold generically if we allow perturbation on the labels. In this work we do not make such an assumption for labels. Instead, we consider the subset of the global minimizers with full-rank Jacobian, $\Gamma$. A standard application of implicit function theorem implies that $\Gamma$ defined in \Cref{setting:1sam} is indeed a manifold. (See \Cref{thm:derive_manifold_assumption}, whose proof is deferred into \Cref{sec:proof_rank_1})
 
\begin{restatable}{theorem}{derivemanifoldassumption}\label{thm:derive_manifold_assumption}
Loss $L$, set $\Gamma$ and integer $M$ defined in \Cref{setting:1sam} satisfy \Cref{assump:smoothness}.
\end{restatable}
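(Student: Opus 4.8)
The plan is to verify the four requirements of \Cref{assump:smoothness} one by one: that $L$ is $\continuous{4}$; that $\Gamma$ is a $\continuous{2}$ (in fact $\continuous{4}$) submanifold of $\R^D$ of dimension $D-M$ with $1\le M\le D$; that every $x\in\Gamma$ is a local minimizer of $L$; and that $\rank(\nabla^2 L(x))=M$ for every $x\in\Gamma$. Smoothness is immediate: each $L_k(x)=\ell(f_k(x),y_k)$ is the composition of the $\continuous{4}$ map $f_k$ with the $\continuous{4}$ function $\ell(\cdot,y_k)$, so $L=\frac1M\sum_k L_k$ is $\continuous{4}$.

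For the manifold structure I would introduce $F:\R^D\to\R^M$, $F(x)=(f_1(x)-y_1,\dots,f_M(x)-y_M)^\top$, so that $\Gamma=F^{-1}(0)\cap O$, where $O=\{x:\nabla f_1(x),\dots,\nabla f_M(x)\text{ are linearly independent}\}$ is open. The Jacobian $\partial F(x)$ has the rows $\nabla f_k(x)^\top$, hence has full row rank $M$ at every $x\in O$; in particular $F|_O$ is a submersion and $0$ is a regular value of it. Fix $x_0\in\Gamma$ and choose an open neighborhood $U\subseteq O$ of $x_0$; then $U\cap F^{-1}(0)=U\cap\Gamma$, and the implicit function theorem applied to the submersion $F|_U$ produces, after possibly shrinking $U$, an invertible $\continuous{4}$ map $\psi:U\to\psi(U)$ with $\psi(\Gamma\cap U)=(\R^{D-M}\times\{0\})\cap\psi(U)$, which is exactly the slice chart demanded by \Cref{defi:manifold}. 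Thus $\Gamma$ is a $\continuous{4}$ (hence $\continuous{2}$) submanifold of dimension $D-M$, and $1\le M\le D$, the upper bound holding because $\R^D$ contains $M$ linearly independent vectors and the lower bound being implicit in \Cref{setting:1sam}. The point worth stating carefully here is that although $F^{-1}(0)$ itself need not be a manifold, intersecting it with the \emph{open} regularity set $O$ lets us localize to a neighborhood on which $F$ is genuinely a submersion and on which the zero set already coincides with $\Gamma$.

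For minimality and the Hessian rank I would compute derivatives along $\Gamma$. If $x\in\Gamma$ then $f_k(x)=y_k$, so $L_k(x)=\ell(y_k,y_k)=\min_{y'\in\R}\ell(y',y_k)\le \ell(f_k(x'),y_k)=L_k(x')$ for every $x'\in\R^D$; summing over $k$ shows $x$ is a global, hence local, minimizer of $L$. The first-order condition for $\argmin_{y'}\ell(y',y_k)=y_k$ gives $\frac{\partial \ell}{\partial y'}(y_k,y_k)=0$, so the chain rule yields, at $x\in\Gamma$,
\begin{align*}
\nabla^2 L_k(x) &= \tfrac{\partial^2 \ell}{(\partial y')^2}(y_k,y_k)\,\nabla f_k(x)\nabla f_k(x)^\top + \tfrac{\partial \ell}{\partial y'}(y_k,y_k)\,\nabla^2 f_k(x) = \tfrac{\partial^2 \ell}{(\partial y')^2}(y_k,y_k)\,\nabla f_k(x)\nabla f_k(x)^\top,
\end{align*}
and therefore $\nabla^2 L(x)=G\,D\,G^\top$, where $G=[\nabla f_1(x),\dots,\nabla f_M(x)]\in\R^{D\times M}$ has full column rank $M$ by the definition of $\Gamma$ and $D=\frac1M\diag\big(\tfrac{\partial^2 \ell}{(\partial y')^2}(y_1,y_1),\dots,\tfrac{\partial^2 \ell}{(\partial y')^2}(y_M,y_M)\big)\succ 0$ by the assumption $\tfrac{\partial^2 \ell}{(\partial y')^2}(y_k,y_k)>0$. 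Since $D\succ 0$ and $G$ has full column rank, $v^\top GDG^\top v=0$ iff $G^\top v=0$ iff $v\perp\mathrm{span}\{\nabla f_k(x)\}_{k=1}^M$, so $\ker\nabla^2 L(x)$ has dimension $D-M$ and $\rank\nabla^2 L(x)=M$.

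I expect the only genuinely delicate step to be the manifold argument, namely making precise that $\Gamma$, carved out of the possibly singular level set $F^{-1}(0)$ by the open regularity condition, inherits an honest $\continuous{4}$ submanifold structure of the correct dimension; once the localization to $U\subseteq O$ is in place this is a direct application of the implicit function theorem. The remaining items are routine: smoothness is closure of $\continuous{4}$ under composition, minimality is the defining property $\argmin_{y'}\ell(\cdot,y_k)=y_k$, and the Hessian-rank claim is a two-line chain-rule computation together with the elementary fact that $GDG^\top$ and $G$ have the same rank when $D\succ 0$.
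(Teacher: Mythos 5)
Your proposal is correct and follows essentially the same route as the paper: $\continuous{4}$-smoothness by composition, the implicit function theorem applied to $F=(f_1,\dots,f_M)$ on the open set where the Jacobian has full rank to build the slice chart, the $\argmin_{y'}\ell(y',y)=y$ property for (global, hence local) minimality, and the chain-rule identity $\nabla^2 L_k(x)=\frac{\partial^2\ell}{(\partial y')^2}\vert_{y'=y_k}\nabla f_k(x)\nabla f_k(x)^\top$ for the Hessian rank. Your write-up is, if anything, slightly more explicit than the paper's in localizing to the open regularity set and in spelling out $\rank(GDG^\top)=M$ via $D\succ 0$ and full column rank of $G$.
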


\paragraph{1-SAM:} We use $1$\emph{-SAM} as a shorthand for SAM on a stochastic loss with batch size $1$ as below \Cref{eq:1sam}, where $k_t$ is sampled i.i.d from uniform distribution on $[M]$.
\begin{align}
\label{eq:1sam}
\textrm{1-SAM}: \qquad\qquad  x(t+1) = x(t) -\eta   \nabla L_{k_t}\bigl(x+\rho \frac{\nabla L_{k_t}(x)}{\norm{\nabla L_{k_t}(x)}_2}\bigr)\,.\qquad\qquad
\end{align}

\subsection{Stochastic Worst-, Ascent- and Average- direction Sharpness Have the Same Explicit Biases as Average Direction Sharpness}
\label{sec:explicitstoc}

Similar to the full-batch case, we use $\maxloss[k,\rho],\ascloss[k,\rho],\avgloss[k,\rho]$ to denote the corresponding sharpness-aware loss for $L_k$ and $\maxsharpness[k,\rho],\ascsharpness[k,\rho],\avgsharpness[k,\rho]$ to denote corresponding sharpness for $L_k$ respectively (defined as \Cref{eq:max_sharpness,eq:asc_sharpness,eq:avg_sharpness} with $L$ replaced by $L_k$).
We further use \emph{stochastic worst-, ascent- and average-direction sharpness} to denote $\stmaxsharpness, \stascsharpness $ and $\stavgsharpness$. Unlike the full-batch setting, these three sharpness notions have the same explicit biases, or more precisely, they have the same limiting regularizers (up to some scaling factor).

\begin{theorem}\label{thm:explicit_bias_stochastic_main}
The limiting regularizers of three notions of stochastic sharpness, denoted by $\stomaxlimit, \stoasclimit,\stoavglimit $, satisfy that 
\begin{align}
    \stomaxlimit(x) = \stoasclimit(x) = D \cdot \stoavglimit(x) = \Tr(\nabla^2 L(x))/2. \notag
\end{align}
Furthermore, define $U'$ in the same way as in \Cref{thm:explicit_bias_deterministic_main} . For any $\mathrm{type}\in\{\mathrm{Max},\mathrm{Asc},\mathrm{Avg}\}$, it holds that  if for some $u\in U'$, $ L(u) +  \E_k[R^\mathrm{type}_{k,\rho}(u)] \le \inf\limits_{x\in U'}\bigl({L(x) + \E_k[R^\mathrm{type}_{k,\rho}(x)]}\bigr) + \epsilon\rho^2 $,\footnote{We note that $\ascsharpness(x)$ is undefined when $\norm{\nabla L(x)}_2=0$. In such cases, we set $\ascsharpness(x)=\infty$.} then we have that $ L(u) - \inf_{x\in U'} L(x) \le \eps\rho^2 +o(\rho^2)$ and that $\bigl|\widetilde S^\mathrm{type}(u)-\inf_{x\in U'\cap \Gamma}\widetilde S^\mathrm{type}(x)\bigr|\le  \epsilon + o(1)$. 
\end{theorem}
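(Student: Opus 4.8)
The plan is to deduce \Cref{thm:explicit_bias_stochastic_main} from the general sufficient condition behind \Cref{thm:explicit_bias_deterministic_main} (developed in \Cref{sec:explicit_full_appendix}), applied to the averaged regularizer $R_\rho := \E_k[R^{\mathrm{type}}_{k,\rho}]$ for each $\mathrm{type}\in\{\mathrm{Max},\mathrm{Asc},\mathrm{Avg}\}$. The argument has two parts: (a) identify the limiting regularizer of $\{R_\rho\}$ in the sense of \Cref{defi:limiting_regularizer}, and (b) check that $\{R_\rho\}$ meets the hypotheses of the general machinery so that its limiting regularizer governs the explicit bias as $\rho\to0$. Part (b) is largely bookkeeping; the content is in (a), and the key structural fact that makes all three limiting regularizers collapse to $\Tr(\nabla^2 L)/2$ is that, under \Cref{setting:1sam}, every individual Hessian $\nabla^2 L_k$ is rank one on $\Gamma$. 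Indeed, for $x\in\Gamma$ we have $f_k(x)=y_k$, and since $y_k=\argmin_{y'}\ell(y',y_k)$ forces $\partial_{y'}\ell(y_k,y_k)=0$, we get $\nabla L_k(x)=\partial_{y'}\ell(f_k(x),y_k)\nabla f_k(x)=0$ and $\nabla^2 L_k(x)=\partial_{y'}^2\ell(y_k,y_k)\,\nabla f_k(x)\nabla f_k(x)^\top$, a PSD rank-one matrix (using $\partial_{y'}^2\ell(y_k,y_k)>0$ and $\nabla f_k(x)\neq0$ from the linear-independence hypothesis). Hence for every $k$ the largest eigenvalue, the smallest nonzero eigenvalue, and the trace of $\nabla^2 L_k(x)$ all coincide and equal $\partial_{y'}^2\ell(y_k,y_k)\norm{\nabla f_k(x)}_2^2$, and averaging gives $\E_k[\Tr(\nabla^2 L_k(x))]=\Tr(\E_k[\nabla^2 L_k(x)])=\Tr(\nabla^2 L(x))$.

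With this in hand, the Max and Avg limiting regularizers are immediate: $\maxsharpness[k,\rho]$ and $\avgsharpness[k,\rho]$ are continuous in $x$, so taking $r\to0$ in \Cref{defi:limiting_regularizer} just evaluates at $x\in\Gamma$, and Taylor expansion there (all first-order terms vanish) gives $\frac12\lambda_1(\nabla^2 L_k(x))$ and $\frac1{2D}\Tr(\nabla^2 L_k(x))$ respectively; averaging and using the rank-one identity yields $\stomaxlimit=\Tr(\nabla^2 L)/2$ and $\stoavglimit=\Tr(\nabla^2 L)/(2D)$. The Asc case is the delicate one and follows the pattern sketched after \Cref{defi:limiting_regularizer}. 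For $x'\to x\in\Gamma$ with all $\nabla L_k(x')\neq 0$ — such $x'$ exist arbitrarily close to $x$, since locally the vanishing loci $\{f_k=y_k\}$ are $M$ smooth hypersurfaces of measure zero — the ascent direction $\nabla L_k(x')/\norm{\nabla L_k(x')}_2=\pm\nabla f_k(x')/\norm{\nabla f_k(x')}_2$ is forced to converge to $\pm v_1(\nabla^2 L_k(x))$, so by \Cref{eq:ascsharp} the Rayleigh quotient $\nabla L_k(x')^\top\nabla^2 L_k(x')\nabla L_k(x')/\norm{\nabla L_k(x')}_2^2\to\Tr(\nabla^2 L_k(x))$, while the first-order contribution $\rho\norm{\nabla L_k(x')}_2$ vanishes in the inner limit $x'\to x$ and so does not affect the $\rho^2$-order term; taking $\E_k$ and then $\rho\to0$ gives $\stoasclimit=\Tr(\nabla^2 L)/2$. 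The rank-one structure is precisely what makes the ascent-direction regularizer collapse onto the worst-direction one here, whereas in the full-batch setting the infimum over $x'$ could select a strictly smaller eigenvalue.

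For part (b) I would verify that $R_\rho=\E_k[R^{\mathrm{type}}_{k,\rho}]$ satisfies the hypotheses of the general criterion from \Cref{sec:explicit_full_appendix}: its zeroth- and first-order terms vanish on $\Gamma$ (so the limiting regularizer is well-defined), the Taylor remainders are $O(\rho^3)$ uniformly on $\overline{U'}$ (from $C^4$-smoothness of the $f_k$ and $\ell$), $R_\rho\ge0$ in a neighborhood of $\Gamma$ for small $\rho$ while $L$ stays bounded away from $\inf_{U'}L$ outside any fixed neighborhood of $\Gamma$ (so near-minimizers of $L+R_\rho$ must approach $\Gamma$), and $\widetilde S^{\mathrm{type}}=\Tr(\nabla^2 L)/2$ is continuous on $\Gamma$. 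The general result then delivers both conclusions of the theorem with $o(1)$ error as $\rho\to0$, and part (a) identifies the limiting regularizers as stated.

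I expect the main obstacle to be the Asc case within part (a): making the interchange of $\inf_{x'}$, $\E_k$, and $\rho\to0$ rigorous when $\E_k[\ascsharpness[k,\rho]]$ equals $+\infty$ on $\Gamma$ and is discontinuous there — in particular, extracting the $\rho^2$-order coefficient uniformly over $k$ along a single sequence $x'\to x$ that avoids all the hypersurfaces $\{f_k=y_k\}$, and controlling the $\rho\norm{\nabla L_k(x')}_2$ and $O(\rho^3)$ contributions so they do not pollute that coefficient. Everything else is an adaptation of the machinery already in place for \Cref{thm:explicit_bias_deterministic_main}.
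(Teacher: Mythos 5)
Your proposal is correct and follows essentially the same route as the paper: identify the rank-one structure of each $\nabla^2 L_k$ on $\Gamma$ (so that $\lambda_1$, the smallest nonzero eigenvalue, and the trace coincide per sample and average to $\Tr(\nabla^2 L)$), verify the good-limiting-regularizer conditions for the averaged sharpness, and invoke \Cref{corr:general_reg_extension}. The only cosmetic difference is that the paper factors through the per-sample manifolds $\Gamma_k$ (applying \Cref{thm:maxsharp,thm:ascsharp,thm:avgsharp} to each $L_k$ on its codimension-one zero set and then restricting to $\Gamma\subset\Gamma_k$ and averaging, as in \Cref{thm:stmaxsharp,thm:stascsharp,thm:stavgsharp}), whereas you compute the limiting regularizer of $\E_k[R^{\mathrm{type}}_{k,\rho}]$ directly; the substance, including your resolution of the $\inf$--$\E_k$ interchange for the ascent case, matches the paper's.
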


We defer the proof of \Cref{thm:explicit_bias_stochastic_main} to \Cref{app:explicitcorr}.
Unlike in the full-batch setting where the implicit regularizer of ascent-direction sharpness and  worst-direction sharpness have different explicit bias, here they are the same because there is no difference between the maximum and minimum of its non-zero eigenvalue for rank-1 Hessian of each individual loss $L_k$, and that the average of limiting regularizers is equal to the limiting regularizer of the average regularizer by definition.

\subsection{Stochastic SAM Minimizes Average-direction Sharpness } \label{sec:implicitstoc}

This subsection aims to show that the implicit bias of 1-SAM (\Cref{eq:1sam}) is minimizing the average-direction sharpness for small perturbation radius $\rho$ and learning rate $\eta$, which has the same implicit bias as all three notions of stochastic sharpness do (\Cref{thm:explicit_bias_stochastic_main}). As an analog of the analysis in \Cref{sec:proof_full}, which shows full-batch SAM minimizes worst-direction sharpness, analysis in this section conceptually shows that 1-SAM  minimizes the stochastic worst-direction sharpness.

Mathematically, we prove that the trajectory of 1-SAM tracks the following Riemannian gradient flow (\Cref{eq:traceode}) with respect to their limiting regularize $\mathrm{Tr}(\nabla^2 L(\cdot))$ on the manifold for sufficiently small $\eta$ and $\rho$ and thus penalizes stochastic worst-direction sharpness (of batch size  $1$). We assume the  ODE (\Cref{eq:traceode}) has a solution till time $T_3$.
\begin{align}
\label{eq:traceode}
    X(\tau) = X(0) - \frac{1}{2}\int_{s=0}^{\tau} \projt[X(s)] \nabla \mathrm{Tr}(\nabla^2 L(X(s))) ds,\ \  X(0) = \Phi(\xinit).
\end{align}

\begin{restatable}{theorem}{thmonesam}
\label{thm:1sam}
Let $\{x(t)\}$ be the iterates of 1-SAM (\Cref{eq:1sam}) and $x(0) = \xinit \in U$, then under  \Cref{setting:1sam}, for almost every $\xinit$, for all $\eta$ and $\rho$ such that $(\eta + \rho)\ln(1/\eta\rho)$ is sufficiently small, with probability at least $1 - O(\rho)$ over the randomness of the algorithm, the dynamics of 1-SAM (\Cref{eq:1sam})  can be split into two phases: 
\begin{itemize}
    \item Phase I (\Cref{thm:1samphase1}): 1-SAM follows Gradient Flow with respect to  $L$ until entering an $\tilde O(\eta\rho)$ neighborhood of the manifold $\Gamma$ in $O(\ln(1/\rho\eta)/\eta)$ steps;    \item Phase II (\Cref{thm:1samphase2}): 1-SAM tracks the solution of \Cref{eq:traceode}, $X$, the Riemannian gradient flow with respect to $\mathrm{Tr}(\nabla^2 L(\cdot))$ in an $\tilde O(\eta\rho)$ neighborhood of manifold $\Gamma$. Quantitatively, the approximation error between the iterates $x$ and the corresponding limiting flow $X$ is $\tilde O(\eta^{1/2} + \rho)$, that is, 
\begin{align}
\|x(\lceil T_3 / (\eta\rho^2)\rceil) - X(T_3) \|_2 = \tilde O(\eta^{1/2} + \rho).\notag
\end{align}
\end{itemize} 
\end{restatable}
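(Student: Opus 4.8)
\textbf{Proof proposal for Theorem~\ref{thm:1sam}.}

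The plan is to mirror the two-phase architecture of \Cref{thm:nsam}, but to replace the deterministic alignment analysis by a stochastic averaging argument, which is the crux of the difference. First I would handle Phase I: starting from $\xinit\in U$, the 1-SAM update \eqref{eq:1sam} is an $O(\eta\rho)$-perturbation of a gradient step on $L_{k_t}$, and since in the attraction set the loss is locally strongly convex transverse to $\Gamma$, a standard descent/approximation argument (as in \citet{hairer2008solving}, used for Phase I of \Cref{thm:nsam}) shows the iterate reaches an $\tilde O(\eta\rho)$ neighborhood of $\Gamma$ within $O(\ln(1/\rho\eta)/\eta)$ steps; the $\ln$ factor and the ``with probability $1-O(\rho)$'' qualifier come from controlling the stochastic fluctuations via a martingale concentration bound over the $O(\ln(1/\rho\eta)/\eta)$ steps. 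This part is essentially routine given \Cref{setting:1sam}.

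The heart of the proof is Phase II. As in \Cref{sec:proof_full}, I would use $\Phi(x(t))$ as a proxy and Taylor-expand. Because $k_t$ is now random, the analog of \eqref{eq:Taylor_expansion_of_Phi_simplified} reads, in expectation over $k_t$ conditioned on $x(t)$,
\begin{align}
\E_{k_t}\!\left[\Phi(x(t+1))-\Phi(x(t))\right]
= -\frac{\eta\rho^2}{2}\,\partial\Phi(x(t))\,\E_{k_t}\!\left[\partial^2(\nabla L_{k_t})(x(t))\Bigl[\tfrac{\nabla L_{k_t}(x(t))}{\norm{\nabla L_{k_t}(x(t))}_2},\tfrac{\nabla L_{k_t}(x(t))}{\norm{\nabla L_{k_t}(x(t))}_2}\Bigr]\right] + \text{h.o.t.}\notag
\end{align}
Now the key structural fact from \Cref{setting:1sam}: near $\Gamma$, each individual Hessian $\nabla^2 L_k$ is rank one, so $\nabla L_k(x)$ automatically lies (up to $O(\eta\rho)$) along the single nonzero eigenvector $v_1(\nabla^2 L_k(\Phi(x)))$ of that loss. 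This makes the per-sample Hessian-gradient alignment \emph{automatic}, avoiding the delicate potential-function argument of \Cref{thm:nsamquad}. Consequently $\partial^2(\nabla L_k)[\hat g_k,\hat g_k]$ collapses to $\partial\Phi$-projection of $\nabla\lambda_1(\nabla^2 L_k(\Phi(x)))$, which for a rank-one matrix equals $\nabla\Tr(\nabla^2 L_k(\Phi(x)))$. Averaging over $k$ and using $\sum_k \nabla\Tr(\nabla^2 L_k)/M = \nabla\Tr(\nabla^2 L)$ yields exactly the drift of ODE~\eqref{eq:traceode}. I would then bound the martingale difference $\Phi(x(t+1))-\Phi(x(t)) - \E_{k_t}[\cdot]$, whose per-step size is $O(\eta\rho)$ but whose conditional variance is $O(\eta^2\rho^2)$, so summed over the $T_3/(\eta\rho^2)$ steps of Phase II it contributes a random fluctuation of order $\tilde O(\sqrt{\eta^2\rho^2 \cdot T_3/(\eta\rho^2)}) = \tilde O(\sqrt{\eta})$; combined with the $O(\rho)$ systematic Taylor-expansion error this gives the stated $\tilde O(\eta^{1/2}+\rho)$ tracking bound. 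Throughout I must also verify that the iterate stays in the $\tilde O(\eta\rho)$ tube around $\Gamma$ — this is a separate invariant that has to be propagated inductively, using that the normal component contracts at rate $1-\Theta(\eta)$ per step while the injected noise and the $O(\eta\rho^2)$ tangential drift keep it $\tilde O(\eta\rho)$-small; a bootstrapping/stopping-time argument handles the probabilistic part and explains the $1-O(\rho)$ success probability.

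The main obstacle I anticipate is controlling the iterate when some $\norm{\nabla L_{k_t}(x(t))}_2$ is atypically small: the normalization $\nabla L_{k_t}/\norm{\nabla L_{k_t}}_2$ is then sensitive, and the Taylor remainders in \eqref{eq:taylor_expansion_of_ascent_grad} blow up. Near $\Gamma$ one has $\norm{\nabla L_k(x)}_2 = \Theta(\mathrm{dist}(x,\Gamma))$ generically, but for the sampled $k_t$ this quantity can be much smaller than the typical $\Theta(\eta\rho)$ scale — indeed it can vanish exactly, which is the ill-definedness issue flagged in the excerpt. This is where the ``for almost every $\xinit$'' and ``with probability $1-O(\rho)$'' hypotheses earn their keep: I would define a stopping time at the first step where any relevant gradient norm drops below a threshold like $\rho^{10}$, show via an anti-concentration/transversality argument (the $k$-th stochastic loss has its stationary points on a measure-zero set, by the full-rank Jacobian in \Cref{setting:1sam}) that this stopping time exceeds $T_3/(\eta\rho^2)$ except on an $O(\rho)$-probability event, and run the entire Phase II analysis up to that stopping time. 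Making the error accounting uniform over the full $\Theta(\eta^{-1}\rho^{-2})$-step horizon — so that the ``sufficiently small'' threshold on $(\eta+\rho)\ln(1/\eta\rho)$ depends only on $T_3$ and problem constants, not on $\eta,\rho$ — is the other technically demanding bookkeeping step, handled exactly as the dependence on $T_3$ is handled in \Cref{thm:nsam}.
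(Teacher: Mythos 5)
Your overall architecture matches the paper's proof almost exactly: Phase I is gradient-flow tracking plus a supermartingale descent argument bringing the iterate into an $\tilde O(\eta\rho)$ tube (\Cref{thm:1samphase1}, via \Cref{lem:1sam_gf,lem:1sam_decrease,lem:1sam_decrease_2}); Phase II uses $\Phi(x(t))$ as a proxy, exploits the automatic rank-one alignment of each $\nabla L_k$ with $v_1(\nabla^2 L_k)$, identifies the conditional drift with $-\frac{\eta\rho^2}{2}\projt\nabla\lambda_1(\nabla^2 L_{k_t})$ whose expectation is the trace gradient, and closes with Azuma plus the stochastic ODE approximation lemma and an inductive propagation of the tube invariant (\Cref{thm:1samphase2,lem:direction_1_sam,thm:ode_approximation}). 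One bookkeeping quibble: the dominant $\eta^{1/2}$ in the final bound actually originates in Phase I (tracking the gradient flow over $O(1/\eta)$ steps with per-step noise $O(\eta)$ gives $\tilde O(\sqrt{\eta})$ in \Cref{lem:1sam_gf}); the Phase II martingale for $\Phi(x(t))$ has per-step increments $O(\eta\rho^2)$, not $O(\eta\rho)$, and contributes only $\tilde O(\sqrt{\eta}\,\rho)$.

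The one substantive divergence is your treatment of small stochastic gradient norms, and as described it would not go through. You propose a stopping time at the first step where $\norm{\nabla L_{k_t}(x(t))}_2$ drops below $\rho^{10}$ and an anti-concentration argument showing this does not happen for $\Theta(\eta^{-1}\rho^{-2})$ steps. Near $\Gamma$ one has $\norm{\nabla L_k(x)}_2\propto |w_k^\top(x-\Phi(x))|$, and there is no mechanism preventing one of the $M$ normal components of $x-\Phi(x)$ from becoming arbitrarily small at some step, so the claimed lower bound over the full horizon is likely false and is in any case not what the paper proves. The point you are missing is that no such lower bound is needed: under \Cref{setting:1sam}, $\nabla L_k(x)=\ell'(f_k(x),y_k)\nabla f_k(x)$ is exactly parallel to $\nabla f_k(x)$, which is bounded away from zero near $\Gamma$, so the normalized gradient equals $\pm\nabla f_k(x)/\norm{\nabla f_k(x)}_2$ and its \emph{direction} is stable no matter how small its magnitude (\Cref{lem:informal_direction_stocastic_dl,lem:direction_stocastic_dl}); the Taylor remainder in \Cref{eq:taylor_expansion_of_ascent_grad} depends only on $\rho$ and not on $\norm{\nabla L_k(x)}_2$. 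The residual sign ambiguity matters only for the normal-direction contraction in Phase I, where the paper handles it by noting that the sign can be wrong only when $|w_k^\top(x-\Phi(x))|\le\norm{x-\Phi(x)}_2^{3/2}$, in which case that term is negligible (\Cref{lem:1sam_decrease_2_one_step}); in the Phase II drift the dependence on $w_k$ is quadratic, so the sign is irrelevant. Exact vanishing of the gradient (ill-definedness) is disposed of separately and once, by the measure-theoretic argument of \Cref{thm:1sam_well_definedness}, not by a per-trajectory probabilistic bound.
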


The high-level intuition for the Phase II result of \Cref{thm:1sam} is that Hessian-gradient alignment holds true for every stochastic loss $L_k$ along the trajectory of 1-SAM  and therefore by Taylor expansion (the same argument in \Cref{sec:proof_full}), at each step $\Phi(x(t))$ moves towards the negative (Riemannian) gradient of $\lambda_1(\nabla^2 L_{k_t})$ where $k_t$ is the index of randomly sampled data, or the limiting regularizer of the worst-direction sharpness of $L_{k_t}$.  Averaging over a long time, the moving direction  becomes the negative (Riemmanian) gradient of $\E_{k_t}[\lambda_1(\nabla^2 L_{k_t})]$, which is  the  limiting regularizer of stochastic worst-direction sharpness and equals to $\mathrm{Tr}(\nabla^2 L)$ by \Cref{thm:explicit_bias_stochastic_main}.

The reason that Hessian-gradient alignment holds under \Cref{setting:1sam} is that the Hessian of each stochastic loss $L_k$ at minimizers $p\in\Gamma$, $\nabla^2 L_k(p) = \frac{\partial ^2\ell(y',y_k)}{(\partial y')^2}\vert_{y'=f_k(p)} \nabla f_k(p)(\nabla f_k(p))^\top$(\Cref{lem:rank1hessian}), is exactly rank-1, which enforces the gradient $\nabla L_k(x) \approx \nabla ^2 L_k(\Phi(x))(x-\Phi(x))$ to (almost) lie in the top (which is also the unique) eigenspace of $\nabla^2 L_k(\Phi(x))$. \Cref{lem:informal_direction_stocastic_dl} formally states this property.

\begin{lemma}
 \label{lem:informal_direction_stocastic_dl}
Under \Cref{setting:1sam}, for any $p \in \Gamma$ and $k\in [M]$, it holds that $\nabla f_k(p) \neq  0$ and that there is an open set $V$ containing $p$, satisfying that 
\begin{align}
 \forall x\in V, \dli{x} \neq 0 \implies  \exists s \in \{-1,1 \},\ndli{x} = s \frac{\nabla f_k(p)}{\norm{\nabla f_k(p)}_2} + O(\|x - p \|_2).	\notag
\end{align}
 \end{lemma}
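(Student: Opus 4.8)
\textbf{Proof proposal for \Cref{lem:informal_direction_stocastic_dl}.}
The plan is to exploit the special structure of the per-sample loss: since $L_k(x)=\ell(f_k(x),y_k)$ with $f_k$ scalar-valued, the chain rule gives $\nabla L_k(x)=\ell'(f_k(x),y_k)\,\nabla f_k(x)$, where $\ell'$ denotes $\partial_{y'}\ell$. Hence $\nabla L_k(x)$ is always a scalar multiple of $\nabla f_k(x)$, so whenever it is nonzero its \emph{direction} equals $\pm\nabla f_k(x)/\norm{\nabla f_k(x)}_2$. The whole lemma then reduces to the statement that the unit vector $\nabla f_k(x)/\norm{\nabla f_k(x)}_2$ is well-defined and Lipschitz in a neighborhood of $p$, which is a routine consequence of $f_k\in\mathcal{C}^4$ together with $\nabla f_k(p)\neq 0$.

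Concretely I would proceed as follows. First, $\nabla f_k(p)\neq 0$: by \Cref{setting:1sam} the vectors $\{\nabla f_j(p)\}_{j=1}^M$ are linearly independent, so in particular none of them vanishes. Second, apply the chain rule to get $\nabla L_k(x)=\ell'(f_k(x),y_k)\nabla f_k(x)$ on all of $\R^D$; thus if $\nabla L_k(x)\neq 0$ then both factors are nonzero and
\begin{align*}
\frac{\nabla L_k(x)}{\norm{\nabla L_k(x)}_2}=\sign\!\big(\ell'(f_k(x),y_k)\big)\,\frac{\nabla f_k(x)}{\norm{\nabla f_k(x)}_2},
\end{align*}
so we may take $s=\sign(\ell'(f_k(x),y_k))\in\{-1,1\}$ (allowed to depend on $x$). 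Third, choose $V$: since $x\mapsto\nabla f_k(x)$ is continuous and $\nabla f_k(p)\neq0$, pick an open ball $V\ni p$ with compact closure $\overline V$ on which $\norm{\nabla f_k(\cdot)}_2$ is bounded away from $0$; on $\overline V$ the map $g(x):=\nabla f_k(x)/\norm{\nabla f_k(x)}_2$ is $\mathcal{C}^3$, hence Lipschitz, so $g(x)=g(p)+O(\norm{x-p}_2)$ uniformly over $x\in V$. Fourth, combine: for $x\in V$ with $\nabla L_k(x)\neq 0$,
\begin{align*}
\frac{\nabla L_k(x)}{\norm{\nabla L_k(x)}_2}=s\,g(x)=s\,g(p)+s\cdot O(\norm{x-p}_2)=s\,\frac{\nabla f_k(p)}{\norm{\nabla f_k(p)}_2}+O(\norm{x-p}_2),
\end{align*}
where the last step uses $|s|=1$ to absorb $s$ into the $O(\cdot)$. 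This is exactly the claimed estimate.

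There is no substantial obstacle here; the only points requiring mild care are (i) that the hidden constant in $O(\norm{x-p}_2)$ is uniform over $x\in V$ — which is why $V$ is chosen with compact closure on which $\nabla f_k$ is bounded below in norm, giving a single Lipschitz constant for $g$ — and (ii) that the sign $s$ genuinely may vary with $x$, which is fine because the statement only asserts existence of such an $s$ pointwise and $|s|=1$ makes the error term sign-independent. Everything else is the chain rule and continuity of $\nabla f_k$.
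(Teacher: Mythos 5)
Your proposal is correct and follows essentially the same route as the paper's own proof: derive $\nabla f_k(p)\neq 0$ from the linear independence in \Cref{setting:1sam}, use the chain rule $\nabla L_k(x)=\ell'(f_k(x),y_k)\nabla f_k(x)$ so that the normalized gradient equals $\pm\nabla f_k(x)/\norm{\nabla f_k(x)}_2$, and pick a neighborhood where $\norm{\nabla f_k}_2$ is bounded below so the normalized map is Lipschitz. If anything, you are slightly more careful than the paper in making the sign $s=\sign(\ell'(f_k(x),y_k))$ explicit.
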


 \Cref{corr:loss_ode_1,corr:loss_opt_1} below are stochastic counterparts of  \Cref{corr:loss_ode_n,corr:loss_opt_n}, saying that the trace of Hessian are close to the stochastic worst-direction sharpness along the limiting flow~\eqref{eq:traceode}, and therefore when the limiting flow converges to a local minimizer of trace of Hessian, 1-SAM (\Cref{eq:1sam}) minimizes the average-direction sharpness.  We defer the proofs of \Cref{corr:loss_ode_1,corr:loss_opt_1} to \Cref{app:1samcorr}.
\begin{corollary}
\label{corr:loss_ode_1}
 Under the condition of \Cref{thm:1sam}, we have that with probability $1 - O(\sqrt{\eta} + \sqrt{\rho})$, the difference between the stochastic worst-direction sharpness of the iterates and the corresponding scaled trace of Hessian along the limiting flow is at most $O\bigl((\eta^{1/4} + \rho^{1/4})\rho^2\bigr)$, that is,
 \begin{align*}
      \left| \E_k[R_{k,\rho}^{\textup{Max}}(x(\lceil T_3 / (\eta\rho^2)\rceil))] - \rho^2 \mathrm{Tr}(\nabla ^2 L(X(T_3)))/2\right| =  O\bigl((\eta^{1/4} + \rho^{1/4})\rho^2\bigr) \,.
 \end{align*}
\end{corollary}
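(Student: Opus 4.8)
The plan is to evaluate $\E_k[R^{\textup{Max}}_{k,\rho}(x(N))]$, where $N\coloneqq\lceil T_3/(\eta\rho^2)\rceil$, by a second-order Taylor expansion of each $L_k$ around the limiting value $X(T_3)$, using \Cref{thm:1sam} to control where the iterate sits. First I would condition on the Phase II event of \Cref{thm:1sam}, which has probability $1-O(\rho)\ge 1-O(\sqrt\eta+\sqrt\rho)$; on this event $\norm{x(N)-\Phi(x(N))}_2=\tilde O(\eta\rho)$ (so $x(N)\in U$ and $p\coloneqq\Phi(x(N))\in\Gamma$ once $\eta\rho$ is small) and $\norm{x(N)-X(T_3)}_2=\tilde O(\eta^{1/2}+\rho)$. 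Since $X(T_3)\in\Gamma$ gives $\Phi(X(T_3))=X(T_3)$ and $\Phi$ is $\overline{\mathcal{C}}^2$, hence locally Lipschitz on a fixed compact set containing the relevant iterates, also $\norm{p-X(T_3)}_2=\norm{\Phi(x(N))-\Phi(X(T_3))}_2=\tilde O(\eta^{1/2}+\rho)$.

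Next I would Taylor-expand. Fix $k\in[M]$ and set $\delta\coloneqq x(N)-p$ with $\norm{\delta}_2=\tilde O(\eta\rho)$. Every point of $\Gamma$ is a critical point of every $L_k$ (immediate from $L_k=\ell(f_k(\cdot),y_k)$ and $\argmin_{y'}\ell(y',y_k)=y_k$ in \Cref{setting:1sam}), and $H_k\coloneqq\nabla^2L_k(p)$ is rank-one and positive semidefinite by \Cref{lem:rank1hessian}. Expanding $L_k$ to third order around $p$ and subtracting, the first-order terms cancel, leaving uniformly over $\norm{v}_2\le1$
\[ L_k(x(N)+\rho v)-L_k(x(N)) = \rho\,\delta^\top H_k v + \tfrac{\rho^2}{2}\,v^\top H_k v + O(\rho^3). \]
Maximizing over $\norm{v}_2\le 1$: the pure quadratic term is maximized with value $\tfrac{\rho^2}{2}\lambda_1(H_k)$, while the linear term contributes at most $\rho\norm{H_k\delta}_2=\tilde O(\eta\rho^2)$ in absolute value, so
\[ R^{\textup{Max}}_{k,\rho}(x(N)) = \tfrac{\rho^2}{2}\lambda_1(H_k)+\tilde O(\eta\rho^2)+O(\rho^3) = \tfrac{\rho^2}{2}\mathrm{Tr}(\nabla^2L_k(p))+\tilde O(\eta\rho^2)+O(\rho^3), \]
where the last equality uses $\lambda_1=\mathrm{Tr}$ for rank-one PSD matrices.

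Then I would average over $k$ and transfer to $X(T_3)$. Taking $\E_k$ (uniform on $[M]$) and using $\E_k[\nabla^2L_k]=\nabla^2L$ together with linearity of the trace,
\[ \E_k[R^{\textup{Max}}_{k,\rho}(x(N))] = \tfrac{\rho^2}{2}\mathrm{Tr}(\nabla^2L(p))+\tilde O(\eta\rho^2)+O(\rho^3); \]
since $\mathrm{Tr}(\nabla^2L(\cdot))$ is $\mathcal{C}^2$, hence locally Lipschitz, $|\mathrm{Tr}(\nabla^2L(p))-\mathrm{Tr}(\nabla^2L(X(T_3)))|=O(\norm{p-X(T_3)}_2)=\tilde O(\eta^{1/2}+\rho)$. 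Combining gives $|\E_k[R^{\textup{Max}}_{k,\rho}(x(N))]-\tfrac{\rho^2}{2}\mathrm{Tr}(\nabla^2L(X(T_3)))|=\tilde O((\eta^{1/2}+\rho)\rho^2)$, and under the assumed smallness of $(\eta+\rho)\ln(1/\eta\rho)$ the polylogarithmic factor is absorbed into $O((\eta^{1/4}+\rho^{1/4})\rho^2)$, which is the claim.

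I expect the one delicate point to be the Taylor step: one must expand each $L_k$ around the foot point $p=\Phi(x(N))$ rather than around $x(N)$, since it is $p$ — not $x(N)$ — that is an exact critical point with a rank-one PSD Hessian, which is what makes the quadratic honestly peak at $\lambda_1(H_k)=\mathrm{Tr}(\nabla^2L_k(p))$; and one must check that the cross term $\rho\,\delta^\top H_k v$, produced by $x(N)$ being only $\tilde O(\eta\rho)$-near $\Gamma$, is of strictly lower order $\tilde O(\eta\rho^2)=o(\rho^2)$, which is exactly where the size of the Phase II neighborhood from \Cref{thm:1sam} is used. All the genuinely hard work — the pathwise tracking, the per-sample Hessian–gradient alignment, and the $\tilde O(\eta^{1/2}+\rho)$ rate — is inherited directly from \Cref{thm:1sam}.
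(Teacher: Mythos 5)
Your proof is correct and follows essentially the same route as the paper's: a second-order Taylor expansion of the sharpness-aware loss combined with the Phase II tracking bounds inherited from \Cref{thm:1sam}, with the linear term controlled by the $\tilde O(\eta\rho)$ distance to the manifold and the Hessian term transferred to $X(T_3)$ at cost $\tilde O((\eta^{1/2}+\rho)\rho^2)$. The only difference is that you expand each $L_k$ around $\Phi(x(N))$, where the stochastic gradients vanish exactly and the Hessians are exactly rank one, whereas the paper expands around $x(N)$ itself and bounds the residual gradient term; your variant is, if anything, slightly more careful, since the paper's write-up invokes the weaker $\tilde O(\eta^{1/2}+\rho)$ bound for $\|x-\Phi(x)\|_2$ where the $O(\eta\rho)$ bound is actually needed.
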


\begin{corollary}
\label{corr:loss_opt_1}
Define $U'$ as in Theorem~\ref{thm:explicit_bias_deterministic_main}, suppose $X(\infty) = \lim \limits_{t \to \infty} X(t)$ exists and is a minimizer of $\mathrm{Tr}(\nabla^2 L(x)))$ in $U'\cap \Gamma$. Then for all $\epsilon > 0$, there exists a constant $T_\eps > 0$, such that for all $\rho,\eta$ such that $(\eta + \rho)\ln(1/\eta\rho)$ are sufficiently small, we have that with probability $1 - O(\sqrt{\eta} + \sqrt{\rho})$,
\begin{align}
    \E_k[L_{k,\rho}^{\textup{Max}}(x(\lceil T_\eps / (\eta\rho^2)\rceil))]   \le \epsilon \rho^2 + \inf_{x \in U'}\E_k[L_{k,\rho}^{\textup{Max}}(x)] \,.\nonumber
\end{align}
\end{corollary}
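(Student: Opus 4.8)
\textbf{Proof proposal for Corollary~\ref{corr:loss_opt_1}.}

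The plan is to combine \Cref{corr:loss_ode_1} with the asymptotic convergence of the limiting flow $X(\tau)$ to $X(\infty)$ in exactly the same way that \Cref{corr:loss_opt_n} is deduced from \Cref{corr:loss_ode_n} in the full-batch case. The quantities involved are the stochastic worst-direction loss $\E_k[L_{k,\rho}^{\textup{Max}}(\cdot)] = L(\cdot) + \E_k[R_{k,\rho}^{\textup{Max}}(\cdot)]$ evaluated at the 1-SAM iterate, and the scaled trace $\rho^2\,\mathrm{Tr}(\nabla^2 L(X(\tau)))/2$ evaluated along the flow; the whole argument is a triangle-inequality bookkeeping linking these through time $T_3$ (to be chosen as $T_\eps$).

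First I would fix $\epsilon>0$ and use the hypothesis that $X(\tau)\to X(\infty)$, a minimizer of $\mathrm{Tr}(\nabla^2 L(\cdot))$ on $U'\cap\Gamma$: by continuity of $x\mapsto\mathrm{Tr}(\nabla^2 L(x))$ (which holds since $L\in\continuous{4}$ by \Cref{setting:1sam} and \Cref{thm:derive_manifold_assumption}), there is a finite time $T_\eps>0$ with $\mathrm{Tr}(\nabla^2 L(X(T_\eps))) \le \inf_{x\in U'\cap\Gamma}\mathrm{Tr}(\nabla^2 L(x)) + \tfrac{\epsilon}{2}$. Since $U'\cap\Gamma$ contains the minimizers of the limiting regularizer and, by \Cref{thm:explicit_bias_stochastic_main}, $\widetilde S^{\textup{Max}}(x)=\mathrm{Tr}(\nabla^2 L(x))/2$ is the limiting regularizer of the stochastic worst-direction sharpness, a minimizer of $\mathrm{Tr}(\nabla^2 L(\cdot))$ on $U'\cap\Gamma$ also achieves $\inf_{x\in U'}\E_k[L_{k,\rho}^{\textup{Max}}(x)]$ up to $o(\rho^2)$ — concretely, $\inf_{x\in U'}\E_k[L_{k,\rho}^{\textup{Max}}(x)] = \rho^2\inf_{x\in U'\cap\Gamma}\mathrm{Tr}(\nabla^2 L(x))/2 + o(\rho^2)$, which again follows from \Cref{thm:explicit_bias_stochastic_main} applied at the minimizer (the upper direction) and at any competitor (the lower direction). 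Then I would invoke \Cref{corr:loss_ode_1} with this fixed $T_3=T_\eps$: on an event of probability $1-O(\sqrt\eta+\sqrt\rho)$,
\begin{align*}
\E_k[L_{k,\rho}^{\textup{Max}}(x(\lceil T_\eps/(\eta\rho^2)\rceil))]
&\le \rho^2\,\mathrm{Tr}(\nabla^2 L(X(T_\eps)))/2 + O\bigl((\eta^{1/4}+\rho^{1/4})\rho^2\bigr)\\
&\le \rho^2\inf_{x\in U'\cap\Gamma}\mathrm{Tr}(\nabla^2 L(x))/2 + \tfrac{\epsilon}{4}\rho^2 + O\bigl((\eta^{1/4}+\rho^{1/4})\rho^2\bigr)\\
&\le \inf_{x\in U'}\E_k[L_{k,\rho}^{\textup{Max}}(x)] + \tfrac{\epsilon}{4}\rho^2 + o(\rho^2) + O\bigl((\eta^{1/4}+\rho^{1/4})\rho^2\bigr),
\end{align*}
and since $(\eta+\rho)\ln(1/\eta\rho)$ sufficiently small forces the $O\bigl((\eta^{1/4}+\rho^{1/4})\rho^2\bigr)+o(\rho^2)$ terms below $\tfrac{3\epsilon}{4}\rho^2$, the claimed bound $\E_k[L_{k,\rho}^{\textup{Max}}(x(\lceil T_\eps/(\eta\rho^2)\rceil))] \le \epsilon\rho^2 + \inf_{x\in U'}\E_k[L_{k,\rho}^{\textup{Max}}(x)]$ follows.

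The one genuine subtlety — and the step I expect to require the most care — is the quantifier ordering: $T_\eps$ must be chosen before $\eta,\rho$, yet \Cref{corr:loss_ode_1}'s error constant and its ``sufficiently small'' threshold on $(\eta+\rho)\ln(1/\eta\rho)$ both depend on $T_3$. This is exactly the phenomenon flagged after \Cref{thm:nsam} (the threshold degrades as $T_3\to\infty$), so the argument is legitimate only because $T_\eps$ is a fixed finite constant once $\epsilon$ is fixed: with $T_\eps$ frozen, the dependence of the threshold and error constant on $T_\eps$ is absorbed into the problem-dependent constants hidden by $O(\cdot)$, and one then shrinks $\eta,\rho$. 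I would state this explicitly to mirror the treatment of \Cref{corr:loss_opt_n}. A secondary point to handle cleanly is that $\Phi(\xinit)\in\Gamma$ must lie in (the interior relevant to) $U'$ so that the limiting flow stays in $U'\cap\Gamma$ up to time $T_\eps$; this can either be assumed implicitly as in \Cref{corr:loss_opt_n} or follows from the hypothesis that $X(\infty)\in U'\cap\Gamma$ together with $U'$ open, after possibly enlarging $U'$ slightly — I would handle it with a one-line remark rather than belaboring it.
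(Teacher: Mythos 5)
Your proposal is correct and follows essentially the same route as the paper's proof: pick $T_\eps$ from the convergence of the limiting flow, apply \Cref{corr:loss_ode_1} with $T_3=T_\eps$, and convert $\rho^2\inf_{U'\cap\Gamma}\mathrm{Tr}(\nabla^2 L)/2$ into $\inf_{U'}\E_k[L^{\textup{Max}}_{k,\rho}]$ via the limiting-regularizer machinery (the paper cites \Cref{thm:generalreg} and \Cref{thm:stmaxsharp} directly rather than through \Cref{thm:explicit_bias_stochastic_main}, but the content is identical). One bookkeeping point: your first display bounds $\E_k[L^{\textup{Max}}_{k,\rho}]$ by the right-hand side of \Cref{corr:loss_ode_1}, which only controls the sharpness $\E_k[R^{\textup{Max}}_{k,\rho}]$, so you should add the observation (made explicitly in the paper) that $L(x(\lceil T_\eps/(\eta\rho^2)\rceil))-\inf_{U'}L$ is negligible, which follows from $\|x(t)-\Phi(x(t))\|_2=O(\eta\rho)$ in Phase II.
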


Finally we give a concrete counter example to demonstrate why the condition of batch size equal to one is crucial to this alignment property.
\begin{example}
 Take a simple quadratic loss $L(x) = \frac{L_1(x)+L_2(x)}{2}$, where $L_k(x) = 0.5 x^\top A_k x$ and $A_k$ is a positive definite matrix for $k=1,2$. If $A_1$ and $A_2$ have different top eigenspaces, then no $x$ can simultaneously satisfy that $\nabla L_k (x) = A_k x$ aligns to the top eigenvector of $\nabla^2 L_k(x)=A_k$, because this implies $x$ is both an eigenvector of $A_1$ and $A_2$. 	
\end{example}

\section{Conclusion}\label{sec:conclusion}
 
In this work, we have performed a rigorous mathematical analysis of the explicit bias of various notions of sharpness when used as regularizers and the implicit bias of the SAM algorithm.  In particular, we show the explicit biases of worst-, ascent- and average-direction sharpness around the manifold of minimizers are minimizing the largest eigenvalue, the smallest nonzero eigenvalue, and the trace of Hessian of the loss function.  We show that in the full-batch setting, SAM provably decreases the largest eigenvalue of Hessian, while in the stochastic setting when batch size is 1, SAM provably decreases the trace of Hessian.

 The most interesting future work is to generalize the current analysis for stochastic SAM to arbitrary batch size. This is challenging because, without the alignment property which holds automatically with  batch size  1, such an analysis essentially requires understanding the stationary distribution of the gradient direction along the SAM trajectory. It is also interesting to incorporate other features of modern deep learning like normalization layers, momentum, and weight decay into the current analysis. 
 
 Another interesting open question is to further bridge the difference between generalization bounds and the implicit bias of the optimizers. Currently, the generalization bounds in~\cite{wu2020adversarial,foret2020sharpness} only work for the randomly perturbed model. Moreover, the bound depends on the average sharpness with finite $\rho$, whereas the analysis of this paper only works for infinitesimal $\rho$. It's an interesting open question whether the generalization error of the model (without perturbation) can be bounded from above by some function of the training loss, norm of the parameters, and the trace of the Hessian.  

\section*{ACKNOWLEDGEMENTS}

We thank Jingzhao Zhang for helpful discussions. 
The authors would like to thank the support from NSF IIS 2045685. 

\bibliography{all}

\begin{thebibliography}{52}
\providecommand{\natexlab}[1]{#1}
\providecommand{\url}[1]{\texttt{#1}}
\expandafter\ifx\csname urlstyle\endcsname\relax
  \providecommand{\doi}[1]{doi: #1}\else
  \providecommand{\doi}{doi: \begingroup \urlstyle{rm}\Url}\fi

\bibitem[Andriushchenko \& Flammarion(2022)Andriushchenko and
  Flammarion]{andriushchenko2022towards}
Maksym Andriushchenko and Nicolas Flammarion.
\newblock Towards understanding sharpness-aware minimization.
\newblock In \emph{International Conference on Machine Learning}, pp.\
  639--668. PMLR, 2022.

\bibitem[Arora et~al.(2022)Arora, Li, and Panigrahi]{arora2022understanding}
Sanjeev Arora, Zhiyuan Li, and Abhishek Panigrahi.
\newblock Understanding gradient descent on edge of stability in deep learning.
\newblock \emph{arXiv preprint arXiv:2205.09745}, 2022.

\bibitem[Bartlett et~al.(2022)Bartlett, Long, and
  Bousquet]{bartlett2022dynamics}
Peter~L Bartlett, Philip~M Long, and Olivier Bousquet.
\newblock The dynamics of sharpness-aware minimization: Bouncing across ravines
  and drifting towards wide minima.
\newblock \emph{arXiv preprint arXiv:2210.01513}, 2022.

\bibitem[Blanc et~al.(2019)Blanc, Gupta, Valiant, and
  Valiant]{blanc2019implicit}
Guy Blanc, Neha Gupta, Gregory Valiant, and Paul Valiant.
\newblock Implicit regularization for deep neural networks driven by an
  ornstein-uhlenbeck like process.
\newblock \emph{arXiv preprint arXiv:1904.09080}, 2019.

\bibitem[Borkar(2009)]{borkar2009stochastic}
Vivek~S Borkar.
\newblock \emph{Stochastic approximation: a dynamical systems viewpoint},
  volume~48.
\newblock Springer, 2009.

\bibitem[Borkar et~al.(2009)Borkar, Pinto, and Prabhu]{borkar2009new}
Vivek~S Borkar, Jervis Pinto, and Tarun Prabhu.
\newblock A new learning algorithm for optimal stopping.
\newblock \emph{Discrete Event Dynamic Systems}, 19\penalty0 (1):\penalty0
  91--113, 2009.

\bibitem[Cohen et~al.(2021)Cohen, Kaur, Li, Kolter, and
  Talwalkar]{cohen2021gradient}
Jeremy~M. Cohen, Simran Kaur, Yuanzhi Li, J.~Zico Kolter, and Ameet Talwalkar.
\newblock Gradient descent on neural networks typically occurs at the edge of
  stability, 2021.

\bibitem[Cohen et~al.(2022)Cohen, Ghorbani, Krishnan, Agarwal, Medapati,
  Badura, Suo, Cardoze, Nado, Dahl, et~al.]{cohen2022adaptive}
Jeremy~M Cohen, Behrooz Ghorbani, Shankar Krishnan, Naman Agarwal, Sourabh
  Medapati, Michal Badura, Daniel Suo, David Cardoze, Zachary Nado, George~E
  Dahl, et~al.
\newblock Adaptive gradient methods at the edge of stability.
\newblock \emph{arXiv preprint arXiv:2207.14484}, 2022.

\bibitem[Cooper(2018)]{cooper2018loss}
Yaim Cooper.
\newblock The loss landscape of overparameterized neural networks.
\newblock \emph{arXiv preprint arXiv:1804.10200}, 2018.

\bibitem[Damian et~al.(2021)Damian, Ma, and Lee]{damian2021label}
Alex Damian, Tengyu Ma, and Jason Lee.
\newblock Label noise sgd provably prefers flat global minimizers, 2021.

\bibitem[Damian et~al.(2022)Damian, Nichani, and Lee]{damian2022self}
Alex Damian, Eshaan Nichani, and Jason~D Lee.
\newblock Self-stabilization: The implicit bias of gradient descent at the edge
  of stability.
\newblock \emph{arXiv preprint arXiv:2209.15594}, 2022.

\bibitem[Davis \& Kahan(1970)Davis and Kahan]{davis1970rotation}
Chandler Davis and William~Morton Kahan.
\newblock The rotation of eigenvectors by a perturbation. iii.
\newblock \emph{SIAM Journal on Numerical Analysis}, 7\penalty0 (1):\penalty0
  1--46, 1970.

\bibitem[Dinh et~al.(2017)Dinh, Pascanu, Bengio, and Bengio]{dinh2017sharp}
Laurent Dinh, Razvan Pascanu, Samy Bengio, and Yoshua Bengio.
\newblock Sharp minima can generalize for deep nets.
\newblock In \emph{Proceedings of the 34th International Conference on Machine
  Learning-Volume 70}, pp.\  1019--1028. JMLR. org, 2017.

\bibitem[Draxler et~al.(2018)Draxler, Veschgini, Salmhofer, and
  Hamprecht]{draxler2018essentially}
Felix Draxler, Kambis Veschgini, Manfred Salmhofer, and Fred Hamprecht.
\newblock Essentially no barriers in neural network energy landscape.
\newblock In \emph{International conference on machine learning}, pp.\
  1309--1318. PMLR, 2018.

\bibitem[Duchi \& Ruan(2018)Duchi and Ruan]{duchi2018stochastic}
John~C Duchi and Feng Ruan.
\newblock Stochastic methods for composite and weakly convex optimization
  problems.
\newblock \emph{SIAM Journal on Optimization}, 28\penalty0 (4):\penalty0
  3229--3259, 2018.

\bibitem[Dziugaite \& Roy(2017)Dziugaite and Roy]{dziugaite2017computing}
Gintare~Karolina Dziugaite and Daniel~M Roy.
\newblock Computing nonvacuous generalization bounds for deep (stochastic)
  neural networks with many more parameters than training data.
\newblock \emph{arXiv preprint arXiv:1703.11008}, 2017.

\bibitem[Fehrman et~al.(2020)Fehrman, Gess, and
  Jentzen]{fehrman2020convergence}
Benjamin Fehrman, Benjamin Gess, and Arnulf Jentzen.
\newblock Convergence rates for the stochastic gradient descent method for
  non-convex objective functions.
\newblock \emph{Journal of Machine Learning Research}, 21:\penalty0 136, 2020.

\bibitem[Foret et~al.(2020)Foret, Kleiner, Mobahi, and
  Neyshabur]{foret2020sharpness}
Pierre Foret, Ariel Kleiner, Hossein Mobahi, and Behnam Neyshabur.
\newblock Sharpness-aware minimization for efficiently improving
  generalization.
\newblock \emph{arXiv preprint arXiv:2010.01412}, 2020.

\bibitem[Foret et~al.(2021)Foret, Kleiner, Mobahi, and
  Neyshabur]{foret2021sharpnessaware}
Pierre Foret, Ariel Kleiner, Hossein Mobahi, and Behnam Neyshabur.
\newblock Sharpness-aware minimization for efficiently improving
  generalization.
\newblock In \emph{International Conference on Learning Representations}, 2021.

\bibitem[Garipov et~al.(2018)Garipov, Izmailov, Podoprikhin, Vetrov, and
  Wilson]{garipov2018loss}
Timur Garipov, Pavel Izmailov, Dmitrii Podoprikhin, Dmitry~P Vetrov, and
  Andrew~G Wilson.
\newblock Loss surfaces, mode connectivity, and fast ensembling of dnns.
\newblock \emph{Advances in neural information processing systems}, 31, 2018.

\bibitem[Hairer et~al.(2008)Hairer, N{\o}rsett, and Wanner]{hairer2008solving}
E.~Hairer, S.P. N{\o}rsett, and G.~Wanner.
\newblock \emph{Solving Ordinary Differential Equations I: Nonstiff Problems}.
\newblock Springer Series in Computational Mathematics. Springer Berlin
  Heidelberg, 2008.
\newblock ISBN 9783540566700.
\newblock URL \url{https://books.google.com/books?id=F93u7VcSRyYC}.

\bibitem[Hayes(2003)]{hayes2003large}
Thomas~P Hayes.
\newblock A large-deviation inequality for vector-valued martingales.
\newblock \emph{Combinatorics, Probability and Computing}, 2003.

\bibitem[Hochreiter \& Schmidhuber(1997)Hochreiter and
  Schmidhuber]{hochreiter1997flat}
Sepp Hochreiter and J{\"u}rgen Schmidhuber.
\newblock Flat minima.
\newblock \emph{Neural Computation}, 9\penalty0 (1):\penalty0 1--42, 1997.

\bibitem[Horn \& Johnson(2012)Horn and Johnson]{horn2012matrix}
Roger~A. Horn and Charles~R. Johnson.
\newblock \emph{Matrix analysis}.
\newblock Cambridge university press, 2012.

\bibitem[Jastrz{{e}}bski et~al.(2017)Jastrz{{e}}bski, Kenton, Arpit, Ballas,
  Fischer, Bengio, and Storkey]{jastrzkebski2017three}
Stanis{\l}aw Jastrz{{e}}bski, Zachary Kenton, Devansh Arpit, Nicolas Ballas,
  Asja Fischer, Yoshua Bengio, and Amos Storkey.
\newblock Three factors influencing minima in sgd.
\newblock \emph{arXiv preprint arXiv:1711.04623}, 2017.

\bibitem[Jiang et~al.(2019)Jiang, Neyshabur, Mobahi, Krishnan, and
  Bengio]{jiang2019fantastic}
Yiding Jiang, Behnam Neyshabur, Hossein Mobahi, Dilip Krishnan, and Samy
  Bengio.
\newblock Fantastic generalization measures and where to find them.
\newblock \emph{arXiv preprint arXiv:1912.02178}, 2019.

\bibitem[Kaur et~al.(2022)Kaur, Cohen, and Lipton]{kaur2022maximum}
Simran Kaur, Jeremy Cohen, and Zachary~C Lipton.
\newblock On the maximum hessian eigenvalue and generalization.
\newblock \emph{arXiv preprint arXiv:2206.10654}, 2022.

\bibitem[Keskar et~al.(2016)Keskar, Mudigere, Nocedal, Smelyanskiy, and
  Tang]{keskar2016large}
Nitish~Shirish Keskar, Dheevatsa Mudigere, Jorge Nocedal, Mikhail Smelyanskiy,
  and Ping Tak~Peter Tang.
\newblock On large-batch training for deep learning: Generalization gap and
  sharp minima.
\newblock \emph{arXiv preprint arXiv:1609.04836}, 2016.

\bibitem[Kushner \& Yin(2003)Kushner and Yin]{kushner2003stochastic}
Harold Kushner and G~George Yin.
\newblock \emph{Stochastic approximation and recursive algorithms and
  applications}, volume~35.
\newblock Springer Science \& Business Media, 2003.

\bibitem[Kwon et~al.(2021)Kwon, Kim, Park, and Choi]{kwon2021asam}
Jungmin Kwon, Jeongseop Kim, Hyunseo Park, and In~Kwon Choi.
\newblock Asam: Adaptive sharpness-aware minimization for scale-invariant
  learning of deep neural networks.
\newblock In \emph{International Conference on Machine Learning}, pp.\
  5905--5914. PMLR, 2021.

\bibitem[Li et~al.(2017)Li, Tai, and Weinan]{li2017stochastic}
Qianxiao Li, Cheng Tai, and E~Weinan.
\newblock Stochastic modified equations and adaptive stochastic gradient
  algorithms.
\newblock In \emph{International Conference on Machine Learning}, pp.\
  2101--2110. PMLR, 2017.

\bibitem[Li et~al.(2019)Li, Tai, and Weinan]{li2019stochastic}
Qianxiao Li, Cheng Tai, and E~Weinan.
\newblock Stochastic modified equations and dynamics of stochastic gradient
  algorithms i: Mathematical foundations.
\newblock \emph{The Journal of Machine Learning Research}, 20\penalty0
  (1):\penalty0 1474--1520, 2019.

\bibitem[Li et~al.(2021)Li, Wang, and Arora]{li2021happens}
Zhiyuan Li, Tianhao Wang, and Sanjeev Arora.
\newblock What happens after sgd reaches zero loss?--a mathematical framework.
\newblock In \emph{International Conference on Learning Representations}, 2021.

\bibitem[Li et~al.(2022)Li, Wang, and Li]{li2022analyzing}
Zhouzi Li, Zixuan Wang, and Jian Li.
\newblock Analyzing sharpness along gd trajectory: Progressive sharpening and
  edge of stability.
\newblock \emph{arXiv preprint arXiv:2207.12678}, 2022.

\bibitem[Liu et~al.(2022)Liu, Mai, Chen, Hsieh, and You]{liu2022towards}
Yong Liu, Siqi Mai, Xiangning Chen, Cho-Jui Hsieh, and Yang You.
\newblock Towards efficient and scalable sharpness-aware minimization.
\newblock In \emph{Proceedings of the IEEE/CVF Conference on Computer Vision
  and Pattern Recognition}, pp.\  12360--12370, 2022.

\bibitem[Lyu et~al.(2022)Lyu, Li, and Arora]{lyu2022understanding}
Kaifeng Lyu, Zhiyuan Li, and Sanjeev Arora.
\newblock Understanding the generalization benefit of normalization layers:
  Sharpness reduction.
\newblock \emph{arXiv preprint arXiv:2206.07085}, 2022.

\bibitem[Ma \& Ying(2021)Ma and Ying]{ma2021linear}
Chao Ma and Lexing Ying.
\newblock On linear stability of sgd and input-smoothness of neural networks.
\newblock \emph{Advances in Neural Information Processing Systems},
  34:\penalty0 16805--16817, 2021.

\bibitem[Ma et~al.(2022)Ma, Wu, and Ying]{ma2022multiscale}
Chao Ma, Lei Wu, and Lexing Ying.
\newblock The multiscale structure of neural network loss functions: The effect
  on optimization and origin.
\newblock \emph{arXiv preprint arXiv:2204.11326}, 2022.

\bibitem[Magnus(1985)]{magnus1985differentiating}
Jan~R Magnus.
\newblock On differentiating eigenvalues and eigenvectors.
\newblock \emph{Econometric theory}, 1\penalty0 (2):\penalty0 179--191, 1985.

\bibitem[Mandt et~al.(2017)Mandt, Hoffman, and Blei]{mandt2017stochastic}
Stephan Mandt, Matthew~D Hoffman, and David~M Blei.
\newblock Stochastic gradient descent as approximate bayesian inference.
\newblock \emph{Journal of Machine Learning Research}, 18:\penalty0 1--35,
  2017.

\bibitem[Neyshabur et~al.(2017)Neyshabur, Bhojanapalli, McAllester, and
  Srebro]{neyshabur2017exploring}
Behnam Neyshabur, Srinadh Bhojanapalli, David McAllester, and Nati Srebro.
\newblock Exploring generalization in deep learning.
\newblock In \emph{Advances in Neural Information Processing Systems}, pp.\
  5947--5956, 2017.

\bibitem[Norton \& Royset(2021)Norton and Royset]{norton2021diametrical}
Matthew~D Norton and Johannes~O Royset.
\newblock Diametrical risk minimization: Theory and computations.
\newblock \emph{Machine Learning}, pp.\  1--19, 2021.

\bibitem[Orvieto et~al.(2022)Orvieto, Raj, Kersting, and
  Bach]{orvieto2022explicit}
Antonio Orvieto, Anant Raj, Hans Kersting, and Francis Bach.
\newblock Explicit regularization in overparametrized models via noise
  injection.
\newblock \emph{arXiv preprint arXiv:2206.04613}, 2022.

\bibitem[Su et~al.(2014)Su, Boyd, and Candes]{su2014differential}
Weijie Su, Stephen Boyd, and Emmanuel Candes.
\newblock A differential equation for modeling nesterov’s accelerated
  gradient method: Theory and insights.
\newblock In \emph{Advances in Neural Information Processing Systems}, pp.\
  2510--2518, 2014.

\bibitem[Wei \& Ma(2019{\natexlab{a}})Wei and Ma]{wei2019data}
Colin Wei and Tengyu Ma.
\newblock Data-dependent sample complexity of deep neural networks via
  lipschitz augmentation.
\newblock In \emph{Advances in Neural Information Processing Systems}, pp.\
  9722--9733, 2019{\natexlab{a}}.

\bibitem[Wei \& Ma(2019{\natexlab{b}})Wei and Ma]{wei2019improved}
Colin Wei and Tengyu Ma.
\newblock Improved sample complexities for deep networks and robust
  classification via an all-layer margin.
\newblock \emph{arXiv preprint arXiv:1910.04284}, 2019{\natexlab{b}}.

\bibitem[Wu et~al.(2020)Wu, Xia, and Wang]{wu2020adversarial}
Dongxian Wu, Shu-Tao Xia, and Yisen Wang.
\newblock Adversarial weight perturbation helps robust generalization.
\newblock \emph{Advances in Neural Information Processing Systems},
  33:\penalty0 2958--2969, 2020.

\bibitem[Wu et~al.(2018)Wu, Ma, and E]{wu2018sgd}
Lei Wu, Chao Ma, and Weinan E.
\newblock How sgd selects the global minima in over-parameterized learning: A
  dynamical stability perspective.
\newblock \emph{Advances in Neural Information Processing Systems}, 31, 2018.

\bibitem[Zhang et~al.(2016)Zhang, Bengio, Hardt, Recht, and
  Vinyals]{zhang2016understanding}
Chiyuan Zhang, Samy Bengio, Moritz Hardt, Benjamin Recht, and Oriol Vinyals.
\newblock Understanding deep learning requires rethinking generalization.
\newblock \emph{arXiv preprint arXiv:1611.03530}, 2016.

\bibitem[Zhao et~al.(2022)Zhao, Zhang, and Hu]{zhao2022penalizing}
Yang Zhao, Hao Zhang, and Xiuyuan Hu.
\newblock Penalizing gradient norm for efficiently improving generalization in
  deep learning.
\newblock \emph{arXiv preprint arXiv:2202.03599}, 2022.

\bibitem[Zheng et~al.(2021)Zheng, Zhang, and Mao]{zheng2021regularizing}
Yaowei Zheng, Richong Zhang, and Yongyi Mao.
\newblock Regularizing neural networks via adversarial model perturbation.
\newblock In \emph{Proceedings of the IEEE/CVF Conference on Computer Vision
  and Pattern Recognition}, pp.\  8156--8165, 2021.

\bibitem[Zhuang et~al.(2022)Zhuang, Gong, Yuan, Cui, Adam, Dvornek, Tatikonda,
  Duncan, and Liu]{zhuang2022surrogate}
Juntang Zhuang, Boqing Gong, Liangzhe Yuan, Yin Cui, Hartwig Adam, Nicha
  Dvornek, Sekhar Tatikonda, James Duncan, and Ting Liu.
\newblock Surrogate gap minimization improves sharpness-aware training.
\newblock \emph{arXiv preprint arXiv:2203.08065}, 2022.

\end{thebibliography}
\bibliographystyle{iclr2023_conference}

\appendix
\newpage {
\hypersetup{linkcolor=black}
\tableofcontents
}
\section{Experimental Details for Figure~\ref{fig:demo}}
\label{app:detail_caption}

In~\Cref{fig:demo}, we choose $F_1(x) = x_1^2 + 6x_2^2 +8$ and $F_2(x) = 4(1- x_1)^2 + (1-x_2)^2 + 1$. The loss $L$ has a zero loss  manifold $\{x = 0\}$ and the eigenvalues of its Hessian on the manifold are $F_1(x)$ and $F_2(x)$ with $F_1(x) \ge 8>6\ge  F_2(x)$ on $[0,1]^2$. The loss $L$ has a zero loss  manifold $\{x_3 = x_4 = 0\}$ of codimension $M=2$ and the two non-zero eigenvalues of $\nabla ^2 L$ of any point $x$ on the manifold are $\lambda_1(\nabla^2 L(x))= F_1(x_1,x_2)$ and $\lambda_2(\nabla^2 L(x)) = F_2(x_1,x_2)$.

As our theory predicts, 
\begin{enumerate}
    \item Full-batch SAM (\Cref{eq:sam}) finds the minimizer with the smallest top eigenvalue $F_1(x)$, which is $x_1 = 0, x_2 = 0, x_3 = 0, x_4 = 0$;
    \item GD on ascent-direction loss $\ascloss$~\eqref{eq:asc_sharpness} finds the minimizer with the smallest bottom eigenvalue, $F_2(x)$, which is $x_1 = 1, x_2 = 1, x_3 = 0, x_4 = 0$; 
    \item Stochastic SAM (\Cref{eq:1sam}) (with $L_0(x,y) = F_1(x)y_0^2,\ L_1(x,y) = F_2(x)y_1^2$) finds the minimizer with smallest trace of Hessian, which is $x_1 = 4/5, x_2 = 1/7, x_3 = 0, x_4 = 0$.
\end{enumerate}

\section{Well-definedness of SAM}\label{appsec:well_definedness}

In this section, we discuss the well-definedness of SAM. When $\nabla L(x)=0$, SAM (\Cref{eq:sam}) is not well-defined, because the normalized gradient $\frac{\nabla L(x)}{\norm{\nabla L(x)}_2}$ is not well-defined. The main result of this section are \Cref{thm:nsam_well_definedness,thm:1sam_well_definedness}, which say that (stochastic) SAM starting from random initialization only has zero probability to reach points that SAM is undefined (\emph{i.e.}, points with zero gradient), for \emph{all except countably many learning rates}. These results follow from  \Cref{thm:general_well_definednesa}, which is a more general theorem also applicable to other discrete update rules as well, like SGD.  Note results in this section does not rely on the manifold assumption, \emph{i.e.}, \Cref{assump:smoothness}. We end this section with a concrete example where SAM is undefined with constant probability, suggesting that the exclusion of countably many learning rates are necessary in \Cref{thm:nsam_well_definedness,thm:1sam_well_definedness}.

\begin{theorem}\label{thm:nsam_well_definedness}
Consider any $\continuous{2}$	 loss $L$ with zero-measure stationary set $\{x\mid \nabla L(x)=0\}$. For every $\rho>0$, except countably many learning rates, for almost all initialization and all $t$, the iterate of full-batch SAM (\Cref{eq:sam}) $x(t)$ has non-zero gradient and is thus well-defined.
\end{theorem}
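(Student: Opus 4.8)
The plan is to deduce \Cref{thm:nsam_well_definedness} from the more general \Cref{thm:general_well_definednesa} mentioned in the text, so first I would isolate what structural facts about the SAM update map are needed. Write the SAM step as $x \mapsto \Psi_{\eta}(x) \coloneqq x - \eta\, \nabla L\bigl(x + \rho \tfrac{\nabla L(x)}{\norm{\nabla L(x)}_2}\bigr)$, defined on the open set $\Omega \coloneqq \{x \mid \nabla L(x) \neq 0\}$, whose complement is zero-measure by hypothesis. Since $L$ is $\continuous{2}$, the map $x \mapsto x + \rho \tfrac{\nabla L(x)}{\norm{\nabla L(x)}_2}$ is $\continuous{1}$ on $\Omega$, hence $\Psi_\eta$ is $\continuous{1}$ on $\Omega$; moreover, for fixed $\rho$, the dependence of $\Psi_\eta$ on $\eta$ is affine. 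These are exactly the hypotheses under which the general theorem should apply.

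Next I would set up the ``bad event'' we need to rule out: for a fixed $\eta$, let $B_t(\eta) \subseteq \R^D$ be the set of initializations $x_{\mathrm{init}}$ for which the $t$-th iterate either becomes undefined or has zero gradient, i.e. $x(0),\dots,x(t-1) \in \Omega$ but $\nabla L(x(t)) = 0$ (with $x(t) = \Psi_\eta^{\,t}(x_{\mathrm{init}})$). The claim ``for almost all initialization and all $t$, the iterate has nonzero gradient'' amounts to $\bigcup_{t \ge 0} B_t(\eta)$ having measure zero. Since a countable union of null sets is null, it suffices to show each $B_t(\eta)$ is null for all but countably many $\eta$. I would argue by induction on $t$: $B_0(\eta) = \{\nabla L = 0\}$ is null unconditionally; for the inductive step, $B_{t+1}(\eta) = \Psi_\eta^{-1}(B_t(\eta)) \cap \Omega \cup (\text{preimage of the zero-gradient set})$, and the key point is that the preimage of a null set under $\Psi_\eta$ need not be null unless $\Psi_\eta$ is suitably nonsingular — this is precisely where the exceptional countable set of $\eta$'s enters.

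The heart of the matter, and the main obstacle, is controlling the Jacobian $\partial \Psi_\eta(x) = I - \eta\, \partial\bigl(\nabla L(\cdot + \rho \tfrac{\nabla L(\cdot)}{\norm{\nabla L(\cdot)}_2})\bigr)(x)$. For $\Psi_\eta$ to pull null sets back to null sets we want $\det \partial \Psi_\eta(x) \neq 0$ for (almost) every $x$; this fails exactly when $1/\eta$ is an eigenvalue of $J(x) \coloneqq \partial\bigl(\nabla L(\cdot + \rho \tfrac{\nabla L(\cdot)}{\norm{\nabla L(\cdot)}_2})\bigr)(x)$. The strategy — which I expect \Cref{thm:general_well_definednesa} to formalize — is: the set of $\eta$ such that $\det \partial\Psi_\eta(x) = 0$ on a positive-measure set of $x$ is itself countable. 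One way to see this: $\eta \mapsto \det \partial \Psi_\eta(x)$ is a polynomial in $\eta$ (degree $\le D$) for each fixed $x$, not identically zero (its value at $\eta=0$ is $1$), so for each $x$ it vanishes at only finitely many $\eta$; a Fubini/measure-theoretic argument over $(x,\eta)$ then shows that for all but countably many $\eta$, the bad $x$-set is null. Combining this with Sard-type / change-of-variables reasoning (the image of the critical set of $\Psi_\eta$ is null, and off the critical set $\Psi_\eta$ is locally a diffeomorphism so preimages of null sets are null) closes the induction. I would then note the only genuinely delicate point is that the exceptional $\eta$-set produced at step $t$ might \emph{a priori} depend on $t$; but since it is countable at each step and there are countably many steps, the overall exceptional set is still countable, which is what the theorem asserts. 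Finally I would invoke the concrete counterexample promised at the end of the section to confirm that excluding countably many $\eta$ is unavoidable.
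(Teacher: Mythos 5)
Your proposal is correct and follows essentially the same route as the paper: it reduces full-batch SAM to an iterated map $x\mapsto x-\eta F(x)$ with $F$ being $\continuous{1}$ off the null stationary set, shows that for all but countably many $\eta$ the Jacobian $I-\eta\,\partial F(x)$ is nonsingular outside a null set (your polynomial-root/counting argument is the same mechanism as \Cref{lem:almost_invertible}), pulls back null sets via local invertibility as in \Cref{lem:almost_zero_measure_preimage}, and closes with induction over steps plus countable unions, exactly as in \Cref{thm:general_well_definednesa}. The only cosmetic difference is that the paper phrases the nonsingularity condition via atoms of the pushforward of $\mu$ under $x\mapsto 1/\lambda_i(\partial F(x))$ rather than via roots of $\eta\mapsto\det(I-\eta\,\partial F(x))$, which is equivalent.
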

 
\begin{theorem}\label{thm:1sam_well_definedness}
Consider any $\continuous{2}$	 losses $\{L_k\}_{k=1}^M$ with zero-measure stationary set $\{x\mid \nabla L_k(x)=0\}$ for each $k\in [M]$. For every $\rho>0$, except countably many learning rates $\eta$, for almost all initialization and all $t$, with probability one of the randomness of the algorithm, the iterate of stochastic SAM (\Cref{eq:1sam}) $x(t)$ has non-zero gradient and is thus well-defined. \footnote{Though we call \Cref{eq:1sam} 1-SAM, but our result here applies to any batch size where $L_k$ can be regarded as the loss for $k$-th possible batch and $M$ is the number of the total number of batches.}
\end{theorem}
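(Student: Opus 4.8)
The plan is to reduce \Cref{thm:1sam_well_definedness} to the general statement \Cref{thm:general_well_definednesa} by applying it to each of the $M$ one-step update maps of 1-SAM and combining with a union bound. For $k\in[M]$ and $\eta>0$, write the 1-SAM update for batch $k$ as $F_{k,\eta}(x) = x - \eta\, G_k(x)$ with $G_k(x) = \nabla L_k\bigl(x+\rho\frac{\nabla L_k(x)}{\norm{\nabla L_k(x)}_2}\bigr)$. Since $L_k\in\continuous{2}$, the normalization map $x\mapsto \nabla L_k(x)/\norm{\nabla L_k(x)}_2$ is smooth on the open set $\Omega_k\coloneqq\{x:\nabla L_k(x)\neq0\}$ and $\nabla^2 L_k$ is continuous, so $G_k$ is $\continuous{1}$ on $\Omega_k$. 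The only way 1-SAM can become ill-defined is by some iterate landing in the set $Z\coloneqq\bigcup_{k=1}^M\{x:\nabla L_k(x)=0\}$, which is null by hypothesis; so it suffices to show that, for all but countably many $\eta$, almost every initialization produces a trajectory that stays in $\R^D\setminus Z$ for all time and all batch sequences.

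First I would establish the single-map ingredient: for all but countably many $\eta$, the critical set $C_{k,\eta}\coloneqq\{x\in\Omega_k:\det(I-\eta\,\partial G_k(x))=0\}$ has Lebesgue measure zero. The exceptional $\eta$-set is only \emph{countable} (not merely null) because, for each fixed $x$, the equation $\det(I-\eta\,\partial G_k(x))=0$ has at most $D$ solutions $\eta$ (reciprocals of the nonzero eigenvalues of $\partial G_k(x)$); hence for any compact $K\subseteq\Omega_k$ and interval $[a,b]$,
\[
\sum_{\eta\in[a,b]}\abs{C_{k,\eta}\cap K} \;=\; \int_K \#\{\eta\in[a,b]:\det(I-\eta\,\partial G_k(x))=0\}\,\diff x \;\le\; D\,\abs{K},
\]
so only finitely many $\eta\in[a,b]$ can satisfy $\abs{C_{k,\eta}\cap K}\ge 1/n$; exhausting $\Omega_k$ by compacts, $(0,\infty)$ by bounded intervals, and letting $n\to\infty$ shows $\{\eta:\abs{C_{k,\eta}}>0\}$ is countable. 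Let $\gE$ be the union over $k\in[M]$ of these countable sets; $\gE$ is countable.

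Next I would fix $\eta\notin\gE$ and run a pullback argument. Each $F_{k,\eta}$ is $\continuous{1}$ on $\Omega_k$ and a local diffeomorphism off the null set $C_{k,\eta}$; consequently $F_{k,\eta}^{-1}(N)$ is null whenever $N$ is null (on $\Omega_k\setminus C_{k,\eta}$ this follows from the inverse function theorem after covering by countably many charts, and the remaining part of the preimage lies inside $C_{k,\eta}$). Pulling back null sets to null sets is preserved under composition, so for every finite batch sequence $(k_1,\dots,k_t)\in[M]^t$, both (i) the set of $x_0$ whose trajectory is defined up to step $t$ yet has $x(t)\in Z$, and (ii) the set of $x_0$ whose trajectory first becomes undefined at some step $s\le t$ (which forces $x(s-1)\in Z$), are null, being pullbacks of $Z$ through a finite composition of the maps $F_{k_i,\eta}$. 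Since there are only countably many finite batch sequences, the union of all these bad initialization sets is null. Hence for almost every $x_0$ and \emph{every} infinite batch sequence, the 1-SAM iterates are well-defined for all $t$ and avoid $Z$; in particular $\nabla L_k(x(t))\neq0$ for all $k$ and all $t$, which a fortiori holds with probability one over the i.i.d.\ random batches.

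The genuine obstacle is the countability of $\gE$: a plain Fubini argument only yields "for almost every $\eta$", and the upgrade to "for all but countably many $\eta$" rests on the finiteness (size $\le D$) of the per-$x$ bad set of $\eta$ together with the counting estimate displayed above — this is precisely the content abstracted into \Cref{thm:general_well_definednesa}, and the example at the end of this section shows that the exclusion of countably many learning rates cannot be dropped. The remaining ingredients — the $\continuous{1}$-regularity of $G_k$ on $\Omega_k$, the null-set pullback property for $\continuous1$ maps, and the union bound over the countably many finite batch sequences — are routine.
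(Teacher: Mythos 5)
Your proposal is correct and follows essentially the same route as the paper: the paper proves the general statement \Cref{thm:general_well_definednesa} via \Cref{lem:almost_invertible} (countably many exceptional $\eta$, obtained by the same per-$x$ counting of roots of $\det(I-\eta\,\partial G_k(x))$ that you integrate over compacts) and \Cref{lem:almost_zero_measure_preimage} (null-set pullback via local invertibility off the critical set), then specializes with $F^n=G^{k_n}$. Your explicit union over the countably many finite batch sequences, with the exceptional $\eta$-set fixed in advance from the $M$ maps alone, is a slightly cleaner packaging of the same argument but not a different proof.
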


Before present our main theorem (\Cref{thm:general_well_definednesa}), we need to introduce some notations first. For a map $F$ mapping from $\R^D\setminus Z\to \R^D$, we define that $F_\eta:\R^D\setminus Z\to\R^D $ as $F_\eta(x) \triangleq x - \eta F(x)$ for any  $\eta\in \R^+$. Given a sequence of functions $\{F^n\}_{n=1}^\infty$, we define $F^n_\eta(x) \triangleq x-\eta F^n(x)$, for any $x\in\R^D$. We further define that $\overline F_\eta^n(x)\triangleq F_\eta^{n}(\overline F^{n-1}_\eta(x))$ for any $n\ge 1$ and that $\overline F_\eta^0(x) =x$. 

\begin{theorem}
\label{thm:general_well_definednesa}
Let $Z$ be a closed subset of $\R^D$ with zero Lebesgue measure and $\mu$ be any probability measure  on $\R^D$ that is absolutely continuous to the Lesbegue measure.  For any sequence of $\continuous{1}$ functions $F^n:\R^D\setminus Z\to \R^D, n\in\mathbb{N^+}$, the following claim holds for all except countably many $\eta\in\R^+$:
\begin{align}
\mu\left( \{x\in \R^D \mid \exists n \in \mathbb{N}, \overline F_\eta^n(x) \in Z \text{ and } \forall 0\le i\le n-1,\overline F^i_\eta(x)\notin Z	\}\right) =0. \notag
\end{align}

In other words, for almost all $\eta$ (except countably many positive numbers), iteration $x(t+1) = x(t) -\eta F(x(t)) = \overline F^t_{\eta}(x(0))$ will not enter $Z$ almost surely, provided that $x(0)$ is sampled from $\mu$.

\end{theorem}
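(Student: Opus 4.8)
The plan is to reduce the statement to a countable union of "bad events" indexed by $n$, and to show that for each fixed $n$, the set of learning rates $\eta$ for which the $n$-th bad event has positive $\mu$-measure is itself countable. First I would fix $n\ge 1$ and consider the map $(\eta, x)\mapsto \overline F^{n}_\eta(x)$, which is $\continuous{1}$ on the open set $\Omega_n \triangleq \{(\eta,x)\in \R^+\times\R^D : \overline F^i_\eta(x)\notin Z \text{ for } 0\le i\le n-1\}$ (this openness uses that $Z$ is closed and the $F^i$ are continuous on $\R^D\setminus Z$). For fixed $\eta$, write $g_\eta(x) = \overline F^n_\eta(x)$ on the open slice $\Omega_n^\eta\triangleq\{x : (\eta,x)\in\Omega_n\}$; the bad set in the theorem statement is $\bigcup_{n\ge 1} (g_\eta)^{-1}(Z)\cap\{x: \overline F^i_\eta(x)\notin Z,\ 0\le i\le n-1, \overline F^n_\eta(x)\in Z\}$, i.e. a countable union, so it suffices to handle each $n$ separately and show the set of $\eta$ with $\mu\bigl((g_\eta)^{-1}(Z)\bigr)>0$ is countable.

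The heart of the argument is the following measure-theoretic claim: if for uncountably many $\eta$ the preimage $(g_\eta)^{-1}(Z)$ has positive $\mu$-measure (hence positive Lebesgue measure, since $\mu\ll\mathrm{Leb}$), then the full preimage $G\triangleq\{(\eta,x)\in\Omega_n : \overline F^n_\eta(x)\in Z\}$ has positive $(D+1)$-dimensional Lebesgue measure, by Fubini. On the other hand, I claim $G$ has Lebesgue measure zero in $\R^{D+1}$, which is the contradiction. To see the latter, observe that the map $H:\Omega_n\to\R^{D+1}$, $H(\eta,x) = (\eta, \overline F^n_\eta(x))$, is $\continuous{1}$ and, for each fixed $\eta$, its $x$-derivative $\partial_x \overline F^n_\eta(x) = \prod_{i=n-1}^{0}(I - \eta\, \partial F^i(\overline F^i_\eta(x)))$ is invertible for all but countably many $\eta$ — indeed for a fixed point it fails to be invertible only when $1/\eta$ is an eigenvalue of one of finitely many Jacobians, but we need this uniformly, so instead I would argue: $H$ maps the set $\{(\eta,x)\in\Omega_n : \overline F^n_\eta(x)\in Z\}$ into $\R^+\times Z$, which has $(D+1)$-dimensional measure zero because $Z$ does; and $H$ restricted to the open set where $\partial_x\overline F^n_\eta$ is invertible is a local diffeomorphism, hence maps measure-zero sets to measure-zero sets and, crucially, the preimage under $H$ of a measure-zero set — intersected with the region where $H$ is a local diffeo — is measure zero (a local diffeomorphism has locally Lipschitz inverse). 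Since the complement, where $\partial_x\overline F^n_\eta(x)$ is singular, is itself contained in a measure-zero set (for each fixed $x$, finitely many $\eta$; apply Fubini the other way), we get $\mathrm{Leb}_{D+1}(G)=0$, completing the contradiction.

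The main obstacle I anticipate is making the "local diffeomorphism" bookkeeping rigorous without circularity: the set where $\partial_x \overline F^n_\eta$ is singular and the set where $\overline F^n_\eta(x)\in Z$ both need to be controlled, and one wants to avoid assuming $\eta$ avoids a bad set before the conclusion is proven. The clean way, which I would adopt, is to prove everything at the level of the $(D+1)$-dimensional set $G\subseteq\Omega_n$: decompose $G = G_{\mathrm{reg}}\cup G_{\mathrm{sing}}$ where $G_{\mathrm{sing}}$ is the (relatively closed) subset where $\partial_x\overline F^n_\eta$ is singular; show $\mathrm{Leb}_{D+1}(G_{\mathrm{sing}})=0$ by Fubini in the $\eta$-direction (fixing $x$, singularity happens only when $1/\eta$ is an eigenvalue of a specific matrix depending on $x$, i.e. finitely many $\eta$), and show $\mathrm{Leb}_{D+1}(G_{\mathrm{reg}})=0$ by covering $G_{\mathrm{reg}}$ with countably many charts on which $H=(\eta,\overline F^n_\eta(\cdot))$ is a $\continuous{1}$ diffeomorphism onto its image, so that $G_{\mathrm{reg}}\cap(\text{chart})$ is the $H$-preimage of a subset of $\R^+\times Z$ and therefore null. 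Then Fubini once more in the $x$-direction gives that $\{\eta : \mu((g_\eta)^{-1}(Z))>0\}$ is Lebesgue-null; since it suffices to exclude a null set for each of countably many $n$ and then additionally argue this exceptional set is in fact countable (a null set that is moreover, by the eigenvalue structure, contained in a countable union of analytic varieties of positive codimension in $\eta$, hence countable), we conclude. If proving countability rather than merely measure-zero turns out delicate, one can instead observe that for each $x$ the bad $\eta$'s form a countable set (reciprocals of eigenvalues), and a careful Fubini-type argument upgrades "null for a.e. $x$" to "the union over a positive-measure set of $x$ is still only countably-generated" — but I expect the measure-zero version already suffices once phrased as "for all except countably many $\eta$" by taking the exceptional set to be $\{\eta : 1/\eta \in \bigcup_x \mathrm{spec}(\cdots)\}$ restricted appropriately; pinning down this last reduction is where I'd spend the most care.
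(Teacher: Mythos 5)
Your high-level plan (reduce to countably many $n$, split off the locus where the Jacobian is singular, and use that a local diffeomorphism pulls back null sets to null sets) is in the right spirit, but as written it has two genuine gaps. The first is in your treatment of $G_{\mathrm{sing}}$. You claim that for fixed $x$ the Jacobian $\partial_x \overline F^n_\eta(x)=\prod_{i}\bigl(I-\eta\,\partial F^{i+1}(\overline F^i_\eta(x))\bigr)$ is singular ``only when $1/\eta$ is an eigenvalue of one of finitely many Jacobians, i.e.\ finitely many $\eta$.'' For $n\ge 2$ this is false: the factors are evaluated at the intermediate iterates $\overline F^i_\eta(x)$, which themselves move with $\eta$, so for fixed $x$ the singular set is the zero set of a continuous but non-polynomial function of $\eta$ and can a priori be uncountable or even of positive measure. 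Your Fubini-in-$\eta$ step therefore does not close. The paper avoids this entirely by pulling $Z$ back \emph{one step at a time}: at each stage of the induction only a single-step map $F^k_\eta(x)=x-\eta F^k(x)$ must be inverted, its Jacobian $I-\eta\,\partial F^k(x)$ has no hidden $\eta$-dependence through iterates, and singularity at a given $x$ really does mean $1/\eta\in\mathrm{spec}(\partial F^k(x))$. The measure-zero property is then propagated backward through each preimage by the inverse-function-theorem/separability argument you also have (the paper's Lemma \ref{lem:almost_zero_measure_preimage}).

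The second gap is the conclusion itself: your Fubini argument, even where it works, only shows that the exceptional set of $\eta$ is Lebesgue-null, whereas the theorem asserts it is \emph{countable}, and a null set need not be countable. Your sketched upgrade via ``analytic varieties of positive codimension in $\eta$'' does not make sense here, since the relevant eigenvalues vary with $x$ over a continuum. The paper's mechanism for countability is a disjointness counting argument (Lemma \ref{lem:almost_invertible}): writing $\lambda_i(x)$ for the real part of the $i$-th eigenvalue of $\partial F(x)$, the sets $\{x:\lambda_i(x)=1/\eta\}$ are pairwise disjoint as $\eta$ ranges over $\R^+$, so for each fixed $n$ at most $n$ values of $\eta$ can give such a set measure exceeding $1/n$, hence at most countably many $\eta$ give it positive measure. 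This is the step your proposal is missing, and it is exactly what turns ``almost every $\eta$'' into ``all but countably many $\eta$.''
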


\Cref{thm:nsam_well_definedness} and \Cref{thm:1sam_well_definedness} follows immediately from \Cref{thm:general_well_definednesa}. 
\begin{proof}[Proof of \Cref{thm:nsam_well_definedness}]
	Let $F(x) = \nabla L(x + \rho \frac{\nabla L(x)}{\norm{\nabla L(x)}_2})$ and $Z = \{x\in\R^D\mid \nabla L(x) = 0\}$. We can easily check $F$ is $\mathcal{C}^1$ on $\R^D\setminus Z$ and by assumption $Z$ is a zero-measure set. Applying \Cref{thm:general_well_definednesa} with $F^n\equiv F$ for all $n\in\mathbb{N}^+$, we get the desired results.
\end{proof}

\begin{proof}[Proof of \Cref{thm:1sam_well_definedness}]
	Let $G^k(x) = \nabla L(x + \rho \frac{\nabla L_k(x)}{\norm{\nabla L_k(x)}_2})$ and $Z = \cup_{k=1}^M\{x\in\R^D\mid \nabla L_k(x) = 0\}$. We can easily check $F_k$ is $\mathcal{C}^1$ on $\R^D\setminus Z$ and by assumption $Z$ is a zero-measure set. Applying \Cref{thm:general_well_definednesa} with $F^n= G^{k_n}$ for all $n\in\mathbb{N}^+$ where $k_n$ is the $n$th data/batch sampled by the algorithm, we get the desired results.
\end{proof}

Now we will turn to the proof of \Cref{thm:general_well_definednesa}, which is based on the following two lemmas.

\begin{lemma}\label{lem:almost_invertible}
	Let $Z$ be a closed subset of $\R^D$ with zero Lebesgue measure and $F:\R^D\setminus Z\to \R^D$ be a continuously differentiable function. Then  except countably many $\eta\in\R^+$, $\{x\in\R^d\setminus Z \mid \det(\partial F_\eta(x)) = 0\}$ is a zero-measure set under Lebesgue measure.
\end{lemma}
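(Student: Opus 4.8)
The plan is to exploit the fact that $\partial F_\eta(x) = I_D - \eta\,\partial F(x)$, so that for each fixed $x \in \R^D\setminus Z$ the map $\eta \mapsto q(x,\eta) := \det(\partial F_\eta(x)) = \det(I_D - \eta\,\partial F(x))$ is a polynomial in $\eta$ of degree at most $D$ whose constant term is $q(x,0) = \det(I_D) = 1 \neq 0$. In particular, for every fixed $x$ this polynomial is not identically zero and hence has at most $D$ real roots in $\eta$, and since $F \in \continuous{1}$ the function $q$ is jointly continuous in $(x,\eta)$, so $N_\eta := \{x \in \R^D\setminus Z : q(x,\eta) = 0\}$ is a relatively closed, hence Borel, subset of $\R^D\setminus Z$ for every $\eta$.

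First I would reduce the problem to a compact region in $x$. Since $Z$ is closed, $\R^D\setminus Z$ is open, so it can be exhausted by an increasing sequence of compact sets of finite Lebesgue measure, for instance $K_i := \{x : \norm{x}_2 \le i \text{ and } \dist(x,Z) \ge 1/i\}$, with $\bigcup_i K_i = \R^D\setminus Z$. Because $N_\eta \cap K_i$ increases to $N_\eta$, we have $\mathrm{Leb}(N_\eta) > 0$ if and only if $\mathrm{Leb}(N_\eta \cap K_i) > 0$ for some $i$. Hence it suffices to show that for each $i$ the set $\{\eta > 0 : \mathrm{Leb}(N_\eta \cap K_i) > 0\}$ is countable, as the countable union over $i$ of these sets will then be the desired countable exceptional set.

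The heart of the argument — and the step I expect to be the main obstacle — is a quantitative double-counting estimate for fixed $i$. For a positive integer $m$, consider $B_{i,m} := \{\eta > 0 : \mathrm{Leb}(N_\eta \cap K_i) > 1/m\}$. If $\eta_1, \dots, \eta_n$ are distinct elements of $B_{i,m}$, then integrating over $K_i$ and interchanging the finite sum with the integral,
\begin{align*}
\frac{n}{m} \;<\; \sum_{j=1}^n \mathrm{Leb}(N_{\eta_j} \cap K_i) \;=\; \int_{K_i} \sum_{j=1}^n \mathbf{1}[\,q(x,\eta_j) = 0\,]\,\mathrm{d}x \;\le\; D\cdot \mathrm{Leb}(K_i),
\end{align*}
where the last inequality holds because, for each fixed $x$, at most $D$ of the distinct numbers $\eta_1,\dots,\eta_n$ can be roots of the degree-$\le D$ polynomial $\eta \mapsto q(x,\eta)$. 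Therefore $n \le D\,m\,\mathrm{Leb}(K_i)$, so $B_{i,m}$ is finite; consequently $\{\eta : \mathrm{Leb}(N_\eta\cap K_i) > 0\} = \bigcup_m B_{i,m}$ is countable, and $\bigcup_i \bigcup_m B_{i,m}$ is a countable set of $\eta$ outside of which $\mathrm{Leb}(N_\eta) = 0$, which is exactly the claim.

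I flag the estimate above as the crux because a naive application of Fubini's theorem only shows that $\{(x,\eta) : q(x,\eta)=0\}$ is Lebesgue-null in $(\R^D\setminus Z)\times\R^+$ (each vertical slice in $\eta$ being finite), and hence that $\mathrm{Leb}(N_\eta)=0$ for \emph{almost every} $\eta$ — which is strictly weaker than the "all but countably many $\eta$" claimed. Upgrading from "a.e." to "co-countable" is precisely what the uniform bound $D$ on the number of roots, combined with the $\sigma$-compact exhaustion and the finiteness of $\mathrm{Leb}(K_i)$, buys us. The remaining ingredients (joint continuity of $q$, measurability of $N_\eta$, the compact exhaustion of the open set $\R^D\setminus Z$) are routine.
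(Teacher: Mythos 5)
Your proof is correct, and the core idea is the same as the paper's: for each fixed $x$ only finitely many $\eta$ can make $\det(\partial F_\eta(x))$ vanish, and a counting argument in a finite measure space then forces the set of bad $\eta$ to be countable. The execution differs in two ways worth noting. First, the paper decomposes by eigenvalue index: writing $\det(I_D-\eta\,\partial F(x))=0$ as $\lambda_i(x)=1/\eta$ for some $i\in[D]$, it observes that for fixed $i$ the level sets $\{x:\lambda_i(x)=1/\eta\}$ over distinct $\eta$ are pairwise disjoint, so at most $n$ of them can have measure exceeding $1/n$; you instead keep $\det(I_D-\eta\,\partial F(x))$ as a degree-$\le D$ polynomial in $\eta$ with constant term $1$ and replace exact disjointness by the "at most $D$-fold overlap" bound in your double-counting integral. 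These are interchangeable. Second, the paper avoids the infinitude of Lebesgue measure by running the count against the probability measure $\mu$ from Theorem~\ref{thm:general_well_definednesa}, whereas you restore finiteness via a compact exhaustion $K_i$ of the open set $\R^D\setminus Z$. Your choice has the advantage of proving the statement exactly as written (zero \emph{Lebesgue} measure): since $\mu$ is merely absolutely continuous with respect to Lebesgue measure, $\mu$-nullity does not imply Lebesgue-nullity, so the paper's proof as literally written establishes the $\mu$-version of the conclusion rather than the Lebesgue one; your exhaustion argument closes that gap. Your remark that plain Fubini only yields "almost every $\eta$" rather than "all but countably many" correctly identifies why the quantitative root count is needed.
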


\begin{lemma}\label{lem:almost_zero_measure_preimage}
	Let $Z$ be a closed subset of $\R^D$ with zero Lebesgue measure and $H:\R^D\setminus Z\to \R^D$ be a continuously differentiable function. If $\{x\in\R^d\setminus Z \mid \det(\partial H(x)) = 0\}$ is a zero-measure set, then for any zero-measure set $Z'$, $H^{-1}(Z')$ is a zero-measure set.
\end{lemma}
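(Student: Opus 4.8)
The plan is to split the domain of $H$ into the \emph{critical set}, where the Jacobian degenerates, and its complement, where $H$ is locally a diffeomorphism, and to treat the two parts separately. Set $C \triangleq \{x \in \R^D \setminus Z \mid \det(\partial H(x)) = 0\}$, which is Lebesgue-null by hypothesis. Since $H^{-1}(Z') = \bigl(H^{-1}(Z') \cap C\bigr) \cup \bigl(H^{-1}(Z') \setminus C\bigr)$ and the first piece is contained in the null set $C$, it suffices to prove that $H^{-1}(Z') \setminus C$ is null.

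For the second piece I would invoke the inverse function theorem. At every point $x$ of the open set $W \triangleq \R^D \setminus (Z \cup C)$ we have $\det(\partial H(x)) \neq 0$, so there is an open ball $B_x \subseteq W$ such that $H|_{B_x}$ is a diffeomorphism onto the open set $H(B_x)$, with $\continuous{1}$ inverse. Because $W$ is an open subset of $\R^D$, it is Lindel\"of, so the cover $\{B_x\}_{x \in W}$ admits a countable subcover $\{B_{x_i}\}_{i \in \mathbb{N}}$. For each $i$, since $H|_{B_{x_i}}$ is a bijection onto $H(B_{x_i})$,
\[
H^{-1}(Z') \cap B_{x_i} = (H|_{B_{x_i}})^{-1}\bigl(Z' \cap H(B_{x_i})\bigr),
\]
and taking the union over $i$ gives $H^{-1}(Z') \setminus C = \bigcup_i \bigl(H^{-1}(Z') \cap B_{x_i}\bigr)$.

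The remaining ingredient is the standard fact that a $\continuous{1}$ map between open subsets of $\R^D$ maps Lebesgue-null sets to Lebesgue-null sets (Luzin's property (N)): such a map is locally Lipschitz, since its derivative is bounded on compact subsets, and a Lipschitz map increases outer measure by at most a bounded factor, so a null set can be covered by small cubes whose images lie in cubes of comparably small total volume. Applying this to the $\continuous{1}$ map $(H|_{B_{x_i}})^{-1}$ and the null set $Z' \cap H(B_{x_i})$ shows each $H^{-1}(Z') \cap B_{x_i}$ is null, hence so is the countable union $H^{-1}(Z') \setminus C$, and therefore $H^{-1}(Z')$. Measurability is automatic here, since a set of Lebesgue outer measure zero is Lebesgue measurable.

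I expect the argument to be essentially routine; the only points that need care are the reduction to a countable subcover (using that $W$ is an open, hence Lindel\"of, subset of $\R^D$) and a clean statement and use of property (N) for $\continuous{1}$ maps. There is no genuine obstacle beyond this bookkeeping, and in particular no need to invoke Sard's theorem or the coarea/area formula.
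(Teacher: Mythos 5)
Your proposal is correct and follows essentially the same route as the paper's proof: decompose $H^{-1}(Z')$ into its intersection with the (null) critical set and the rest, apply the inverse function theorem on the open complement, extract a countable subcover by separability/Lindel\"of, and use that the $\continuous{1}$ local inverses map null sets to null sets. If anything, your write-up is slightly more careful than the paper's (which conflates $Z$ and $Z'$ in one step), so no changes are needed.
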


\begin{proof}[Proof of \Cref{thm:general_well_definednesa}]
It suffices to prove that for every $N\in\mathbb{N}^+$, at most for countably many $\eta$:
\begin{align}\label{eq:well_definedness_inverse}
\mu\left( \{x\in \R^D \mid \overline F_\eta^N(x) \in Z \text{ and } \forall 0\le i\le N-1,\overline F^i_\eta(x)\notin Z	\}\right) =0. 
\end{align}

The desired results is immediately implied by the above claim because the countable union of countable set is still countable, and countable union of zero-measure set is still zero measure.

 To prove $\Cref{eq:well_definedness_inverse}$, we first introduce some notations. For any $\eta>0$, $0\le n\le N-1$, and  $x\in\R^D$, we define $\overline F_{\eta}^{-(n+1)}(x)\triangleq  (F_{\eta}^{N-n})^{-1}(\overline F_{\eta}^{-n}(x))$, where $\overline F^0_\eta(x) = x$. We extend the definition to set in a natural way, namely $\overline F_{\eta}^{-n}(S)\triangleq \cup_{x\in S}\overline F_{\eta}^{-n}(x)$ for any $S\subseteq \R^D$. Under this notation, we have that 
 \begin{align}
  F_{\eta}^{-N}(Z) = \mu\left( \{x\in \R^D \mid \overline F_\eta^N(x) \in Z \text{ and } \forall 0\le i\le N-1,\overline F^i_\eta(x)\notin Z	\}\right)  	\notag
 \end{align}
We will prove by induction. We claim that for each $0\le n\le N$ except for countably many $\eta\in\R^+$, $ \overline F_{\eta}^{-n}(Z)$ has zero Lebesgue measure. The base case $n=0$ is by trivial as $Z$ is assumed to be zero-measure. Suppose this holds for  $n$. By \Cref{lem:almost_invertible}, except countably many $\eta\in\R^+$, $\{x\in\R^d\setminus  \overline F_{\eta}^{-n}(Z) \mid \det(\partial F^{N-n-1}_\eta(x)) = 0\}$ is a zero-measure set. Next by \Cref{lem:almost_zero_measure_preimage} if for some $\eta\in\R^+$, $\{x\in\R^d\setminus  \overline F_{\eta}^{-n}(Z) \mid \det(\partial F^{N-n-1}_\eta(x)) = 0\}$ is a zero-measure set, then $\overline F^{-n-1}_\eta(Z) = (F_\eta^{N-n-1})^{-1}( \overline F_{\eta}^{-n}(Z))$ is a zero-measure set. Then by induction, we know that except countably many $\eta\in\R^+$, for all integer $0\le n\le N$, $\overline F^{-n}_\eta(Z)$ is zero-measure.  Since $\mu$ is absolutely continuous to Lebesgue measure, $\mu( F_{\eta}^{-N}(Z))=0$.
\end{proof}

We end this section with the proofs of \Cref{lem:almost_invertible,lem:almost_zero_measure_preimage}.

\begin{proof}[Proof of \Cref{lem:almost_invertible}]
	We use $\lambda_i(x)$ to denote that the real part of the $i$th eigenvalue of the matrix $\partial F(x)$ in the descending order. Since $\partial F(x)$ is continuous in $x$, $\lambda_i(x)$ is continuous in $x$ as well, for any $i\in[D]$, and thus $\{x\in \R^D\setminus Z\mid \lambda_i(x)=1/\eta\}$ is a measurable set. Note that for a fixed $i\in[D]$, for each positive integer $n$, let $I_n$ be the set of $\eta$ where $\mu(\{x\in \R^D\setminus Z\mid \lambda_i(x)=1/\eta\})>1/n$, then $|I_n|\le n$, because 
 \begin{align}
     \frac{|I_n|}{n} \le \sum_{\eta\in I_n}\mu((\{x\in \R^D\setminus Z\mid \lambda_i(x)=1/\eta\} )\le \mu((\{x\in \R^D\setminus Z\mid 1/\lambda_i(x) \in I_n\} )\le 1. \notag
 \end{align}

 Therefore, there are at most countably many $\eta\in\R^+$, such that $\mu(\{x\in \R^D\setminus Z\mid \lambda_i(x)=1/\eta\})>0$. Further note that $\det(\partial F_\eta(x)) = 0 \iff \exists i\in[D], \lambda_i(x)= 1/\eta$, we know that there are at most countably many $\eta\in\R^+$, such that $\mu(\{x\in \R^D\setminus Z\mid \det(\partial F_\eta(x))=0\})=0$. This completes the proof.
\end{proof}

\begin{proof}[Proof of \Cref{lem:almost_zero_measure_preimage}]
  Denote $\{x\in \R^D\setminus Z\mid \det(\partial H(x))= 0\}$ by $Z''$, since $\det(\partial H(x))$ is continuous in $x$ as $F$ is $\mathcal{C}^1$, $Z''$ is relatively closed in $\R^D\setminus Z$. Since $Z'$ is a closed set, $\R^D\setminus (Z'\cup Z'')$ is open.  Thus for all $x\in\R^D\setminus (Z'\cup Z'')$ with $\det(\partial H(x))\neq 0$, there exists a open neighborhood of $x$, $U$, where for all $x'\in U$, $\det(\partial H(x'))\neq 0$, since  thus $\det(\partial H(x))$ is continuous. This further implies $H$ is invertible on $U$ and its inverse $(H\vert_U)^{-1}$ is differentiable on $F(U)$.  Therefore, $(H\vert_U)^{-1}$ maps any zero-measure set to a zero-measure set. In particular, $(H\vert_U)^{-1}(Z'\cap H(U))$ is zero measure, so is $(H)^{-1}(Z')\cap U \subset (H\vert_U)^{-1}(Z'\cap H(U))$. Now  for every $x\in \R^D\setminus Z$ we take an open neighborhood $U_x \subseteq \R^D\setminus (Z'\cup Z'')$. Since $\R^D$ is a separable metric space, the open cover of $\R^D$, $\{U_x\}_{x\in\R^D\setminus (Z'\cup Z'')}$ has a countable subcover, $\{U_x\}_{x\in I}$, where $I$ is a countable set of $\R^D\setminus (Z'\cup Z'')$. Therefore we have that $H^{-1}(Z') \setminus (Z'\cup Z'') = H^{-1}(Z') \cap (\R^D\setminus (Z'\cup Z''))= \cup_{x\in I} H^{-1}(Z')\cap U_{x} $ is a zero-measure set. Thus $ H^{-1}(Z')$ is also zero-measure since $Z', Z''$ are both zero-measure. This completes the proof.
\end{proof}

We end this section with an example where SAM is undefined with constant probability.

\begin{theorem}\label{thm:sam_zero_gradient}
	For any $\eta,\rho>0$, there is a $\continuous{2}$ loss function $L:\R\to\R$ satisfying that (1) $L$ has a unique stationary point  and (2) the set of initialization that makes SAM with learning rate $\eta$ and perturbation radius $\rho$ to reach the unique stationary point has positive Lebesgue measure.
\end{theorem}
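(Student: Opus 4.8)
The plan is to construct $L$ explicitly on $\R$ so that an entire open interval of initializations is carried by a single SAM step exactly onto the unique stationary point (at the origin), where the normalized gradient --- and hence the next SAM step --- is undefined. Fix any $0<a<b$, let $\phi:\R\to[0,1]$ be a $\continuous{\infty}$ bump function with $\phi\equiv 1$ on $[a+\rho,\,b+\rho]$ and $\operatorname{supp}\phi\subseteq(\rho,\,b+\rho+1)$, and set
\[
h(x)\;=\;\bigl(1-\phi(x)\bigr)\,x\;+\;\phi(x)\,\frac{x-\rho}{\eta},\qquad L(x)\;=\;\int_0^x h(t)\,\mathrm{d}t .
\]
First I would check that $L\in\continuous{\infty}\subseteq\continuous{2}$ (it is an antiderivative of the $\continuous{\infty}$ function $h$) and that $L$ has a unique stationary point at $0$: for $x\le\rho$ we have $\phi(x)=0$, so $h(x)=x$, which is strictly negative on $(-\infty,0)$ and vanishes only at $0$; for $x>\rho$ the value $h(x)$ is a convex combination of the two positive numbers $x$ and $(x-\rho)/\eta$, hence $h(x)>0$. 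Thus $L'=h$ vanishes exactly at $x=0$.

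Next I would compute one SAM step from an arbitrary $x_0\in(a,b)$. Since $a>0$ we have $x_0>0$, so $L'(x_0)=h(x_0)>0$; therefore $\nabla L(x_0)/\norm{\nabla L(x_0)}_2=\sign(L'(x_0))=1$ and the perturbed point is $x_0+\rho\in(a+\rho,b+\rho)$, where $\phi\equiv 1$, so that
\[
L'(x_0+\rho)\;=\;h(x_0+\rho)\;=\;\frac{(x_0+\rho)-\rho}{\eta}\;=\;\frac{x_0}{\eta}.
\]
The SAM update of \eqref{eq:sam} then gives $x(1)=x_0-\eta\,L'\!\bigl(x_0+\rho\tfrac{\nabla L(x_0)}{\norm{\nabla L(x_0)}_2}\bigr)=x_0-\eta\cdot\frac{x_0}{\eta}=0$. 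Hence from every initialization in $(a,b)$ --- a set of Lebesgue measure $b-a>0$ --- SAM takes one well-defined step (well-defined precisely because $L'(x_0)\neq 0$) and lands exactly on the unique stationary point $0$, at which point the iteration is no longer defined. This gives both conclusions of the theorem.

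The construction is elementary, so the only point requiring genuine care is the gluing in the definition of $h$: the affine piece $(x-\rho)/\eta$ that we want active on the window $[a+\rho,b+\rho]$ has its own root at $x=\rho$, so the cutoff $\phi$ must be supported strictly to the right of $\rho$ in order not to create a spurious second stationary point, while still being identically $1$ on all of $[a+\rho,b+\rho]$; both requirements are simultaneously feasible precisely because $a>0$ forces $a+\rho>\rho$, leaving room for the transition inside $(\rho,a+\rho)$. One could alternatively take $h$ piecewise linear and round the corners by mollification, but the partition-of-unity form above makes the $\continuous{\infty}$ verification immediate and keeps all the identities used in the one-step computation exact.
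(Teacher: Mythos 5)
Your proposal is correct and uses essentially the same mechanism as the paper's proof: engineer the loss so that on a window of positive measure the gradient at the perturbed point $x+\rho$ equals exactly $x/\eta$, making a single SAM step collapse the whole interval onto the unique stationary point $0$, where the update becomes undefined. The only differences are cosmetic — the paper glues quadratics and a quartic into an even $\continuous{2}$ function for $\eta=\rho=1$ and then rescales, whereas your bump-function gluing handles general $\eta,\rho$ directly and yields $\continuous{\infty}$ regularity for free.
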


\begin{proof}[Proof of \Cref{thm:sam_zero_gradient}]
	We first consider the case with $\rho=\eta=1$ with
	\begin{align}
	L(x)=\begin{cases}
	x^2/2 +x +1/2 , &\text{for } x\in (-\infty,-2);\\
	x^4/64 + x^2/8, &\text{for } x\in[-2,2];\\
	x^2/2 -x + 1/2, &\text{for } x\in (2,\infty).
\end{cases}
	\end{align}

We first check $L$ is indeed $\continuous{{1}}$: $L(2)=L(-2)=1/2$, $L'(2) = -L'(-2) = 1$ and $L''(2)=L''(-2)=1$. Now we claim that for all $|x(0)|>2$, $x(1)=0$, which is a stationary point.
Note that $L$ is even and monotone increasing on $[0,\infty)$, we have $\nabla L(x)/|\nabla L(x)|=\sign(x)$. Thus for $|x(t)|>1$, it holds that $|x(t) + \sign(x(t)|>2$ and therefore
\begin{align}
x(t+1) 
=& x(t) - \eta  L'(x(t) + \rho \nabla L(x)/|\nabla L(x)|)\notag\\
=& x(t) -  	L'(x(t) + \sign(x(t)))\notag \\
= & x(t) - \left(x(t)+ \sign(x(t)) - \sign(x(t)+\sign(x(t))) \right)\notag\\
=  & x(t) -x(t) =0.
\end{align}
 
 Now we turn to the case with arbitrary positive $\eta,\rho$. It suffices to consider 
 $L_{\eta,\rho}(x)\triangleq \frac{\rho}{\eta}L(\frac{x}{\rho})$. We can use the calculation for $\rho=\eta =1$ to verify for any $|x|>2\rho$, 
 $$L_{\eta,\rho}'(x+\rho\sign(L_{\eta,\rho}'(x))) = L_{\eta,\rho}'(x+\rho \sign(x))  = \frac{1}{\eta} L(x/\rho +\sign(x)) = \frac{x}{\eta},$$
 namely $x-L_{\eta,\rho}'(x+\rho\sign(L_{\eta,\rho}'(x))) = 0$. This completes the proof.
\end{proof}

A common (but wrong) intuition here is that, for a continuously differentiable update rule, as long as the points where the update rule is ill-defined (here it means the points with zero gradient) has zero measure, then almost surely for all initialization, gradient-based optimization algorithms like SAM will not reach exactly at any stationary point. However the above example negate this intuition. The issue here is that though a differentiable map (like SAM $x\mapsto x-\eta\nabla L(x+\rho \frac{\nabla L(x)}{\norm{\nabla L(x)}}_2)$) always maps the zero-measure set to zero-measure set, the preimage of zero-measure set is not necessarily zero-measure, as the map $x\mapsto x-\eta\nabla L(x + \rho \frac{\nabla L(x)}{\norm{\nabla L(x)}}_2)$ is not necessarily invertible. The update rule of SAM is not invertible at $0$ is    exactly the reason of why  preimage of $0$ has a positive measure.

\section{Proof Setups}\label{appsec:proof_setup}

In this section we provide details of our proof setups, including notations and assumptions/settings.

We first introduce some additional notations that will be used in the proofs. For  any subset $S\in\R^D$, we define $\dist(x,S) \triangleq \inf_{y \in S} \| x - y \|_2$. For any $d>0$ and any subset $S\in\R^D$, we define $S^{d} \triangleq \{x \in \R^D\mid \dist(x,S) \le d \}$. Our convention is to use $K$ to denote a compact set and $U$ to denote an open set.

Below we restate our main assumption in the full-batch case and related notations in \Cref{notation}. Throughout the analysis,  we fix our initialization as $\xinit$, our loss function as $L: \R^D \to \R$.

\assumsmooth*

\paragraph{Notations for Full-Batch Setting:} 
Given any  point $x \in \Gamma$, define $\projn$ as the projection operator onto the manifold of the normal space of $\Gamma$ at $x$ and $\projt = I_D - \projn$. Given the loss function $L$, its gradient flow is denoted  by mapping $\phi: \R^D \times [0, \infty) \to \R^D$. Here, $\phi(x, \tau)$ denotes the iterate at time $\tau$ of a gradient flow starting at $x$ and is defined as the unique solution of  $\phi(x,\tau) = x - \int_0^{\tau} \nabla L(\phi(x,t))dt$, $\forall x\in\R^D$.
We further define the limiting map of $\phi(x,\cdot)$ as $\Phi(x) = \lim_{\tau \to \infty} \phi(x,\tau)$, that is, $\Phi(x)$ denotes the convergent point of the gradient flow starting from $x$.
For convenience, we define $\lambda_i(x), v_i(x)$ as $\lambda_i(\nabla^2 L(\Phi(x))),v_i(\nabla^2 L(\Phi(x)))$ whenever the latter is well defined. When $x(t)$ and $\Gamma$ is clear from context, we also use $\lambda_i(t) \coloneqq \lambda_i(x(t)), v_i(t) \coloneqq v_i(x(t)), \projt[t] \coloneqq \projt[\Phi(x(t))], \projn[t] \coloneqq \projn[\Phi(x(t))] $.

\attractionset*

Below we restate the setting for stochastic loss of batch size one in \Cref{sec:1sam}. 

\stochasticsetting*

\derivemanifoldassumption*

In our analysis, we prove our main theorems in the stochastic setting under a more general condition than \Cref{setting:1sam}, which is \Cref{cond:rank1} (on top of \Cref{assump:smoothness}). The only usage of \Cref{setting:1sam} in the proof is \Cref{thm:derive_manifold_assumption,thm:rank_1_manifold}. 
\begin{restatable}{condition}{generalcondition}
\label{cond:rank1}
Total loss $L=\frac{1}{M}\sum_{k=1}^M L_k$. For each $k\in[M]$, $L_k$ is $\continuous{4}$, and there exists a $(D-1)$-dimensional $\continuous{2}$-submanifold of $\R^D$, $\Gamma_k$, where for all $x \in \Gamma_k$, $x$ is a global minimizer of $L_k$, $L_k(x) = 0$ and $\text{rank}(\nabla^2 L_k(x)) = 1$. Moreover,  $\Gamma = \cap_{k=1}^M  \Gamma_k$ for $\Gamma$ defined in~\Cref{assump:smoothness}. 
\end{restatable}

\begin{restatable}{theorem}{derivestochasticassumption}\label{thm:rank_1_manifold}
 \Cref{setting:1sam} implies \Cref{cond:rank1}.
\end{restatable}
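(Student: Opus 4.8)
The plan is to verify directly the three requirements of \Cref{assump:smoothness} for the objects produced by \Cref{setting:1sam}, with $M$ playing the dual role of the number of data points and of the codimension/Hessian rank. Smoothness is immediate: each $L_k=\ell(f_k(\cdot),y_k)$ is a composition of $\continuous{4}$ maps, hence $\continuous{4}$, and so is $L=\frac1M\sum_{k=1}^M L_k$. For the range of $M$: there is at least one data point, so $M\ge1$, and since $\Gamma\neq\emptyset$ there is a point at which $\{\nabla f_k(x)\}_{k=1}^M$ is linearly independent in $\R^D$, forcing $M\le D$.

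First I would establish that $\Gamma$ is a $\continuous{2}$ (in fact $\continuous{4}$) submanifold of dimension $D-M$. Consider the $\continuous{4}$ map $F:\R^D\to\R^M$, $F(x)=(f_1(x)-y_1,\dots,f_M(x)-y_M)$, whose Jacobian is $\partial F(x)=[\nabla f_1(x),\dots,\nabla f_M(x)]^\top$. Fix $x_0\in\Gamma$; then $\partial F(x_0)$ has rank $M$, so some $M\times M$ minor of it is invertible, and by continuity this minor stays invertible on an open neighborhood $U_0\ni x_0$. Hence $\partial F$ has (maximal) rank $M$ throughout $U_0$, which shows $\Gamma\cap U_0=F^{-1}(0)\cap U_0$: the inclusion $\subseteq$ is trivial, and on $U_0$ the full-rank clause in the definition of $\Gamma$ is automatically satisfied. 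Applying the implicit function theorem to $F$ at $x_0$ — solving $F=0$ for the $M$ coordinates corresponding to the invertible minor as a $\continuous{4}$ function of the other $n:=D-M$ coordinates and subtracting off that graph — produces precisely the slice chart $\psi$ required by \Cref{defi:manifold}, with $\psi(\Gamma\cap U)=(\R^{n}\times\{0\})\cap\psi(U)$ on a (possibly smaller) neighborhood $U$. As $x_0\in\Gamma$ was arbitrary, $\Gamma$ is a $(D-M)$-dimensional $\continuous{2}$ submanifold.

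Next I would show that every $x\in\Gamma$ is a minimizer of $L$ with $\rank(\nabla^2 L(x))=M$. Since $\argmin_{y'}\ell(y',y_k)=y_k$ and $f_k(x)=y_k$, we get $L_k(x)=\ell(y_k,y_k)=\min_z L_k(z)$ for every $k$, so $x$ minimizes each $L_k$ simultaneously, hence minimizes $L$; in particular $x$ is a local minimizer. Writing $\ell'$ and $\ell''$ for the first and second derivatives of $\ell$ in its first argument, this also gives $\ell'(y_k,y_k)=0$ (as $y_k$ is a critical point of $\ell(\cdot,y_k)$), and differentiating $L_k=\ell(f_k,y_k)$ twice at $x\in\Gamma$ yields
\[
\nabla^2 L_k(x)=\ell''(f_k(x),y_k)\,\nabla f_k(x)\,\nabla f_k(x)^\top+\ell'(f_k(x),y_k)\,\nabla^2 f_k(x)=\ell''(y_k,y_k)\,\nabla f_k(x)\,\nabla f_k(x)^\top ,
\]
a rank-one positive semidefinite matrix, using $\ell''(y_k,y_k)>0$ and $\nabla f_k(x)\neq0$. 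Summing, $\nabla^2 L(x)=\frac1M\sum_{k=1}^M\ell''(y_k,y_k)\,\nabla f_k(x)\,\nabla f_k(x)^\top$, whose kernel equals $\big(\mathrm{span}\{\nabla f_k(x)\}_{k=1}^M\big)^{\perp}$ because all coefficients are strictly positive; since $\{\nabla f_k(x)\}_{k=1}^M$ is linearly independent this span has dimension $M$, so $\rank(\nabla^2 L(x))=M$, matching the codimension $M$ of $\Gamma$.

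Every step here is elementary; the only place that requires a little care is the submanifold claim, where one must observe that the \emph{open} full-rank condition built into the definition of $\Gamma$ is locally redundant, so that near each of its points $\Gamma$ coincides with a regular level set of $F$ and the implicit function theorem applies verbatim.
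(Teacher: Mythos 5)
You have proved the wrong statement. Theorem \ref{thm:rank_1_manifold} asserts that \Cref{setting:1sam} implies \Cref{cond:rank1}, which is a claim about the \emph{per-sample} losses: for each $k$ there must exist a $(D-1)$-dimensional $\continuous{2}$ submanifold $\Gamma_k$ on which $L_k$ is globally minimized with $L_k=0$ and $\rank(\nabla^2 L_k)=1$, together with the identity $\Gamma=\cap_{k=1}^M\Gamma_k$. What you verify instead are the three requirements of \Cref{assump:smoothness} --- that the joint zero set $\Gamma$ is a $(D-M)$-dimensional submanifold on which the \emph{total} loss $L$ has rank-$M$ Hessian. That is the content of the companion result, \Cref{thm:derive_manifold_assumption}, and your argument (implicit function theorem applied to the vector map $F=(f_1,\dots,f_M)$ at a point of maximal Jacobian rank, followed by the sum-of-positive-rank-one-terms computation for $\nabla^2 L$) essentially reproduces the paper's proof of that other theorem. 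Nothing in your write-up produces the individual manifolds $\Gamma_k$, their dimension $D-1$, the rank-one property of $\nabla^2 L_k$ away from $\Gamma$ itself, or the intersection identity.

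The ingredients you need are nearby, however. The identity $\nabla^2 L_k(x)=\ell''(y_k,y_k)\,\nabla f_k(x)\nabla f_k(x)^\top$ that you derive at points where $\ell'(f_k(x),y_k)=0$ is exactly the rank-one statement required on $\Gamma_k$ (this is \Cref{lem:rank1hessian}). To close the gap, apply the implicit function theorem to each \emph{scalar} map $f_k$ separately: linear independence of $\{\nabla f_j\}_j$ on $\Gamma$ gives $\nabla f_k\neq0$ on $\Gamma$, hence on an open neighborhood $V_k\supset\Gamma$ (which one may shrink so that the full collection of gradients stays independent there); then $\Gamma_k:=\{x\in V_k: f_k(x)=y_k\}$ is a $(D-1)$-dimensional $\continuous{4}$ submanifold containing $\Gamma$, every point of which globally minimizes $L_k$ (since $\argmin_{y'}\ell(y',y_k)=y_k$) with the rank-one Hessian above. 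Finally, $\Gamma\subseteq\cap_k\Gamma_k$ is immediate, and the reverse inclusion holds because any $x\in\cap_k\Gamma_k$ lies in the common neighborhood where the gradients are linearly independent and satisfies $f_k(x)=y_k$ for all $k$, which is the definition of $\Gamma$.
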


\paragraph{Notations for Stochastic Setting:}
Since $L_k$ is rank-$1$ on $\Gamma_k$ for each $k\in[M]$, we can write it as $L_k(x)= \Lambda_k(x)w_k(x)w^\top_k(x)$ for any $x\in\Gamma$, where $w_k$ is a continuous function on $\Gamma$ with pointwise unit norm.
Given the loss function $L_k$, its gradient flow is denoted  by mapping $\phi_k: \R^D \times [0, \infty) \to \R^D$. Here, $\phi_k(x, \tau)$ denotes the iterate at time $\tau$ of a gradient flow starting at $x$ and is defined as the unique solution of  $\phi_k(x,\tau) = x - \int_0^{\tau} \nabla L_k(\phi_k(x,t))dt$, $\forall x\in\R^D$.
We further define the limiting map $\Phi_k$ as $\Phi_k(x) = \lim_{\tau \to \infty} \phi_k(x,\tau)$, that is, $\Phi_k(x)$ denotes the convergent point of the gradient flow starting from $x$.
Similar to \Cref{defi:attraction_set}, we define $U_k = \{x \in \R^D | \Phi(x) \text{ exists and } \Phi_k(x) \in  \Gamma_k \}$ be the attraction set of $\Gamma_i$. We have that each $U_k$ is open and $\Phi_k$ is $\overline{\mathcal{C}}^2$ on $U_k$ by Lemma B.15 in \cite{arora2022understanding}.

\begin{definition}
A function $L$ is $\mu$-PL in a set $U$ iff $\forall x \in U$,
$\|\nabla L(x) \|_2^2 \ge 2\mu (L(x) - \inf_{x \in U} L(x))$.
\end{definition}

\begin{definition}
The spectral 2-norm of a $k$-order tensor $X_{i_1,...,i_k} \in R^{d_1 \times ...\times d_k}$ is defined as 
\begin{align*}
    \|X \|_2 = \max_{x_i \in R^{d_i}, \|x_i\|_2 = 1} X[x_1,...,x_k].
\end{align*}
\end{definition}

\begin{lemma}[\citet{arora2022understanding} Lemma B.2]
\label{lem:mu-pl}
Given any compact set $K\subseteq \Gamma$, there exist  $r(K), \mu(K),\Delta(K) \in \R^+$ such that
\begin{itemize}
    \item [1.] $K^{r(K)} \cap \Gamma$ is compact.
    \item [2.] $K^{r(K)} \subset U \cap  (\cap_{k \in [M] } U_k)$.
    \item [3.] $L$ is $\mu(K)$-PL on $K^{r(K)}$.
    \item [4.] $\inf_{x \in K^{r(K)}} (\lambda_1(\nabla^2 L(x)) - \lambda_2(\nabla^2 L(x))) \ge \Delta(K) > 0$.
    \item [5.] $\inf_{x \in K^{r(K)}} \lambda_M(\nabla^2 L(x)) \ge \mu(K) > 0$.
    \item [6.] $\inf_{x \in K^{r(K)}} \lambda_1(\nabla^2 L_k(x)) \ge \mu(K) > 0$.
\end{itemize}
\end{lemma}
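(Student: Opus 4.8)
The plan is to produce a single radius $r(K)$ together with constants $\mu(K),\Delta(K)$ making all six assertions hold, taking $r(K)$ to be the minimum of finitely many radii (one per item, with item~6 contributing one radius per $k\in[M]$) and $\mu(K),\Delta(K)$ the associated bounds. The topological items~1--2 come first: since $\Phi$ fixes $\Gamma$ we have $K\subseteq\Gamma\subseteq U$, and since $\Phi_k$ fixes $\Gamma_k$ and $\Gamma\subseteq\Gamma_k$ we have $K\subseteq U_k$ for every $k$, so $K$ is a compact subset of the open set $V\triangleq U\cap\bigcap_{k\in[M]}U_k$; the continuous map $x\mapsto\dist(x,V^{c})$ is strictly positive on $K$, hence attains a positive minimum $2r_0$ there, so $K^{r_0}\subseteq V$, which is item~2 for any $r(K)\le r_0$. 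For item~1, $K^{r}$ is closed (continuity of $\dist(\cdot,K)$) and bounded (as $K$ is), hence compact; and for $r\le r_0$, if $x_n\in K^{r}\cap\Gamma$ with $x_n\to x$ then $x\in K^{r}\subseteq U$, so $\Phi(x)$ is defined and in $\Gamma$, while continuity of $\Phi$ and $\Phi(x_n)=x_n$ give $\Phi(x)=x\in\Gamma$; thus $K^{r}\cap\Gamma$ is closed in the compact $K^{r}$ and therefore compact.

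For items~4--6 I would use that sorted eigenvalues of a symmetric matrix are Lipschitz continuous in the matrix and that $x\mapsto\nabla^2 L(x)$ and $x\mapsto\nabla^2 L_k(x)$ are continuous, so the maps $x\mapsto\lambda_1(\nabla^2 L(x))-\lambda_2(\nabla^2 L(x))$, $x\mapsto\lambda_M(\nabla^2 L(x))$ and $x\mapsto\lambda_1(\nabla^2 L_k(x))$ are continuous on $K^{r_0}$. On $K\subseteq\Gamma$ the first is positive by \Cref{assum_eigengap}; the second is positive because $\nabla^2 L(x)$ is positive semidefinite of rank exactly $M$ by \Cref{assump:smoothness}, so $\lambda_M$ is its smallest nonzero eigenvalue; the third is positive because $\nabla^2 L_k(x)$ is positive semidefinite of rank $1$ on $\Gamma_k\supseteq K$ by \Cref{cond:rank1}. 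Being continuous and positive on the compact set $K$, each exceeds half of its minimum over $K$ on a tube $K^{r}$ for some $r>0$; shrinking $r(K)$ past these radii (intersecting over $k\in[M]$ for item~6) yields items~4--6, with $\Delta(K)$ the bound from item~4 and a provisional $\mu(K)$ the minimum of the bounds from items~5 and~6.

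Item~3, the PL inequality, is the substantive step; I would derive it from the Morse--Bott geometry of $L$ near $\Gamma$. Fix $x\in K^{r}$ with $r$ small and set $\bar x\triangleq\mathrm{proj}_\Gamma(x)$ (well defined near $\Gamma$), so $x-\bar x\perp T_{\bar x}\Gamma$. Taylor-expanding $\Phi$ at $\bar x$ and using that $\partial\Phi(\bar x)$ is the orthogonal projection onto $T_{\bar x}\Gamma$ (\Cref{lem:property_of_Phi_on_manifold}) makes the first-order term vanish, giving $\Phi(x)=\bar x+O(\dist(x,\Gamma)^2)$; consequently $\|x-\Phi(x)\|$, $\dist(x,\Gamma)$, and the norm of the component of $x-\Phi(x)$ normal to $\Gamma$ at $\Phi(x)$ all agree up to $O(\dist(x,\Gamma)^2)$ and are mutually comparable for $r$ small. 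By the maximal-rank assumption and \Cref{lem:property_of_Phi_on_manifold}, $\ker\nabla^2 L(\Phi(x))=T_{\Phi(x)}\Gamma$ and the column space of $\nabla^2 L(\Phi(x))$ is the normal space $N_{\Phi(x)}\Gamma$, on which $\nabla^2 L(\Phi(x))$ has least singular value at least $\mu_5$ (item~5). Expanding $\nabla L$ at $\Phi(x)$, where $\nabla L(\Phi(x))=0$, then gives $\|\nabla L(x)\|\ge\tfrac12\mu_5\dist(x,\Gamma)-O(\dist(x,\Gamma)^2)\ge\tfrac14\mu_5\dist(x,\Gamma)$ for $r$ small, while $L(x)-L(\Phi(x))=\tfrac12(x-\Phi(x))^\top\nabla^2 L(\Phi(x))(x-\Phi(x))+O(\dist(x,\Gamma)^3)\le C_1\dist(x,\Gamma)^2$. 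Hence $\|\nabla L(x)\|^2\ge c\,(L(x)-L(\Phi(x)))$ for a constant $c>0$ depending only on $K$; and since (in \Cref{setting:1sam}/\Cref{cond:rank1}) $L\ge 0$ with $L=0$ on $\Gamma$, we have $\inf_{K^{r}}L=0=L(\Phi(x))$, so this is exactly the PL inequality with $\mu_3=c/2$. Finally I would set $r(K)$ to the minimum of all radii produced and $\mu(K)=\min\{\mu_3,\mu_5,\min_{k\in[M]}\mu_{6,k}\}$.

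The hard part is item~3, and within it the estimate $\Phi(x)-\mathrm{proj}_\Gamma(x)=O(\dist(x,\Gamma)^2)$, which is what lets the quadratic growth of $L$ in the normal directions (from the maximal Hessian rank and the $\lambda_M$ lower bound of item~5) be converted into a matching lower bound on $\|\nabla L(x)\|$. One also needs that $L$ is constant on $\Gamma$ so that the PL inequality can be asserted against the true infimum $\inf_{K^{r(K)}}L$ rather than merely against $L\circ\Phi$; both facts hold here under \Cref{setting:1sam}/\Cref{cond:rank1}.
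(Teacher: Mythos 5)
Your argument is sound, but note that the paper does not prove this lemma at all: it imports it verbatim from \citet{arora2022understanding} (Lemma B.2), remarking only that the cited proof uses nothing about $K$ beyond compactness. So there is no in-paper proof to compare against; your proposal is a self-contained reconstruction, and it follows what is essentially the standard Morse--Bott/\L{}ojasiewicz route that the cited proof also takes: topological items by ``compact inside open'' distance arguments, spectral items by continuity of sorted eigenvalues plus positivity on $K$, and the PL inequality by second-order Taylor expansion around the manifold combined with the uniform lower bound $\lambda_M \ge \mu$ on the normal space (which is exactly the column space of the Hessian by the maximal-rank assumption and \Cref{lem:property_of_Phi_on_manifold}). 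Two points deserve an explicit word in a careful write-up. First, the well-definedness and $\continuous{1}$ regularity of the nearest-point projection $\mathrm{proj}_\Gamma$ on a tube around the compact piece $K^{r_0}\cap\Gamma$ is a tubular-neighborhood fact for $\continuous{2}$ submanifolds and should be invoked, since your key estimate $\Phi(x)-\mathrm{proj}_\Gamma(x)=O(\dist(x,\Gamma)^2)$ and the comparability $\|x-\Phi(x)\|\asymp\dist(x,\Gamma)$ both rest on it; alternatively one can phrase the whole PL step in terms of $\Phi$ alone after first establishing $\|x-\Phi(x)\|=O(\dist(x,\Gamma))$. Second, you correctly flag that the PL inequality is asserted against $\inf_{K^{r}}L$, which requires $L$ to be constant on $K^{r}\cap\Gamma$ and equal to that infimum; this is exactly why the paper normalizes $L\equiv 0$ on $\Gamma$ (WLOG in \Cref{app:nsam}, and by construction in \Cref{setting:1sam}), and your Taylor lower bound $L(x)-L(\Phi(x))\ge 0$ on the tube closes the remaining direction. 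With those two remarks made explicit, the proof is complete.
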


Given compact set $K\subset \Gamma$, we further define 
\begin{align*}
    &\lspectraltwo(K) = \sup_{x \in K^{r(K)}} \| \nabla^2 L(x)\|_2,\ 
    \lspectralthree(K) =  \sup_{x \in K^{r(K)}} \| \nabla^3 L(x)\|_2,\   
    \lspectralfour(K) =  \sup_{x \in K^{r(K)}} \| \nabla^4 L(x)\|_2,\ \\
    &\phispectraltwo(K) =  \sup_{x \in K^{r(K)}} \| \nabla^2 \Phi(x)\|_2,\ 
    \phispectralthree(K) =  \sup_{x,y \in K^{r(K)}} \frac{\| \nabla^2 \Phi(x) - \nabla^2 \Phi(y)\|_2}{\|x-y\|_2}.
\end{align*} 
Similarly, we use notations like $\lspectraltwo_k(K),\lspectralthree_k(K),\lspectralfour_k(K),\phispectraltwo_k(K),\phispectralthree_k(K)$ to denote the counterpart of the above quantities defined for stochastic loss $L_k$ and its limiting map $\Phi_k$ for $k\in[M]$.

\begin{lemma}[\citet{arora2022understanding}, Lemma B.5 and B.7]
\label{lem:smallerzone}
Given any compact subset $K\subset \Gamma$, let $r(K)$ be defined in \Cref{lem:mu-pl}, there exist $0< h(K) < r(K)$ such that
\begin{itemize}
    \item [1.] $\sup \limits_{x \in K^{h(K)}} L(x) - \inf\limits_{x \in K^{h(K)}} L(x)\le \frac{\mu(K) \rho^2(K)}{8}$.
    \item [2.] $\forall x \in K^{h(K)}, \Phi(x) \in K^{r(K)/2}$.
    \item [3.]$\forall x \in K^{h(K)}, \|x - \Phi(x)\|_2\le \frac{8 \mu(K)^2}{\lspectraltwo(K)\lspectralthree(K)}$.
    \item [4.] The whole segment $\overline{x\Phi(x)}$ lies in $K^{r(K)}$, so does $\overline{x\Phi_k(x)}$, for any $k\in[D]$.
\end{itemize}
\end{lemma}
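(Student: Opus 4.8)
The plan is to derive all four items from a single uniform-continuity argument on the compact tube $K^{r(K)}$, exploiting that the gradient-flow limit maps restrict to the identity on the relevant manifolds and that $L$ is constant on $K$. First I would record the standing facts. By \Cref{lem:mu-pl}, $K^{r(K)}$ is closed and bounded, hence compact, and $K^{r(K)}\subseteq U\cap(\cap_{k\in[M]}U_k)$; since $L\in\continuous{4}$ and $\Phi$ (resp. each $\Phi_k$) is $\overline{\mathcal{C}}^2$ on $U$ (resp. $U_k$), the maps $L,\nabla L,\Phi,\partial\Phi$ and $\{\Phi_k,\partial\Phi_k\}_{k\in[M]}$ are uniformly continuous on $K^{r(K)}$, and $\Phi,\Phi_1,\dots,\Phi_M$ are Lipschitz there; let $\beta\ge1$ be a common Lipschitz constant. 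Since each $y\in\Gamma$ is a local minimizer of $L$ we have $\nabla L\equiv0$ on $\Gamma$, so the $\mu(K)$-PL inequality of item~3 of \Cref{lem:mu-pl} forces $L(y)=\inf_{x\in K^{r(K)}}L(x)=:c$ for every $y\in\Gamma\cap K^{r(K)}$; in particular $L\equiv c$ on $K$. Finally, by definition of the gradient-flow limits, $\Phi(y)=y$ for $y\in\Gamma\supseteq K$ and, using $\Gamma=\cap_k\Gamma_k$ from \Cref{cond:rank1}, $\Phi_k(y)=y$ for $y\in\Gamma_k\supseteq K$.

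Next I would fix a parameter $h\in(0,r(K)/2)$, take any $x\in K^{h}$, and let $y\in K$ be a nearest point of $K$ to $x$, so $\|x-y\|_2\le h$. Lipschitzness gives $\|\Phi(x)-y\|_2=\|\Phi(x)-\Phi(y)\|_2\le\beta h$, hence $\Phi(x)\in K^{\beta h}$ and $\|x-\Phi(x)\|_2\le(1+\beta)h$; every point of the segment $\overline{x\Phi(x)}$ then lies within $(1+\beta)h$ of $x$, hence within $(2+\beta)h$ of $K$; and the same three estimates hold verbatim with $\Phi_k$ in place of $\Phi$. Independently, uniform continuity of $L$ provides $\delta_L>0$ with $|L(x')-L(x'')|\le\tfrac{1}{16}\mu(K)\rho^2(K)$ whenever $x',x''\in K^{r(K)}$ and $\|x'-x''\|_2\le\delta_L$, so for $h\le\delta_L$ we get $|L(x)-c|=|L(x)-L(y)|\le\tfrac{1}{16}\mu(K)\rho^2(K)$. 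I would then set $h(K)$ to be the minimum of the finitely many positive upper bounds on $h$ produced above, namely
\begin{align*}
h(K)=\min\Big\{\,\tfrac{r(K)}{2\beta},\ \tfrac{8\mu(K)^2}{(1+\beta)\lspectraltwo(K)\lspectralthree(K)},\ \tfrac{r(K)}{2+\beta},\ \delta_L\,\Big\},
\end{align*}
which is strictly positive since all four quantities in the minimum are positive. Then item~1 follows by taking $\sup$ and $\inf$ over $x\in K^{h(K)}$ in $|L(x)-c|\le\tfrac{1}{16}\mu(K)\rho^2(K)$; item~2 follows from $\Phi(x)\in K^{\beta h(K)}\subseteq K^{r(K)/2}$; item~3 follows from $\|x-\Phi(x)\|_2\le(1+\beta)h(K)\le 8\mu(K)^2/(\lspectraltwo(K)\lspectralthree(K))$; and item~4 follows since $\overline{x\Phi(x)}$ and each $\overline{x\Phi_k(x)}$ stay within $(2+\beta)h(K)\le r(K)$ of $K$.

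The one place that needs a little care — and the closest thing to an obstacle — is item~4: the set $K^{r(K)/2}$ need not be convex, so one cannot argue directly that a segment between two of its points remains in $K^{r(K)}$; instead one must bound every point of the segment by its distance to the single anchor point $x\in K^{h(K)}$, which is exactly where the smallness of $\|x-\Phi(x)\|_2$ established for item~3 is used. Everything else is the routine fact that continuous functions on the compact set $K^{r(K)}$ are uniformly continuous, combined with the three identities $L|_K\equiv c$, $\Phi|_\Gamma=\mathrm{id}$, and $\Phi_k|_{\Gamma_k}=\mathrm{id}$.
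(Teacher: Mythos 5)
Your proof is correct and self-contained. Note first that the paper itself does not prove this lemma at all --- it is imported verbatim from \citet{arora2022understanding} (Lemmas B.5 and B.7) with the remark that the arguments there only use compactness of $K$. So the relevant comparison is with that reference, and your route is genuinely different in one respect: the standard argument bounds $\|x-\Phi(x)\|_2$ through the loss value, via the PL inequality and the gradient-flow estimate $\|x-\Phi(x)\|_2\le \sqrt{2(L(x)-L(\Phi(x)))/\mu}\le \|\nabla L(x)\|_2/\mu$ (the paper's \Cref{lem:bounddl}), and then chooses $h$ so that $L$ is small on $K^{h}$; items 2--4 are downstream of that single estimate. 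You instead bound $\|x-\Phi(x)\|_2$ purely geometrically, by combining the nearest-point projection onto $K$ with $\Phi|_\Gamma=\mathrm{id}$ and a Lipschitz bound on $\Phi$ over the tube; the PL inequality enters only to identify $L|_{\Gamma\cap K^{r}}$ with $\inf_{K^{r}}L$ for item 1. Both approaches work; yours is more elementary in that it never needs the quantitative gradient-flow length bound, while the reference's version has the advantage of producing the intermediate inequality $\|x-\Phi(x)\|_2\le\|\nabla L(x)\|_2/\mu$, which the paper reuses constantly later. The only step you should make explicit is the Lipschitz claim for $\Phi$ and $\Phi_k$ on $K^{r(K)}$: a $\mathcal{C}^1$ map on an open set need not be globally Lipschitz on a non-convex compact subset, but you only ever apply the bound to pairs $x,y$ whose connecting segment lies in $K^{r(K)}$ (since $y$ is the nearest point of $K$ and $\|x-y\|_2\le h$), so the mean value inequality with $\beta=\sup_{z\in K^{r(K)}}\|\partial\Phi(z)\|_2$ (and its analogues for $\Phi_k$) closes this; one sentence to that effect would make the argument airtight. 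Your observation about item 4 --- that one must anchor every point of the segment to $x$ rather than argue convexity of the tube --- is exactly the right care to take.
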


The proof of the lemmas above can be found in~\citet{arora2022understanding}. Readers should note that although~\citet{arora2022understanding} only prove these lemmas when $K$ is a special compact set (the trajectory of an ODE), all the proof does not use any property of $K$ other than it is a compact subset of $\Gamma$, and thus our \Cref{lem:mu-pl,lem:smallerzone} hold for general compact subsets of $\Gamma$.

In the rest part of the appendix, for convenience we will drop the dependency on $K$ in various constants when there is no ambiguity.

\subsection{Proofs of \Cref{thm:derive_manifold_assumption,thm:rank_1_manifold}}
\label{sec:proof_rank_1}

\begin{proof}[Proof of \Cref{thm:derive_manifold_assumption}]
	Define $F:\R^D\to \R^M$ as $[F(x)]_k= f_k(x),\forall k\in[M]$. 
Let $T_x\triangleq \mathrm{span}(\{\nabla f_k(x)\}_{k=1}^M) $ and  $T_x^\perp$ be the orthogonal complement of $T_x$ in $\R^D$. Now we apply implicit function theorem on $F$ at each $x\in \Gamma$. Without loss of generality (e.g. by rotating the coordinate system), we can assume that $x=0$, $T_x = \R^{D-M}\times\{0\}$, and that $T_x^\perp = \{0\}\times \R^{M}$. Implicit function theorem ensures that there are two open sets $0\in U\subset \R^{D-M}$ and $0\in V\subset \R^{M}$ and an invertible $\continuous{4}$ map $g:U\to V $ such that \begin{align}
F^{-1}(Y)\cap (U\times V) = \{ (u,g(u)) \mid u\in U\}, 	\notag
 \end{align}
 where $Y\triangleq [y_1,\ldots, y_M]\in\R^M$. Moreover, $\{\nabla f_k(x)\}_{k=1}^M$ is linearly independent for every $x'\in U\times V$. Thus by definition of $\Gamma$, it holds that $\Gamma\cap (U\times V) = F^{-1}(y)\cap (U\times V) = \{ (u,g(u)) \mid u\in U\}$. Now for $x=(u,v)\in U\times V$, we define $\psi:U\times V\to \R^D$ by $\psi(u,v)\triangleq (u, v- g(u))$. We can check that $\psi$ is $\continuous{4}$ and $\psi(\Gamma\cap (U\times V)) = \{ (u, v-g(u)) \mid v=g(u), u\in U)\} = \{(u,0) \mid u\in U)\} = U\times \{0\} = (\R^{D-M}\times \{0\})\cap \psi(U)$. This proves that $\Gamma$ is a $\continuous{4}$ submanifold of $\R^D$ of dimension $D-M$. (c.f. \Cref{defi:manifold}) Since $\argmin_{y'\in\R}\ell(y',y) = y$ for any $y\in\R$, it is clear that $\forall x\in \Gamma$, $x$ is a global minimizer of $L$. Finally we check the rank of Hessian of loss $L$. Note that for any $x\in\Gamma$, $\nabla^2 L_k(x) = \frac{\partial ^2\ell(y',y_k)}{(\partial y')^2}\vert_{y'=y_k} \nabla f_k(x)(\nabla f_k(x))^\top$ and that $\frac{\partial ^2\ell(y',y_k)}{(\partial y')^2}\vert_{y'=y_k}>0$, $\rank(\nabla^2 L(x)) = \rank(\partial F(x)) = M$. This completes the proof.
\end{proof}

\begin{proof}[Proof of \Cref{thm:rank_1_manifold}]
\
\begin{enumerate}
    \item $L = \frac 1 M \sum_{k = 1}^M L_k$ by definition.
    \item $\forall k \in [M]$, $L_k(x) = \ell (f_k(x), y_k)$ is $\mathcal{C}^4$ as $\ell$ and $f_k$ are both $\mathcal{C}^4$.
    \item For any $x \in \Gamma$, by~\Cref{lem:informal_direction_stocastic_dl}, we have $\nabla f_k(x)  \neq 0$. Then there exists an open neighborhood $V_k$ such that $\Gamma \subset V_k$ and $\nabla f_k(x) \neq 0$ for any $x \in V_k, k \in [M]$. Then applying implicit function theorem as in the proof of \Cref{thm:derive_manifold_assumption}, for any $k \in M$ there exists a $(D-1)$-dimensional $\continuous{4}$-manifold $\Gamma'_{k} \subset V_k$, such that for any $x' \in V$, $f_k(x') = y_k$ if and only if $x' \in \Gamma'_{k}$. As for any $x \in \Gamma \subset V_k$, $f_k(x') = y_k$, we can infer that $\Gamma \subset \Gamma'_k$. Then $\Gamma \subset \cup_{k=1}^M \Gamma_k$.
    \item For any $x \in \Gamma_k$, we have $f_k(x) = y_k$, which implies $L_k(x) = 0$. Also as $x \in V$,$\nabla f_k(x) \neq 0$. By~\Cref{lem:rank1hessian}, we have $\mathrm{rank}(\nabla^2 L(x)) = 1$.
\end{enumerate}
\end{proof}
\section{Properties of Limiting Map of Gradient Flow, $\Phi$}\label{appsec:property_phi}
\label{technicalaboutphi}

In our analysis, the property of $\Phi$ will be heavily used. In this section, we will recap some related lemmas from \citet{arora2022understanding}, and then introduce some new lemmas for the stochastic setting with batch size one. 

\begin{lemma}[\citet{arora2022understanding} Lemma B.6]
\label{lem:bounddl}
Given any compact set $K \subset \Gamma$, for any $x \in K^h$, 
\begin{align*}
    \|x - \Phi(x) \|_2 \le \int_0^{\infty} \| \frac{d \phi(x,t)}{dt} \|_2 \le \sqrt{\frac{2(L(x)-L(\Phi(x)))}{\mu}} \le \frac{\| \nabla L(x) \|_2}{\mu}\,.
\end{align*}
\end{lemma}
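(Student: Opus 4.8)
~\textbf{Plan.} The statement to prove is \Cref{lem:bounddl}: for $x$ in the small neighborhood $K^h$ of a compact subset $K\subset\Gamma$, the gradient-flow displacement $\|x-\Phi(x)\|_2$ is controlled by $\int_0^\infty\|\tfrac{d\phi(x,t)}{dt}\|_2\,dt$, which in turn is at most $\sqrt{2(L(x)-L(\Phi(x)))/\mu}$ and hence at most $\|\nabla L(x)\|_2/\mu$. The plan is to chain three elementary estimates: a triangle-inequality/arc-length bound, an energy (PL-based) bound on the total arc length of the gradient trajectory, and finally the PL inequality itself. Throughout I would use that, by \Cref{lem:smallerzone} item~4, the whole segment $\overline{x\Phi(x)}$ and the whole trajectory $\{\phi(x,t)\}_{t\ge0}$ stay inside $K^{r(K)}$, so that the PL property of \Cref{lem:mu-pl} item~3 ($L$ is $\mu$-PL on $K^{r(K)}$) applies along the entire flow.

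First I would establish the arc-length inequality. Since $\Phi(x)=\lim_{\tau\to\infty}\phi(x,\tau)$ and $\phi(x,0)=x$, we have $\Phi(x)-x=\int_0^\infty \tfrac{d\phi(x,t)}{dt}\,dt$, so $\|x-\Phi(x)\|_2\le \int_0^\infty\|\tfrac{d\phi(x,t)}{dt}\|_2\,dt$ by the triangle inequality for integrals; this is the first inequality. Next, using $\tfrac{d\phi(x,t)}{dt}=-\nabla L(\phi(x,t))$, the integrand is $\|\nabla L(\phi(x,t))\|_2$, and the energy dissipation identity gives $\tfrac{d}{dt}L(\phi(x,t))=-\|\nabla L(\phi(x,t))\|_2^2$. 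I would apply Cauchy–Schwarz in the form
\begin{align*}
\int_0^\infty\|\nabla L(\phi(x,t))\|_2\,dt=\int_0^\infty \frac{\|\nabla L(\phi(x,t))\|_2^2}{\|\nabla L(\phi(x,t))\|_2}\,dt,
\end{align*}
and use the PL inequality $\|\nabla L(\phi(x,t))\|_2\ge\sqrt{2\mu\,(L(\phi(x,t))-L(\Phi(x)))}$ in the denominator together with $-\tfrac{d}{dt}L(\phi(x,t))=\|\nabla L(\phi(x,t))\|_2^2$ in the numerator. This turns the integral into $\int_0^\infty \tfrac{-\tfrac{d}{dt}L(\phi(x,t))}{\sqrt{2\mu(L(\phi(x,t))-L(\Phi(x)))}}\,dt$; substituting $E(t)=L(\phi(x,t))-L(\Phi(x))$ and integrating the exact derivative $-\tfrac{d}{dt}\big(\sqrt{2E(t)/\mu}\big)=\tfrac{-E'(t)}{\sqrt{2\mu E(t)}}$ yields the telescoping bound $\sqrt{2E(0)/\mu}-\lim_{t\to\infty}\sqrt{2E(t)/\mu}=\sqrt{2(L(x)-L(\Phi(x)))/\mu}$, using $E(t)\to0$. (This is the standard Łojasiewicz length argument; I would cite or reproduce it briefly.) Finally, the last inequality $\sqrt{2(L(x)-L(\Phi(x)))/\mu}\le\|\nabla L(x)\|_2/\mu$ is just the PL inequality at the single point $x$: $2\mu(L(x)-L(\Phi(x)))\le\|\nabla L(x)\|_2^2$, since $\inf_{K^{r(K)}}L=L(\Phi(x))$ (or, more carefully, $L(\Phi(x))\le\inf_{x'\in K^{r(K)}}L(x')$ is not needed — one uses that $\Phi(x)\in\Gamma$ is a local min and the PL inequality is stated relative to $\inf_{K^{r(K)}}L$, so I should just invoke the PL inequality exactly as in the \textbf{Definition} of $\mu$-PL, with the infimum being attained on $\Gamma$).

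The only mild subtlety — and the step I would be most careful about — is the bookkeeping around which infimum appears in the PL inequality: the definition says $\|\nabla L(x)\|_2^2\ge2\mu(L(x)-\inf_{K^{r(K)}}L)$, and one needs $\inf_{x'\in K^{r(K)}}L(x')=L(\Phi(x))$, i.e. that the gradient flow from $x\in K^h$ actually converges to a point realizing the infimum of $L$ over $K^{r(K)}$. This follows because every point of $\Gamma\cap K^{r(K)}$ is a global minimizer of $L$ (loss value $0$ under \Cref{assump:smoothness}'s normalization, or at least the common minimal value), $\Phi(x)\in\Gamma$ by \Cref{lem:smallerzone} item~2 (which places $\Phi(x)\in K^{r(K)/2}\subset K^{r(K)}$), and $L$ restricted to $K^{r(K)}$ has its infimum on $\Gamma$. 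Once this identification is in place, all three inequalities are immediate, and no genuinely hard estimate is involved — the lemma is essentially a packaging of the PL/Łojasiewicz trajectory-length bound already used in \citet{arora2022understanding}, which is why the excerpt simply cites it.
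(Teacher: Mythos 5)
Your argument is the standard PL/\L{}ojasiewicz trajectory-length bound, which is exactly how this lemma is established in \citet{arora2022understanding}; the paper itself offers no proof and simply imports it, so there is no competing argument to compare against. All three inequalities are correctly chained: the integral triangle inequality, the telescoping bound obtained from $-\tfrac{d}{dt}\sqrt{2E(t)/\mu}\ge\|\nabla L(\phi(x,t))\|_2$ via PL, and the pointwise PL inequality at $x$ (with the correct identification $L(\Phi(x))=\inf_{K^{r(K)}}L$, which you rightly flag and resolve).

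The one point you should tighten: you invoke \Cref{lem:smallerzone} item~4 to claim the entire gradient-flow trajectory $\{\phi(x,t)\}_{t\ge0}$ stays in $K^{r(K)}$, but that item only covers the line segment $\overline{x\Phi(x)}$, not the flow. Keeping the trajectory inside the region where the $\mu$-PL inequality holds requires a short bootstrap: let $T^*$ be the first exit time from $K^{r(K)}$; on $[0,T^*)$ your length estimate applies and bounds the arc length by $\sqrt{2(L(x)-\inf L)/\mu}$, which by item~1 of \Cref{lem:smallerzone} (the loss oscillation bound $\mu r(K)^2/8$ on $K^{h}$) is at most $r(K)/2$, so the trajectory cannot reach the boundary of $K^{r(K)}$ starting from $K^{h}$, forcing $T^*=\infty$. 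This is standard and does not change the structure of your proof, but as written the containment claim is unjustified.
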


\begin{lemma}
\label{lem:revertbounddl}
Given any compact set $K \subset \Gamma$, for any $x \in K^h$,
\begin{align*}
    \| \nabla L(x) \|_2 \le \lspectraltwo \|x - \Phi(x) \|_2 \le \lspectraltwo \sqrt{\frac{2(L(x)-L(\Phi(x)))}{\mu}}\,.
\end{align*}
\end{lemma}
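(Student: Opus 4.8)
The plan is to obtain the first inequality from the mean-value (fundamental-theorem-of-calculus) representation of $\nabla L$ along the segment joining $x$ to $\Phi(x)$, and to read off the second inequality directly from Lemma~\ref{lem:bounddl}.

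First I would record two facts supplied by the earlier results. Since $x\in K^h\subseteq K^{r(K)}\subseteq U$ (Lemma~\ref{lem:mu-pl}, together with $h<r(K)$ from Lemma~\ref{lem:smallerzone}), the map $\Phi$ is defined at $x$ and $\Phi(x)\in\Gamma$ by Definition~\ref{defi:attraction_set}; by Assumption~\ref{assump:smoothness} every point of $\Gamma$ is a local minimizer of $L$, so $\nabla L(\Phi(x))=0$. Moreover, by item~4 of Lemma~\ref{lem:smallerzone}, the whole segment $\overline{x\,\Phi(x)}$ lies in $K^{r(K)}$, which is precisely the region on which the uniform Hessian bound $\|\nabla^2 L(\cdot)\|_2\le\lspectraltwo$ holds by definition of $\lspectraltwo$.

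Next, parametrize the segment by $\gamma(t)=\Phi(x)+t\,(x-\Phi(x))$ for $t\in[0,1]$ and apply the fundamental theorem of calculus to $t\mapsto\nabla L(\gamma(t))$:
\begin{align*}
\nabla L(x)=\nabla L(x)-\nabla L(\Phi(x))=\int_0^1\nabla^2 L(\gamma(t))\,(x-\Phi(x))\,dt.
\end{align*}
Taking $\ell_2$ norms, moving the norm inside the integral, and bounding $\|\nabla^2 L(\gamma(t))\|_2\le\lspectraltwo$ (legitimate since $\gamma(t)\in K^{r(K)}$) yields $\|\nabla L(x)\|_2\le\lspectraltwo\,\|x-\Phi(x)\|_2$, which is the first claimed inequality. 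The second inequality is then immediate: Lemma~\ref{lem:bounddl} already provides $\|x-\Phi(x)\|_2\le\sqrt{2(L(x)-L(\Phi(x)))/\mu}$, and multiplying through by $\lspectraltwo$ finishes the proof.

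There is no substantive obstacle here; the statement is essentially a Lipschitz bound on $\nabla L$ along the gradient-flow segment, a partial converse to Lemma~\ref{lem:bounddl}. The only points that need care — and which are exactly what the cited lemmas are for — are that $\nabla L$ vanishes at the endpoint $\Phi(x)$ and that the segment of integration never leaves the neighbourhood $K^{r(K)}$ on which the Hessian is uniformly bounded by $\lspectraltwo$.
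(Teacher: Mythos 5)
Your proof is correct and matches the paper's argument, which likewise derives the first inequality from a Taylor/fundamental-theorem-of-calculus expansion of $\nabla L$ about $\Phi(x)$ (using $\nabla L(\Phi(x))=0$ and the Hessian bound $\lspectraltwo$ on $K^{r(K)}$) and obtains the second inequality directly from Lemma~\ref{lem:bounddl}. You have simply written out in full the details the paper leaves implicit, including the check that the segment $\overline{x\,\Phi(x)}$ stays in $K^{r(K)}$.
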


\begin{proof}[Proof of~\Cref{lem:revertbounddl}]
The first inequality is by~\Cref{lem:mu-pl} and Taylor Expansion. The second inequality is by~\Cref{lem:bounddl}.
\end{proof}

\begin{lemma}[\citet{arora2022understanding} Lemmas B.16 and B.22]
\label{lem:relatelphi}
\begin{align*}
    \partial \Phi(x) \dl{x}  &= 0,  x \in U; \\
    \dpo x  \dlt x \dl x &=  - \dpt x \left[\dl x, \dl x \right], x \in U; \\
    \dpo x \partial^2 (\nabla L)(x)[v_1,v_1] &= \projt[x] \nabla(\lambda_1(\nabla^2(L(x)))), x\in \Gamma.
\end{align*}
\end{lemma}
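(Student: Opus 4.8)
The plan is to prove the three displayed identities in order, deriving each from its predecessor together with the regularity of $\Phi$ recorded just after \Cref{defi:attraction_set} --- namely that $\Phi$ is $\overline{\mathcal C}^2$ on the open attraction set $U$, so that $\partial\Phi$ is $\mathcal C^1$ and $\partial^2\Phi$ exists and is continuous on $U$. The first identity, $\partial\Phi(x)\nabla L(x)=0$ for $x\in U$, is exactly \Cref{lem:property_of_Phi_away_manifold}, which has already been established (differentiate $\Phi(\phi(x,t))=\Phi(x)$ at $t=0$ and use $\partial_t\phi(x,t)\vert_{t=0}=-\nabla L(x)$); nothing new is needed for it.

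For the second identity I would differentiate the first one. Since $\Phi$ is $\overline{\mathcal C}^2$ on $U$ and $L$ is $\mathcal C^4$, the map $g(x)\coloneqq\partial\Phi(x)\nabla L(x)$ is $\mathcal C^1$ on $U$, and $g\equiv 0$ there by the first identity. The product rule gives, for any direction $u\in\R^D$,
\begin{align*}
 \partial g(x)\,u \;=\; \partial^2\Phi(x)\bigl[\nabla L(x),\,u\bigr] \;+\; \partial\Phi(x)\,\nabla^2 L(x)\,u,
\end{align*}
where the first summand is precisely the paper's $\partial^2\Phi(x)[\cdot,\cdot]$ evaluated at $(\nabla L(x),u)$ (using that the mixed second partials of $\Phi$ commute), and the second uses $\partial(\nabla L)=\nabla^2 L$. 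Taking $u=\nabla L(x)$ and using $g\equiv 0$ yields
\begin{align*}
 0 \;=\; \partial^2\Phi(x)\bigl[\nabla L(x),\,\nabla L(x)\bigr] \;+\; \partial\Phi(x)\,\nabla^2 L(x)\,\nabla L(x),
\end{align*}
which rearranges to the claimed $\partial\Phi(x)\,\nabla^2 L(x)\,\nabla L(x)=-\partial^2\Phi(x)[\nabla L(x),\nabla L(x)]$.

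For the third identity, fix $x\in\Gamma$. By \Cref{lem:property_of_Phi_on_manifold} we have $\partial\Phi(x)=\projt[x]$, so it suffices to establish the pointwise identity $\partial^2(\nabla L)(x)[v_1,v_1]=\nabla\lambda_1(\nabla^2 L(x))$ (the ``derivative of the top eigenvalue'' fact, also recorded as \Cref{lem:top_eigenvector}), after which the left-hand side of the third identity equals $\partial\Phi(x)\,\nabla\lambda_1(\nabla^2 L(x))=\projt[x]\,\nabla\lambda_1(\nabla^2 L(x))$. To prove this pointwise identity I would use first-order eigenvalue perturbation: for a direction $u$, differentiate $\nabla^2 L(x+tu)\,v_1(t)=\lambda_1(t)\,v_1(t)$ at $t=0$, left-multiply by $v_1^\top$, and note that $v_1^\top\nabla^2 L(x)=\lambda_1 v_1^\top$ kills the $\dot v_1(0)$ contribution, leaving $\left.\tfrac{\diff}{\diff t}\lambda_1(\nabla^2 L(x+tu))\right|_{t=0}=v_1^\top\bigl(\partial(\nabla^2 L)(x)\,u\bigr)v_1$; full symmetry of $\nabla^3 L(x)$ identifies this scalar with $\langle\partial^2(\nabla L)(x)[v_1,v_1],\,u\rangle$, and since $u$ is arbitrary the identity follows. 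Composing on the left with $\projt[x]=\partial\Phi(x)$ finishes.

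The manipulations are all routine; the substance lies in the inputs, so that is where I would locate the ``main obstacle.'' Two things must be in place, both of which are available: (i) the $\overline{\mathcal C}^2$-regularity of $\Phi$ on $U$ (\citet[Lemma B.15]{arora2022understanding}), without which $\partial^2\Phi$ is not even meaningful and the differentiation in the second step is not licensed on all of $U$; and (ii) in the third step, the genuine cancellation of the eigenvector variation $\dot v_1(0)$, the usual envelope phenomenon, which requires $\lambda_1(\nabla^2 L(x))$ to be simple --- guaranteed on $\Gamma$ by \Cref{assum_eigengap} (or on the appropriate submanifold, per the remark following that assumption). Given these, I expect no further difficulty.
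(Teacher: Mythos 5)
Your proposal is correct, and it is essentially the argument behind the cited result: the paper itself does not reprove this lemma but defers to \citet{arora2022understanding} (Lemmas B.16 and B.22), whose proofs proceed exactly as you do — differentiating the identity $\partial\Phi(x)\nabla L(x)=0$ on $U$ via the product rule to get the second display, and combining $\partial\Phi(x)=\projt[x]$ on $\Gamma$ with the first-order eigenvalue perturbation formula (\Cref{lem:top_eigenvector}) and symmetry of $\nabla^3 L$ for the third. You also correctly flag the two inputs that make the manipulations legitimate: the $\overline{\mathcal{C}}^2$ regularity of $\Phi$ on $U$ and the eigengap on $\Gamma$.
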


\begin{lemma}[\citet{arora2022understanding} Lemmas B.8 and B.9]
\label{lem:directiondl}
Given any compact set $K \subset \Gamma$, for any $x \in K^h$,
\begin{align*}
  \|\projt[\Phi(x)] (x - \Phi(x)) \|_2 &\le \frac{\lspectraltwo \lspectralthree}{4\mu^2} \|x - \Phi(x)\|_2^2; \\
    \|\dl x - \dlt{\Phi(x)}(x - \Phi(x)) \|_2 &\le \frac{\lspectralthree}{2} \|x - \Phi(x) \|_2^2;\\
    \left |\frac{\|\dl x\|_2}{\|\dlt{\Phi(x)}(x - \Phi(x))\|_2} - 1\right| &\le \frac{2\lspectralthree}{ \mu} \|x - \Phi(x) \|_2; \\
    \ndl{x} &= \frac{\dlt{\Phi(x)}(x - \Phi(x)) }{\|\dlt{\Phi(x)}(x - \Phi(x))\|_2} + O(\frac{\lspectralthree}{\mu} \| x-\Phi(x)\|_2).
\end{align*}
\end{lemma}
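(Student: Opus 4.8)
Write $p \coloneqq \Phi(x)$ and $\delta \coloneqq x-p$. Since $x\in K^h\subseteq U\cap(\cap_k U_k)$ we have $p\in\Gamma$, $\|\delta\|_2$ is small, and the chord $\overline{xp}$ — and, as I will use below, the whole forward gradient-flow trajectory from $x$ — stays inside the compact region $K^{r(K)}$ on which the derivative bounds $\zeta=\lspectraltwo$, $\nu=\lspectralthree$, the $\mu$-PL inequality, and $\lambda_M(\nabla^2 L)\ge\mu$ all hold (\Cref{lem:mu-pl,lem:smallerzone}). The only geometric input beyond Taylor expansion is that, by \Cref{lem:property_of_Phi_on_manifold} together with \Cref{assump:smoothness}, $\nabla L(p)=0$, $\projt[p]\nabla^2 L(p)=0$, and $\mathrm{rank}(\nabla^2 L(p))=M$; hence $\nabla^2 L(p)$ has range exactly the normal space at $p$, kernel exactly the tangent space, and acts with all nonzero eigenvalues $\ge\mu$. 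Given this, the plan is: prove the first (tangential) estimate by a gradient-flow argument — this is the substantive step — and then deduce the remaining three by elementary Taylor remainders and a normalization inequality.

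For the first bound I would run the gradient flow $y(t)=\phi(x,t)$, with $y(0)=x$, $y(\infty)=p$, $\dot y=-\nabla L(y)$. Then $\tfrac{d}{dt}\projt[p](y(t)-p)=-\projt[p]\nabla L(y(t))=-\projt[p]\bigl(\nabla^2 L(p)(y(t)-p)+R(t)\bigr)=-\projt[p]R(t)$, using $\nabla L(p)=0$ and $\projt[p]\nabla^2 L(p)=0$, where $\|R(t)\|_2\le\tfrac{\nu}{2}\|y(t)-p\|_2^2$ by Taylor's theorem with remainder. The $\mu$-PL inequality gives exponential decay $L(y(t))-L(p)\le e^{-2\mu t}(L(x)-L(p))$, and \Cref{lem:bounddl} applied at $y(t)$ (where $\Phi(y(t))=p$) gives $\|y(t)-p\|_2^2\le\tfrac{2}{\mu}(L(y(t))-L(p))$; integrating in $t$,
\[
\|\projt[p]\delta\|_2=\Bigl\|\int_0^\infty\tfrac{d}{dt}\projt[p](y(t)-p)\,dt\Bigr\|_2\le\frac{\nu}{2}\int_0^\infty\|y(t)-p\|_2^2\,dt\le\frac{\nu}{2}\cdot\frac{L(x)-L(p)}{\mu^2}\le\frac{\zeta\nu}{4\mu^2}\|\delta\|_2^2,
\]
where the last step uses $L(x)-L(p)\le\tfrac{\zeta}{2}\|\delta\|_2^2$ (Taylor at $p$ with $\nabla L(p)=0$).

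The second bound is just $\nabla L(x)-\nabla^2 L(p)\delta=\int_0^1\bigl(\nabla^2 L(p+s\delta)-\nabla^2 L(p)\bigr)\delta\,ds$, whose norm is $\le\tfrac{\nu}{2}\|\delta\|_2^2$ since the segment lies in $K^{r(K)}$. For the third bound I first lower-bound $\|\nabla^2 L(p)\delta\|_2=\|\nabla^2 L(p)\projn[p]\delta\|_2\ge\mu\|\projn[p]\delta\|_2\ge\mu\bigl(\|\delta\|_2-\|\projt[p]\delta\|_2\bigr)\ge\tfrac{\mu}{4}\|\delta\|_2$ for $x\in K^h$ (possibly after replacing $h$ by a smaller constant depending only on $K$, so that the first-bound error is a small fraction of $\|\delta\|_2$); combining with the second bound, $\bigl|\,\|\nabla L(x)\|_2-\|\nabla^2 L(p)\delta\|_2\bigr|\le\tfrac{\nu}{2}\|\delta\|_2^2$, and dividing by $\|\nabla^2 L(p)\delta\|_2\ge\tfrac{\mu}{4}\|\delta\|_2$ yields $\bigl|\tfrac{\|\nabla L(x)\|_2}{\|\nabla^2 L(p)\delta\|_2}-1\bigr|\le\tfrac{2\nu}{\mu}\|\delta\|_2$. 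The fourth bound then follows from the standard inequality $\bigl\|\tfrac{a}{\|a\|}-\tfrac{b}{\|b\|}\bigr\|\le\tfrac{2\|a-b\|}{\|b\|}$ with $a=\nabla L(x)$ and $b=\nabla^2 L(p)\delta$, giving the claimed $O(\tfrac{\nu}{\mu}\|\delta\|_2)$.

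I expect the main obstacle to be the first estimate. Two points need care there: (i) one must know that the \emph{entire} forward gradient-flow trajectory from $x$ — not merely the chord $\overline{xp}$ — lies in the compact region where the PL inequality and the bounds $\zeta,\nu,\mu$ are valid, which is exactly what the choice of a sufficiently small neighborhood $K^h$ and the forward-invariance reasoning underlying \Cref{lem:smallerzone} provide; and (ii) the cancellation $\projt[p]\nabla^2 L(p)=0$ relies on $\mathrm{rank}(\nabla^2 L(p))=M$ from \Cref{assump:smoothness}, which is what forces $\mathrm{range}(\nabla^2 L(p))$ to be the full normal space and hence orthogonal to $\projt[p]$. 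Everything else is bookkeeping of Taylor remainders and constants.
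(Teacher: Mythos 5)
Your proof is correct, and it follows essentially the same route as the source the paper cites for this lemma (the paper itself gives no proof, deferring to Lemmas B.8 and B.9 of \citet{arora2022understanding}): the tangential bound via integrating $\frac{d}{dt}\projt[p](\phi(x,t)-p)=-\projt[p]R(t)$ along the gradient flow using $\projt[p]\nabla^2L(p)=0$, PL decay, and \Cref{lem:bounddl}, and the remaining three bounds by Taylor remainders plus the lower bound $\|\nabla^2L(p)\delta\|\ge\tfrac{\mu}{4}\|\delta\|_2$. The two points you flag (forward-invariance of $K^{r}$ under the flow via the arc-length bound from \Cref{lem:smallerzone}, and the rank-$M$ hypothesis forcing $\mathrm{range}(\nabla^2L(p))$ to equal the normal space) are exactly the places where care is needed, and your handling of them is the standard one.
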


The proof of above lemmas can be found in~\cite{arora2022understanding}.

\begin{lemma}
\label{lem:stochastic_phi_lk}
Given any compact set $K \subset \Gamma$, for any $x \in K^h$,
\begin{align*}
    \|\partial \Phi(x) \nabla L_k(x)\|_2 &\le (\lspectralthree_k  + \lspectraltwo_k \phispectraltwo) \|x - \Phi(x)\|_2^2 \\
    \|\partial \Phi(x) \nabla^2 L_k(x) \ndli{x} \|_2 &\le (\lspectralthree_k + \lspectraltwo_k\phispectraltwo ) \|x - \Phi(x)\|_2
\end{align*}
\end{lemma}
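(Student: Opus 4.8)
The plan is to reduce everything to the base point $p:=\Phi(x)$ on the manifold, where both quantities have a vanishing "leading part'', and then control the error by a one-step Taylor/Lipschitz argument. Fix $x\in K^h$ and set $p:=\Phi(x)$; since $K^h\subseteq K^r\subseteq U$, the attraction set, we have $p\in\Gamma$, and by \Cref{lem:smallerzone} the whole segment $\overline{xp}$ lies in $K^r$, so the bounds $\|\nabla^2 L_k\|_2\le\lspectraltwo_k$, $\|\nabla^3 L_k\|_2\le\lspectralthree_k$, $\|\nabla^2\Phi\|_2\le\phispectraltwo$ hold along $\overline{xp}$. The whole argument rests on two facts at $p$: (i) $\nabla L_k(p)=0$, because $p\in\Gamma\subseteq\Gamma_k$ and $p$ is a global minimizer of $L_k$; and (ii) $\partial\Phi(p)\nabla^2 L_k(p)=0$, because $\partial\Phi(p)=\projt[p]$ by \Cref{lem:property_of_Phi_on_manifold}, while the column space of the rank-one matrix $\nabla^2 L_k(p)$ — which is exactly the normal space of $\Gamma_k$ at $p$ — is contained in the normal space of $\Gamma$ at $p$, since $\Gamma\subseteq\Gamma_k$ forces $T_p\Gamma\subseteq T_p\Gamma_k$ and hence $N_p\Gamma_k\subseteq N_p\Gamma=\ker\projt[p]$. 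This is the one place \Cref{cond:rank1} enters.

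The key device is to split $\partial\Phi(x)=\partial\Phi(p)+\big(\partial\Phi(x)-\partial\Phi(p)\big)$, where $\|\partial\Phi(x)-\partial\Phi(p)\|_2\le\phispectraltwo\|x-p\|_2$ (integrating $\nabla^2\Phi$ along $\overline{xp}\subseteq K^r$) and $\|\partial\Phi(p)\|_2=\|\projt[p]\|_2\le1$. For the first bound, write $\partial\Phi(x)\nabla L_k(x)=\big(\partial\Phi(x)-\partial\Phi(p)\big)\nabla L_k(x)+\partial\Phi(p)\nabla L_k(x)$. For the first term, integrating $\nabla^2 L_k$ along $\overline{xp}$ and using (i) gives $\|\nabla L_k(x)\|_2\le\lspectraltwo_k\|x-p\|_2$, so this term is at most $\lspectraltwo_k\phispectraltwo\|x-p\|_2^2$. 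For the second term, facts (i) and (ii) give $\partial\Phi(p)\nabla L_k(p)=0$ and $\partial\Phi(p)\nabla^2 L_k(p)=0$, hence $\partial\Phi(p)\nabla L_k(x)=\partial\Phi(p)\big(\nabla L_k(x)-\nabla L_k(p)-\nabla^2 L_k(p)(x-p)\big)$, whose norm is at most $\|\projt[p]\|_2\cdot\tfrac{\lspectralthree_k}{2}\|x-p\|_2^2\le\tfrac{\lspectralthree_k}{2}\|x-p\|_2^2$ by the second-order Taylor estimate for the vector field $\nabla L_k$ (the analog for $L_k$ of the second inequality in \Cref{lem:directiondl}). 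Summing yields $\big(\lspectralthree_k+\lspectraltwo_k\phispectraltwo\big)\|x-p\|_2^2$.

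The second bound is the same computation one order lower. With $u:=\ndli{x}$, a unit vector, split $\partial\Phi(x)\nabla^2 L_k(x)u=\big(\partial\Phi(x)-\partial\Phi(p)\big)\nabla^2 L_k(x)u+\partial\Phi(p)\nabla^2 L_k(x)u$. The first term is at most $\phispectraltwo\|x-p\|_2\cdot\lspectraltwo_k$. By fact (ii), $\partial\Phi(p)\nabla^2 L_k(x)u=\partial\Phi(p)\big(\nabla^2 L_k(x)-\nabla^2 L_k(p)\big)u$, which has norm at most $\|\projt[p]\|_2\cdot\lspectralthree_k\|x-p\|_2\le\lspectralthree_k\|x-p\|_2$ using the Lipschitzness of $\nabla^2 L_k$ on $\overline{xp}$. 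Summing yields $\big(\lspectralthree_k+\lspectraltwo_k\phispectraltwo\big)\|x-p\|_2$.

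I do not expect a serious obstacle: the only real content is the geometric observation (ii) — that each stochastic Hessian's column space is normal to $\Gamma$ at points of $\Gamma$ — together with the bookkeeping choice to factor $\partial\Phi(x)$ through the exact projection $\partial\Phi(\Phi(x))$, so that the Taylor remainders are hit by an operator of norm $\le1$ (a naive expansion around $x$ would instead require a bound on $\|\partial\Phi(x)\|_2$, which is not among the constants in the statement). Everything else is the standard segment-wise Taylor bookkeeping supplied by \Cref{lem:smallerzone}.
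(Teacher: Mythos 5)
Your proof is correct and follows essentially the same route as the paper's: Taylor-expand $\nabla L_k$ (resp.\ $\nabla^2 L_k$) at $p=\Phi(x)$, swap $\partial\Phi(x)$ for $\partial\Phi(p)=\projt[p]$ at a cost of $\lspectraltwo_k\phispectraltwo\|x-p\|_2$ per order, and kill the leading term via $\projt[p]\nabla^2 L_k(p)=0$. If anything you are slightly more careful than the paper, which leaves the key geometric fact $\projt[p]\nabla^2 L_k(p)=0$ unjustified and implicitly bounds $\|\partial\Phi(x)\|_2$ rather than routing the Taylor remainder through the norm-$\le 1$ projection as you do.
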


\begin{proof}[Proof of~\Cref{lem:stochastic_phi_lk}]
By~\Cref{lem:smallerzone} and Taylor Expansion,
\begin{align*}
 \| \partial \Phi(x) \nabla L_k(x) \|_2 &\le \| \partial \Phi(x) \nabla^2 L_k(\Phi(x))(x - \Phi(x)) \|_2 + \lspectralthree_k \|x - \Phi(x) \|_2^2 \\
 &\le \| \partial \Phi(\Phi(x)) \nabla^2 L_k(\Phi(x))(x - \Phi(x)) \|_2 + \lspectralthree_k \|x - \Phi(x) \|_2^2  + \lspectraltwo_k \phispectraltwo  \| x - \Phi(x)\|_2^2 \\
 &= \| \projt \partial\Phi(\Phi(x)) \nabla^2 L_k(\Phi(x))(x - \Phi(x)) \|_2 + \lspectralthree_k \|x - \Phi(x) \|_2^2  + \lspectraltwo_k \phispectraltwo  \| x - \Phi(x)\|_2^2  \\
 &= (\lspectralthree_k  + \lspectraltwo_k \phispectraltwo) \|x - \Phi(x) \|_2^2,
\end{align*}
this proves the first claim.

Again by~\Cref{lem:mu-pl} and Taylor Expansion,
\begin{align*}
    \|\partial \Phi(x) \nabla^2 L_k(x) \ndli{x} \|_2 &\le \| \partial \Phi(x) \nabla^2 L_k(\Phi(x))\ndli{x} \|_2 + \lspectralthree_k\| x- \Phi(x)\|_2 \\
    &\le \| \partial \Phi(\Phi(x)) \nabla^2 L_k(\Phi(x))\ndli{x} \|_2 + (\lspectralthree_k + \lspectraltwo_k \phispectraltwo) \|x - \Phi(x) \|_2 \\
    &= (\lspectralthree_k + \lspectraltwo_k \phispectraltwo) \|x - \Phi(x) \|_2,
\end{align*}
this proves the second claim.
\end{proof}

\begin{lemma}
\label{lem:boundedphi}
Suppose $x \in K^h$ and $y = x -\eta \dl{x+ \rho \ndl{x}}$, 
\begin{align*}
    \| y - x \|_2 &\le \eta \|\dl{x}\|_2 + \eta \lspectraltwo \rho \\
    \|\Phi(x) - \Phi(y) \|_2 &\le \phispectraltwo \eta \rho \|\dl x \|_2 + \lspectralthree \eta \rho^2  + \phispectraltwo \eta^2 \|\dl x \|_2^2 + \phispectraltwo  \lspectraltwo^2 \eta^2 \rho^2 \\
    &\le  \lspectraltwo \phispectraltwo \eta \rho \|x - \Phi(x)\|_2 + \lspectraltwo^2 \phispectraltwo \eta^2\|x - \Phi(x)\|_2^2  + \lspectralthree \eta \rho^2  + \phispectraltwo  \lspectraltwo^2 \eta^2 \rho^2   
\end{align*}
\end{lemma}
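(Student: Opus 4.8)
The plan is to handle the two displayed inequalities separately, using throughout that $x\in K^h$ and $\eta,\rho$ are small, so every point and segment appearing below lies in $K^{r}$ and the uniform bounds $\|\nabla^2 L\|_2\le\lspectraltwo$, $\|\nabla^3 L\|_2\le\lspectralthree$, $\|\nabla^2\Phi\|_2\le\phispectraltwo$ (valid on $K^r$ by the setup around \Cref{lem:smallerzone}) are available. Write $u=\nabla L(x)/\|\nabla L(x)\|_2$. If $\nabla L(x)=0$, the $\mu$-PL property on $K^h$ forces $x\in\Gamma$, so $y=x$ and both bounds are trivial; hence I may assume $\nabla L(x)\neq 0$ and $y-x=-\eta\nabla L(x+\rho u)$.

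For the first bound I would Taylor-expand $\nabla L$ to first order along the segment from $x$ to $x+\rho u$, which gives $\|\nabla L(x+\rho u)-\nabla L(x)\|_2\le\rho\lspectraltwo$, and therefore $\|y-x\|_2=\eta\|\nabla L(x+\rho u)\|_2\le\eta\|\nabla L(x)\|_2+\eta\lspectraltwo\rho$.

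For the second bound I would first control the nonlinear part by Taylor-expanding $\Phi$ around $x$: $\|\Phi(y)-\Phi(x)-\partial\Phi(x)(y-x)\|_2\le\tfrac12\phispectraltwo\|y-x\|_2^2$, whose right-hand side is at most $\phispectraltwo\eta^2\|\nabla L(x)\|_2^2+\phispectraltwo\lspectraltwo^2\eta^2\rho^2$ by the first bound and $(a+b)^2\le 2a^2+2b^2$. For the linear part I would expand $\nabla L(x+\rho u)=\nabla L(x)+\rho\nabla^2 L(x)u+R$ with $\|R\|_2\le\tfrac12\lspectralthree\rho^2$ (integral remainder on the same segment). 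Then $\partial\Phi(x)\nabla L(x)=0$ by \Cref{lem:property_of_Phi_away_manifold}; the key step is to rewrite $\partial\Phi(x)\nabla^2 L(x)u=\|\nabla L(x)\|_2^{-1}\,\partial\Phi(x)\nabla^2 L(x)\nabla L(x)$ and invoke the identity $\partial\Phi(x)\nabla^2 L(x)\nabla L(x)=-\partial^2\Phi(x)[\nabla L(x),\nabla L(x)]$ from \Cref{lem:relatelphi}, which yields $\|\partial\Phi(x)\nabla^2 L(x)u\|_2\le\phispectraltwo\|\nabla L(x)\|_2$; finally $\|\partial\Phi(x)R\|_2\le\lspectralthree\rho^2$ using $\|\partial\Phi(x)\|_2\le\|\projt[\Phi(x)]\|_2+\phispectraltwo\|x-\Phi(x)\|_2\le 2$ (from \Cref{lem:property_of_Phi_on_manifold} and $\|\nabla^2\Phi\|_2\le\phispectraltwo$, shrinking $h$ if needed). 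Adding the three contributions gives the first displayed bound of \Cref{lem:boundedphi}, and substituting $\|\nabla L(x)\|_2\le\lspectraltwo\|x-\Phi(x)\|_2$ from \Cref{lem:revertbounddl} into its first and third terms gives the second.

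The main obstacle is not conceptual: everything reduces to Taylor expansion together with the cited structural identities for $\Phi$. The one step requiring a small idea is turning the quadratic-in-$\nabla L(x)$ estimate of \Cref{lem:relatelphi} into a linear-in-$\nabla L(x)$ estimate for $\partial\Phi(x)\nabla^2 L(x)u$ by dividing through by $\|\nabla L(x)\|_2$. The remaining effort is bookkeeping: checking that all the relevant segments stay inside $K^r$ (which is where the smallness of $\eta$ and $\rho$ is used) and absorbing crude numerical factors into the stated constants.
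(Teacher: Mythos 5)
Your proposal is correct and follows essentially the same route as the paper: Taylor-expand $\Phi$ to second order around $x$, kill $\partial\Phi(x)\nabla L(x)$ via \Cref{lem:property_of_Phi_away_manifold}, and convert $\partial\Phi(x)\nabla^2 L(x)\frac{\nabla L(x)}{\norm{\nabla L(x)}_2}$ into a $\phispectraltwo\norm{\nabla L(x)}_2$ bound using the identity $\dpo x \dlt x \dl x = -\dpt x[\dl x,\dl x]$ from \Cref{lem:relatelphi}, then finish with \Cref{lem:revertbounddl}. The only differences are cosmetic (you spell out the bound on $\norm{\partial\Phi(x)}_2$ and the zero-gradient edge case, which the paper leaves implicit).
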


\begin{proof}[Proof of~\Cref{lem:boundedphi}]

For sufficient small $\rho$, $x + \rho \ndl{x} \in K^r$. By Taylor Expansion,
\begin{align*}
    \|y - x\|_2 &= \eta\| \dl{x+ \rho \ndl{x}} \|_2 \le \eta \|\dl{x} \|_2 + \eta \lspectraltwo\rho
\end{align*}

This further implies that for sufficiently small $\eta$ and $\rho$, $\overline{xy} \in K^r$.

Again by Taylor Expansion,
\begin{align*}
    \|\partial \Phi(x) (y-x) \|_2 &\le \eta \|\partial \Phi(x) \dl x + \rho \partial \Phi(x) \nabla^2 L(x) \ndl{x} \|_2 + \eta \rho^2 \lspectralthree/2\,. \\
\end{align*}
By \Cref{lem:relatelphi}, $\partial \Phi(x) \dl x = 0$ and $\dpo x  \dlt x \dl x =  - \dpt x \left[\dl x, \dl x \right]$. Hence,
\begin{align*}
    \|\partial \Phi(x) (y - x)\|_2 &\le \eta\rho \| \dl x\|_2 \|\partial^2 \Phi(x) \left[\ndl x, \ndl x \right]  \|_2 +  \eta \rho^2 \lspectralthree/2 \\
    &\le \phispectraltwo \eta \rho \|\dl x \|_2 +  \eta \rho^2 \lspectralthree/2\,.
\end{align*}

As $\overline{xy} \in K^r$, by Taylor Expansion,
\begin{align*}
    \| \Phi(y) - \Phi(x) \|_2 \le \| \partial \Phi(x) (y-x) \|_2 + \phispectraltwo \| y-x \|_2^2/2
\end{align*}

Putting together we have 
\begin{align*}
    \|\Phi(x) - \Phi(y) \|_2 &\le \phispectraltwo \eta \rho \|\dl x \|_2 + \eta \rho^2 \lspectralthree + \phispectraltwo \eta^2 \|\dl x \|_2^2 + \phispectraltwo  \lspectraltwo^2 \eta^2 \rho^2\,.
\end{align*}

Finally, by~\Cref{lem:revertbounddl}, we have
\begin{align*}
    \|\Phi(x) - \Phi(y) \|_2 &\le \phispectraltwo \eta \rho \|\dl x \|_2 + \lspectralthree \eta \rho^2  + \phispectraltwo \eta^2 \|\dl x \|_2^2 + \phispectraltwo  \lspectraltwo^2 \eta^2 \rho^2 \\
    &\le  \lspectraltwo \phispectraltwo \eta \rho \|x - \Phi(x)\|_2 + \lspectraltwo^2 \phispectraltwo \eta^2\|x - \Phi(x)\|_2^2  + \lspectralthree \eta \rho^2  + \phispectraltwo  \lspectraltwo^2 \eta^2 \rho^2   \,.
\end{align*}
This completes the proof.
\end{proof}

\begin{lemma}
\label{lem:stochasticboundedphi}
Suppose $x \in K^h$ and $y = x -\eta \dli{x+ \rho \ndli{x}}$, 
\begin{align*}
    \| y - x \|_2 &\le \eta \|\dli{x}\|_2 + \eta \lspectraltwo \rho\,, \\
    \|\Phi(x) - \Phi(y) \|_2 &\le O(\eta \| \nabla L(x)\|_2^2 + \eta \rho \|\nabla L(x) \|_2 + \eta\rho^2)\,.
\end{align*}
\end{lemma}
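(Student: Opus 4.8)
The plan is to mirror the proof of \Cref{lem:boundedphi}, replacing $L$ by $L_k$ inside the one-step update while keeping $\Phi$ --- the limiting map of the gradient flow of the \emph{full} loss $L$ --- as the reference map. The one genuinely new ingredient is that $\partial\Phi(x)\dli{x}$ need not vanish (contrast \Cref{lem:relatelphi}), so the exact cancellation used in the full-batch proof is unavailable; instead I would invoke \Cref{lem:stochastic_phi_lk}, which says that this term is still $O(\norm{x-\Phi(x)}_2^2)$, hence a harmless higher-order quantity.

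For the first bound, note that $x\in K^h$ with $h<r$ forces $x+\rho\ndli{x}\in K^r$ for small $\rho$, so $\nabla L_k$ is $\lspectraltwo$-Lipschitz along the segment joining $x$ and $x+\rho\ndli{x}$; thus $\norm{\dli{x+\rho\ndli{x}}}_2\le\norm{\dli{x}}_2+\lspectraltwo\rho$, and therefore $\norm{y-x}_2=\eta\norm{\dli{x+\rho\ndli{x}}}_2\le\eta\norm{\dli{x}}_2+\eta\lspectraltwo\rho$; for $\eta,\rho$ small this also puts the whole segment $\overline{xy}$ inside $K^r$. (If $\dli{x}=0$, replace $\ndli{x}$ by an arbitrary unit vector; the same estimates hold, matching the modified update in \Cref{appsec:well_definedness}.) For the second bound, Taylor-expand $\Phi$ around $x$ along $\overline{xy}\subseteq K^r$ (using $\overline{\mathcal{C}}^2$-smoothness of $\Phi$, with $\partial\Phi$ bounded and $\norm{\nabla^2\Phi}_2\le\phispectraltwo$ on $K^r$), which gives $\norm{\Phi(y)-\Phi(x)}_2\le\norm{\partial\Phi(x)(y-x)}_2+\tfrac{\phispectraltwo}{2}\norm{y-x}_2^2$. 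For the linear term I would Taylor-expand the ascent gradient, $\dli{x+\rho\ndli{x}}=\dli{x}+\rho\,\dlti{x}\ndli{x}+O(\rho^2)$, so that $\partial\Phi(x)(y-x)=-\eta\,\partial\Phi(x)\dli{x}-\eta\rho\,\partial\Phi(x)\dlti{x}\ndli{x}+O(\eta\rho^2)$, and then bound the first two summands via \Cref{lem:stochastic_phi_lk}, $\norm{\partial\Phi(x)\dli{x}}_2=O(\norm{x-\Phi(x)}_2^2)$ and $\norm{\partial\Phi(x)\dlti{x}\ndli{x}}_2=O(\norm{x-\Phi(x)}_2)$, combined with $\norm{x-\Phi(x)}_2\le\norm{\dl{x}}_2/\mu$ from \Cref{lem:bounddl}; this yields $\norm{\partial\Phi(x)(y-x)}_2=O(\eta\norm{\dl{x}}_2^2+\eta\rho\norm{\dl{x}}_2+\eta\rho^2)$.

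It then remains to absorb the quadratic term $\tfrac{\phispectraltwo}{2}\norm{y-x}_2^2=O(\eta^2\norm{\dli{x}}_2^2+\eta^2\rho^2)$ into the claimed bound. Here I would use that $\nabla L_k$ vanishes on $\Gamma\ni\Phi(x)$ (since $\Gamma\subseteq\Gamma_k$) and is $\lspectraltwo$-Lipschitz on $\overline{x\Phi(x)}\subseteq K^r$ (\Cref{lem:smallerzone}), so $\norm{\dli{x}}_2\le\lspectraltwo\norm{x-\Phi(x)}_2\le(\lspectraltwo/\mu)\norm{\dl{x}}_2$ by \Cref{lem:bounddl}; hence for $\eta$ small, $\eta^2\norm{\dli{x}}_2^2=O(\eta\norm{\dl{x}}_2^2)$ and $\eta^2\rho^2=O(\eta\rho^2)$, and combining with the previous display gives $\norm{\Phi(x)-\Phi(y)}_2=O(\eta\norm{\dl{x}}_2^2+\eta\rho\norm{\dl{x}}_2+\eta\rho^2)$. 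I expect the only delicate point to be the treatment of $\partial\Phi(x)\dli{x}$: in the full-batch proof this is identically zero, and here its non-vanishing is benign only because \Cref{lem:stochastic_phi_lk} --- itself resting on $\mathrm{rank}(\nabla^2 L_k)=1$ with range inside the normal space of $\Gamma$ --- shows it is second order in $\norm{x-\Phi(x)}_2$; everything else is the routine bookkeeping already done in \Cref{lem:boundedphi}.
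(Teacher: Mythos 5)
Your proposal is correct and follows essentially the same route as the paper: the paper's own proof inlines exactly the bounds you obtain by citing \Cref{lem:stochastic_phi_lk} (Taylor-expanding the perturbed gradient, shifting $\partial\Phi(x)$ and $\nabla^2 L_k(x)$ to the base point $\Phi(x)$ where $\partial\Phi(\Phi(x))\nabla^2 L_k(\Phi(x))=0$, then converting $\norm{x-\Phi(x)}_2$ to $\norm{\nabla L(x)}_2/\mu$ via \Cref{lem:bounddl}). If anything, you are slightly more careful than the paper in explicitly absorbing the second-order remainder $\tfrac{\phispectraltwo}{2}\norm{y-x}_2^2$ of the Taylor expansion of $\Phi$, which the paper leaves implicit.
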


\begin{proof}[Proof of~\Cref{lem:stochasticboundedphi}]

For sufficient small $\rho$, $x + \rho \ndli{x} \in K^r$. By Taylor Expansion,
\begin{align*}
    \|y - x\|_2 &= \eta\| \dli{x+ \rho \ndli{x}} \|_2 \le \eta \|\dli{x} \|_2 + \eta \lspectraltwo\rho\,.
\end{align*}

This further implies that for sufficiently small $\eta$ and $\rho$, $\overline{xy} \in K^r$.

Again by Taylor Expansion,
\begin{align*}
    \|\partial \Phi(x) (y-x) \|_2 &\le \eta \|\partial \Phi(x) \dli x + \rho \partial \Phi(x) \nabla^2 L_k(x) \ndli{x} \|_2 + \eta \rho^2 \lspectralthree/2\,. \\
\end{align*}
We further have by~\Cref{lem:bounddl},
\begin{align*}
    &\|\partial \Phi(x) \dli x \| \\
    \le& \| \partial \Phi(\Phi(x)) \dli{x}\| + \phispectraltwo \|\dli x \|_2 \| x - \Phi(x) \| \\
    \le& \| \partial \Phi(\Phi(x)) \nabla^2 L_k(\Phi(x)) (x - \Phi(x)) \| + \lspectralthree \|x - \Phi(x) \|_2^2 +  \lspectraltwo \phispectraltwo \| x - \Phi(x) \|_2^2 \\
    \le& \frac{\lspectralthree}{\mu} \| \nabla L(x) \|_2^2 +  \frac{\lspectraltwo \phispectraltwo }{\mu^2} \| \nabla L(x) \|_2^2 \,.
\end{align*}
Similarly,
\begin{align*}
    &\|\rho \partial \Phi(x) \nabla^2 L_k(x) \ndli{x} \|_2 \\\le& \|\rho \partial \Phi(\Phi(x)) \nabla^2 L_k(x) \ndli{x} \|_2 + \rho \lspectraltwo \phispectraltwo \| x - \Phi(x)\|_2 \\
    \le& \|\rho \partial \Phi(\Phi(x)) \nabla^2 L_k(\Phi(x)) \ndli{x} \|_2+ \rho \lspectraltwo \phispectraltwo \| x - \Phi(x)\|_2 + \rho \lspectralthree \|x - \Phi(x)\|_2 \\
    \le& \rho \frac{\lspectraltwo \phispectraltwo}{\mu^2} \| \nabla L(x) \|_2 + \rho \frac{\lspectralthree}{\mu}  \| \nabla L(x) \|_2\,.
\end{align*}
This completes the proof.
\end{proof}
\section{Analysis for Explicit Bias}
\label{app:explicitbias}

Throughout this section, we assume that \Cref{assump:smoothness} holds.
\subsection{A General Theorem for Explicit Bias in the Limit Case}
\label{sec:explicit_full_appendix}

In this subsection we provide the proof details for \cref{sec:explicit_full}, which shows that  the explicit biases of three notions of sharpness are all different, using our new mathematical tool, \Cref{thm:generalreg}. 

\paragraph{Notation for Regularizers.} Let $R_{\rho}:\R^D\to \R\cup\{\infty\}$ be a family of regularizers parameterized by $\rho$. 
If $R_\rho$ is not well-defined at some $x$, then we let $R_\rho(x)=\infty$. This convention will be useful when analyzing ascent-direction sharpness $\ascsharpness = \ascloss -L$ which is not defined when $\nabla L(x) =0$. This convention will not change the minimizers of the regularized loss. 
Intuitively, a regularizer should always be non-negative, but however, when far away from manifold, there are regularizers $R_\rho(x)$ of our interest that can actually be negative, \emph{e.g.}, $\avgsharpness(x)\approx \frac{\rho^2}{2D}\mathrm{Tr}(\nabla ^2 L(x))$. Therefore we make the following assumption to allow the regularizer to be mildly negative.
\begin{condition}
\label{cond:regular_R}
Suppose for any bounded closed set $B \subset U$, there exists $C > 0$, such that for sufficiently small $\rho$, $\forall x \in B, R_\rho(x) \ge -C \rho^2$.
\end{condition}

\limitingregularizer*
The high-level intuition is that we want to use the notion of limiting regularizer to capture  the explicit bias of $R_\rho$ among the manifold of minimizers $\Gamma$ as $\rho\to 0$, which is decided by the second order term in the Taylor expansion, \emph{e.g.}, \Cref{eq:maxsharp} and \Cref{eq:ascsharp}.
In other words, the hope is that whenever the regularized loss is optimized, the final solution should be in a neighborhood of minimizer $x$ with smallest value of limiting regularizer $S(x)$. However, such hope cannot be true without further assumptions, which motivates the following definition of \emph{good limiting regularizer}. 
 \begin{definition}[Good Limiting Regularizer]\label{defi:good_limiting_regularizer}
 We say the limiting regularizer $S$ of $\{R_\rho\}$ is \emph{good} around some $x^*\in \Gamma$, if $S$ is  non-negative and continuous at $x^*$ and that there is an open set $V_{x^*}$ containing $x^*$, such that   for any $C>0$, $ \inf_{x':\norm{x'-x}_2\le C \rho} R_\rho(x')/\rho^2 $ converges uniformly to $S(x)$ in for all  $x\in \Gamma\cap V_{x^*}$ as $\rho\to 0$.
 
 In other words, a good limiting regularizer satisfy that for any $C,\epsilon>0$, there is some $\rho_{x^*}>0$, 
\begin{align}
    \forall x \in \Gamma\cap V_{x^*} \textrm{ and } \rho \le \rho_{x^*},\quad  \bigl| S(x) - \inf_{\norm{x'-x}_2\le C \cdot\rho} R_\rho(x')/\rho^2\bigr| < \epsilon.\nonumber
\end{align}

We say the limiting regularizer $S$ is \emph{good} on $\Gamma$, if $S$ is good around every point $x\in \Gamma$. In such case we also say $R_\rho$ admits $S$ as a good limiting regularizer on $\Gamma$.
\end{definition}

The intuition of the concept of a good limiting regularizer is that, the value of the regularizer should not drop too fast when moving away from a minimizer $x$ in its $O(\rho)$ neighborhood. If so, the minimizer of the regularized loss may be $\Omega(\rho)$ away from any minimizer to reduce the regularizer at the cost of increasing the original loss, which makes the limiting regularizer unable to capture the explicit bias of the regularizer. (See \Cref{app:nogoodlimitreg} for a counter example)
 We emphasize that the conditions of good limiting regularizer is natural and covers a large family of regularizers, including worst-, ascent- and average-direction sharpness. See \Cref{thm:maxsharp,thm:ascsharp,thm:avgsharp} below.

\begin{theorem}
\label{thm:maxsharp}
Worst-direction sharpness $\maxsharpness$ admits $\lambda_1(\nabla^2 L(\cdot))/2$ as a good limiting regularizer on $\Gamma$ and satisfies \Cref{cond:regular_R}.
\end{theorem}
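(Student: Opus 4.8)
The plan is to establish the two assertions separately, and both turn out to be fairly soft. \Cref{cond:regular_R} is immediate: since $v=0$ is admissible in the supremum defining $\maxloss$, we have $\maxloss(x)=\max_{\norm{v}_2\le1}L(x+\rho v)\ge L(x)$, hence $\maxsharpness(x)\ge0$ at every $x$, so on any bounded closed $B\subseteq U$ one may take $C$ to be any positive number (indeed $C=0$ works).

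Next I would identify the limiting regularizer. Since $\maxsharpness$ is continuous in $x$ --- being a supremum of the continuous functions $x\mapsto L(x+\rho v)-L(x)$ over the compact ball $\{\norm{v}_2\le1\}$ --- the inner limit $r\to0$ in \Cref{defi:limiting_regularizer} just returns $\maxsharpness(x)/\rho^2$, so $S(x)=\lim_{\rho\to0}\maxsharpness(x)/\rho^2$. For $x\in\Gamma$ we have $\nabla L(x)=0$ by \Cref{assump:smoothness}, so the first-order term in the uniform expansion \Cref{eq:maxsharp} drops out and $\maxsharpness(x)=\tfrac{\rho^2}{2}\lambda_1(\nabla^2L(x))+O(\rho^3)$, giving $S(x)=\lambda_1(\nabla^2L(x))/2$. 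This $S$ is non-negative on $\Gamma$ because $\nabla^2L(x)\succeq0$ at a local minimizer, and it is continuous because $\nabla^2L$ is continuous and the top-eigenvalue map on symmetric matrices is continuous.

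The only real work is the uniform-convergence requirement in \Cref{defi:good_limiting_regularizer}. I would fix $x^*\in\Gamma$, let $V_{x^*}$ be any bounded open neighborhood of $x^*$, fix $C>0$, and prove matching bounds on $\inf_{\norm{x'-x}_2\le C\rho}\maxsharpness(x')/\rho^2$ uniformly over $x\in\Gamma\cap V_{x^*}$. The upper bound is free from plugging in $x'=x$: $\maxsharpness(x)/\rho^2=\lambda_1(\nabla^2L(x))/2+O(\rho)$. For the lower bound, the point to appreciate is that for $x'$ sitting $\Theta(\rho)$ off $\Gamma$ the gradient $\nabla L(x')$ has size of order $\rho$, so the first-order contribution to $\maxsharpness(x')$ is a priori of the same order $\rho^2$ as the term we care about; however, since we only need a lower bound, we can choose the \emph{sign} of the test direction so that this contribution only helps. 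Concretely, write $u=v_1(\nabla^2L(x))$ for a unit top eigenvector of $\nabla^2L(x)$ and choose $s\in\{-1,1\}$ with $s\langle\nabla L(x'),u\rangle\ge0$; then, using $\maxsharpness(x')\ge L(x'+\rho su)-L(x')$ and a one-dimensional Taylor expansion,
\[
\maxsharpness(x')\ \ge\ \rho s\langle\nabla L(x'),u\rangle+\tfrac{\rho^2}{2}\langle\nabla^2L(x')u,u\rangle+O(\rho^3)\ \ge\ \tfrac{\rho^2}{2}\langle\nabla^2L(x')u,u\rangle+O(\rho^3).
\]
Because $\norm{x'-x}_2=O(\rho)$ and $\nabla^2L$ is locally Lipschitz, $\langle\nabla^2L(x')u,u\rangle=\lambda_1(\nabla^2L(x))+O(\rho)$, so $\maxsharpness(x')/\rho^2\ge\lambda_1(\nabla^2L(x))/2-O(\rho)$ uniformly over $x'$ in the ball and $x\in\Gamma\cap V_{x^*}$. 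Combining the two bounds gives the uniform convergence, and therefore $\lambda_1(\nabla^2L(\cdot))/2$ is a good limiting regularizer on $\Gamma$.

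The hard part is really just bookkeeping: every Taylor remainder above must be uniform in $x$ and $x'$, and this is where I would take care. All points in play ($x$, $x'$, and $x'\pm\rho u$) lie within $O(\rho)$ of $x^*$ and hence inside a fixed bounded set once $\rho$ is small, on which the derivatives of $L$ up to third order are bounded since $L\in\continuous{4}$; this is what makes all the $O(\rho)$ and $O(\rho^3)$ terms hold with constants depending only on $C$, the neighborhood $V_{x^*}$, and $L$ itself. I do not anticipate any genuine obstacle beyond this.
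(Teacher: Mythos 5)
Your proof is correct and follows essentially the same route as the paper's: continuity of $\maxsharpness$ collapses the inner $r\to 0$ limit, the Taylor expansion at points of $\Gamma$ identifies $\lambda_1(\nabla^2 L(\cdot))/2$, and the lower bound in the good-limiting-regularizer check comes from testing with a top eigenvector whose sign is flipped to make the first-order term nonnegative, plus a Weyl/Lipschitz perturbation bound on the Hessian. Your verification of \Cref{cond:regular_R} via the admissibility of $v=0$ (giving $\maxsharpness\ge 0$ pointwise) is a minor simplification of the paper's Taylor-expansion argument, but does not change the substance.
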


\begin{theorem}
\label{thm:ascsharp} 
Ascent-direction sharpness $\ascsharpness$ admits $\lambda_M(\nabla^2 L(\cdot))/2$ as a good limiting regularizer on $\Gamma$ and satisfies \Cref{cond:regular_R}.
\end{theorem}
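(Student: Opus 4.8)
The plan is to verify the two clauses of \Cref{thm:ascsharp} separately: that $\ascsharpness$ satisfies \Cref{cond:regular_R}, and that $\lambda_M(\nabla^2 L(\cdot))/2$ is its limiting regularizer in the sense of \Cref{defi:limiting_regularizer} and is \emph{good} in the sense of \Cref{defi:good_limiting_regularizer}. \Cref{cond:regular_R} is immediate from \Cref{eq:ascsharp}: on a bounded closed $B\subseteq U$, if $\nabla L(x)=0$ then $\ascsharpness(x)=\infty$, and otherwise $\ascsharpness(x)\ge \rho\|\nabla L(x)\|_2+\tfrac{\rho^2}{2}\lambda_{\min}(\nabla^2 L(x))-O(\rho^3)\ge -C\rho^2$ for all small $\rho$, where $C$ depends only on $\sup_B\|\nabla^2 L\|_2$ and $\sup_{B^{1}}\|\nabla^3 L\|_2$.

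For goodness, fix $x^*\in\Gamma$, take a compact neighborhood $K$ of $x^*$ in $\Gamma$ (from a chart as in \Cref{defi:manifold}) with $\Gamma$-interior $V_{x^*}\ni x^*$, and apply \Cref{lem:mu-pl,lem:bounddl,lem:directiondl} to this $K$. Non-negativity and continuity of $S(x):=\lambda_M(\nabla^2 L(x))/2$ on $K$ hold because $\nabla^2 L(x)\succeq 0$ with rank exactly $M$ on $\Gamma$, so $\lambda_M(\nabla^2 L(x))\ge\mu(K)>0$, and the $M$-th largest eigenvalue depends continuously on $x$. It remains to show, for every $C>0$, that $\inf_{\|x'-x\|_2\le C\rho}\ascsharpness(x')/\rho^2\to\lambda_M(\nabla^2 L(x))/2$ uniformly on $K$ as $\rho\to 0$, which I do via matching bounds. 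For the upper bound, given $x\in K$ let $v_M$ be a unit $\lambda_M(\nabla^2 L(x))$-eigenvector and set $x'=x+\rho^{3/2}v_M$; then $\|x'-x\|_2=\rho^{3/2}\le C\rho$ for small $\rho$, $\nabla L(x')=\rho^{3/2}\lambda_M(\nabla^2 L(x))v_M+O(\rho^3)\ne 0$, so $\nabla L(x')/\|\nabla L(x')\|_2=v_M+O(\rho^{3/2})$ and $\nabla^2 L(x')=\nabla^2 L(x)+O(\rho^{3/2})$; plugging these into \Cref{eq:ascsharp} gives $\ascsharpness(x')/\rho^2=\lambda_M(\nabla^2 L(x))/2+O(\rho^{1/2})$, with the constant uniform over $K$ by \Cref{lem:mu-pl}.

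The lower bound is the main step. Fix $x\in K$ and any $x'$ with $\|x'-x\|_2\le C\rho$; we may assume $\nabla L(x')\ne 0$. Since $\nabla L(x)=0$ and $\nabla L$ is Lipschitz on $K^{r(K)}$, $\|\nabla L(x')\|_2=O(\rho)$, so by \Cref{lem:bounddl} $\|x'-\Phi(x')\|_2=O(\rho)$ and hence $\|\Phi(x')-x\|_2=O(\rho)$. By \Cref{lem:directiondl}, $u:=\nabla L(x')/\|\nabla L(x')\|_2=\tfrac{\nabla^2 L(\Phi(x'))(x'-\Phi(x'))}{\|\nabla^2 L(\Phi(x'))(x'-\Phi(x'))\|_2}+O(\rho)$; the leading term lies in $\operatorname{range}(\nabla^2 L(\Phi(x')))$, which by \Cref{lem:property_of_Phi_on_manifold} is the orthogonal complement of $\ker\nabla^2 L(\Phi(x'))$, so $\|\projt[\Phi(x')]u\|_2=O(\rho)$. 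Using $\nabla^2 L(\Phi(x'))\projt[\Phi(x')]=0$ and Weyl's inequality,
$$u^\top\nabla^2 L(x')u=(\projn[\Phi(x')]u)^\top\nabla^2 L(\Phi(x'))(\projn[\Phi(x')]u)+O(\rho)\ge\lambda_M(\nabla^2 L(\Phi(x')))(1-O(\rho^2))+O(\rho)\ge\lambda_M(\nabla^2 L(x))-O(\rho).$$
Substituting into \Cref{eq:ascsharp} and dropping the nonnegative first-order term yields $\ascsharpness(x')\ge\tfrac{\rho^2}{2}\lambda_M(\nabla^2 L(x))-O(\rho^3)$, i.e.\ $\ascsharpness(x')/\rho^2\ge\lambda_M(\nabla^2 L(x))/2-O(\rho)$ uniformly. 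Combining with the upper bound gives the uniform convergence, hence goodness around $x^*$; since $x^*$ is arbitrary, $\lambda_M(\nabla^2 L(\cdot))/2$ is good on $\Gamma$, and repeating the two-sided estimate with $C\rho$ replaced by an arbitrary radius $r$ and letting $r\to 0$ then $\rho\to 0$ identifies it with the limiting regularizer of \Cref{defi:limiting_regularizer}.

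The main obstacle is exactly the lower bound: one must preclude points $x'$ in the $O(\rho)$-ball whose normalized gradient leaves the column span of the Hessian and thereby drives the $\rho^2$-order coefficient of $\ascsharpness(x')$ below $\tfrac12\lambda_M(\nabla^2 L(x))$. This is where \Cref{assump:smoothness} (maximal Hessian rank) is indispensable; it is exploited through \Cref{lem:directiondl}, which forces the normalized gradient near $\Gamma$ to align with $\operatorname{range}(\nabla^2 L(\Phi(x')))$ up to $O(\|x'-\Phi(x')\|_2)=O(\rho)$. Everything else is uniform bookkeeping of Taylor remainders over the compact set $K$ via \Cref{lem:mu-pl}.
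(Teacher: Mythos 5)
Your proof is correct and follows essentially the same route as the paper's: verify \Cref{cond:regular_R} by Taylor expansion, obtain the lower bound by using \Cref{lem:directiondl} to force the normalized gradient into the column span of $\nabla^2 L(\Phi(x'))$ so the second-order coefficient is at least $\lambda_M/2-O(\rho)$, and obtain the matching upper bound by perturbing along $v_M$ (your choice $x'=x+\rho^{3/2}v_M$ versus the paper's $r\to 0$ limit is an immaterial difference, since the resulting $O(\sqrt{\rho})$ error still vanishes uniformly). No gaps.
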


\begin{theorem}
\label{thm:avgsharp}
Average-direction sharpness $\avgsharpness$ admits $\mathrm{Tr}(\nabla^2 L(\cdot))/(2D)$ as a good limiting regularizer on $\Gamma$ and satisfies \Cref{cond:regular_R}.
\end{theorem}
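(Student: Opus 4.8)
The plan is to prove both assertions from a single uniform second-order Taylor expansion of the averaged loss. Fix an arbitrary compact set $K\subset\R^D$. Since $L\in\continuous{4}$, its third derivatives are bounded on the $1$-neighborhood $K^1$, so Taylor's theorem gives, for every $x\in K$, every unit vector $u$, and every $\rho\le 1$,
\[
L(x+\rho u)=L(x)+\rho\langle\nabla L(x),u\rangle+\tfrac{\rho^2}{2}\,u^\top\nabla^2 L(x)u+r_K(x,u,\rho),\qquad |r_K(x,u,\rho)|\le C_K\rho^3,
\]
where $C_K$ depends only on $K$ (take $C_K=\tfrac16\sup_{K^1}\|\nabla^3 L\|_2$). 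Substituting $u=g/\|g\|_2$ with $g\sim N(0,I)$ and taking expectations, the linear term vanishes because $\E[g/\|g\|_2]=0$ by the symmetry $g\mapsto -g$, and the quadratic term becomes $\tfrac{\rho^2}{2}\Tr\!\big(\nabla^2 L(x)\,\E[(g/\|g\|_2)(g/\|g\|_2)^\top]\big)=\tfrac{\rho^2}{2D}\Tr(\nabla^2 L(x))$, since $g/\|g\|_2$ is uniform on the sphere, hence isotropic with covariance $I_D/D$. Since $|r_K|\le C_K\rho^3$ pointwise, taking expectation preserves this bound, so
\[
\avgsharpness(x)=\avgloss(x)-L(x)=\frac{\rho^2}{2D}\Tr(\nabla^2 L(x))+O(\rho^3),
\]
uniformly over $x\in K$. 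In particular $\avgsharpness$ is well-defined and continuous on all of $\R^D$ — in sharp contrast to $\ascsharpness$ — which is what makes the rest routine. (One could instead feed this expansion into the general machinery of \Cref{thm:generalreg}, but the direct route below is shorter.)

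\Cref{cond:regular_R} is then immediate: for a bounded closed $B\subset U$, the continuous function $\Tr(\nabla^2 L(\cdot))$ is bounded below on $B$, say by $-T_B$; applying the expansion on a compact set containing $B$, for all sufficiently small $\rho$ we get $\avgsharpness(x)\ge-\tfrac{T_B}{2D}\rho^2-C_B\rho^3\ge-C\rho^2$ on $B$.

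For the claim that $\avglimit(x)=\Tr(\nabla^2 L(x))/(2D)$ is a \emph{good} limiting regularizer on $\Gamma$ in the sense of \Cref{defi:good_limiting_regularizer}, I would check the two requirements. Non-negativity and continuity: for $x\in\Gamma$, $x$ is a local minimizer of $L$, so $\nabla^2 L(x)\succeq 0$ and $\Tr(\nabla^2 L(x))\ge 0$; and $\nabla^2 L\in\continuous{2}$ since $L\in\continuous{4}$, so $\avglimit$ is continuous. Uniform convergence of the $C\rho$-ball infimum: fix $x^*\in\Gamma$ and a bounded open $V_{x^*}\ni x^*$ with compact closure, and let $K$ be a fixed compact set containing the $1$-neighborhood of $\overline{V_{x^*}}$. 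For any constant $C>0$, any $x\in\Gamma\cap V_{x^*}$, and $\rho$ small, every $x'$ with $\|x'-x\|_2\le C\rho$ lies in $K$, so the expansion gives $\avgsharpness(x')/\rho^2=\tfrac{1}{2D}\Tr(\nabla^2 L(x'))+O(\rho)$ and Lipschitzness of $\Tr(\nabla^2 L(\cdot))$ on $K$ gives $\Tr(\nabla^2 L(x'))=\Tr(\nabla^2 L(x))+O(\rho)$, with both $O(\cdot)$ constants depending only on $C$ and $K$; hence $\big|\inf_{\|x'-x\|_2\le C\rho}\avgsharpness(x')/\rho^2-\avglimit(x)\big|=O(\rho)\to 0$, uniformly over $x\in\Gamma\cap V_{x^*}$. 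Taking $r\to 0$ then $\rho\to 0$ in \Cref{defi:limiting_regularizer}, continuity of $\avgsharpness$ together with the expansion identifies the limiting regularizer as exactly $\avglimit$. There is no genuine obstacle here; the only point demanding care is bookkeeping the uniformity of the Taylor remainder simultaneously over the unit sphere (for the expectation) and over the compact set, so that all the $O(\cdot)$ constants are truly independent of $x$ and $x'$.
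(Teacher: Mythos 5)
Your proof is correct and follows essentially the same route as the paper's: a second-order Taylor expansion with uniform third-derivative control on a compact set, using $\E[g/\|g\|_2]=0$ to kill the linear term and isotropy of $g/\|g\|_2$ to turn the quadratic term into $\tfrac{\rho^2}{2D}\Tr(\nabla^2 L(x))$, then reading off \Cref{cond:regular_R}, the limiting regularizer, and the goodness property from that expansion. If anything, your verification of \Cref{cond:regular_R} is slightly more careful than the paper's (which drops the possibly negative trace term and claims an $O(\rho^3)$ lower bound), since you correctly retain the $-\tfrac{T_B}{2D}\rho^2$ contribution, which is exactly what the $-C\rho^2$ allowance in \Cref{cond:regular_R} is there for.
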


Next we present the main mathematical tool to analyze the explicit bias of regularizers admitting good limiting regularizers, \Cref{thm:generalreg}.

\begin{theorem}
\label{thm:generalreg}
Let $U'$ be any bounded open set such that its closure $\overline{U'}\subseteq U$ and  $\overline {U'}\cap \Gamma =\overline{ U'\cap \Gamma}$. Then for any family of parametrized regularizers  $\{R_\rho\}$ admitting a good limiting regularizer $S(x)$ on $\Gamma$ and satisfying \Cref{cond:regular_R}, for sufficiently small $\rho$, it holds that
\begin{align*}
    \Bigl |\inf_{x\in U'}\big({L(x) + R_\rho(x)}\big) -
    \inf_{x\in U'} L(x) - \rho^2\inf_{x \in U' \cap \Gamma} S(x) \Bigr| \le o(\rho^2).
\end{align*}

Moreover, for sufficiently small $\rho$, it holds  uniformly for all $u\in U'$ that 
\begin{align}
 L(u) +  R_\rho(u) \le \inf_{x\in U'} ({L(x) + R_\rho(x)}) +O(\rho^2) \implies 	 R_\rho(u)/\rho^2 - \inf_{x \in {U'} \cap \Gamma} S(x) \ge -o(1). \notag
\end{align}
\end{theorem}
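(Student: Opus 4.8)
The plan is to prove \Cref{thm:generalreg} in two parts: an upper bound and a lower bound on $\inf_{x\in U'}(L(x)+R_\rho(x))$, and then deduce the ``moreover'' implication from the lower bound. For the upper bound, I would fix any $x^*\in U'\cap\Gamma$ (WLOG we may pick $x^*$ to be an $o(1)$-approximate minimizer of $S$ on $\overline{U'}\cap\Gamma$) and, using that $S$ is a \emph{good} limiting regularizer around $x^*$, choose for each small $\rho$ a point $x'_\rho$ with $\|x'_\rho-x^*\|_2\le C\rho$ for a suitable constant $C$ such that $R_\rho(x'_\rho)/\rho^2\le S(x^*)+o(1)$. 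Since $x^*$ is a minimizer of the original loss $L$ and $\nabla L(x^*)=0$, Taylor expansion gives $L(x'_\rho)\le L(x^*)+O(\rho^2)=\inf_{U'}L+O(\rho^2)$; actually the $O(\rho^2)$ here should be folded carefully — one wants $L(x'_\rho)-\inf_{U'}L \le o(\rho^2)$ only if $x'_\rho$ is taken with $\|x'_\rho-x^*\|_2=o(\rho)$... this is precisely the subtlety, see below. Combining, $L(x'_\rho)+R_\rho(x'_\rho)\le \inf_{U'}L+\rho^2\inf_{U'\cap\Gamma}S+o(\rho^2)$, so $\inf_{U'}(L+R_\rho)$ is at most this.

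For the lower bound, take any $u=u_\rho\in U'$; I want to show $L(u)+R_\rho(u)\ge \inf_{U'}L+\rho^2\inf_{U'\cap\Gamma}S - o(\rho^2)$. The idea is to compare $u$ with its gradient-flow limit $\Phi(u)$, which lies in $\Gamma$ (if $\Phi(u)$ stays in $U'$; one handles the boundary case using $\overline{U'}\cap\Gamma=\overline{U'\cap\Gamma}$ by a limiting argument, or restricts to $u$ in a slightly smaller set and absorbs the rest into $o(\rho^2)$). If $\|u-\Phi(u)\|_2 = \omega(\rho)$, then $u$ is $\omega(\rho)$ away from the manifold, and since $L$ is $\mu$-PL near $\Gamma$ (\Cref{lem:mu-pl}) we get $L(u)-\inf L \gtrsim \|u-\Phi(u)\|_2^2 = \omega(\rho^2)$, while \Cref{cond:regular_R} only allows $R_\rho(u)\ge -C\rho^2$, so the sum already exceeds the target — done. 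If instead $\|u-\Phi(u)\|_2 = O(\rho)$, write $u=\Phi(u)+w$ with $\|w\|_2\le C'\rho$; then $L(u)\ge \inf_{U'}L$ trivially, and by the \emph{uniform} convergence in the definition of good limiting regularizer applied at the point $x=\Phi(u)\in\Gamma\cap U'$ with constant $C'$, we get $R_\rho(u)/\rho^2 \ge \inf_{\|x'-\Phi(u)\|_2\le C'\rho}R_\rho(x')/\rho^2 \ge S(\Phi(u))-o(1)\ge \inf_{U'\cap\Gamma}S - o(1)$. Summing gives the lower bound. A mild technical point: the $o(1)$ must be uniform over all $u$, which is why the definition of good limiting regularizer bakes in uniform convergence over $\Gamma\cap V_{x^*}$ and why one covers $\overline{U'}\cap\Gamma$ by finitely many such neighborhoods $V_{x^*}$ using compactness.

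For the ``moreover'' part, suppose $L(u)+R_\rho(u)\le \inf_{U'}(L+R_\rho)+O(\rho^2)$. Combining with the established estimate $\inf_{U'}(L+R_\rho)=\inf_{U'}L+\rho^2\inf_{U'\cap\Gamma}S+o(\rho^2)$ and with $L(u)\ge\inf_{U'}L$, we get $R_\rho(u)\le \rho^2\inf_{U'\cap\Gamma}S+O(\rho^2)$, i.e.\ an upper bound; but we want a \emph{lower} bound $R_\rho(u)/\rho^2\ge\inf_{U'\cap\Gamma}S-o(1)$. This lower bound is exactly what the case analysis above yields: either $\|u-\Phi(u)\|_2=\omega(\rho)$, in which case $L(u)+R_\rho(u)$ would be $\omega(\rho^2)$ above $\inf_{U'}L$, contradicting $L(u)+R_\rho(u)\le\inf_{U'}L+\rho^2\inf S+O(\rho^2)$ when $\inf_{U'\cap\Gamma}S$ and the $O(\rho^2)$ are bounded — so this case is ruled out for $\rho$ small; or $\|u-\Phi(u)\|_2=O(\rho)$, in which case the good-limiting-regularizer bound directly gives $R_\rho(u)/\rho^2\ge S(\Phi(u))-o(1)\ge\inf_{U'\cap\Gamma}S-o(1)$.

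The main obstacle I anticipate is the interplay between the two scales $o(\rho)$ and $O(\rho)$ in the upper-bound construction: to get the clean conclusion $L(u)-\inf_{U'}L\le o(\rho^2)$ (as opposed to merely $O(\rho^2)$) one cannot afford a perturbation of size $\Theta(\rho)$ off the manifold, since $L$ grows quadratically. The resolution is that the good-limiting-regularizer definition quantifies over \emph{all} constants $C>0$, so in the upper bound one can let $C=C(\rho)\to 0$ slowly (e.g.\ $C(\rho)=\eps(\rho)^{-1/2}\cdot$ something, chosen so that $C(\rho)\rho=o(\rho)$ while the convergence $\inf_{\|x'-x^*\|\le C(\rho)\rho}R_\rho(x')/\rho^2\to S(x^*)$ still holds); then $\|x'_\rho-x^*\|_2=o(\rho)$ forces $L(x'_\rho)-\inf L=o(\rho^2)$ by Taylor expansion at the critical point $x^*$. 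Making this diagonal choice rigorous, and likewise ensuring all the ``$o(1)$/$o(\rho^2)$'' error terms are genuinely uniform over $U'$ via a finite subcover of $\overline{U'}\cap\Gamma$, is where the real care is needed; the rest is bookkeeping with \Cref{lem:mu-pl}, \Cref{lem:bounddl}, and \Cref{cond:regular_R}.
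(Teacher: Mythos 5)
Your proposal is correct and follows essentially the same route as the paper's proof: a three-way case analysis on $\dist(u,\Gamma)$ for the lower bound (loss bounded below by a constant far from $\Gamma$, the PL inequality plus \Cref{cond:regular_R} at distance $\gtrsim\rho$, and the good-limiting-regularizer bound applied at $\Phi(u)$ with a finite subcover of $\overline{U'\cap\Gamma}$ at distance $O(\rho)$ — this is exactly \Cref{lem:lower_bound_R}), with the "moreover" part deduced just as you describe. The one place you overcomplicate is the upper bound: the diagonal choice $C(\rho)\to 0$ is unnecessary, because \Cref{defi:limiting_regularizer} already takes $r\to 0$ \emph{before} $\rho\to 0$, and since $r\mapsto\inf_{\|x'-x\|\le r}R_\rho(x')$ is monotone you may simply take the witness point within distance $\rho^2$ of an approximate minimizer of $S$, which makes $L(x_1)-\inf L=O(\rho^4)$ immediately (this is what \Cref{lem:boundregularizedloss} does).
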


\Cref{thm:generalreg} says that minimizing the regularized loss $L(u) +  R_\rho(u)$ is not very different from minimizing the original loss $L(u)$ and the regularizer $ R_\rho(u)$ respectively. To see this, we define the following optimality gaps 
\begin{align}
	A(u) & \triangleq      L(u) +  R_\rho(u) -\inf_{x\in U'} ({L(x) + R_\rho(x)}) \ge 0 \notag\\
	B(u) & \triangleq       L(u) - \inf_{x\in U'} L(x) \ge 0 \notag\\
	C(u) & \triangleq  R_\rho(u)/\rho^2 - \inf_{x \in {U'} \cap \Gamma} S(x), \notag
\end{align}
and \Cref{thm:generalreg} implies that $\left|A(u)-B(u)-\rho^2 C(u)\right| = o(\rho^2)$. Moreover, $A(u),B(u)$ are non-negative by definition, and $C(u)\ge -o(1)$ are almost non-negative, whenever $A(u)$ is $O(
\rho^2)$-approximately optimized.

For the applications we are interested in in this paper, the good limiting regularizer $S$ can be continuously extended to the entire space $\R^D$. 
In such a case, the third optimality gap has an approximate alternative form which doesn't involve $R_\rho$, namely $S(u) - \inf_{x \in \overline{U'} \cap \Gamma} S(x)$. 
 \Cref{corr:general_reg_extension} shows minimizing regularized loss $L(u) +  R_\rho(u)$  is equivalent to minimizing the limiting regularizer, $S(u)$ around the manifold of local minimizer, $\Gamma$.

\begin{corollary}\label{corr:general_reg_extension}
Under the setting of \Cref{thm:generalreg}, let $\overline{S}$ be an continuous extension of $S$ to $\R^d$. For any optimality gap $\Delta>0$,  there is a function $\eps:\R^+\to \R^+$ with $\lim_{\rho\to 0}\eps(\rho)=0$, such that for all sufficiently small $\rho>0$ and all $u\in U'$ satisfying that 
$$L(u) +  R_\rho(u) -\inf\limits_{x\in U'}\bigl({L(x) + R_\rho(x)}\bigr)  \le \Delta \rho^2,$$ 
it holds that $ L(u) - \inf_{x\in U'} L(x) \le (\Delta+\eps(\rho))\rho^2$ and  that 
$$ \overline{S}(u)-\inf_{x\in U'\cap \Gamma}\overline{S}(x) \in[-\eps(\rho), \Delta+\eps(\rho)].$$ 
\end{corollary}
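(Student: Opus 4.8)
The plan is to deduce \Cref{corr:general_reg_extension} from \Cref{thm:generalreg} together with one extra observation: when $u\in U'$ satisfies $L(u)=O(\rho^2)$, the rescaled regularizer $R_\rho(u)/\rho^2$ and the extended limiting regularizer $\overline S(u)$ agree up to $o(1)$. Indeed, \Cref{thm:generalreg} already controls the three optimality gaps
\begin{align*}
A(u) &= L(u)+R_\rho(u)-\inf_{x\in U'}\bigl(L(x)+R_\rho(x)\bigr),\quad
B(u) = L(u)-\inf_{x\in U'}L(x),\\
C(u) &= R_\rho(u)/\rho^2-\inf_{x\in U'\cap\Gamma}S(x),
\end{align*}
through $|A(u)-B(u)-\rho^2 C(u)|=o(\rho^2)$ and $C(u)\ge -o(1)$ whenever $A(u)=O(\rho^2)$; and $\inf_{U'\cap\Gamma}S=\inf_{U'\cap\Gamma}\overline S$ since $\overline S$ extends $S$. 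So the genuinely new work is only to trade $R_\rho(u)/\rho^2$ for $\overline S(u)$.

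\textbf{Main steps.}
First I would unwind the hypothesis: given $u\in U'$ with $A(u)\le\Delta\rho^2$, nonnegativity of $A,B$ plus $|A-B-\rho^2 C|=o(\rho^2)$ yields $C(u)\le\Delta+o(1)$, and combined with $C(u)\ge -o(1)$ also $B(u)=L(u)-\inf_{U'}L\le(\Delta+o(1))\rho^2$, which is already the first claimed bound. In particular $L(u)=O(\rho^2)$, so by the PL inequality on a fixed compact neighborhood of $\overline{U'}\cap\Gamma$ (\Cref{lem:mu-pl}, \Cref{lem:bounddl}, \Cref{lem:smallerzone}) we get $\|u-\Phi(u)\|_2=O(\sqrt{L(u)})=O(\rho)$ with $p:=\Phi(u)\in\Gamma$. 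Next, because $S$ is a \emph{good} limiting regularizer (\Cref{defi:good_limiting_regularizer}), applied at $p$ with the constant $C$ from the previous step, $\inf_{\|x'-p\|_2\le C\rho}R_\rho(x')/\rho^2\to S(p)$ uniformly over $p$ in a $\Gamma$-neighborhood, hence $R_\rho(u)/\rho^2\ge S(p)-o(1)$. By uniform continuity of $\overline S$ on a fixed compact neighborhood and $\overline S|_\Gamma=S$, $S(p)=\overline S(p)=\overline S(u)+o(1)$, so
$$\overline S(u)\le R_\rho(u)/\rho^2+o(1)\le\inf_{U'\cap\Gamma}S+\Delta+o(1)=\inf_{U'\cap\Gamma}\overline S+\Delta+o(1),$$
the upper half of the interval. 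For the lower half, $\overline S(u)=S(\Phi(u))+o(1)$ and $\Phi(u)\in\Gamma$ lies within $O(\rho)$ of the compact set $\overline{U'}\cap\Gamma$; since $S$ is continuous, the infimum of $S$ over $\Gamma$-points within $O(\rho)$ of $\overline{U'}\cap\Gamma$ tends to $\inf_{\overline{U'}\cap\Gamma}S$, and $\inf_{\overline{U'}\cap\Gamma}S=\inf_{U'\cap\Gamma}S$ because $\overline{U'}\cap\Gamma=\overline{U'\cap\Gamma}$ and $S$ is continuous, so $\overline S(u)\ge\inf_{U'\cap\Gamma}\overline S-o(1)$. Collecting the $o(1)$ and $o(\rho^2)$ terms into a single $\eps(\rho)$, uniform in $u$, finishes the proof.

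\textbf{Main obstacle.}
The delicate step is the last one: the behavior near $\partial U'$. One must check that $\Phi(u)$, though possibly lying slightly outside $\overline{U'}$, stays within $O(\rho)$ of $\overline{U'}\cap\Gamma$, and that the infima of $S$ over these shrinking $\Gamma$-neighborhoods of the compact set $\overline{U'}\cap\Gamma$ converge to $\inf_{U'\cap\Gamma}S$; this is precisely where the hypothesis $\overline{U'}\cap\Gamma=\overline{U'\cap\Gamma}$ (rather than mere inclusion) is used, since it makes $U'\cap\Gamma$ dense in $\overline{U'}\cap\Gamma$ so the two infima coincide. A secondary point is uniformity: all the $o(1)$ errors must be dominated by one function $\eps(\rho)$ valid for every eligible $u$, which follows from the uniform convergence built into \Cref{defi:good_limiting_regularizer} and from uniform continuity of $L$, $\Phi$, and $\overline S$ on the relevant compacta; \Cref{cond:regular_R} is what prevents $R_\rho$ from being so negative off $\Gamma$ that these estimates break down.
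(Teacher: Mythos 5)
Your proposal is correct and follows essentially the same route as the paper's proof: first claim and the two-sided bound on $R_\rho(u)/\rho^2$ from \Cref{thm:generalreg}, then $\|u-\Phi(u)\|_2=O(\rho)$ via \Cref{lem:gap_of_U_Gamma_loss} and \Cref{lem:bounddl}, the lower bound $R_\rho(u)/\rho^2\ge S(\Phi(u))-o(1)$ from the good-limiting-regularizer property (\Cref{lem:lower_bound_R}), and continuity of $\overline S$ to transfer everything to $\overline S(u)$. Your explicit discussion of the boundary issue and the role of $\overline{U'}\cap\Gamma=\overline{U'\cap\Gamma}$ matches what the paper delegates to \Cref{lem:equivalence_of_U} and \Cref{lem:gap_of_U_Gamma_loss}.
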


\subsection{Bad Limiting Regularizers May Not Capture Explicit Bias}
\label{app:nogoodlimitreg}
In this subsection, we provide an example where a \emph{bad} limiting regularizer cannot capture the explicit bias of regularizer when $\rho\to 0$,  to  justify the necessity of~\Cref{defi:good_limiting_regularizer}. Here a bad limiting regularizer is a limiting regularizer which is not good.

Consider choosing $R_\rho(x) = L(x + \rho e) - L(x)$ with $\| e \|  = 1$ as a fixed unit vector. We will show minimizing the regularized loss $L(x) + R_\rho(x)$ does not imply minimizing the limiting regularizer of $R_\rho(x)$ on the manifold.

By~\Cref{defi:limiting_regularizer} and the continuity of $R_\rho$, the limiting regularizer $S$ of $R_\rho$ is
\begin{align*}
  \forall x\in\Gamma,\quad   S(x)= \lim_{\rho \to  0} \lim_{r \to 0} \inf_{\|x' -x\|_2 \le r} R_\rho(x') / \rho^2 = \lim_{\rho \to  0} R_\rho(x)/\rho^2 = \nabla^2 L(x)[e,e]\ge 0.
\end{align*}

However, for any $x \in \Gamma$, we can choose $x' = x - \rho e$, then
\begin{align*}
    L(x') + R_\rho(x') = L(x' + \rho e) = L(x) = 0.
\end{align*}
Therefore, no matter how small $\rho$ is, minimizing $L(x)+R_\rho(x)$ can return a solution which is $\rho$-close to any point point of $\Gamma$. In other words, the explicit bias of minimizing $L(x)+R_\rho(x)$ is trivial and thus is not equivalent to minimizing the limiting regularizer $S$ on the manifold $\Gamma$.

The reason behind the inefficacy of the limiting regularizer $S$ in explaining the explicit bias of $R_\rho$ is that 
$S(x)$ is not a good limiting regularizer for any $x \in \Gamma$ satisfying $S(x)>0$. To be more concrete, choose $C = 1$ and $\epsilon =  S(x)/2$ in~\Cref{defi:good_limiting_regularizer}. For any $x \in \Gamma$ and sufficiently small $\rho > 0$, considering $x' = x - \rho e_1$, by Taylor Expansion,
\begin{align*}
    R_\rho(x') &=  L(x' + \rho e) - L(x')\\
    &= \rho \langle \nabla L(x'), e \rangle + \rho^2 \nabla^2 L(x')[e,e] + o(\rho^2) \\
    &= \rho \langle \nabla^2 L(x) (x' - x), e \rangle +  \rho^2 \nabla^2 L(x')[e , e] + o(\rho^2) \\
    &= -\rho^2\nabla^2 L(x)[e,e] +   \rho^2 \nabla^2 L(x')[e , e] + o(\rho^2) \\
    &= \rho^2 e^T(\nabla^2 L(x') - \nabla^2 L(x))e +  o(\rho^2) = o(\rho^2)
\end{align*}

This implies $\inf_{\|x' -x\|_2 \le C\rho} R_\rho(x') \le R_\rho(x_1) =  o(\rho^2)$. Hence,
\begin{align*}
    S(x) - \inf_{\|x' -x\|_2 \le C\rho} R_\rho(x')/ \rho^2 \ge S(x) - o(1) > S(x)/2 = \epsilon.
\end{align*}

\subsection{Proof of \Cref{thm:generalreg}}
\label{app:explicitmain}
This subsection aims to prove \Cref{thm:generalreg}. We start with a few lemmas that will be used later.

\begin{lemma}\label{lem:Gamma_relative_close_in_U}
$\Gamma =U \cap \overline \Gamma$.	
\end{lemma}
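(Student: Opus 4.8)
\textbf{Plan for proving \Cref{lem:Gamma_relative_close_in_U}.} We want to show $\Gamma = U \cap \overline{\Gamma}$, where $U$ is the attraction set from \Cref{defi:attraction_set} and $\overline{\Gamma}$ is the closure of $\Gamma$ in $\R^D$. The inclusion $\Gamma \subseteq U \cap \overline{\Gamma}$ is immediate: every $x \in \Gamma$ satisfies $\Phi(x) = x \in \Gamma$ by definition (gradient flow started on the manifold stays put, since $\nabla L(x) = 0$ there), so $x \in U$; and trivially $\Gamma \subseteq \overline{\Gamma}$. The content is in the reverse inclusion $U \cap \overline{\Gamma} \subseteq \Gamma$.

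The plan is as follows. Let $x \in U \cap \overline{\Gamma}$. Since $x \in U$, by definition $\Phi(x)$ exists and $\Phi(x) \in \Gamma$. I would argue that $\Phi(x) = x$, which then gives $x \in \Gamma$. To see this, first use that $x \in \overline{\Gamma}$: there is a sequence $x_n \in \Gamma$ with $x_n \to x$. Each $x_n$ is a local minimizer of $L$ with $\nabla L(x_n) = 0$, so by continuity of $\nabla L$ we get $\nabla L(x) = 0$; that is, $x$ is itself a stationary point. Now I want to conclude that the gradient flow starting at $x$ is constant, hence $\Phi(x) = x$. The gradient flow ODE $\dot\phi(x,t) = -\nabla L(\phi(x,t))$ with $\nabla L(x) = 0$ has the constant curve $\phi(x,t) \equiv x$ as a solution, and since $L$ is $\continuous{4}$ (in particular $\nabla L$ is locally Lipschitz), this solution is unique. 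Therefore $\Phi(x) = \lim_{t\to\infty}\phi(x,t) = x \in \Gamma$.

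Alternatively, and perhaps cleaner given the machinery already set up: $U$ is open and $\Phi$ is continuous (indeed $\overline{\mathcal{C}}^2$) on $U$ by the remark following \Cref{defi:attraction_set}. Since $x \in U$ and $x_n \to x$ with $x_n \in \Gamma \subseteq U$ (using $\Gamma \subseteq U$ from the first paragraph), continuity of $\Phi$ gives $\Phi(x) = \lim_n \Phi(x_n) = \lim_n x_n = x$, using $\Phi(x_n) = x_n$ for $x_n \in \Gamma$. Hence $x = \Phi(x) \in \Gamma$. This version avoids re-deriving uniqueness of the gradient flow at a stationary point and just leans on the stated regularity of $\Phi$ on $U$.

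I do not expect a serious obstacle here — the lemma is essentially a bookkeeping statement that the manifold $\Gamma$ is relatively closed in the open attraction set $U$, which follows from $\Phi$ being continuous on $U$ and acting as the identity on $\Gamma$. The only mild subtlety is making sure we may take the limiting sequence $x_n$ inside $U$ (so that continuity of $\Phi$ applies), which is exactly why the first-paragraph observation $\Gamma \subseteq U$ is needed; after that the argument is a one-line limit computation. I would present the $\Phi$-continuity version as the main proof and possibly remark on the direct ODE-uniqueness argument as an aside.
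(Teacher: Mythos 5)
Your main argument (continuity of $\Phi$ on $U$ applied to a sequence $x_n\in\Gamma$ with $x_n\to x$, together with $\Phi\vert_\Gamma=\mathrm{id}$) is exactly the paper's proof, and it is correct. The alternative ODE-uniqueness remark is a valid but unnecessary aside.
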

\begin{proof}[Proof of \Cref{lem:Gamma_relative_close_in_U}]
	For any point $x\in U \cap \overline \Gamma$, there exists $\{x_k\}_{k=1}^\infty \in \Gamma$ such that $\lim_{k\to\infty}x_k=x$. Since $x\in U$ and $\Phi$ is continuous in $U$, it holds that $\Phi$ is continuous at $x$, thus $\lim_{k\to \infty}\Phi(x_k) = \Phi(x)\in \Gamma$. However $\Phi(x_k)=x_k$ because $x_k\in \Gamma, \forall k$. Thus we know $x = \Phi(x) \in \Gamma$. Hence $U \cap \overline \Gamma \subset \Gamma$. The other side is clear because $\Gamma \subset U$ and $\Gamma \subset \overline{\Gamma}$. 
\end{proof}

\begin{lemma}\label{lem:equivalence_of_U}
	Let $U'$ be any bounded open set such that its closure $\overline{U'}\subseteq U$.  If
 $	\overline {U'}\cap \Gamma \subseteq \overline{ U'\cap \Gamma}$, then $ \overline {U'}\cap \Gamma = \overline{ U'\cap \Gamma} $.
\end{lemma}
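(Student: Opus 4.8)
The statement to prove is \Cref{lem:equivalence_of_U}: if $U'$ is a bounded open set with $\overline{U'}\subseteq U$ and $\overline{U'}\cap\Gamma\subseteq\overline{U'\cap\Gamma}$, then in fact $\overline{U'}\cap\Gamma=\overline{U'\cap\Gamma}$.

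\bigskip

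The plan is to prove the reverse inclusion $\overline{U'\cap\Gamma}\subseteq\overline{U'}\cap\Gamma$, since the forward inclusion is the hypothesis. First I would observe that $U'\cap\Gamma\subseteq\overline{U'}$ trivially, and $\overline{U'}$ is closed, so $\overline{U'\cap\Gamma}\subseteq\overline{U'}$. It remains to show $\overline{U'\cap\Gamma}\subseteq\Gamma$. Here is where I would invoke \Cref{lem:Gamma_relative_close_in_U}, which states $\Gamma=U\cap\overline{\Gamma}$. Take any $x\in\overline{U'\cap\Gamma}$. Then $x\in\overline{U'}\subseteq U$, and also $x\in\overline{U'\cap\Gamma}\subseteq\overline{\Gamma}$ since $U'\cap\Gamma\subseteq\Gamma$. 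Hence $x\in U\cap\overline{\Gamma}=\Gamma$ by \Cref{lem:Gamma_relative_close_in_U}. This shows $\overline{U'\cap\Gamma}\subseteq\overline{U'}\cap\Gamma$, and combined with the hypothesis gives equality.

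\bigskip

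There is really no main obstacle here — the statement is a short point-set topology fact, and the only nontrivial input is \Cref{lem:Gamma_relative_close_in_U}, which says $\Gamma$ is relatively closed in the open attraction set $U$. The key idea to keep in mind is that the hypothesis $\overline{U'}\subseteq U$ is exactly what lets us transfer the "closedness of $\Gamma$ inside $U$" to closedness of $U'\cap\Gamma$ within $\overline{U'}$. One subtlety to be careful about in writing this up: we should make sure that the chain $\overline{U'\cap\Gamma}\subseteq\overline{\Gamma}$ is justified by monotonicity of closure applied to $U'\cap\Gamma\subseteq\Gamma\subseteq\overline{\Gamma}$, and that $\overline{U'\cap\Gamma}\subseteq\overline{U'}$ is justified by monotonicity applied to $U'\cap\Gamma\subseteq U'$ together with the fact that $\overline{U'}$ is already closed. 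After that, the whole argument is a two-line containment chase, and we can conclude immediately.
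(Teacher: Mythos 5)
Your proof is correct and follows essentially the same route as the paper: both establish the reverse inclusion $\overline{U'\cap\Gamma}\subseteq\overline{U'}\cap\Gamma$ by combining monotonicity of closure with \Cref{lem:Gamma_relative_close_in_U} ($\Gamma=U\cap\overline{\Gamma}$) and the hypothesis $\overline{U'}\subseteq U$. The paper merely phrases it by first rewriting $\overline{U'}\cap\Gamma$ as $\overline{U'}\cap\overline{\Gamma}$, which is the same containment chase.
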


\begin{proof}[Proof of \Cref{lem:equivalence_of_U}]
	By \Cref{lem:Gamma_relative_close_in_U}, it holds that $\overline {U'}\cap \Gamma = \overline {U'}\cap  U \cap \overline{\Gamma} = \overline{U'}\cap \overline \Gamma$. Note that $\overline{ U'\cap \Gamma}\subseteq \overline {U'}, \overline{ U'\cap \Gamma} \subseteq \overline \Gamma$, we have that $\overline{ U'\cap \Gamma}\subseteq  \overline{U'}\cap \overline \Gamma = \overline {U'}\cap \Gamma$, which completes the proof.
\end{proof}

\begin{lemma}
\label{lem:gap_of_U_Gamma}
Let $U'$ be any bounded open set such that its closure $\overline{U'}\subseteq U$ and  $\overline {U'}\cap \Gamma \subseteq \overline{ U'\cap \Gamma}$. Then for all $h_2 > 0$,$\exists \rho_0 > 0$ if $x \in U', \dist(x, \Gamma) \le \rho_0 \Rightarrow \dist(x, \overline {U'\cap \Gamma}) \le h_2$.
\end{lemma}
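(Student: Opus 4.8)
This is a compactness statement, so I would prove it by contradiction using sequential compactness of $\overline{U'}$ (which is compact since $U'$ is bounded, hence $\overline{U'}$ is closed and bounded in $\R^D$). The key facts are: (i) for any fixed set $S$, the map $x\mapsto \dist(x,S)$ is $1$-Lipschitz, hence continuous; (ii) $\Gamma = U\cap\overline\Gamma$ (\Cref{lem:Gamma_relative_close_in_U}); and (iii) the hypothesis $\overline{U'}\cap\Gamma\subseteq\overline{U'\cap\Gamma}$.

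\textbf{Main steps.} First I would negate the conclusion: suppose there is $h_2>0$ and a sequence $x_n\in U'$ with $\dist(x_n,\Gamma)\le 1/n$ but $\dist(x_n,\overline{U'\cap\Gamma})> h_2$ for every $n$. Second, since $\overline{U'}$ is compact, pass to a subsequence (not relabeled) with $x_n\to x^*$ for some $x^*\in\overline{U'}\subseteq U$. Third, by continuity of $x\mapsto\dist(x,\Gamma)$ and $\dist(x_n,\Gamma)\to 0$, we get $\dist(x^*,\Gamma)=0$, i.e. $x^*\in\overline\Gamma$; combined with $x^*\in U$ and \Cref{lem:Gamma_relative_close_in_U} this gives $x^*\in U\cap\overline\Gamma=\Gamma$, and therefore $x^*\in\overline{U'}\cap\Gamma\subseteq\overline{U'\cap\Gamma}$ by hypothesis, so $\dist(x^*,\overline{U'\cap\Gamma})=0$. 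Fourth, by continuity of $x\mapsto\dist(x,\overline{U'\cap\Gamma})$ and $\dist(x_n,\overline{U'\cap\Gamma})> h_2$, taking the limit yields $\dist(x^*,\overline{U'\cap\Gamma})\ge h_2>0$, contradicting the previous sentence. Hence such a sequence cannot exist; choosing $\rho_0=1/N$ for $N$ large is not even needed—the contradiction directly produces the desired $\rho_0$ for the given $h_2$.

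\textbf{Anticipated obstacle.} There is no deep obstacle here; the only point requiring care is the third step—correctly upgrading $x^*\in\overline\Gamma$ to $x^*\in\Gamma$, which is exactly where the assumption $\overline{U'}\subseteq U$ (so that $\Phi$ is defined and continuous near $x^*$, via \Cref{lem:Gamma_relative_close_in_U}) is used, together with the hypothesis $\overline{U'}\cap\Gamma\subseteq\overline{U'\cap\Gamma}$ linking $\Gamma$ back to $\overline{U'\cap\Gamma}$. Everything else is routine manipulation of distance functions and subsequences.
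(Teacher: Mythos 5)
Your proof is correct and follows essentially the same route as the paper: contradiction, sequential compactness of $\overline{U'}$, and showing the accumulation point lies in $\Gamma$ so that the hypothesis $\overline{U'}\cap\Gamma\subseteq\overline{U'\cap\Gamma}$ yields the contradiction. The only difference is cosmetic — you invoke \Cref{lem:Gamma_relative_close_in_U} to upgrade $x^*\in U\cap\overline\Gamma$ to $x^*\in\Gamma$, whereas the paper re-derives that fact inline via the continuity of $\Phi$ and a sequence $y_k\in\Gamma$ approximating $x_k$.
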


\begin{proof}[Proof of \Cref{lem:gap_of_U_Gamma}]
We will prove by contradiction. Suppose  there exists $h_2>0$ and $\{x_k\}_{k=1}^\infty \in U'$, such that $\lim_{k\to\infty}\dist(x_k, \Gamma)=0$ but $\forall k>0, \dist(x_k, \overline {U'\cap \Gamma}) \ge h_2$. Since $U'$ is bounded, $\overline{U'}$ is compact and thus  $\{x_k\}_{k=1}^\infty$ has at least one accumulate point $x^*$ in $\overline{U'}\subseteq U$. Since $U$ is the attraction set of $\Gamma$ under gradient flow, we know that $\Phi(x^*)\in \Gamma$. 
Now we claim $x^*\in \Gamma$. This is because $\lim_{k\to \infty}\dist(x_k, \Gamma)=0$ and thus there exists a sequence of points on $\Gamma$, $\{y_k\}_{k=1}^\infty$, where $\lim_{k\to\infty} \norm{x_k-y_k}=0 $. Thus we have that $x^* = \lim_{k\to \infty} y_k = \lim_{k\to \infty} \Phi(y_k) = \Phi(x^*)$, where the last step we used that $x^*\in U$ and $\Phi$ is continuous on $U$. By the definition of $U$, $x^*\in U\iff \Phi(x^*)\in\Gamma$, thus $x^*\in \Gamma$.
Then we would have $x^* \in  \overline{U'} \cap \Gamma$, which is contradictory to  $ \dist(x_k, \overline {U'}\cap \Gamma) \ge \dist(x_k, \overline {U'\cap \Gamma}) \ge h_2, \forall k>0$. This completes the proof.
\end{proof}

\begin{lemma}
\label{lem:gap_of_U_Gamma_loss}
Let $U'$ be any bounded open set such that its closure $\overline{U'}\subseteq U$ and  $\overline {U'}\cap \Gamma \subseteq \overline{ U'\cap \Gamma}$. Then for all $h_2 > 0$,$\exists \rho_1 > 0$ if $x \in U', L(x) \le \inf_{x\in U'} L(x) + \rho_1 \Rightarrow \dist(x, \overline {U'\cap \Gamma}) \le h_2$.
\end{lemma}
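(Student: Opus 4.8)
The plan is to deduce this from \Cref{lem:gap_of_U_Gamma}: it suffices to prove the intermediate claim that for every $\rho_0>0$ there is $\rho_1>0$ such that $x\in U'$ and $L(x)\le \inf_{x'\in U'}L(x')+\rho_1$ force $\dist(x,\Gamma)\le\rho_0$, since then applying \Cref{lem:gap_of_U_Gamma} with the radius it returns for the given $h_2$ finishes the argument. Thus all the content is: being within $\rho_1$ of $\inf_{U'}L$ forces proximity to $\Gamma$.

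I would prove the intermediate claim by contradiction and compactness, paralleling the proof of \Cref{lem:gap_of_U_Gamma}. If it failed for some $\rho_0>0$, there would be $x_k\in U'$ with $L(x_k)\to c:=\inf_{x'\in U'}L(x')$ yet $\dist(x_k,\Gamma)\ge\rho_0$; compactness of $\overline{U'}$ lets us pass to a subsequence $x_k\to x^*\in\overline{U'}\subseteq U$, and then $L(x^*)=c$ by continuity while $\dist(x^*,\Gamma)\ge\rho_0$, so $x^*\notin\Gamma$. Since $x^*\in U\setminus\Gamma$, the gradient flow from $x^*$ is non-stationary (otherwise $\Phi(x^*)=x^*\notin\Gamma$, contradicting $x^*\in U$), hence $\nabla L(x^*)\ne0$ and $L(\Phi(x^*))<L(x^*)=c$. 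On the other hand $x^*$ minimizes $L$ over the compact set $\overline{U'}$; were $x^*$ an interior point of $U'$ this would force $\nabla L(x^*)=0$, a contradiction, so necessarily $x^*\in\partial U'$, and the remaining task is to rule this out by showing that $c$ is already attained on $\Gamma$, whence $L(\Phi(x^*))\ge c$ — the desired contradiction.

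To close the case $x^*\in\partial U'$ I would track, for each $k$, the first time $t_k\in(0,\infty]$ at which the flow $\phi(x_k,\cdot)$ leaves $U'$. If $t_k=\infty$ for infinitely many $k$, then $\Phi(x_k)\in\overline{U'}\cap\Gamma=\overline{U'\cap\Gamma}$ (by the hypothesis on $U'$ and \Cref{lem:equivalence_of_U}) and $c\le L(\Phi(x_k))\le L(x_k)\to c$, so continuity of $\Phi$ forces $L(\Phi(x^*))=\lim_k L(\Phi(x_k))=c$, contradicting $L(\Phi(x^*))<c$. Otherwise $t_k<\infty$ for all large $k$ and one must analyze the exit points $z_k:=\phi(x_k,t_k)\in\partial U'$, using $c\le L(z_k)\le L(x_k)\to c$, the relation $L\ge L\circ\Phi$, and the fact that $\nabla L(x^*)\ne0$ forces $\phi(x^*,t)\notin\overline{U'}$ for every $t>0$. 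This last step — equivalently, establishing the clean fact $\inf_{x\in U'}L(x)=\min_{x\in\overline{U'\cap\Gamma}}L(x)$, i.e.\ that every minimizer of $L$ over $\overline{U'}$ lies on $\Gamma$ — is where the hypothesis $\overline{U'}\cap\Gamma\subseteq\overline{U'\cap\Gamma}$ together with $\overline{U'}\subseteq U$ genuinely enters, and it is the main obstacle; once it is in hand, the intermediate claim, and hence the lemma via \Cref{lem:gap_of_U_Gamma}, follow.
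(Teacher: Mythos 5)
Your setup is sound and matches the paper's: pass to an accumulation point $x^*\in\overline{U'}\subseteq U$ of a hypothetical bad sequence, note $L(x^*)=c:=\inf_{U'}L$, and observe that $x^*\notin\Gamma$ forces $\Phi(x^*)\neq x^*$ and hence $\nabla L(x^*)\neq 0$. But the proposal does not actually prove the lemma: you yourself label the case $x^*\in\partial U'$ "the main obstacle," and the exit-time analysis you sketch for it ("one must analyze the exit points $z_k$\ldots") is never carried out. The key fact you defer --- that every minimizer of $L$ over $\overline{U'}$ lies on $\Gamma$, equivalently $\inf_{U'}L=\inf_{\overline{U'}\cap\Gamma}L$ --- is precisely the content that is missing, so as written this is an incomplete argument, not a proof. (Also note that "$c$ is attained somewhere on $\Gamma$" would not by itself give $L(\Phi(x^*))\ge c$, since $\Phi(x^*)$ may leave $\overline{U'}$ and $L$ need not be constant on $\Gamma$; you need the stronger formulation.)

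For comparison, the paper closes this in one line: since $L(x^*)=\inf_{U'}L$, the point $x^*$ is a (local) minimizer of $L$, hence $\nabla L(x^*)=0$, hence $x^*=\Phi(x^*)\in\Gamma$ because $x^*\in U$, contradicting $\dist(x^*,\overline{U'\cap\Gamma})\ge h_2$ via the hypothesis $\overline{U'}\cap\Gamma\subseteq\overline{U'\cap\Gamma}$; no detour through \Cref{lem:gap_of_U_Gamma} or the flow is needed. The step "achieves $\inf_{U'}L$ $\Rightarrow$ local minimizer" is immediate when $x^*\in U'$ and, for $x^*\in\partial U'$, rests on the standing convention in the paper that $L$ attains its minimum value on $\Gamma$ (e.g.\ $L\equiv 0$ on $\Gamma$ with $U'\cap\Gamma\neq\emptyset$, as in the settings where the lemma is applied), which makes $x^*$ a global minimizer. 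Some input of this kind is unavoidable: if $L$ takes different values on different connected components of $\Gamma$, one can choose $U'$ (still satisfying both stated hypotheses) so that $\inf_{U'}L$ is approached only at non-critical boundary points far from $\overline{U'\cap\Gamma}$, and the conclusion fails. So the obstacle you identified is genuine, it cannot be removed by flow or exit-time arguments alone, and your proof is incomplete until you supply the identification of $\inf_{U'}L$ with the value of $L$ on $\Gamma$.
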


\begin{proof}[Proof of \Cref{lem:gap_of_U_Gamma_loss}]
We will prove by contradiction. If there exists a list of $\rho_1,...,\rho_k,...$, such that $\rho_k \to 0$ and there exists $x_k \in U'$, such that $L(x_k) \le \inf_{x\in U'} L(x) +\rho_k$ and $\dist(x_k, \overline {U'\cap \Gamma}) \ge h_2$. Since $U'$ is bounded, $\overline{U'}$ is compact and thus  $\{x_k\}_{k=1}^\infty$ has at least one accumulate point $x^*$ in $\overline{U'}\subseteq U$. Since $L$ is continuous in $U$, $L(x^*) = \lim_{k\to \infty} L(x_k) = \inf_{x\in U'} L(x)$. Thus $x^*$ is a local minimizer of $L$ and thus has zero gradient, which further implies that $x^*= \Phi(x^*)$. Thus $x^* \in  \overline{U'} \cap \Gamma$, which is contradictory to  $ \dist(x_k, \overline {U'}\cap \Gamma) \ge \dist(x_k, \overline {U'\cap \Gamma}) \ge h_2, \forall k>0$. This completes the proof. 
\end{proof}

\begin{lemma}
\label{lem:boundregularizedloss}
Let $U'$ be any bounded open set such that its closure $\overline{U'}\subseteq U$ and  $\overline {U'}\cap \Gamma =\overline{ U'\cap \Gamma}$. Suppose regularizers $\{R_\rho\}$ admits a limiting regularizer $S$ on $\Gamma$, then 
\begin{align*}
    \inf_{x \in U'} (L(x) + R_\rho(x)) \le \rho^2\inf_{x \in U' \cap \Gamma} S(x) + \inf_{x\in U'  }L(x) + o(\rho^2).
\end{align*}
\end{lemma}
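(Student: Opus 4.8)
The plan is to exhibit, for every $\epsilon>0$ and every sufficiently small $\rho$, a single explicit test point $x'\in U'$ with
\begin{align*}
L(x')+R_\rho(x')\ \le\ \inf_{x\in U'}L(x)\ +\ \rho^2\inf_{x\in U'\cap\Gamma}S(x)\ +\ 3\epsilon\rho^2+o(\rho^2),
\end{align*}
and then to let $\epsilon\to 0$. First I would dispatch the degenerate cases: if $U'\cap\Gamma=\emptyset$, or if $S$ is unbounded below on $U'\cap\Gamma$, then $\inf_{x\in U'\cap\Gamma}S(x)=+\infty$ and the inequality is vacuous, so assume otherwise. I also record the fact — standard in the settings considered (e.g. $L\equiv 0$ on $\Gamma$ in \Cref{setting:1sam}), and a consequence of every point of $\Gamma$ being a critical point of $L$ — that $L$ is constant along $\Gamma$ with common value $\inf_{x\in U'}L(x)$; in particular $L(x_0)=\inf_{x\in U'}L(x)$ for \emph{every} $x_0\in U'\cap\Gamma$, which is exactly what frees us to place the test point near any point of $U'\cap\Gamma$ we please. (The hypothesis $\overline{U'}\cap\Gamma=\overline{U'\cap\Gamma}$ is what guarantees the relevant infima are attained on $\Gamma$ itself; cf. the proof of \Cref{lem:gap_of_U_Gamma_loss}.)

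Fix $\epsilon>0$ and pick $x^*\in U'\cap\Gamma$ with $S(x^*)\le\inf_{x\in U'\cap\Gamma}S(x)+\epsilon$. The heart of the argument is unwinding the double limit defining $S$ in \Cref{defi:limiting_regularizer}. For each fixed $\rho$, the quantity $\inf_{\norm{x'-x^*}_2\le r}R_\rho(x')/\rho^2$ is non-increasing in $r$, hence its limit as $r\to 0$ equals its supremum over $r>0$, so that
\begin{align*}
\inf_{\norm{x'-x^*}_2\le r}\frac{R_\rho(x')}{\rho^2}\ \le\ \lim_{r'\to 0}\ \inf_{\norm{x'-x^*}_2\le r'}\frac{R_\rho(x')}{\rho^2}\qquad\text{for all }r>0.
\end{align*}
Since the right-hand side (as a function of $\rho$) tends to $S(x^*)$ as $\rho\to 0$, there is $\rho_0(\epsilon)>0$ such that for all $\rho\le\rho_0(\epsilon)$ and all $r>0$ one has $\inf_{\norm{x'-x^*}_2\le r}R_\rho(x')\le\bigl(S(x^*)+\epsilon\bigr)\rho^2$. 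Taking $r=\min\{\rho^3,\ \dist(x^*,\partial U')\}$ — so that $r=\rho^3$ for small $\rho$ and the ball of radius $r$ about $x^*$ lies in $U'$ — and choosing a near-minimizer inside that ball, I obtain $x'\in U'$ with $\norm{x'-x^*}_2\le\rho^3$ and $R_\rho(x')\le\bigl(S(x^*)+2\epsilon\bigr)\rho^2$.

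Controlling $L(x')$ is then routine: $L$ is $\continuous{4}$, hence Lipschitz on the compact set $\overline{U'}$, so $L(x')\le L(x^*)+O(\norm{x'-x^*}_2)=L(x^*)+O(\rho^3)=\inf_{x\in U'}L(x)+o(\rho^2)$, using $L(x^*)=\inf_{x\in U'}L(x)$. Adding the two estimates and using $S(x^*)\le\inf_{x\in U'\cap\Gamma}S(x)+\epsilon$ gives $\inf_{x\in U'}\bigl(L(x)+R_\rho(x)\bigr)\le\inf_{x\in U'}L(x)+\rho^2\inf_{x\in U'\cap\Gamma}S(x)+3\epsilon\rho^2+o(\rho^2)$ for all $\rho\le\rho_0(\epsilon)$. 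Dividing by $\rho^2$, taking $\limsup_{\rho\to 0}$, and finally letting $\epsilon\to 0$ yields the claimed bound.

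The main obstacle I expect is the bookkeeping around the nested limits and the order of quantifiers in \Cref{defi:limiting_regularizer}: the test point $x'$ must be re-chosen for each $\rho$, the radius $r$ must be permitted to shrink faster than any fixed power of $\rho$ so the Lipschitz (or Taylor) error in $L(x')$ is genuinely $o(\rho^2)$, while at the same time the slack in $R_\rho(x')$ must stay within $O(\epsilon\rho^2)$ uniformly over all such small radii — which is precisely what the monotonicity of the inner infimum in $r$ buys us. A secondary point to get right is the identity $L(x^*)=\inf_{x\in U'}L(x)$ for the chosen $x^*$, relying on $L$ being constant on $\Gamma$; everything else (the degenerate cases, the Lipschitz estimate, the final $\limsup$) is elementary.
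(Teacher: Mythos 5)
Your proof is correct and follows essentially the same route as the paper's: choose an approximate minimizer $x^*$ of $S$ on $U'\cap\Gamma$, use the definition of the limiting regularizer to produce a nearby test point where $R_\rho\le (S(x^*)+O(\epsilon))\rho^2$, and control $L$ at that point by smoothness and the vanishing of $L-\inf L$ on $\Gamma$ (the paper takes a $\rho$-dependent slack $\rho^2$ in place of your fixed $\epsilon\to 0$, and a Taylor expansion with $\nabla L(x^*)=0$ on a radius-$\rho^2$ ball in place of your Lipschitz bound on a radius-$\rho^3$ ball, but these are cosmetic differences). One small slip worth fixing: if $S$ were unbounded below on $U'\cap\Gamma$ the infimum would be $-\infty$, not $+\infty$, so that case is not vacuous as you claim — though it never occurs for the regularizers considered, and your main argument, which presumes the infimum finite, is sound.
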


\begin{proof}[Proof of \Cref{lem:boundregularizedloss}]
First choose sufficiently small $\rho$, such that $\rho < h(\overline{U' \cap \Gamma})$.
Choose an approximate minimizer of $S(x)$, $x_0 \in U' \cap \Gamma$, such that $S(x_0) \le \inf_{x \in U' \cap \Gamma} S(x) + \rho^2$. Then by the definition of limiting regularizers~(\Cref{defi:limiting_regularizer}) and the assumption that $U'$ is open, there exists $x_1\in U'$ satisfying that $\|x_1 - x_0\|_2 \le r_\rho < \rho^2$ and $  R_\rho(x_1)/\rho^2 - S(x_0) \le \rho^2$. Thus, $R_\rho(x_1)\le \rho^2 S(x_0) + \rho^4$.

As $\|x_1 -x_0\|_2 \le \rho^2 < h$ and $x_0 \in \overline{U' \cap \Gamma}$. This further leads to $\overline{x_0x_1} \in \overline{U' \cap \Gamma}^{h}$. By Taylor expansion on $L$ at $x_0$, we would have $L(x_1) \le L(x_0)+ O( \|x_0  - x_1\|_2^2) =  \inf_{x\in U' \cap \Gamma}L(x) + O(\rho^4)  $. Thus it holds that  
\begin{align*}
    \inf_{x \in U'} (L(x) + R_\rho(x))\le L(x_1) + R_\rho(x_1) 
   \le  \rho^2\inf_{x \in U' \cap \Gamma} S(x) + \inf_{x\in U'}L(x) + O(\rho^4).
\end{align*}
This completes the proof.
\end{proof}

\begin{lemma}
\label{lem:lower_bound_R}
Let $U'$ be any bounded open set such that its closure $\overline{U'}\subseteq U$ and  $\overline {U'}\cap \Gamma =\overline{ U'\cap \Gamma}$. Suppose regularizers $\{R_\rho\}$ admits a good limiting regularizer $S$ on $\Gamma$, then for all $u \in U'$,
\begin{align*}
    \|u - \Phi(u)\|_2 = O(\rho) \implies  R_\rho(u) \ge \rho^2 \inf_{x \in U' \cap \Gamma} S(x) - o(\rho^2)\,.
\end{align*}
\end{lemma}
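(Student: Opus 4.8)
The plan is to reduce the desired lower bound to the defining inequality of a \emph{good} limiting regularizer, invoked at the point $\Phi(u)\in\Gamma$, and then to replace $S(\Phi(u))$ by $\inf_{x\in U'\cap\Gamma}S(x)$ using continuity of $S$ together with \Cref{lem:gap_of_U_Gamma}. Write $K\triangleq\Phi(\overline{U'})$; since $\Phi$ is continuous on $U\supseteq\overline{U'}$ and maps $U$ into $\Gamma$, the set $K$ is a compact subset of $\Gamma$. Because $\Phi(x)=x$ for $x\in\Gamma$ we get $U'\cap\Gamma\subseteq\Phi(U')\subseteq K$, so by the hypothesis $\overline{U'}\cap\Gamma=\overline{U'\cap\Gamma}$ the set $\overline{U'\cap\Gamma}$ is a compact subset of $K$; hence $S$, being continuous on $\Gamma$, attains its infimum there and $m\triangleq\inf_{x\in U'\cap\Gamma}S(x)=\inf_{x\in\overline{U'\cap\Gamma}}S(x)$ is finite. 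Let $C$ denote the constant hidden in the hypothesis $\norm{u-\Phi(u)}_2=O(\rho)$.

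Fix $\epsilon>0$. Cover $K$ by the open neighborhoods $V_{x^*}$, $x^*\in K$, provided by \Cref{defi:good_limiting_regularizer}, extract a finite subcover $V_{x_1},\dots,V_{x_n}$, and let $\rho_1\triangleq\min_{1\le i\le n}\rho_{x_i}$, where $\rho_{x_i}$ is the threshold associated with $V_{x_i}$ for the constants $C$ and $\epsilon$. Then for every $\rho\le\rho_1$ and every $u\in U'$ with $\norm{u-\Phi(u)}_2\le C\rho$, the point $\Phi(u)$ lies in $\Gamma\cap V_{x_i}$ for some $i$, and applying the good-limiting-regularizer inequality with $x=\Phi(u)$ and $x'=u$ yields $R_\rho(u)\ge\rho^2\bigl(S(\Phi(u))-\epsilon\bigr)$. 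To pass from $S(\Phi(u))$ to $m$, use that $S$ is uniformly continuous on the compact set $K$: given $\delta>0$, pick $\delta'>0$ with $|S(x)-S(y)|<\delta$ whenever $x,y\in K$ and $\norm{x-y}_2<\delta'$. By \Cref{lem:gap_of_U_Gamma} there is $\rho_0>0$ such that $u\in U'$ and $\dist(u,\Gamma)\le\rho_0$ force $\dist(u,\overline{U'\cap\Gamma})<\delta'/2$; hence once $C\rho<\min(\rho_0,\delta'/2)$ we get $\dist(\Phi(u),\overline{U'\cap\Gamma})<\delta'$, and choosing $y\in\overline{U'\cap\Gamma}\subseteq K$ with $\norm{\Phi(u)-y}_2<\delta'$ gives $S(\Phi(u))>S(y)-\delta\ge m-\delta$. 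Combining, $R_\rho(u)\ge\rho^2(m-\epsilon-\delta)$ for all sufficiently small $\rho$ and all admissible $u$; letting $\epsilon,\delta\to0$ encodes the error as an $o(\rho^2)$ term that is uniform in $u$, which is the claim.

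The only delicate point --- the main obstacle --- is uniformity over the whole family of $u\in U'$ with $\norm{u-\Phi(u)}_2=O(\rho)$, since the good-limiting-regularizer property and the continuity of $S$ are stated pointwise on $\Gamma$. This is exactly what the finite subcover of $K$ and the uniform continuity of $S$ on the compact set $K$ resolve; the remaining content of the proof is just the bookkeeping of the order in which $\epsilon$, $\delta$, $\rho_1$ and $\rho_0$ are chosen, plus a couple of triangle inequalities.
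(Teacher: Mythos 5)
Your proof is correct and follows essentially the same route as the paper's: upgrade the pointwise good-limiting-regularizer property to a uniform one via a finite subcover of a compact subset of $\Gamma$, apply it at $x=\Phi(u)$, and then pass from $S(\Phi(u))$ to $\inf_{U'\cap\Gamma}S$ using \Cref{lem:gap_of_U_Gamma} and continuity of $S$. The only (harmless) difference is your choice of compact set $\Phi(\overline{U'})$ in place of the paper's $h$-enlargement of $\overline{U'\cap\Gamma}$, which makes the membership $\Phi(u)\in K$ immediate rather than requiring a separate distance estimate.
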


\begin{proof}[Proof of \Cref{lem:lower_bound_R}]

Define $r =r(K), h=h(K)$ as the constant in \Cref{lem:mu-pl} with $K = \overline{U' \cap \Gamma}$. Note  $K$ is compact and by \Cref{lem:equivalence_of_U}, $K = \overline{U'} \cap \Gamma   \subset \Gamma$. By \Cref{lem:mu-pl}, we have $K^r\cap \Gamma$ is a compact set, so is $K^h\cap \Gamma$. 
Since $S$ is a good limiting regularizer for $\{R_\rho\}$, by  \Cref{defi:good_limiting_regularizer},  for any $x^*\in K^h\cap \Gamma$, there exists open neighborhood of $x^*$, $V_{x^*}$ such that for any $C,\eps_1>0$, there is a $\rho_{x^*}$ such that
\begin{align}
    \forall x \in V_{x^*}\textrm{ and } \rho \le \rho_{x^*},\quad  \abs{ S(x) - \inf_{\norm{x'-x}_2\le C \cdot\rho} R_\rho(x')/\rho^2} < \epsilon_1.\notag
\end{align}

Note that $K^h\cap \Gamma$ is compact, there exists a finite subset of $K^h\cap \Gamma$, $\{x_k\}_k$, such that $K^h\cap \Gamma \subset \cup_k V_{x_k}$. Hence for any $C, \epsilon_1 > 0$, there is some $\rho_{K} = \min_k \rho_{x_k} > 0$, it holds that,
\begin{align}
\label{eq:control_K_h}
    \forall x \in K^h\cap \Gamma \textrm{ and } \rho \le \rho_{K},\quad  \abs{ S(x) - \inf_{\norm{x'-x}_2\le C \cdot\rho} R_\rho(x')/\rho^2} < \epsilon_1.
\end{align}

We can rewrite~\Cref{eq:control_K_h} as for any $C>0$,
\begin{align}
\label{eq:control_K_h_2}
    \quad  \sup_{x \in K^h\cap \Gamma}\abs{ S(x) - \inf_{\norm{x'-x}_2\le C \cdot\rho} R_\rho(x')/\rho^2} = o(1), \quad \textrm{as }\rho\to 0.
\end{align}

As $u \in U'\subseteq U$, we have that $\Phi(u) \in \Gamma$. If $\| u - \Phi(u) \|_2 = O(\rho)$, then $\dist(u,\Gamma) \le O(\rho)$. By~\Cref{lem:gap_of_U_Gamma}, we have that $\dist(u, K) = o(1)$. This further implies $\dist(\Phi(u),K) \le \dist(u,K)+ \dist(\Phi(u),u) = o(1)$. Hence we have that $\Phi(u) \in K^h \cap \Gamma$ for sufficiently small $\rho$. Thus we can pick $x=\Phi(u)$ in \Cref{eq:control_K_h_2} and $C$ sufficiently large, which yields that
    \begin{align}
    \label{eq:general_reg_side_1}
        \rho^2 S(\Phi(u)) &\le \inf_{\norm{u'-\Phi(u)}_2\le O(\rho)} R_\rho(u') + o(\rho^2) 
        \le R_\rho(u) + o(\rho^2),
    \end{align}
    where the last step is because $\|u - \Phi(u) \|_2 = O(\rho)$. 
 On the other hand, we have that 
    \begin{align}
    \label{eq:general_reg_side_2}
        S(\Phi(u)) \ge \inf_{x \in U' \cap \Gamma} S(x) - o(1)\,.
    \end{align} 
    as $S$ is continuous on $\Gamma$ and $\dist(\overline{U' \cap \Gamma}, \Phi(u)) = o(1)$.  Combining \Cref{eq:general_reg_side_1,eq:general_reg_side_2}, we have $R_\rho(u) \ge \rho^2 \inf_{x \in U' \cap \Gamma} S(x) - o(\rho^2)$.
\end{proof}

\begin{proof}[Proof of \Cref{thm:generalreg}]

We will first lower bound $L(x) + R_\rho(x)$ for $x \in U'$. Suppose $C_{U'}$ is the constant in \Cref{cond:regular_R}.
Define $C_1 = \sqrt{2\frac{C_{U'} + \inf_{x \in \overline{U'} \cap \Gamma} S(x) + 1}{\mu}}$. We discuss by cases. For sufficiently small $\rho$,
\begin{enumerate}
    \item If $x \not \in K^h$, then by~\Cref{lem:gap_of_U_Gamma_loss}, $L(x)$ is lower bounded by a positive constant.
    \item If $x \in K^h$ and $\|x - \Phi(x) \|_2 \ge C_1 \rho$, then by~\Cref{lem:bounddl},
    \begin{align*}
        L(x)\ge \frac{\mu \|x - \Phi(x) \|_2^2}{2} \ge (C_{U'} + \inf_{x \in {U'} \cap \Gamma} S(x) + 1) \rho^2\,.
    \end{align*} This implies $L(x) + R_\rho(x) \ge (\inf_{x \in {U'} \cap \Gamma} S(x) + 1)  \rho^2 + \inf_{x \in U'} L(x)$. 
    \item If $\| x - \Phi(x)\|_2 \le C_1 \rho$, by~\Cref{lem:lower_bound_R}, $R_\rho(x) \ge \rho^2 \inf_{x \in {U'} \cap \Gamma} S(x) - o(\rho^2)$, hence 
    \begin{align*}
        L(x) + R_\rho(x) + o(\rho^2) \ge \inf_{x \in {U'} \cap \Gamma} S(x)  \rho^2 + \inf_{x \in U'} L(x)\,.
    \end{align*}
\end{enumerate}

Concluding the three cases, we have 
\begin{align*}
    \inf_{x \in U'} (L(x) + R_\rho(x)) \ge  \inf_{x \in U' \cap \Gamma} L(x) +\inf_{x \in {U'} \cap \Gamma} S(x)  \rho^2 - o(\rho^2)\,.
\end{align*}

By~\Cref{lem:boundregularizedloss}, we have that 
\begin{align*}
    \inf_{x \in U'} (L(x) + R_\rho(x)) \le \rho^2\inf_{x \in U' \cap \Gamma} S(x) + \inf_{x\in {U'} \cap \Gamma}L(x) + o(\rho^2)\,.
\end{align*}

Combining the above two inequalities, we prove the main statement of~\Cref{thm:generalreg}. 

Furthermore, if $L(u) +  R_\rho(u) \le \inf_{x\in U'} ({L(x) + R_\rho(x)}) +O(\rho^2)$, then by the main statement and~\Cref{cond:regular_R}, we have that 
\begin{align*}
    L(u) - \inf_{x\in U'}L(x) \le& \inf_{x\in U'} ({L(x) + R_\rho(x)}) - R_\rho(u) - \inf_{x\in U'}L(x) +O(\rho^2) \\
    \le&  \rho^2\inf_{x \in {U'} \cap \Gamma} S(x) +  C\rho^2 + O(\rho^2) = O(\rho^2)\,.
\end{align*}

Then by~\Cref{lem:gap_of_U_Gamma_loss}, we have $u \in (\overline{U'} \cap \Gamma )^h$ for sufficiently small $\rho$. By~\Cref{lem:bounddl}, we have $\| u - \Phi(u) \|_2 = O(\rho)$. By~\Cref{lem:lower_bound_R}, we have $R_\rho(u) \ge \rho^2 \inf_{x \in {U'} \cap \Gamma} S(x) - o(\rho^2)$.
\end{proof}

\subsection{Proofs of \Cref{corr:general_reg_extension}}
\label{app:explicitcorr}

\begin{proof}[Proof of \Cref{corr:general_reg_extension}]

 Since $L(u) +  R_\rho(u) -\inf\limits_{x\in U'}\bigl({L(x) + R_\rho(x)}\bigr)  \le \Delta \rho^2= O(\rho^2)$, by \Cref{thm:generalreg}, we have that 
 $$  L(u) - \inf_{x\in U'} L(x) \le (\Delta+o(1))\rho^2, $$
 and
$$ R_\rho(x)-\inf_{x\in U'\cap \Gamma}S(x) \in[-o(1), \Delta+o(1)].$$
Thus it suffices to show $R_\rho(x) - \overline S(x) = o(\rho^2)$. Since $L(u) - \inf_{x\in U'} L(x) \le (\Delta+\eps(\rho))\rho^2 = o(1)$,  by \Cref{lem:gap_of_U_Gamma_loss}, we know $\dist(x, \overline {U'\cap \Gamma}) = o(1)$. Thus by \Cref{lem:bounddl}, $\norm{x-\Phi(x)}=o(1)$, which implies that $ \rho^2 S(\Phi(x)) - o(\rho^2) \le R_\rho(u)$. Since $\overline S$ is an continuous extension,  $\overline S(x)- \overline S(\Phi(x)) = \overline S(x)-  S(\Phi(x))= O(\norm{x-\Phi(x)}_2) = o(1)$.  Thus we conclude that $\overline S(x)\le \overline S(\Phi(x))\le \inf_{x\in U'\cap \Gamma}S(x)+\Delta +o(1)$. On the other hand, $\overline S(x) \ge  S(\Phi(x))-o(1)\ge \inf_{x\in U'\cap \Gamma}S(x) -o(1)$, where the last step we use the fact that  $\dist(x, \overline {U'\cap \Gamma}) = o(1)$. This completes the proof.
\end{proof}

\subsection{Limiting Regularizers For Different Notions of Sharpness}
\label{app:expliciteg}

\begin{proof}[Proof of Theorem~\ref{thm:maxsharp}]
\
\begin{itemize}
    \item [1.] We will first verify \Cref{cond:regular_R}. For fixed compact set $B \subset U$, as $\| \nabla^3 L (x)\|_2$ is continuous, there exists  constant $\lspectralthree$, such that $\forall x \in B^1$, $\| \nabla^3 L (x)\|_2 \le \nu$. Then by Taylor Expansion,
    \begin{align*}
        \maxsharpness(x) &= \max_{\norm{v}_2\le 1} L(x +\rho v) - L(x) \\
        &\ge \max_{\norm{v}_2\le 1} \left( \rho \langle \nabla L(x), v    \rangle + \rho^2 v^T \nabla^2 L(x) v/2 \right) - \lspectralthree \rho^3/6 \\
        &\ge - \lspectralthree \rho^3/6 \,.
    \end{align*}
    \item [2.]  Now we verify $\maxlimit(x) = \lambda_1(\nabla^2 L(\cdot))/2$ is the limiting regularizer of $\maxsharpness$. Let $x$ be any point in $\Gamma$, by continuity of $\maxsharpness$,
    \begin{align*}
        \lim_{\rho \to 0}\lim_{r \to 0} \inf_{\|x' -x \|_2 \le r}\frac{\maxsharpness(x')}{\rho^2} &=  \lim_{\rho \to 0}
        \frac{\maxsharpness(x)}{\rho^2} = \lambda_1(\nabla^2 L(x))/2 \,.
    \end{align*}
    \item[3.] Finally we verify  definition of good limiting regularizer, by \Cref{assump:smoothness}, $\maxlimit(x) = \lambda_1(x)/2$ is non-negative and continuous on $\Gamma$. For any $x^* \in \Gamma$, choose a sufficiently small open convex set $V$ containing $x^*$ such that  $\forall x \in V^1, \| \nabla^3 L(x)\|_2\le \lspectralthree$. For any $x \in V \cap \Gamma$, for any $x'$ satisfying that $\|x' - x\|_2 \le C \rho$, by \Cref{thm:Davis-Kahan},
    \begin{align*}
        \maxsharpness(x') &= \max_{\norm{v}_2\le 1} L(x'+\rho v) - L(x') \ge \max_{\norm{v}_2\le 1} \left( \rho \langle \nabla L(x'), v    \rangle + \rho^2 v^T \nabla^2 L(x') v/2 \right) - \lspectralthree \rho^3/6 \\
        &\ge \rho^2 \lambda_1(\nabla^2 L(x'))/2 - \lspectralthree \rho^3/6 \ge \rho^2 \lambda_1(\nabla^2 L(x))/2  - O(\rho^3)\,.
    \end{align*}
    This implies $\inf\limits_{\|x' - x\|_2 \le C \rho} \maxsharpness(x') \ge \rho^2 \lambda_1(\nabla^2 L(x))/2  - O(\rho^3)$.
    
    On the other hand, for any $x \in V \cap \Gamma$,
    \begin{align*}
        \maxsharpness(x) &= \max_{\norm{v}_2\le 1} L(x+\rho v) - L(x) \le \max_{\norm{v}_2\le 1} \left( \rho \langle \nabla L(x), v    \rangle + \rho^2 v^T \nabla^2 L(x) v/2 \right) +  \lspectralthree \rho^3 \\
        &= \max_{\norm{v}_2\le 1}  \rho^2 v^T \nabla^2 L(x) v/2 +  \lspectralthree \rho^3 = \rho^2 \lambda_1(\nabla^2 L(x'))/2 +  O(\rho^3)\,.
    \end{align*}
    This implies $\inf\limits_{\|x' - x\|_2 \le C \rho} \maxsharpness(x') \le \rho^2 \lambda_1(\nabla^2 L(x))/2  + O(\rho^3)$.
    
    Thus, we conclude that $\abs{\inf\limits_{\|x' - x\|_2 \le C \rho} \maxsharpness(x')/\rho^2 - \lambda_1(\nabla^2 L(x))/2} = O(\rho), \forall x \in V\cap \Gamma$, indicating $S^{\mathrm{Max}}$ is a good limiting regularizer of $\maxsharpness$ on $\Gamma$.
\end{itemize}
This completes the proof.
\end{proof}

\begin{proof}[Proof of Theorem~\ref{thm:ascsharp}]
\ 
\begin{itemize}
    \item [1.] We will first prove \Cref{cond:regular_R} holds. For any fixed compact set $B \subset U$, as  $\lambda_1(\nabla^2 L)$ and $\| \nabla^3 L\|$is continuous, there exists constant $C$, such that  $ \forall x \in B^2$, $\lambda_1(\nabla^2 L) > -\lspectraltwo$ and $\| \nabla^3 L(x) \| < \lspectralthree$. Then by Taylor Expansion,
    \begin{align*}
        \ascsharpness(x) &=  L(x+\rho \ndl{x}) - L(x)\\&
        \ge \left( \rho \|\dl{x} \|_2 + \rho^2 (\ndl{x})^T \nabla^2 L(x) \ndl{x}/2 \right) - \lspectralthree \rho^3/6 \\
        &\ge - (\lspectraltwo + \lspectralthree/6) \rho^2.
    \end{align*}
    \item [2.] Now we verify $\asclimit(x) = \mathrm{Tr}(\nabla^2 L(\cdot))/2$ is the limiting regularizer of $\ascsharpness$. Let $x$ be any point in $\Gamma$. Let $K = \{x\}$ and choose $h = h(K)$ as in \Cref{lem:mu-pl}. For any $x' \in K^h \cap U'$,
    \begin{align}
        \ascsharpness(x') &=  L(x'+\rho \ndl{x'}) - L(x') \nonumber \\&\ge \rho \|\dl{x'} \|_2 + \rho^2  (\ndl{x'})^T \nabla^2 L(x') \ndl{x'}/2 -  \lspectralthree \rho^3/6 \nonumber \\
        &\ge \rho^2 (\ndl{x'})^T \nabla^2 L(\Phi(x')) \ndl{x'}/2  - \lspectralthree \rho^3/6 \, \nonumber.
    \end{align}
    
    By \Cref{lem:directiondl}, we have $\ndl{x'} = \frac{\dlt{\Phi(x')}(x' - \Phi(x')) }{\|\dlt{\Phi(x')}(x' - \Phi(x'))\|_2} + O(\frac{\lspectralthree}{\mu} \| x'-\Phi(x')\|_2)$. Hence 
    \begin{align*}
    \ascsharpness(x') &\ge \rho^2 \lambda_M(\nabla^2 L(\Phi(x')))/2 -\lspectraltwo \rho^2  O(\|x'-\Phi(x')\|_2) - \lspectralthree \rho^3/6\,.
    \end{align*}
    
    This implies 
    $\lim_{\rho \to 0} \lim_{r \to 0} \inf_{\|x' -x \|_2 \le r}\frac{\ascsharpness(x')}{\rho^2} \ge \lambda_M(\nabla^2 L(\Phi(x')))/2.$
    
    We now show the above inequality is in fact equality.
    If we choose $x''_{r} = x + r v_M$, then by Taylor Expansion,
    \begin{align*}
        \nabla L(x''_{r}) &= \nabla L(x) + \nabla^2 L(x) (x''_{r} - x) + O(\|x''_{r} - x\|^2) \\
        &= rv_M + O(r^2)
    \end{align*}
    This implies $\lim_{r \to 0}\ndl{x''_{r}} = v_M$.
    We also have
    $\lim_{r \to 0}\nabla^2 L({x''_{r}}) = \nabla^2 L(x)$ and $\lim_{r \to 0}\nabla^ L({x''_{r}}) = 0$.
    Putting together, 
    \begin{align*}
        \lim_{r \to 0} \ascsharpness(x''_r) &= \lim_{r \to 0}  L(x''_r+\rho \ndl{x''_r}) - L(x''_r) \\
        &= \lim_{r \to 0} \Bigl(\rho \|\dl{x''_r} \|_2 + \rho^2  (\ndl{x''_r})^T \nabla^2 L(x''_r) \ndl{x''_r}/2 + O(  \lspectralthree \rho^3) \Bigr) \\
        &= \rho^2 \lambda_M (\nabla^2 L(x))/2 + O(\rho^3).
    \end{align*}
    This implies $\lim_{\rho \to 0} \lim_{r \to 0} \inf_{\|x' -x \|_2 \le r}\frac{\ascsharpness(x')}{\rho^2} \le \lim_{\rho \to 0} \lim_{r \to 0} \frac{\ascsharpness(x''_r)}{\rho^2} = \lambda_M(\nabla^2 L(x))/2$. 
    
    Hence the limiting regularizer $S$ is exactly $\lambda_M(\nabla^2 L(\cdot))/2$.
    
    \item [3.] Finally we verify  definition of good limiting regularizer, by \Cref{assump:smoothness}, $\maxlimit(x) = \lambda_M(x)/2$ is non-negative and continuous on $\Gamma$. For any $x^* \in \Gamma$, choose a sufficiently small open convex set $V$ containing $x^*$ such that  $\forall x \in V^1, \| \nabla^3 L(x)\|_2\le \lspectralthree$. For any $x \in V \cap \Gamma$, for any $x'$ satisfying that $\|x' - x\|_2 \le C \rho$, 
     \begin{align}
        \ascsharpness(x') &=  L(x'+\rho \ndl{x'}) - L(x') \nonumber \\&\ge \rho \|\dl{x'} \|_2 + \rho^2  (\ndl{x'})^T \nabla^2 L(x') \ndl{x'}/2 -  \lspectralthree \rho^3/6 \nonumber \\
        &\ge \rho^2 (\ndl{x'})^T \nabla^2 L(\Phi(x')) \ndl{x'}/2  - \lspectralthree \rho^3/6 \, \nonumber.
    \end{align}
    By \Cref{lem:directiondl}, we have $\ndl{x'} = \frac{\dlt{\Phi(x')}(x' - \Phi(x')) }{\|\dlt{\Phi(x')}(x' - \Phi(x'))\|_2} + O(\frac{\lspectralthree}{\mu} \| x'-\Phi(x')\|_2)$.
    This implies $\inf\limits_{\|x' - x\|_2 \le C \rho} \ascsharpness(x') \ge \rho^2 \lambda_M(\nabla^2 L(x))/2  - O(\rho^3)$.
    
    On the other hand, simillar to the proof in the second part, we have $\inf\limits_{\|x' - x\|_2 \le C \rho} \ascsharpness(x') \le \rho^2 \lambda_M(\nabla^2 L(x))/2  + O(\rho^3)$.
    
    Thus, we conclude that $\abs{\inf\limits_{\|x' - x\|_2 \le C \rho} \maxsharpness(x')/\rho^2 - \lambda_1(\nabla^2 L(x))/2} = O(\rho), \forall x \in V\cap \Gamma$, indicating $S^{\mathrm{Max}}$ is a good limiting regularizer of $\maxsharpness$ on $\Gamma$.
\end{itemize}
This completes the proof.
\end{proof}

\begin{proof}[Proof of \Cref{thm:avgsharp}]
\
\begin{itemize}
    \item [1.] We will first verify \Cref{cond:regular_R}. For fixed compact set $B \subset U$, as $\| \nabla^3 L (x)\|_2$ is continuous, there exists  constant $\lspectralthree$, such that $\forall x \in B^1$, $\| \nabla^3 L (x)\|_2 \le \nu$. Then by Taylor Expansion,
    \begin{align*}
        \avgsharpness(x) &= \E_{g \sim N(0,I)} L(x +\rho \frac{g}{\| g\|}) - L(x) \\
        &\ge \E_{g \sim N(0,I)} \left( \rho \langle \nabla L(x),  \frac{g}{\| g\|}   \rangle + \rho^2 (\frac{g}{\| g\|})^T \nabla^2 L(x) \frac{g}{2\| g\|} \right) - \lspectralthree \rho^3/6 \\
        &\ge - \lspectralthree \rho^3/6 \,.
    \end{align*}
    \item [2.]  Now we verify $\maxlimit(x) = \mathrm{Tr}(\nabla^2 L(\cdot))/2D$ is the limiting regularizer of $\avgsharpness$. Let $x$ be any point in $\Gamma$, by continuity of $\avgsharpness$,
    \begin{align*}
        \lim_{\rho \to 0}\lim_{r \to 0} \inf_{\|x' -x \|_2 \le r}\frac{\avgsharpness(x')}{\rho^2} &=  \lim_{\rho \to 0}
        \frac{\avgsharpness(x)}{\rho^2} = \mathrm{Tr}(\nabla^2 L(x))/2D \,.
    \end{align*}
    \item[3.] Finally we verify  definition of good limiting regularizer, by \Cref{assump:smoothness}, $\avglimit(x) = \mathrm{Tr}(x)/2D$ is non-negative and continuous on $\Gamma$. For any $x^* \in \Gamma$, choose a sufficiently small open convex set $V$ containing $x^*$ such that  $\forall x \in V^1, \| \nabla^3 L(x)\|_2\le \lspectralthree$. For any $x \in V \cap \Gamma$, for any $x'$ satisfying that $\|x' - x\|_2 \le C \rho$, by \Cref{thm:Davis-Kahan},
    \begin{align*}
        \avgsharpness(x') &= \E_{g \sim N(0,I)} L(x'+\rho\frac{g}{\| g\|}) - L(x') \\
        &\ge \E_{g \sim N(0,I)} \left( \rho \langle \nabla L(x'),\frac{g}{\| g\|}    \rangle + \rho^2\frac{g}{\| g\|}^T \nabla^2 L(x')\frac{g}{\| 2g\|} \right) - \lspectralthree \rho^3/6 \\
        &\ge \rho^2 \mathrm{Tr}(\nabla^2 L(x'))/2D - \lspectralthree \rho^3/6 \ge \rho^2 \mathrm{Tr}(\nabla^2 L(x))/2D  - O(\rho^3)\,.
    \end{align*}
    This implies $\inf\limits_{\|x' - x\|_2 \le C \rho} \avgsharpness(x') \ge \rho^2 \mathrm{Tr}(\nabla^2 L(x))/2D  - O(\rho^3)$.
    
    On the other hand, for any $x \in V \cap \Gamma$,
    \begin{align*}
        \avgsharpness(x) &= \E_{g \sim N(0,I)} L(x+\rho\frac{g}{\| g\|}) - L(x)\\
        &\le \E_{g \sim N(0,I)} \left( \rho \langle \nabla L(x),\frac{g}{\| g\|}    \rangle + \rho^2\frac{g}{\| g\|}^T \nabla^2 L(x)\frac{g}{2\| g\|} \right) +  \lspectralthree \rho^3 \\
        &= \E_{g \sim N(0,I)}  \rho^2\frac{g}{\| g\|}^T \nabla^2 L(x)\frac{g}{2\| g\|} +  \lspectralthree \rho^3 = \rho^2 \mathrm{Tr}(\nabla^2 L(x'))/2D +  O(\rho^3)\,.
    \end{align*}
    This implies $\inf\limits_{\|x' - x\|_2 \le C \rho} \avgsharpness(x') \le \rho^2 \mathrm{Tr}(\nabla^2 L(x))/2D  + O(\rho^3)$.
    
    Thus, we conclude that $\abs{\inf\limits_{\|x' - x\|_2 \le C \rho} \avgsharpness(x')/\rho^2 - \mathrm{Tr}(\nabla^2 L(x))/2D} = O(\rho), \forall x \in V\cap \Gamma$, indicating $S^{\mathrm{Avg}}$ is a good limiting regularizer of $\avgsharpness$ on $\Gamma$.
\end{itemize}
\end{proof}

\begin{theorem}
\label{thm:stmaxsharp}
Stochastic worst-direction sharpness $\stmaxsharpness$ admits $\mathrm{Tr}(\nabla^2 L(\cdot))/2$ as a good limiting regularizer on $\Gamma$ and satisfies \Cref{cond:regular_R}.
\end{theorem}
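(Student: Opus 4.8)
The plan is to reduce the statement to Theorem~\ref{thm:maxsharp} applied separately to each loss component $L_k$. First I would note that under Setting~\ref{setting:1sam} --- equivalently, under Condition~\ref{cond:rank1} by Theorem~\ref{thm:rank_1_manifold} --- each triple $(L_k,\Gamma_k,1)$ satisfies Assumption~\ref{assump:smoothness}: $L_k$ is $\continuous{4}$ and $\Gamma_k$ is a $(D-1)$-dimensional $\continuous{2}$ submanifold all of whose points are (global, hence local) minimizers of $L_k$ with $\rank(\nabla^2 L_k)\equiv 1$. Hence Theorem~\ref{thm:maxsharp}, and in particular the Taylor-expansion estimates in its proof, apply with $L$ replaced by $L_k$ and $\Gamma$ by $\Gamma_k$: $\maxsharpness[k,\rho]$ admits $\lambda_1(\nabla^2 L_k(\cdot))/2$ as a good limiting regularizer on $\Gamma_k$, and more precisely, for every $x^*\in\Gamma_k$ there is a bounded open convex set $V^{(k)}_{x^*}\ni x^*$, on whose unit-neighborhood $\norm{\nabla^3 L_k}_2$ is bounded, such that for every constant $C>0$, every $x\in V^{(k)}_{x^*}\cap\Gamma_k$ and every $x'$ with $\norm{x'-x}_2\le C\rho$ one has $\maxsharpness[k,\rho](x')=\tfrac{\rho^2}{2}\lambda_1(\nabla^2 L_k(x))+O(\rho^3)$, uniformly. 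This uniform-over-the-ball form follows by combining the two one-sided Taylor bounds already derived in the proof of Theorem~\ref{thm:maxsharp} with Weyl's inequality, $\abs{\lambda_1(\nabla^2 L_k(x'))-\lambda_1(\nabla^2 L_k(x))}\le\norm{\nabla^2 L_k(x')-\nabla^2 L_k(x)}_2=O(\rho)$, together with $\norm{\nabla L_k(x)}_2=0$ for $x\in\Gamma_k$.

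Next I would convert the eigenvalue into a trace. For any $x\in\Gamma=\bigcap_{k=1}^M\Gamma_k$ and any $k$, the matrix $\nabla^2 L_k(x)$ is positive semidefinite (as $x$ minimizes $L_k$) and has rank one, so its unique nonzero eigenvalue equals its trace; therefore $\tfrac1M\sum_k\tfrac12\lambda_1(\nabla^2 L_k(x))=\tfrac1M\sum_k\tfrac12\Tr(\nabla^2 L_k(x))=\tfrac12\Tr(\nabla^2 L(x))$, which is continuous on $\Gamma$ and nonnegative (a sum of nonnegative terms).

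Then I would assemble the conclusion for $\stmaxsharpness=\tfrac1M\sum_{k=1}^M\maxsharpness[k,\rho]$. Given $x^*\in\Gamma$, set $V_{x^*}=\bigcap_{k=1}^M V^{(k)}_{x^*}$, which is open and contains $x^*$ because $x^*\in\Gamma_k$ for every $k$. For any $C>0$, any $x\in V_{x^*}\cap\Gamma$ and any $x'$ with $\norm{x'-x}_2\le C\rho$, summing the per-component estimate gives $\stmaxsharpness(x')=\tfrac{\rho^2}{2}\Tr(\nabla^2 L(x))+O(\rho^3)$ uniformly; taking the infimum over the $C\rho$-ball then yields $\bigl|\inf_{\norm{x'-x}_2\le C\rho}\stmaxsharpness(x')/\rho^2-\tfrac12\Tr(\nabla^2 L(x))\bigr|=O(\rho)$, uniformly on $V_{x^*}\cap\Gamma$, which is exactly Definition~\ref{defi:good_limiting_regularizer}; the same estimate with the limit $r\to0$ taken first (using continuity of each $\maxsharpness[k,\rho]$) and then $\rho\to0$ identifies $\Tr(\nabla^2 L(\cdot))/2$ as the limiting regularizer of $\stmaxsharpness$ in the sense of Definition~\ref{defi:limiting_regularizer}. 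Finally, Condition~\ref{cond:regular_R} is immediate: taking $v=0$ shows $\maxsharpness[k,\rho]\ge 0$ pointwise, hence $\stmaxsharpness\ge 0\ge-C\rho^2$.

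The step I expect to be the main obstacle is the interchange of the infimum with the average $\tfrac1M\sum_k$, since in general $\inf\sum\neq\sum\inf$. This is circumvented by using the \emph{uniform} form of the per-component estimate --- that $\maxsharpness[k,\rho](x')$ equals $\tfrac{\rho^2}{2}\lambda_1(\nabla^2 L_k(x))+O(\rho^3)$ not just at $x\in\Gamma_k$ but throughout an $O(\rho)$-ball around it --- so that the sum of uniform estimates is again a uniform estimate whose infimum over the ball is pinned down up to $O(\rho^3)$. A secondary technical point is to choose each $V^{(k)}_{x^*}$ bounded so that the third-derivative bounds invoked in the proof of Theorem~\ref{thm:maxsharp} hold uniformly on its unit-neighborhood; this is possible because $x^*$ lies in the open attraction set $U_k$ of $\Gamma_k$.
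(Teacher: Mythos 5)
Your overall route is the same as the paper's: reduce to Theorem~\ref{thm:maxsharp} applied to each pair $(L_k,\Gamma_k)$ (valid because $\Gamma\subset\Gamma_k$), use the rank-one structure to identify $\lambda_1(\nabla^2 L_k)=\Tr(\nabla^2 L_k)$ on $\Gamma$, and average over $k$. You are also right that the one nontrivial point is interchanging the infimum over the $C\rho$-ball with the average over $k$ --- the paper's proof glosses over this. However, the specific estimate you invoke to justify the interchange is false. You claim that $\maxsharpness[k,\rho](x')=\frac{\rho^2}{2}\lambda_1(\nabla^2 L_k(x))+O(\rho^3)$ holds \emph{uniformly} for all $x'$ with $\norm{x'-x}_2\le C\rho$. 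Only the lower bound is uniform over the ball. The upper bound fails at points that are $\Theta(\rho)$ off $\Gamma_k$, because there the first-order term $\rho\max_{\norm{v}_2\le1}\langle\nabla L_k(x'),v\rangle=\rho\norm{\nabla L_k(x')}_2$ is of order $\rho^2$, not $\rho^3$. Concretely, for $L_k(y)=\frac{\lambda}{2}y_1^2$, $x=0$ and $x'=C\rho e_1$, one computes $\maxsharpness[k,\rho](x')=\frac{\lambda}{2}\bigl((C+1)^2-C^2\bigr)\rho^2=\frac{\rho^2}{2}\lambda_1(\nabla^2 L_k(x))+\lambda C\rho^2$, an error of order $\rho^2$. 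The fact that $\nabla L_k$ vanishes at the center $x$ does not make it vanish at $x'$.

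The interchange is nevertheless easy to repair, and the repair is what the paper's terse proof implicitly relies on: you need the uniform estimate only as a \emph{lower} bound, namely $\frac1M\sum_k\maxsharpness[k,\rho](x')\ge\frac{\rho^2}{2M}\sum_k\lambda_1(\nabla^2 L_k(x))-O(\rho^3)$ for every $x'$ in the ball (sum of the per-component uniform lower bounds); for the matching \emph{upper} bound on the infimum you evaluate the average at the single point $x'=x$, which lies in every $\Gamma_k$ simultaneously --- this is exactly where $\Gamma=\cap_k\Gamma_k$ is used --- so that each summand satisfies $\maxsharpness[k,\rho](x)\le\frac{\rho^2}{2}\lambda_1(\nabla^2 L_k(x))+O(\rho^3)$. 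With that one correction the rest of your argument, including the $v=0$ verification of Condition~\ref{cond:regular_R} (which is cleaner than the paper's Taylor-expansion check), goes through.
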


\begin{proof}[Proof of \Cref{thm:stmaxsharp}]

By  \Cref{thm:rank_1_manifold},  \Cref{cond:rank1} holds.

Easily deducted from \Cref{thm:maxsharp} $\Lambda_k(x)$ is a good limiting regularizer for $R^{\mathrm{max}}_{k,\rho}$ on $\Gamma_k$. Then as $\Gamma \subset \Gamma_k$, $\Lambda_k(x)$ is a good limiting regularizer for $R^{\mathrm{max}}_{k,\rho}$ on $\Gamma$. Hence $S(x) = \sum_k \Lambda_k(x) /2M = \mathrm{Tr(\nabla^2 L(x))}/2$ is a good limiting regularizer of $\stmaxsharpness(x)$ on $\Gamma$.
\end{proof}

\begin{theorem}
\label{thm:stascsharp}
Stochastic ascent-direction sharpness $\stascsharpness$ admits $\mathrm{Tr}(\nabla^2 L(\cdot))/2$ as a good limiting regularizer on $\Gamma$ and  satisfies \Cref{cond:regular_R}.
\end{theorem}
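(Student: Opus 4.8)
The plan is to reduce \Cref{thm:stascsharp} to the single-loss result \Cref{thm:ascsharp} applied to each $L_k$ and to the stochastic worst-direction result \Cref{thm:stmaxsharp}, by sandwiching the infimum that defines the (good) limiting regularizer in \Cref{defi:limiting_regularizer,defi:good_limiting_regularizer}. First invoke \Cref{thm:rank_1_manifold} so that \Cref{cond:rank1} holds: for each $k\in[M]$ there is a $(D-1)$-dimensional $\continuous{2}$ submanifold $\Gamma_k$ of global minimizers of $L_k$ with $\rank(\nabla^2L_k)=1$ on $\Gamma_k$, and $\Gamma=\bigcap_k\Gamma_k$. Hence $(L_k,\Gamma_k,1)$ satisfies \Cref{assump:smoothness}, so \Cref{thm:ascsharp} applied to $L_k$ shows that $\ascsharpness[k,\rho]$ admits $\lambda_1(\nabla^2L_k(\cdot))/2$ as a good limiting regularizer on $\Gamma_k$ and satisfies \Cref{cond:regular_R}; the relevant eigenvalue $\lambda_{\rank(\nabla^2L_k)}$ from \Cref{thm:ascsharp} is $\lambda_1$ here since $\rank(\nabla^2L_k)=1$. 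Writing $\Lambda_k(x)\coloneqq\lambda_1(\nabla^2L_k(x))$ and using $\nabla^2L=\frac1M\sum_k\nabla^2L_k$ together with the rank-one-ness of each $\nabla^2L_k(x)$ for $x\in\Gamma$, we get $\Tr(\nabla^2L(x))=\frac1M\sum_k\Lambda_k(x)$, so the candidate limiting regularizer of $\stascsharpness$ is $\frac1M\sum_k\Lambda_k(\cdot)/2=\Tr(\nabla^2L(\cdot))/2$, as claimed. Since $\Gamma\subseteq\Gamma_k$, the conclusions for each $\ascsharpness[k,\rho]$ also hold with $\Gamma_k$ replaced by $\Gamma$.

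The verification of \Cref{cond:regular_R} for $\stascsharpness$ is immediate: each $\ascsharpness[k,\rho]$ is bounded below by $-C_k\rho^2$ on any bounded closed set (the Taylor estimate in the proof of \Cref{thm:ascsharp} uses only local boundedness of $\nabla^2L_k$ and $\nabla^3L_k$), so the average $\stascsharpness$ is bounded below by $-\bigl(\frac1M\sum_kC_k\bigr)\rho^2$. For the good-limiting-regularizer claim, fix $x^*\in\Gamma$ and take a bounded open neighbourhood $V\ni x^*$ --- a finite intersection of suitable neighbourhoods --- on which the good-limiting-regularizer conclusions of \Cref{thm:ascsharp} for every $L_k$ and of \Cref{thm:stmaxsharp} hold uniformly. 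Since $\stascsharpness$ is the average over $k$ of the $\ascsharpness[k,\rho]$ and the infimum of a sum dominates the sum of infima, for every $C>0$,
\begin{align*}
\inf_{\norm{x'-x}_2\le C\rho}\stascsharpness(x')\ \ge\ \frac1M\sum_{k=1}^M\,\inf_{\norm{x'-x}_2\le C\rho}\ascsharpness[k,\rho](x')\ \ge\ \rho^2\Bigl(\tfrac12\Tr(\nabla^2L(x))-\epsilon(\rho)\Bigr),
\end{align*}
where $\epsilon(\rho)\to0$ uniformly for $x\in\Gamma\cap V$ by the lower half of the good-limiting-regularizer property of each $\ascsharpness[k,\rho]$.

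For the matching upper bound, use the pointwise inequality $\ascsharpness[k,\rho]\le\maxsharpness[k,\rho]$ at every point where $\nabla L_k\ne0$ (the normalized gradient is one admissible direction $v$ in the maximum defining $\maxloss[k,\rho]$); this holds on a dense open set, which suffices to compare the infima of the continuous function $\stmaxsharpness$ and of $\stascsharpness$. Hence, for every $C>0$,
\begin{align*}
\inf_{\norm{x'-x}_2\le C\rho}\stascsharpness(x')\ \le\ \inf_{\norm{x'-x}_2\le C\rho}\stmaxsharpness(x')\ \le\ \rho^2\Bigl(\tfrac12\Tr(\nabla^2L(x))+\epsilon'(\rho)\Bigr),
\end{align*}
with $\epsilon'(\rho)\to0$ uniformly for $x\in\Gamma\cap V$ by the good limiting regularizer of $\stmaxsharpness$ (\Cref{thm:stmaxsharp}). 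The two displays squeeze $\inf_{\norm{x'-x}_2\le C\rho}\stascsharpness(x')/\rho^2$ to $\Tr(\nabla^2L(x))/2$ uniformly on $\Gamma\cap V$; as $\Tr(\nabla^2L(\cdot))/2$ is non-negative and continuous on $\Gamma$, this is precisely \Cref{defi:good_limiting_regularizer}. Replaying the same sandwich with the ball of radius $r$ and taking $r\downarrow0$ first --- using monotonicity of $r\mapsto\inf_{\norm{x'-x}_2\le r}$, continuity of $\stmaxsharpness$ with $\stmaxsharpness(x)=\rho^2\Tr(\nabla^2L(x))/2+O(\rho^3)$ for $x\in\Gamma$ (Taylor expansion at a minimizer), and the good-limiting-regularizer lower bounds for the $\ascsharpness[k,\rho]$ --- identifies the limiting regularizer of \Cref{defi:limiting_regularizer} as $\Tr(\nabla^2L(\cdot))/2$ as well, completing the proof.

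The only genuinely delicate ingredient is the upper bound. Because $\ascsharpness[k,\rho]=+\infty$ on $\Gamma$, an infimizing point $x'$ near $x\in\Gamma$ must leave every $\Gamma_k$ at once, and the perturbation directions that make $\ascsharpness[k,\rho](x')$ small --- those aligned with $v_1(\nabla^2L_k(x))$ --- are generally mutually incompatible across $k$, which is exactly the obstruction isolated by the explicit example in \Cref{sec:implicitstoc}. One could instead argue directly, perturbing $x$ inside the $M$-dimensional normal space of $\Gamma$ by an amount $\delta=\delta(\rho)=o(\rho)$ chosen so that every $\nabla L_k(x')\ne0$ while the first-order contributions $\rho\norm{\nabla L_k(x')}_2$ stay $o(\rho^2)$ and the rank-one structure of $\nabla^2L_k$ pins each Rayleigh quotient to $\Lambda_k(x)+o(1)$; but routing through $\stascsharpness\le\stmaxsharpness$ and reusing \Cref{thm:stmaxsharp} avoids this computation entirely.
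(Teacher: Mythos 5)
Your proof is correct and follows the same reduction as the paper: invoke \Cref{thm:rank_1_manifold} to get \Cref{cond:rank1}, apply \Cref{thm:ascsharp} to each $(L_k,\Gamma_k)$ with codimension one so that $\lambda_M=\lambda_1=\Lambda_k$, restrict from $\Gamma_k$ to $\Gamma\subseteq\Gamma_k$, and average. Where you genuinely go beyond the paper is the averaging step itself. The paper's proof is a two-liner that implicitly treats ``the average of good limiting regularizers is a good limiting regularizer of the average'' as obvious; the lower half ($\inf$ of a sum dominates the sum of $\inf$'s) is indeed immediate, but the upper half is not, precisely because $\ascsharpness[k,\rho]=\infty$ on $\Gamma_k$ and a single near-minimizer $x'$ must work for all $k$ at once. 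Your sandwich $\stascsharpness\le\stmaxsharpness$ on the dense open set where all $\nabla L_k\neq 0$, combined with \Cref{thm:stmaxsharp} (whose own upper bound is unproblematic since $\stmaxsharpness$ is finite and continuous on $\Gamma$), closes this cleanly and avoids the direct perturbation construction you sketch at the end. So your writeup is, if anything, more complete than the paper's on the only delicate point; the rest coincides.
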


\begin{proof}[Proof of \Cref{thm:stascsharp}]

By  \Cref{thm:rank_1_manifold},  \Cref{cond:rank1} holds.

Easily deducted from \Cref{thm:ascsharp} $\Lambda_k(x)$ is a good limiting regularizer for $R^{\mathrm{asc}}_{k,\rho}$ on $\Gamma_k$ as the codimension of $\Gamma_k$ is 1. Then as $\Gamma \subset \Gamma_k$, $\Lambda_k(x)$ is a good limiting regularizer for $R^{\mathrm{max}}_{k,\rho}$ on $\Gamma$.Hence $S(x) = \sum_k \Lambda_k(x) /2M = \mathrm{Tr}(\nabla^2 L(x))/2$ is a good limiting regularizer of $\stascsharpness(x)$ on $\Gamma$.
\end{proof}

\begin{theorem}
\label{thm:stavgsharp}
Stochastic average-direction sharpness $\stavgsharpness$ admits $\mathrm{Tr}(\nabla^2 L(\cdot))/(2D)$ as a good limiting regularizer on $\Gamma$ and  satisfies \Cref{cond:regular_R}.
\end{theorem}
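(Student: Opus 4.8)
The plan is to follow the same two-step template used for \Cref{thm:stmaxsharp,thm:stascsharp}: reduce the claim to the per-datum statement \Cref{thm:avgsharp} and then average over the index $k$. First I would invoke \Cref{thm:rank_1_manifold} to pass from \Cref{setting:1sam} to \Cref{cond:rank1}, which gives the decomposition $L=\frac{1}{M}\sum_{k=1}^M L_k$ in which each $L_k$ is $\continuous{4}$ and vanishes on a $(D-1)$-dimensional $\continuous{2}$-submanifold $\Gamma_k\supseteq\Gamma$ on which $\nabla^2 L_k$ has rank one; in particular $(L_k,\Gamma_k,1)$ satisfies \Cref{assump:smoothness}.

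Second, I would apply \Cref{thm:avgsharp} with loss $L_k$ and manifold $\Gamma_k$. Since the ambient dimension $D$ is the same for $L$ and for $L_k$, this yields that the single-datum average-direction sharpness $\avgsharpness[k,\rho]$ satisfies \Cref{cond:regular_R} and admits $\mathrm{Tr}(\nabla^2 L_k(\cdot))/(2D)$ as a good limiting regularizer on $\Gamma_k$. Because $\Gamma\subseteq\Gamma_k$, every defining clause of \Cref{defi:good_limiting_regularizer} --- nonnegativity, continuity, and the uniform convergence of $\inf_{\norm{x'-x}_2\le C\rho}\avgsharpness[k,\rho](x')/\rho^2$ on $\Gamma_k\cap V$ --- restricts verbatim to $\Gamma\cap V$, so the same function is a good limiting regularizer of $\avgsharpness[k,\rho]$ on $\Gamma$ as well.

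Third, I would average over $k$. From $L=\frac{1}{M}\sum_k L_k$ and linearity of the trace we get $\frac{1}{M}\sum_k\mathrm{Tr}(\nabla^2 L_k(x))/(2D)=\mathrm{Tr}(\nabla^2 L(x))/(2D)$, which is the asserted limiting regularizer, and \Cref{cond:regular_R} for $\stavgsharpness=\frac{1}{M}\sum_k\avgsharpness[k,\rho]$ is immediate since an average of functions each bounded below by $-C_k\rho^2$ is bounded below by $-(\max_k C_k)\rho^2$. The one step that needs care --- which I expect to be the only mild obstacle --- is verifying that an average of good limiting regularizers is a good limiting regularizer of the average, i.e. that $\inf_{\norm{x'-x}_2\le C\rho}\frac{1}{M}\sum_k\avgsharpness[k,\rho](x')/\rho^2$ converges uniformly over a neighbourhood in $\Gamma$ to $\frac{1}{M}\sum_k\mathrm{Tr}(\nabla^2 L_k(x))/(2D)$. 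The lower bound comes from the elementary inequality $\inf_{x'}\sum_k\avgsharpness[k,\rho](x')\ge\sum_k\inf_{x'}\avgsharpness[k,\rho](x')$ together with the per-$k$ goodness on $\Gamma$; the matching upper bound comes from evaluating the infimum at $x'=x$ and using that --- unlike ascent-direction sharpness, which is infinite on the manifold --- $\avgsharpness[k,\rho]$ is finite and continuous on $\Gamma$ with the uniform Taylor expansion $\avgsharpness[k,\rho](x)/\rho^2=\mathrm{Tr}(\nabla^2 L_k(x))/(2D)+O(\rho)$ on compact subsets of $\Gamma$, itself a consequence of $\E_{g\sim N(0,I)}[gg^\top/\norm{g}_2^2]=I/D$. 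Since this averaging argument is exactly the one already carried out for \Cref{thm:stmaxsharp,thm:stascsharp}, I expect it to be routine, and essentially all of the real content is inherited from \Cref{thm:avgsharp}.
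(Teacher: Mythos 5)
Your proof is correct, but it takes a genuinely longer route than the paper's. The paper's entire proof is the observation that $\stavgsharpness$ and $\avgsharpness$ are \emph{identically equal}: since the perturbation direction $g$ is independent of the data index $k$, one has
\begin{align*}
\E_k\bigl[\avgsharpness[k,\rho](x)\bigr] \;=\; \E_{g}\Bigl[\E_k\bigl[L_k(x+\rho g/\norm{g}_2)\bigr]\Bigr]-\E_k\bigl[L_k(x)\bigr] \;=\; \E_g\bigl[L(x+\rho g/\norm{g}_2)\bigr]-L(x)\;=\;\avgsharpness(x),
\end{align*}
so the statement follows by applying \Cref{thm:avgsharp} directly to the full loss $L$ on $\Gamma$, with no per-datum analysis needed. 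You instead decompose into the single-datum sharpnesses $\avgsharpness[k,\rho]$, apply \Cref{thm:avgsharp} to each $(L_k,\Gamma_k)$ via \Cref{thm:rank_1_manifold}, restrict to $\Gamma\subseteq\Gamma_k$, and then prove that an average of good limiting regularizers is a good limiting regularizer of the average. Your averaging step does go through --- the lower bound from $\inf_{x'}\sum_k\ge\sum_k\inf_{x'}$ and the upper bound from evaluating at $x'=x$ together with the uniform expansion $\avgsharpness[k,\rho](x)/\rho^2=\Tr(\nabla^2 L_k(x))/(2D)+O(\rho)$ on $\Gamma$ are both sound, the latter precisely because $\avgsharpness[k,\rho]$ is finite and continuous on the manifold. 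This mirrors the structure the paper genuinely needs for \Cref{thm:stmaxsharp,thm:stascsharp}, where no identity analogous to $\stavgsharpness=\avgsharpness$ holds (there $\stomaxlimit=\Tr(\nabla^2L)/2\neq\lambda_1(\nabla^2L)/2=\maxlimit$, so per-datum decomposition is unavoidable). For the average-direction case, however, the commutation of the two expectations collapses the whole argument to one line, and your extra machinery buys nothing here beyond uniformity with the other two stochastic cases.
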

\begin{proof}[Proof of \Cref{thm:stavgsharp}]
	By definition, we know that $\stavgsharpness = \avgsharpness$. The rest follows from \Cref{thm:avgsharp}.
\end{proof}

\subsection{Proof of Theorems~\ref{thm:explicit_bias_deterministic_main} and~\ref{thm:explicit_bias_stochastic_main}}

To end this section, we prove the two theorems presented in the main text. The readers will find the proof straight forward after we established the framework of \emph{good limiting regularizers}.

\begin{proof}[Proof of \Cref{thm:explicit_bias_deterministic_main}]
    Apply \Cref{corr:general_reg_extension} on $R^{\mathrm{type}}$. The mapping from $R$ to good limiting regularizers $S^{\mathrm{type}}$ are characterized by \Cref{thm:maxsharp,thm:ascsharp,thm:avgsharp}.
\end{proof}

\begin{proof}[Proof of \Cref{thm:explicit_bias_stochastic_main}]
    Apply \Cref{corr:general_reg_extension} on $R^{\mathrm{type}}$. The mapping from $R$ to good limiting regularizers $\tilde S^{\mathrm{type}}$ are characterized by \Cref{thm:stmaxsharp,thm:stascsharp,thm:stavgsharp}.
\end{proof}

\section{Analysis Full-batch SAM on Quadratic Loss (Proof of \Cref{thm:nsamquad})}
\label{app:nsamquad}

The goal of this section is to prove \Cref{thm:nsamquad}. In this section, we use $A \prec B$ to indicate $B - A$ is positive semi-definite.

\thmnsamquad*

\begin{proof}[Proof of~\Cref{thm:nsamquad}]

We first rewrite the iterate as 
\begin{align*}
    x(t+1) = x(t) - \eta A x(t) -\eta \rho \frac{A^2 x(t)}{\| Ax(t)\|_2}\,.
\end{align*}

Define $\tx(t) \triangleq \frac{\nabla L(x(t))}{\rho}  = \frac{ A x(t)}{\rho}$, and  we have 
\begin{align}
\label{eq:quadupdate}
    \tx(t+1) = \tx(t) - \eta A \tx(t) -\eta  \frac{A^2 \tx(t)}{\| \tx(t)\|_2}\,.
\end{align}

We suppose $A \in R^{D\times D}$ and use $\lambda_i, v_i$ to denote $\lambda_i(A), v_i(A)$.

Further, we define that
\begin{align}
    \Pjd &\triangleq \sum_{i=j}^D v_i(A)v_i(A)^T\nonumber,\\
    \sI_j &\triangleq \{\tx \mid \| \Pjd \tx  \|_2 \le \eta  \lambda_j^2\}\,,\nonumber\\
    \tx_i(t) &\triangleq \langle \tx(t), v_i \rangle\,,\nonumber  \\
    S &\triangleq \{t\mid \| \tx(t) \|_2 \le \ths , t > T_1 \}\, \nonumber. 
\end{align}
By~\Cref{lem:invariant_set}, $\sI_j$ is an invariant set for update rule~\Cref{eq:quadupdate}.

Our proof consists of two steps.

\begin{itemize}
    \item [(1)] \textit{Entering Invariant Set.}  \Cref{lem:preparequad} implies that there exists constant $T_1 > 0$, such that $\forall t > T_1, \| \Pjd \tx (t) \|_2 \le \eta  \lambda_j^2$
    \item [(2)] \textit{Alignment to Top Eigenvector.} \Cref{lem:quadnormconverge,lem:quaddirconverge} show that $\| \tx(t)\|_2$ and $| \tx_1(t) |$ converge to $\ths$, which implies our final results.
\end{itemize}

\end{proof}

\subsection{Entering Invariant Set}
\label{sec:prepare}

In this subsection, we will prove the following three lemmas.
\begin{enumerate}
    \item \Cref{lem:invariant_set} shows $\sI_j$ is an invariant set for update rule~(\Cref{eq:quadupdate}).
    \item \Cref{lem:preparequad} shows that under the update rule~(\Cref{eq:quadupdate}), all iterates not in $\sI_j$ will shrink exponentially in $\ell_2$ norm.
    \item \Cref{lem:quad_prepare}  combines \Cref{lem:invariant_set,lem:preparequad} to show that for sufficiently large $t$, $x(t) \in \cap_j \sI_j$.
\end{enumerate}

\begin{lemma}
\label{lem:invariant_set}
For $t \ge 0$, if $\eta \lambda_1(A) < 1$ and $\tx(t) \in \sI_j $, then $\tx(t+1) \in \sI_j$. 
\end{lemma}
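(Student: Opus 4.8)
The plan is to show that the set $\sI_j = \{\tx : \|\Pjd \tx\|_2 \le \eta\lambda_j^2\}$ is forward-invariant under the update \eqref{eq:quadupdate}. The key observation is that the update rule commutes with the spectral projections: since $A$ is symmetric, both $A$ and $A^2$ preserve the subspace $\mathrm{span}(v_j,\dots,v_D)$, so applying $\Pjd$ to \eqref{eq:quadupdate} gives
\begin{align}
\Pjd \tx(t+1) = \Pjd \tx(t) - \eta A \Pjd \tx(t) - \frac{\eta}{\|\tx(t)\|_2} A^2 \Pjd \tx(t) = \Bigl(I - \eta A - \tfrac{\eta}{\|\tx(t)\|_2}A^2\Bigr)\Pjd \tx(t). \notag
\end{align}
So the dynamics of $\Pjd\tx(t)$ decouple and are governed by a matrix that is a polynomial in $A$ restricted to the top-$(D-j+1)$ eigenspace (indices $j,\dots,D$, eigenvalues $\le \lambda_j$).

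First I would bound the operator norm of $M_t \triangleq \bigl(I - \eta A - \tfrac{\eta}{\|\tx(t)\|_2}A^2\bigr)$ acting on $\mathrm{span}(v_j,\dots,v_D)$. On this subspace its eigenvalues are $g(\lambda_i) = 1 - \eta\lambda_i - \tfrac{\eta\lambda_i^2}{\|\tx(t)\|_2}$ for $i \ge j$, where $0 < \lambda_i \le \lambda_j \le \lambda_1$. Since $\eta\lambda_1 < 1$ we have $1 - \eta\lambda_i \in (0,1)$, and the extra nonpositive term $-\tfrac{\eta\lambda_i^2}{\|\tx(t)\|_2}$ only decreases it, so $g(\lambda_i) < 1$. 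For the lower bound, one needs $g(\lambda_i) \ge -1$, i.e. $\eta\lambda_i + \tfrac{\eta\lambda_i^2}{\|\tx(t)\|_2} \le 2$; this is where a case distinction on $\|\tx(t)\|_2$ enters. The natural approach: either $\|\Pjd\tx(t)\|_2$ is already small enough that even a possible expansion by factor $\|M_t\|$ keeps it $\le \eta\lambda_j^2$, or $\|\tx(t)\|_2$ is large enough (bounded below in terms of $\|\Pjd\tx(t)\|_2 \le \eta\lambda_j^2$ or by the threshold $\eta\lambda_j^2$ itself) that $\tfrac{\eta\lambda_i^2}{\|\tx(t)\|_2}$ is controlled and $\|M_t\| \le 1$ on the relevant subspace. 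Concretely, if $\tx(t)\in\sI_j$ but $\|\tx(t)\|_2 > \eta\lambda_j^2$, then on $\mathrm{span}(v_j,\dots,v_D)$ we get $\tfrac{\eta\lambda_i^2}{\|\tx(t)\|_2} \le \tfrac{\eta\lambda_j^2}{\eta\lambda_j^2} = 1$ (wait — more carefully $\lambda_i\le\lambda_j$ so $\tfrac{\eta\lambda_i^2}{\|\tx(t)\|_2}\le \tfrac{\eta\lambda_j^2}{\|\tx(t)\|_2} \le 1$), hence $g(\lambda_i) \ge 1 - \eta\lambda_j - 1 = -\eta\lambda_j \ge -\eta\lambda_1 > -1$, so $\|M_t\|\le 1$ on that subspace and $\|\Pjd\tx(t+1)\|_2 \le \|\Pjd\tx(t)\|_2 \le \eta\lambda_j^2$, done. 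The remaining case $\|\tx(t)\|_2 \le \eta\lambda_j^2$ needs to be handled by directly estimating how much $\|\Pjd\tx\|_2$ can grow in one step: since $\|\Pjd\tx(t)\|_2 \le \|\tx(t)\|_2 \le \eta\lambda_j^2$ already, and each coordinate $\tx_i$ with $i\ge j$ evolves as $\tx_i(t+1) = g(\lambda_i)\tx_i(t)$, I would bound $|g(\lambda_i)|$ crudely — e.g. $|g(\lambda_i)| \le 1 + \eta\lambda_i + \tfrac{\eta\lambda_i^2}{\|\tx(t)\|_2}$ and use $|\tx_i(t)| \le \|\tx(t)\|_2$ together with $\tfrac{\eta\lambda_i^2|\tx_i(t)|}{\|\tx(t)\|_2} \le \eta\lambda_i^2 \le \eta\lambda_j^2$ — to show the result still lands in $\sI_j$, possibly needing $\eta$ (or rather $\eta\lambda_1$) bounded by an absolute constant which is already assumed ($\eta\lambda_1 < 1$).

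The main obstacle I anticipate is the bookkeeping in the small-norm case $\|\tx(t)\|_2 \le \eta\lambda_j^2$: there the linear operator $M_t$ can have norm exceeding $1$ (the term $\tfrac{\eta\lambda_i^2}{\|\tx(t)\|_2}$ can be large), so invariance cannot come from contraction alone and must instead come from the fact that the starting point is deep inside $\sI_j$ and the one-step expansion is quantitatively limited — this requires a careful coordinatewise estimate rather than a clean operator-norm argument, and getting the constants to close (confirming that $\eta\lambda_1<1$ suffices and no extra smallness on $\eta$ is needed) is the delicate part. Throughout, the proof only uses that $A$ is symmetric positive definite with $\eta\lambda_1(A) < 1$, which are exactly the hypotheses in force, and works coordinate by coordinate in the eigenbasis of $A$.
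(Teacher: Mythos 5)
Your setup is right and matches the paper's: the update commutes with $\Pjd$, so $\Pjd\tx(t+1)=M_t\,\Pjd\tx(t)$ with $M_t=I-\eta A-\eta A^2/\|\tx(t)\|_2$, whose eigenvalues on $\mathrm{span}(v_j,\dots,v_D)$ are $g(\lambda_i)=1-\eta\lambda_i-\eta\lambda_i^2/\|\tx(t)\|_2<1$. Your Case 1 ($\|\tx(t)\|_2$ large, so $\|M_t\|\le 1$ on the subspace) is fine. The gap is in Case 2, exactly where you flag "the delicate part": the coordinatewise estimate you sketch gives, for each $i\ge j$ with $g(\lambda_i)<0$, a bound of the form $|\tx_i(t+1)|\le(\eta\lambda_i-1)|\tx_i(t)|+\eta\lambda_i^2\,|\tx_i(t)|/\|\tx(t)\|_2\le\eta\lambda_i^2\le\eta\lambda_j^2$. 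But membership in $\sI_j$ requires $\|\Pjd\tx(t+1)\|_2\le\eta\lambda_j^2$ in $\ell_2$, and per-coordinate bounds of size $\eta\lambda_i^2$ aggregate to $\bigl(\sum_{i\ge j}\eta^2\lambda_i^4\bigr)^{1/2}$, which can exceed $\eta\lambda_j^2$ by a factor up to $\sqrt{D-j+1}$ (e.g.\ when several $\lambda_i$ are close to $\lambda_j$). So as written the argument does not close, and no amount of shrinking $\eta$ fixes a dimension-dependent loss.

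The repair is to stay at the operator-norm level, which is what the paper does and which makes your case split unnecessary. Since $\lambda\mapsto\eta\lambda+\eta\lambda^2/\|\tx(t)\|_2$ is increasing, the most negative eigenvalue of $M_t$ on the subspace is $g(\lambda_j)$, so $\|M_t\|$ restricted there is at most $\max\bigl(1,\ \eta\lambda_j+\eta\lambda_j^2/\|\tx(t)\|_2-1\bigr)$. In the second branch,
\begin{align*}
\|\Pjd\tx(t+1)\|_2\ \le\ \Bigl(\eta\lambda_j+\tfrac{\eta\lambda_j^2}{\|\tx(t)\|_2}-1\Bigr)\|\Pjd\tx(t)\|_2
= \eta\lambda_j^2\,\tfrac{\|\Pjd\tx(t)\|_2}{\|\tx(t)\|_2}-(1-\eta\lambda_j)\|\Pjd\tx(t)\|_2\ \le\ \eta\lambda_j^2,
\end{align*}
using only $\|\Pjd\tx(t)\|_2\le\|\tx(t)\|_2$ and $\eta\lambda_j<1$; in the first branch $\|\Pjd\tx(t+1)\|_2\le\|\Pjd\tx(t)\|_2\le\eta\lambda_j^2$. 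The point you were missing is that the dangerous term $\eta\lambda_j^2/\|\tx(t)\|_2$ should be multiplied by the \emph{norm of the projected vector}, producing the ratio $\|\Pjd\tx(t)\|_2/\|\tx(t)\|_2\le 1$ once, rather than a ratio $|\tx_i(t)|/\|\tx(t)\|_2\le 1$ separately in each coordinate.
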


\begin{proof}[Proof of \Cref{lem:invariant_set}]
By~(\Cref{eq:quadupdate}), we have that 
\begin{align*}
    \Pjd \tx(t+1) = (I - \Pjd \eta A  -\eta  \frac{\Pjd A^2 }{\| \tx(t)\|_2} )\Pjd \tx(t)\,.
\end{align*}

Hence we have that
\begin{align*}
    \| \Pjd \tx(t+1) \|_2 &=\| (I - \Pjd \eta A  -\eta  \frac{\Pjd A^2 }{\| \tx(t)\|_2} )\Pjd \tx(t) \|_2 \\
    &\le \| I - \Pjd \eta A  -\eta  \frac{\Pjd A^2 }{\| \tx(t)\|_2} \|_2 \| \Pjd \tx(t)\|_2\,.
\end{align*}

Because $\tx(t) \in \sI_j$, $\| \tx(t) \|_2 \le \frac{\eta  \lambda_j^2}{ 1 - \eta \lambda_j}$. This implies,
\begin{align*}
   I(1 - \eta \lambda_j -  \eta \frac{\lambda_j^2}{\| \Pjd \tx(t)\|_2})\prec I(1 - \eta \lambda_j -  \eta \frac{\lambda_j^2}{\| \tx(t)\|_2}) \prec I - \Pjd \eta A  -\eta  \frac{\Pjd A^2 }{\| \tx(t)\|_2} \prec I\,.
\end{align*}
Hence, $\| I - \Pjd \eta A  -\eta  \frac{\Pjd A^2 }{\|_2 \tx(t)\|} \|_2 \le \max(1,  \eta \lambda_j +  \eta \frac{\lambda_j^2}{\| \Pjd \tx(t)\|_2} - 1)$\,. It holds that 
\begin{align*}
    \| \Pjd \tx(t+1) \|_2 
    \le \max(\| \Pjd \tx(t)\|_2, \eta  \lambda_j^2 - (1-\eta\lambda_j) \| \Pjd \tx(t)\|_2) 
    \le \eta  \lambda_j^2,
\end{align*}
where the last equality is because $1 - \eta \lambda_j \ge 0$. This above inequality is exactly the definition of $\tx(t+1) \in \sI_j$ and thus is proof is completed.
\end{proof}

\begin{lemma}
\label{lem:preparequad}
 For $t \ge 0$, if $\eta \lambda_1(A) < 1$ and $\tx(t) \not \in \sI_j$, then 
\begin{align}
    \| \Pjd \tx(t+1) \|_2 &\le \max\left (1 - \eta \lambda_D - \eta \frac{\lambda_D^2}{\|\tx(t)\|_2}, \eta \lambda_j \right)\| \Pjd \tx(t) \|_2 \\ &\le \max\left (1 - \eta \lambda_D, \eta \lambda_j \right)\| \Pjd \tx(t) \|_2\,. \nonumber
\end{align}
\end{lemma}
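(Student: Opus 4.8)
\textbf{Proof proposal for Lemma~\ref{lem:preparequad}.}

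The plan is to analyze the update~\eqref{eq:quadupdate} restricted to the top eigenspace $\Pd{j}\R^D$. Since $A$ commutes with $\Pd{j}$ (because $\Pd{j}$ is the projection onto an $A$-invariant subspace spanned by eigenvectors), applying $\Pd{j}$ to~\eqref{eq:quadupdate} gives
\begin{align*}
\Pd{j}\tx(t+1) = \Bigl(I - \eta A - \eta\tfrac{A^2}{\|\tx(t)\|_2}\Bigr)\Pd{j}\tx(t),
\end{align*}
and the operator in parentheses, further restricted to $\Pd{j}\R^D$, is a symmetric matrix whose eigenvalues are exactly $1 - \eta\lambda_i - \eta\lambda_i^2/\|\tx(t)\|_2$ for $i = j,\dots,D$. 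Hence $\|\Pd{j}\tx(t+1)\|_2 \le \mu(t)\,\|\Pd{j}\tx(t)\|_2$ where $\mu(t)$ is the largest absolute value among these eigenvalues.

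Next I would bound $\mu(t)$. The quantity $g(\lambda) \coloneqq 1 - \eta\lambda - \eta\lambda^2/\|\tx(t)\|_2$ is decreasing in $\lambda \ge 0$, so over $i \in \{j,\dots,D\}$ the largest value of $g(\lambda_i)$ is $g(\lambda_D) = 1 - \eta\lambda_D - \eta\lambda_D^2/\|\tx(t)\|_2$, which is at most $1$ and (using $\eta\lambda_1 < 1$, hence $\eta\lambda_D < 1$) can be further upper bounded by $1 - \eta\lambda_D$. The smallest value of $g(\lambda_i)$ is $g(\lambda_j) = 1 - \eta\lambda_j - \eta\lambda_j^2/\|\tx(t)\|_2$; when this is negative, $|g(\lambda_j)| = \eta\lambda_j + \eta\lambda_j^2/\|\tx(t)\|_2 - 1$. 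This is exactly the place where the hypothesis $\tx(t)\notin\sI_j$ enters: $\tx(t)\notin\sI_j$ means $\|\Pd{j}\tx(t)\|_2 > \eta\lambda_j^2$, and since $\|\tx(t)\|_2 \ge \|\Pd{j}\tx(t)\|_2 > \eta\lambda_j^2$ we get $\lambda_j^2/\|\tx(t)\|_2 < 1/\eta$, so $\eta\lambda_j^2/\|\tx(t)\|_2 < 1$, which yields $|g(\lambda_j)| < \eta\lambda_j$. Combining the two one-sided bounds, $\mu(t) = \max(g(\lambda_D),\,|g(\lambda_j)|) \le \max\bigl(1 - \eta\lambda_D - \eta\lambda_D^2/\|\tx(t)\|_2,\ \eta\lambda_j\bigr) \le \max(1 - \eta\lambda_D,\ \eta\lambda_j)$, which is the claimed inequality.

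The main obstacle is essentially bookkeeping rather than conceptual: one must be careful that $\mu(t)$ is the max of the \emph{absolute values} of the eigenvalues (so both the top end, near $1$, and the bottom end, possibly negative, matter), and that the $\sI_j$-exclusion hypothesis controls precisely the bottom end via the bound $\|\tx(t)\|_2 > \eta\lambda_j^2$. A minor subtlety is ensuring $\|\tx(t)\|_2 \ne 0$ so the update is well-defined; this is automatic since $\tx(t)\notin\sI_j$ forces $\|\Pd{j}\tx(t)\|_2 > 0$. No Taylor expansion or limiting-regularizer machinery is needed here — the whole argument is a direct spectral estimate on a fixed symmetric operator.
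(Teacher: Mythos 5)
Your proof is correct and follows essentially the same route as the paper's: project the update onto the $A$-invariant subspace spanned by $v_j,\dots,v_D$, observe that the restricted operator has eigenvalues $1-\eta\lambda_i-\eta\lambda_i^2/\|\tx(t)\|_2$, and use $\|\tx(t)\|_2\ge\|\Pjd\tx(t)\|_2>\eta\lambda_j^2$ to bound the negative end of the spectrum by $-\eta\lambda_j$. The paper phrases the same spectral estimate via Loewner-order inequalities rather than listing eigenvalues of the restricted operator, but the content is identical.
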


\begin{proof}[Proof of \Cref{lem:preparequad}] Note that
\begin{align*}
    \| \Pjd \tx(t+1) \|_2 &=\| (I - \Pjd \eta A  -\eta  \frac{\Pjd A^2 }{\| \tx(t)\|_2} )\Pjd \tx(t) \|_2 \\
    &\le \| \Pjd - \Pjd \eta A  -\eta  \frac{\Pjd A^2 }{\| \tx(t)\|_2} \|_2 \| \Pjd \tx(t)\|_2\,.
\end{align*}

As $\tx(t) \not \in \sI_j$, We have $\|\tx(t) \|_2 \ge \|\Pjd \tx(t) \|_2 > \eta  \lambda_j^2$, hence $\eta  \frac{\Pjd A^2 }{\| \tx(t)\|_2} \prec \eta  \frac{\Pjd A^2 }{\eta  \lambda_j^2} \prec \Pjd$.

This implies that 
\begin{align*}
      -\eta \lambda_j \Pjd\prec - \Pjd \eta A \prec \Pjd - \Pjd \eta A - \eta  \frac{\Pjd A^2 }{\| \tx(t)\|_2} \,,
\end{align*}
and
\begin{align*}
    \Pjd - \Pjd \eta A - \eta  \frac{\Pjd A^2 }{\| \tx(t)\|_2} \prec \Pjd ( 1- \eta \lambda_D) - \eta \frac{\lambda_D^2}{\|\tx(t)\|_2}\,.
\end{align*}

Hence we have that
\begin{align*}
    \| \Pjd \tx(t+1) \|_2 &\le \max\left (1 - \eta \lambda_D - \eta \frac{\lambda_D^2}{\|\tx(t)\|_2}, \eta \lambda_j \right)\| \Pjd \tx(t) \|_2. \\
    &\le \max\left (1 - \eta \lambda_D, \eta \lambda_j \right)\| \Pjd \tx(t) \|_2
\end{align*}
This completes the proof.
\end{proof}

\begin{lemma}
\label{lem:quad_prepare}
Choosing $T_1 = \max_j\left ( - \log_{\max\left (1 - \eta \lambda_D, \eta \lambda_j \right)}{\max(\frac{\|\tx(0)\|_2}{{\eta  \lambda_j^2}}, 1)}\right)$, then $\forall t \ge T_1, D > j \ge 1, \tx(t) \in \sI_j$
\end{lemma}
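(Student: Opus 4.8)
## Proof plan for Lemma \ref{lem:quad_prepare}

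The plan is to combine the invariant-set property (\Cref{lem:invariant_set}) with the contraction estimate outside the invariant set (\Cref{lem:preparequad}). Fix an index $j$ with $1 \le j \le D-1$. The goal is to show that after $T_1$ steps the iterate $\tx(t)$ enters $\sI_j$ and stays there. Since $\sI_j$ is invariant by \Cref{lem:invariant_set} (using $\eta\lambda_1(A)<1$, which holds since we are in the regime of \Cref{thm:nsamquad}), it suffices to prove that $\tx(t)$ enters $\sI_j$ for some $t \le T_1(j) \triangleq -\log_{\max(1-\eta\lambda_D,\,\eta\lambda_j)}\max\bigl(\tfrac{\|\tx(0)\|_2}{\eta\lambda_j^2},\,1\bigr)$, and then take the maximum over $j$.

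First I would argue that as long as $\tx(t) \notin \sI_j$, \Cref{lem:preparequad} gives the bound $\|\Pjd \tx(t+1)\|_2 \le q_j \,\|\Pjd \tx(t)\|_2$ where $q_j \triangleq \max(1-\eta\lambda_D,\,\eta\lambda_j) < 1$ (here $1-\eta\lambda_D<1$ since $\lambda_D>0$ as $A$ is positive definite, and $\eta\lambda_j \le \eta\lambda_1 < 1$). Iterating this contraction from step $0$, for every $t$ such that $\tx(0),\dots,\tx(t-1)$ are all outside $\sI_j$ we get $\|\Pjd \tx(t)\|_2 \le q_j^{\,t}\,\|\Pjd \tx(0)\|_2 \le q_j^{\,t}\,\|\tx(0)\|_2$. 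Once $q_j^{\,t}\,\|\tx(0)\|_2 \le \eta\lambda_j^2$, i.e. once $t \ge \log_{q_j}\bigl(\eta\lambda_j^2/\|\tx(0)\|_2\bigr) = -\log_{q_j}\max\bigl(\|\tx(0)\|_2/(\eta\lambda_j^2),1\bigr)$ (the $\max$ with $1$ just handles the trivial case where $\tx(0)$ already lies in $\sI_j$, so the logarithm is nonnegative), the definition of $\sI_j$ forces $\tx(t)\in\sI_j$. Hence $\tx(t)$ enters $\sI_j$ no later than step $T_1(j)$.

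Now invoke invariance: the first time $\tx$ lands in $\sI_j$ it stays there forever by \Cref{lem:invariant_set}, so $\tx(t)\in\sI_j$ for all $t \ge T_1(j)$. Finally, setting $T_1 = \max_{1\le j\le D-1} T_1(j)$, which is exactly the quantity displayed in the statement, we conclude $\tx(t)\in\sI_j$ for all $t\ge T_1$ and all $1\le j\le D-1$ simultaneously.

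I do not anticipate a serious obstacle here — the lemma is essentially a bookkeeping consequence of the two preceding lemmas. The one point requiring a little care is the logical structure of "enters and stays": one should phrase it as letting $\tau_j$ be the first hitting time of $\sI_j$, bound $\tau_j \le T_1(j)$ by the contraction argument above, and then apply invariance for all $t \ge \tau_j$; a sloppy version that tries to apply the contraction bound at times when the iterate is already inside $\sI_j$ would be incorrect since \Cref{lem:preparequad} only applies when $\tx(t)\notin\sI_j$. A second minor check is that $q_j<1$ strictly, so that the logarithm base is valid and the bound genuinely decays; this uses positive-definiteness of $A$ (for $\lambda_D>0$) and $\eta\lambda_1(A)<1$.
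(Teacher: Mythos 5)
Your proposal is correct and uses exactly the same ingredients as the paper's proof: the contraction bound of \Cref{lem:preparequad} applied while the iterate is outside $\sI_j$, followed by the invariance of $\sI_j$ from \Cref{lem:invariant_set}. The paper phrases it as a proof by contradiction (if $\tx(T)\notin\sI_j$ for some $T>T_1$, invariance forces all earlier iterates outside, and then the contraction yields $\|\Pjd\tx(T)\|_2\le\eta\lambda_j^2$, a contradiction), whereas you run the equivalent forward hitting-time argument; the two are logically identical.
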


\begin{proof}[Proof of \Cref{lem:quad_prepare}]

We will prove by contradiction. Suppose $\exists j\in [D] $ and $T > T_1$, such that $ \tx(T) \not \in \sI_j$.
By Lemma ~\ref{lem:invariant_set}, it holds that $\forall t < T, \tx(t) \not \in \sI_j$. Then by Lemma ~\ref{lem:preparequad}, 
$$\| \Pjd \tx(T)\|_2 \le  \max\left (1 - \eta \lambda_D, \eta \lambda_j \right)^T \| \Pjd \tx(0)\|_2 \le \eta  \lambda_j^2,$$ which leads to a contradiction.
\end{proof}

\subsection{Alignment to Top Eigenvector}
\label{sec:quadalign}

In this subsection, we  prove the following lemmas towards showing that   $\tx(t)$ converges  in direction to $v_1(A)$ up to a proper sign flip.

\begin{enumerate}
    \item  \Cref{cor:almost_sure_x1}  show that  for almost every learning rate $\eta$ and initialization $\xinit$,  $\tx_1(t) \neq 0$, for every $t\ge 0$. This condition is important because if $\tx_1(t)=0$ at some step $t$, then for any $t'\ge t$, $\tx_1(t')$ will also be $0$ and thus alignment is impossible. 
    \item \Cref{lem:norminequal} shows that under update rule~(\Cref{eq:quadupdate}), $t \not \in S \Rightarrow t+1 \in S$ for sufficiently large $t$, where the definition of $S$ is $\{t| \| \tx(t) \|_2 \le \ths , t > T_1 \}$.
    \item \Cref{lem:monotoneproj}, a combination of~\Cref{lem:controlonehop,lem:short_increase}, shows that following update rule~(\Cref{eq:quadupdate}), $\tx_1(t)$ increases for $t \in S$.
    \item \Cref{lem:quadnormconverge} shows that $\|\tx(t)\|$ converges to $\ths$ under~\Cref{eq:quadupdate}.
    \item \Cref{lem:quaddirconverge} shows that
    $\|\tx_1(t)\|_2$ converges to $\ths$ under~\Cref{eq:quadupdate}.
\end{enumerate}

 We  will first prove that $\forall t, \tx_1(t) \neq 0$ happens for almost every learning rate $\eta$ and initialization $\xinit$~(\Cref{cor:almost_sure_x1}), using a much more general result~(\Cref{thm:general_well_definednesa}).

\begin{corollary}\label{cor:almost_sure_x1}
	Except for countably many $\eta\in \R^+$, for almost all initialization $\xinit= x(0)$, it holds that for all natural number $t$, $\tilde x_1(t)\neq 0$.
\end{corollary}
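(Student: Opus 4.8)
The plan is to derive Corollary~\ref{cor:almost_sure_x1} as a direct application of the general well-definedness machinery (Theorem~\ref{thm:general_well_definednesa}) to the linear coordinate map $x \mapsto \tilde x_1(x) = \langle Ax/\rho, v_1(A)\rangle$. The key observation is that the set where this coordinate vanishes, namely $Z = \{x \in \R^D \mid \langle x, v_1(A)\rangle = 0\}$ (equivalently, after rescaling by $A/\rho$, a hyperplane through the origin), is a closed subset of $\R^D$ of zero Lebesgue measure, and it is invariant under the SAM-on-quadratic update in the sense that once $\tilde x_1(t) = 0$ it remains $0$. So the event ``$\tilde x_1(t) \neq 0$ for all $t$'' is exactly the event that the iteration never enters $Z$, provided $\tilde x_1(0) \neq 0$.

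First I would set up the update as a map $F_\eta(x) = x - \eta F(x)$ with $F(x) = A(x + \rho Ax/\|Ax\|_2)$, which is $\continuous{1}$ on $\R^D \setminus Z_0$ where $Z_0 = \{x \mid Ax = 0\} = \{0\}$ since $A$ is positive definite; in fact $Z_0 \subseteq Z$ so $F$ is $\continuous{1}$ on $\R^D \setminus Z$, and $Z$ is closed with zero Lebesgue measure. Second, I would apply Theorem~\ref{thm:general_well_definednesa} with this $F^n \equiv F$, this $Z$, and $\mu$ any probability measure absolutely continuous with respect to Lebesgue measure (e.g.\ restricting to the choice of $x(0)$), which yields: for all but countably many $\eta \in \R^+$, the set of $x(0)$ for which the iterates $\overline F_\eta^t(x(0))$ eventually land in $Z$ while having stayed outside $Z$ before, has $\mu$-measure zero. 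Third, I would note that since $v_1(A)$ is an eigenvector of $A$, the subspace $Z$ is $A$-invariant, hence $F_\eta(Z \setminus \{0\}) \subseteq Z$; this means if $\tilde x_1(t_0) = 0$ for some $t_0$ then $\tilde x_1(t) = 0$ for all $t \geq t_0$, so ``ever entering $Z$'' is the same as ``$\tilde x_1(t) = 0$ for all large $t$'', and combined with $\tilde x_1(0) \neq 0$ (which holds for almost every $x(0)$), the conclusion $\tilde x_1(t) \neq 0$ for all $t$ follows for almost all $x(0)$.

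The only mild subtlety — and the ``main obstacle'', though it is quite minor here — is bookkeeping the two ``almost all'' quantifiers: we need the exceptional set of learning rates (countable, from Theorem~\ref{thm:general_well_definednesa}) to be separated cleanly from the exceptional set of initializations (measure zero, being the union of the Theorem's bad set and the hyperplane $\{x : \tilde x_1(0) = 0\}$ itself). Since a countable union of measure-zero sets is measure-zero and a single hyperplane is measure-zero, these combine without difficulty. One should also double-check that the hypotheses of Theorem~\ref{thm:general_well_definednesa} are literally met: $Z$ closed (it is a linear subspace, hence closed), zero Lebesgue measure (a proper subspace), and $F$ being $\continuous{1}$ on $\R^D \setminus Z$ (true since the only non-smoothness of $x \mapsto Ax/\|Ax\|_2$ occurs at $Ax = 0$, i.e.\ at $x = 0 \in Z$). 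With these verifications the corollary is immediate, so I would keep the written proof to a short paragraph that invokes Theorem~\ref{thm:general_well_definednesa} and records the $A$-invariance of $\mathrm{span}(v_1(A))^\perp$.
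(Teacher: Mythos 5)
Your proposal is correct and follows essentially the same route as the paper: both define $F(x) = A\bigl(x + \rho \frac{Ax}{\norm{Ax}_2}\bigr)$ and $Z = \{x \mid \langle x, v_1(A)\rangle = 0\}$, observe that $Z$ is a closed zero-measure set containing the only singularity $x=0$ of $F$, and invoke \Cref{thm:general_well_definednesa}. Your additional remarks (the $A$-invariance of $Z$ and the bookkeeping of the two exceptional sets) are fine but not strictly needed, since the theorem's conclusion already rules out a \emph{first} entry into $Z$ for almost every initialization outside $Z$.
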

\begin{proof}[Proof of \Cref{cor:almost_sure_x1}]
Let $F_n(x) \equiv F(x)\triangleq A (x + \rho \frac{Ax}{\norm{Ax}_2}),\ \forall n\in\mathbb{N}^+, x\in\R^D$ and $Z = \{x\in\R^D\mid \langle x, v_1 \rangle = 0\}$. We can easily check $F$ is $\mathcal{C}^1$ on $\R^D\setminus Z$ and $Z$ is a zero-measure set. Applying \Cref{thm:general_well_definednesa}, we have the following corollary.	
\end{proof}

\begin{lemma}
\label{lem:norminequal}
 For $t \ge 0$, if $\|\tx(t)\|_2 > \ths, \tx(t) \in \sIj$, then 
\begin{align*}
    \|\tx(t+1)\|_2 \le \max(\ths - \eta \frac{\lambda_D^4}{2\lambda_1^2},\eta \lambda_1^2 -(1 - \eta \lambda_1)\| \tx(t) \|_2 ) 
\end{align*}
\end{lemma}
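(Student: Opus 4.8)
The plan is to diagonalize in the eigenbasis of $A$ and track the norm coordinate by coordinate. Write $a \triangleq \|\tx(t)\|_2$; expanding \Cref{eq:quadupdate} along $v_i(A)$, each coordinate evolves by a scalar multiplier, $\tx_i(t+1) = m_i \tx_i(t)$ with $m_i \triangleq 1 - \eta\lambda_i - \eta\lambda_i^2/a$, so that $\|\tx(t+1)\|_2^2 = \sum_{i=1}^D m_i^2 \tx_i(t)^2$.

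First I would extract the two elementary consequences of the hypotheses that produce the two candidates in the stated maximum. Since $\tx(t) \in \sI_1$ forces $a \le \eta\lambda_1^2$, and $a > \ths$ forces $\eta\lambda_1^2/a < 2 - \eta\lambda_1$, a direct computation gives $-1 < m_1 < 0$, whence $|m_1|\, a = \eta\lambda_1^2 - (1-\eta\lambda_1)a$; this is exactly the second term in the maximum. Moreover, for every $i \ge 2$, combining $a > \ths$ with $\lambda_i \le \lambda_1$ gives the key inequality $\eta\lambda_i^2/a < \eta\lambda_i^2/\ths = \lambda_i^2(2-\eta\lambda_1)/\lambda_1^2 \le 2 - \eta\lambda_i$, so that $|m_i| < 1$. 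Finally, $\tx(t) \in \cap_j \sI_j$ gives the quantitative smallness of the non-top coordinates: $\sum_{l \ge j} \tx_l(t)^2 = \|\Pd{j}\tx(t)\|_2^2 \le \eta^2\lambda_j^4$ for every $j$.

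Next I would split on the size of $a$. Decompose $\tx(t+1) = m_1 \tx_1(t)\, v_1(A) + \Pd{2}\tx(t+1)$ orthogonally, noting $\Pd{2}\tx(t+1) = (I - \eta A - \eta A^2/a)\Pd{2}\tx(t)$ since $\Pd{2}$ commutes with $A$. When $a$ is large enough that $|m_1|\, a = \eta\lambda_1^2 - (1-\eta\lambda_1)a$ sits comfortably below $\ths - \eta\lambda_D^4/(2\lambda_1^2)$, I would bound $\|\Pd{2}\tx(t+1)\|_2 \le (\max_{i \ge 2}|m_i|)\,\|\Pd{2}\tx(t)\|_2$ using $|m_i| < 1$ and the $\sI_j$ estimates, and check that the two pieces still sum to at most $\ths - \eta\lambda_D^4/(2\lambda_1^2)$ for $\eta$ small. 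When instead $a$ is close to $\ths$, so $|m_1|$ is close to $1$, I would argue that the top contraction factor dominates, namely $|m_1| \ge |m_i|$ for all $i \ge 2$: this follows from $m_1 \le m_2 \le \cdots \le m_D$ (monotonicity of $\lambda \mapsto 1 - \eta\lambda - \eta\lambda^2/a$) together with the fact that $m_1 + m_D \le 0$ once $a$ is close enough to $\ths$ (at $a = \ths$ one gets $m_1+m_D = -\eta\lambda_D - \lambda_D^2(2-\eta\lambda_1)/\lambda_1^2 < 0$, and continuity extends this). Then directly $\|\tx(t+1)\|_2^2 = \sum_i m_i^2 \tx_i(t)^2 \le m_1^2 \sum_i \tx_i(t)^2 = |m_1|^2 a^2 = (\eta\lambda_1^2 - (1-\eta\lambda_1)a)^2$.

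The main obstacle is pinning down the threshold between the two regimes so that the exact constant $\eta\lambda_D^4/(2\lambda_1^2)$ emerges: one has to quantify, as functions of $a \in (\ths, \eta\lambda_1^2]$, how far $|m_1|\, a$ and $\max_{i \ge 2}|m_i|$ fall below their extreme values, and then balance the $\Theta(\eta^2\lambda_j^4)$-sized contributions of the non-top coordinates (controlled via $\cap_j \sI_j$) against that gap. Invariance of each $\sI_j$ (\Cref{lem:invariant_set}) is convenient here, since it gives $\tx(t+1) \in \cap_j \sI_j$ automatically and keeps the bound on $\|\Pd{2}\tx(t+1)\|_2$ consistent with how the estimate is used downstream in \Cref{lem:quadnormconverge}.
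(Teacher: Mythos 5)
Your setup (diagonalization, the multipliers $m_i = 1-\eta\lambda_i-\eta\lambda_i^2/\|\tx(t)\|_2$, the identity $|m_1|\,\|\tx(t)\|_2 = \eta\lambda_1^2-(1-\eta\lambda_1)\|\tx(t)\|_2$, and the observation that $m_1+m_D\le 0$ forces $|m_1|\ge|m_i|$ for all $i$) is sound, and your "close to $\ths$" regime is in fact exactly the paper's first case: $|m_1|$ dominating every $|m_i|$ is equivalent to $m_1+m_D\le 0$. The gap is in the complementary case, where some $|m_i|>|m_1|$, and that case is the entire substance of the lemma. There your bound $\|\Ptd\tx(t+1)\|_2\le(\max_{i\ge 2}|m_i|)\,\|\Ptd\tx(t)\|_2$ pairs the worst multiplier with the worst coordinate mass and is provably too lossy. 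Concretely, take $D=3$, $\lambda_1=1$, $\lambda_2=0.99$, $\lambda_3=0.1$, $\tx(t)=0.6\eta\, v_2$ (consistent with every $\sI_j$ and with $\|\tx(t)\|_2>\ths\approx 0.5\eta$): then $m_3\approx 0.98$ while $\|\Ptd\tx(t)\|_2=0.6\eta$, so your tail bound is $\approx 0.59\eta$, exceeding $\ths-\eta\lambda_3^4/(2\lambda_1^2)\approx 0.5\eta$, even though the true value $|m_2|\cdot 0.6\eta\approx 0.38\eta$ satisfies the claim. The large multiplier $m_3$ only ever multiplies a coordinate that $\sI_3$ forces to be at most $\eta\lambda_3^2=0.01\eta$; a two-block split into $\{1\}$ versus $\{2,\dots,D\}$ cannot see this. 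The paper's proof instead lets $i$ be the smallest index with $|m_i|>|m_1|$, telescopes the update into vectors $v^{(1)},v^{(2)},\dots$ supported on the nested projections $\Pd{i+j}$, and bounds each one by the corresponding invariant set $\|\Pd{i+j}\tx(t)\|_2\le\eta\lambda_{i+j}^2$, so that every large multiplier is matched with its own small mass bound.

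The second, independent gap is the closing step you defer to "for $\eta$ small." In the hard case both the available slack and the tail contributions are $\Theta(\eta)$ (e.g.\ at the boundary $m_1+m_D=0$ one has $|m_1|\,\|\tx(t)\|_2\approx\tfrac{\eta}{2}(\lambda_1^2-\lambda_D^2)$ against a target of $\approx\tfrac{\eta}{2}(\lambda_1^2-\lambda_D^2)(\lambda_1^2+\lambda_D^2)/\lambda_1^2$), so shrinking $\eta$ does not help; extracting the specific constant $\eta\lambda_D^4/(2\lambda_1^2)$ is a genuine constant-factor balancing act. The paper resolves it by a further three-way sub-case analysis comparing $\|\tx(t)\|_2$ to $\best$ and to $\frac{\eta\lambda_1^2+\eta\lambda_i^2}{2-\eta\lambda_1-\eta\lambda_i}$, each branch reducing to one of the algebraic inequalities in \Cref{lem:regularlambda,lem:technorminequal,lem:norminequal2,lem:norminequal3} (together with the derived constraint $\eta<(\lambda_1-\lambda_i)/\lambda_1^2$, which is a consequence of being in this case rather than an assumption). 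Your proposal names this as "the main obstacle" but supplies no argument for it, so as written the proof is incomplete precisely where the work lies.
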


\begin{proof}[Proof of \Cref{lem:norminequal}]

Note that
\begin{align*}
    \tx(t+1) &= (I -  \eta A  -\eta  \frac{A^2 }{\| \tx(t)\|_2} )\tx(t) \\
    &= \frac{1}{\| \tx(t) \|_2} \sum_{j=1}^D \left((1 - \eta \lambda_j)\| \tx(t) \|_2 - \eta \lambda_j^2 \right) \tx_j(t) v_j
\end{align*}

Consider the following two cases.
\begin{itemize}
    \item [1] If for any  $i$, such that $\left|(1 - \eta \lambda_1)\| \tx(t) \|_2 - \eta \lambda_1^2 \right| \ge \left|(1 - \eta \lambda_i)\| \tx(t) \|_2 - \eta \lambda_i^2 \right| $, then we have 
    \begin{align*}
        \|\tx(t+1) \|_2 \le \left|(1 - \eta \lambda_1)\| \tx(t) \|_2 - \eta \lambda_1^2 \right| =  \eta \lambda_1^2 -(1 - \eta \lambda_1)\| \tx(t) \|_2\,.
    \end{align*}

    \item[2] If there exists $i$, such that $\left|(1 - \eta \lambda_1)\| \tx(t) \|_2 - \eta \lambda_1^2 \right| < \left|(1 - \eta \lambda_i)\| \tx(t) \|_2 - \eta \lambda_i^2 \right|$, then suppose WLOG, $i$ is the smallest among such index.
    
    As 
    \begin{align*}
        \eta \lambda_i^2 - (1 - \eta \lambda_i)\| \tx(t) \|_2 <  \eta \lambda_1^2 - (1 - \eta \lambda_1)\| \tx(t) \|_2  = \left|(1 - \eta \lambda_1)\| \tx(t) \|_2 - \eta \lambda_1^2 \right|
    \end{align*}
    We have $-\eta \lambda_i^2 + (1 - \eta \lambda_i)\| \tx(t) \|_2 >  \eta \lambda_1^2 - (1 - \eta \lambda_1)\| \tx(t) \|_2$. Equivalently, 
    \begin{align}
        \| \tx(t) \|_2 > \frac{\eta \lambda_1^2 + \eta \lambda_i^2}{2 - \eta \lambda_1 -\eta \lambda_i}
    \end{align}
    
    Combining with 
    $\tx(t) \in \sI_1 \Rightarrow \| \tx(t) \|_2 \le \eta \lambda_1^2  $, we have $\eta < \frac{\lambda_1 - \lambda_i} {\lambda_1^2}$.
    
    Now consider the following vertors,
    \begin{align*}
        v^{(1)}(t) &\triangleq ( \eta \lambda_1^2 - (1 - \eta \lambda_1)\| \tx(t) \|_2) \tx(t)\,, \\
        v^{(2)}(t) &\triangleq ((2 - \eta \lambda_1 -\eta \lambda_i) \| \tx(t)\|_2 - \eta \lambda_i^2 - \eta \lambda_1^2) \Pd i\tx(t)\,, \\
        v^{(2 + j)}(t) &\triangleq (( \eta \lambda_{i+j-1} - \eta \lambda_{i+j} ) \| \tx(t)\|_2 - \eta \lambda_{i+j}^2 + \eta \lambda_{i+j-1}^2) \Pd {i+j}\tx(t), 1 \le j \le D - i\,.
    \end{align*}

    Then we have 
    \begin{align*}
        \|\tx(t+1)\|_2 =& \|\frac{1}{\| \tx(t) \|_2}  \sum_{j=1}^D \left((1 - \eta \lambda_j)\| \tx(t) \|_2 - \eta \lambda_j^2 \right) \tx_j(t) v_j  \|_2\\ \le&  \| \frac{1}{\| \tx(t) \|_2}  \sum_{j=1}^{i-1} \left( \eta \lambda_1^2 -(1 - \eta \lambda_1)\| \tx(t) \|_2\right) \tx_j(t) v_j\| + \\
        & \|\frac{1}{\| \tx(t) \|_2} \sum_{j=i}^{D} \left((1 - \eta \lambda_j)\| \tx(t) \|_2 - \eta \lambda_j^2 \right) \tx_j(t) v_j  \|_2 \\
        \le& \frac{1}{\| \tx(t) \|_2}  \sum_{j = 1}^{D+1-i} \| v^{(j)} \|_2
    \end{align*}
    
    By assumption, we have $\tx(t) \in \sIj$, hence we have 
    \begin{align*}
        \| v^{(1)}(t) \|_2 &= ( \eta \lambda_1^2 - (1 - \eta \lambda_1)\| \tx(t) \|_2) \| \tx(t)\|_2\,,\\
        \| v^{(2)}(t) \|_2 &\le \eta ((2 - \eta \lambda_1 -\eta \lambda_i) \| \tx(t)\|_2 - \eta \lambda_i^2 - \eta \lambda_1^2) \lambda_i^2 \,, \\
        \|  v^{(2 + j)}(t) \|_2 &\le \eta(( \eta \lambda_{i+j-1} - \eta \lambda_{i+j} ) \| \tx(t)\|_2 - \eta \lambda_{i+j}^2 + \eta \lambda_{i+j-1}^2)  \lambda_{i+j}^2, 1 \le j \le D - i \,.
    \end{align*}
    Using AM-GM inequality, we have
    \begin{align*}
        \lambda_{i+j-1} \lambda_{i+j}^2 &\le \frac{\lambda_{i+j-1}^3 + 2\lambda_{i+j}^3 }{3} \,, \\
        \lambda_{i+j-1}^2\lambda_{i+j}^2 &\le \frac{\lambda_{i+j-1}^4 + \lambda_{i+j}^4} {2} \,.
    \end{align*}
    
    Hence
    \begin{align*}
        \|  v^{(2 + j)}(t) \|_2 &\le \eta(( \eta \lambda_{i+j-1} - \eta \lambda_{i+j} ) \| \tx(t)\|_2 - \eta \lambda_{i+j}^2 + \eta \lambda_{i+j-1}^2)  \lambda_{i+j}^2 \\
        &\le \eta^2 \| \tx(t)\|_2 \frac{\lambda_{i+j-1}^3 - \lambda_{i+j}^3 }{3}  + \eta^2 \frac{\lambda_{i+j-1}^4 - \lambda_{i+j}^4}{2}, 1 \le j \le D - i \\
        \sum_{j=1}^{D-i} \|  v^{(2 + j)}(t) \|_2 &\le \eta^2 \| \tx(t)\|_2 \frac{\lambda_i^3 - \lambda_D^3}{3} + \eta^2 \frac{\lambda_i^4-\lambda_D^4}{2} \,.
    \end{align*}
    Putting together,
    \begin{align*}
    &\|\tx(t+1)\|_2 \le  \frac{1}{\| \tx(t) \|_2}  \sum_{j = 1}^{D+1-i} \| v^{(i)} \|_2 \\
    \le& \eta \lambda_1^2 + \eta \lambda_i^2 (2 - \eta\lambda_1 - \eta \lambda_i) + \eta^2 \frac{\lambda_i^3 - \lambda_D^3}{3} - (1 - \eta \lambda_1) \| \tx(t) \|_2 \\
    &- \eta^2\lambda_i^2(\lambda_i^2 + \lambda_1^2)\frac{1}{\| \tx(t) \|_2} +  \eta^2 \frac{\lambda_i^4 - \lambda_D^4 }{2} \frac{1}{\| \tx(t) \|_2} \\
    \le& \eta \lambda_1^2 + \eta \lambda_i^2 (2 - \eta\lambda_1 - \frac23 \eta \lambda_i) - (1 - \eta \lambda_1) \| \tx(t) \|_2 - \eta^2\lambda_i^2(\frac{1}{2} \lambda_i^2 + \lambda_1^2)  \frac{1}{\| \tx(t) \|_2} - \eta^2 \frac{\lambda_D^4}{2\| \tx(t) \|_2} \\
    \le& \eta \lambda_1^2 + \eta \lambda_i^2 (2 - \eta\lambda_1 - \frac23 \eta \lambda_i) - (1 - \eta \lambda_1) \| \tx(t) \|_2 - \eta^2\lambda_i^2(\frac{1}{2} \lambda_i^2 + \lambda_1^2)  \frac{1}{\| \tx(t) \|_2} - \eta \frac{\lambda_D^4}{2 \lambda_1^2}\,.
    \end{align*}
    
    We further discuss three cases
    \begin{itemize}
        \item [1.] If $\best < \frac{\eta \lambda_1^2 + \eta \lambda_i^2}{2 - \eta \lambda_1 -\eta \lambda_i}$, we have $\| \tx(t) \|_2> \frac{\eta \lambda_1^2 + \eta \lambda_i^2}{2 - \eta \lambda_1 -\eta \lambda_i} > \best$,then
        \begin{align*}
            &\|\tx(t+1)\|_2 \\
            \le& \eta \lambda_1^2 + \eta \lambda_i^2 (2 - \eta\lambda_1 - \frac23 \eta \lambda_i) - (1 - \eta \lambda_1) \| \tx(t) \|_2 - \eta^2\lambda_i^2(\frac{1}{2} \lambda_i^2 + \lambda_1^2)  \frac{1}{\| \tx(t) \|_2} - \eta \frac{\lambda_D^4}{2 \lambda_1^2} \\
            \le& \eta \lambda_1^2 + \eta \lambda_i^2 (2 - \eta\lambda_1 - \frac23 \eta \lambda_i) - (1 - \eta \lambda_1) \frac{\eta \lambda_1^2 + \eta \lambda_i^2}{2 - \eta \lambda_1 -\eta \lambda_i} \\
            &- \eta^2\lambda_i^2(\frac{1}{2} \lambda_i^2 + \lambda_1^2)  \frac{2 - \eta \lambda_1 -\eta \lambda_i}{\eta \lambda_1^2 + \eta \lambda_i^2} - \eta \frac{\lambda_D^4}{2 \lambda_1^2}\\
            \le& \ths - \eta \frac{\lambda_D^4}{2 \lambda_1^2}\,.
        \end{align*}
        The second line is because $(1 - \eta \lambda_1) \| \tx(t) \|_2 + \eta^2\lambda_i^2(\frac{1}{2} \lambda_i^2 + \lambda_1^2)  \frac{1}{\| \tx(t) \|_2}$ monotonously increase w.r.t $\|\tx(t)\|_2$ when $\|\tx(t)\|_2 > \best$.
        The last line is due to  \Cref{lem:technorminequal}.

    \item[2.] If $\eta \lambda_1^2 \ge \best \ge \frac{\eta \lambda_1^2 + \eta \lambda_i^2}{2 - \eta \lambda_1 -\eta \lambda_i}$, then
    \begin{align*}
        &\|\tx(t+1)\|_2  \\ \le& \eta \lambda_1^2 + \eta \lambda_i^2 (2 - \eta\lambda_1 - \frac23 \eta \lambda_i) - (1 - \eta \lambda_1) \| \tx(t) \|_2 - \eta^2\lambda_i^2(\frac{1}{2} \lambda_i^2 + \lambda_1^2)  \frac{1}{\| \tx(t) \|_2} - \eta \frac{\lambda_D^4}{2 \lambda_1^2} \\
        \le& \eta \lambda_1^2 + \eta \lambda_i^2 (2 - \eta\lambda_1 - \frac23 \eta \lambda_i) - 2 \eta \lambda_i \sqrt{( \lambda_1^2 + \frac{1}{2} \lambda_i^2)(1 - \eta \lambda_1)} - \eta \frac{\lambda_D^4}{2 \lambda_1^2}\\
        \le& \ths - \eta \frac{\lambda_D^4}{2 \lambda_1^2}\,.
    \end{align*}
        The second line is because of AM-GM inequality.
        The last line is due to \Cref{lem:norminequal3}.
        
    \item[3.] If $\eta \lambda_1^2 < \best$,  we have $\| \tx(t) \|_2 < \eta \lambda_1^2 < \best$, then
    \begin{align*}
    &\|\tx(t+1)\|_2 \\ \le& \eta \lambda_1^2 + \eta \lambda_i^2 (2 - \eta\lambda_1 - \frac23 \eta \lambda_i) - (1 - \eta \lambda_1) \| \tx(t) \|_2 - \eta^2\lambda_i^2(\frac{1}{2} \lambda_i^2 + \lambda_1^2)  \frac{1}{\| \tx(t) \|_2} - \eta \frac{\lambda_D^4}{2 \lambda_1^2} \\
    \le& \eta \lambda_1^2 + \eta \lambda_i^2 (2 - \eta\lambda_1 - \frac23 \eta \lambda_i) - (1 - \eta \lambda_1) \eta \lambda_1^2 - \eta\lambda_i^2(\frac{1}{2} \lambda_i^2 + \lambda_1^2) \frac1{\lambda_1^2} - \eta \frac{\lambda_D^4}{2 \lambda_1^2} \\
    \le& \ths - \eta \frac{\lambda_D^4}{2 \lambda_1^2}\,.
    \end{align*}
    
    The second line is because $(1 - \eta \lambda_1) \| \tx(t) \|_2 + \eta^2\lambda_i^2(\frac{1}{2} \lambda_i^2 + \lambda_1^2)  \frac{1}{\| \tx(t) \|_2}$ monotonously decrease w.r.t $\|\tx(t)\|_2$ when $\|\tx(t)\|_2 < \best$.
        The last line is due to \Cref{lem:norminequal2}.
    \end{itemize}

\end{itemize}
\end{proof}

\begin{lemma}
\label{lem:short_increase}
 if $\| \tx(t) \|_2 \le \ths$, it holds that $|\tx_1(t+1)| \ge |\tx_1(t)|\,$.
\end{lemma}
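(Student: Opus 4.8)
The plan is to reduce the statement to a one-dimensional computation by projecting the update rule onto the top eigenvector $v_1 = v_1(A)$. Since $Av_1 = \lambda_1 v_1$ and $A^2 v_1 = \lambda_1^2 v_1$, taking the inner product of \eqref{eq:quadupdate} with $v_1$ and writing $\tx_1(t) = \langle \tx(t), v_1\rangle$ gives the scalar recursion
\[
\tx_1(t+1) = \Bigl(1 - \eta\lambda_1 - \frac{\eta\lambda_1^2}{\|\tx(t)\|_2}\Bigr)\,\tx_1(t),
\]
so that $|\tx_1(t+1)| = |\gamma_t|\,|\tx_1(t)|$ with $\gamma_t \coloneqq 1 - \eta\lambda_1 - \eta\lambda_1^2/\|\tx(t)\|_2$. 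It therefore suffices to show $|\gamma_t| \ge 1$.

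Next I would note that $\gamma_t < 1$ always, since $\eta\lambda_1 < 1$ (the standing assumption of this section, cf.\ \Cref{thm:nsamquad}) and $\eta\lambda_1^2/\|\tx(t)\|_2 > 0$. Hence $|\gamma_t| \ge 1$ is equivalent to $\gamma_t \le -1$. Because $2 - \eta\lambda_1 > 0$, multiplying through by $\|\tx(t)\|_2 > 0$ shows this is in turn equivalent to $\|\tx(t)\|_2 \le \ths$, which is exactly the hypothesis. The degenerate case $\tx_1(t) = 0$ is trivial, as then both sides of the claimed inequality vanish; this completes the argument.

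There is essentially no real obstacle here: the content of the lemma is the observation that $\ths$ is precisely the threshold at which the contraction factor of the $v_1$-coordinate crosses $-1$ in magnitude, so inside the region $\|\tx(t)\|_2 \le \ths$ the top coordinate is (weakly) amplified. The only points needing care are the sign bookkeeping — recognizing that one must show $\gamma_t \le -1$ rather than $\gamma_t \ge 1$ — and recording that $\|\tx(t)\|_2 > 0$ (implicit in the update rule being well-defined) so that the rearrangement into the stated bound on $\|\tx(t)\|_2$ is legitimate.
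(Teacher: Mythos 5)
Your proof is correct and is essentially the same as the paper's: both reduce to the scalar recursion $|\tx_1(t+1)| = |1-\eta\lambda_1 - \eta\lambda_1^2/\|\tx(t)\|_2|\,|\tx_1(t)|$ and observe that the hypothesis $\|\tx(t)\|_2 \le \ths$ forces this factor to be $\le -1$. Your version is slightly more careful about the sign bookkeeping and the degenerate case $\tx_1(t)=0$, but there is no substantive difference.
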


\begin{proof}[Proof of~\Cref{lem:short_increase}]
Nota that  $|\tx_1(t+1)| = |1 - \eta \lambda_1 - \eta \frac{\lambda_1^2}{\|\tx(t)\|_2} | |\tx_1(t)|$ and that $ \eta \frac{\lambda_1^2}{\|\tx(t)\|_2} > 2 - \eta \lambda_1^2$. It follows that $ 1 - \eta \lambda_1 - \eta \frac{\lambda_1^2}{\|\tx(t)\|_2} < -1$.
Hence we have that $|\tx_1(t+1)| > |\tx_1(t)|$.
\end{proof} 

\begin{lemma}
\label{lem:controlonehop}
For any $t \ge 0$, if $\| \tx(t) \|_2 \le \ths, \tx(t) \in \sIj$, it holds that 
\begin{align*}
    \| \tx(t+1) \|_2\le \eta \lambda_1^2 - (1-\eta \lambda_1) \| \tx(t)\|_2\,.
\end{align*}
\end{lemma}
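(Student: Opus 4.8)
The plan is to diagonalize the update and reduce the claim to a one-line comparison of eigenvalue-dependent contraction factors. Writing $\tx(t)=\sum_{j=1}^D \tx_j(t)v_j$ in the eigenbasis of $A$, the update \eqref{eq:quadupdate} acts coordinatewise: $\tx_j(t+1)=c_j\,\tx_j(t)$ with $c_j \triangleq 1-\eta\lambda_j-\eta\lambda_j^2/\norm{\tx(t)}_2$. Hence $\norm{\tx(t+1)}_2^2=\sum_j c_j^2\tx_j(t)^2\le\bigl(\max_j|c_j|\bigr)^2\norm{\tx(t)}_2^2$, so it suffices to identify $\max_j|c_j|$ and show $\bigl(\max_j|c_j|\bigr)\norm{\tx(t)}_2$ is at most the stated right-hand side.

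First I would observe that the right-hand side is exactly $|c_1|\,\norm{\tx(t)}_2$. Indeed, $\norm{\tx(t)}_2\le\ths$ gives $\eta\lambda_1^2/\norm{\tx(t)}_2\ge 2-\eta\lambda_1$, so $c_1=1-\eta\lambda_1-\eta\lambda_1^2/\norm{\tx(t)}_2\le -1<0$; therefore $|c_1|\norm{\tx(t)}_2=\bigl(\eta\lambda_1-1+\eta\lambda_1^2/\norm{\tx(t)}_2\bigr)\norm{\tx(t)}_2=\eta\lambda_1^2-(1-\eta\lambda_1)\norm{\tx(t)}_2$. Thus the lemma is equivalent to $\max_j|c_j|=|c_1|$, i.e.\ $|c_j|\le|c_1|$ for all $j$.

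The remaining step is this monotonicity comparison. Consider $f(\lambda)\triangleq 1-\eta\lambda-\eta\lambda^2/\norm{\tx(t)}_2$; since $f'(\lambda)=-\eta\bigl(1+2\lambda/\norm{\tx(t)}_2\bigr)<0$ for $\lambda>0$, $f$ is strictly decreasing on $(0,\infty)$, and since $A\succ 0$ all $\lambda_j$ are positive with $\lambda_j\le\lambda_1$, so $c_1=f(\lambda_1)\le f(\lambda_j)=c_j\le f(0)=1$. If $c_j\ge 0$ then $|c_j|=c_j\le 1\le|c_1|$ (using $c_1\le -1$); if $c_j<0$ then $|c_j|=-f(\lambda_j)\le -f(\lambda_1)=|c_1|$. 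Either way $|c_j|\le|c_1|$, and combining with the previous two steps finishes the proof. (Note $\tx(t)\in\sIj$ is not actually needed here; only $\norm{\tx(t)}_2\le\ths$ and $\eta\lambda_1<1$ are used.) There is no real obstacle: the only care needed is the sign bookkeeping when rewriting $|c_1|\norm{\tx(t)}_2$, and the elementary observation that the contraction factor is largest in magnitude along the top eigendirection precisely because $\norm{\tx(t)}_2$ has already been pushed below $\ths$.
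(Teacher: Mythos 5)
Your proof is correct and follows essentially the same route as the paper: both bound $\|\tx(t+1)\|_2$ by the operator norm of $I-\eta A-\eta A^2/\|\tx(t)\|_2$ times $\|\tx(t)\|_2$, i.e.\ by $\max_j|c_j|\cdot\|\tx(t)\|_2$, and identify the maximum as $|c_1|$ using $c_1\le-1$ (a consequence of $\|\tx(t)\|_2\le\ths$). Your explicit case analysis just spells out the step the paper asserts in one line, and your observation that the hypothesis $\tx(t)\in\cap\sI_j$ is not actually used is accurate.
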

\begin{proof}[Proof of~\Cref{lem:controlonehop}]
Note that
\begin{align*}
    \|  I -  \eta A  -\eta  \frac{ A^2 }{\| \tx(t)\|_2}  \|_2 \le \max_{ 1\le j \le D} \{| 1 - \eta \lambda_j - \eta\frac{\lambda_j^2}{\|\tx(t)\|} |\} = \eta\frac{\lambda_1^2}{\|\tx(t)\|} - (1 - \eta \lambda_j)\,.
\end{align*}
The proof is completed by noting that $\norm{\tilde x(t+1)}\le \|  I -  \eta A  -\eta  \frac{ A^2 }{\| \tx(t)\|_2}  \|_2 \norm{\tilde x(t)}_2$.
\end{proof}

\begin{lemma}
\label{lem:subtlecontrolonehop}
For any $t \ge 0$, if $\|\tx(t)\|_2 \le \frac{\eta \lambda_1^2}{1- \eta \lambda_1}$, it holds that 
\begin{align*}
    &\|\tx(t + 1) \|_2  \\ \le& 
    (\eta \lambda_1^2 - (1 + \eta \lambda_1) \| \tx(t) \|_2) \times \\&
    \sqrt{\frac{|\tx_1(t)|^2}{\|\tx(t)\|^2} + \Bigl(\max_{j \in [2:M]}\left(\frac{| (1 - \eta\lambda_j) \|\tx(t)\|_2 - \eta \lambda_j^2|}{\eta \lambda_1^2 - (1 - \eta \lambda_1) \| \tx(t) \|_2}\right) \Bigr)^2 \bigl(1 - \frac{|\tx_1(t)|^2}{\|\tx(t)\|^2} \bigr)}.
\end{align*}
\end{lemma}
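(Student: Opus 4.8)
The plan is to diagonalise and reduce the statement to elementary scalar bookkeeping; morally it is a directional refinement of the crude contraction estimate $\|\tx(t+1)\|_2 \le \eta\lambda_1^2 - (1-\eta\lambda_1)\|\tx(t)\|_2$ of \Cref{lem:controlonehop}, keeping track of how much of $\tx(t)$ lies along $v_1$. Write $s \triangleq \|\tx(t)\|_2$. Since the update \Cref{eq:quadupdate} acts coordinatewise in the eigenbasis of $A$, each coordinate satisfies $\tx_j(t+1) = c_j\,\tx_j(t)$ with $c_j \triangleq 1 - \eta\lambda_j - \eta\lambda_j^2/s$. The first step is the identity $c_j s = (1-\eta\lambda_j)s - \eta\lambda_j^2$, so that $|c_j s|$ is exactly the numerator appearing in the statement. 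In particular, the hypothesis $s \le \eta\lambda_1^2/(1-\eta\lambda_1)$, combined with $\eta\lambda_1 < 1$ (which holds in our regime), gives $(1-\eta\lambda_1)s \le \eta\lambda_1^2$, hence $c_1 s \le 0$ and $|c_1 s| = \eta\lambda_1^2 - (1-\eta\lambda_1)s$; this nonnegative quantity is the scalar prefactor of the bound.

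Next I would write $\|\tx(t+1)\|_2^2 = \sum_j (c_j s)^2\,(\tx_j(t)/s)^2$ and control the non-top directions by their largest multiplier: $\sum_{j\ge 2}(c_j s)^2(\tx_j(t)/s)^2 \le \bigl(\max_{j\ge 2}|c_j s|\bigr)^2 \sum_{j\ge 2}(\tx_j(t)/s)^2 = \bigl(\max_{j\ge 2}|c_j s|\bigr)^2\bigl(1 - |\tx_1(t)|^2/s^2\bigr)$, using $\sum_j \tx_j(t)^2 = s^2$. Adding the $j=1$ term back gives
\[
\|\tx(t+1)\|_2^2 \;\le\; (c_1 s)^2\,\frac{|\tx_1(t)|^2}{s^2} \;+\; \Bigl(\max_{j\ge 2}|c_j s|\Bigr)^2\Bigl(1 - \frac{|\tx_1(t)|^2}{s^2}\Bigr).
\]
Factoring $(c_1 s)^2$ out of both terms and taking square roots produces exactly the claimed inequality, with the ratio $\max_{j\ge 2}|c_j s|/|c_1 s| = \max_{j\ge 2}\frac{|(1-\eta\lambda_j)s - \eta\lambda_j^2|}{\eta\lambda_1^2 - (1-\eta\lambda_1)s}$ appearing under the root. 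The range $j\in[2:M]$ in the statement is just the maximum over the non-top eigendirections; for the quadratic loss considered here coordinates beyond $M$, if any, contribute nothing.

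The diagonalisation and the pull-out-the-max step are routine; the genuinely delicate points are two. First, the boundary case $s = \eta\lambda_1^2/(1-\eta\lambda_1)$, where $c_1 = 0$ and the prefactor vanishes, so the stated inequality degenerates — this is handled by noting the lemma is invoked only with the strict inequality $s < \eta\lambda_1^2/(1-\eta\lambda_1)$ (so that the factoring is legitimate), or by a continuity argument. Second, the sign bookkeeping: the constant that falls out of $c_1 s = (1-\eta\lambda_1)s - \eta\lambda_1^2$ is $\eta\lambda_1^2 - (1-\eta\lambda_1)\|\tx(t)\|_2$, which is what should multiply the square root (matching its denominator), and I would verify every occurrence of $1\pm\eta\lambda_1$ against this identity. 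I expect this last bit of constant-chasing, rather than any substantive inequality, to be the only real obstacle.
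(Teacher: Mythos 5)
Your proposal is correct and follows essentially the same route as the paper's proof: diagonalize in the eigenbasis of $A$, treat the $v_1$-coordinate exactly (its multiplier is $\eta\lambda_1^2/\|\tx(t)\|_2-1+\eta\lambda_1$, nonnegative under the hypothesis), bound the remaining coordinates by the maximal multiplier over $j\in[2:M]$, and factor out $\eta\lambda_1^2-(1-\eta\lambda_1)\|\tx(t)\|_2$. Your observation that the prefactor must carry $(1-\eta\lambda_1)$ rather than the $(1+\eta\lambda_1)$ printed in the statement is also right — that is a typo, as confirmed by the denominator inside the square root and by how the lemma is invoked in the proof of \Cref{lem:subtlenorminequal}.
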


\begin{proof}[Proof of~\Cref{lem:subtlecontrolonehop}]
We will discuss the movement along $v_1$ and orthogonal to $v_1$. First,
\begin{align*}
    \| \Ptd \tx(t+1) \|_2 &=\| (I - \Ptd \eta A  -\eta  \frac{\Ptd A^2 }{\| \tx(t)\|_2} )\Ptd \tx(t) \|_2 \\
    &\le \| \Ptd - \Ptd \eta A  -\eta  \frac{\Ptd A^2 }{\| \tx(t)\|_2} \|_2 \| \Ptd \tx(t)\|_2 \\
    &\le \max_{j \in [2:M]} \{|1 - \eta \lambda_j - \frac{\eta \lambda_j^2}{\|\tx(t)\|_2}| \} \|\Ptd \tx(t) \|_2 \,.
\end{align*}
Second, $|\tx_1(t + 1)| = (\frac{\eta \lambda_1^2}{\| \tx(t) \|_2} - 1 + \eta \lambda_1) |\tx_1(t)|$.
Hence we have that
\begin{align*}
    &\|\tx(t + 1) \|_2  \\ \le& 
    (\eta \lambda_1^2 - (1 + \eta \lambda_1) \| \tx(t) \|_2) \times \\&
    \sqrt{\frac{|\tx_1(t)|^2}{\|\tx(t)\|^2} + \Bigl(\max_{j \in [2:M]}\left(\frac{| (1 - \eta\lambda_j) \|\tx(t)\|_2 - \eta \lambda_j^2|}{\eta \lambda_1^2 - (1 - \eta \lambda_1) \| \tx(t) \|_2}\right)  \} \Bigr)^2 \bigl(1 - \frac{|\tx_1(t)|^2}{\|\tx(t)\|^2} \bigr)}.
\end{align*}

\end{proof}

\begin{lemma}
\label{lem:monotoneproj}
For $t,t'\in S,0 \le t \le t'$, then $ |\tx_1(t)| \le |\tx_1(t')|$.
\end{lemma}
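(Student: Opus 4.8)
The plan is to reduce \Cref{lem:monotoneproj} to the case of two \emph{consecutive} elements of $S$ and then split into two sub-cases depending on whether those times are adjacent or one apart. First I would record that $S$ has no gap of length greater than one: if $s\in S$ but $s+1\notin S$, then (since $s>T_1$ forces $s+1>T_1$) we must have $\|\tx(s+1)\|_2>\ths$, and \Cref{lem:quad_prepare} gives $\tx(s+1)\in\sI_j$ for all $j$, so \Cref{lem:norminequal} applies at time $s+1$ and yields
\begin{align*}
\|\tx(s+2)\|_2\le\max\Bigl(\ths-\eta\tfrac{\lambda_D^4}{2\lambda_1^2},\ \eta\lambda_1^2-(1-\eta\lambda_1)\|\tx(s+1)\|_2\Bigr)<\ths,
\end{align*}
the second term being $<\ths$ because $\|\tx(s+1)\|_2>\ths$ while $\eta\lambda_1^2-(1-\eta\lambda_1)\ths=\ths$. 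Hence $s+2\in S$. Writing the elements of $S\cap[t,t']$ in increasing order as $t=s_0<s_1<\dots<s_m=t'$, we then have $s_{i+1}\in\{s_i+1,s_i+2\}$, so it suffices to prove $|\tx_1(s_{i+1})|\ge|\tx_1(s_i)|$ for each $i$ and telescope.

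For the adjacent case $s_{i+1}=s_i+1$, the hypothesis $\|\tx(s_i)\|_2\le\ths$ is exactly what \Cref{lem:short_increase} needs, giving $|\tx_1(s_i+1)|\ge|\tx_1(s_i)|$ at once. For the one-apart case $s_{i+1}=s_i+2$ (so $s_i+1\notin S$), I would pass to the scalar recursion obtained by projecting \Cref{eq:quadupdate} onto $v_1$: with $g(x)\coloneqq 1-\eta\lambda_1-\eta\lambda_1^2/x$ one has $\tx_1(s_i+1)=g(a)\,\tx_1(s_i)$ and $\tx_1(s_i+2)=g(b)\,\tx_1(s_i+1)$, where $a\coloneqq\|\tx(s_i)\|_2$ and $b\coloneqq\|\tx(s_i+1)\|_2$; thus the claim reduces to $|g(a)|\,|g(b)|\ge1$ (the subcase $a=0$ being trivial, as then $\tx_1(s_i)=0$). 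Here \Cref{lem:controlonehop} — applicable since $s_i>T_1$ and $\|\tx(s_i)\|_2\le\ths$ — supplies $b\le\eta\lambda_1^2-(1-\eta\lambda_1)a$, which keeps both $a$ and $b$ strictly below $\eta\lambda_1^2/(1-\eta\lambda_1)$, so that $|g(a)|=\eta\lambda_1^2/a-(1-\eta\lambda_1)\ge1$ (using $a\le\ths$) and $|g(b)|=\eta\lambda_1^2/b-(1-\eta\lambda_1)\in(0,1)$. Using that $|g|$ is decreasing on this range together with $b\le\eta\lambda_1^2-(1-\eta\lambda_1)a$, and the elementary identity $(\tfrac{\mu}{a}-d)(\tfrac{\mu}{\mu-da}-d)=\tfrac{\mu(1-d)}{a}+d^2$ with $\mu=\eta\lambda_1^2$, $d=1-\eta\lambda_1$, the product $|g(a)|\,|g(b)|$ is bounded below by $\eta^2\lambda_1^3/a+(1-\eta\lambda_1)^2$, which is $\ge1$ precisely when $a\le\eta\lambda_1^2/(2-\eta\lambda_1)=\ths$ — exactly the standing hypothesis on $a$.

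I expect the one-apart case to be the main obstacle: on the single step where the iterate leaves $S$ the multiplier satisfies $|g(b)|<1$, so $|\tx_1|$ genuinely shrinks there, and the argument must show this loss is offset with no slack by the gain on the immediately preceding step (where $\|\tx\|_2\le\ths$ forces $|g|\ge1$). The exactness of this cancellation is why the sharp one-step norm estimate of \Cref{lem:controlonehop} is needed rather than any cruder bound on $b$; the remaining ingredients are routine bookkeeping with \Cref{lem:short_increase}, \Cref{lem:norminequal}, and \Cref{lem:quad_prepare}.
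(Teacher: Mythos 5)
Your proof is correct and follows essentially the same route as the paper's: reduce to consecutive elements of $S$, handle the adjacent case with \Cref{lem:short_increase}, and in the one-apart case bound $\|\tx(t+1)\|_2$ via \Cref{lem:controlonehop} and verify that the product of the two one-step multipliers on $|\tx_1|$ is at least $1$ exactly when $\|\tx(t)\|_2\le\ths$. The only cosmetic difference is that you package the final algebra through the identity $(\mu/a-d)(\mu/(\mu-da)-d)=\mu(1-d)/a+d^2$, whereas the paper expands the same inequality into a product of two linear factors; both reduce to the identical threshold condition.
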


\begin{proof}[Proof of~\Cref{lem:monotoneproj}]
For $t \in S$, by~\Cref{lem:norminequal}, $t+1 \in S$ or $t+1 \not \in S, t+2 \in S$. We will discuss by case.

\begin{enumerate}

    \item If $t+1 \in S$, we can use \Cref{lem:short_increase} to show $|\tx_1(t)| \le |\tx_1(t+1)|$.
    \item If $t+1 \not \in S, t+2 \in S$, then
    \begin{align*}
        |\tx_1(t+2)| = \frac{(\eta \lambda_1^2 - (1-\eta\lambda_1)\|\tx(t)\|_2)(\eta \lambda_1^2 - (1-\eta\lambda_1)\|\tx(t+1)\|_2)}{\|\tx(t)\|_2 \| \tx(t+1)\|_2}  |\tx_1(t)|\,.
    \end{align*}

    As
    \begin{align*}
        &(\eta \lambda_1^2 - (1-\eta\lambda_1)\|\tx(t)\|_2)(\eta \lambda_1^2 - (1-\eta\lambda_1)\|\tx(t+1)\|_2) \ge \|\tx(t)\|_2 \| \tx(t+1)\|_2 \\
        \iff & \eta^2 \lambda_1^4 -\eta\lambda_1^2 (1 -\eta \lambda_1) (\|\tx(t)\|_2+ \|\tx(t+1) \|_2)   \\ & \ge (2\eta\lambda_1 - \eta^2\lambda_1^2)\|\tx(t)\|_2 \| \tx(t+1)\|_2   \\
        \iff &  \eta^2 \lambda_1^4 -\eta\lambda_1^2 (1 -\eta \lambda_1) \|\tx(t)\|_2 \\ &\ge \left((2\eta\lambda_1 - \eta^2\lambda_1^2)\|\tx(t)\|_2 +\eta\lambda_1^2 (1 -\eta \lambda_1) \right) \| \tx(t+1)\|_2 \,,
    \end{align*}
    combining with~\Cref{lem:controlonehop}, we only need to prove,
    \begin{align*}
        &\eta^2 \lambda_1^4 -\eta\lambda_1^2 (1 -\eta \lambda_1) \|\tx(t)\|_2 \\
        \ge& \left((2\eta\lambda_1 - \eta^2\lambda_1^2)\|\tx(t)\|_2 +\eta\lambda_1^2 (1 -\eta \lambda_1) \right)\left( \eta \lambda_1^2 - (1-\eta \lambda_1) \| \tx(t)\|_2 \right) \,.
    \end{align*}
    
    Through some calculation, this is equivalent to 
    \begin{align*}
        ((2 - \eta \lambda_1)\|\tx(t) \|_2  - \eta \lambda_1^2) ((1 - \eta \lambda_1)\|\tx(t) \|_2  -  \eta \lambda_1^2) \ge 0 \,.
    \end{align*}
    which holds for $\|\tx(t) \|_2 \le \ths$.
\end{enumerate}

Combining the two cases and using induction, we can get the desired result.
\end{proof}

\begin{lemma}
\label{lem:quadnormconverge}
 $\|\tx(t)\|$ converges to $\ths$ when $t \to \infty$.
\end{lemma}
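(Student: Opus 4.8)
The plan is to carry out the whole argument in the regime $t>T_1$, where by \Cref{lem:quad_prepare} the iterate lies in the invariant sets of \Cref{lem:preparequad} (so the hypotheses of \Cref{lem:norminequal,lem:controlonehop} are met), and in particular $m_t\coloneqq\|\tx(t)\|_2\le\eta\lambda_1^2<\tfrac{\eta\lambda_1^2}{1-\eta\lambda_1}$ (recall $\eta\lambda_1<1$). The first step is to observe that $S=\{t>T_1:m_t\le\ths\}$ is infinite and, moreover, that among indices $t>T_1$ its complement contains no two consecutive integers: if $m_t>\ths$, \Cref{lem:norminequal} gives $m_{t+1}\le\max\bigl(\ths-\tfrac{\eta\lambda_D^4}{2\lambda_1^2},\ \eta\lambda_1^2-(1-\eta\lambda_1)m_t\bigr)$, and $\ths<m_t\le\eta\lambda_1^2$ forces $\eta\lambda_1^2-(1-\eta\lambda_1)m_t<\eta\lambda_1^2-(1-\eta\lambda_1)\ths=\ths$, so both entries are below $\ths$ and $t+1\in S$. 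Hence $\tau_0\coloneqq\min S$ is finite with $\tau_0\le T_1+2$, and by \Cref{cor:almost_sure_x1} we may assume $c_0\coloneqq|\tx_1(\tau_0)|>0$ (discarding the exceptional learning rates and initializations). Combining \Cref{lem:monotoneproj} ($|\tx_1|$ nondecreasing along $S$) with \Cref{lem:short_increase} (for $t\notin S$, $t\ge\tau_0+1$, the previous observation gives $t-1\in S$, hence $|\tx_1(t)|\ge|\tx_1(t-1)|$) then yields the uniform lower bound $m_t\ge|\tx_1(t)|\ge c_0>0$ for all $t\ge\tau_0$.

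The core of the proof tracks $|\tx_1(t)|$ rather than $m_t$ directly. Since $|\tx_1(t)|\le m_t\le\ths$ on $S$ and $|\tx_1|$ is nondecreasing along $S$, it has a limit $L_1\in[c_0,\ths]$. Enumerating $S=\{t_1<t_2<\cdots\}$, the first paragraph gives $t_{k+1}\in\{t_k+1,t_k+2\}$, and iterating $\tx_1(s+1)=\bigl(1-\eta\lambda_1-\tfrac{\eta\lambda_1^2}{m_s}\bigr)\tx_1(s)$ yields $|\tx_1(t_{k+1})|/|\tx_1(t_k)|=r_k$, a product of one or two factors $\bigl|1-\eta\lambda_1-\tfrac{\eta\lambda_1^2}{m_s}\bigr|$ with $r_k\ge1$ by \Cref{lem:monotoneproj}. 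Set $v_k\coloneqq(\ths-m_{t_k})/\ths\in[0,1)$. Using $m_{t_k}\le\eta\lambda_1^2$ to drop the absolute value, a direct computation shows the factor at the $S$-step equals $\tfrac{1+(1-\eta\lambda_1)v_k}{1-v_k}$; in the two-step case \Cref{lem:controlonehop} bounds the excursion by $m_{t_k+1}\le\ths+(1-\eta\lambda_1)(\ths-m_{t_k})$, so the excursion factor is at least $\tfrac{1-(1-\eta\lambda_1)^2v_k}{1+(1-\eta\lambda_1)v_k}$; multiplying, in either case
\begin{align*}
r_k\ \ge\ \frac{1-(1-\eta\lambda_1)^2v_k}{1-v_k}\ =\ 1+\frac{\eta\lambda_1(2-\eta\lambda_1)\,v_k}{1-v_k}\ \ge\ 1+\eta\lambda_1(2-\eta\lambda_1)\,v_k .
\end{align*}
Since $\prod_{k\le N}r_k=|\tx_1(t_{N+1})|/|\tx_1(t_1)|\le L_1/c_0<\infty$ with every $r_k\ge1$, the elementary bound $\prod_{k\le N}(1+(r_k-1))\ge 1+\sum_{k\le N}(r_k-1)$ gives $\sum_k(r_k-1)<\infty$, hence $\sum_k v_k<\infty$, hence $v_k\to0$; that is, $m_{t_k}\to\ths$, i.e.\ $m_t\to\ths$ along $S$. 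For $t\notin S$ with $t$ large, $t-1\in S$ and \Cref{lem:controlonehop} gives $\ths<m_t\le\ths+(1-\eta\lambda_1)(\ths-m_{t-1})\to\ths$, so $m_t\to\ths$ off $S$ as well; therefore $m_t\to\ths$, as claimed.

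I expect the displayed estimate to be the only real obstacle. The worry is that between two consecutive elements of $S$ there is an excursion step at $m_{t_k+1}>\ths$, whose multiplier is strictly below $1$, and that this could nearly cancel the above-$1$ multiplier coming from the $S$-step at $m_{t_k}<\ths$; then $\sum_k(r_k-1)<\infty$ would say nothing about $v_k$. \Cref{lem:controlonehop} is precisely what forbids this: it caps the excursion height, and with the worst-case height $m_{t_k+1}=\ths+(1-\eta\lambda_1)(\ths-m_{t_k})$ the two multipliers $\tfrac{1+(1-\eta\lambda_1)v_k}{1-v_k}$ and $\tfrac{1-(1-\eta\lambda_1)^2v_k}{1+(1-\eta\lambda_1)v_k}$ collapse to $\tfrac{1-(1-\eta\lambda_1)^2v_k}{1-v_k}>1$. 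Verifying this algebraic identity (and the easier one-step bound) is the only genuine computation; the refined estimate \Cref{lem:subtlecontrolonehop} offers an alternative route to the excursion bound, and the rest is bookkeeping with \Cref{lem:quad_prepare,lem:norminequal,lem:short_increase,lem:controlonehop,lem:monotoneproj}.
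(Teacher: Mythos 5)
Your proof is correct and follows essentially the same route as the paper's: both rest on \Cref{lem:quad_prepare,lem:norminequal,lem:monotoneproj,lem:short_increase} to get that $|\tx_1(t)|$ is nondecreasing and bounded along $S$, and then convert "consecutive ratios tend to $1$" into "$\|\tx(t)\|\to\ths$", finishing the off-$S$ indices with \Cref{lem:controlonehop}. The only difference is cosmetic bookkeeping in the last step — you show the deficits $v_k$ are summable via the telescoping product and the identity $\tfrac{1+(1-\eta\lambda_1)v}{1-v}\cdot\tfrac{1-(1-\eta\lambda_1)^2v}{1+(1-\eta\lambda_1)v}=\tfrac{1-(1-\eta\lambda_1)^2v}{1-v}$, whereas the paper runs the $\eps$--$T_\eps$ Cauchy argument and solves the ratio inequality for a lower bound on $\|\tx(t)\|$; both computations check out.
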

\begin{proof}[Proof of~\Cref{lem:quadnormconverge}]
By \Cref{lem:monotoneproj}, $|\tx_1(t)|$ increases monotonously for $t \in S$. By \Cref{lem:norminequal}, $S$ is infinite. By \Cref{lem:preparequad}, for sufficiently large $t$, $|\tx_1(t)|$ is bounded. Combining the three facts, we know $\tx_1(t)$ for $t \in S$ converges.

Formally $\forall \epsilon > 0$, there  exists $T_\epsilon > 0$ such that $\forall t,t' \in S, t' > t > T_\epsilon, \frac{\|\tx_1(t')\|_2}{\|\tx_1(t) \|_2} < 1 + \epsilon$.

Then by~\Cref{lem:norminequal}, $\forall t \in S, t + 1 \in S$ or $t + 2 \in S$, we will discuss by case.
For $t \ge T_{\epsilon}$,
\begin{enumerate}
    \item If $t+1 \in S$, then \begin{align*}
    1 + \epsilon \ge \frac{\| \tx_1(t+1) \|_2}{\|\tx_1(t) \|_2} = \frac{\eta \lambda_1^2 - (1 -\eta \lambda_1) \| \tx(t)\|_2} {\|\tx(t) \|_2}\,.
\end{align*}
\item If $t+1 \not \in S$ and $t + 2 \in S$, then \begin{align*}
    1 + \epsilon &\ge \frac{\| \tx_1(t+2) \|_2}{\|\tx_1(t) \|_2} \\
    &= \frac{(\eta \lambda_1^2 - (1-\eta\lambda_1)\|\tx(t)\|_2)(\eta \lambda_1^2 - (1-\eta\lambda_1)\|\tx(t+1)\|_2)}{\|\tx(t)\|_2 \| \tx(t+1)\|_2}   \\
    &\ge \frac{(\eta \lambda_1^2 - (1-\eta\lambda_1)\|\tx(t)\|_2)\left(\eta \lambda_1^2 - (1-\eta\lambda_1) \left(\eta \lambda_1^2 - (1-\eta \lambda_1) \| \tx(t)\|_2\right) \right)}{\|\tx(t)\|_2 \left(\eta \lambda_1^2 - (1-\eta \lambda_1) \| \tx(t)\|_2\right) } \\
    &= \frac{\eta \lambda_1^2 - (1-\eta\lambda_1) \left(\eta \lambda_1^2 - (1-\eta \lambda_1) \| \tx(t)\|_2\right) }{\|\tx(t)\|_2 }\,.
\end{align*}
Here in the last inequality, we apply~\Cref{lem:controlonehop}.
\end{enumerate}

Concluding, $\| \tx(t) \|_2 \ge \min \left(\frac{\eta\lambda_1^2}{2 - \eta \lambda_1^2 + \epsilon}, \frac{\eta^2 \lambda_1^3 }{(2-\lambda_1\eta)\lambda_1 \eta + \epsilon}\right), \forall t > T_{\epsilon}, t \in S$. As $\forall t \not \in S, t > T_{\epsilon}$, we have $\| \tx(t) \|_2 \ge \frac{\eta\lambda_1^2}{2 - \eta \lambda_1^2 }$. Hence we have $\forall t > T_{\epsilon}, \| \tx(t) \|_2 \ge \min \left(\frac{\eta\lambda_1^2}{2 - \eta \lambda_1^2 + \epsilon}, \frac{\eta^2 \lambda_1^3 }{(2-\lambda_1\eta)\lambda_1 \eta + \epsilon}\right)$.

Further by \Cref{lem:controlonehop}, $\forall t > T_{\epsilon} + 1, \| \tx(t) \|_2 \le \eta \lambda_1^2 - (1- \eta \lambda_1) \min \left(\frac{\eta\lambda_1^2}{2 - \eta \lambda_1^2 + \epsilon}, \frac{\eta^2 \lambda_1^3 }{(2-\lambda_1\eta)\lambda_1 \eta + \epsilon}\right)$.

Combining both bound, we have $\lim \limits_{t \to \infty} \| \tx(t) \|_2 = \ths$.
\end{proof}

\begin{lemma}
\label{lem:quaddirconverge}
 $\|\tx_1(t)\|_2$ converges to $\ths$, when $t \to \infty$.
\end{lemma}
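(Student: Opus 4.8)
The plan is to leverage the norm convergence already established in \Cref{lem:quadnormconverge}, namely that $\|\tx(t)\|_2 \to \ths$, together with the monotonicity of $|\tx_1(t)|$ along the subsequence $S$ from \Cref{lem:monotoneproj} and the boundedness from \Cref{lem:quad_prepare}. First I would argue that $|\tx_1(t)|$ converges along $S$: it is monotone nondecreasing on $S$ by \Cref{lem:monotoneproj}, and it is bounded above because $\tx(t)$ is bounded (it lies in the invariant sets $\sI_j$ for sufficiently large $t$ by \Cref{lem:quad_prepare}, and in any case $\|\tx(t)\|_2$ converges). Hence $\lim_{t\to\infty, t\in S} |\tx_1(t)|$ exists; call it $c$. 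Since $\|\tx(t)\|_2 \to \ths$ and $|\tx_1(t)| \le \|\tx(t)\|_2$, we get $c \le \ths$, so it remains to rule out $c < \ths$.

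To rule out $c < \ths$, the idea is that if the $v_1$-component is strictly smaller than the norm, then there is a nontrivial component orthogonal to $v_1$, and I would show this orthogonal component must contract strictly while $|\tx_1(t)|$ keeps (weakly) growing, forcing the orthogonal mass to vanish and hence $c = \ths$ after all. Concretely, for $t\in S$ with $\|\tx(t)\|_2$ close to $\ths$, \Cref{lem:subtlecontrolonehop} gives a bound on $\|\tx(t+1)\|_2$ of the form $(\eta\lambda_1^2 - (1+\eta\lambda_1)\|\tx(t)\|_2)\sqrt{\sin^2 + (\text{ratio})^2 \cos^2}$ where $\sin^2$ abbreviates $|\tx_1(t)|^2/\|\tx(t)\|_2^2$ and the ratio $\max_{j\in[2:M]} \frac{|(1-\eta\lambda_j)\|\tx(t)\|_2 - \eta\lambda_j^2|}{\eta\lambda_1^2 - (1-\eta\lambda_1)\|\tx(t)\|_2}$ is strictly less than $1$ when $\|\tx(t)\|_2$ is near $\ths$ (using $\lambda_1 > \lambda_j$ for $j\ge 2$, i.e.\ the unique-top-eigenvalue hypothesis). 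Then if $\cos^2 = 1 - |\tx_1(t)|^2/\|\tx(t)\|_2^2$ stays bounded away from $0$, the map $\|\tx(t)\|_2 \mapsto \|\tx(t+1)\|_2$ would have a fixed point strictly below $\ths$, contradicting $\|\tx(t)\|_2 \to \ths$; alternatively, one tracks the quantity $\|\Ptd \tx(t)\|_2$ directly and shows via \Cref{lem:invariant_set}/\Cref{lem:preparequad}-type estimates that it contracts geometrically on $S$ while $\|\tx(t)\|_2$ is bounded below, so $\|\Ptd\tx(t)\|_2 \to 0$. Either way one concludes $|\tx_1(t)|/\|\tx(t)\|_2 \to 1$ along $S$, hence $|\tx_1(t)| \to \ths$ along $S$.

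Finally I would upgrade convergence along the subsequence $S$ to convergence of the full sequence. This uses the same case analysis as in \Cref{lem:monotoneproj} and \Cref{lem:quadnormconverge}: by \Cref{lem:norminequal}, from any $t\in S$ we have $t+1\in S$ or ($t+1\notin S$ and $t+2\in S$), so gaps between consecutive elements of $S$ have length at most $2$; on such a short gap the one-step bound \Cref{lem:controlonehop} controls $\|\tx(t+1)\|_2$ in terms of $\|\tx(t)\|_2$, and the update of $\tx_1$ is an explicit scalar recursion $|\tx_1(t+1)| = |1 - \eta\lambda_1 - \eta\lambda_1^2/\|\tx(t)\|_2|\,|\tx_1(t)|$, so $|\tx_1|$ at the intermediate step differs from its values at the neighboring $S$-times by a factor tending to $1$ as $\|\tx(t)\|_2 \to \ths$. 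Hence $|\tx_1(t)| \to \ths$ for all $t\to\infty$, which is the claim; together with \Cref{lem:quadnormconverge} this also gives $\|\Ptd \tx(t)\|_2 = \sqrt{\|\tx(t)\|_2^2 - |\tx_1(t)|^2} \to 0$, recovering the directional convergence $\tx(t)/\|\tx(t)\|_2 \to \pm v_1$ used in \Cref{thm:nsamquad}.

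The main obstacle I expect is the second step — turning ``$|\tx_1(t)|$ converges along $S$'' into ``the limit is exactly $\ths$''. The delicate point is that the contraction of the orthogonal component relies on the ratio in \Cref{lem:subtlecontrolonehop} being strictly below $1$, which in turn requires $\|\tx(t)\|_2$ to already be in a neighborhood of $\ths$ where $\eta\lambda_1^2 - (1-\eta\lambda_1)\|\tx(t)\|_2 > |(1-\eta\lambda_j)\|\tx(t)\|_2 - \eta\lambda_j^2|$ for all $j\ge 2$; one has to check this holds at $\|\tx\|_2 = \ths$ (it becomes $\eta\lambda_1^2 - (1-\eta\lambda_1)\ths = \ths$, i.e.\ an equality that must be compared against the $j$-term, using $\lambda_j < \lambda_1$) and then invoke \Cref{lem:quadnormconverge} to guarantee we eventually stay in that neighborhood, being careful that the ``eventually'' from norm convergence and the ``on $S$'' from monotonicity are compatible. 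Handling the boundary behavior of these scalar inequalities — especially ensuring no degenerate cancellation when some $\lambda_j$ is close to $\lambda_1$ — is where the real care is needed.
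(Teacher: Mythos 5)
Your proposal is correct, and the ``alternatively'' route you sketch — tracking $\|\Ptd \tx(t)\|_2$ directly and showing it contracts geometrically once $\|\tx(t)\|_2$ is bounded below — is exactly the paper's proof: since \Cref{lem:quadnormconverge} gives $\|\tx(t)\|_2 \to \ths > \frac{\eta\lambda_2^2}{2-\eta\lambda_2}$, the contraction factor $\max\left(|1-\eta\lambda_2 - \eta\lambda_2^2/\|\tx(t)\|_2|,\, |1-\eta\lambda_D - \eta\lambda_D^2/\|\tx(t)\|_2|\right)$ is eventually at most $\max(1-\eta\lambda_D, 1-\delta) < 1$, so $\|\Ptd\tx(t)\|_2 \to 0$ and hence $|\tx_1(t)| \to \ths$. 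The paper's version dispenses with your subsequence-$S$ bookkeeping and the contradiction argument via \Cref{lem:subtlecontrolonehop}, because this contraction holds at every sufficiently large step, not just on $S$.
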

\begin{proof}[Proof of~\Cref{lem:quaddirconverge}]
Notice that 
\begin{align*}
    \| \Ptd \tx(t + 1) \|_2 \le \max\left(|1 - \eta \lambda_2 - \eta \frac{\lambda_2^2}{\|\tx(t)\|_2}|,|1 - \eta \lambda_D - \eta \frac{\lambda_D^2}{\|\tx(t)\|_2}|\right) \|\Ptd \tx(t) \|_2\,.
\end{align*}

When $\| \tx(t) \|_2 >  \frac{\eta \lambda_2^2}{2 - \eta\lambda_2 - \delta}$,
\begin{align*}
     -1 + \delta \le 1 - \eta \lambda_2 - \eta \frac{\lambda_2^2}{\|\tx(t)\|_2} &\le 1 - \eta \lambda_D - \eta \frac{\lambda_D^2}{\|\tx(t)\|_2} \le 1 - \eta \lambda_D \\
     \| \Ptd \tx(t + 1) \|_2  &\le \max(1 - \eta \lambda_D, 1 - \delta)\| \Ptd \tx(t) \|_2  
\end{align*}

Hence for sufficiently large $t$, $\| \Ptd \tx(t) \|_2$ shrinks exponentially, showing that $\lim \limits_{t \to \infty} \| \tx_1(t) \|_2= \ths $. 
\end{proof}

\section{Analysis for Full-batch SAM on General Loss (Proof of Theorem~\ref{thm:nsam})}
\label{app:nsam}

The goal of this section is to prove the following theorem.

\thmnsam*

To prove the theorem, we will separate the dynamic of SAM on general loss $L$ to two phases. 

Define 
\begin{align*}
    R_j(x) = \sqrt{\sum_{i=j}^M \lambda_i^2(x) \langle v_i(x),x - \Phi(x) \rangle^2} - \eta \rho \lambda_j^2(x),  \forall j \in [M], x \in U,
\end{align*}
which is the length projection of $x - \Phi(x)$ on button$-k$ non-zero eigenspace of $\nabla^2 L(\Phi(x))$. We will provide a fine-grained convergence bound on $R_j(x)$.

\begin{theorem}[Phase I]
\label{thm:nsamphase1}
Let $\{x(t)\}$ be the iterates defined by SAM ( \Cref{eq:sam}) and $x(t) = \xinit \in U$, then under Assumption ~\ref{assump:smoothness} there exists a positive number $T_1$ independent of $\eta$ and $\rho$, such that for any $T_1' > T_1$, it holds for all $\eta,\rho$ such that $(\eta + \rho)\ln(1/\eta\rho)$ is sufficiently small, we have \begin{align*}
    &\max\limits_{ T_1\ln(1/ \eta \rho)\le \eta t \le T_1'\ln(1/ \eta \rho) } \max_{j \in [M]} R_j(x(t)) \le O(\eta\rho^2) \\
    &\max\limits_{ T_1\ln(1/ \eta \rho)\le \eta t \le T_1'\ln(1/ \eta \rho) } \|\Phi(x(t)) - \Phi(\xinit) \| \le O((\eta + \rho)\ln (1/\eta\rho))  
\end{align*} 
\end{theorem}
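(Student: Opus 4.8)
\textbf{Proof plan for Theorem~\ref{thm:nsamphase1} (Phase I).}

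The plan is to show that during Phase I, SAM is a small perturbation of gradient flow, so that the iterates approach the manifold $\Gamma$ exponentially fast while their projection $\Phi(x(t))$ barely moves. First I would invoke \Cref{eq:sam_taylor}, which Taylor-expands the SAM update as $x(t+1) = x(t) - \eta\nabla L(x(t)) - \eta\rho\nabla^2 L(x(t))\tfrac{\nabla L(x(t))}{\|\nabla L(x(t))\|_2} + O(\eta\rho^2)$; thus SAM is gradient descent on $L$ with step $\eta$ plus an $O(\eta\rho)$ drift term plus an $O(\eta\rho^2)$ error. The first task is to establish that, starting from $\xinit\in U$, the iterates reach an $O(\eta\rho)$ neighborhood of $\Gamma$ in $O(\ln(1/\eta\rho)/\eta)$ steps. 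This follows from standard discrete gradient-flow approximation à la \citet{hairer2008solving}: since $L$ is $\mu$-PL on a neighborhood of the trajectory (\Cref{lem:mu-pl}), the loss contracts geometrically, $L(x(t+1)) \le (1-\Omega(\eta\mu)) L(x(t)) + O(\eta^2\rho^2 \cdot \text{const} )$ — here I would carefully track that the added drift $-\eta\rho\nabla^2 L(x(t))\,\nabla L(x(t))/\|\nabla L(x(t))\|_2$ contributes at most $O(\eta\rho\|\nabla L(x(t))\|_2)$ to the loss increase, which is absorbed by the contraction for small $\rho$. Iterating yields $L(x(t)) \le O(\eta^2\rho^2)$ once $\eta t \ge T_1 \ln(1/\eta\rho)$ for an appropriate constant $T_1$, and then \Cref{lem:bounddl} gives $\|x(t)-\Phi(x(t))\|_2 \le \sqrt{2 L(x(t))/\mu} = O(\eta\rho)$.

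Next I would bound $R_j(x(t))$. Recall $R_j(x) = \sqrt{\sum_{i=j}^M \lambda_i^2(x)\langle v_i(x), x-\Phi(x)\rangle^2} - \eta\rho\lambda_j^2(x)$, i.e. (up to the subtracted term) the length of the projection of $\nabla^2 L(\Phi(x))(x-\Phi(x))$ — which by \Cref{lem:directiondl} is $\approx \nabla L(x)$ — onto the span of the bottom $M-j+1$ nonzero eigenvectors. This is precisely the general-loss analog of the quantity $\|\Pjd \tilde x(t)\|_2$ controlled in the quadratic analysis (\Cref{lem:invariant_set,lem:preparequad,lem:quad_prepare}). The strategy is to mimic that two-part argument: using \Cref{eq:sam_gradient_taylor}, the gradient update $\nabla L(x(t+1)) = \nabla L(x(t)) - \eta\nabla^2 L(x(t))(\nabla L(x(t)) + \rho\nabla^2 L(x(t))\tfrac{\nabla L(x(t))}{\|\nabla L(x(t))\|_2}) + O(\eta\rho^2)$ is an $O(\eta\rho^2)$-perturbation of the quadratic iteration \Cref{eq:sam_quadratic} with $A = \nabla^2 L(\Phi(x(t)))$. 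Since $\Phi(x(t))$ moves only $O((\eta+\rho)\ln(1/\eta\rho))$ over the whole phase, $A$ is essentially frozen, so the invariant-set contraction from \Cref{lem:preparequad} applies: whenever $R_j(x(t)) > 0$ (i.e. the bottom-$j$ projection exceeds $\eta\rho\lambda_j^2$), it contracts by a factor $\max(1-\eta\lambda_M, \eta\lambda_j) < 1$ per step, up to the $O(\eta\rho^2)$ error and an $O(\eta\rho\cdot\|x(t)-\Phi(x(t))\|_2) = O(\eta^2\rho^2)$ error from curvature and from $\Phi$ drift (controlled via \Cref{lem:boundedphi}). Summing the geometric series shows $R_j(x(t)) \le O(\eta\rho^2)$ once $\eta t \ge T_1\ln(1/\eta\rho)$, and this bound persists (invariance) for all $t$ with $\eta t \le T_1'\ln(1/\eta\rho)$.

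Finally, for the displacement of $\Phi$, I would sum \Cref{lem:boundedphi}: each step moves $\Phi$ by at most $O(\eta\rho\|x(t)-\Phi(x(t))\|_2 + \eta^2\|x(t)-\Phi(x(t))\|_2^2 + \eta\rho^2)$. In the early part of the phase (before reaching the $O(\eta\rho)$ neighborhood), $\|x(t)-\Phi(x(t))\|_2$ decays geometrically from $O(1)$, contributing a convergent sum of order $O(\rho) + O(\eta)$; combined with the $O(\eta\rho^2)$ per-step term over $O(\ln(1/\eta\rho)/\eta)$ steps (total $O(\rho^2\ln(1/\eta\rho))$), the net displacement is $O((\eta+\rho)\ln(1/\eta\rho))$. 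The main obstacle I anticipate is the bookkeeping for $R_j$: one must show the invariant-set/contraction mechanism of the quadratic case (\Cref{app:nsamquad}) survives the three simultaneous sources of perturbation — the $O(\eta\rho^2)$ Taylor remainder, the drift of the Hessian $\nabla^2 L(\Phi(x(t)))$ along $\Gamma$, and the $O(\eta\rho)$ gap between $x(t)$ and $\Phi(x(t))$ — uniformly over all $j\in[M]$ and over $\Omega(\ln(1/\eta\rho)/\eta)$ steps, which requires the smallness of $(\eta+\rho)\ln(1/\eta\rho)$ rather than merely of $\eta$ and $\rho$ separately.
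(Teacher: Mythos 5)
Your overall architecture matches the paper's: (i) track gradient flow and use the $\mu$-PL contraction to approach $\Gamma$ (\Cref{lem:nsam_gf,lem:nsam_decrease}), (ii) pass to the process $\tilde x(t)=\nabla^2 L(\Phi(x(t)))(x(t)-\Phi(x(t)))$ and transfer the quadratic invariant-set contraction (\Cref{lem:invariant_set,lem:preparequad}) with $O(\eta\rho^2)$ per-step perturbation control, (iii) sum the per-step displacement of $\Phi$. The one step that fails as written is the claim in your first paragraph that the loss recursion drives $L(x(t))$ down to $O(\eta^2\rho^2)$, whence $\|x(t)-\Phi(x(t))\|_2=O(\eta\rho)$ by \Cref{lem:bounddl}. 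The perturbation you must absorb is $O(\eta\rho\,\|\nabla L(x(t))\|_2)$ — the ascent direction is normalized, so this term does not shrink faster than $\|\nabla L\|_2$ as you approach $\Gamma$ — and it is dominated by the decrease $-\Theta(\eta)\|\nabla L(x(t))\|_2^2$ only while $\|\nabla L(x(t))\|_2\gtrsim\rho$. The best recursion available is of the form $L(t+1)\le(1-\Theta(\eta\mu))L(t)+O(\eta\rho\sqrt{L(t)})+O(\eta^2\rho^2)$, whose fixed point is $L=\Theta(\rho^2)$, i.e.\ $\|x-\Phi(x)\|_2=O(\rho)$, not $O(\eta\rho)$. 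This is precisely why the paper stops the loss-contraction stage at $\|\nabla L\|_2\le 4\zeta\rho$ (\Cref{lem:nsam_decrease}) and uses a separate mechanism — geometric contraction of $\|\tilde x\|_2$ outside the invariant set, \Cref{lem:nsam_decrease_2} — to descend from the $O(\rho)$ neighborhood to the $O(\eta\rho)$ one.

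The good news is that your second paragraph already contains the correct mechanism: \Cref{lem:preparequad} contracts $\|\Pjd\tilde x\|_2$ by $\max(1-\eta\lambda_M,\eta\lambda_j)$ whenever it exceeds $\eta\rho\lambda_j^2$, and this applies throughout the regime $\eta\rho\lesssim\|\tilde x\|_2\lesssim\rho$, not only once $\|\tilde x\|_2=O(\eta\rho)$. So the repair is a re-routing rather than a new idea: use the loss argument only to reach $\|x-\Phi(x)\|_2=O(\rho)$ (which is all that the $O(\eta\rho^2)$ linearization error in \Cref{lem:tilde_x_update} needs), then let the $j=1$ contraction, run for an additional $O(\ln(1/\eta)/\eta)$ steps — which fits inside the stated window — deliver $\|\tilde x\|_2\le\eta\rho\lambda_1^2+O(\eta\rho^2)$ and hence $\|x-\Phi(x)\|_2=O(\eta\rho)$, after which the bounds on the remaining $R_j$ follow as you describe. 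Your accounting of the displacement of $\Phi$ is correct.
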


Theorem~\ref{thm:nsamphase1} implies SAM will converge to an $O(\eta \rho)$ neighbor of $\Gamma$. Notice in the time frame defined by Theorem~\ref{thm:nsamphase1}, $x(t)$ effectively operates at a local regime around $\Phi(\lceil T_1\ln(1/\eta\rho)/\eta \rceil)$, this allows us to approximate $L$ with the quadratic Taylor expansion of $L$ at $\Phi(\lceil T_1\ln(1/\eta\rho)/\eta \rceil)$ and prove the following theorem~\Cref{thm:nsamphase2}. 

Towards proving~\Cref{thm:nsamphase2}, we  need to make one assumption about the trajectory of SAM, \Cref{assum:reg}.

\begin{assumption}
\label{assum:reg}
There exists step $t$, satisfying that $ T_1\ln(1/ \eta \rho)/\eta \le t \le   O(\ln(1/ \eta \rho/\eta))$,
$|\langle x(t) - \Phi(x(t)), v_1(x(t))\rangle| \ge \Omega(\rho^2)$ and that $\|x(t) - \Phi(x(t))\|_2 \le  \lambda_1(t) \eta\rho - \Omega(\rho^2)$, where $T_1$ is the constant defined in~\Cref{thm:nsamphase1}.
\end{assumption}

We remark that the above assumption is very mild as we only need the above two conditions in \Cref{assum:reg} to hold for some step in $\tilde{\Theta}(1/\eta)$ steps after Phase I ends, and since then our analysis for Phase II shows that these two conditions will hold until Phase II ends.

\begin{theorem}[Phase II]
\label{thm:nsamphase2}
  Let $\{x(t)\}$ be the iterates defined by SAM (\Cref{eq:sam}) under Assumptions~\ref{assump:smoothness} and~\ref{assum_eigengap}, for all $\eta,\rho$ such that $\eta \ln(1/\rho)$ and $\rho/\eta$ is sufficiently small, further assuming that (1) $\max_j R_j(x(\tphase)) = O(\eta\rho^2)$, (2) $\|\Phi(x(\tphase)) - \Phi(\xinit)\| = O((\eta + \rho)\ln (1/\eta\rho))$, (3) $|\langle x(\tphase) - \Phi(x(\tphase)), v_1(x(\tphase))\rangle| \ge \Omega(\rho^2)$ and (4) $\|x(\tphase) - \Phi(x(t))\|_2  \le  \lambda_1(0) \eta\rho - \Omega(\rho^2)$, the iterates $x(t)$ tracks the solution $X$ of \Cref{lambda1ode}. Quantitatively for $t = \lceil T_3/\eta\rho^2 \rceil$, we have that
  \begin{align*}
      \|\Phi(x(t)) - X(\eta\rho^2 t) \| &= O(\eta  \ln(1/\rho))\,.
\end{align*} 
Moreover, the angle between $\nabla L(x(t))$ and the top eigenspace of $\nabla^2 L( \Phi(x(t)))$ is at most $O(\rho)$. Quantitatively, 
\begin{align*}
     |\langle x(t) - \Phi(x(t)), v_1(x(t)) \rangle | &= \Theta(\eta \rho)  \,.\\
     \max_{j \in [2:M]} |\langle x(t) - \Phi(x(t)), v_j(x(t)) \rangle | &= O(\eta\rho^2)\,.
\end{align*}
\end{theorem}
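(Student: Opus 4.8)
\textbf{Proof plan for \Cref{thm:nsamphase2}.}
The plan is to follow the two-timescale strategy of \citet{arora2022understanding}: decompose the iterate as $x(t) = \Phi(x(t)) + (x(t)-\Phi(x(t)))$, where the base point $\Phi(x(t))\in\Gamma$ moves slowly and the residual $x(t)-\Phi(x(t))$ lives approximately in the normal space of $\Gamma$ and equilibrates fast. By \Cref{lem:boundedphi} the base point moves by $O(\eta\rho\|x(t)-\Phi(x(t))\|_2 + \eta\rho^2)$ per step, hence at rate $\Theta(\eta\rho^2)$ on an $O(\eta\rho)$ tube, whereas the residual relaxes on the $\Theta(\eta)$ scale. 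I would run an induction over $t$ maintaining: (i) $x(t)$ stays in a fixed compact tube $K^h$ around $\Gamma$ on which all Taylor expansions of $L$ and $\Phi$ hold; (ii) $\max_j R_j(x(t)) = O(\eta\rho^2)$, i.e. the residual's components along the bottom $M-1$ eigendirections of $\nabla^2 L(\Phi(x(t)))$ are negligible; (iii) the Hessian--gradient alignment $\|\nabla L(x(t))/\|\nabla L(x(t))\|_2 - s_t v_1(\nabla^2 L(x(t)))\|_2 = O(\rho)$ for some $s_t\in\{\pm1\}$, together with $|\langle x(t)-\Phi(x(t)), v_1(x(t))\rangle| = \Theta(\eta\rho)$; and (iv) $\|\Phi(x(t)) - X(\eta\rho^2 t)\|_2 = O(\eta\ln(1/\rho))$. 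Hypotheses (1)--(4) of the theorem (from \Cref{thm:nsamphase1} and the non-degeneracy \Cref{assum:reg}) are exactly what is needed to initialize this induction at $t=\tphase$.

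The core is establishing and preserving (iii). As derived in the proof sketch, the gradient satisfies
\[ \nabla L(x(t+1)) = \nabla L(x(t)) - \eta\nabla^2 L(x(t))\bigl(\nabla L(x(t)) + \rho\nabla^2 L(x(t))\tfrac{\nabla L(x(t))}{\|\nabla L(x(t))\|_2}\bigr) + O(\eta\rho^2), \]
which is an $O(\eta\rho^2)$-perturbation of the exact quadratic recursion \Cref{eq:sam_quadratic} with $A$ replaced by $\nabla^2 L(x(t)) \approx \nabla^2 L(\Phi(x(t)))$; and since $\nabla L(x)\approx\nabla^2 L(\Phi(x))(x-\Phi(x))$ by \Cref{lem:directiondl}, aligning the gradient is the same as aligning the residual. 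I would therefore transport the two-phase potential argument of \Cref{thm:nsamquad} (entry into the invariant set $\bigcap_j\sI_j$ via \Cref{lem:preparequad,lem:quad_prepare}, then monotone growth of the $v_1$-component and convergence of the norm to $\tfrac{\eta\rho\lambda_1}{2-\eta\lambda_1}$ via \Cref{lem:monotoneproj,lem:quadnormconverge,lem:quaddirconverge}) into the present setting, absorbing (a) the slow rotation and perturbation of $A=\nabla^2 L(\Phi(x(t)))$ as $\Phi(x(t))$ drifts, controlled through the eigengap of \Cref{assum_eigengap} and Davis--Kahan, and (b) the $O(\eta\rho^2)$ error, which is of strictly smaller order than the $\Theta(\eta^2\rho)$ and $\Theta(\eta\rho)$ terms driving alignment. \Cref{assum:reg} ensures that at the start of Phase II the residual already lies in the invariant set, its $v_1$-component is $\Omega(\rho^2)$ (so alignment can build up) and its norm is bounded away from the fixed point (so the contraction estimates apply); from there the invariant-set bound plus a refined contraction estimate for the orthogonal part (as in \Cref{lem:quaddirconverge}, whose geometric rate drives the bottom-eigendirection components down to the $O(\eta\rho^2)$ floor) yield (iii) for all later $t$.

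Given (iii), the slow dynamics follow quickly. Substituting $\nabla L(x(t))/\|\nabla L(x(t))\|_2 = s_t v_1(\nabla^2 L(x(t))) + O(\rho)$ into \Cref{eq:Taylor_expansion_of_Phi_simplified} and using $\partial\Phi(x)\,\partial^2(\nabla L)(x)[v_1,v_1] = \projt[x]\nabla\lambda_1(\nabla^2 L(x))$ from \Cref{lem:relatelphi} (with the derivative-of-eigenvalue formula and a Taylor expansion of $\partial\Phi(\cdot)\nabla\lambda_1(\nabla^2 L(\cdot))$ around $\Phi(x(t))$, costing $O(\eta\rho)\cdot\eta\rho^2$) gives
\[ \Phi(x(t+1)) - \Phi(x(t)) = -\tfrac{\eta\rho^2}{2}\,\projt[\Phi(x(t))]\nabla\lambda_1(\nabla^2 L(\Phi(x(t)))) + O(\eta^2\rho^2 + \eta\rho^3), \]
an Euler discretization with step $\eta\rho^2$ of the Riemannian gradient flow \Cref{lambda1ode}. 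Over $\lceil T_3/\eta\rho^2\rceil$ steps the one-step errors sum to $O(\eta+\rho)$, and a Grönwall/discrete-approximation argument as in \citet{hairer2008solving} (using local Lipschitzness of $\projt\nabla\lambda_1(\nabla^2 L(\cdot))$ on the compact $K$, valid by \Cref{assum_eigengap}) upgrades this to $\|\Phi(x(t)) - X(\eta\rho^2 t)\|_2 = O(\eta\ln(1/\rho))$, the $\ln(1/\rho)$ absorbing the $O((\eta+\rho)\ln(1/\eta\rho))$ offset between $\Phi(x(\tphase))$ and $X(0)=\Phi(\xinit)$ inherited from Phase I. Finally $\|x(t)-\Phi(x(t))\|_2 = O(\eta\rho)$ by \Cref{lem:bounddl}, so the displayed bounds on $\langle x(t)-\Phi(x(t)),v_j(x(t))\rangle$ are just (iii) read off at the terminal step.

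\textbf{Main obstacle.} The hard part is the coupling in the induction: the alignment estimates require the base point and its Hessian eigendata to vary slowly, while the bound on the base-point motion requires the alignment (so the residual is $\Theta(\eta\rho)$ and confined to $v_1$, not spread over a wider normal cone). Propagating both over the full $\Theta(\eta^{-1}\rho^{-2})$-step horizon --- long enough for $\Phi(x(t))$ to move an $\Omega(1)$ distance and the eigenvectors to rotate by $\Omega(1)$ --- is where most of the work lies; in particular one must re-establish the quadratic-case lemmas of \Cref{app:nsamquad} uniformly over the whole compact neighborhood $K$ and stably under $O(\eta\rho^2)$-size perturbations, and check that the ``sufficiently small'' thresholds on $\eta\ln(1/\rho)$ and $\rho/\eta$ can be chosen for each fixed $T_3$ (they degrade as $T_3\to 0$ or $T_3\to\infty$, but are fine pointwise in $T_3$).
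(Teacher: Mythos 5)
Your plan follows essentially the same route as the paper's proof: an induction over Phase II maintaining tube membership, the invariant-set bounds $R_j = O(\eta\rho^2)$, Hessian--gradient alignment, and closeness to the limiting flow, with the alignment obtained by treating the gradient recursion as an $O(\eta\rho^2)$-perturbation of the quadratic dynamics (controlling eigenvector drift via the eigengap and Davis--Kahan) and the slow dynamics obtained from \Cref{lem:relatelphi} plus a Gr\"onwall-type ODE approximation. The only structural point worth making explicit is that the paper first runs a separate $O(\ln(1/\rho)/\eta)$-step alignment-building stage (\Cref{lem:final_alignment}) before starting the main induction at $\talign$, which you gesture at but would need to carry out in full.
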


In this section we will define $K$ as $\{ X(t) \mid 0 \le t \le T_3 \}$ where $X$ is the solution of \Cref{lambda1ode}. To simplify our proof, we assume WLOG $L(x) = 0$ for $x \in \Gamma$.

\subsection{Phase I (Proof of Theorem~\ref{thm:nsamphase1})}

\begin{proof}[Proof of~\Cref{thm:nsamphase1}]
The proof consists of three major parts.
\begin{enumerate}
    \item \textit{Tracking Gradient Flow.} \Cref{lem:nsam_gf} shows the existence of step $\tgf = O(1/\eta)$ such that $\xgf$ is in a subset of $K^h$ and $\Phi(\xgf)$ is $O(\eta + \rho)$ close to $\Phi(\xinit)$.
    \item \textit{Decreasing Loss.}  \Cref{lem:nsam_decrease} shows the existence of step $\tdec = O(\ln(1/\rho)/\eta)$ such that $\xdec$ is in $O(\rho)$ neighbor of $\Gamma$ and $\Phi(\xdec)$ is $O((\eta + \rho)\ln(1/\rho))$ close to $\Phi(\xinit)$.
    \item \textit{Entering Invariant Set.} \Cref{lem:nsam_inv,lem:nsam_decrease_2} shows the existence of step $\tinv = O(\ln(1/\rho\eta)/\eta)$ such that for any $t$ satisfying $\tinv \le t \le \tinv + \Theta(\ln(1/\eta)/\eta)$, we have that $x(t) \in \cap_{k \in [M]} \sI_k$ and $\Phi(x(t))$ is $O((\eta + \rho)\ln(1/\eta\rho))$ close to $\Phi(\xinit)$.
\end{enumerate}
\end{proof}

\subsubsection{Tracking Gradient Flow}
\label{sub:1.A}

\Cref{lem:nsam_gf} shows that the iterates $x(t)$ tracks gradient flow to an $O(1)$ neighbor of $\Gamma$.

\begin{lemma}
\label{lem:nsam_gf}
Under condition of~\Cref{thm:nsamphase1}, there exists $\tgf = O(1/\eta)$, such that the iterate $x(\tgf)$ is $O(1)$ close to the manifold $\Gamma$ and $\Phi(x(\tgf))$ is $O(\eta + \rho)$ is close to $\Phi(\xinit)$. Quantitatively, 
\begin{align*}
    L(x(\tgf)) &\le \frac{\mu h^2}{32}\\
    \|x(\tgf) - \Phi(x(\tgf)) \| &\le h/4\,,\\
    \|\Phi(x(\tgf)) - \Phi(\xinit) \| &= O(\eta + \rho)\,.
\end{align*}
\end{lemma}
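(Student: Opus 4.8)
The plan is to realize $\tgf$ as the number of discrete steps corresponding to a fixed amount $T=O(1)$ of continuous time, and to show that full-batch SAM stays $O(\eta+\rho)$-close to the gradient flow $\phi(\xinit,\cdot)$ up to time $T$. Since $\phi(\xinit,\tau)\to\Phi(\xinit)\in\Gamma$ and $L(\phi(\xinit,\tau))\downarrow L(\Phi(\xinit))=0$ as $\tau\to\infty$, choosing $T$ large enough (but still $O(1)$) then forces $x(\tgf)$ into the claimed neighborhood of $\Gamma$. Throughout I work on a fixed compact set $K'\subset U$ that is a neighborhood of the compact curve $\{\phi(\xinit,\tau):0\le\tau\le T\}$; note this curve does lie in $U$ because $\Phi(\phi(\xinit,\tau))=\Phi(\xinit)\in\Gamma$, and on $K'$ all the relevant derivatives of $L$ are bounded.

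\textbf{Step 1 (SAM is a perturbed Euler step).} By \Cref{eq:sam_taylor}, or directly by Taylor expanding $\nabla L$ around $x(t)$ and using that $\|\nabla^2 L\|_2$ is bounded on $K'$, we may write $x(t+1)=x(t)-\eta\nabla L(x(t))+\eta\rho\, r(t)$ with $\|r(t)\|_2\le\lspectraltwo$ whenever $x(t)\in K'$. This bound uses only $\|\nabla L(x(t)+\rho v)-\nabla L(x(t))\|_2\le\rho\lspectraltwo$, so it holds regardless of whether $\nabla L(x(t))=0$; in particular the modification of SAM at points of vanishing gradient is harmless for this lemma (we never divide by $\|\nabla L(x(t))\|_2$).

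\textbf{Step 2 (discretization error).} Pick $T=O(1)$ with $L(\phi(\xinit,T))\le\mu h^2/64$, possible since $L(\phi(\xinit,\tau))$ is nonincreasing in $\tau$ and tends to $0$; set $\tgf=\lceil T/\eta\rceil=O(1/\eta)$. Writing $e(t)=x(t)-\phi(\xinit,\eta t)$ and combining (i) the $O(\eta^2)$ one-step truncation error of explicit Euler applied to $-\nabla L$, (ii) the $O(\eta\rho)$ perturbation from Step 1, and (iii) local Lipschitzness of $\nabla L$ on $K'$, one gets a recursion $\|e(t+1)\|_2\le(1+O(\eta))\|e(t)\|_2+O(\eta^2+\eta\rho)$, whence discrete Grönwall gives $\|e(t)\|_2=O(\eta+\rho)$ uniformly for $0\le\eta t\le T$. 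This is the standard backward-error argument, cf.\ \citet{hairer2008solving}. The one subtlety is a bootstrap: as long as $\|e(t)\|_2<\dist\bigl(\{\phi(\xinit,\tau)\}_{\tau\le T},\partial K'\bigr)$ the iterate stays in $K'$ and the constants in (i)--(iii) remain valid, which in turn keeps $\|e(t)\|_2=O(\eta+\rho)$ small; for $\eta,\rho$ sufficiently small this induction closes up to step $\tgf$.

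\textbf{Step 3 (conclusion).} Since $\eta\tgf\ge T$ and $L$ is nonincreasing along the flow, $L(\phi(\xinit,\eta\tgf))\le\mu h^2/64$; together with $\|e(\tgf)\|_2=O(\eta+\rho)$ and continuity of $L$ on $K'$ this yields $L(x(\tgf))\le\mu h^2/32$ once $\eta,\rho$ are small. Because $\Phi(\xinit)=X(0)\in K$ and $\|x(\tgf)-\Phi(\xinit)\|_2$ is small, $x(\tgf)\in K^h$, so \Cref{lem:bounddl} gives $\|x(\tgf)-\Phi(x(\tgf))\|_2\le\sqrt{2L(x(\tgf))/\mu}\le h/4$. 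Finally, $\Phi(\phi(\xinit,\tau))=\Phi(\xinit)$ for all $\tau$ (as in the proof of \Cref{lem:property_of_Phi_away_manifold}) and $\Phi$ is locally Lipschitz on $U$ since it is $\overline{\mathcal{C}}^2$, so $\|\Phi(x(\tgf))-\Phi(\xinit)\|_2=\|\Phi(x(\tgf))-\Phi(\phi(\xinit,\eta\tgf))\|_2=O(\|e(\tgf)\|_2)=O(\eta+\rho)$. The main obstacle is precisely the bootstrap in Step 2: the vector field $-\nabla L$ is only locally Lipschitz and SAM is only guaranteed well-defined off the stationary set, so no global Grönwall bound applies directly; both issues are handled by confining everything to the fixed compact $K'$ and exploiting that the perturbation estimate of Step 1 never references $1/\|\nabla L(x(t))\|_2$.
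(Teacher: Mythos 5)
Your proof is correct and follows essentially the same route as the paper's: the paper likewise treats full-batch SAM as an $O(\eta\rho)$-perturbed Euler discretization of gradient flow, invokes its Gr\"onwall-based approximation result (\Cref{thm:deterministic_ode_approximation}, which is exactly your Step 2) to get $\|x(\tgf)-\phi(\xinit,\eta\tgf)\|_2=O(\eta+\rho)$, and concludes via local Lipschitzness of $\Phi$ together with $\Phi(\phi(\xinit,\tau))=\Phi(\xinit)$. The only cosmetic difference is in Step 3: you choose $T$ so that the loss along the flow is small and recover $\|x(\tgf)-\Phi(x(\tgf))\|_2\le h/4$ from \Cref{lem:bounddl}, whereas the paper chooses $T$ so that $\phi(\xinit,T)$ is within $Ch/2$ of $\Phi(\xinit)$ and recovers the loss bound by smoothness — both orderings work (though with your choice you should also note, as your "choose $T$ large enough" allows, that $T$ can be taken so $\phi(\xinit,T)$ is additionally close to $\Phi(\xinit)$, which is what places $x(\tgf)$ in $K^h$ before \Cref{lem:bounddl} can be applied).
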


\begin{proof}[Proof of~\Cref{lem:nsam_gf}]
Choose $C =  \frac{1}{4} \sqrt{\frac{\mu}{\lspectraltwo}}$. Since $\Phi(\xinit) = \lim_{T\to \infty}\phi(\xinit, T) $, there exists $T > 0$, such that
 $   \| \phi(\xinit, T) - \Phi(\xinit) \|_2 \le Ch/2\,$. Note that
\begin{align*}
    x(t+1) &= x(t) - \eta \nabla L(x(t) + \rho \ndl{x(t)}) = x(t) - \eta \nabla L(x(t)) + O(\eta\rho)\,.
\end{align*}

By~\Cref{thm:deterministic_ode_approximation}, let $b(x) = -\nabla L(x)$, $p = \eta$ and $\eps = O(\rho)$, we have that the iterates $x(t)$ tracks gradient flow $\phi(\xinit,T)$ in $O(1/\eta)$ steps. Quantitatively for $\tgf = \lceil \frac{T}{\eta} \rceil$, we have that
\begin{align*}
    \| x(\tgf) - \phi(\xinit,T) \|_2 &= O(\eps + p) =  O(\eta + \rho)\,.
\end{align*}

This implies $x(\tgf) \in K^h$, hence by Taylor Expansion on $\Phi$,
\begin{align*}
    \| \Phi(x(\tgf)) - \Phi(\xinit)\|_2&=\| \Phi(x(\tgf)) - \Phi(\phi(\xinit,T))\|_2 \\
    &\le O(\| x(\tgf) - \phi(\xinit,T) \|_2) \\
    &\le O(\eta + \rho)\,.
\end{align*}
This implies
\begin{align*}
    &\|  x(\tgf) - \Phi(x(\tgf))\|_2 \\
    \le& \|x(\tgf) - \phi(\xinit,T_0) \|_2 + \| \phi(\xinit,T_0) - \Phi(\xinit) \|_2+ \| \Phi(\xinit) - \Phi(x(\tgf)) \|_2 
    \\\le& Ch/2 + O(\eta + \rho) \le Ch \le h/4\,.
\end{align*}

By Taylor Expansion, we conclude that 
$L(\xgf) \le \lspectraltwo \|  x(\tgf) - \Phi(x(\tgf))\|_2^2/2 \le \frac{\mu h^2}{32}$.
\end{proof}

\subsubsection{Decreasing Loss}
\label{sub:1.B}

\Cref{lem:nsam_decrease} shows that the iterates $x(t)$ converges to an $O(\rho) $ neighbor of $\Gamma$ in $O(\ln(1/\rho)/\eta)$ steps.

\begin{lemma}
\label{lem:nsam_decrease_one_step}
Under condition of~\Cref{thm:nsamphase1}, if $x(t) \in K^h$ and $\| \nabla L(x(t)) \| \ge 4 \zeta \rho$, then we have that $L(x(t+1))$ decreases with respect to $L(x(t))$, quantitatively, we have that
\begin{align*}
    L(x(t + 1)) \le  L(x(t)) (1 - \eta \mu/8)\,.
\end{align*}
Moreover the movement of the projection of the iterates on the manifold is bounded, quantitatively, we have that
\begin{align*}
    \| \Phi(x(t + 1)) - \Phi(x(t)) \| \le O(\eta^2)\,.
\end{align*}
\end{lemma}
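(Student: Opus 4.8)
\textbf{Proof proposal for Lemma~\ref{lem:nsam_decrease_one_step}.}

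The plan is to use the standard descent-lemma-style argument, but with the SAM update written out via its Taylor expansion so that the leading drift is still $-\eta\nabla L(x(t))$ and the perturbation from the ascent step contributes only an $O(\eta\rho)$ correction to the step. First I would record that, since $x(t)\in K^h$ and $K$ is a compact subset of $\Gamma$, all the quantities from \Cref{lem:mu-pl} and \Cref{lem:smallerzone} are available: $L$ is $\mu$-PL on $K^{r}$, the Hessian and third derivative are bounded by $\lspectraltwo,\lspectralthree$ on $K^{r}$, and the segment $\overline{x(t)\,x(t+1)}$ stays inside $K^{r}$ for $\eta,\rho$ sufficiently small (using $\|x(t+1)-x(t)\|_2\le \eta\|\nabla L(x(t))\|_2+\eta\lspectraltwo\rho = O(\eta)$ from \Cref{lem:boundedphi}, together with $L(x(t))\le \tfrac{\mu h^2}{32}$ which forces $\|x(t)-\Phi(x(t))\|_2$ to be small).

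Next I would do the one-step loss analysis. Write the SAM step as $x(t+1)=x(t)-\eta\,g(t)$ where $g(t)=\nabla L\bigl(x(t)+\rho\frac{\nabla L(x(t))}{\|\nabla L(x(t))\|_2}\bigr)$, and by Taylor expansion (\Cref{eq:taylor_expansion_of_ascent_grad}) $g(t)=\nabla L(x(t))+\rho\nabla^2 L(x(t))\frac{\nabla L(x(t))}{\|\nabla L(x(t))\|_2}+O(\rho^2)$, so $\|g(t)-\nabla L(x(t))\|_2\le \lspectraltwo\rho+O(\rho^2)\le 2\lspectraltwo\rho$ for small $\rho$. Then by Taylor expansion of $L$ along the segment,
\begin{align*}
 L(x(t+1)) &\le L(x(t)) - \eta\,\langle \nabla L(x(t)), g(t)\rangle + \tfrac{\lspectraltwo}{2}\eta^2\|g(t)\|_2^2\\
 &\le L(x(t)) - \eta\|\nabla L(x(t))\|_2^2 + \eta\,\|\nabla L(x(t))\|_2\,\|g(t)-\nabla L(x(t))\|_2 + \tfrac{\lspectraltwo}{2}\eta^2\|g(t)\|_2^2.
\end{align*}
Using $\|g(t)-\nabla L(x(t))\|_2\le 2\lspectraltwo\rho\le \tfrac12\|\nabla L(x(t))\|_2$ (which is exactly where the hypothesis $\|\nabla L(x(t))\|_2\ge 4\lspectraltwo\rho$ enters) and $\|g(t)\|_2\le \tfrac32\|\nabla L(x(t))\|_2$, the cross term is at most $\tfrac12\eta\|\nabla L(x(t))\|_2^2$ and the quadratic term is $O(\eta^2)\|\nabla L(x(t))\|_2^2$, so for $\eta$ small enough, $L(x(t+1))\le L(x(t)) - \tfrac14\eta\|\nabla L(x(t))\|_2^2$. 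Finally apply the PL inequality $\|\nabla L(x(t))\|_2^2\ge 2\mu L(x(t))$ (recall $\inf_{K^h}L=0$ by the WLOG normalization) to conclude $L(x(t+1))\le L(x(t))(1-\tfrac{\eta\mu}{2})$; slackening the constant gives the stated $(1-\eta\mu/8)$.

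For the second claim, the movement of the projection onto $\Gamma$, I would invoke \Cref{lem:boundedphi} directly with the bound $\|x(t)-\Phi(x(t))\|_2\le \tfrac{1}{\mu}\|\nabla L(x(t))\|_2$ from \Cref{lem:bounddl}: that lemma gives $\|\Phi(x(t+1))-\Phi(x(t))\|_2 \le \lspectraltwo\phispectraltwo\eta\rho\|x(t)-\Phi(x(t))\|_2 + \lspectraltwo^2\phispectraltwo\eta^2\|x(t)-\Phi(x(t))\|_2^2 + \lspectralthree\eta\rho^2 + \phispectraltwo\lspectraltwo^2\eta^2\rho^2$. Here I expect the main subtlety: naively the $\eta\rho\|x(t)-\Phi(x(t))\|_2$ term is only $O(\eta\rho)$, not $O(\eta^2)$, so to get the claimed $O(\eta^2)$ one needs that in this sub-phase $\|x(t)-\Phi(x(t))\|_2$ and $\rho$ are themselves $O(\eta)$ — which is consistent with the regime (Phase I works with $\rho/\eta$ small and the loss has not yet been driven below the $\Theta(\eta\rho)$ floor, so $\|x(t)-\Phi(x(t))\|_2 = O(\eta)$ throughout the "decreasing loss" stage). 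I would therefore carry the hypothesis $\rho = O(\eta)$ (available since $\rho/\eta$ is assumed small) and the running invariant $\|x(t)-\Phi(x(t))\|_2=O(\eta)$ maintained inductively by \Cref{lem:bounddl}, which makes every term on the right-hand side $O(\eta^2)$ and closes the bound. The honest hard part is bookkeeping: making sure the $O(\eta)$ bound on $\|x(t)-\Phi(x(t))\|_2$ really is available at the point in Phase I where this lemma is applied (it is, because the preceding \Cref{lem:nsam_gf} already put the iterate at $O(1)$ — indeed $O(\eta+\rho)$-from-$\Phi(\xinit)$ — distance and the loss is monotonically decreasing from there), rather than any genuinely new estimate.
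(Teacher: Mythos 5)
Your proof is correct and follows essentially the same route as the paper's: a descent-lemma expansion of $L$ along the SAM step, Taylor expansion of the perturbed gradient so that $\|\nabla L(x(t)+\rho\tfrac{\nabla L(x(t))}{\|\nabla L(x(t))\|_2})-\nabla L(x(t))\|_2=O(\rho)$, the hypothesis $\|\nabla L(x(t))\|_2\ge 4\zeta\rho$ to absorb the cross and quadratic terms into a fraction of $\eta\|\nabla L(x(t))\|_2^2$, the PL inequality to convert this to the geometric decrease, and \Cref{lem:boundedphi} for the movement of $\Phi$. Your observation that the $O(\eta^2)$ bound on $\|\Phi(x(t+1))-\Phi(x(t))\|_2$ silently needs $\rho=O(\eta)$ is a dependency present (and unremarked) in the paper's proof as well, and is resolved exactly as you say by the standing assumption that $\rho/\eta$ is small.
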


\begin{proof}[Proof of~\Cref{lem:nsam_decrease_one_step}]
As $x(t) \in K^h$ and $L$ is $\mu$-PL in $K^h$, we have $L(x(t)) \ge 0$.

As $x(t) \in K^h$, by \Cref{lem:boundedphi} and Taylor Expansion, we have $\|\overline{x(t)x(t+1)}\| = O(\eta)$. hence for sufficiently small $\eta$, $\overline{x(t)x(t+1)} \subset K^{r}$. Using similar argument, the segment from $x(t)$ to $x(t) + \rho \ndl{x(t)}$ is in $K^r$.

Then by Taylor Expansion on $L$,
\begin{align}
\label{eq:loss_decrease_1}
    L(x(t+1)) &= L(x(t) - \eta \dl{x(t) + \rho \ndl{x(t)}})\nonumber \\
    &\le L(x(t)) - \eta \left \langle \dl{x(t)}, \dl{x(t)  +  \rho \ndl{x(t)}} \right \rangle \nonumber\\& + \frac{\lspectraltwo \eta^2 \|\dl{x(t) + \rho \ndl{x(t)}}\|^2 }{2}\,.
\end{align}
By Taylor Expansion on $\nabla L$, we have that
\begin{align*}
    \| \dl{x(t) + \rho \ndl{x(t)}} - \dl{x(t)} \| \le \lspectraltwo \rho\,.
\end{align*}
After plugging in~\Cref{eq:loss_decrease_1}, we have that
\begin{align}
\label{eq:loss_decrease_2}
    L(x(t+1)) &\le L(x(t)) - \eta \|\dl{x(t)} \|^2 + \eta \lspectraltwo \rho \| \dl{x(t)} \| + \lspectraltwo \eta^2  \|\dl{x(t)} \|^2 + \lspectraltwo^3 \eta^2 \rho^2\,.
\end{align}

As $\| \nabla L(x(t)) \| \ge 4\zeta\rho$, we have that the following term is bounded.
\begin{align*}
    \lspectraltwo \eta^2 \|\dl{x(t)} \|^2 &\le \frac12 \eta \|\dl{x(t)} \|^2 \,,\\
    \eta \lspectraltwo \rho \| \dl{x(t)} \| &\le \frac{1}{4} \eta \| \dl{x(t)}\|^2  \,,\\
    \lspectraltwo^3 \eta^2 \rho &\le \lspectraltwo^2 \eta \rho^2 \le \frac{1}{16}\eta \| \dl{x(t)}\|^2 \,.
\end{align*}
After plugging in~\Cref{eq:loss_decrease_2}, by~\Cref{lem:revertbounddl},
\begin{align*}
    L(x(t+1)) &\le L(x(t)) - \frac{1}{16}\eta \| \dl{x(t)} \|^2 \\&\le L(x(t)) (1 - \eta \mu /8)\,.
\end{align*}

As $x(t) \in K^h$, by Taylor Expansion, we have 
\begin{align*}
    \|\dl{x(t)}\| \le \lspectraltwo h \,.
\end{align*}
Hence by Lemma ~\ref{lem:boundedphi} and Taylor Expansion,
\begin{align*}
    \| \Phi(x(t+1)) - \Phi(x(t)) \| &\le \phispectraltwo \eta \rho \|\dl x \|_2 + \lspectralthree \eta \rho^2  + \phispectraltwo \eta^2 \|\dl x \|_2^2 + \phispectraltwo  \lspectraltwo^2 \eta^2 \rho^2 \le O(\eta^2),
\end{align*}
which completes the proof.
\end{proof}

\begin{lemma}
\label{lem:nsam_decrease}
Under condition of~\Cref{thm:nsamphase1}, assuming there exists $\tgf$ such that $L(x(\tgf)) \le \frac{\mu h^2}{32}$ and $x(\tgf) \in K^{h/4}$, then there exists $\tdec = \tgf + O(\ln(1/\rho)/\eta)$, such that $x(\tdec)$ is in $O(\rho)$ neighbor of $\Gamma$, quantitatively, we have that
\begin{align*}
    \| \nabla L(x(\tdec)) \|_2 &\le 4\zeta \rho\,. 
\end{align*}
Moreover the movement of the projection of $\Phi(x(\cdot))$ on the manifold is bounded,
\begin{align*}
    \| \Phi(x(\tgf)) - \Phi(x(\tdec)) \|_2 &= O(\eta \ln(1/\rho))\,.
\end{align*}
\end{lemma}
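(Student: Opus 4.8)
The plan is to iterate the one-step decrease estimate \Cref{lem:nsam_decrease_one_step} as long as the gradient remains above the stopping threshold $4\zeta\rho$, which drives the loss down geometrically and therefore exhausts in $O(\ln(1/\rho)/\eta)$ steps. Concretely, define $\tdec$ to be the first step $t \ge \tgf$ with $\norm{\nabla L(x(t))}_2 \le 4\zeta\rho$ (if $\norm{\nabla L(x(\tgf))}_2$ is already this small we are done with $\tdec=\tgf$). Then for every $\tgf \le t < \tdec$ we have $\norm{\nabla L(x(t))}_2 > 4\zeta\rho$, exactly the hypothesis of \Cref{lem:nsam_decrease_one_step}, so provided $x(t) \in K^h$ we get both $L(x(t+1)) \le (1-\eta\mu/8)L(x(t))$ and $\norm{\Phi(x(t+1)) - \Phi(x(t))}_2 = O(\eta^2)$. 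The only genuine issue is keeping the iterates inside $K^h$, since \Cref{lem:nsam_decrease_one_step}, \Cref{lem:bounddl}, and \Cref{lem:revertbounddl} all require membership in $K^h$; this is the circular point that a bootstrap induction resolves.

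First I would record the geometry at $\tgf$: since $L(x(\tgf)) \le \mu h^2/32$ and $x(\tgf) \in K^{h/4} \subseteq K^h$, \Cref{lem:bounddl} gives $\norm{x(\tgf) - \Phi(x(\tgf))}_2 \le \sqrt{2L(x(\tgf))/\mu} \le h/4$, hence $\Phi(x(\tgf)) \in K^{h/2} \cap \Gamma$ (using the WLOG normalization $L=0$ on $\Gamma$). Fix $N = \lceil \frac{16}{\eta\mu}\ln(h/16\rho)\rceil = O(\ln(1/\rho)/\eta)$, chosen so that $(1-\eta\mu/8)^N \cdot \mu h^2/32 \le 8\mu\rho^2$. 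I would then prove by induction on $t$ that for every $\tgf \le t \le \min(\tdec,\tgf+N)$: (i) $x(t) \in K^h$; (ii) $L(x(t)) \le (1-\eta\mu/8)^{t-\tgf}\,\mu h^2/32$; and (iii) $\norm{\Phi(x(t)) - \Phi(x(\tgf))}_2 \le c_1\eta^2(t-\tgf)$, where $c_1$ absorbs the $O(\eta^2)$ constant of \Cref{lem:nsam_decrease_one_step}. The base case $t=\tgf$ is immediate. In the inductive step, with $t < \tdec$ the gradient is large and $x(t) \in K^h$ by (i), so \Cref{lem:nsam_decrease_one_step} propagates (ii) and adds $\le c_1\eta^2$ to the $\Phi$-displacement, giving (iii). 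To recover (i) at $t+1$ I would combine the crude SAM-step bound $\norm{x(t+1)-x(t)}_2 \le \eta(\norm{\nabla L(x(t))}_2 + \zeta\rho) = O(\eta)$ (valid since $x(t) + \rho\,\ndl{x(t)} \in K^r$ for small $\rho$, where $\nabla L$ is $\zeta$-Lipschitz) with $\norm{x(t)-\Phi(x(t))}_2 \le \sqrt{2L(x(t))/\mu} \le h/4$ from (ii) and \Cref{lem:bounddl}, and with $\dist(\Phi(x(t+1)),K) \le \dist(\Phi(x(\tgf)),K) + c_1\eta^2 N \le h/2 + O(\eta\ln(1/\rho))$ from (iii); since $\eta\ln(1/\rho)$ is assumed small, the triangle inequality yields $\dist(x(t+1),K) \le h/4 + O(\eta) + h/2 + o(1) < h$.

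Given the induction, I would finish as follows. If $\tdec > \tgf + N$, then (i) and (ii) hold at $t=\tgf+N$, so $x(\tgf+N) \in K^h$ and $L(x(\tgf+N)) \le 8\mu\rho^2$, and then \Cref{lem:revertbounddl} forces $\norm{\nabla L(x(\tgf+N))}_2 \le \zeta\sqrt{2\cdot 8\mu\rho^2/\mu} = 4\zeta\rho$, contradicting minimality of $\tdec$; hence $\tdec \le \tgf+N = \tgf+O(\ln(1/\rho)/\eta)$, the first claimed bound. Since $\tdec \le \tgf+N$, the induction applies at $t=\tdec$: from (i) and \Cref{lem:bounddl}, $\norm{x(\tdec)-\Phi(x(\tdec))}_2 \le \norm{\nabla L(x(\tdec))}_2/\mu \le 4\zeta\rho/\mu = O(\rho)$, so $x(\tdec)$ lies in an $O(\rho)$-neighborhood of $\Gamma$; and from (iii), $\norm{\Phi(x(\tgf))-\Phi(x(\tdec))}_2 \le c_1\eta^2(\tdec-\tgf) \le c_1\eta^2 N = O(\eta\ln(1/\rho))$. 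The crux, and the only place requiring care, is the bootstrap in the inductive step: the estimate $\norm{x(t)-\Phi(x(t))}_2 \le h/4$ used to show $x(t+1) \in K^h$ presupposes $x(t) \in K^h$, and it is only because the loss contracts geometrically that the horizon is forced to be $N = O(\ln(1/\rho)/\eta)$, keeping the cumulative drift $c_1\eta^2 N = O(\eta\ln(1/\rho))$ of $\Phi(x(t))$ well below the slack $h/4$, so that the induction closes rather than diverges.
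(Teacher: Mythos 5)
Your proposal is correct and follows essentially the same route as the paper's proof: define $\tdec$ as the first hitting time of the small-gradient region, run a bootstrap induction maintaining geometric loss decay, the $O(\eta^2)$-per-step bound on the motion of $\Phi(x(t))$, and membership in $K^h$, then rule out $\tdec > \tgf + O(\ln(1/\rho)/\eta)$ by contradiction via \Cref{lem:revertbounddl}. If anything, your inductive step is written slightly more carefully than the paper's (you obtain $x(t+1)\in K^h$ from the crude step bound $\|x(t+1)-x(t)\|_2=O(\eta)$ rather than invoking \Cref{lem:bounddl} at $x(t+1)$ before its membership in $K^h$ is established), but the argument is the same.
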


\begin{proof}[Proof of~\Cref{lem:nsam_decrease}]

Choose $\tdec$ as the minimal $t \ge \tgf$ such that $\| \nabla L(x(\tdec)) \|_2 \le 4\zeta\rho$. Define $C = \lceil \ln_{1 - \frac{\eta\mu}{8}}(64  \rho^2/h^2) \rceil = O(\ln(1/\rho)/\eta)$.

We will first perform an induction on $t \le \min\{\tdec, \tgf +  C \}=\tgf +  O(\ln(1/\rho)/\eta)$ to show that 
\begin{align*}
    L(x(t)) &\le (1 - \eta \mu/8)^{t - \tgf} L(x(\tgf)) \\
    \|\Phi(x(t)) - \Phi(x(\tgf)) \| &= O(\eta^2 (t - \tgf) )
\end{align*}

For $t = \tgf$, the result holds trivially. Suppose the induction hypothesis holds for $t$. Then by~\ref{lem:bounddl} and Taylor Expansion,
\begin{align*}
    \|\Phi(x(t)) - x(t) \| &\le \sqrt{\frac{2L(x(\tgf))}{\mu}} \le h/4 \,.
\end{align*}

Then we have that
\begin{align*}
    \dist(K,x(t)) \le& \dist(K,\xgf) + \|\xgf - \Phi(\xgf) \|_2
    \\&+ \|\Phi(\xgf) - \Phi(x(t)) \| + \|\Phi(x(t)) - x(t) \| \\
    \le& 3h/4 + O(\eta^2 (t - \tgf)) = 3h/4 + O(\eta\ln(1/\rho)) \le h\,.
\end{align*}
That is $x(t) \in K^h$. Then as $t\le \tdec$, $\| \nabla L(x(t)) \|_2 \ge 4 \zeta \rho$. Then by~\Cref{lem:nsam_decrease_one_step}, we have that 
\begin{align*}
    L(x(t+1)) &\le (1 - \eta \mu/8) L(x(t)) \le (1 - \eta \mu/8)^{t + 1- \tgf} L(x(\tgf)) \,,\\
    \|\Phi(x(t+1)) - \Phi(x(\tgf)) \| &\le \|\Phi(x(t+1)) - \Phi(x(t)) \| + \|\Phi(x(t)) - \Phi(x(\tgf)) \| \\
    &\le O(\eta^2 (t - \tgf) )\,,
\end{align*}
which completes the induction.

Now if $\tdec \ge \tgf +  C = \tgf + \Omega(\ln(1/\rho)/\eta)$, 
As the result of the induction, we have that
\begin{align*}
    L(x(\tgf + C)) &\le (1 - \frac{\eta\mu}{8})^{C} L(x(\tgf)) \le \frac{64  \rho^2}{h^2 }L(x(\tgf))  \le8 \rho^2 \mu \,.\\
\end{align*}
By~\Cref{lem:revertbounddl}, we have that 
    $\|\nabla L(x(\tgf + C))\|_2 \le \zeta \sqrt{\frac{2 L(x(\tgf + C))}{\mu}} = 4 \zeta\rho$,
which leads to a contradiction.

Hence we have that $\tdec \le \tgf +  C = \tgf + O(\ln(1/\rho)/\eta)$. By induction, we have that
\begin{align*}
    \| \Phi(\xdec) - \Phi(\xgf) \| = O(\eta^2(\tdec - \tgf)) = O(\eta\ln(1/\rho))\,.
\end{align*}
This completes the proof.
\end{proof}

\subsubsection{Entering Invariant Set}
\label{sub:nsam_phase_1_3}
We first introduce some notations that is required for the proof in this and following subsection.

Define 
\begin{align*}
    \hx &= x - \Phi(x)\,,\\
    A(x) &= \dlt{\Phi(x)}\,,\\
    \tx &= A(x) \hx\,,\\
    \tx_j &= \langle \tx, v_j(x) \rangle\,, \\
    \Pjd(x) &= \sum_{i=j}^M  v_i(x)v_i^T(x)\,.
\end{align*}
Note $\tilde x\approx \nabla L(x)$ for $x$ near the manifold $\Gamma$. We also use $\tx(t),\,A(t)$ and $\hx(t)$ to denote $\tilde{x(t)},\,A(x(t))$ and $\hat{x(t)}$.

Recall the original definition of $R_j(x)$ is 
\begin{align*}
    R_j(x) = \sqrt{\sum_{i=j}^M \lambda_i^2(x) \langle v_i(x),x - \Phi(x) \rangle^2} - \eta \rho \lambda_j^2(x)\,,
\end{align*}
Based on the above notions, we can rephrase the notion $R$ as \begin{align*}
    R_j(x) = \| \Pjd(x) \tilde x \| - \eta \rho \lambda_j^2(x)\,.
\end{align*}
We additionally define the approximate invariant set $\sI_j$ as 
\begin{align*}
    \sI_j = \{\| \Pjd(x) \tilde x \|  \le \eta \rho \lambda_j^2(x) + O(\eta \rho^2)\}\,.
\end{align*}

\begin{lemma}
\label{lem:nsam_tilde_x_bound}
Assuming $t$ satisfy that $x(t) \in K^h$, then we have that
\begin{align*}
    \frac{\mu}{2} \|x(t) - \Phi(x(t))\|\le  \|\tx(t)\| \le \lspectraltwo \|x(t) - \Phi(x(t))\|
\end{align*}
\end{lemma}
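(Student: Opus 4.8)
The plan is to prove the two inequalities separately; both are consequences of the near-manifold geometry from \citet{arora2022understanding} recalled in Lemmas~\ref{lem:mu-pl}, \ref{lem:smallerzone}, \ref{lem:bounddl} and \ref{lem:directiondl}, applied with $K=\{X(t)\mid 0\le t\le T_3\}$. Recall $\tx(t)=\nabla^2 L(\Phi(x(t)))\,(x(t)-\Phi(x(t)))$ and write $\hx(t)\coloneqq x(t)-\Phi(x(t))$. For the upper bound, since $x(t)\in K^h$, Lemma~\ref{lem:smallerzone}(2) gives $\Phi(x(t))\in K^{r/2}\subseteq K^r$, hence $\|\nabla^2 L(\Phi(x(t)))\|_2\le\lspectraltwo$ by definition of $\lspectraltwo(K)$, and the operator-norm estimate yields $\|\tx(t)\|\le\lspectraltwo\|\hx(t)\|$ immediately.

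For the lower bound, the key point is that $\hx(t)$ lies almost entirely in the column span of $\nabla^2 L(\Phi(x(t)))$. Since $\Phi(x(t))\in\Gamma$, Lemma~\ref{lem:property_of_Phi_on_manifold} gives $\projt[\Phi(x(t))]\nabla^2 L(\Phi(x(t)))=0$, and transposing (both matrices are symmetric) gives $\nabla^2 L(\Phi(x(t)))\,\projt[\Phi(x(t))]=0$; together with the maximal Hessian-rank assumption (Assumption~\ref{assump:smoothness}) this shows the tangent space of $\Gamma$ at $\Phi(x(t))$ is exactly $\ker\nabla^2 L(\Phi(x(t)))$, so $\projn[\Phi(x(t))]$ projects onto its column span. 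Hence $\tx(t)=\nabla^2 L(\Phi(x(t)))\,\projn[\Phi(x(t))]\hx(t)$, and because the smallest nonzero eigenvalue satisfies $\lambda_M(\nabla^2 L(\Phi(x(t))))\ge\mu$ on $K^r$ (Lemma~\ref{lem:mu-pl}(5)), we get $\|\tx(t)\|\ge\mu\,\|\projn[\Phi(x(t))]\hx(t)\|$. Finally, Lemma~\ref{lem:directiondl} bounds the tangential component, $\|\projt[\Phi(x(t))]\hx(t)\|\le\frac{\lspectraltwo\lspectralthree}{4\mu^2}\|\hx(t)\|^2$, so $\|\projn[\Phi(x(t))]\hx(t)\|\ge\|\hx(t)\|\bigl(1-\frac{\lspectraltwo\lspectralthree}{4\mu^2}\|\hx(t)\|\bigr)\ge\tfrac12\|\hx(t)\|$ once $\|\hx(t)\|$ is small enough, which holds on $K^h$ provided $h=h(K)$ is taken a constant factor smaller than in Lemma~\ref{lem:smallerzone}(3) (the conclusions of Lemma~\ref{lem:smallerzone} are preserved under shrinking $h$). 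Combining, $\|\tx(t)\|\ge\frac{\mu}{2}\|x(t)-\Phi(x(t))\|$.

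An essentially equivalent route avoids the explicit normal/tangent split: write $\tx(t)=\dlt{\Phi(x(t))}\hx(t)$, use Lemma~\ref{lem:directiondl} (second bullet) to get $\|\tx(t)-\nabla L(x(t))\|\le\frac{\lspectralthree}{2}\|\hx(t)\|^2$, and use Lemma~\ref{lem:bounddl} in the form $\|\nabla L(x(t))\|\ge\mu\|\hx(t)\|$, so that the triangle inequality gives the same conclusion. There is no real obstacle here; the only point requiring care is that the second-order error terms (proportional to $\|\hx(t)\|^2$) must be dominated by the linear term $\mu\|\hx(t)\|$, which is exactly what forces $h=h(K)$ to be chosen sufficiently small in terms of the problem constants $\mu,\lspectraltwo,\lspectralthree$ attached to $K$.
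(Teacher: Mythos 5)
Your proof is correct and follows essentially the same route as the paper's: the operator-norm estimate for the upper bound, and for the lower bound the combination of $\lambda_M \ge \mu$ acting on the normal component of $x(t)-\Phi(x(t))$ with the quadratic bound on the tangential component from \Cref{lem:directiondl}. Your alternative route via $\|\nabla L(x(t))\|\ge \mu\|x(t)-\Phi(x(t))\|$ and the second bullet of \Cref{lem:directiondl} is equally valid, and your remark that $h$ must be shrunk by a constant factor so that the quadratic error is dominated correctly addresses a detail the paper's own proof glosses over.
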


\begin{proof}[Proof of~\Cref{lem:nsam_tilde_x_bound}]
First by \Cref{lem:directiondl}, $\Phi(x(t)) \in K^r$, hence
\begin{align*}
    \| \tx(t) \| &= \| \nabla^2 L(\Phi(x(t))) (x(t) - \Phi(x(t))) \| \le \lspectraltwo \| x(t) - \Phi(x(t)) \|\,.
\end{align*}
Also
\begin{align*}
    \| \tx(t) \| &= \| \nabla^2 L(\Phi(x(t))) (x(t) - \Phi(x(t))) \| \ge \mu \|\projt[\Phi(x(t))](x(t) - \Phi(x(t))) \|\,.
\end{align*}

By \Cref{lem:directiondl} and \Cref{lem:smallerzone}, we have
\begin{align*}
    \|x(t) - \Phi(x(t))\| &\le \|\projt[\Phi(x(t))] (x(t) - \Phi(x(t))) \| + \|\projn[\Phi(x(t))](x(t) - \Phi(x(t)))\| \\
    &\le \frac{\lspectraltwo \lspectralthree}{4\mu^2} \|x(t) - \Phi(x(t))\|^2 + \frac{1}{\mu}  \|\tx(t)\| \\&\le \frac{1}{2}\|x(t) - \Phi(x(t))\| + \frac{1}{\mu}  \|\tx(t)\|.
\end{align*}
Hence $\|x(t) - \Phi(x(t))\| \le \frac{2}{\mu}\|\tx(t)\|$.
\end{proof}

\begin{lemma}
\label{lem:nsam_bounded_mov}
Assuming $t$ satisfy that $x(t) \in K^h$ and $\| \tilde x(t)\|_2 = O(\rho)$, then we have that
\begin{align*}
    \|\Phi(x(t+1)) - \Phi(x(t)) \| = O(\eta \rho^2)\,.
\end{align*}
\end{lemma}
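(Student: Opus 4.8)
\textbf{Proof proposal for Lemma~\ref{lem:nsam_bounded_mov}.} The plan is to reduce the statement to a direct application of \Cref{lem:boundedphi}, after first upgrading the hypothesis $\|\tilde x(t)\|_2 = O(\rho)$ into a bound on $\|x(t) - \Phi(x(t))\|_2$. First I would invoke \Cref{lem:nsam_tilde_x_bound}: since $x(t)\in K^h$, that lemma gives $\frac{\mu}{2}\|x(t)-\Phi(x(t))\|_2 \le \|\tilde x(t)\|_2$, and combined with the assumption $\|\tilde x(t)\|_2 = O(\rho)$ this yields $\|x(t) - \Phi(x(t))\|_2 = O(\rho)$. (Equivalently, one could route through \Cref{lem:revertbounddl} to see $\|\nabla L(x(t))\|_2 = O(\rho)$, but the displacement bound is what \Cref{lem:boundedphi} directly consumes.)

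Next, observe that $x(t+1) = x(t) - \eta \nabla L\bigl(x(t) + \rho\tfrac{\nabla L(x(t))}{\|\nabla L(x(t))\|_2}\bigr)$ is exactly the update rule for which \Cref{lem:boundedphi} is stated, with $x = x(t)$ and $y = x(t+1)$; since $x(t)\in K^h$ the hypotheses are met. Applying the second (refined) inequality of \Cref{lem:boundedphi},
\begin{align*}
\|\Phi(x(t)) - \Phi(x(t+1))\|_2 &\le \lspectraltwo \phispectraltwo \eta\rho \|x(t)-\Phi(x(t))\|_2 + \lspectraltwo^2 \phispectraltwo \eta^2 \|x(t)-\Phi(x(t))\|_2^2 \\
&\quad + \lspectralthree \eta\rho^2 + \phispectraltwo \lspectraltwo^2 \eta^2\rho^2 .
\end{align*}
Now I would substitute $\|x(t)-\Phi(x(t))\|_2 = O(\rho)$ term by term: the first term becomes $O(\eta\rho^2)$, the second $O(\eta^2\rho^2)$, the third $O(\eta\rho^2)$, and the fourth $O(\eta^2\rho^2)$. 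Since $\eta$ is treated as small (in particular $\eta = O(1)$), the $\eta^2\rho^2$ contributions are absorbed into $O(\eta\rho^2)$, giving $\|\Phi(x(t)) - \Phi(x(t+1))\|_2 = O(\eta\rho^2)$ as claimed.

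I do not anticipate a genuine obstacle here: the lemma is essentially a bookkeeping corollary of \Cref{lem:nsam_tilde_x_bound} and \Cref{lem:boundedphi}. The only point requiring a moment's care is that \Cref{lem:boundedphi} is only valid "for sufficiently small $\eta$ and $\rho$" (so that the relevant segments stay inside $K^r$), which is already part of the ambient smallness assumptions on $\eta,\rho$ in \Cref{thm:nsamphase1}; and that one must make sure the quadratic-in-$\|x(t)-\Phi(x(t))\|_2$ term is also controlled, which is immediate once the $O(\rho)$ displacement bound is in hand.
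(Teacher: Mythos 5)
Your proposal matches the paper's proof exactly: the paper also first applies \Cref{lem:nsam_tilde_x_bound} to convert $\|\tilde x(t)\|_2 = O(\rho)$ into $\|x(t)-\Phi(x(t))\|_2 = O(\rho)$, then plugs this into the second inequality of \Cref{lem:boundedphi} and absorbs the $\eta^2\rho^2$ terms into $O(\eta\rho^2)$. No gaps; your extra remarks about the segment staying in $K^r$ and the quadratic term are just the bookkeeping the paper leaves implicit.
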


\begin{proof}[Proof of~\Cref{lem:nsam_bounded_mov}]

By~\Cref{lem:nsam_tilde_x_bound}, we have $\|x(t) - \Phi(x(t))\| = O(\rho)$. By~\Cref{lem:boundedphi}, we have that
\begin{align*}
    \| \Phi(x(t+1)) - \Phi(x(t)) \| &\le \lspectraltwo \phispectraltwo \eta \rho \|x - \Phi(x)\|_2 + \lspectraltwo^2 \phispectraltwo \eta^2\|x - \Phi(x)\|_2^2  + \lspectralthree \eta \rho^2  + \phispectraltwo  \lspectraltwo^2 \eta^2 \rho^2 \\
    &\le O(\eta\rho^2).
\end{align*}
\end{proof}

\begin{lemma}
\label{lem:localupdate}
Assuming $t$ satisfy $x(t) \in K^{h/2}$ and $\|x(t) - \Phi(x(t))\|_2 = O(\rho)$, define $\xloc$  as $\xloc(t) = x(t)$ and for $\tau \ge t$,
\begin{align*}
    \xloc(\tau +1) &= \xloc(\tau) - \eta \nabla^2 L(\Phi(x(t))) (\xloc(\tau) - \Phi(x(t))) \\
    -& \eta \rho\nabla^2 L(\Phi(x(t)))  \frac{ \nabla^2 L(\Phi(x(t))) (\xloc(\tau) - \Phi(x(t)))}{ \| \nabla^2 L(\Phi(x(t))) (\xloc(\tau) - \Phi(x(t)))\|_2}
\end{align*}

Then 
\begin{align*}
   \| \xloc(t + 1) - x(t + 1) \|_2 = O(\eta\rho^2)
\end{align*}

and further if  $\| x(t + 1) - \Phi(x(t + 1)) \|_2 = \Omega(\eta \rho)$, then 
\begin{align*}
\| \xloc(t + 2) - x(t + 2) \|_2 = O(\eta\rho^2).
\end{align*}

\end{lemma}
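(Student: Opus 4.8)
The plan is to recognize the iteration $\xloc$ as ordinary SAM (\Cref{eq:sam}) applied not to $L$ but to its frozen quadratic model at the base point $p \triangleq \Phi(x(t))$, namely $\widetilde L(z) \triangleq \frac12 (z-p)^\top A (z-p)$ with $A \triangleq \nabla^2 L(p)$; indeed $\nabla \widetilde L(z) = A(z-p)$ and $\nabla^2\widetilde L \equiv A$, so the displayed recursion for $\xloc$ is exactly the SAM update on $\widetilde L$ started from $\xloc(t)=x(t)$. The argument then reduces to comparing the two SAM maps one step at a time, using that near $\Gamma$ the true loss matches $\widetilde L$ in the quantities entering the update. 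The inputs I would use are: $\nabla L(p)=0$ with $\nabla^3 L$ bounded on $K^r$, so $\nabla L(z)-A(z-p)=O(\norm{z-p}^2)$ for $z$ near $p$ (the relevant segments lying in $K^r$ by \Cref{lem:smallerzone}); \Cref{lem:directiondl}, which says the normalized true gradient at $z$ agrees with $A(z-\Phi(z))/\norm{A(z-\Phi(z))}$ up to $O(\norm{z-\Phi(z)})$; and the displacement estimates \Cref{lem:nsam_tilde_x_bound}, \Cref{lem:boundedphi}, \Cref{lem:nsam_bounded_mov}, \Cref{lem:revertbounddl}.

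For the one-step claim, since $\xloc(t)=x(t)$ I write $\xloc(t+1)-x(t+1)=\eta\big(\nabla L(y)-A(y'-p)\big)$, where $y=x(t)+\rho\,\ndl{x(t)}$ and $y'=x(t)+\rho\,A(x(t)-p)/\norm{A(x(t)-p)}$ are the true and model ascent points, and split $\nabla L(y)-A(y'-p)=\big(\nabla L(y)-A(y-p)\big)+A(y-y')$. The first bracket is $O(\norm{y-p}^2)=O(\rho^2)$ by Taylor expansion about $p$, since $\norm{y-p}\le\norm{x(t)-p}+\rho=O(\rho)$. The second is $\norm{A(y-y')}=O(\norm{y-y'})=O\big(\rho\,\norm{\ndl{x(t)}-A(x(t)-p)/\norm{A(x(t)-p)}}\big)$, and the direction mismatch inside is $O(\norm{x(t)-\Phi(x(t))})=O(\rho)$ by \Cref{lem:directiondl} (with $\Phi(x(t))=p$), so this term is $O(\rho^2)$ too. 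Hence $\norm{\xloc(t+1)-x(t+1)}=O(\eta\rho^2)$.

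For the two-step claim, $\xloc(t+1)$ and $x(t+1)$ now differ by $O(\eta\rho^2)$, so I propagate this through one more SAM step: $\xloc(t+2)-x(t+2)=\big(\xloc(t+1)-x(t+1)\big)-\eta\big(A(y_1'-p)-\nabla L(y_1)\big)$, with $y_1,y_1'$ the new ascent points. The first summand is already $O(\eta\rho^2)$, and I bound $A(y_1'-p)-\nabla L(y_1)=A(y_1'-y_1)-\big(\nabla L(y_1)-A(y_1-p)\big)$ by $O(\rho^2)$ exactly as before, using $\norm{y_1-p}=O(\rho)$ — which follows from $\norm{x(t+1)-x(t)}=O(\eta\rho)$ (\Cref{lem:boundedphi}, with $\norm{\nabla L(x(t))}=O(\rho)$ by \Cref{lem:revertbounddl}) and $\norm{\Phi(x(t+1))-p}=O(\eta\rho^2)$ (\Cref{lem:nsam_bounded_mov}) — except that $\norm{y_1-y_1'}$ now also has to absorb the $O(\eta\rho^2)$ base-point discrepancy $\xloc(t+1)$ vs.\ $x(t+1)$ passing through the normalization $u\mapsto u/\norm{u}$ in the ascent step. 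This is the main obstacle: that map is only $1/\norm{u}$-Lipschitz, so I need a lower bound on $\norm{\nabla\widetilde L(x(t+1))}=\norm{A(x(t+1)-p)}$, and this is exactly where the hypothesis $\norm{x(t+1)-\Phi(x(t+1))}_2=\Omega(\eta\rho)$ is used: via \Cref{lem:nsam_tilde_x_bound} it gives $\norm{A(x(t+1)-\Phi(x(t+1)))}=\Omega(\eta\rho)$, and the $O(\eta\rho^2)$ shift of base point from $\Phi(x(t+1))$ to $p$ is negligible, so the amplified error is $O(\eta\rho^2/(\eta\rho))=O(\rho)$, which after the factor $\rho$ from the ascent radius and applying $A$ returns to $O(\rho^2)$. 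Thus $\norm{\xloc(t+2)-x(t+2)}=O(\eta\rho^2)$, and every other estimate is routine bounded Taylor expansion on $K^r$.
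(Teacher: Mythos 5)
Your proof is correct and follows essentially the same route as the paper's: both compare the true SAM step with the step of SAM on the quadratic model at $\Phi(x(t))$ via Taylor expansion around $\Phi(x(t))$ together with \Cref{lem:directiondl}, and both isolate the same key difficulty — stability of the normalization $u\mapsto u/\norm{u}_2$ — resolving it with the $\Omega(\eta\rho)$ lower bound on $\norm{\nabla^2 L(\Phi(x(t+1)))(x(t+1)-\Phi(x(t+1)))}_2$ so that the $O(\eta\rho^2)$ numerator discrepancy only perturbs the direction by $O(\rho)$. Your explicit framing of $\xloc$ as SAM on the frozen quadratic loss is a clean way of organizing the identical estimates.
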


\begin{proof}[Proof of~\Cref{lem:localupdate}]

By $\|x(t) - \Phi(x(t))\| = O(\rho)$, $x(t) \in K^{h/2}$, and~\Cref{lem:boundedphi}, we have that $\|x(t+1) - x(t)\| = O(\eta\rho)$ and hence $x(t + 1) \in K^{3h/4}$. This also implies $\|x(t+1) - \Phi(x(t+1))\|_2 = O(\rho)$. Similarly we have $x(t+2) \in K^{3h/4}$.

For $k \in \{1,2\}$, by Taylor Expansion,
\begin{align*}
    x(t+k+1) =& x(t+k) - \eta \nabla L(x(t+k)) - \eta \rho \nabla^2 L(x(t+k)) \ndl{x(t+k)} + O(\eta\rho^2) \\
    =& x(t+k) - \eta \nabla^2 L(\Phi(x(t+k))) (x(t+k) - \Phi(x(t+k))) + O(\eta \rho^2)\\
    &- \eta\rho \nabla^2 L(\Phi(x(t+k))) \ndl{x(t+k)} + O(\eta\rho^2)  \\
    =&  x(t+k) - \eta \nabla^2 L(\Phi(x(t+k))) (x(t+k) - \Phi(x(t+k)))\\
    &- \eta\rho \nabla^2 L(\Phi(x(t+k))) \ndl{x(t+k)} + O(\eta\rho^2).
\end{align*}

Now by~\Cref{lem:nsam_bounded_mov,lem:nsam_tilde_x_bound}, $\|\Phi(x(t+k)) - \Phi(x(t))\|_2 = O(\eta\rho^2)$,
\begin{align}
\label{eq:local_update_1}
    x(t+ k + 1) =& x(t + k) - \eta \nabla^2 L(\Phi(x)) (x(t + k) - \Phi(x(t))) \nonumber \\& - \eta\rho \nabla^2 L(\Phi(x)) \ndl{x(t + k)} + O(\eta\rho^2).
\end{align}

Now we first prove the first claim, we have for $k = 0$, $\|x(t + k) - \xloc(t + k)\|_2 = 0$, by \Cref{lem:directiondl,eq:local_update_1},
\begin{align*}
    x(t + 1) =& x(t) - \eta \nabla^2 L(\Phi(x)) (x(t) - \Phi(x))
    - \eta\rho \frac{ \nabla^2 L(\Phi(x)) (x(t) - \Phi(x))}{ \| \nabla^2 L(\Phi(x)) (x(t) - \Phi(x))\|_2} + O(\eta\rho^2) \\=& \xloc(t + 1) + O(\eta\rho^2).
\end{align*}

The second claim is slightly more complex. By the first claim and \Cref{lem:directiondl}, we have that
\begin{align}
\label{eq:local_2}
    \ndl{x(t + 1)} =& \frac{ \nabla^2 L(\Phi(x(t + 1))) (x(t + 1) - \Phi(x(t + 1)))}{ \| \nabla^2 L(\Phi(x(t + 1))) (x(t + 1) - \Phi(x(t + 1)))\|_2} \nonumber\\ +& O(\|x(t + 1) - \Phi(x(t + 1)) \|_2).
\end{align}

We first show $\| \nabla^2 L(\Phi(x(t + 1))) (x(t + 1) - \Phi(x(t + 1)))\|_2$ is of order $\| x(t + 1) - \Phi(x(t + 1)) \|_2 = \Omega(\rho^2)$ to show that the normalized gradient term is stable with respect to small perturbation,
\begin{align*}
     &\| \nabla^2 L(\Phi(x(t+1))) (x(t+1) - \Phi(x(t+1))) \|_2 \\ \ge&\| \projn[\Phi(x(t+1))]\nabla^2 L(\Phi(x(t+1))) (x(t+1) - \Phi(x(t+1)))\|_2 \\
    \ge&\| \nabla^2 L(\Phi(x(t+1)))\projn[\Phi(x(t+1))] (x(t+1) - \Phi(x(t+1)))\|_2 \\
    \ge&\mu \|\projn[\Phi(x(t+1))] (x(t+1) - \Phi(x(t+1))) \|_2 \\
    \ge&\mu (\| (x(t+1) - \Phi(x(t+1))) \|_2 -  \|\projt[\Phi(x(t+1))] (x(t+1) - \Phi(x(t+1))) \|_2) \\
    \ge&\mu (\| (x(t+1) - \Phi(x(t+1))) \|_2 - \frac{\lspectralthree\lspectraltwo}{4\mu^2} \|x(t+1) - \Phi(x(t+1)) \|_2^2) \\
    \ge&\frac{\mu}{2}\| (x(t+1) - \Phi(x(t+1))) \|_2 = \Omega(\eta\rho).
\end{align*}

Based on \Cref{lem:boundedphi}, we have
\begin{align*}
    \Phi(x(t+1)) - \Phi(x(t)) = O(\eta\rho^2).
\end{align*}

We further have by the first claim and~\Cref{lem:nsam_bounded_mov},
\begin{align*}
   &\nabla^2 L(\Phi(x(t+1))) (x(t+1) - \Phi(x(t+1))) -  \nabla^2 L(\Phi(x)) (\xloc(t + 1) - \Phi(x(t))) \\ =& 
   \nabla^2 L(\Phi(x)) (x(t+1) - \Phi(x(t+1))) -  \nabla^2 L(\Phi(x)) (\xloc(t + 1) - \Phi(x(t))\\
   &+ O(\|x(t+1) - \Phi(x(t+1)) \|_2 \|\Phi(x(t+1)) - \Phi(x) \|_2) \\
   =& \nabla^2 L(\Phi(x)) (x(t+1) - \Phi(x(t+1))) -  \nabla^2 L(\Phi(x)) (\xloc(t + 1) - \Phi(x(t)))  + O(\eta \rho^3) \\
   =&\nabla^2 L(\Phi(x)) (x(t+1) - \xloc(t + 1)) + \nabla^2 L(\Phi(x)) (\Phi(x(t+1)) - \Phi(x(t))) + O(\eta \rho^3) \\
   =& O(\eta\rho^2)
\end{align*}

This implies 
\begin{align*}
    \frac{ \nabla^2 L(\Phi(x(t+1))) (x(t+1) - \Phi(x(t+1)))}{ \| \nabla^2 L(\Phi(x(t+1))) (x(t+1) - \Phi(x(t+1)))\|_2} = \frac{ \nabla^2 L(\Phi(x)) (\xloc(t + 1) - \Phi(x))}{ \| \nabla^2 L(\Phi(x)) (\xloc(t + 1) - \Phi(x))\|_2} + O(\rho)
\end{align*}

Combining with~\Cref{eq:local_2}, we have 
\begin{align*}
    \ndl{x(t + 1)} = \frac{ \nabla^2 L(\Phi(x)) (\xloc(t + 1) - \Phi(x))}{ \| \nabla^2 L(\Phi(x)) (\xloc(t + 1) - \Phi(x))\|_2} + O(\rho)
\end{align*}

By the above approximation and~\Cref{eq:local_update_1},
\begin{align*}
    x(t+2) &= \xloc(t + 2) + O(\eta\rho^2)\,.
\end{align*}
\end{proof}

\begin{lemma}
\label{lem:tilde_x_update}
Assuming $t$ satisfy that $x(t) \in K^{3h/4}$ and $\| \tilde x(t)\|_2 = O(\rho)$, then we have that
\begin{align*}
    \| \tx(t+1) -  \tx(t) + \eta A(t)\tx(t) + \eta\rho A^2(t)\frac{\tx(t)}{\|\tx(t) \|} \|_2 = O(\eta\rho^2)\,.
\end{align*}
\end{lemma}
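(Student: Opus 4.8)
The plan is to show that, in the local regime near $\Gamma$, the linearized gradient $\tx(t)=A(t)\hx(t)$ evolves according to the quadratic‑model SAM update \Cref{eq:quadupdate} attached to the frozen base point $\Phi(x(t))$, up to an additive error of order $\eta\rho^2$; the stated identity is exactly this, obtained by left‑multiplying the update of $\hx(t)$ by $A(t)$. First I would collect size bounds. From $x(t)\in K^{3h/4}\subseteq K^h$ and $\|\tx(t)\|_2=O(\rho)$, \Cref{lem:nsam_tilde_x_bound} gives $\|\hx(t)\|_2=O(\rho)$, hence $L(x(t))=O(\rho^2)$ and $\|\nabla L(x(t))\|_2=O(\rho)$. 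By \Cref{lem:boundedphi}, $\|x(t+1)-x(t)\|_2=O(\eta\rho)$, so $x(t+1)\in K^h$ for $\eta,\rho$ small; by \Cref{lem:nsam_bounded_mov}, $\|\Phi(x(t+1))-\Phi(x(t))\|_2=O(\eta\rho^2)$, and hence, using the bound on $\nabla^3 L$ over $K^r$, $\|A(t+1)-A(t)\|_2=O(\eta\rho^2)$ and $\|\hx(t+1)\|_2=O(\rho)$.

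Next I would Taylor‑expand the SAM step around $x(t)$ and re‑expand around $\Phi(x(t))$. By \Cref{eq:sam_taylor}, $x(t+1)=x(t)-\eta\nabla L(x(t))-\eta\rho\,\nabla^2 L(x(t))\,\ndl{x(t)}+O(\eta\rho^2)$. Since $\nabla L(\Phi(x(t)))=0$ and $\|\hx(t)\|_2=O(\rho)$, Taylor expansion gives $\nabla L(x(t))=\tx(t)+O(\rho^2)$ and $\nabla^2 L(x(t))=A(t)+O(\rho)$, while \Cref{lem:directiondl} gives $\ndl{x(t)}=\frac{\tx(t)}{\|\tx(t)\|_2}+O(\rho)$. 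Substituting, and using $\Phi(x(t+1))-\Phi(x(t))=O(\eta\rho^2)$ when passing from $x(t+1)-\Phi(x(t))$ to $\hx(t+1)$, yields
\begin{align*}
\hx(t+1)=\hx(t)-\eta A(t)\hx(t)-\eta\rho\,A(t)\frac{\tx(t)}{\|\tx(t)\|_2}+O(\eta\rho^2).
\end{align*}
Left‑multiplying by $A(t)$ and using $A(t)\hx(t)=\tx(t)$, $A^2(t)\hx(t)=A(t)\tx(t)$, and boundedness of $A(t)$ on $K^r$,
\begin{align*}
A(t)\hx(t+1)=\tx(t)-\eta A(t)\tx(t)-\eta\rho\,A^2(t)\frac{\tx(t)}{\|\tx(t)\|_2}+O(\eta\rho^2).
\end{align*}
Finally $\tx(t+1)=A(t+1)\hx(t+1)=A(t)\hx(t+1)+(A(t+1)-A(t))\hx(t+1)=A(t)\hx(t+1)+O(\eta\rho^2)$, since $\|A(t+1)-A(t)\|_2\,\|\hx(t+1)\|_2=O(\eta\rho^3)$; rearranging gives the claim. (This is precisely the first bound of \Cref{lem:localupdate} left‑multiplied by $A(t)$, so one could alternatively invoke that lemma directly.)

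The main obstacle is bookkeeping at the $\eta\rho^2$ threshold, and in particular the normalized‑gradient term: the error in replacing $\ndl{x(t)}$ by $\tx(t)/\|\tx(t)\|_2$ is only $O(\|\hx(t)\|_2)=O(\rho)$ by \Cref{lem:directiondl}, but it is multiplied by the prefactor $\eta\rho$ and so lands exactly at $O(\eta\rho^2)$ — there is no slack, so each approximation ($\nabla^2 L(x(t))\approx A(t)$, $A(t+1)\approx A(t)$, $\Phi(x(t+1))\approx\Phi(x(t))$) must be controlled to precisely this order (each is harmless only because it is multiplied by $\eta\rho$ or by $\|\hx\|_2=O(\rho)$). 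Note the statement tacitly assumes $\tx(t)\neq 0$ (equivalently $\hx(t)\neq 0$), as $\tx(t)/\|\tx(t)\|_2$ appears; in that case \Cref{lem:directiondl} also ensures $\|\nabla L(x(t))\|_2$ and $\|\tx(t)\|_2$ are comparable, so the direction approximation above is legitimate.
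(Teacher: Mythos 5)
Your proposal is correct and follows essentially the same route as the paper: the paper's proof simply invokes the first bound of \Cref{lem:localupdate} (whose content is the Taylor expansion you redo), left-multiplies by $A(t)$, and then controls $\tx(t+1)-A(t)(x(t+1)-\Phi(x(t)))$ via the $O(\eta\rho^2)$ bounds on $A(t+1)-A(t)$ and $\Phi(x(t+1))-\Phi(x(t))$, exactly as you do. Your closing remarks on the tightness of the $\eta\rho\cdot O(\rho)$ error from the normalized-gradient direction match the actual bottleneck of the argument.
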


\begin{proof}[Proof of~\Cref{lem:tilde_x_update}]
By \Cref{lem:localupdate}, we know
\begin{align*}
    \| x(t+1) - x(t) + \eta \tx(t) + \eta\rho A(t)\frac{\tx(t)}{\|\tx(t) \|} \| \le O(\eta\rho^2)\,.
\end{align*}

This implies 
\begin{align}
\label{eq:decrease_2_1}
 &\| A(t)(x(t+1) - \Phi(x(t))) - \tx(t) + \eta A(t)\tx(t) + \eta\rho A^2(t)\frac{\tx(t)}{\|\tx(t) \|} \| \nonumber \\
=&\| A(t)(x(t+1) - x(t) + \eta \tx(t) + \eta\rho A(t)\frac{\tx(t)}{\|\tx(t) \|} )\| \nonumber \\
\le &\lspectraltwo \| x(t+1) - x(t) + \eta \tx(t) + \eta\rho A(t)\frac{\tx(t)}{\|\tx(t) \|}\| =  O(\eta\rho^2)\,.
\end{align}
We also have 
\begin{align*}
    &\tx(t+1) - A(t)(x(t+1) - \Phi(x(t)))\\=& (A(t+1) - A(t)) (x(t+1) - \Phi(x(t+1))) - A(t) (\Phi(x(t)) - \Phi(x(t+1)))\\ =& O(\eta\rho^2)\,.
\end{align*}

Plugging in~\Cref{eq:decrease_2_1}, we have that
\begin{align*}
    \| \tx(t+1) -  \tx(t) + \eta A(t)\tx(t) + \eta\rho A^2(t)\frac{\tx(t)}{\|\tx(t) \|} \|_2 = O(\eta\rho^2)\,.
\end{align*}
\end{proof}

\begin{lemma}
\label{lem:nsam_decrease_2}
Under condition of~\Cref{thm:nsamphase1}, assuming there exists $\tdec$ such that $x(\tdec) \in K^{h/2}$ and $\| \nabla L(x(\tdec))\| \le 4 \zeta \rho$, then there exists $\tdecs = \tdec + O(\ln(1/\eta)/\eta)$, such that $x(\tdecs)$ is in $\sI_1 \cap K^{3h/4}$.

Furthermore, for any $t$ satisfying $\tdecs \le t \le \tdecs + \Theta(\ln(1/\eta)/\eta)$, we have that $x(t) \in \sI_1 \cap K^{3h/4}$ and $\| \Phi(x(t)) - \Phi(x(\tdec))\| = O(\rho^2 \ln(1/\eta))$.
\end{lemma}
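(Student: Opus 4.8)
The plan is to localise the dynamics around $\Phi(x(\tdec))$, treat the Hessian there as frozen, and import the ``entering the invariant set'' argument of the quadratic analysis (\Cref{lem:quad_prepare}, built on \Cref{lem:invariant_set} and \Cref{lem:preparequad}) into the perturbed iteration satisfied by $\tx(t)=\nabla^2 L(\Phi(x(t)))(x(t)-\Phi(x(t)))$, which is $\approx\nabla L(x(t))$ near $\Gamma$.

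First I would record the base case: by \Cref{lem:bounddl} and the hypothesis $\|\nabla L(x(\tdec))\|_2\le 4\zeta\rho$ we get $\|x(\tdec)-\Phi(x(\tdec))\|_2=O(\rho)$, hence $\|\tx(\tdec)\|_2=O(\rho)$ by \Cref{lem:nsam_tilde_x_bound}, while $x(\tdec)\in K^{h/2}$ is assumed. I would then run a single induction on $t$ over a window of length $T=O(\ln(1/\eta)/\eta)$ starting at $\tdec$, maintaining the invariant ``$x(t)\in K^{3h/4}$ and $\|\tx(t)\|_2=O(\rho)$'', which is exactly what \Cref{lem:tilde_x_update} requires. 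Under that invariant,
\begin{align*}
\tx(t+1)=\tx(t)-\eta A(t)\tx(t)-\eta\rho A^2(t)\frac{\tx(t)}{\|\tx(t)\|_2}+O(\eta\rho^2),\qquad A(t)\coloneqq\nabla^2 L(\Phi(x(t))),
\end{align*}
and $\|\Phi(x(t+1))-\Phi(x(t))\|_2=O(\eta\rho^2)$ by \Cref{lem:nsam_bounded_mov}. Summing the latter over the window gives $\|\Phi(x(t))-\Phi(x(\tdec))\|_2=O(\eta\rho^2\cdot T)=O(\rho^2\ln(1/\eta))\ll h$, which keeps $x(t)$ inside $K^{3h/4}$ and also yields $\|A(t)-A\|_2=O(\rho^2\ln(1/\eta))$ with $A\coloneqq A(\tdec)$, so replacing $A(t)$ by the frozen $A$ costs only an extra $O(\eta\rho^3\ln(1/\eta))\subseteq O(\eta\rho^2)$ per step.

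Passing to $\hat y(t)\coloneqq\tx(t)/\rho$, the displayed update becomes an $O(\eta\rho)$-perturbation of the quadratic map \Cref{eq:quadupdate} with matrix $A$ restricted to its range; since $\tx(t)$ lies in the range of $A$ we have $\Pd 1(x(t))\tx(t)=\tx(t)$, so $x(t)\notin\sI_1$ means $\|\hat y(t)\|_2>\eta\lambda_1^2(x(t))$ and the exact spectral computation behind \Cref{lem:preparequad} (with $\lambda_M$, uniformly bounded below by $\mu$ via \Cref{lem:mu-pl}, in the role of $\lambda_D$) gives $\|\hat y(t+1)\|_2\le\max(1-\eta\lambda_M,\eta\lambda_1)\|\hat y(t)\|_2+O(\eta\rho)=(1-\Omega(\eta))\|\hat y(t)\|_2+O(\eta\rho)$. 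Iterating from $\|\hat y(\tdec)\|_2=O(1)$ this drops below the $\sI_1$ threshold after $O(\ln(1/\eta)/\eta)$ steps, which defines $\tdecs$; the induction invariant $\|\tx(t)\|_2=O(\rho)$ is preserved because $\|\hat y(t)\|_2$ never increases beyond an $O(\eta\rho)\subseteq O(1)$ slack. For the ``furthermore'' part I would prove the perturbed analogue of \Cref{lem:invariant_set}: once $\|\hat y(t)\|_2\le\eta\lambda_1^2(x(t))+C\eta\rho$, the contraction estimate in the proof of \Cref{lem:invariant_set} gives $\|\hat y(t+1)\|_2\le\eta\lambda_1^2(x(t))+O(\eta\rho)$, and the $O(\rho^2\ln(1/\eta))$ drift of $\lambda_1^2(\Phi(x(\cdot)))$ over the (again length-$O(\ln(1/\eta)/\eta)$) window is absorbed into this slack; hence $x(t)\in\sI_1\cap K^{3h/4}$ throughout the window and $\|\Phi(x(t))-\Phi(x(\tdec))\|_2=O(\rho^2\ln(1/\eta))$, as claimed.

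The main obstacle is the bookkeeping of the induction: one must check that both hypotheses of \Cref{lem:tilde_x_update} survive every step of a window of length $\Theta(\ln(1/\eta)/\eta)$, that the accumulated additive errors stay $o(1)$ for $\|\hat y\|_2$ and $O(\rho^2\ln(1/\eta))$ for $\Phi$, and — the genuinely delicate point — that the floor to which the perturbed contraction drives $\|\hat y\|_2$ (of order $\rho/\lambda_M$) lies below the $\sI_1$-threshold $\eta\lambda_1^2$, so that $x(t)$ actually enters $\sI_1$ rather than stalling; this is where the smallness of $\rho$ relative to $\eta$ is used. Everything else reduces to the spectral estimates in \Cref{lem:invariant_set,lem:preparequad} and the $\Phi$-perturbation lemmas of \Cref{appsec:property_phi}.
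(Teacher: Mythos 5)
Your route is essentially the paper's: pass to $\tx(t)=\nabla^2 L(\Phi(x(t)))(x(t)-\Phi(x(t)))$, use \Cref{lem:tilde_x_update} to view its update as an $O(\eta\rho^2)$-perturbation of the quadratic map, run the invariant-set machinery of \Cref{lem:invariant_set,lem:preparequad} inside an induction over a window of length $O(\ln(1/\eta)/\eta)$ that simultaneously maintains $x(t)\in K^{3h/4}$, $\|\tx(t)\|_2=O(\rho)$, and the summed $\Phi$-drift $O(\eta\rho^2)\cdot O(\ln(1/\eta)/\eta)=O(\rho^2\ln(1/\eta))$ via \Cref{lem:nsam_bounded_mov}. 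Freezing the Hessian at $A(\tdec)$ is a harmless cosmetic deviation (the paper instead tracks $A(t)$ and controls its drift with \Cref{lem:eigendiff}); the $O(\eta\rho^3\ln(1/\eta))$ replacement cost is indeed absorbed into the $O(\eta\rho^2)$ error and into the slack in the definition of $\sI_1$.

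The one genuine problem is exactly the point you flag as ``delicate'' and then resolve by invoking $\rho\ll\eta$: that assumption is not available here. The lemma is stated under the condition of \Cref{thm:nsamphase1}, which only requires $(\eta+\rho)\ln(1/\eta\rho)$ to be small; the hypothesis $\rho/\eta$ small appears only in Phase II. Your contraction estimate $\|\hat y(t+1)\|_2\le\max(1-\eta\lambda_M,\eta\lambda_1)\|\hat y(t)\|_2+O(\eta\rho)$ uses the \emph{weaker} of the two bounds in \Cref{lem:preparequad} and therefore stalls at a floor of order $\rho$, which sits below the threshold $\eta\lambda_1^2$ only if $\rho\ll\eta$. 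The fix is to keep the sharper bound, which for $j=1$ (where $\Pjd\tx=\tx$) reads, in $\tx$ units,
\begin{align*}
\|\tx(t+1)\|_2\le\max\bigl\{(1-\eta\lambda_M)\|\tx(t)\|_2-\eta\rho\lambda_M^2,\ \eta\lambda_1\|\tx(t)\|_2\bigr\}+O(\eta\rho^2)\le(1-\eta\mu)\|\tx(t)\|_2,
\end{align*}
whenever $\|\tx(t)\|_2>\eta\rho\lambda_1^2$: the negative drift $-\eta\rho\lambda_M^2\le-\eta\rho\mu^2$ dominates the $O(\eta\rho^2)$ Taylor error once $\rho$ alone is small, and on the other branch $O(\eta\rho^2)/\|\tx(t)\|_2\le O(\rho/\mu^2)$ since $\|\tx(t)\|_2>\eta\rho\mu^2$. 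So the geometric contraction persists all the way down to the $\sI_1$ threshold with no floor, and $\tdecs\le\tdec+O(\ln(1/\eta)/\eta)$ follows from $(1-\eta\mu)^{C}O(\rho)\le\eta\rho\mu^2$ without any comparison between $\rho$ and $\eta$. This is precisely how the paper's proof argues. With that correction, the rest of your outline (the approximate invariance once inside $\sI_1$, with the $O(\eta\rho^2)$ slack not accumulating because any excursion above $\eta\rho\lambda_1^2$ triggers the contraction branch) goes through.
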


\begin{proof}[Proof of~\Cref{lem:nsam_decrease_2}]
For simplicity, denote $C =  \lceil \ln_{1-\eta\mu}{\frac{\eta \mu^3}{4\lspectraltwo^2}}\rceil +  \Theta(\ln(1/\rho)/\eta) = O(\ln(1/\eta)/\eta) $. Here the quantity $\Theta(\ln(1/\rho)/\eta)$ is the same quantity in the statement of the lemma.

We will prove the induction hypothesis for $ \tdec \le t \le \tdec + 2C $,
\begin{align*}
\left\{
\begin{aligned}
    \| \tx(t - 1) \| \ge \eta\rho\lambda_1^2(t), t > \tdec \Rightarrow \|\tx(t) \| &\le (1 - \eta \mu) \|\tx(t - 1) \|, \\
    \| \tx(t - 1) \| \le \eta\rho \lambda_1^2(t - 1), t > \tdec \Rightarrow  \|\tx(t) \| &\le  \eta \rho \lambda_1^2(t) + O(\eta \rho^2),\\
    \| \Phi(x(t)) - \Phi(\xdec) \| &\le O(\eta\rho^2 (t - \tdec)), \\
    x(t) &\in K^{3h/4}.
\end{aligned}
\right. 
\end{align*}

The induction hypothesis holds trivially for $t = \tdec$.

Assume the induction hypothesis holds for $t' \le t$. By \Cref{lem:nsam_tilde_x_bound,lem:bounddl}, $\| \tx(\tdec)\|_2 \le \lspectraltwo \|\xdec - \Phi(\xdec)\| \le \frac{\lspectraltwo}{\mu} \|\nabla L(\xdec) \| \le \frac{4\lspectraltwo^2}{\mu} \rho $. Combining with the induction hypothesis, we have $\| \tx(t)\| \le \frac{4\lspectraltwo^2}{\mu} \rho$.

By $x(t) \in K^{3h/4}$ and~\Cref{lem:nsam_bounded_mov}, we have that
\begin{align*}
    \| \Phi(x(t + 1)) - \Phi(x(t)) \| \le O(\eta \rho^2)\,. 
\end{align*}

Hence we have that
\begin{align}
\label{eq:nsam_inv_1}
    \| \Phi(x(t+1)) - \Phi(\xdec)\| &\le \| \Phi(x(t + 1)) - \Phi(x(t)) \| + \|\Phi(x(t)) - \Phi(\xdec) \| \nonumber \\
    &\le O(\eta\rho^2 (t + 1 - \tdec)).
\end{align}
This proves the third statement of the induction hypothesis. 

By $\| \tx(t) \| = O(\rho)$ and~\Cref{lem:tilde_x_update}, we have that 
\begin{align*}
    \| \tx(t+1) -  \tx(t) + \eta A(t)\tx(t) + \eta\rho A^2(t)\frac{\tx(t)}{\|\tx(t) \|} \|_2 = O(\eta\rho^2)\,.
\end{align*}

Analogous to the proof of~\Cref{lem:preparequad,lem:invariant_set}, we have
\begin{enumerate}
    \item  If $\|\tx(t) \| > \eta\rho \lambda_1^2(t)$, we would have \begin{align*}
    &\|\tx(t) -\eta A(t)\tx(t) - \eta \rho A^2(t) \frac{\tx(t)}{\|\tx(t) \|} \| \\
    \le& \| \tx(t)\| \|I - \eta A(t) - \eta \rho A^2(t) \frac 1 {\| \tx(t)\|} \| \\
    \le& \|\tx(t) \| \max\{\eta \lambda_1, 1 - \eta \lambda_D - \eta \rho \lambda_D^2 \frac{1}{\| x(t)\|} \}\\
    \le& \max \{ (1 - \eta \lambda_D) \|\tx(t) \| - \eta \rho \lambda_D^2, \eta \lambda_1 \|\tx(t) \|\}\\
    \le& \max \{ (1 - \eta \mu ) \|\tx(t) \| - \eta \rho \mu^2, \eta \lspectraltwo \|\tx(t) \|\} 
\end{align*}
    Hence we have
    \begin{align*}
        \| \tx(t + 1)\| \le \max \{ (1 - \eta \mu ) \|\tx(t) \| - \eta \rho \mu^2, \eta \lspectraltwo \|\tx(t) \|\}  + O(\eta\rho^2) \le (1 - \eta\mu) \|\tx(t)\|.
    \end{align*}
    \item If $\|\tx(t) \|_2 \le \eta\rho \lambda_1^2(t)$, then by~\Cref{lem:invariant_set}, we have that
    \begin{align*}
        \|\tx(t) -\eta A(t)\tx(t) - \eta \rho A^2(t) \frac{\tx(t)}{\|\tx(t) \|} \|_2 \le \eta\rho \lambda_1^2(t)\,.
    \end{align*}
    Hence by~\Cref{lem:eigendiff}
    \begin{align*}
    \|\tx(t+1) \| &\le \eta\rho\lambda_1^2(t) + O(\eta \rho^2) \le \eta\rho \lambda_1^2(t+1) + O(\eta\rho^2)\,.
\end{align*}
\end{enumerate}

Concluding the two cases, we have shown the first and second claim of the induction hypothesis holds. Hence we can show that $\| \tx(t + 1)\|  \le \frac{4\lspectraltwo^2}{\mu} \rho$. Then by~\Cref{lem:nsam_tilde_x_bound}, we have that $\|x(t+1) - \Phi(x(t+1)) \| \le \frac{8\lspectraltwo^2}{\mu^2} \rho$.

As $t \le \tdec + 2C = \tdec + O(\ln(1/\eta)/\eta)$, by~\Cref{eq:nsam_inv_1},
\begin{align*}
    \|\Phi(x(t + 1)) - \Phi(x(\tdec))\| &\le O(-\rho^2 \ln \eta)\,.
\end{align*}

This implies 
\begin{align*}
    \dist(x(t + 1), K) \le& \dist(\xdec,K) + \|\xdec - \Phi(\xdec)\| \\&+ \|\Phi(\xdec) - \Phi(x(t+1))\| + \| x(t+1) - \Phi(x(t+1))\| \\
    =& h/2 + O(\rho^2 \ln(1/\eta)) + O(\rho) \le 3h/4.
\end{align*}

This proves the fourth claim of the inductive hypothesis.

The induction is complete.

Now define $\tdecs$ the minimal $t \ge \tdec$, such that $\| \tx(t) \| \le \eta \rho \lambda^2_1(t)$.

If $\tdecs > \tdec + C$, then by the induction, \Cref{lem:nsam_tilde_x_bound,lem:bounddl},
\begin{align*}
    \| \tx(\tdec + C) \| &\le (1- \eta \mu)^{C} \|\tx(\tdec) \| \\
    &\le \frac{\eta \mu^3}{4\lspectraltwo^2} \|\tx(\tdec) \| \\
    &\le \frac{\eta \mu^3}{4\lspectraltwo^2} \lspectraltwo \|x(\tdec) - \Phi(x(\tdec) \| \\
    &\le \frac{\eta \mu^2}{4 \lspectraltwo} \| \nabla L(\tdec) \| \\
    &\le \mu^2 \eta \rho \\
    &\le \lambda_1^2(\tdec + C) \eta \rho\,.
\end{align*}
This is a contradiction. Hence we have $\tdecs \le \tdec + C$. By the induction hypothesis $x(\tdecs) \in \sI_1 \cap K^{3h/4}$.

Furthermore by induction, for any $t$ satisfying $\tdecs \le t \le \tdec + 2C$, we have that
\begin{align*}
\| \tx(t)\| \le \eta \rho \lambda_1^2(t)+O(\eta \rho^2)\,.
\end{align*}
By the induction hypothesis $x(t) \in \sI_1 \cap K^{3h/4}$ and $\| \Phi(x(t)) - \Phi(x(\tdec))\| = O(\rho^2 \ln(1/\rho))$.
\end{proof}

\begin{lemma}
\label{lem:n_sam_prepare}
Under condition of~\Cref{thm:nsamphase1}, assuming $t$ satisfy that $x(t) \in \sI_1 \cap K^{3h/4}$, then we have that
\begin{align*}
\left\{
\begin{aligned}
    R_k(x(t)) \ge 0  &\Rightarrow  R_k(x(t+1)) + \lambda_{k}^2(t + 1)\eta\rho \le (1 - \eta\mu) (R_k(x(t)) + \lambda_{k}^2(t) \eta\rho), \\
    R_k(x(t)) \le 0  &\Rightarrow  R_k(x(t + 1)) \le O(\eta \rho^2).
\end{aligned}
\right.     
\end{align*}
\end{lemma}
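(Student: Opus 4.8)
The plan is to transport the invariant-set and contraction arguments from the quadratic model (\Cref{lem:invariant_set,lem:preparequad}) to the general loss, using as the ``effective'' update the one-step recursion for $\tilde x(t) = A(t)\hat x(t)$ ($\approx \nabla L(x(t))$) supplied by \Cref{lem:tilde_x_update}. First I would collect the standing regularity facts under the hypothesis $x(t)\in\sI_1\cap K^{3h/4}$: since $\tilde x(t)$ lies in the column space of $A(t)=\nabla^2 L(\Phi(x(t)))$, one has $\|\Pd{1}(x(t))\tilde x(t)\|=\|\tilde x(t)\|$, so membership in $\sI_1$ forces $\|\tilde x(t)\|\le\eta\rho\lambda_1^2(t)+O(\eta\rho^2)=O(\eta\rho)=O(\rho)$; hence \Cref{lem:tilde_x_update} applies and gives $\tilde x(t+1)=B(t)\tilde x(t)+O(\eta\rho^2)$ with $B(t)=I-\eta A(t)-\eta\rho A^2(t)/\|\tilde x(t)\|$. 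In addition, \Cref{lem:nsam_bounded_mov} yields $\|\Phi(x(t+1))-\Phi(x(t))\|=O(\eta\rho^2)$, and then eigenvalue/eigenvector perturbation (in the spirit of \Cref{lem:eigendiff}, using the eigengap) gives both $|\lambda_k^2(t+1)-\lambda_k^2(t)|=O(\eta\rho^2)$ and $\|\Pd{k}(x(t+1))-\Pd{k}(x(t))\|=O(\eta\rho^2)$.

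Next, because $\Pd{k}(x(t))$ is a spectral projection of $A(t)$ it commutes with $B(t)$, so $\Pd{k}(x(t))\tilde x(t+1)=B(t)\,\Pd{k}(x(t))\tilde x(t)+O(\eta\rho^2)$, and on $\mathrm{range}\,\Pd{k}(x(t))$ the operator $B(t)$ is diagonal in the eigenbasis of $A(t)$ with entries $1-\eta\lambda_i-\eta\rho\lambda_i^2/\|\tilde x(t)\|$, $i\in[k,M]$, each at most $1$. When $R_k(x(t))\ge 0$, i.e.\ $\|\Pd{k}(x(t))\tilde x(t)\|\ge\eta\rho\lambda_k^2(t)$, we also get $\|\tilde x(t)\|\ge\eta\rho\lambda_k^2(t)$, hence $\eta\rho\lambda_i^2/\|\tilde x(t)\|\le(\lambda_i/\lambda_k)^2\le 1$ for $i\ge k$; combined with $\lambda_i\ge\lambda_M\ge\mu$ from \Cref{lem:mu-pl}, every such entry has absolute value at most $\max(1-\eta\lambda_M,\eta\lspectraltwo)\le 1-\eta\mu$ for small $\eta$, so $\|B(t)\,\Pd{k}(x(t))\tilde x(t)\|\le(1-\eta\mu)\|\Pd{k}(x(t))\tilde x(t)\|$; folding in the $O(\eta\rho^2)$ approximation and projection errors (which are $O(\rho)\cdot\|\Pd{k}(x(t))\tilde x(t)\|$ by $\|\Pd{k}(x(t))\tilde x(t)\|\ge\eta\rho\mu^2$) and using $\rho/\eta$ small together with a choice of $\mu$ strictly below $\inf_{K^r}\lambda_M$ then yields the stated geometric decrease of $R_k(x(t+1))+\lambda_k^2(t+1)\eta\rho=\|\Pd{k}(x(t+1))\tilde x(t+1)\|$. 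When $R_k(x(t))\le 0$, i.e.\ $\|\Pd{k}(x(t))\tilde x(t)\|\le\eta\rho\lambda_k^2(t)$, I would instead mimic \Cref{lem:invariant_set}: bound $\|B(t)\,\Pd{k}(x(t))\tilde x(t)\|$ by $\max\bigl(\|\Pd{k}(x(t))\tilde x(t)\|,\ (\eta\lambda_k-1)\|\Pd{k}(x(t))\tilde x(t)\|+\eta\rho\lambda_k^2\,\|\Pd{k}(x(t))\tilde x(t)\|/\|\tilde x(t)\|\bigr)$ and use $\|\Pd{k}(x(t))\tilde x(t)\|\le\min(\|\tilde x(t)\|,\eta\rho\lambda_k^2(t))$ to see this is at most $\eta\rho\lambda_k^2(t)$; adding the $O(\eta\rho^2)$ errors and replacing $\lambda_k^2(t)$ by $\lambda_k^2(t+1)$ gives $R_k(x(t+1))\le O(\eta\rho^2)$.

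The hard part will be the bookkeeping of the $O(\eta\rho^2)$ error terms so that they never overwhelm the structure. In the contraction branch the slack in the exponent is only $\Theta(\eta)$, so an error of size $O(\eta\rho^2)$ --- which relative to $\|\Pd{k}(x(t))\tilde x(t)\|\ge\eta\rho\mu^2$ is $O(\rho)$ --- must be absorbed by using $\rho/\eta\to 0$ and by taking the constant $\mu$ from \Cref{lem:mu-pl} strictly below $\inf\lambda_M$; in the invariant-set branch one has to keep in mind that $\|\tilde x(t)\|$ may be far smaller than $\eta\rho$, so $\eta\rho\lambda_i^2/\|\tilde x(t)\|$ can be large and negative, exactly the phenomenon handled in the quadratic analysis. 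A secondary technical point is that the perturbation bound for $\Pd{k}$ uses an eigengap: for $k\ge 2$ this is the gap between $\lambda_{k-1}$ and $\lambda_k$ (not the one in \Cref{assum_eigengap}), so one works on the compact portion of the trajectory where all the relevant gaps are bounded below, as is already done for $\lambda_1$ versus $\lambda_2$.
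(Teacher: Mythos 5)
Your overall strategy --- reduce to the quadratic-model lemmas via \Cref{lem:tilde_x_update} and split into a contraction branch and an invariant-set branch --- is the same as the paper's, and the algebra in both branches is essentially right. But there is one genuine gap, and you have flagged it yourself without resolving it: the stability of the spectral projections $\Pd{k}$ for $k\ge 2$. Your argument needs $\|\Pd{k}(x(t+1))-\Pd{k}(x(t))\|=O(\eta\rho^2)$ (or at least something small enough that its product with $\|\tilde x(t+1)\|=O(\eta\rho)$ is $O(\eta\rho^2)$), and Davis--Kahan gives this only after dividing by the eigengap $\lambda_{k-1}-\lambda_k$. The paper's hypotheses give no lower bound on these gaps: \Cref{assum_eigengap} concerns only $\lambda_1-\lambda_2$, and in Phase I (the regime of this lemma) not even that is invoked; the eigenvalues $\lambda_2,\dots,\lambda_M$ may cross or coincide along the trajectory, in which case $\Pd{k}$ is not even continuous. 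Your proposed fix --- "work on the compact portion of the trajectory where all the relevant gaps are bounded below" --- is an extra assumption the lemma is not allowed to make, so as written the proof does not close for $k\ge 2$.

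The paper's way around this is a quantization of the spectrum: at step $t$ it partitions $[M]$ into consecutive groups $S_1,\dots,S_p$ so that eigenvalues in \emph{different} groups are separated by at least $\rho$, while eigenvalues in the \emph{same} group differ by at most $O(\rho)$. The group projections $P_{S^{(i)}}^{(t)}$ then satisfy a Davis--Kahan bound with denominator $\Theta(\rho)$, giving $\|P_{S^{(i)}}^{(t)}-P_{S^{(i)}}^{(t+1)}\|=O(\eta\rho^2/\rho)=O(\eta\rho)$, which after multiplication by $\|\tilde x(t+1)\|=O(\eta\rho)$ costs only $O(\eta\rho^2)$; and the within-group spread means replacing $\lambda_k^2$ by $\max_{i\in S_j}\lambda_i^2$ changes the threshold $\eta\rho\lambda_k^2$ by at most $O(\eta\rho^2)$ --- exactly the slack the lemma's conclusion tolerates (this is precisely why the invariant branch concludes $R_k(x(t+1))\le O(\eta\rho^2)$ rather than $\le 0$). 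The quadratic-model estimates are then run on the block quantities $\bigl(\sum_{i\ge j}\|P_{S^{(i)}}^{(t)}\tilde x(t)\|^2\bigr)^{1/2}$ rather than on $\|\Pd{k}(x(t))\tilde x(t)\|$ directly. A minor secondary point: in the contraction branch you absorb the $O(\eta\rho^2)$ error by invoking $\rho/\eta\to 0$, but that assumption is not part of the Phase I hypotheses; the absorption should instead use the additive slack $-\eta\rho\mu^3/(2\lspectraltwo)$ already present in the quadratic contraction bound, which only requires $\rho$ (not $\rho/\eta$) to be small.
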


\begin{proof}[Proof of~\Cref{lem:n_sam_prepare}]

As $x(t) \in \sI_1$, $\| \tilde x(t)\|_2 \le \lspectraltwo \eta \rho + O(\eta\rho^2)$.

As $\| \tilde x(t)\|_2 = O(\rho)$, we have $\overline{x(t)x(t+1)} \subset K^h$ and $\overline{\Phi(x(t))\Phi(x(t+1))} \subset K^r$.

We will begin with a quantization technique separating $[M]$ into disjoint continuous subset $S_1,...,S_p$ such that $\forall i \neq j$,
\begin{align*}
    \min_{k \in S_i, l \in S_j} |\lambda_k(t) - \lambda_l(t) | \ge \rho \,.
\end{align*}

By~\Cref{lem:eigendiff,lem:nsam_bounded_mov}, we have that for any $n \in [M]$,
\begin{align*}
    |\lambda_k(t) - \lambda_k(t +1) | &= O(\| \nabla^2 L(\Phi(x(t))) - \nabla^2 L(\Phi(x(t+1))) \|) \\
    &= O(\|\Phi(x(t)) - \Phi(x(t+1)) \|) \\
    &= O(\eta\rho^2).
\end{align*}

This implies
\begin{align*}
    \min_{k \in S_i, l \in S_j} |\lambda_k(t+1) - \lambda_l(t+1) | \ge \rho - O(\eta \rho^2)  \ge 0.99 \rho\,.
\end{align*}

Define 
\begin{align*}
    P_{S^{(i)}}^{(t)} \triangleq \sum_{k \in S_i} v_n(t)v_n(t)^T\,.
\end{align*}

By~\Cref{thm:Davis-Kahan}, for any $k$,
\begin{align*}
    \| P_{S_k}^{(t)} - P_{S_k}^{(t+1)} \| \le O(\frac{\| \nabla^2 L(\Phi(x(t))) - \nabla^2 L(\Phi(x(t+1))) \|}{\rho}) = O(\eta\rho)\,.
\end{align*}

By~\Cref{lem:tilde_x_update}, we have that 
\begin{align*}
    \| \tx(t+1) -  \tx(t) + \eta A(t)\tx(t) + \eta\rho A^2(t)\frac{\tx(t)}{\|\tx(t) \|} \|_2 = O(\eta\rho^2)\,.
\end{align*}

We will write $\xloc(t+1)$ as shorthand of $\tx(t) - \eta A(t)\tx(t) - \eta\rho A^2(t)\frac{\tx(t)}{\|\tx(t) \|}$.

Now we discuss by cases,
\begin{enumerate}
    \item If $\sqrt{\sum_{i=j}^p \|P_{S^{(i)}}^{(t)} \tx(t) \|^2} > \max_{k \in S_j} \lambda_k^2(t) \eta \rho > \mu^2 \eta \rho$, by~\Cref{lem:quad_prepare},
\begin{align*}
    &\sqrt{\sum_{i=j}^p \|P_{S^{(i)}}^{(t)} \tx(t+1) \|^2} 
    \le \sqrt{\sum_{i=j}^p \|P_{S^{(i)}}^{(t)} \xloc(t+1) \|^2} + O(\eta\rho^2) \\
    \le& \max\{\bigl(1- \eta \lambda_D(t+1) \bigr)\|\sum_{i=j}^p P_{S^{(i)}}^{(t)} \tx(t) \| - \eta \rho \lambda_D(t+1)^2 \frac{\|\sum_{i=j}^p P_{S^{(i)}}^{(t)} \tx(t) \|}{\| \tx(t)\|},\\& \eta \max_{k \in S_j}\lambda_k(t+1) \|\sum_{i=j}^p P_{S^{(i)}}^{(t)} \tx(t)  \| \} + O(\eta \rho^2) \\
    \le& \max\{\bigl(1- \eta \mu \bigr)\|\sum_{i=j}^p P_{S^{(i)}}^{(t)} \tx(t) \| - \eta \rho  \frac{\mu^3}{2\zeta}, \eta \lspectraltwo \|\sum_{i=j}^p P_{S^{(i)}}^{(t)} \tx(t)  \| \} + O(\eta \rho^2) \,.
\end{align*}

This further implies
\begin{align*}
    &\sqrt{\sum_{i=j}^p \|P_{S^{(i)}}^{(t+1)} \tx(t+1) \|^2}   \le  \sqrt{\sum_{i=j}^p \|P_{S^{(i)}}^{(t)} \tx(t+1) \|^2} + O(\eta\rho \|\tx(t + 1) \|) \\
    \le& \max\{\bigl(1- \eta \mu \bigr)\|\sum_{i=j}^p P_{S^{(i)}}^{(t)} \tx(t) \| - \eta \rho  \frac{\mu^3}{2\zeta}, \eta \lspectraltwo \|\sum_{i=j}^p P_{S^{(i)}}^{(t)} \tx(t)  \| \} + O(\eta \rho^2)
    \\\le& \bigl(1- \eta \mu \bigr)\|\sum_{i=j}^p P_{S^{(i)}}^{(t)} \tx(t) \|\,.
\end{align*}

\item If $\sqrt{\sum_{i=j}^p \|P_{S^{(i)}}^{(t)} \tx(t) \|^2} \le \max_{k \in S_j} \lambda_k^2(t)  \eta \rho$, then by~\Cref{lem:invariant_set}, we have that
    \begin{align*}
        \|\sum_{i=j}^p P_{S^{(i)}}^{(t)} \xloc(t+1) \|_2 \le \eta\rho \max_{k \in S_j} \lambda_k^2(t)\,.
    \end{align*}
Hence we have that
\begin{align*}
    \sqrt{\sum_{i=j}^p \|P_{S^{(i)}}^{(t)} \tx(t+1) \|^2} 
    &\le \sqrt{\sum_{i=j}^p \|P_{S^{(i)}}^{(t)} \xloc(t+1) \|^2} + O(\eta\rho^2) \\
    &\le \max_{k \in S_j} \lambda_k^2(t)  \eta \rho + O(\eta \rho^2)\\
    &\le \max_{k \in S_j} \lambda_k^2(t+1)\eta \rho + O( \eta \rho^2)\,. \\
\end{align*}
This further implies
\begin{align*}
    \sqrt{\sum_{i=j}^p \|P_{S^{(i)}}^{(t+1)} \tx(t+1) \|^2}
    &\le  \sqrt{\sum_{i=j}^p \|P_{S^{(i)}}^{(t)} \tx(t+1) \|^2} + O(\eta\rho \|\tx(t + 1) \|)
    \\&\le \max_{k \in S_j} \lambda_k^2(t)  \eta \rho +  O(\eta \rho^2)\\
    &\le \max_{k \in S_j} \lambda_k^2(t+1)\eta \rho +  O(\eta \rho^2)\,.
\end{align*}
\end{enumerate}

Finally taking into quantization error, as all the eigenvalue in the same group at most differ $D \rho$, for any $i \in S_j$, we have that $-\lambda_i^2(t+1) + \max_{k \in S_j} \lambda_k^2(t+1) \le 2D\lspectraltwo \rho + D^2\rho^2$.

Hence the previous discussion concludes as
\begin{enumerate}
    \item If $R_k(x(t)) \ge 0$
\begin{align*}
    R_k(x(t+1)) + \lambda_{k}^2(t+1) \eta\rho   \le (1 - \eta \mu ) (R_k(x(t)) + \lambda_{k}^2(t). \eta\rho)
\end{align*}
\item If $R_k(x(t)) < 0$
\begin{align*}
    R_k(x(t+1)) &\le  O(\eta \rho^2).
\end{align*}
\end{enumerate}
\end{proof}

\begin{lemma}
\label{lem:nsam_inv}
Under condition of~\Cref{thm:nsamphase1}, assuming there exists $\tdecs$ such that for any $t$ satisfying $\tdecs \le t \le \tdecs + \Theta(\ln(1/\eta)/\eta)$, we have that $x(t) \in \sI_1 \cap K^{3h/4}$ . Then there exists $\tinv = \tdecs + O(\ln(1/\eta)/\eta))$ such that for any $t$ satisfying $\tinv \le t \le \tinv + \Theta(\ln(1/\eta)/\eta)$, we have that 
\begin{align*}
    x(t) &\in (\cap_{k \in [M]} \sI_k) \cap K^{7h/8}\,. \\
    \| \Phi(x(t)) - \Phi(x(\tdecs))\| &= O(\rho^2 \ln(1/\eta))\,.
\end{align*}
\end{lemma}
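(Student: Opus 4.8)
The plan is, for each fixed $k\in[M]$, to track the scalar potential $\Psi_k(t)\triangleq \|P^{(k:D)}(x(t))\,\tilde x(t)\|=R_k(x(t))+\eta\rho\,\lambda_k^2(t)\ge 0$ and to show that it enters, and then stays inside, the band $\Psi_k(t)\le \eta\rho\,\lambda_k^2(t)+O(\eta\rho^2)$ --- which is exactly the statement $x(t)\in\sI_k$ --- within $O(1/\eta)$ steps after $\tdecs$. Throughout the window $[\tdecs,\tdecs+\Theta(\ln(1/\eta)/\eta)]$ the hypothesis supplies $x(t)\in\sI_1\cap K^{3h/4}$, so the two bullets of \Cref{lem:n_sam_prepare} are available at every such step. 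Two a priori observations are used repeatedly: (i) $\tilde x(t)=\nabla^2L(\Phi(x(t)))(x(t)-\Phi(x(t)))$ lies in the column space of $\nabla^2L(\Phi(x(t)))$ and $P^{(k:D)}$ is an orthogonal sub-projection of $P^{(1:D)}$, so $\Psi_k(t)\le\Psi_1(t)=\|\tilde x(t)\|$, and $x(\tdecs)\in\sI_1$ already forces $\Psi_1(\tdecs)\le \eta\rho\,\lambda_1^2(\tdecs)+O(\eta\rho^2)=O(\eta\rho)$ uniformly in $k$ (using $\lambda_1\le\zeta$ from \Cref{lem:smallerzone}); (ii) $\lambda_k^2(t)\ge\mu^2$ on $K^{3h/4}\subseteq K^r$ by \Cref{lem:mu-pl} and \Cref{lem:smallerzone}.

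First I would establish \emph{entry into each $\sI_k$}. Let $T_k$ be the first step $t\ge\tdecs$ with $R_k(x(t))\le 0$. For $\tdecs\le t<T_k$ we have $R_k(x(t))\ge0$, so the first bullet of \Cref{lem:n_sam_prepare} gives $\Psi_k(t+1)\le(1-\eta\mu)\Psi_k(t)$ and hence $\Psi_k(t)\le(1-\eta\mu)^{t-\tdecs}\Psi_k(\tdecs)$; since $\Psi_k(\tdecs)=O(\eta\rho)$ and $\eta\rho\,\lambda_k^2(t)\ge\mu^2\eta\rho$, the right-hand side drops below $\eta\rho\,\lambda_k^2(t)$ --- i.e.\ $R_k(x(t))<0$, contradicting $t<T_k$ --- as soon as $(1-\eta\mu)^{t-\tdecs}$ is smaller than a fixed constant, which happens for $t-\tdecs=O(1/\eta)$ (using $-\ln(1-\eta\mu)\ge\eta\mu$). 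Thus $T_k\le\tdecs+C_3/\eta$ for a constant $C_3$ independent of $k$; since $[M]$ is finite I set $\tinv\triangleq\tdecs+\lceil C_3/\eta\rceil=\tdecs+O(\ln(1/\eta)/\eta)$, which lies inside the hypothesis window because $1/\eta=o(\ln(1/\eta)/\eta)$ and which satisfies $\tinv\ge T_k$ for all $k$.

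Next I would establish \emph{forward invariance} from $T_k$ onward. With $\epsilon_0\triangleq C_1\eta\rho^2$ (where $C_1$ is the constant in the second bullet of \Cref{lem:n_sam_prepare}), I claim that whenever $x(t)\in\sI_1\cap K^{3h/4}$, the bound $R_k(x(t))\le\epsilon_0$ propagates to $R_k(x(t+1))\le\epsilon_0$. If $R_k(x(t))\le0$ this is the second bullet. If $0<R_k(x(t))\le\epsilon_0$, the first bullet gives $R_k(x(t+1))\le(1-\eta\mu)R_k(x(t))+\big((1-\eta\mu)\lambda_k^2(t)-\lambda_k^2(t+1)\big)\eta\rho$; by \Cref{lem:eigendiff} and \Cref{lem:nsam_bounded_mov} one has $\|\Phi(x(t+1))-\Phi(x(t))\|=O(\eta\rho^2)$ and hence $|\lambda_k^2(t)-\lambda_k^2(t+1)|=O(\eta\rho^2)$, so the bracket is at most $-\eta\mu\,\mu^2+O(\eta\rho^2)\le-\eta\mu^3/2<0$ for small $\rho$, giving $R_k(x(t+1))\le(1-\eta\mu)\epsilon_0<\epsilon_0$. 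Since $R_k(x(T_k))\le0\le\epsilon_0$, induction yields $x(t)\in\sI_k$ for all $t\in[T_k,\tdecs+\Theta(\ln(1/\eta)/\eta)]$. Intersecting over $k\in[M]$ and using $\tinv\ge T_k$, we get $x(t)\in\cap_{k\in[M]}\sI_k$ for $t\in[\tinv,\tinv+\Theta(\ln(1/\eta)/\eta)]$ (absorbing the $O(1/\eta)$ offset into the new $\Theta$, which we shrink so that this window stays inside the hypothesis window), and $x(t)\in K^{3h/4}\subseteq K^{7h/8}$ is inherited directly from the hypothesis. The displacement bound for $\Phi$ then follows by telescoping: on the window $\|\tilde x(s)\|=O(\eta\rho)=O(\rho)$ and $x(s)\in K^h$, so \Cref{lem:nsam_bounded_mov} gives $\|\Phi(x(s+1))-\Phi(x(s))\|=O(\eta\rho^2)$ per step, and summing over the at most $t-\tdecs=O(\ln(1/\eta)/\eta)$ steps gives $\|\Phi(x(t))-\Phi(x(\tdecs))\|=O(\rho^2\ln(1/\eta))$.

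The hard part is the forward-invariance step: one must verify that the per-step $O(\eta\rho^2)$ error of \Cref{lem:n_sam_prepare} and the $O(\eta\rho^2)$ drift of $\lambda_k^2(t)$ do not accumulate over the $\Theta(\ln(1/\eta)/\eta)$ steps of the window. This is where it is crucial that, whenever $x(t)$ is barely inside $\sI_k$, the contraction factor $(1-\eta\mu)$ removes from $R_k$ an amount of order $\eta\cdot\eta\rho\,\lambda_k^2=\Theta(\eta^2\rho)$, which strictly dominates the $O(\eta^2\rho^3)$ slack coming from the eigenvalue drift, so the stable fixed point of the resulting recursion sits at $O(\eta\rho^2)$ rather than at $O(\rho^2)$. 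A secondary bookkeeping subtlety is juggling the three nested time segments (the hypothesis window from \Cref{lem:nsam_decrease_2}, the $O(1/\eta)$-long entering phase, and the conclusion window) so that \Cref{lem:n_sam_prepare}, \Cref{lem:nsam_tilde_x_bound} and \Cref{lem:nsam_bounded_mov} are invoked only where $x(t)\in\sI_1\cap K^{3h/4}$ is guaranteed.
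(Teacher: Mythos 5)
Your proposal is correct and follows essentially the same route the paper intends: the paper omits this proof, stating it is "almost identical" to that of \Cref{lem:nsam_decrease_2} with the first two induction hypotheses replaced by the two bullets of \Cref{lem:n_sam_prepare}, and your argument is exactly that adaptation — a contraction/first-hitting-time step to enter each $\sI_k$ in $O(1/\eta)$ steps, a forward-invariance step showing $R_k \le O(\eta\rho^2)$ is preserved, and a telescoping of the per-step $O(\eta\rho^2)$ drift of $\Phi$. Your explicit verification that the $-\eta\mu\lambda_k^2(t)\eta\rho$ contraction dominates the $O(\eta^2\rho^3)$ eigenvalue-drift slack is precisely the detail the paper leaves implicit.
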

\begin{proof}[Proof of~\Cref{lem:nsam_inv}]
The proof is almost identical with~\Cref{lem:nsam_decrease_2} replacing the first two iterative hypothesis to~\Cref{lem:n_sam_prepare} and is omitted here.
\end{proof}

\subsection{Phase II (Proof of Theorem~\ref{thm:nsamphase2})}

\begin{proof}[Proof of~\Cref{thm:nsamphase2}]

Let $\talign = O(\ln(1/\rho)/\eta)$ be the quantity defined in~\Cref{lem:final_alignment}.

We will inductively prove the following induction hypothesis $\mathcal{P}(t)$ holds for $\talign \le t \le T_3/\eta\rho^2 + 1$,
\begin{align*}
x(t) \in K^{h/2},&\ \talign \le \tau \le t  \\
    |\langle x(\tau) - \Phi(x(\tau)), v_1(x(\tau)) \rangle| = \Theta(\eta\rho),&\ \talign \le \tau \le t \\
    \max_{j \in [2:M]}|\langle x(\tau) - \Phi(x(\tau)), v_j(x(\tau)) \rangle| = O(\eta\rho^2),&\ \talign \le \tau \le t \\
    \| \Phi(x(\tau)) - X(\eta \rho^2 \tau) \| = O(\eta\ln(1/\rho)),&\  \talign \le \tau \le t 
\end{align*}

$\mathcal{P}(\talign)$ holds due to~\Cref{lem:final_alignment}. Now suppose $\mathcal{P}(t)$ holds, then $x(t + 1) \in K^h$. By~\Cref{lem:final_alignment} again, $|\langle x(t+1) - \Phi(x(t+1)), v_1(x(t+1)) \rangle| = \Theta(\eta\rho)$ and $ \max_{j \in [2:M]}|\langle x(t+1) - \Phi(x(t+1)), v_j(x(t + 1)) \rangle| = O(\eta\rho^2)$ holds.

Now by~\Cref{lem:direction_n_sam},
\begin{align*}
        \| \Phi(x(\tau+1)) - \Phi(x(\tau))  + \eta \rho^2  \projt[\Phi(x(\tau))] \nabla \lambda_1(t)/2  \| = O(\eta \rho^3 + \eta^2 \rho^2)\,,\  \talign \le \tau \le t.
\end{align*}
By~\Cref{thm:deterministic_ode_approximation}, let $b(x) = - \partial \Phi(x) \nabla \lambda_1(\nabla^2 L(x))/2$, $p = \eta\rho^2$ and $\epsilon = O(\eta + \rho)$, it holds that
\begin{align*}
    &\|\Phi(x(\tau)) - X(\eta\rho^2 \tau) \| \\=& O(\|\Phi(x(\talign)) - \Phi(\xinit) \| + T_3\eta\rho^2 + (\rho + \eta)T_3) \\=& O(\eta\ln(1/\rho)), \talign \le \tau \le t + 1
\end{align*}
This implies $\|x(t+1 ) - X(\eta\rho^2(t+1)) \|_2 \le \| x(t+1) - \Phi(x(t+1)) \|_2 + \|\Phi(x(t+1)) - X(\eta\rho^2(t+1))  \|_2 = \tilde O(\eta\ln(1/\rho)) < h/2$. Hence $x(t+1) \in K^{h/2}$. Combining with $\mathcal{P}(t)$ holds, we have that $\mathcal{P}(t+1)$ holds. The induction is complete.

Now $\mathcal{P}(\lceil T_3/\eta\rho^2 \rceil)$ is equivalent to our theorem.
\end{proof}

\subsubsection{Alignment to Top Eigenvector}

We will continue to use the notations introduced in~\Cref{sub:nsam_phase_1_3}.

We further define
\begin{align*}
     S &= \{ t| \| \tx(t) \| \le \ths\rho + O(\eta\rho^2) \}\,, \\
    T &= \{t | \|\tx(t)\| \le  \frac{1}{2}\left(\frac{\eta \lambda_1^2 }{2 - \eta \lambda_1} + \frac{\eta  \lambda_2^2 }{2 - \eta \lambda_2} \right) \rho \}\,, \\
    U &= \{t | \Omega(\rho^2) \le \|\tx_1(t)\| \le  \frac{1}{2}\left(\frac{\eta \lambda_1^2 }{2 - \eta \lambda_1} + \frac{\eta  \lambda_2^2 }{2 - \eta \lambda_2} \right) \rho \}\,.
\end{align*}

Here the constant in $O$ depends on the constant in $\sI_j$ and will be made clear in~\Cref{lem:s_infinite}.

For $s \in S$, define $\nexts(s)$ as the smallest integer greater than $s$ in $S$.

\begin{lemma}
\label{lem:subtlenorminequal}
Under the condition of~\Cref{thm:nsamphase2}, there exist constants $C_1,C_2 < 1$ independent of $\eta$ and $\rho$, if $\| \tx_1(t) \|_2 \le \frac{1}{2}\left(\frac{\eta \lambda_1(t)^2 }{2 - \eta \lambda_1(t)} + \frac{\eta  \lambda_2(t)^2 }{2 - \eta \lambda_2(t)} \right)\rho$ and $x(t) \in (\cap_{j \in [M]} \sI_j) \cap K^{7h/8}$, then
\begin{align*}
    \|\tx(t) \|_2 \ge C_1\ths\rho \Rightarrow \|\tx(t+1) \|_2 \le C_2\ths\rho
\end{align*}
\end{lemma}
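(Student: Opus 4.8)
\textbf{Proof plan for \Cref{lem:subtlenorminequal}.}
The goal is a one-step contraction estimate for the norm of $\tilde x(t)$ when it is large but $\tilde x_1(t)$ is already moderately small. The plan is to reuse the machinery of \Cref{lem:tilde_x_update}: since $x(t)\in(\cap_{j\in[M]}\sI_j)\cap K^{7h/8}$ we have $\|\tilde x(t)\|_2 = O(\rho)$, so by \Cref{lem:tilde_x_update} the update of $\tilde x$ is within $O(\eta\rho^2)$ of the idealized quadratic step $\tilde x(t)\mapsto \tilde x(t)-\eta A(t)\tilde x(t)-\eta\rho A^2(t)\tilde x(t)/\|\tilde x(t)\|_2$, with $A(t)=\nabla^2 L(\Phi(x(t)))$ having eigenvalues $\lambda_j(t)$ and eigenvectors $v_j(t)$. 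So it suffices to prove the contraction for the idealized step (with the $O(\eta\rho^2)$ slack absorbed into the constant $C_2$), and this is exactly the quadratic-loss calculation, after rescaling by $\rho$, that was carried out in \Cref{lem:subtlecontrolonehop} combined with \Cref{lem:norminequal}.

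First I would decompose $\tilde x(t) = \tilde x_1(t) v_1(t) + \Ptd(x(t))\tilde x(t)$ and track the two components separately. Along $v_1$, the multiplier is $\bigl|1-\eta\lambda_1(t)-\eta\rho\lambda_1(t)^2/\|\tilde x(t)\|_2\bigr|$; when $\|\tilde x(t)\|_2 \ge C_1\ths\rho$ with $C_1$ chosen close to $1$, this multiplier is bounded by some $c_1<1$ in absolute value away from the fixed point (this is the same computation as in \Cref{lem:controlonehop,lem:subtlecontrolonehop}, noting $\ths = \frac{\eta\lambda_1^2}{2-\eta\lambda_1}$ is the fixed point of the scalar map $s\mapsto |\eta\lambda_1^2-(1-\eta\lambda_1)s|$ after rescaling). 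Along $\Ptd$, the contraction factor is $\max_{j\in[2:M]}\bigl|1-\eta\lambda_j(t)-\eta\rho\lambda_j(t)^2/\|\tilde x(t)\|_2\bigr|$; using the eigengap $\lambda_1(t)-\lambda_2(t)\ge\Delta>0$ from \Cref{lem:mu-pl} and the lower bound $\|\tilde x(t)\|_2 \ge C_1\ths\rho$, this is also bounded by a constant $c_2<1$ strictly below $1$ — this is precisely where the hypothesis $\lambda_1>\lambda_2$ enters. Then I would combine: $\|\tilde x(t+1)\|_2 \le \sqrt{\tilde x_1(t)^2 c_1^2 + \|\Ptd(x(t))\tilde x(t)\|_2^2 c_2^2} + O(\eta\rho^2)$, and since $|\tilde x_1(t)| \le \frac12(\frac{\eta\lambda_1(t)^2}{2-\eta\lambda_1(t)}+\frac{\eta\lambda_2(t)^2}{2-\eta\lambda_2(t)})\rho$ is bounded away from $\ths\rho$ by a fixed fraction (because $\lambda_2<\lambda_1$ forces $\frac{\eta\lambda_2^2}{2-\eta\lambda_2}<\ths$ strictly), while $\|\tilde x(t)\|_2$ can be at most $\sI_1$-bounded i.e. $O(\rho)$, a short case analysis (as in the three cases of \Cref{lem:norminequal}) shows the weighted combination stays below $C_2\ths\rho$ for a suitable $C_2<1$.

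The main obstacle is choosing the constants $C_1,C_2$ uniformly over $t\in[\talign, T_3/\eta\rho^2+1]$, i.e.\ over all points of the compact set $K$, rather than pointwise; this requires taking infima/suprema of $\lambda_1(t),\lambda_2(t),\Delta(t)$ over $K$ using \Cref{lem:mu-pl} and checking that the bound $|\tilde x_1(t)| \le \frac12(\cdots)\rho$ is bounded away from the $\sI_1$ threshold by a gap that does not shrink as $\eta,\rho\to0$ (both sides scale like $\eta\rho$, so the ratio is $\eta$-independent, which is the key point). A secondary technical nuisance is that the $O(\eta\rho^2)$ error from \Cref{lem:tilde_x_update} and the $O(\eta\rho^2)$ slack in the definition of $\sI_j$ are lower order than the $\Theta(\eta\rho)$ scale of $\tilde x$, so they only perturb $C_2$ by $o(1)$ and can be swallowed once $\eta,\rho$ are sufficiently small; I would state this explicitly at the end.
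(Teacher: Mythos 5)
Your plan follows the paper's proof essentially verbatim: reduce to the idealized quadratic step via \Cref{lem:tilde_x_update} (with a surrogate absorbing the $O(\eta\rho^2)$ slack), dispose of the large-norm regime with \Cref{lem:norminequal}, and otherwise combine the $v_1$/$\Ptd$ decomposition of \Cref{lem:subtlecontrolonehop} with the eigengap and the hypothesis on $|\tx_1(t)|$, exactly as the paper does. One minor imprecision: for $C_1\ths\rho\le\|\tx(t)\|_2<\ths\rho$ the $v_1$-multiplier is actually greater than $1$ in absolute value, so the contraction does not come from that factor alone but from the weighted combination you write in the following step — which is precisely how the paper's case analysis closes the argument.
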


\begin{proof}[Proof of~\Cref{lem:subtlenorminequal}]

By~\Cref{lem:tilde_x_update}, if we write $\xloc(t+1)$ as shorthand of $\tx(t) - \eta A(t)\tx(t) - \eta\rho A^2(t)\frac{\tx(t)}{\|\tx(t) \|}$, then $\| \tx(t + 1) - \xloc(t+1) \| = O(\eta\rho^2)$.

Define $\sIq_j$ as $\{x | R_j(x) \le 0\}$. Then we can find a surrogate $\sx(t)$ such that $\sx(t) \in (\cap_{j \in [M]} \sIq_j) \cap K^{h}$ and $\| \sx(t) - \tx(t) \|_2 = O(\eta\rho^2)$. We will write $\sxq(t+1)$ as shorthand of $\sx(t) - \eta A(t)\sx(t) - \eta\rho A^2(t)\frac{\sx(t)}{\|\sx(t) \|}$. 

Let 
\begin{align*}
h(t) \triangleq
      (2 - t) \sqrt{\frac{1}{2t}\left(\frac{(\lspectraltwo - \Delta)^2}{\lspectraltwo^2} +  1 \right) + (1-\frac{1}{2t}\left(\frac{(\lspectraltwo - \Delta)^2}{\lspectraltwo^2} +  1 \right)) \max\{\frac{\lspectraltwo^2 - \mu^2}{\lspectraltwo^2},\frac{(\lspectraltwo - \Delta)^2}{\lspectraltwo^2} \}}.
\end{align*}

As $h(1) < 1$, we can choose $C_1 < 1$, such that $h(C_1) < 1$. 

We can further choose $C_2 = \max \{(h(C_1) + 1)/2, 1 - \frac{\mu^2}{3\lspectraltwo^2}\} < 1$.

We will discuss by cases

\begin{enumerate}
    \item If 
\begin{align*}
    \| \tx(t) \|_2 \ge \frac{\eta \lambda_1^4}{\lambda_1^2(1-\eta \lambda_D)+(\lambda_1^2 - \lambda_D^2)(1- \eta \lambda_1)} \rho 
\end{align*}

Then \begin{align*}
    \frac{\| \tx(t) \|_2}{\ths \rho} &= \frac{\lambda_1^2(2 - \eta \lambda_1)}{\lambda_1^2(1-\eta \lambda_D)+(\lambda_1^2 - \lambda_D^2)(1- \eta \lambda_1)} \\
    &=  \frac{\lambda_1^2(2 - \eta \lambda_1)}{\lambda_1^2(2-\eta \lambda_1 -\eta\lambda_D) - \lambda_D^2(1- \eta \lambda_1)} \\
    &\ge \frac{1}{1 - \frac{\lambda_D^2}{\lambda_1^2}\frac{1- \eta \lambda_1}{2 - \eta\lambda_1}} \ge  1 + \frac{\lambda_D^2}{\lambda_1^2}\frac{1- \eta \lambda_1}{2 - \eta\lambda_1} \ge 1 + \frac{\mu^2}{3\lspectraltwo^2}\,.
\end{align*}

In such case we have 
\begin{align*}
    \| \frac{\tx(t)}{\|\tx(t) \|} - \frac{\sx(t)}{\|\sx(t) \|} \| = O(\rho)\,.
\end{align*}
Then we have $\|\sxq(t+1) - \xloc(t+1)\| = O(\eta\rho^2)$. By \Cref{lem:norminequal}, 
we have that
\begin{align*}
\|\tx(t+1) \|_2 &\le \|\tx(t + 1) - \xloc(t+1)\|_2 + \|\tx(t+1) - \sxq(t + 1) \| + \|\sxq(t+1) \| \\
     &\le \max(\ths\rho - \eta\rho \frac{\lambda_D^4}{2\lambda_1^2},\eta \rho \lambda_1^2 -(1 - \eta \lambda_1)\| \tx(t) \|_2 ) + O(\eta\rho^2)\\
    &\le \max(1 - \frac{\lambda_D^4 (2 - \eta \lambda_1)}{2 \lambda_1^4}, (2 - \eta \lambda_1) - (1 - \eta \lambda_1)(1 + \frac{\mu^2}{3\lspectraltwo^2})) \ths \rho \\
    &\le (1 - \frac{\mu^2}{3\lspectraltwo^2})\ths \rho \le C_2 \ths\rho \,.
\end{align*}
\item 
If
\begin{align*}
    \|\tx(t) \|_2 &\le \frac{\eta \lambda_1^4}{\lambda_1^2(1-\eta \lambda_D)+(\lambda_1^2 - \lambda_D^2)(1- \eta \lambda_1)}\rho \le \frac{\eta \lambda_1^2}{1 - \eta \lambda_1} \rho.
\end{align*}

Then we have
\begin{align*}
    \frac{|-\eta\rho \lambda_D^2 + (1 - \eta \lambda_D)\| \tx(t) \|_2|}{ \eta\rho \lambda_1^2 - (1 - \eta \lambda_1)\| \tx(t) \|_2} &\le \frac{\lambda_1^2 - \lambda_D^2}{\lambda_1^2}. \\
    \frac{|\eta\rho \lambda_2^2 - (1 - \eta \lambda_2)\| \tx(t) \|_2|}{\eta \rho \lambda_1^2 - (1 - \eta \lambda_1)\| \tx(t) \|_2} &\le \frac{\lambda_2^2}{\lambda_1^2}.
\end{align*}

By~\Cref{lem:subtlecontrolonehop},
\begin{align*}
    &\|\xloc(t+1) \|_2  \\
    \le& (\eta \rho \lambda_1^2 - (1 - \eta\lambda_1) \| \tx(t)\|_2) \sqrt{\frac{\|\tx^2_1(t)\|_2}{\|\tx(t) \|_2^2} + (1-\frac{\|\tx^2_1(t)\|_2}{\|\tx(t) \|_2^2}) \max\{\frac{\lambda_1^2 - \lambda_D^2}{\lambda_1^2},\frac{\lambda_2^2}{\lambda_1^2} \}} \\
    \le& (\eta \rho \lambda_1^2 - (1 - \eta\lambda_1) \| \tx(t)\|_2) \sqrt{\frac{\|\tx^2_1(t)\|_2}{\|\tx(t) \|_2^2} + (1-\frac{\|\tx^2_1(t)\|_2}{\|\tx(t) \|_2^2}) \max\{\frac{\lspectraltwo^2 - \mu^2}{\lspectraltwo^2},\frac{(\lspectraltwo - \Delta)^2}{\lspectraltwo^2} \}}\,.
\end{align*}

As
\begin{align*}
    \| \tx_1(t) \|_2 &\le \frac{1}{2}\left(\frac{\eta \lambda_1^2 }{2 - \eta \lambda_1} + \frac{\eta  \lambda_2^2 }{2 - \eta \lambda_2} \right)\rho \,. 
\end{align*}
For $\| \tx(t) \|_2 \ge  \ths\rho C_1$,
\begin{align*}
    \frac{\|\tx_1(t)\|_2}{\|\tx(t) \|_2} \le \frac{1}{2}\left(\frac{\lambda_2^2 (2 - \eta \lambda_1)}{\lambda_1^2(2 - \eta \lambda_2)} +  1 \right)/C_1 \le \frac{1}{2}\left(\frac{\lambda_2^2}{\lambda_1^2} +  1 \right)/C_1  \le \frac{1}{2C_1}\left(\frac{(\lspectraltwo - \Delta)^2}{\lspectraltwo^2} +  1 \right).
\end{align*}

After plugging in, we have that 
\begin{align*}
   \|\tx(t+1) \|_2 \le \| \xloc(t+1)\|_2 + O(\eta\rho^2) \le h(C_1) \ths \rho + O(\eta\rho^2) \le C_2 \ths \rho.
\end{align*}
\end{enumerate}
This concludes the proof.
\end{proof}

\begin{lemma}
\label{lem:one_step}
Under the condition of~\Cref{thm:nsamphase2}, for any $t \ge 0$ satisfying that (1) $x(t) \in (\cap_{j \in [M]} \sI_j)\cap K^{h}$, (2) $t  \not \in S $, it holds that $t + 1 \in S$.

Moreover, if $| \tilde x_1 (t) | \ge \Omega(\rho^2)$ and $\|\tilde x(t)\|_2 \le  \eta\rho \lambda_1^2  - \Omega(\rho^2)$, then it holds that $\|\tilde x_1(t+1)\| \ge \Omega(\rho^2)$.
\end{lemma}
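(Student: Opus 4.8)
\textbf{Proof proposal for \Cref{lem:one_step}.}

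The plan is to reduce this lemma to the quadratic-loss analysis of \Cref{sec:quadalign} via the Taylor-expansion bridge established in \Cref{lem:tilde_x_update,lem:localupdate}. Recall that for $x(t)$ in an $O(\rho)$-neighborhood of $\Gamma$ we have the update $\tx(t+1) = \tx(t) - \eta A(t)\tx(t) - \eta\rho A^2(t)\frac{\tx(t)}{\|\tx(t)\|} + O(\eta\rho^2)$, where $A(t) = \nabla^2 L(\Phi(x(t)))$, so $\tx(t)$ behaves like the rescaled iterate of \Cref{eq:quadupdate} up to a higher-order $O(\eta\rho^2)$ perturbation. The condition $x(t)\in\cap_j\sI_j$ means $\|\Pjd(x(t))\tx(t)\|\le \eta\rho\lambda_j^2(t)+O(\eta\rho^2)$ for every $j$, i.e. $\tx(t)$ lies (approximately) in the intersection of all the invariant sets from \Cref{lem:invariant_set}; in particular $\tx(t)\in\sI_1$ gives $\|\tx(t)\|\le \eta\rho\lambda_1^2(t)+O(\eta\rho^2)$.

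For the first claim: assume $t\notin S$, i.e. $\|\tx(t)\| > \frac{\eta\lambda_1^2(t)}{2-\eta\lambda_1(t)}\rho + O(\eta\rho^2)$. The strategy is to mirror \Cref{lem:controlonehop}: since $\tx(t)$ lies in $\sI_1$, the operator norm $\|I - \eta A(t) - \eta\rho\frac{A^2(t)}{\|\tx(t)\|}\|_2$ is at most $\eta\rho\frac{\lambda_1^2(t)}{\|\tx(t)\|} - (1-\eta\lambda_1(t))$ (the dominating eigenvalue direction being the top one, because $\|\tx(t)\|\le\eta\rho\lambda_1^2(t)$ forces the top coordinate of $I-\eta A(t)-\eta\rho A^2(t)/\|\tx(t)\|$ to have modulus $\ge 1$). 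Hence $\|\tx(t+1)\| \le \bigl(\eta\rho\lambda_1^2(t)/\|\tx(t)\| - (1-\eta\lambda_1(t))\bigr)\|\tx(t)\| + O(\eta\rho^2) = \eta\rho\lambda_1^2(t) - (1-\eta\lambda_1(t))\|\tx(t)\| + O(\eta\rho^2)$. Plugging in $\|\tx(t)\| > \frac{\eta\lambda_1^2(t)}{2-\eta\lambda_1(t)}\rho$ and using $\lambda_1(t+1)=\lambda_1(t)+O(\eta\rho^2)$ from \Cref{lem:eigendiff} and \Cref{lem:nsam_bounded_mov} yields $\|\tx(t+1)\| < \frac{\eta\lambda_1^2(t+1)}{2-\eta\lambda_1(t+1)}\rho + O(\eta\rho^2)$, i.e. $t+1\in S$ (after checking the $O(\eta\rho^2)$ slack matches the constant hidden in the definition of $S$, which is where \Cref{lem:s_infinite} pins it down). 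One must also verify $x(t+1)\in\cap_j\sI_j$ is preserved so that $t+1$ is a legitimate member of $S$ as defined; this follows from \Cref{lem:n_sam_prepare} applied at step $t$.

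For the second (``moreover'') claim: assume additionally $|\tx_1(t)|\ge\Omega(\rho^2)$ and $\|\tx(t)\|\le \eta\rho\lambda_1^2(t) - \Omega(\rho^2)$. The evolution of the top coordinate is $\tx_1(t+1) = \bigl(1 - \eta\lambda_1(t) - \eta\rho\frac{\lambda_1^2(t)}{\|\tx(t)\|}\bigr)\tx_1(t) + O(\eta\rho^2)$, reusing the argument of \Cref{lem:short_increase}. Since $\|\tx(t)\|\le\eta\rho\lambda_1^2(t)-\Omega(\rho^2)$, we get $\eta\rho\frac{\lambda_1^2(t)}{\|\tx(t)\|} \ge 1 + \Omega(\rho)/(\eta\lambda_1^2(t))$ — concretely the multiplier $1 - \eta\lambda_1(t) - \eta\rho\lambda_1^2(t)/\|\tx(t)\|$ is at most $-1 - \Omega(\rho/\eta)$ in absolute value, hence has modulus bounded below by $1$ and in fact by $1+c$ for a constant; so $|\tx_1(t+1)| \ge (1+c)|\tx_1(t)| - O(\eta\rho^2) = \Omega(\rho^2)$ provided $|\tx_1(t)|$ dominates the additive error, which holds because $|\tx_1(t)| = \Omega(\rho^2)$ with a constant we can take large relative to the $O(\eta\rho^2)$ term once $\eta$ is small. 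The main obstacle is bookkeeping the additive $O(\eta\rho^2)$ error against the lower bounds $\Omega(\rho^2)$ and the slack defining $S$: one must ensure that the constant in $\Omega(\rho^2)$ for $|\tx_1|$ is chosen so that the expansion multiplier (which is only $\ge 1$, not $\ge 1+\Omega(1)$, at the boundary case $\|\tx(t)\|$ barely below $\eta\rho\lambda_1^2(t)$) still beats the error — this is exactly why the hypothesis asks for the strict gap $\Omega(\rho^2)$ rather than mere membership in $\sI_1$, and tracking this gap carefully through the bound is the delicate part.
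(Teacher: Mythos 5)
Your high-level plan — pass to $\tx$, use the Taylor bridge of \Cref{lem:tilde_x_update} to reduce to the quadratic update, and invoke the invariant-set machinery — is the same as the paper's (which additionally routes through a surrogate point $\sx(t)\in\cap_j\sIq_j$ to absorb the $O(\eta\rho^2)$ error before applying \Cref{lem:norminequal}). However, both of your key estimates have genuine gaps.

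For the first claim, your operator-norm bound is false precisely in the regime you need it. The top eigendirection's multiplier is $1-\eta\lambda_1-\eta\rho\lambda_1^2/\|\tx(t)\|$, and its modulus is $\ge 1$ if and only if $\eta\rho\lambda_1^2/\|\tx(t)\|\ge 2-\eta\lambda_1$, i.e.\ $\|\tx(t)\|\le\ths\rho$ — which is exactly $t\in S$, the negation of your hypothesis. When $t\notin S$ (say $\|\tx(t)\|$ near $\eta\rho\lambda_1^2$), the top multiplier has modulus about $\eta\lambda_1\ll 1$, while a small eigenvalue $\lambda_j$ contributes a multiplier $1-\eta\lambda_j-\eta\rho\lambda_j^2/\|\tx(t)\|$ close to $+1$, which can dominate and exceed $\eta\rho\lambda_1^2/\|\tx(t)\|-(1-\eta\lambda_1)$. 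So $\|\tx(t+1)\|\le\eta\rho\lambda_1^2-(1-\eta\lambda_1)\|\tx(t)\|$ cannot be obtained from a global operator-norm bound; one needs the eigenspace-by-eigenspace decomposition of \Cref{lem:norminequal}, which exploits membership in \emph{all} the $\sI_j$ (so the components along small eigendirections are already $O(\eta\rho\lambda_j^2)$ and stay harmless even when multiplied by something near $1$), and whose conclusion is a maximum of two terms, each separately $\le\ths\rho$.

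For the ``moreover'' claim the arithmetic is off: from $\|\tx(t)\|\le\eta\rho\lambda_1^2-\Omega(\rho^2)$ you get $\eta\rho\lambda_1^2/\|\tx(t)\|\ge 1+\Omega(\rho/\eta)$, hence the multiplier is $1-\eta\lambda_1-(1+\Omega(\rho/\eta))=-\eta\lambda_1-\Omega(\rho/\eta)$, whose modulus is $\eta\lambda_1+\Omega(\rho/\eta)$ — potentially far below $1$, not ``at most $-1-\Omega(\rho/\eta)$.'' Multiplying $|\tx_1(t)|=\Omega(\rho^2)$ by a factor of order $\rho/\eta$ does not return $\Omega(\rho^2)$. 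The paper's proof closes this with a case split on $\|\tx(t)\|$: if $\|\tx(t)\|\le C\eta\rho$ with $C=\sqrt{\tfrac12(\lambda_1^4+\lambda_2^4)}<\lambda_1^2$, the multiplier is bounded below by the constant $\lambda_1^2/C-1>0$ (here the eigengap enters) and $|\tx_1(t)|\ge\Omega(\rho^2)$ suffices; if $\|\tx(t)\|\ge C\eta\rho$, then membership in $\sI_2$ forces $|\tx_1(t)|\ge\Omega(\eta\rho)$, and the product of $\Omega(\eta\rho)$ with the multiplier $\Omega(\rho/\eta)$ is $\Omega(\rho^2)$. Your argument uses neither the $\sI_2$ constraint nor this dichotomy, so it does not establish the conclusion.
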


\begin{proof}[Proof of~\Cref{lem:one_step}]

As $t \not \in S$, it holds that
\begin{align*}
    \|\tx(t)\| \ge \ths \rho + \Theta(\eta\rho^2).
\end{align*}

By~\Cref{lem:tilde_x_update}, if we write $\xloc(t+1)$ as shorthand of $\tx(t) - \eta A(t+1)\tx(t) - \eta\rho A^2(t)\frac{\tx(t)}{\|\tx(t) \|}$, then $\| \tx(t + 1) - \xloc(t+1) \| = O(\eta\rho^2)$.

Define $\sIq_j$ as $\{x | R_j(x) \le 0\}$. Then we can find a surrogate $\sx(t)$ such that $\sx(t) \in (\cap_{j \in [M] } \sIq_j) \cap K^{h}$, and $\| \sx(t) - \tx(t) \|_2 = O(\eta\rho^2)$. We will write $\sxq(t+1)$ as shorthand of $\sx(t) - \eta A(t)\sx(t) - \eta\rho A^2(t)\frac{\sx(t)}{\|\sx(t) \|}$.

As $\|\tx(t)\| = \Omega(\eta\rho)$, we have
\begin{align*}
    \| \frac{\sx(t)}{\|\sx(t) \|} - \frac{\tx(t)}{\|\tx(t) \|} \|_2 = O(\rho)\,.
\end{align*}
Hence we have that $\|\tx(t+1) - \sxq(t+1)\| = \|\tx(t+1) - \xloc(t+1)\| + \|\xloc(t+1) - \sxq(t+1) \| = O(\eta\rho^2)$

Notice we have $\|\sx(t)\|_2 \ge \ths \rho$ for properly chosen function in the definition $S$, hence, by~\Cref{lem:norminequal} \begin{align*}
    \| \sxq(t+1) \|_2 \le \ths \rho.
\end{align*}
This further implies $t + 1 \in S$.

We also have 
\begin{align*}
     |\langle \sxq(t+1), v_1 \rangle|  &= |\langle \sx(t), v_1 \rangle - \eta \lambda_1 \langle \sx(t), v_1 \rangle - \eta \rho \lambda_1^2 \frac{ \langle \sx(t), v_1 \rangle}{\| \sx(t)\|} |\\
     &= |\langle \sx(t), v_1 \rangle| (\eta \lambda_1 + \frac{\eta\rho\lambda_1^2}{\| \sx(t)\|} - 1) \\
\end{align*}
We will discuss by cases. Let $C$ satisfies that $C = \sqrt{\frac{1}{2}\left( \lambda_2^4 + \lambda_1^4\right)}$.
\begin{enumerate}
    \item If $\|\sx(t)\| \le C \eta\rho$, then as we have  $\frac{\lambda_1^2}{C} \ge \frac{\sqrt{2}\lspectraltwo^2}{\sqrt{\lspectraltwo^2 + (\lspectraltwo - \Delta)^2)}}$.
    \begin{align*}
        |\langle \sxq(t+1), v_1 \rangle|  &\ge  |\langle \sx(t), v_1 \rangle| (\frac{\lambda_1^2}{C} - 1) \ge \Omega(\rho^2).
    \end{align*}
    \item  If $\|\sx(t)\| \ge C \eta\rho$, then as $x(t) \in \sI_2$, we have that $|\langle \sx(t), v_1 \rangle|  \ge \Omega(\eta\rho)$. Then as $\|\sx(t)\|\le \|\tx(t) \|_2 + O(\eta\rho^2) \le \lambda_1^2 \eta\rho - \Omega(\rho^2)$, we have that
    \begin{align*}
        |\langle \sxq(t+1), v_1 \rangle|  &\ge  |\langle \sx(t), v_1 \rangle| (\frac{\lambda_1^2\eta\rho}{\lambda_1^2 \eta\rho - \Omega(\rho^2)} - 1) \ge \Omega(\rho^2).
    \end{align*}
\end{enumerate}

By previous approximation results, we have that $\| \tilde x_1(t+1)\| \ge \Omega(\rho^2)$.
\end{proof}

\begin{lemma}
\label{lem:s_infinite}
Under the condition of~\Cref{thm:nsamphase2}, for any $t \ge 0$ satisfying that (1) $x(t) \in (\cap_{j \in [M]} \sI_j)\cap K^{15h/16}$, (2) $t \in S $, it holds that $\nexts(t)$ is well defined and $\nexts(t) \le t + 2$.
\end{lemma}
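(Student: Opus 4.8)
The plan is to deduce Lemma~\ref{lem:s_infinite} from the one-step return property already proved in Lemma~\ref{lem:one_step}. Since $t\in S$, by definition $\|\tilde x(t)\|_2=O(\rho)$; because $x(t)\in K^{15h/16}\subset K^h$, Lemma~\ref{lem:nsam_tilde_x_bound} gives $\|x(t)-\Phi(x(t))\|_2=O(\rho)$ and hence $\|\nabla L(x(t))\|_2=O(\rho)$ by Lemma~\ref{lem:revertbounddl}. Then $\|x(t+1)-x(t)\|_2=O(\eta\rho)$ by Lemma~\ref{lem:boundedphi}, so for $\eta,\rho$ sufficiently small $\dist(x(t+1),K)\le 15h/16+O(\eta\rho)\le h$, i.e. $x(t+1)\in K^h$.

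The first substantive step is to verify $x(t+1)\in\cap_{j\in[M]}\sI_j$. This is the same bookkeeping as in the proof of Lemma~\ref{lem:n_sam_prepare}: for each $j$, if $R_j(x(t))\ge 0$ then $R_j(x(t+1))+\lambda_j^2(t+1)\eta\rho\le(1-\eta\mu)\bigl(R_j(x(t))+\lambda_j^2(t)\eta\rho\bigr)$, while if $R_j(x(t))<0$ then $R_j(x(t+1))=O(\eta\rho^2)$; combining with $R_j(x(t))\le O(\eta\rho^2)$ (which holds since $x(t)\in\sI_j$) and the eigenvalue drift $|\lambda_j^2(t)-\lambda_j^2(t+1)|=O(\eta\rho^2)$ (standard eigenvalue perturbation together with Lemma~\ref{lem:nsam_bounded_mov}) yields $R_j(x(t+1))\le O(\eta\rho^2)$, i.e. $x(t+1)\in\sI_j$ for every $j$. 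Lemma~\ref{lem:n_sam_prepare} is stated for $x(t)\in K^{3h/4}$, but its proof only uses $\overline{x(t)x(t+1)}\subset K^h$ and $\overline{\Phi(x(t))\Phi(x(t+1))}\subset K^r$, both of which hold here, so the estimates transfer without change.

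Given $x(t+1)\in(\cap_j\sI_j)\cap K^h$, the conclusion follows by cases. If $t+1\in S$ then $\nexts(t)=t+1\le t+2$. If $t+1\notin S$, apply Lemma~\ref{lem:one_step} at time index $t+1$ to obtain $t+2\in S$, so $\nexts(t)=t+2$. In either case $\{s>t\}\cap S\neq\emptyset$, hence $\nexts(t)$ is well defined and $\nexts(t)\le t+2$.

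The main obstacle is the middle step: propagating the \emph{approximate} invariance of the sets $\sI_j$ through a single SAM update. Each $\sI_j$ is invariant only up to an $O(\eta\rho^2)$ slack, its defining eigenvalues $\lambda_j(\nabla^2 L(\Phi(x(t))))$ themselves move with the iterate, and the local dynamics of $\tilde x$ agree with the quadratic model only up to an $O(\eta\rho^2)$ surrogate error (Lemma~\ref{lem:tilde_x_update}); one must check that the $O(\eta\mu)$ contraction dominates all these $O(\eta\rho^2)$ errors so that membership in $\cap_j\sI_j$ is genuinely restored rather than slowly degrading over steps. Once that is granted, everything else is a direct appeal to Lemmas~\ref{lem:nsam_tilde_x_bound}, \ref{lem:boundedphi}, and \ref{lem:one_step}.
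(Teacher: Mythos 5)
Your proposal is correct and follows essentially the same route as the paper: show $x(t+1)\in(\cap_{j\in[M]}\sI_j)\cap K^h$ via the approximate-invariance argument (the paper cites the quadratic-case Lemma~\ref{lem:invariant_set}, you cite its general-loss counterpart Lemma~\ref{lem:n_sam_prepare}, which amounts to the same bookkeeping), then conclude by applying Lemma~\ref{lem:one_step} at step $t+1$ when $t+1\notin S$. Your explicit attention to the $O(\eta\rho^2)$ slack in the $\sI_j$ membership is a fair elaboration of what the paper leaves implicit, but it is not a different argument.
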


\begin{proof}[Proof of~\Cref{lem:s_infinite}]

Following similar argument in~\Cref{lem:invariant_set}, we have that $x(t+1) \in (\cap_{j \in [M]} \sI_j) \cap K^h$.

If $t + 1 \not \in S$, then we can apply~\Cref{lem:one_step} to show that $t+2 \in S$.
\end{proof}

\begin{lemma}
\label{lem:local_non_decrease}
Under the condition of~\Cref{thm:nsamphase2}, there exists constant $C > 0$ independent of $\eta$ and $\rho$, assuming that (1) $x(t) \in (\cap_{j \in [M]} \sI_j) \cap K^{7h/8}$, (2) $t \in S $, (3)  $ \Omega(\rho^2) \le \| \tx_1(t)\|$, then
\begin{align*}
 \| \tx_1(\nexts(t)) \| &\ge \| \tx_1(t) \| - O(\eta\rho^2)\,.
\end{align*}
\end{lemma}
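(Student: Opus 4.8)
The plan is to transport the quadratic-loss argument behind \Cref{lem:short_increase,lem:monotoneproj} to general loss through the $\tx$-update of \Cref{lem:tilde_x_update}, absorbing every discrepancy (the drift of $\Phi$, of the Hessian, and of its top eigenpair between consecutive steps) into an additive $O(\eta\rho^2)$ error. First I would collect the standing facts on the short window $\{t,t+1,t+2\}$. Since $x(t)\in(\cap_j\sI_j)\cap K^{7h/8}$ and $t\in S$, \Cref{lem:nsam_tilde_x_bound} together with membership in $\sI_1$ gives $\|\tx(t)\|_2=O(\eta\rho)$, while condition~(3) and $|\tx_1(t)|\le\|\tx(t)\|_2$ give $\|\tx(t)\|_2=\Omega(\rho^2)$, so the normalized step is well defined; \Cref{lem:s_infinite} gives $\nexts(t)\le t+2$ and (as in its proof, arguing like \Cref{lem:invariant_set}) keeps $x(t),x(t+1),x(t+2)$ inside $\cap_j\sI_j$ and in a fixed neighborhood of $K$ where \Cref{lem:tilde_x_update} applies, so all three satisfy $\|\tx(\cdot)\|_2=O(\eta\rho)$; and \Cref{lem:nsam_bounded_mov} gives $\|\Phi(x(t+1))-\Phi(x(t))\|_2=O(\eta\rho^2)$, hence $|\lambda_1(t)-\lambda_1(t+1)|=O(\eta\rho^2)$ by \Cref{lem:eigendiff} and $\|v_1(t)-v_1(t+1)\|_2=O(\eta\rho^2)$ by \Cref{thm:Davis-Kahan}, using the uniform positive eigengap from \Cref{lem:mu-pl} and \Cref{assum_eigengap}.

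Next I would read off the one-step recursion for the top coordinate. Projecting the identity of \Cref{lem:tilde_x_update} onto the unit eigenvector $v_1(t)$ and using $\nabla^2 L(\Phi(x(t)))v_1(t)=\lambda_1(t)v_1(t)$ gives
$$\langle\tx(t+1),v_1(t)\rangle=a(t)\,\tx_1(t)+O(\eta\rho^2),\qquad a(t)\coloneqq 1-\eta\lambda_1(t)-\frac{\eta\rho\lambda_1^2(t)}{\|\tx(t)\|_2}.$$
Since $t\in S$ forces $\|\tx(t)\|_2\le\frac{\eta\lambda_1^2(t)}{2-\eta\lambda_1(t)}\rho+O(\eta\rho^2)$, we get $\eta\rho\lambda_1^2(t)/\|\tx(t)\|_2\ge 2-\eta\lambda_1(t)-O(\rho)$, hence $a(t)\le-1+O(\rho)$ and in particular $|a(t)|\ge 1-O(\rho)$. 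Converting $v_1(t)\to v_1(t+1)$ costs at most $\|\tx(t+1)\|_2\,\|v_1(t)-v_1(t+1)\|_2=O(\eta^2\rho^3)$. In Case~1, where $\nexts(t)=t+1$, this already closes the argument:
$$|\tx_1(t+1)|\ge|a(t)|\,|\tx_1(t)|-O(\eta\rho^2)\ge|\tx_1(t)|-O(\rho)\,|\tx_1(t)|-O(\eta\rho^2)\ge|\tx_1(t)|-O(\eta\rho^2),$$
where the last step uses $|\tx_1(t)|\le\|\tx(t)\|_2=O(\eta\rho)$.

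For Case~2, where $t+1\notin S$ and $\nexts(t)=t+2$, I would iterate the recursion once more to obtain $\tx_1(t+2)=a(t+1)\,a(t)\,\tx_1(t)+O(\eta\rho^2)$; the error is legitimate because $t+1\notin S$ gives $\|\tx(t+1)\|_2>\frac{\eta\lambda_1^2(t+1)}{2-\eta\lambda_1(t+1)}\rho$, whence $|a(t+1)|\le 1$. The crux is to show $|a(t)\,a(t+1)|\ge 1-O(\rho)$, the general-loss counterpart of the two-step estimate in \Cref{lem:monotoneproj}. As in the proofs of \Cref{lem:one_step,lem:s_infinite}, I would pass to a surrogate $\sx(t)$ lying exactly in the quadratic invariant sets $\sIq_j$ with $\|\sx(t)-\tx(t)\|_2=O(\eta\rho^2)$, run the exact quadratic dynamics on it, and invoke \Cref{lem:controlonehop,lem:monotoneproj}: there the algebra shows that the product of the two top-coordinate contraction factors is at least $1$ once the relevant norm lies below $\frac{\eta\rho\lambda_1^2}{2-\eta\lambda_1}$, after reduction to the sign fact
$$\bigl((2-\eta\lambda_1(t))\|\tx(t)\|_2-\eta\rho\lambda_1^2(t)\bigr)\bigl((1-\eta\lambda_1(t))\|\tx(t)\|_2-\eta\rho\lambda_1^2(t)\bigr)\ge 0,$$
which holds (up to $O(\eta\rho^2)$) because $t\in S$ makes both factors nonpositive; transferring back to $\tx$ degrades the bound by $O(\rho)$. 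Combining $|a(t)a(t+1)|\ge 1-O(\rho)$, $|a(t+1)|\le 1$, $|\tx_1(t)|=O(\eta\rho)$, and one final $v_1(t+1)\to v_1(t+2)$ conversion of size $O(\eta^2\rho^3)$ yields $|\tx_1(t+2)|\ge|\tx_1(t)|-O(\eta\rho^2)$.

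The main obstacle is Case~2: carrying the $O(\eta\rho^2)$ error honestly through two steps while re-deriving the quadratic product identity in the presence of a moving Hessian. One has to verify that the step-to-step drift of $\lambda_1$ and of $v_1$ — each only $O(\eta\rho^2)$ — always enters after multiplication by $\|\tx(\cdot)\|_2=O(\eta\rho)$ or by a bounded contraction factor, so it contributes only at the additive scale $O(\eta\rho^2)$ and never corrupts the sign computation forcing $|a(t)a(t+1)|\ge 1-O(\rho)$. A secondary care point is that condition~(3) is exactly what keeps $\|\tx(t)\|_2$ bounded away from zero, so that all normalized-direction quantities and the recursions above remain well defined.
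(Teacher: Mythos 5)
Your proposal is correct and follows exactly the route the paper intends: the paper's own proof of this lemma is a one-line deferral to "standard approximation as in previous proof and Lemma~\ref{lem:monotoneproj}," i.e.\ transporting the quadratic two-step product identity through the $\tx$-update of Lemma~\ref{lem:tilde_x_update} with all drift absorbed into $O(\eta\rho^2)$. Your case split on $\nexts(t)\in\{t+1,t+2\}$ and the error accounting (each perturbation entering multiplied by $\|\tx(\cdot)\|_2=O(\eta\rho)$ or a bounded factor) faithfully fill in what the paper leaves implicit.
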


\begin{proof}[Proof of~\Cref{lem:local_non_decrease}]

This is by standard approximation as in previous proof and~\Cref{lem:monotoneproj}.
\end{proof}

\begin{lemma}
\label{lem:local_alignment}
Under the condition of~\Cref{thm:nsamphase2}, there exists constant $C > 0$ independent of $\eta$ and $\rho$, assuming that (1) $x(t) \in (\cap_{j \in [M]} \sI_j) \cap K^{7h/8}$, (2) $t \in S $ (3)  $ \Omega(\rho^2) \le \| \tx_1(t)\| \le \frac{1}{2}\left(\frac{\eta \lambda_1^2 }{2 - \eta \lambda_1} + \frac{\eta  \lambda_2^2 }{2 - \eta \lambda_2} \right) \rho $, then
\begin{align*}
 \| \tx_1(\nexts(t)) \| &\ge \min\{(1 + C\eta)\| \tx_1(t) \|,  \frac{1}{2}\left(\frac{\eta \lambda_1^2 }{2 - \eta \lambda_1} + \frac{\eta  \lambda_2^2 }{2 - \eta \lambda_2} \right) \rho \}\,,\\
 \text{ or }  \| \tx_1(\nexts(\nexts(t))) \| &\ge \min\{ (1 + C\eta)\| \tx_1(t) \|, \frac{1}{2}\left(\frac{\eta \lambda_1^2 }{2 - \eta \lambda_1} + \frac{\eta  \lambda_2^2 }{2 - \eta \lambda_2} \right) \rho\}.
\end{align*}

\end{lemma}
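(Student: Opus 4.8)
The plan is to reduce, exactly as in the proofs of \Cref{lem:one_step,lem:subtlenorminequal,lem:subtlecontrolonehop}, the one-step dynamics of $\tx(t)$ to the quadratic update (\Cref{eq:quadupdate}) with matrix $A(t)=\nabla^2 L(\Phi(x(t)))$ up to an additive $O(\eta\rho^2)$ error. Concretely I would introduce a surrogate $\sx(t)$ lying in the exact invariant sets $\cap_{j}\sIq_j$ with $\|\sx(t)-\tx(t)\|_2=O(\eta\rho^2)$, so that \Cref{lem:tilde_x_update} gives $\tx(t+1)=\sxq(t+1)+O(\eta\rho^2)$, where $\sxq(t+1)=\sx(t)-\eta A(t)\sx(t)-\eta\rho A^2(t)\sx(t)/\|\sx(t)\|_2$ is a genuine quadratic step. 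Projecting onto $v_1(t)$ yields the scalar recursion $\langle \sxq(t+1),v_1(t)\rangle=\bigl(\eta\rho\lambda_1^2(t)/\|\sx(t)\|_2+\eta\lambda_1(t)-1\bigr)\langle\sx(t),v_1(t)\rangle$, and since $t\in S$ forces $\|\sx(t)\|_2\le \ths\rho+O(\eta\rho^2)$, this multiplier is at least $1$ (this is the quadratic content of \Cref{lem:short_increase,lem:monotoneproj}). The task is therefore to upgrade the multiplier from $\ge 1$ to $\ge 1+C\eta$ whenever $\|\tx_1(t)\|_2$ is still below the cap $\tfrac12\bigl(\tfrac{\eta\lambda_1^2}{2-\eta\lambda_1}+\tfrac{\eta\lambda_2^2}{2-\eta\lambda_2}\bigr)\rho$.

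The core is a case analysis on $\|\tx(t)\|_2$. If $\|\tx(t)\|_2\le C_1\ths\rho$ with $C_1<1$ the constant from \Cref{lem:subtlenorminequal}, then the multiplier is bounded below by a constant $1+c>1$ (at $\eta=0$ it equals $2/C_1-1$); since $\|\tx_1(t)\|_2=\Omega(\rho^2)$ with a threshold chosen large relative to the $O(\eta\rho^2)$ surrogate error, this survives to give $\|\tx_1(\nexts(t))\|_2\ge(1+c/2)\|\tx_1(t)\|_2$, treating both $\nexts(t)=t+1$ and, when $\nexts(t)=t+2$, combining the two per-step multipliers via the two-step coefficient computation of \Cref{lem:monotoneproj} case 2 together with \Cref{lem:one_step}. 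If instead $\|\tx(t)\|_2>C_1\ths\rho$, then \Cref{lem:subtlenorminequal} — whose hypothesis $\|\tx_1(t)\|_2\le\tfrac12(\cdots)\rho$ is exactly what we assume — forces $\|\tx(t+1)\|_2\le C_2\ths\rho$ with $C_2<1$, hence $t+1\in S$ and $\nexts(t)=t+1$; \Cref{lem:one_step} and the step-$t$ multiplier $\ge 1$ give $\|\tx_1(t+1)\|_2=\Omega(\rho^2)$ and $\|\tx_1(t+1)\|_2\ge\|\tx_1(t)\|_2-O(\eta\rho^2)$. If $\|\tx_1(t+1)\|_2$ has reached the cap we are done; otherwise step $t+1$ falls into the first case (with $\|\tx(t+1)\|_2\le C_2\ths\rho$), so applying that argument at $t+1$ gives $\|\tx_1(\nexts(\nexts(t)))\|_2\ge(1+C\eta)\|\tx_1(t)\|_2$ or the cap. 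Throughout one uses \Cref{lem:s_infinite} so that $\nexts(t)\le t+2$ and the (at most three) iterates involved stay in $(\cap_j\sI_j)\cap K^{7h/8}$.

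I expect the main obstacle to be the constant bookkeeping rather than any conceptual step: the targeted gain $C\eta\|\tx_1(t)\|_2$ is of the same order as the accumulated $O(\eta\rho^2)$ errors from the surrogate reduction and from replacing $A(t)$-dynamics by quadratic dynamics, so the argument only closes once the implicit constant in the hypothesis $\|\tx_1(t)\|_2\ge\Omega(\rho^2)$ is fixed large enough, consistently with the matching $\Omega(\rho^2)$ thresholds in \Cref{assum:reg} and \Cref{lem:one_step}, and once the assumed scaling $\rho/\eta\to 0$ is used to absorb lower-order terms. A secondary delicate point is the subregime where $\|\tx(t)\|_2$ is much smaller than $C_1\ths\rho$: there $\|\tx(t+1)\|_2$ rebounds toward the $\sI_1$-boundary $\eta\rho\lambda_1^2$ and the second-step multiplier degrades toward $\eta\lambda_1$, so one must check that the product of the (large) first-step and (small) second-step multipliers still exceeds $1+C\eta$ — which once more reduces to a constant comparison governed by the $\Omega(\rho^2)$ threshold.
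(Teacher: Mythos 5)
Your proposal is correct and follows essentially the same route as the paper's proof: reduction to the quadratic update via \Cref{lem:tilde_x_update} and a surrogate in the exact invariant sets, the case split on $\|\tx(t)\|_2$ against $C_1\ths\rho$ using \Cref{lem:subtlenorminequal}, the one-step/two-step multiplier computations from \Cref{lem:short_increase,lem:monotoneproj,lem:controlonehop}, and \Cref{lem:s_infinite,lem:one_step,lem:local_non_decrease} to handle $\nexts$. The delicate points you flag (absorbing the $O(\eta\rho^2)$ surrogate error into the $\Omega(\rho^2)$ threshold on $\|\tx_1(t)\|$, and the product of a large first-step and degraded second-step multiplier when $\|\tx(t)\|$ is far below $C_1\ths\rho$) are exactly the ones the paper resolves, via the bound $(1-\eta\lambda_1)^2+\frac{\eta\lambda_1}{C_1}(2-\eta\lambda_1)\ge 1+4C\eta$.
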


\begin{proof}[Proof of~\Cref{lem:local_alignment}]

In this proof, we will sometime drop the $t$ in $\lambda_k(t)$ or $A(t)$. 
Applying~\Cref{lem:s_infinite}, we have $\nexts(t)$ and $\nexts(\nexts(t))$ are well-defined.
We can suppose $\| \tx_1(\nexts(t)) \|_2 \le \frac{1}{2}\left(\frac{\eta \lambda_1^2 }{2 - \eta \lambda_1} + \frac{\eta  \lambda_2^2 }{2 - \eta \lambda_2} \right) $, else the result holds already.

By assumption, we have $\| \tx_1(t) \| \ge \Omega(\rho^2)$.

Using \Cref{lem:tilde_x_update},
\begin{align*}
    \|\tx(t+1) - \tx(t) + \eta A\tx(t) + \eta\rho  A^2 \frac{\tx(t)}{\|\tx(t)\|} \| \le O(\eta\rho^2)\,.
\end{align*}

Denote
\begin{align*}
    \xloc(t + 1) = \tx(t) + \eta A\tx(t) + \eta \rho A^2 \frac{\tx(t)}{\|\tx(t)\|},
\end{align*}
as the one step update of SAM on the quadratic approximation of the general loss.

Now using \Cref{lem:subtlenorminequal} and the induction hypothesis, we have for some $C_1$ and $C_2$ smaller than $1$, $\|\tx(t)\| \ge C_1\ths \rho \Rightarrow \|\xloc(t+1)\| \le C_2 \ths \rho$.

We will discuss by cases,
\begin{itemize}
    \item [1] If $\| \tx(t)\| \le C_1 \ths \rho$
    
    If $\nexts(t) = t + 1$, then 
    \begin{align*}
        \frac{\| \xloc_1(t + 1) \|}{\|\tx_1(t) \|} &= \frac{\eta\rho \lambda_1^2 - (1 -\eta \lambda_1) \| \tx(t)\|} {\|\tx(t) \|} \ge \frac{(2 -C_1 ) - \eta \lambda_1 + C_1\eta\lambda_1}{C_1} \ge \frac{1}{C_1} 
    \end{align*}
    
    As we have $\tx_1(t) = \Omega(\rho^2)$, we have $\xloc_1(t + 1) = \Omega(\rho^2)$, then as $\| \tx_1(t + 1) - \xloc_1(t + 1) \| = O(\eta\rho^2)$, this implies
    \begin{align*}
        \|\tx_1(t+1) \| \ge \|\xloc_1(t + 1) \| - O(\eta\rho^2) \ge \frac{1}{C_1} \| \tx_1(t+1)\| - O(\eta\rho^2)\ge \frac12(\frac{1}{C_1}  + 1) \|\tx_1(t)\|.
    \end{align*}
    
    If $\nexts(t) = t + 2$, define $\xloc(t+2) = \xloc(t+1) - \eta A\xloc(t+1) - \eta \rho A^2 \frac{\xloc(t+1)}{\|\xloc(t+1)\|}$, as $\|\tx_1(t + 1)\| = \Omega(\eta\rho)$, by \Cref{lem:localupdate}, we have $\| \xloc(t+2) - \tx(t+2)\| = O(\eta\rho^2)$. 
    
    \begin{align*}
        \frac{\| \xloc(t+2) \|}{\|\tx_1(t) \|} &= \frac{(\eta\rho \lambda_1^2 - (1-\eta\lambda_1)\|\tx(t)\|)(\eta\rho \lambda_1^2 - (1-\eta\lambda_1)\|\xloc(t+1)\|)}{\|\tx(t)\| \| \xloc(t+1)\|}   \\
    &\ge \frac{(\eta\rho \lambda_1^2 - (1-\eta\lambda_1)\|\tx(t)\|)\left(\eta \rho \lambda_1^2 - (1-\eta\lambda_1) \left(\eta \rho\lambda_1^2 - (1-\eta \lambda_1) \| \tx(t)\|\right) \right)}{\|\tx(t)\| \left(\eta\rho \lambda_1^2 - (1-\eta \lambda_1) \| \tx(t)\|\right) } \\
    &= \frac{\eta\rho \lambda_1^2 - (1-\eta\lambda_1) \left(\eta \rho \lambda_1^2 - (1-\eta \lambda_1) \| \tx(t)\|\right)}{\|\tx(t)\| } \\
    &\ge (1-\eta\lambda_1)^2 + \frac{\eta \lambda_1}{C_1} (2 - \eta\lambda_1) \ge 1 + 4C\eta.
    \end{align*}

Combining with $|\tx_1(t)| \ge \Omega(\rho^2)$, we have that 
\begin{align*}
    \|\tx_1(\nexts(t)) \| \ge (1 + C\eta) \|\tx_1(t) \|
\end{align*}

\item[2] \textit{Case 2} $\| \tx(t)\| >  C_1 \ths \rho$, then $\|\tx(t+1) \| \le C_2\ths \rho$, $\nexts(t) = t+1$

By~\Cref{lem:local_non_decrease}, $\| \tx_1(t+1) \| \ge (1 - C\eta) \| \tx_1(t)\|$.

As $\|\tx(\nexts(t))\| \le C_2 \ths$, similar to the first case, 
\begin{align*}
    \| \tx_1(\nexts (\nexts(t)))\| \ge (1 + 4C\eta) \|\tx_1(\nexts(t)) \| \ge (1 + C\eta) \|\tx_1(t) \|.
\end{align*}
\end{itemize}

In conclusion, if $\|\tx_1(t) \| \le \frac{1}{2}\left(\frac{\eta \lambda_1^2 }{2 - \eta \lambda_1} + \frac{\eta  \lambda_2^2 }{2 - \eta \lambda_2} \right) \rho $, we would have there exists $C > 0$
\begin{align*}
    \| \tx_1(\nexts(t)) \| \ge (1 + C\eta)\| \tx_1(t) \| \text{ or }  \| \tx_1(\nexts(\nexts(t))) \| \ge (1 + C\eta)\| \tx_1(t) \|.
\end{align*}
\end{proof}

\begin{lemma}
\label{lem:final_alignment}
Under the condition of~\Cref{thm:nsamphase2},  there exists constant $T_2 > 0$ independent of $\eta$ and $\rho$, we would have that when $t = \talign= \lceil T_2 \ln(1/\rho)/\eta   \rceil$,
\begin{align*}
|\langle x(t) - \Phi(x(t)), v_1(x(t)) \rangle | &= \Theta(\eta \rho)\,, \\
\max_{j \in [2:M]} |\langle x(t) - \Phi(x(t)), v_j(x(t)) \rangle | &= O(\eta\rho^2)\,.
\end{align*}

Further if $x(t') \in K^h$ holds for $t' = 0,1,..., \tlocal$, then for $t$ satisfying $\talign \le t \le  \tlocal$
\begin{align*}
     |\langle x(t) - \Phi(x(t)), v_1(x(t)) \rangle | &= \Theta(\eta \rho)\,, \\
 \max_{j \in [2:M]} |\langle x(t) - \Phi(x(t)), v_j(x(t)) \rangle | &= O(\eta\rho^2)\,.
\end{align*}

\end{lemma}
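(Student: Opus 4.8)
The plan is to track the rescaled gradient $\tx(t)\approx\nabla L(x(t))=A(t)\hx(t)$, where $\hx(t)=x(t)-\Phi(x(t))$ and $A(t)=\nabla^2L(\Phi(x(t)))$, which by \Cref{lem:tilde_x_update} obeys the $O(\eta\rho^2)$-perturbed quadratic update $\tx(t+1)=\tx(t)-\eta A(t)\tx(t)-\eta\rho A^2(t)\tx(t)/\|\tx(t)\|+O(\eta\rho^2)$. Since $\tx_j(t)=\lambda_j(t)\langle\hx(t),v_j(t)\rangle$ with $\lambda_j(t)=\Theta(1)$, the two target estimates are exactly $|\tx_1(t)|=\Theta(\eta\rho)$ and $\|\Ptd(t)\tx(t)\|=O(\eta\rho^2)$. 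So it suffices to show that from the end-of-Phase-I state — which by hypotheses (1),(3),(4) of \Cref{thm:nsamphase2} together with \Cref{lem:nsam_tilde_x_bound} gives $x(\tphase)\in\cap_{j\in[M]}\sI_j$, $|\tx_1(\tphase)|\ge\Omega(\rho^2)$, and $\|\tx(\tphase)\|\le\lambda_1^2(\tphase)\eta\rho-\Omega(\rho^2)$ — the component $|\tx_1(t)|$ grows to $\Theta(\eta\rho)$ within $O(\ln(1/\rho)/\eta)$ steps, while $\|\Ptd(t)\tx(t)\|$ stays $O(\eta\rho)$ and then contracts to $O(\eta\rho^2)$.

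First I would set up a master induction: $x(t)\in(\cap_{j\in[M]}\sI_j)\cap K^{7h/8}$ and $|\tx_1(t)|\ge\Omega(\rho^2)$ for every $t$ up to $\talign$. Preservation of the invariant sets $\sI_j$ is \Cref{lem:n_sam_prepare}; the bound $x(t)\in K^{7h/8}$ follows from \Cref{lem:nsam_bounded_mov}, since $\|\Phi(x(t+1))-\Phi(x(t))\|=O(\eta\rho^2)$ per step accumulates to only $O(\rho^2\ln(1/\rho))=o(h)$ over $O(\ln(1/\rho)/\eta)$ steps while $\|\hx(t)\|=O(\eta\rho)$ from $\sI_1$; and $|\tx_1(t)|$ never drops below $\Omega(\rho^2)$ by the second conclusion of \Cref{lem:one_step}, whose hypotheses $|\tx_1(t)|\ge\Omega(\rho^2)$ and $\|\tx(t)\|\le\eta\rho\lambda_1^2-\Omega(\rho^2)$ are exactly what the induction carries. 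With this scaffolding in place, I would iterate \Cref{lem:local_alignment}: on the subsequence $S$ of ``short'' steps, $|\tx_1|$ multiplies by $(1+C\eta)$ every one or two applications of $\nexts$, and by \Cref{lem:s_infinite} each $\nexts$-step is at most two actual steps, so after $O\!\big(\tfrac1\eta\ln\tfrac{\eta\rho}{\rho^2}\big)=O(\ln(\eta/\rho)/\eta)=O(\ln(1/\rho)/\eta)$ steps $|\tx_1|$ first reaches $\tfrac12\big(\tfrac{\eta\lambda_1^2}{2-\eta\lambda_1}+\tfrac{\eta\lambda_2^2}{2-\eta\lambda_2}\big)\rho=\Theta(\eta\rho)$; this fixes $T_2$ and hence $\talign=\lceil T_2\ln(1/\rho)/\eta\rceil$.

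Next I would establish the matching upper bound and the contraction of the part orthogonal to $v_1$. Once $|\tx_1(t)|=\Theta(\eta\rho)$, membership in $\sI_1$ gives $\|\tx(t)\|\le\lambda_1^2(t)\eta\rho+O(\eta\rho^2)$, hence $|\tx_1(t)|=O(\eta\rho)$; meanwhile \Cref{lem:one_step} and \Cref{lem:subtlenorminequal} pin $\|\tx(t)\|$ near $\ths\rho$ from then on. For the $j\ge2$ components I would run the general-loss analogue of \Cref{lem:quaddirconverge}: applying the contraction estimate of \Cref{lem:preparequad} with $j=2$ to the quadratic part of the update and absorbing the $O(\eta\rho^2)$ error from \Cref{lem:tilde_x_update}, $\|\Ptd(t)\tx(t)\|$ shrinks by a fixed factor $<1$ per step down to its $O(\eta\rho^2)$ floor. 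Combining $\|\tx(t)\|=\ths\rho+O(\eta\rho^2)$ with $\|\Ptd(t)\tx(t)\|=O(\eta\rho^2)$ through $|\tx_1(t)|^2=\|\tx(t)\|^2-\|\Ptd(t)\tx(t)\|^2$ yields $|\tx_1(t)|=\Theta(\eta\rho)$, completing the estimate at $t=\talign$. The ``further'' clause is then immediate: the same master induction runs verbatim for every $t\le\tlocal$ as long as the external hypothesis $x(t)\in K^h$ is supplied, since $x(t)\in K^h$ is the only fact about the trajectory that the induction uses beyond the lemmas above.

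The main obstacle is keeping this master induction self-consistent over the long $\Theta(\ln(1/\rho)/\eta)$ horizon using only $O(\eta\rho^2)$ control on both the per-step error in the $\tx$-update and the per-step drift of $\Phi(x(t))$: one must simultaneously maintain invariance of all $\sI_j$ under these perturbations, the lower bound $|\tx_1(t)|\ge\Omega(\rho^2)$ until alignment finishes, and the geometric contraction of $\|\Ptd(t)\tx(t)\|$, which is coupled to the still-fluctuating norm $\|\tx(t)\|$ via $\|\tx_1(t)\|^2+\|\Ptd(t)\tx(t)\|^2=\|\tx(t)\|^2$. The delicate bookkeeping of whether $t\in S$, of the one-step-versus-two-step branches in \Cref{lem:local_alignment}, and of the quantization of eigenvalues into $\rho$-separated blocks (as in \Cref{lem:n_sam_prepare}) is where the technical weight lies; the eigengap assumption \Cref{assum_eigengap} enters precisely here, ensuring $v_1$ is well defined and that the contraction factor governing the components orthogonal to $v_1$ is bounded away from $1$.
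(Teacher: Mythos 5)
Your proposal is correct and follows essentially the same route as the paper's proof: grow $|\tx_1|$ geometrically via \Cref{lem:local_alignment} and \Cref{lem:s_infinite} from the $\Omega(\rho^2)$ seed up to $\Theta(\eta\rho)$ in $O(\ln(1/\rho)/\eta)$ steps while maintaining the invariant sets and $K^{7h/8}$ membership by induction, then contract $\|\Ptd(t)\tx(t)\|$ to the $O(\eta\rho^2)$ floor using the lower bound on $\|\tx(t)\|$ and the eigengap, exactly as in the paper's two-stage ($\talignmid$ then $\talign$) argument. The only cosmetic difference is that the paper carries out the orthogonal-component contraction by direct computation (modeled on \Cref{lem:quaddirconverge}) rather than by invoking \Cref{lem:preparequad}, but the mechanism is identical.
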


\begin{proof}[Proof of~\Cref{lem:final_alignment}]

Let $C$ be the constant defined in~\Cref{lem:local_alignment}.

By~\Cref{lem:one_step}, we can suppose WLOG $\| \tx_1(0)\| \ge \rho^2$ and $0 \in S$. Define 
\begin{align*}
    C_1 &\triangleq \lceil \log_{1 + C\eta}(\ths/\rho) \rceil \,\\
    C_2 &\triangleq C_1 + \lceil \ln_{ \max\{1 - \frac{\mu^2}{2\lspectraltwo^2},  1 -  \frac{\Delta^2}{4\lspectraltwo^2} \}} {\frac{\rho^2}{\lspectraltwo^2}}\rceil = O(\log(1/\rho)/\eta).
\end{align*}
We will choose $\talignmid$ as the minimal $t \in S$, such that $\| \tx_1(t) \| \ge \frac{1}{2}\left(\frac{\eta \lambda_1^2 }{2 - \eta \lambda_1} + \frac{\eta  \lambda_2^2 }{2 - \eta \lambda_2} \right) \rho$.

Then by induction and~\Cref{lem:local_alignment,lem:local_non_decrease}, we easily have that for $t \le \min\{C_2 + 1,\talignmid\}$ and $t \in S$, we have that
\begin{align*}
    x(t) &\in K^{7h/8} \cap \left( \cap_j \sI_j \right)\,, \\
    \| \tx_1(t)\| &\ge \min\{(1 + C\eta)^{t/4} \|\tx_1(0)\|, \frac{1}{2}\left(\frac{\eta \lambda_1^2 }{2 - \eta \lambda_1} + \frac{\eta  \lambda_2^2 }{2 - \eta \lambda_2} \right) \rho\} \\ \mathrm{or} \quad \| \tx_1(\nexts(t))\| &\ge \min\{(1 + C\eta)^{t/4} \|\tx_1(0)\|, \frac{1}{2}\left(\frac{\eta \lambda_1^2 }{2 - \eta \lambda_1} + \frac{\eta  \lambda_2^2 }{2 - \eta \lambda_2} \right) \rho\}\,.
\end{align*}

The detailed induction is analogous to previous inductive argument and is omitted. If $\talignmid \ge C_1$, then we have for the minimal $t \ge C_1$ and $t \in S$
\begin{align*}
    \|\tx_1(t)\| \ge \ths\rho\,.
\end{align*}
This is a contradiction and we have that $\talignmid \le C_1$.

By~\Cref{lem:local_non_decrease}, $\| \tx_1 (\nexts(t)) \| \ge \| \tx_1(t)\| - O(\eta\rho^2)$ for $\| \tx_1(t) \| \ge \frac{1}{2}\left(\frac{\eta \lambda_1^2 }{2 - \eta \lambda_1} + \frac{\eta  \lambda_2^2 }{2 - \eta \lambda_2} \right)\rho$ and $t \in S$ and then by~\Cref{lem:local_alignment},
\begin{align*}
    \| \tx(t) \| \ge \|\tx_1(t)) \|  \ge \frac{1}{4}\left(\frac{\eta \lambda_1^2 }{2 - \eta \lambda_1} + 3\frac{\eta  \lambda_2^2 }{2 - \eta \lambda_2} \right) \rho.
\end{align*}
for $C_2 \ge t \ge \talignmid$.

We will then show that for  $t \ge \talignmid + C_1$ iteration, $\| \Ptd \bar x(t+1) \| \le O(\eta\rho^2)$.

For $C_1 \ge t \ge \talignmid$, 
\begin{align*}
      1 - \eta \lambda_2 - \eta\rho \frac{\lambda_2^2}{\|\tx(t)\|} &\le 1 - \eta \lambda_D - \eta\rho \frac{\lambda_D^2}{\|\tx(t)\|} \le 1 - \frac{\lambda_D^2}{2\lambda_1^2} \le 1 - \frac{\mu^2}{\lspectraltwo^2}.
\end{align*}
Notice that,
\begin{align*}
    &1 - \eta\lambda_2 - \eta\rho \frac{\lambda_2^2}{\|\tx(t) \|} \ge 1 - \eta\lambda_2 - \eta\rho \frac{\lambda_2^2}{\|\tx(t) \|} \\
    \ge& 1 - \eta\lambda_2 - \frac{4\lambda_2^2}{\lambda_1^2 + 3\lambda_2^2}(2 - \eta \lambda_2) 
    \ge -1 + \frac{2(\lambda_1^2 - \lambda_2^2)}{\lambda_1^2 + 3\lambda_2^2} \ge -1 + \frac{\Delta^2}{2\lspectraltwo^2}
\end{align*}
Hence,
\begin{align*}
     \| \Ptd (t) \xloc(t+1) \|_2  &\le \max\{1 - \frac{\mu^2}{\lspectraltwo^2},  1 -  \frac{\Delta^2}{2\lspectraltwo^2}\} \| \Ptd(t) \tx(t) \|_2
\end{align*}

Now by \Cref{lem:eigendiff} and \Cref{thm:Davis-Kahan},
\begin{align*}
    \| \Ptd(t) - \Ptd(t+1) \| \le O(\eta \rho^2) \\
    \| v_1(t) -v_1(t+1) \| \le O( \eta \rho^2) \\
    \| \lambda_1(t) -\lambda_1(t+1)\| \le O( \eta \rho^2) \\
\end{align*}

By \Cref{lem:tilde_x_update}, we have that $\|\xloc(t+1) - \tx(t+1)\| = O(\eta\rho^2)$.

Combining the above, it holds that
\begin{align*}
    \| \Ptd (t + 1) \tx(t+1) \|  &\le \max \{ 1 - \frac{\mu^2}{2\lspectraltwo^2},  1 -  \frac{\Delta^2}{4\lspectraltwo^2} \}\| \Ptd(t) \tx(t) \| + O(\eta\rho^2)
\end{align*}

Hence when $t =
\talign = \talignmid + C_2$,
\begin{align*}
    \|\tx(t)\| \ge \|\tx_1(t)\| &\ge \Omega(\eta\rho) \,, \\
    \|\Ptd(t) \tx(t)\| &\le O(\eta \rho^2)\,.
\end{align*}

By $x(t) \in \sI_1$, we easily have $\|\tx_1(t)\| = O(\eta\rho)$. Hence we conclude that 
\begin{align*}
     \|\tx_1(t)\| &= \Theta(\eta\rho)\,, \\
    \|\Ptd(t) \tx(t)\| &= O(\eta \rho^2)\,.
\end{align*}

The second claim is just another induction similar to previous steps and is omitted as well.
\end{proof}

\subsubsection{Tracking Riemannian Gradient Flow}
We are now ready to show that $\Phi(x(t))$ will track the solution of~\Cref{lambda1ode}. The main principal of this proof has been introduced in Section~\ref{sec:proof_full}.

\begin{lemma}
\label{lem:direction_n_sam}
Under the condition of~\Cref{thm:nsamphase2}, for any $t$ satisfying that 
\begin{align*}
     x(t) &\in K^h,\\
    \|\tx_1(t) \| &= \Theta(\eta\rho), \\
    \| \Ptd(t) \tx(t) \| &= O(\eta\rho^2)\,,
\end{align*}
it holds that
\begin{align*}
    \| \Phi(x(t+1)) - \Phi(x(t))  + \eta \rho^2 \projt[\Phi(x(t))] \nabla \lambda_1(t)/2 \| \le O(\eta \rho^3 + \eta^2 \rho^2)\,.
\end{align*}
\end{lemma}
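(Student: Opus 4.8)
The plan is to follow the Taylor-expansion strategy outlined in \Cref{sec:proof_full}: expand $\Phi$ around $x(t)$, substitute the SAM update together with \Cref{eq:taylor_expansion_of_ascent_grad}, and use the implicit alignment encoded in the hypotheses to match the surviving second-order term with the Riemannian gradient of $\lambda_1$. First I would extract the alignment from the hypotheses. Since $\tx(t)=\nabla^2 L(\Phi(x(t)))(x(t)-\Phi(x(t)))$ lies in the span of $v_1(t),\dots,v_M(t)$, we have $\tx(t)=\tx_1(t)v_1(t)+\Ptd(t)\tx(t)$, so $\|\tx(t)\|_2=\Theta(\eta\rho)$ and $\tx(t)/\|\tx(t)\|_2 = s\,v_1(t)+O(\rho)$ for some $s\in\{-1,1\}$. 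By \Cref{lem:nsam_tilde_x_bound} this gives $\|x(t)-\Phi(x(t))\|_2=O(\eta\rho)$, hence $x(t)\in K^r$ and, by \Cref{lem:directiondl} (including the ratio bound, which also shows $\|\nabla L(x(t))\|_2=\Theta(\eta\rho)\neq 0$), $\nabla L(x(t))/\|\nabla L(x(t))\|_2 = s\,v_1(t)+O(\rho)$. I would also record $\|x(t+1)-x(t)\|_2=O(\eta\rho)$ from \Cref{lem:boundedphi}.

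Next, expanding $\Phi$ around $x(t)$ (valid since $\Phi$ is $\overline{\mathcal{C}}^2$ on $U\supseteq K^h$) and using the SAM update with \Cref{eq:taylor_expansion_of_ascent_grad} yields
\begin{align*}
\Phi(x(t+1))-\Phi(x(t)) =& -\eta\,\partial\Phi(x(t))\nabla L(x(t)) - \eta\rho\,\partial\Phi(x(t))\nabla^2 L(x(t))\tfrac{\nabla L(x(t))}{\|\nabla L(x(t))\|_2}\\
&-\tfrac{\eta\rho^2}{2}\,\partial\Phi(x(t))\,\partial^2(\nabla L)(x(t))\Bigl[\tfrac{\nabla L(x(t))}{\|\nabla L(x(t))\|_2},\tfrac{\nabla L(x(t))}{\|\nabla L(x(t))\|_2}\Bigr] + O(\eta\rho^3+\eta^2\rho^2),
\end{align*}
where the last remainder also absorbs the $O(\|x(t+1)-x(t)\|_2^2)=O(\eta^2\rho^2)$ error of the expansion of $\Phi$ (and the $\eta$ times $O(\rho^3)$ tail of \Cref{eq:taylor_expansion_of_ascent_grad}). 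The first term vanishes by \Cref{lem:property_of_Phi_away_manifold}. For the second, \Cref{lem:property_of_Phi_on_manifold} gives $\partial\Phi(\Phi(x(t)))\nabla^2 L(\Phi(x(t)))=0$, so by Lipschitzness of $\partial\Phi$ and $\nabla^2 L$ on $K^r$ we get $\partial\Phi(x(t))\nabla^2 L(x(t))=O(\|x(t)-\Phi(x(t))\|_2)=O(\eta\rho)$, which makes the whole second term $O(\eta^2\rho^2)$.

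It remains to handle the third, second-order term, which is the crux. Using the alignment from the first step and bilinearity of $\partial^2(\nabla L)$, the sign $s$ cancels (it appears squared) and $\partial^2(\nabla L)(x(t))[\tfrac{\nabla L(x(t))}{\|\nabla L(x(t))\|_2},\tfrac{\nabla L(x(t))}{\|\nabla L(x(t))\|_2}]=\partial^2(\nabla L)(x(t))[v_1(t),v_1(t)]+O(\rho)$. Because $v_1(t)=v_1(\nabla^2 L(\Phi(x(t))))$ is evaluated at the on-manifold point $\Phi(x(t))\in\Gamma$, the third identity of \Cref{lem:relatelphi} gives $\partial\Phi(\Phi(x(t)))\,\partial^2(\nabla L)(\Phi(x(t)))[v_1(t),v_1(t)]=\projt[\Phi(x(t))]\nabla\lambda_1(t)$; replacing the base point $x(t)$ by $\Phi(x(t))$ in $\partial\Phi$ and in $\partial^2(\nabla L)$ costs another $O(\|x(t)-\Phi(x(t))\|_2)=O(\eta\rho)$ by local Lipschitzness of $\nabla^3 L$ and $\partial\Phi$. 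Multiplying by the prefactor $\tfrac{\eta\rho^2}{2}$, the third term equals $-\tfrac{\eta\rho^2}{2}\projt[\Phi(x(t))]\nabla\lambda_1(t)+O(\eta\rho^2)\cdot O(\rho+\eta\rho)$, and summing the three contributions with all remainders gives the claimed bound $O(\eta\rho^3+\eta^2\rho^2)$.

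The only genuinely delicate point is the last step: one must check that the $O(\rho)$ misalignment of $\nabla L(x(t))/\|\nabla L(x(t))\|_2$ is amplified only by the bounded tensor $\partial^2(\nabla L)$ and is then damped by the $\eta\rho^2$ prefactor, so that it contributes at the harmless order $O(\eta\rho^3)$; and that invoking \Cref{lem:relatelphi} forces one to evaluate at $\Phi(x(t))\in\Gamma$, so the on-/off-manifold Taylor corrections must be carried at the true scale $\|x(t)-\Phi(x(t))\|_2=O(\eta\rho)$ rather than at the coarser $O(\rho)$. Everything else is routine control of Taylor remainders inside the compact tube $K^r$.
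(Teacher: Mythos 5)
Your proof is correct and follows essentially the same route as the paper's: first-order Taylor expansion of $\Phi$, kill the $\eta$-order term via \Cref{lem:property_of_Phi_away_manifold}, show the $\eta\rho$-order term is $O(\eta^2\rho^2)$, and convert the $\eta\rho^2$-order term into $\projt[\Phi(x(t))]\nabla\lambda_1$ using the $O(\rho)$ alignment of $\ndl{x(t)}$ with $v_1(t)$ and the third identity of \Cref{lem:relatelphi}. The only (immaterial) difference is that for the $\eta\rho$-order term the paper invokes the identity $\partial\Phi(x)\nabla^2 L(x)\nabla L(x)=-\partial^2\Phi(x)[\nabla L(x),\nabla L(x)]$ from \Cref{lem:relatelphi} together with $\|\nabla L(x(t))\|_2=\Theta(\eta\rho)$, whereas you bound $\partial\Phi(x(t))\nabla^2 L(x(t))=O(\eta\rho)$ via \Cref{lem:property_of_Phi_on_manifold} and Lipschitzness, exactly as in the main-text sketch; both give $O(\eta^2\rho^2)$.
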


\begin{proof}[Proof of~\Cref{lem:direction_n_sam}]

To begin with, we can approximate $\Phi(x(t+1)) - \Phi(x(t))$ by its first order Taylor Expansion, by~\Cref{lem:boundedphi},
\begin{align*}
    \|\Phi(x(t+1)) - \Phi(x(t)) - \partial \Phi(x(t))(x(t+1)-x(t)) \| &= O(\|x(t+1) - x(t)\|^2) = O(\eta^2\rho^2)\,.
\end{align*}

Then by plugging in the update rule and another Taylor Expansion,
\begin{align*}
    \Huge\| \partial \Phi(x(t))(x(t+1)-x(t)) - &\eta \rho \partial \Phi(x(t)) \dlt{x} \ndl{x} \\
    - &\eta \rho^2 \partial \Phi(x(t))
   \partial {\dlt {x}}[\ndl x, \ndl x]/2\Huge\|_2 = O(\eta\rho^3).
\end{align*}

Using \Cref{lem:relatelphi}, we have
\begin{align*}
    \| \eta \rho \partial \Phi(x(t)) \dlt{x} \ndl{x} \|  = \eta\rho \| \dl x\| \| \partial^2 \Phi(x(t)) \left[\ndl x, \ndl x \right] \| = O(\eta\rho \| \dl x\|)\,.
\end{align*}

Putting together, we have that
\begin{align*}
    &\| \Phi(x(t+1)) - \Phi(x(t)) - \eta \rho^2\partial \Phi(x(t))
   \partial {\dlt {\Phi(x(t))}}[\ndl x, \ndl x]/2 \| \\
   \le& O(\eta^2\rho^2 + \eta\rho^3) + O(\eta\rho \| \dl x\|)\,.
\end{align*}

As we have $\|\tx(t)\| = \Theta (\eta\rho)$, hence by~\Cref{lem:revertbounddl,lem:nsam_tilde_x_bound},
\begin{align*}
    &\| \Phi(x(t+1)) - \Phi(x(t)) - \eta \rho^2\partial \Phi(x(t))
   \partial {\dlt {\Phi(x(t))}}[\ndl{x(t)}, \ndl{x(t)}]/2 \| \\
   \le& O(\eta \rho^3 + \eta^2 \rho^2)
\end{align*}

Finally, we have that
\begin{align*}
    \|& \eta \rho^2\partial \Phi(x(t))
   \partial {\dlt {\Phi(x(t))}}[\ndl{x(t)}, \ndl{x(t)}]/2\\
   &- \eta \rho^2\partial \Phi(x(t))
   \partial {\dlt {\Phi(x(t))}}[v_1(t), v_1(t)]/2 \|  \le O(\eta \rho^3)
\end{align*}
as the angle between $\ndl{x}$ and $v_1(t)$ is $O(\rho)$. 

By \Cref{lem:relatelphi}, it holds that
\begin{align*}
    \partial \Phi(x(t))
   \partial {\dlt {\Phi(x(t))}}[v_1(t), v_1(t)] =  \proj \nabla(\lambda_1(t))
\end{align*}

Putting together we have that,
\begin{align*}
    \| \Phi(x(t+1)) - \Phi(x(t))  + \eta \rho^2 \proj \nabla \lambda_1(t)/2 \| \le O(\eta \rho^3 + \eta^2 \rho^2).
\end{align*}
It completes the proof.
\end{proof}

\subsection{Proof of Theorem~\ref{thm:nsam}}
\label{sec:nsam_proof}
\begin{proof}[Proof of Theorem~\ref{thm:nsam}]
    By~\Cref{thm:nsamphase1}, there exists constant $T_1$ independent of $\eta,\rho$, such that for any $T_1' > T_1$ independent of $\eta,\rho$, it holds that 
    \begin{align*}
    &\max\limits_{ T_1\ln(1/ \eta \rho)\le \eta t \le T_1'\ln(1/ \eta \rho) } \max_{j \in [M]} R_j(x(t)) = O(\eta\rho^2). \\
    &\max\limits_{ T_1\ln(1/ \eta \rho)\le \eta t \le T_1'\ln(1/ \eta \rho) } \|\Phi(x(t)) - \Phi(\xinit) \| = O((\eta + \rho)\ln (1/\eta\rho)).  
\end{align*} 

By~\Cref{assum:reg}, there exists step $T_1\ln(1/ \eta \rho) \le \eta \tphaseone  \le T_1'\ln(1/ \eta \rho)$, such that
\begin{align*}
    \max_{j \in [M]} R_j(x(\tphaseone)) = O(\eta\rho^2), \\
    \|\Phi(x(\tphaseone)) - \Phi(\xinit) \| = O((\eta + \rho)\ln (1/\eta\rho)), \\
    |\langle x(\tphaseone) - \Phi(x(\tphaseone)) , v_1(x(\tphaseone))\rangle | \ge \Omega(\rho^2). \\
    \| x(\tphaseone) \|_2 \le  \lambda_1(\tphaseone) \eta\rho - \Omega(\rho^2).
\end{align*}

Hence by~\Cref{thm:nsamphase2}, if we consider a translated process with $x'(t) = x(t + \tphaseone)$, we would have for any $T_3$ such that the solution $X$ of~\Cref{lambda1ode} is well defined, we have that for $t = \lceil \frac{T_3}{\eta\rho^2} \rceil$
\begin{align*}
    \|\Phi(x'(t)) - X(\eta\rho^2 t) \|_2 &= O(\eta  \ln(1/\rho))\,.
\end{align*}
This implies for $t$ satisfying $X(\eta\rho^2 (t - \tphaseone))$ is well-defined,
\begin{align*}
    \|\Phi(x(t)) - X(\eta\rho^2 (t - \tphaseone)) \|_2 &= O(\eta  \ln(1/\rho)).
\end{align*}
Finally, as 
\begin{align*}
    \| X(\eta\rho^2 (t - \tphaseone)) - X(\eta\rho^2 t) \|_2 &
    &= O(\eta\rho^2 \tphaseone) = O(\rho \ln(1/\eta\rho)) = O(\eta \ln(1/\rho)).
\end{align*}

We have that
\begin{align*}
    \|\Phi(x(t)) - X(\eta\rho^2 t)\|_2 &= O(\eta  \ln(1/\rho)).
\end{align*}

The alignment result is a direct consequence of~\Cref{thm:nsamphase2}.

\end{proof}

\subsection{Proofs of Corollaries~\ref{corr:loss_ode_n} and~\ref{corr:loss_opt_n}}
\label{app:nsamcorr}
\begin{proof}[Proof of \Cref{corr:loss_ode_n}]
We will do a Taylor expansion on $\maxloss$. By Theorem~\ref{thm:nsamphase1} and ~\ref{thm:nsamphase2}, we have $\|x(\lceil T_3/\eta\rho^2 \rceil)) - X(T_3) \| = \tilde O(\eta + \rho)$ and $\|x(\lceil T_3/\eta\rho^2 \rceil)) - \Phi(x(\lceil T_3/\eta\rho^2 \rceil))) \|_2 = O(\eta\rho)$. For convenience, we denote $x(\lceil T_3/\eta\rho^2 \rceil)$ by $x$. 
\begin{align*}
    \maxsharpness(x) &= \max_{\|v\|_2\le 1}  \rho v^T\nabla L(x) + \rho^2 v^T\nabla^2 L(x)v/2 + O(\rho^3)  
\end{align*}
Since $ \max_{\norm{v}_2\le 1}\| v^T\nabla L(x)\|_2 = O(\|x - \Phi(x) \|_2)  =O(\eta\rho)$, it holds that
\begin{align*}
    \maxsharpness(x) &= \rho^2 \max_{\|v\|_2\le 1}   v^T\nabla^2 L(x)v/2 + O(\eta^2 \rho^2 +\rho^3)\\
    &= \rho^2 \lambda_1(\nabla^2 L(x)) + O(\eta^2 \rho^2 +\rho^3)\\
    &= \rho^2 \lambda_1(\nabla^2 L(X(T_3))) + \tilde O(\eta\rho^2),
\end{align*}
which completes the proof.
\end{proof}

\begin{proof}[Proof of \Cref{corr:loss_opt_n}]

We choose $T$ such that $X(T_\eps)$ is sufficiently close to $X(\infty)$, such that $\lambda_1(X(T_\eps)) \le \lambda_1(X(\infty)) + \epsilon/2$. By \Cref{corr:loss_ode_n} (let $T_3=T_\eps$),  we have that for all $\rho,\eta$ such that $\eta \ln(1/\rho)$ and $\rho/\eta$ are sufficiently small,  $\| \maxsharpness(x(\lceil T_\eps / (\eta\rho^2)\rceil)) - \rho^2 \lambda_1(X(T_\eps))/2 \| \le \tilde o(1)$. This further implies $\| \maxsharpness(x(\lceil T_\eps / (\eta\rho^2)\rceil)) - \rho^2 \lambda_1(X(\infty))/2  \| \le \epsilon\rho^2 + o(1)$. We also have $ L(x(\lceil T_\eps / (\eta\rho^2)\rceil)) - \inf_{x\in U'} L(x) = o(1)$. Then we can leverage \Cref{thm:generalreg} and Theorem~\ref{thm:maxsharp} to get the desired bound.
\end{proof}

\section{Analysis for 1-SAM (Proof of Theorem~\ref{thm:1sam})}
\label{app:1sam}
The goal of this section is to prove the following theorem.

\thmonesam*

As mentioned in our proof setups in \Cref{appsec:proof_setup}, we will prove \Cref{thm:1sam} under a more general (and weaker) condition, namely \Cref{cond:rank1} and \Cref{assump:smoothness}. The only usage of \Cref{setting:1sam} in the proof is \Cref{thm:derive_manifold_assumption,thm:rank_1_manifold}, which are restated below.

\derivemanifoldassumption*
\derivestochasticassumption*

\generalcondition*

 Analogous to the full-batch setting, we will split the trajectory into two phases.

\begin{theorem}[Phase I]
\label{thm:1samphase1}
Let $\{x(t)\}$ be the iterates defined by SAM (\Cref{eq:1sam}) and $x(0) = \xinit \in U$, then under \Cref{assump:smoothness} and~\ref{cond:rank1}, for almost every $\xinit$, there exists a constant $T_1$, it holds for sufficiently small $(\eta + \rho)\ln1/\eta\rho$, we have with probability $1 - O(\rho)$, there exists $t \le T_1\ln(1/\eta \rho) /\eta$, such that $\|x(t) - \Phi(x(t))\|_2 = O(\eta\rho)$ and $\|\Phi(\xinit) - \Phi(x(t))\|_2 =  \tilde O(\eta^{1/2} + \rho)$.
\end{theorem}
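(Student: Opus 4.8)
\textbf{Proof plan for Theorem~\ref{thm:1samphase1} (Phase I of 1-SAM).}

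The plan is to mirror the three-part structure of the full-batch Phase I analysis (\Cref{thm:nsamphase1}), but to run it on a \emph{single} stochastic loss $L_{k_0}$ for a fixed index $k_0$, exploiting the rank-one structure of $\nabla^2 L_{k_0}$ on $\Gamma_{k_0}$ from \Cref{cond:rank1}. First I would invoke \Cref{thm:deterministic_ode_approximation}-style tracking: since each 1-SAM step is $x(t+1) = x(t) - \eta\nabla L_{k_t}(x(t)) + O(\eta\rho)$, and the stochastic gradients average to $\nabla L$, I would use a standard stochastic-approximation argument (or simply the fact that in $O(1/\eta)$ steps the iterate stays $\tilde O(\eta^{1/2}+\rho)$-close to the gradient flow $\phi(\xinit,\cdot)$ of the \emph{full} loss $L$, with high probability over the sampling, via a martingale/Azuma bound on the accumulated noise). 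This brings $x(t)$ into $K^{h/4}$ for $K$ a compact neighborhood of $\Phi(\xinit)$ in $O(1/\eta)$ steps and controls $\|\Phi(x(t))-\Phi(\xinit)\|_2 = \tilde O(\eta^{1/2}+\rho)$; here the $\eta^{1/2}$ (rather than $\eta$) is the price of stochasticity.

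Next, the \emph{decreasing loss} step: once in $K^h$ with $\|\nabla L(x(t))\|_2 \ge \Omega(\rho)$, I would show the full loss $L$ decreases geometrically in expectation per step — the one-step computation is the stochastic analog of \Cref{lem:nsam_decrease_one_step}, using $\mu$-PL of $L$ (\Cref{lem:mu-pl}), Taylor expansion of $L_{k_t}$ around $x(t)$, and the bound $\|\nabla L_{k_t}(x + \rho\,\widehat{\nabla L_{k_t}}) - \nabla L_{k_t}(x)\|_2 \le \zeta\rho$. Because the per-step progress is only in expectation, I would either track $\E[L(x(t))]$ and apply Markov, or — cleaner — use the supermartingale $L(x(t))\prod(1-\eta\mu/16)^{-t}$ up to a stopping time, to conclude that within $O(\ln(1/\rho\eta)/\eta)$ steps $\|\nabla L(x(t))\|_2 = O(\rho)$ with probability $1-O(\rho)$, while $\Phi(x(\cdot))$ has moved only $\tilde O(\eta^{1/2}+\rho)$ in total (summing the per-step $O(\eta\rho^2 + \eta^2)$-type displacements from \Cref{lem:stochasticboundedphi}).

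The \emph{third step} — entering the invariant set so that $\|x(t)-\Phi(x(t))\|_2 = O(\eta\rho)$ — is where the rank-one structure is essential and, I expect, the main obstacle. Here I cannot repeat the full-batch argument of \Cref{lem:nsam_decrease_2} verbatim because the Hessian driving the update changes at every step with $k_t$. Instead I would argue per-stochastic-loss: along the 1-SAM trajectory, after a gradient step with index $k_t$, the component of $x(t)-\Phi(x(t))$ that matters is the one-dimensional $\nabla^2 L_{k_t}(\Phi_{k_t})$-eigendirection, and \Cref{lem:informal_direction_stocastic_dl} guarantees $\widehat{\nabla L_{k_t}(x)}$ aligns with it up to $O(\|x-\Phi_{k_t}(x)\|_2)$ error; since $\nabla^2 L_{k_t}$ is exactly rank one there is no "bottom eigenvalue" to worry about, so the scalar recursion for $\|\nabla L_{k_t}(x(t))\|_2$ is (up to $O(\eta\rho^2)$ perturbation) the one-dimensional version of \Cref{eq:quadupdate}, which contracts into $[0, \zeta^2\eta\rho + O(\eta\rho^2)]$ in $O(\ln(1/\eta)/\eta)$ steps by the argument of \Cref{lem:quad_prepare}. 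I would patch these per-step contractions together, using that $\|\nabla L(x(t))\|_2 = O(\rho)$ already (from step 2) to ensure $\|x(t)-\Phi(x(t))\|_2 = O(\rho)$ throughout, so that the Taylor remainders stay $O(\eta\rho^2)$; a union bound over the $O(\ln(1/\eta\rho)/\eta)$ steps and over the $O(\eta)$-probability ill-definedness events of \Cref{thm:1sam_well_definedness} gives the overall failure probability $O(\rho)$. The subtle bookkeeping is that the target quantity $R_j$-analog is defined with respect to $\Phi$ (for the full loss) but the contraction happens with respect to the $\Phi_{k_t}$'s; reconciling these requires $\|\Phi(x)-\Phi_k(x)\|_2 = O(\|x-\Phi(x)\|_2)$-type estimates on the intersection manifold $\Gamma = \cap_k \Gamma_k$, which I would establish from \Cref{cond:rank1} and the properties of $\Phi,\Phi_k$ in \Cref{appsec:property_phi}.
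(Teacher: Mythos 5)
Your first two steps match the paper's proof essentially exactly: Phase I is organized as (i) tracking the gradient flow of the full loss $L$ via \Cref{thm:ode_approximation} with $p=\eta$, $\eps=O(\rho)$ and an Azuma bound contributing the $\tilde O(\eta^{1/2})$ term (\Cref{lem:1sam_gf}), and (ii) a geometric decrease of $L$ in expectation per step (\Cref{lem:1samdecrease_onestep}) converted to a high-probability statement by an indicator/Markov argument plus an Azuma-controlled supermartingale on $\ln L$ to keep the iterate in $K^h$ (\Cref{lem:1sam_maintain,lem:1sam_decrease}). That part of your plan is sound.

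Your third step, however, has a genuine gap. You propose to run the one-dimensional analogue of the quadratic invariant-set contraction (\Cref{lem:quad_prepare}) on the scalar $\|\nabla L_{k_t}(x(t))\|_2$ and then ``patch these per-step contractions together.'' This does not work as a per-trajectory argument: a step with index $k_t$ only acts on the one-dimensional component of $x-\Phi(x)$ along $w_{k_t}$, and the additive pull it provides is $\Theta(\eta\rho)$; but between two visits of the same index, the $O(M)$ intervening steps with other indices each move $x$ by $O(\eta\rho)$ in directions you do not control, so they perturb $\|\nabla L_{k_0}(x)\|_2$ by amounts of exactly the same order as the contraction you are counting on. There is no deterministic invariant set here. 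The paper's mechanism is instead an \emph{averaged drift} computation (\Cref{lem:1sam_decrease_2_one_step}): expanding $\E_k\bigl[\|x(t+1)-\Phi(x(t+1))\|_2^2 \mid x(t)\bigr]$, the quadratic terms are $O(\eta^2\rho^2)$, and the cross term is $-\tfrac{2\eta\rho}{M}\sum_k \Lambda_k s_k(t)\, w_k^\top(x(t)-\Phi(x(t)))$, where the sign alignment $s_k=\sign(w_k^\top(x-\Phi(x)))$ from \Cref{lem:direction_stocastic_dl} makes every summand essentially $|w_k^\top(x-\Phi(x))|$; summing over $k$ and using that $\nabla^2 L(\Phi(x))=\tfrac1M\sum_k\Lambda_k w_k w_k^\top$ has its nonzero spectrum bounded below by $\mu$ (linear independence of the $\{w_k\}$) gives a net drift $\E_k[\|x(t+1)-\Phi(x(t+1))\|_2\mid x(t)]\le \|x(t)-\Phi(x(t))\|_2 - C_2\eta\rho$ whenever $C_1\eta\rho\le\|x(t)-\Phi(x(t))\|_2\le C\rho$. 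A stopping-time/indicator argument plus Azuma (\Cref{lem:1sam_maintain_rho,lem:1sam_decrease_2}) then yields $\|x(t)-\Phi(x(t))\|_2=O(\eta\rho)$ with probability $1-O(\rho)$ in $O(\ln(1/\eta\rho)/\eta)$ further steps. Note also that the paper works with $\Phi$ (the full-loss projection) throughout and never needs the $\Phi$ versus $\Phi_{k}$ reconciliation you flag; the rank-one structure enters only through $\nabla^2 L_k(\Phi(x))=\Lambda_k w_k w_k^\top$ at points of $\Gamma\subset\Gamma_k$. You should replace your patching scheme with this expectation computation; the rest of your plan then goes through.
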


Theorem~\ref{thm:1samphase1} shows that SAM will converges to an $\tilde O(\eta\rho)$ neighborhood of the manifold without getting far away from $\Phi(x(0))$, where we can perform a local analysis on the trajectory of $\Phi(x(t))$.

Under Assumptions~\ref{assump:smoothness} and~\ref{cond:rank1}, we have $\mathrm{Tr}(\nabla^2 L_k(x)) = \lambda_1(\nabla^2 L_k(x))$ is differentiable for $x \in \Gamma_i$. Hence $\mathrm{Tr}(\nabla^2 L(x)) =  \sum_{k=1}^M  \mathrm{Tr}(\nabla^2 L_k(x))$ is also differentiable and we have~\eqref{eq:traceode} is well defined for some finite time $T_2$.
\begin{theorem}[Phase II]
\label{thm:1samphase2}
Let $\{x(t)\}$ be the iterates defined by SAM (\Cref{eq:1sam}) under Assumptions~\ref{assump:smoothness} and~\ref{cond:rank1}, assuming (1) $\|x(0) - \Phi(x(0))\|_2 = O(\eta\rho)$ and (2) $\|\Phi(\xinit) - \Phi(x(0))\|_2 = \tilde O(\eta^{1/2} + \rho)$, then for almost every $x(0)$, for any $T_2 > 0$ till which solution of~\eqref{eq:traceode} $X$ exists, for sufficiently small $(\eta + \rho)\ln 1/(\eta\rho)$,  we have with probability $1 - O(\eta\rho)$, for all $\eta\rho^2t < T_2$, $ \|\Phi(x(t)) - X(\eta\rho^2 t) \|_2 =  \tilde O(\eta^{1/2} + \rho)$ and $\|x(t) - \Phi(x(t))\|_2 =  O(\eta\rho)$.
\end{theorem}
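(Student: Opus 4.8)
\textbf{Proof proposal for Theorem~\ref{thm:1samphase2} (Phase II of 1-SAM).}

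The plan is to mirror the Phase~II analysis of full-batch SAM, but to replace the deterministic alignment mechanism by a per-step stochastic one, and then to pass from a discrete stochastic process to the Riemannian gradient flow~\eqref{eq:traceode} via a martingale/averaging argument. First I would set up the local coordinates: work inside $K^h$ for $K = \{X(\tau) \mid 0 \le \tau \le T_2\}$, use $\Phi(x(t))$ as the proxy for $x(t)$, and track the displacement $\hat x(t) = x(t) - \Phi(x(t))$. The crucial structural input, which has no analogue in the full-batch case, is \Cref{lem:informal_direction_stocastic_dl}: since each $\nabla^2 L_k(p)$ is exactly rank one at $p \in \Gamma$, the gradient $\nabla L_{k}(x) \approx \nabla^2 L_k(\Phi(x))(x - \Phi(x))$ automatically lies (up to $O(\|x-\Phi(x)\|_2)$ error) along $\nabla f_k(\Phi(x))$, the unique top eigenvector of $\nabla^2 L_k(\Phi(x))$. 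So the alignment that required a delicate two-phase potential argument for full-batch SAM (Sections~\ref{app:nsamquad}, \ref{app:nsam}) is here \emph{free at every step} — this is exactly the reason the batch-size-one case is tractable. I would first prove a ``stay close'' invariant: if $\|\hat x(t)\|_2 = O(\eta\rho)$ then the 1-SAM step keeps $\|\hat x(t+1)\|_2 = O(\eta\rho)$ with high probability, using \Cref{lem:stochasticboundedphi,lem:stochastic_phi_lk} to bound $\|\Phi(x(t+1)) - \Phi(x(t))\|_2 = O(\eta\rho^2)$ and a quadratic-model computation (the stochastic analogue of \Cref{lem:tilde_x_update}) to control the normal component.

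Next I would compute the drift of the proxy. Using the Taylor expansion from \Cref{sec:proof_full} applied to $L_{k_t}$, together with $\partial\Phi(\Phi(x))\nabla^2 L_{k_t}(\Phi(x)) = 0$ (valid because $\Phi$ is the limiting map of the \emph{total} loss, so $\Phi$ is constant on $\Gamma_{k_t} \supseteq \Gamma$ — here I would use \Cref{cond:rank1}), one obtains
\begin{align}
\Phi(x(t+1)) - \Phi(x(t)) = -\frac{\eta\rho^2}{2}\,\projt[\Phi(x(t))]\,\partial^2(\nabla L_{k_t})(x(t))\bigl[v_1(\nabla^2 L_{k_t}), v_1(\nabla^2 L_{k_t})\bigr] + O(\eta^2\rho + \eta\rho^3),\notag
\end{align}
where I have used the free alignment $\nabla L_{k_t}(x(t))/\|\nabla L_{k_t}(x(t))\|_2 = \pm v_1(\nabla^2 L_{k_t}) + O(\eta\rho)$ to replace the gradient direction. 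By the eigenvalue-derivative identity (the stochastic version of \Cref{lem:top_eigenvector}/\Cref{lem:relatelphi}), the bracketed term equals $\partial\Phi(\Phi(x(t)))\nabla\lambda_1(\nabla^2 L_{k_t}(\Phi(x(t)))) = \projt\nabla\lambda_1(\nabla^2 L_{k_t})$; and since $L_{k_t}$ is rank one, $\lambda_1(\nabla^2 L_{k_t}) = \mathrm{Tr}(\nabla^2 L_{k_t})$. Taking conditional expectation over $k_t \sim \Unif([M])$ gives a one-step drift of $-\frac{\eta\rho^2}{2M}\sum_k \projt\nabla\mathrm{Tr}(\nabla^2 L_k(\Phi(x(t)))) = -\frac{\eta\rho^2}{2}\projt\nabla\mathrm{Tr}(\nabla^2 L(\Phi(x(t))))$, precisely the vector field of~\eqref{eq:traceode}. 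The fluctuation of the single-step increment around this mean is $O(\eta\rho^2)$ in norm, so over the $T_2/(\eta\rho^2)$ steps the accumulated martingale deviation is $O(\sqrt{\eta\rho^2 \cdot T_2/(\eta\rho^2)}\cdot \eta\rho^2) \cdot \eta\rho^2$-type bookkeeping; carrying this out carefully (Azuma--Hoeffding on the projected increments, plus a Gronwall-type argument to propagate the error through the nonlinear field) yields the $\tilde O(\eta^{1/2} + \rho)$ approximation rate, matching the statement. This is where the continuous-time ODE approximation machinery (\Cref{thm:deterministic_ode_approximation}, now in its stochastic form) is invoked, with step size $p = \eta\rho^2$ and per-step error $\eps = \tilde O(\eta + \rho)$.

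The main obstacle will be maintaining the local invariant $\|\hat x(t)\|_2 = O(\eta\rho)$ \emph{uniformly} over all $\tilde\Theta(\eta^{-1}\rho^{-2})$ steps in the presence of noise — the normal component can, on an unlucky draw of $k_t$, fail to contract, so one must show it returns to the $O(\eta\rho)$ band quickly (an ``enter invariant set'' lemma like \Cref{lem:nsam_inv} but stochastic), and control the small-probability bad events whose union over the whole horizon must still be $O(\eta\rho)$; this is the source of the $1 - O(\eta\rho)$ success probability and of the ``for almost every $x(0)$'' caveat (needed so that $\nabla L_{k_t}(x(t)) \ne 0$ along the trajectory, via \Cref{thm:1sam_well_definedness}). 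A secondary technical point is that the alignment error in \Cref{lem:informal_direction_stocastic_dl} is $O(\|x - p\|_2) = O(\eta\rho)$ rather than $O(\rho)$, so one must check it feeds into the drift computation only at the higher-order $O(\eta^2\rho)$ level and does not degrade the leading $\eta\rho^2$ term; I expect this to be routine once the invariant is in place. Combining the invariant, the drift identity, the martingale concentration, and the Gronwall propagation gives the two conclusions $\|\Phi(x(t)) - X(\eta\rho^2 t)\|_2 = \tilde O(\eta^{1/2}+\rho)$ and $\|x(t) - \Phi(x(t))\|_2 = O(\eta\rho)$, which is the theorem.
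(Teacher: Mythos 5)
Your proposal follows essentially the same route as the paper's proof: per-step Hessian--gradient alignment for free from the rank-one structure (\Cref{lem:direction_stocastic_dl}), a supermartingale/Azuma argument to keep $\|x(t)-\Phi(x(t))\|_2 = O(\eta\rho)$ over the whole $\Theta(\eta^{-1}\rho^{-2})$-step horizon (the paper's \Cref{lem:1sam_maintain_rho_for_longer}), the one-step drift identity for $\Phi(x(t))$ whose conditional expectation is the vector field of \Cref{eq:traceode} (the paper's \Cref{lem:direction_1_sam}), and the stochastic ODE-approximation theorem (Azuma plus Gronwall, \Cref{thm:ode_approximation}) to get the $\tilde O(\eta^{1/2}+\rho)$ rate. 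Two local slips to fix: the displayed per-step error should be $\tilde O(\eta^2\rho^2+\eta\rho^3)$, not $O(\eta^2\rho+\eta\rho^3)$ --- as written it would accumulate to $O(\eta/\rho)$ over the horizon, which is not small since the theorem imposes no relation between $\eta$ and $\rho$; your later normalization $\eps=\tilde O(\eta+\rho)$ is the correct one, so this is an internal inconsistency rather than a conceptual gap. Also, $\partial\Phi(\Phi(x))\nabla^2 L_{k}(\Phi(x))=0$ is true but not because ``$\Phi$ is constant on $\Gamma_k$'' (it is the identity on $\Gamma$, hence not constant); the reason is that the column space of $\nabla^2 L_k(p)$ is spanned by $\nabla f_k(p)$, which lies in the normal space of $\Gamma$ at $p$ and is annihilated by $\partial\Phi(p)=\projt[p]$ --- this is what \Cref{lem:stochastic_phi_lk} formalizes.
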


Combining \Cref{thm:rank_1_manifold,thm:1samphase1,thm:1samphase2}, the proof of \Cref{thm:1sam} is clear and we deferred it to~\Cref{sec:1sam_proof}.

Now we recall our notations for stochastic setting with batch size one.
\paragraph{Notations for Stochastic Setting:}
Since $L_k$ is rank-$1$ on $\Gamma_k$ for each $k\in[M]$, we can write it as $L_k(x)= \Lambda_k(x)w_k(x)w^\top_k(x)$ for any $x\in\Gamma_k$, where $w_k$ is a continuous function on $\Gamma_k$ with pointwise unit norm.
Given the loss function $L_k$, its gradient flow is denoted  by mapping $\phi_k: \R^D \times [0, \infty) \to \R^D$. Here, $\phi_k(x, \tau)$ denotes the iterate at time $\tau$ of a gradient flow starting at $x$ and is defined as the unique solution of  $\phi_k(x,\tau) = x - \int_0^{\tau} \nabla L_k(\phi_k(x,t))dt$, $\forall x\in\R^D$.
We further define the limiting map $\Phi_k$ as $\Phi_k(x) = \lim_{\tau \to \infty} \phi_k(x,\tau)$, that is, $\Phi_k(x)$ denotes the convergent point of the gradient flow starting from $x$.
Similar to \Cref{defi:attraction_set}, we define $U_k = \{x \in \R^D | \Phi(x) \text{ exists and } \Phi_k(x) \in  \Gamma_k \}$ be the attraction set of $\Gamma_i$. We have that each $U_k$ is open and $\Phi_k$ is $\overline{\mathcal{C}}^2$ on $U_k$ by Lemma B.15 in \cite{arora2022understanding}.

In this section we will define $K$ as $\{ X(t)\mid t\in[0,T_3]\}$ where $X$ is the solution of~\eqref{eq:traceode}. We will denote $h(K)$ in~\Cref{lem:smallerzone} by $h$.
Using \Cref{thm:general_well_definednesa}, we will assume the update is always well defined.

\subsection{Phase I (Proof of Theorem~\ref{thm:1samphase1})}
\label{sec:converge}

\begin{proof}[Proof of~\Cref{thm:1samphase1}]
The proof consists of two steps.
\begin{enumerate}
\item  \textit{Tracking Gradient Flow.} By~\Cref{lem:1sam_gf}, with probability $1 - \rho^2$, there exists step $\tgf=O(1/\eta)$ such that
\begin{align*}
    \|\xgf - \Phi(\xgf)\|_2 &\le h/4.\\
    \|\Phi(\xgf) - \Phi(\xinit) \|_2 &= \tilde O(\eta^{1/2} + \rho).
\end{align*}
\item \textit{Decreasing Loss.} By~\Cref{lem:1sam_decrease}, with probability $1 - O(\rho)$, there exists step $\tdec = \tgf + O(\ln(1/\rho)/\eta) = O(\ln(1/\rho)/\eta)$ such that
\begin{align*}
    \| \nabla L(\xdec) \|_2 &= O(\rho). \\
    \|\Phi(\xdec) - \Phi(\xinit) \|_2 &\le \|\Phi(\xdec) - \Phi(\xgf) \|_2 + \| \Phi(\xgf) - \Phi(\xinit)\|_2 
    \\
    &= \tilde O(\eta^{1/2} + \rho).
\end{align*}
Then by~\Cref{lem:1sam_decrease_2}, with probability $1 - O(\rho)$, there exists step $\tdecs = \tdec+O(\ln(1/\eta \rho)/\eta) = O(\ln(1/\eta \rho)/\eta)$, it holds that
\begin{align*}
    \| \xdecs - \Phi(\xdecs)\|_2 &= O(\eta\rho). \\
        \|\Phi(\xdecs) - \Phi(\xinit) \|_2 &\le \|\Phi(\xdecs) - \Phi(\xdec) \|_2 + \| \Phi(\xdec) - \Phi(\xinit)\|_2 
    \\
    &= \tilde O(\eta^{1/2} + \rho).
\end{align*}

Concluding, let $T_1$ be the constant satisfying $\tdecs \le T_1\ln(1/\eta \rho)/\eta$, then we have for $t = \tdecs \le T_1\ln(1/\eta \rho)/\eta$ such that 
\begin{align*}
    \| x(t) - \Phi(x(t)) \|_2 &= O(\eta\rho). \\
    \|\Phi(x(t)) - \Phi(\xinit) \|_2 &= \tilde O(\eta^{1/2} + \rho).
\end{align*}

\end{enumerate}

\end{proof}

\subsubsection{Tracking Gradient Flow}

\Cref{lem:1sam_gf} shows that the iterates $x(t)$ tracks gradient flow to an $O(1)$ neighbor of $\Gamma$.

\begin{lemma}
\label{lem:1sam_gf}
Under condition of~\Cref{thm:1samphase1}, with probability $1 - O(\rho^2)$, there exists $\tgf = O(1/\eta)$, such that the iterate $x(\tgf)$ is $O(1)$ close to the manifold $\Gamma$ and $\Phi(x(\tgf))$ is $\tilde O(\eta^{1/2} + \rho)$ is close to $\Phi(\xinit)$. Quantitatively, 
\begin{align*}
    L(x(\tgf)) &\le \frac{\mu h^2}{32}\\
    \|x(\tgf) - \Phi(x(\tgf)) \| &\le h/4\,,\\
    \|\Phi(x(\tgf)) - \Phi(\xinit) \| &=  \tilde O(\eta^{1/2} + \rho)\,.
\end{align*}
\end{lemma}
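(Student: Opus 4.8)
\textbf{Proof proposal for \Cref{lem:1sam_gf}.}

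The plan is to mimic the proof of its full-batch counterpart \Cref{lem:nsam_gf}, the key extra ingredient being that, for batch size one, the update direction $\nabla L_{k_t}(x + \rho\, \nabla L_{k_t}(x)/\|\nabla L_{k_t}(x)\|_2)$ is an unbiased-up-to-$O(\rho)$ estimate of $\nabla L(x)$, so that $x(t)$ still tracks the \emph{full-batch} gradient flow $\phi(\xinit,\cdot)$ but now only with high probability. First I would fix $C = \tfrac14\sqrt{\mu/\lspectraltwo}$ and pick $T>0$ with $\|\phi(\xinit,T)-\Phi(\xinit)\|_2 \le Ch/2$, exactly as in the full-batch case. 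The point is to compare the stochastic iteration against the deterministic ODE $\dot y = -\nabla L(y)$ on the bounded time interval $[0,T]$, i.e.\ over $\tgf = \lceil T/\eta\rceil = O(1/\eta)$ steps.

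The main step is to write $x(t+1) - x(t) = -\eta\nabla L(x(t)) - \eta \xi_t + O(\eta\rho)$, where $\xi_t \triangleq \nabla L_{k_t}(x(t)) - \nabla L(x(t))$ is a martingale difference sequence (conditioned on the history, $\E_{k_t}[\xi_t] = 0$ since $L = \tfrac1M\sum_k L_k$), and the $O(\eta\rho)$ term absorbs the Taylor error from the ascent step using that $\|\nabla L_{k_t}\|$ has bounded derivatives on a neighborhood of $K$. Summing $\eta\sum_{s<t}\xi_s$ and applying a martingale concentration bound (Azuma--Hoeffding, or the vector-valued version used elsewhere in the appendix), the accumulated noise over $O(1/\eta)$ steps has norm $\tilde O(\sqrt{\eta})$ with probability at least $1-O(\rho^2)$; here one needs the iterates to stay in a fixed compact neighborhood of $\phi(\xinit,\cdot)$ so that $\|\xi_s\|$ is uniformly bounded, which is maintained by the same inductive/continuity argument (a stochastic analogue of \Cref{thm:deterministic_ode_approximation}). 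Feeding $\eps = \tilde O(\sqrt\eta + \rho)$ and step size $p = \eta$ into that approximation lemma gives $\|x(\tgf) - \phi(\xinit,T)\|_2 = \tilde O(\sqrt\eta + \rho)$ on this good event.

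From there the argument is deterministic and identical to the full-batch one: $x(\tgf) \in K^h$ for small $\eta,\rho$, so Taylor expansion of $\Phi$ gives $\|\Phi(x(\tgf)) - \Phi(\xinit)\|_2 = \|\Phi(x(\tgf)) - \Phi(\phi(\xinit,T))\|_2 = O(\|x(\tgf)-\phi(\xinit,T)\|_2) = \tilde O(\sqrt\eta + \rho)$; then the triangle inequality
\[
\|x(\tgf)-\Phi(x(\tgf))\|_2 \le \|x(\tgf)-\phi(\xinit,T)\|_2 + \|\phi(\xinit,T)-\Phi(\xinit)\|_2 + \|\Phi(\xinit)-\Phi(x(\tgf))\|_2 \le Ch/2 + \tilde O(\sqrt\eta+\rho) \le h/4
\]
for $(\eta+\rho)\ln(1/\eta\rho)$ small enough, and finally $L(x(\tgf)) \le \tfrac{\lspectraltwo}{2}\|x(\tgf)-\Phi(x(\tgf))\|_2^2 \le \mu h^2/32$ by Taylor expansion of $L$ around $\Phi(x(\tgf))\in\Gamma$ (using $L\equiv 0$ on $\Gamma$ and $\nabla L = 0$ there).

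I expect the main obstacle to be the bookkeeping around the high-probability event: one must simultaneously (i) show the iterates do not leave a compact set where all the smoothness/PL constants are controlled, so that the martingale increments are bounded and the Taylor remainders are valid, and (ii) get the failure probability down to $O(\rho^2)$ rather than just $o(1)$ — this forces the deviation bound to be $\tilde O(\sqrt\eta)$ with a logarithmic factor $\ln(1/\rho)$ in the exponent, which is why the hypothesis is phrased in terms of $(\eta+\rho)\ln(1/\eta\rho)$ being small. The ascent-step perturbation is genuinely only $O(\rho)$ (not smaller) because $\|\nabla L_{k_t}(x)\|$ need not vanish in Phase I, but since $O(\eta\rho)$ per step is dominated by the $O(\eta^{3/2})$-scale noise after summation (indeed $\rho \cdot O(1/\eta) \cdot \eta = O(\rho)$), it is harmless. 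Everything else is a routine transcription of \Cref{lem:nsam_gf}.
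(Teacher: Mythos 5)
Your proposal is correct and follows essentially the same route as the paper: the paper's proof of this lemma simply writes the update as $x(t+1)=x(t)-\eta\nabla L_{k_t}(x(t))+O(\eta\rho)$ and invokes the general stochastic ODE-approximation result (\Cref{thm:ode_approximation}, with $b=-\nabla L$, $p=\eta$, $\eps=O(\rho)$, $\delta=\rho^2$), whose internals are exactly the martingale/Azuma--Hoeffding decomposition, stopped-process boundedness, and Gronwall argument you describe, followed by the same deterministic triangle-inequality and Taylor-expansion steps. The only cosmetic difference is that you unpack that approximation theorem by hand rather than citing it, and you fold the $\tilde O(\sqrt\eta)$ noise into the $\eps$ parameter whereas in the paper it arises as the $C_3\sqrt{pT\log(\cdot)}$ term of the theorem with $\eps=O(\rho)$ kept separate.
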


\begin{proof}[Proof of~\Cref{lem:1sam_gf}]
Choose $C =  \frac{1}{4} \sqrt{\frac{\mu}{\lspectraltwo}}$.

There exists $T > 0$, such that
\begin{align*}
    \| \phi(\xinit, T) - \Phi(\xinit) \|_2 &\le Ch/2\,.
\end{align*}

Consider 
\begin{align*}
    x(t+1) &= x(t) - \eta \nabla L_k(x(t) + \rho \ndli{x(t)})\\ 
    &= x(t) - \eta \nabla L_k(x(t)) + O(\eta\rho)\,.
\end{align*}

By~\Cref{thm:ode_approximation}, let $b(x) = -\nabla L(x)$,$p = \eta$ and $\epsilon = O(\rho)$, for sufficiently small $\eta$ and $\rho$, the iterates $x(t)$ tracks gradient flow $\phi(\xinit,T)$ in $O(1/\eta)$ steps in expectation, Quantitatively, with probability $1 - \rho^2$, for $\tgf = \lceil \frac{T_0}{\eta} \rceil$, we have that
\begin{align*}
    \| x(\tgf) - \phi(\xinit,T_0) \|_2  & = \tilde O(\sqrt{p} + \epsilon) \le \tilde O(\eta^{1/2} + \rho) \,.
\end{align*}

This implies $x(\tgf) \in K^h$, hence by Taylor Expansion on $\Phi$,
\begin{align*}
    \| \Phi(x(\tgf)) - \Phi(\xinit)\|_2&=\| \Phi(x(\tgf)) - \Phi(\phi(\xinit,T))\|_2 \\
    &\le O(\| x(\tgf) - \phi(\xinit,T) \|_2) \\
    &\le  \tilde O(\eta^{1/2} + \rho)\,.
\end{align*}
This implies
\begin{align*}
    \|  x(\tgf) - \Phi(x(\tgf))\|_2 \le& \|x(\tgf) - \phi(\xinit,T_0) \|_2 + \| \phi(\xinit,T_0) - \Phi(\xinit) \|_2\\&+ \| \Phi(\xinit) - \Phi(x(\tgf)) \|_2 
    \\\le& Ch/2 +  \tilde O(\eta^{1/2} + \rho) \le Ch \le h/4\,.
\end{align*}

By Taylor Expansion,
\begin{align*}
    L(\xgf) \le \lspectraltwo \|  x(\tgf) - \Phi(x(\tgf))\|_2^2/2 \le \frac{\mu h^2}{32}\,.
\end{align*}
\end{proof}

\subsubsection{Decreasing Loss}

\begin{lemma}
\label{lem:1sam_maintain_help}
Under condition of~\Cref{thm:1samphase1},
assuming $x(t_0) \in K^{h/4}$ and for any $t$ satisfying $t_0 \le t \le t_0 + O(\ln(1/\eta\rho)/\eta)$, $\max\limits_{t_0 \le \tau \le  t_0 + O(\ln(1/\eta\rho)/\eta)}L(x(\tau)) \le \frac{\mu h^2}{16}$, it holds that 
\begin{align*}
    x(\tau) \in K^h, \forall t_0 \le \tau \le t.
\end{align*}

Moreover, we have that
\begin{align*}
    \| \Phi(x(t)) - \Phi(x(t_0)) \| = O((\eta + \rho) \ln(1/\eta\rho)).
\end{align*}

\end{lemma}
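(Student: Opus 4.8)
\textbf{Proof proposal for Lemma~\ref{lem:1sam_maintain_help}.}

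The plan is to run a simple induction over $\tau$ from $t_0$ up to $t$, maintaining simultaneously the two invariants ``$x(\tau)\in K^h$'' and ``$\|\Phi(x(\tau))-\Phi(x(t_0))\|_2 = O((\eta+\rho)(\tau-t_0)\eta)$''. The base case $\tau=t_0$ is immediate from the hypothesis $x(t_0)\in K^{h/4}\subseteq K^h$ and the trivial bound $0$ on the displacement of $\Phi$. For the inductive step, suppose both invariants hold up to some $\tau$ with $t_0\le\tau<t$. Since $x(\tau)\in K^h$ and $L(x(\tau))\le \frac{\mu h^2}{16}$, the $\mu$-PL property (\Cref{lem:mu-pl}) together with \Cref{lem:bounddl} gives $\|x(\tau)-\Phi(x(\tau))\|_2\le \sqrt{2L(x(\tau))/\mu}\le h/\sqrt{8}$, so in particular $\Phi(x(\tau))\in K^{h/2}$ (after possibly shrinking $h$ as in \Cref{lem:smallerzone}). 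This is the key point that lets us apply the one-step $\Phi$-movement bound.

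Next I would control the single step $x(\tau)\to x(\tau+1)$. By \Cref{lem:stochasticboundedphi} (applied with the realized index $k_\tau$), using $\|x(\tau)-\Phi(x(\tau))\|_2 = O(1)$ more precisely $\|\nabla L(x(\tau))\|_2 = O(1)$ from the loss bound, we get the step bound $\|x(\tau+1)-x(\tau)\|_2 = O(\eta + \eta\rho) = O(\eta)$ and the projection-displacement bound
\[
\|\Phi(x(\tau+1))-\Phi(x(\tau))\|_2 = O\bigl(\eta\|\nabla L(x(\tau))\|_2^2 + \eta\rho\|\nabla L(x(\tau))\|_2 + \eta\rho^2\bigr) = O\bigl(\eta(\eta+\rho)\bigr),
\]
where the last equality uses $\|\nabla L(x(\tau))\|_2 = O(\sqrt{L(x(\tau))}) = O(1)$ and, crucially, that whenever $\|\nabla L(x(\tau))\|_2$ is not already $O(\rho)$ the loss is decreasing (so we can bound $\|\nabla L(x(\tau))\|_2^2$ by something like $(L(x(\tau))-L(x(\tau+1)))/\eta$ up to constants and telescope). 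Summing this over the at most $O(\ln(1/\eta\rho)/\eta)$ steps yields $\|\Phi(x(\tau+1))-\Phi(x(t_0))\|_2 = O((\eta+\rho)\ln(1/\eta\rho))$, which maintains the second invariant. Then, combining $\dist(x(\tau+1),K)\le \dist(x(t_0),K) + \|x(t_0)-\Phi(x(t_0))\|_2 + \|\Phi(x(t_0))-\Phi(x(\tau+1))\|_2 + \|\Phi(x(\tau+1))-x(\tau+1)\|_2 \le h/4 + h/4 + O((\eta+\rho)\ln(1/\eta\rho)) + h/\sqrt 8 < h$ for sufficiently small $(\eta+\rho)\ln(1/\eta\rho)$, we recover $x(\tau+1)\in K^h$, closing the induction.

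The main obstacle is getting the $\|\nabla L(x(\tau))\|_2^2$ term in the $\Phi$-movement bound to sum to something small rather than to $O(\ln(1/\eta\rho))$ times a constant: a naive bound $\|\nabla L(x(\tau))\|_2^2 = O(1)$ would give total displacement $O((1/\eta)\cdot\ln(1/\eta\rho)\cdot\eta) = O(\ln(1/\eta\rho))$, which is too weak. The resolution is a telescoping / dichotomy argument: in steps where $\|\nabla L(x(\tau))\|_2 \ge C\zeta\rho$ the loss contracts geometrically (cf.\ the deterministic \Cref{lem:nsam_decrease_one_step} and the stochastic descent lemma used in the proof of \Cref{thm:1samphase1}), so $\sum_\tau \|\nabla L(x(\tau))\|_2^2 = O(\sum_\tau (L(x(\tau))-L(x(\tau+1)))/\eta) = O(L(x(t_0))/\eta) = O(1/\eta)$ deterministically in expectation, hence $\sum_\tau \eta\|\nabla L(x(\tau))\|_2^2 = O(1)$; in the remaining steps $\|\nabla L(x(\tau))\|_2 = O(\rho)$ so the contribution per step is $O(\eta\rho^2)$ and over $O(\ln(1/\eta\rho)/\eta)$ steps this is $O(\rho^2\ln(1/\eta\rho))$. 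Putting these together gives the claimed $O((\eta+\rho)\ln(1/\eta\rho))$ bound. (Strictly, since the process is stochastic one should phrase the telescoping as a statement about the realized trajectory — the loss decrease inequality $L(x(\tau+1))\le L(x(\tau)) - \Omega(\eta)\|\nabla L(x(\tau))\|_2^2 + O(\eta^2\rho^2 + \eta\rho\|\nabla L(x(\tau))\|_2)$ holds pathwise once $x(\tau)\in K^h$ and $\|\nabla L(x(\tau))\|_2$ is large, so no high-probability bookkeeping is needed here; that is deferred to the proof of \Cref{thm:1samphase1} itself.)
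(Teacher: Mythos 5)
Your scaffolding --- induction on $\tau$, a one-step bound on $\|\Phi(x(\tau+1))-\Phi(x(\tau))\|_2$, and the triangle inequality $\dist(x(\tau+1),K)\le \dist(x(t_0),K)+\|x(t_0)-\Phi(x(t_0))\|_2+\|\Phi(x(t_0))-\Phi(x(\tau+1))\|_2+\|\Phi(x(\tau+1))-x(\tau+1)\|_2$ --- is exactly the paper's. The divergence is in how the per-step movement of $\Phi$ is summed, and this is where your argument has a genuine gap. You correctly note that \Cref{lem:stochasticboundedphi} gives a per-step bound containing $O(\eta\|\nabla L(x(\tau))\|_2^2)$, and that with only $\|\nabla L\|_2=O(1)$ this sums to $O(\ln(1/\eta\rho))$. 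But your repair fails twice over. First, the descent inequality $L(x(\tau+1))\le L(x(\tau))-\Omega(\eta)\|\nabla L(x(\tau))\|_2^2+O(\eta^2\rho^2+\eta\rho\|\nabla L(x(\tau))\|_2)$ does \emph{not} hold pathwise for the single-sample update: the first-order term is $-\eta\langle \nabla L(x(\tau)),\nabla L_{k_\tau}(\cdot)\rangle$, and for a fixed sample $k_\tau$ this inner product can be negative; it equals $\|\nabla L(x(\tau))\|_2^2$ only after averaging over $k_\tau$. This is precisely why the paper's stochastic descent lemma (\Cref{lem:1samdecrease_onestep}) is stated in conditional expectation and why \Cref{lem:1sam_maintain} needs an Azuma argument --- the ``no high-probability bookkeeping is needed here'' claim at the end of your proposal is exactly backwards. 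Second, even granting the telescoping, you obtain $\sum_\tau \eta\|\nabla L(x(\tau))\|_2^2=O(L(x(t_0)))=O(h^2)=O(1)$, a problem-dependent constant rather than $O((\eta+\rho)\ln(1/\eta\rho))$. An $O(1)$ displacement of $\Phi$ neither yields the lemma's second conclusion nor leaves room in the triangle inequality, which already consumes $h/4+2\cdot h/(2\sqrt2)\approx 0.96h$ of the budget; so the induction does not close.

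For comparison, the paper's proof is much more direct: it quotes a per-step bound of the form $\phispectraltwo\eta\rho\|\nabla L(x(\tau))\|_2+\lspectralthree\eta\rho^2+\phispectraltwo\eta^2\|\nabla L(x(\tau))\|_2^2+\phispectraltwo\lspectraltwo^2\eta^2\rho^2=O(\eta^2+\eta\rho)$ (the form appearing in the deterministic \Cref{lem:boundedphi}, where $\partial\Phi(x)\nabla L(x)=0$ kills the first-order term) and sums it over $O(\ln(1/\eta\rho)/\eta)$ steps. You have in fact put your finger on the delicate point --- for the stochastic update $\partial\Phi(x)\nabla L_{k_\tau}(x)$ does not vanish, only its conditional expectation does --- but a pathwise telescoping of the loss is not a mechanism that can exploit this; the natural tool would be to treat the accumulated terms $\sum_\tau\eta\,\partial\Phi(x(\tau))\nabla L_{k_\tau}(x(\tau))$ as a martingale and apply \Cref{lem:azuma_vector}, which is how the paper handles the analogous issue elsewhere in Phase I and Phase II. As written, your proposal does not establish either conclusion of the lemma.
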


\begin{proof}[Proof of~\Cref{lem:1sam_maintain_help}]

We will prove by induction. For $\tau = t_0$, the result holds trivially. Suppose the result holds for $t - 1$, then for any $\tau$ satisfying $t_0 \le \tau \le t - 1$, by~\Cref{lem:stochasticboundedphi,lem:bounddl},
\begin{align*}
    \| \Phi(x(\tau+1)) - \Phi(x(\tau)) \| &\le \phispectraltwo \eta \rho \|\dl {x(\tau)} \|_2 + \lspectralthree \eta \rho^2  + \phispectraltwo \eta^2 \|\dl {x(\tau)}\|_2^2 + \phispectraltwo  \lspectraltwo^2 \eta^2 \rho^2 \\
    &= O(\eta^2 + \eta\rho) \,.
\end{align*}

Also by~\Cref{lem:bounddl}, $\|x(t) - \Phi(x(t)) \|_2 \le h/2\sqrt{2}$, this implies,
\begin{align*}
    \dist(K,x(t)) \le& \dist(K,x(t_0)) + \|x(t_0) - \Phi(x(t_0)) \|_2
    \\&+ \|\Phi(x(t_0)) - \Phi(x(t)) \| + \|\Phi(x(t)) - x(t) \| \\
    \le& 0.99h + O(\eta^2 (t - \tgf)) = 0.99h + O(\eta\ln(1/\eta\rho)) \le h\,.
\end{align*}
\end{proof}

\begin{lemma}
\label{lem:1samdecrease_onestep}

Under condition of~\Cref{thm:1samphase1}, if $x(\tau) \in K^h$, then we have that \begin{align*}
    \E[L(x(\tau
    +1)) | x(\tau)] \le L(x(\tau)) - \frac{\eta\mu}{2} L(x(\tau)) \,.
\end{align*}

Moreover it holds that,
\begin{align*}
    \E[\ln L(x(\tau
    +1)) | x(\tau)] \le \ln \E[ L(x(\tau
    +1)) | x(\tau)] \le \ln L(x(\tau)) - \frac{\eta\mu}{2}\,.
\end{align*}

\end{lemma}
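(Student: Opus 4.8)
## Proof Proposal for Lemma~\ref{lem:1samdecrease_onestep}

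The plan is to estimate the one-step decrease of the loss when $x(\tau)\in K^h$, by Taylor-expanding $L$ along the $1$-SAM update $x(\tau+1)=x(\tau)-\eta\nabla L_{k_\tau}(x(\tau)+\rho\,\widehat{\nabla L_{k_\tau}}(x(\tau)))$ and taking expectation over the uniformly random index $k_\tau$. First I would write $g_k \coloneqq \nabla L_k(x(\tau)+\rho\,\widehat{\nabla L_k}(x(\tau)))$; by Taylor expansion of $\nabla L_k$ around $x(\tau)$ and the fact that $\|\nabla^2 L_k\|_2 \le \lspectraltwo$ on $K^r$, we have $\|g_k - \nabla L_k(x(\tau))\|_2 \le \lspectraltwo\rho$. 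Then a second-order Taylor expansion of $L$ at $x(\tau)$ gives
\begin{align*}
L(x(\tau+1)) \le L(x(\tau)) - \eta\langle \nabla L(x(\tau)), g_{k_\tau}\rangle + \tfrac{\lspectraltwo}{2}\eta^2\|g_{k_\tau}\|_2^2.
\end{align*}
Taking $\E_{k_\tau}[\cdot\mid x(\tau)]$ and using $\E_{k_\tau}[\nabla L_{k_\tau}(x(\tau))] = \nabla L(x(\tau))$ together with the bound on $g_k - \nabla L_k(x(\tau))$, the cross term becomes $-\eta\|\nabla L(x(\tau))\|_2^2 + O(\eta\rho\,\|\nabla L(x(\tau))\|_2\cdot\text{const})$, and the quadratic term is $O(\eta^2(\|\nabla L(x(\tau))\|_2^2 + \rho^2))$ since each $\|g_k\|_2 \le \|\nabla L_k(x(\tau))\|_2 + \lspectraltwo\rho \le \lspectraltwo h + \lspectraltwo\rho$.

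Next I would invoke the $\mu$-PL property of $L$ on $K^h$ (from \Cref{lem:mu-pl}, recalling the WLOG convention $L=0$ on $\Gamma$, so $\inf_{K^h}L = 0$), which gives $\|\nabla L(x(\tau))\|_2^2 \ge 2\mu L(x(\tau))$; and the complementary bound $\|\nabla L(x(\tau))\|_2 \le \lspectraltwo\|x(\tau)-\Phi(x(\tau))\|_2 \le \lspectraltwo\sqrt{2L(x(\tau))/\mu}$ from \Cref{lem:revertbounddl}, so $\rho\|\nabla L(x(\tau))\|_2 \lesssim \rho^2 + \|\nabla L(x(\tau))\|_2^2$ by AM-GM. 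Combining, for $\eta$ and $\rho$ sufficiently small all the error terms are dominated by a small constant fraction of $\eta\|\nabla L(x(\tau))\|_2^2$, leaving $\E[L(x(\tau+1))\mid x(\tau)] \le L(x(\tau)) - c\,\eta\|\nabla L(x(\tau))\|_2^2 \le L(x(\tau)) - \tfrac{\eta\mu}{2}L(x(\tau))$ after absorbing constants (the factor $1/2$ versus $1/8$ in the full-batch \Cref{lem:nsam_decrease_one_step} reflecting that here $\|\nabla L(x)\|$ need not be bounded below by $\rho$). The key subtlety, compared to the full-batch \Cref{lem:nsam_decrease_one_step}, is that we no longer assume $\|\nabla L(x(\tau))\|_2 \ge 4\zeta\rho$; so the $O(\eta\rho^2)$ and $O(\eta\rho\|\nabla L\|_2)$ terms cannot be absorbed into $\eta\|\nabla L\|_2^2$ directly and must instead be controlled globally via the $\mu$-PL bound relating $\|\nabla L\|_2^2$ to $L$ and the AM-GM splitting of the cross term; this is where I expect the main care to be needed.

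Finally, the second assertion follows from the first by two elementary inequalities: Jensen's inequality gives $\E[\ln L(x(\tau+1))\mid x(\tau)] \le \ln \E[L(x(\tau+1))\mid x(\tau)]$ since $\ln$ is concave, and then $\ln \E[L(x(\tau+1))\mid x(\tau)] \le \ln\big((1-\tfrac{\eta\mu}{2})L(x(\tau))\big) = \ln L(x(\tau)) + \ln(1-\tfrac{\eta\mu}{2}) \le \ln L(x(\tau)) - \tfrac{\eta\mu}{2}$ using $\ln(1-u)\le -u$. One should also note $L(x(\tau+1))\ge 0$ since $x(\tau+1)$ lies near $\Gamma$ (for sufficiently small $\eta,\rho$, the update keeps the iterate in $K^r$, where $L\ge 0$), so that $\ln L(x(\tau+1))$ is well-defined; this is a routine consequence of \Cref{lem:stochasticboundedphi}.
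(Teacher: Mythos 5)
Your route is the same as the paper's: a second-order Taylor expansion of $L$ along the stochastic SAM step, taking expectation over $k_\tau$ so the cross term produces $-\eta\|\nabla L(x(\tau))\|_2^2$ up to an $O(\eta\rho\|\nabla L(x(\tau))\|_2)$ perturbation, bounding the quadratic term, invoking the $\mu$-PL inequality, and finishing the second claim with Jensen and $\ln(1-u)\le -u$ (that part is fine). However, the difficulty you single out is exactly where the argument does not close, and the fix you sketch does not work. After AM-GM, the cross-term error $\eta\rho\lspectraltwo\|\nabla L(x(\tau))\|_2$ leaves an additive residue of order $\eta\rho^2$, and the quadratic term contributes a further additive $O(\eta^2\rho^2)$ that is not proportional to $\|\nabla L(x(\tau))\|_2^2$ (the perturbed gradient $\nabla L_k(x+\rho v)$ can have norm $\Theta(\rho)$ even when $\nabla L_k(x)=0$). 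Neither residue can be dominated by a fraction of $\eta\|\nabla L(x(\tau))\|_2^2$ nor by $\frac{\eta\mu}{2}L(x(\tau))$ uniformly over $K^h$: for $x(\tau)$ on, or within $o(\rho)$ of, $\Gamma$ both $L$ and $\|\nabla L\|_2$ vanish while the residue stays $\Theta(\eta\rho^2)$. Indeed the asserted conclusion $\E[L(x(\tau+1))\mid x(\tau)]\le(1-\frac{\eta\mu}{2})L(x(\tau))$ would force $L(x(\tau+1))=0$ almost surely whenever $x(\tau)\in\Gamma$, which the update does not satisfy. So "sufficiently small $\eta,\rho$" cannot rescue the absorption; the statement genuinely needs a lower bound such as $\|\nabla L(x(\tau))\|_2\ge 4\lspectraltwo\rho$ (the explicit hypothesis of the full-batch analogue, \Cref{lem:nsam_decrease_one_step}) or $L(x(\tau))=\Omega(\rho^2)$.

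For what it is worth, the paper's own proof makes the identical unjustified leap at the line bounding the sum of error terms by $\frac{\eta}{2}\|\nabla L(x(\tau))\|_2^2$, and the lemma is only ever invoked downstream in regimes where such a lower bound holds: through the indicator $\indicatordec{t}$ in the proof of \Cref{lem:1sam_decrease}, and via the constant lower bound $L(x(\tau))\ge\frac{\mu h^2}{32}$ implicit in \Cref{lem:1sam_maintain}. With such a hypothesis added, your argument (and the paper's) goes through; without it, the step you flagged is a genuine gap rather than a matter of extra care. One smaller point: to get the quadratic term down to $O(\eta^2(\|\nabla L(x(\tau))\|_2^2+\rho^2))$ rather than $O(\eta^2)$ you also need $\|\nabla L_k(x(\tau))\|_2=O(\|x(\tau)-\Phi(x(\tau))\|_2)=O(\|\nabla L(x(\tau))\|_2)$ via \Cref{lem:bounddl}, which your constant bound $\|g_k\|_2\le\lspectraltwo h+\lspectraltwo\rho$ does not deliver.
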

\begin{proof}[Proof of~\Cref{lem:1samdecrease_onestep}]

By~\Cref{lem:stochasticboundedphi} and Taylor Expansion,
\begin{align*}
    &\E[L(x(\tau
    +1)) | x(\tau)] \\
    =& \E\left[L\left(x(\tau) -   \eta \nabla L_k[x(\tau) + \rho \ndli{x(\tau)}]\right)|x(\tau)\right]\\
    \le& \E\left[ L(x(\tau)) -   \eta \left \langle \dl{x(\tau)}, \nabla L_k\Bigl(x(\tau) + \rho \ndli{x(\tau)}\Bigr)  \right \rangle \right ] \\
    &+ \E\left[\frac{\lspectraltwo  \eta^2}{2} \| \nabla L_k[x(\tau) + \rho \ndli{x(\tau)}] \|_2^2 \right] \\
    \le& L(x(\tau)) -  \eta \| \dl{x(\tau)}\|_2^2 +   \eta\rho\lspectraltwo \| \dl{x(\tau)}\|_2 + \lspectraltwo \eta^2 \E[\|\nabla L_k(x(\tau)) \|_2^2] + \lspectraltwo^3 \eta^2\rho^2  \\
    \le& L(x(\tau)) - \frac{\eta}{2} \|\dl{x(\tau)} \|_2^2 \\
    \le& L(x(\tau)) - \frac{\eta\mu}{2} L(x(\tau))\,.
\end{align*}
\end{proof}

\begin{lemma}
\label{lem:1sam_maintain}
Under condition of~\Cref{thm:1samphase1},
assuming $x(t_0) \in K^{h/4}$ and $L(x(t_0)) \le \frac{\mu h^2}{32}$, then with probability $1 - O(\rho)$, for any $t$ satisfying $t_0 \le t \le t_0 + O(\ln(1/\eta\rho)/\eta)$, it holds that $x(t) \in K^h$.
Moreover, we have that
\begin{align*}
    \| \Phi(x(t)) - \Phi(x(t_0)) \| = O((\eta + \rho) \ln(1/\eta\rho)).
\end{align*}
\end{lemma}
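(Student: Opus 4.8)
\textbf{Proof proposal for \Cref{lem:1sam_maintain}.}

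The plan is to combine the one-step loss-decrease estimate (\Cref{lem:1samdecrease_onestep}) with the deterministic domain-maintenance lemma (\Cref{lem:1sam_maintain_help}) via a supermartingale / union-bound argument. First I would set up the stopping time: let $\tau^*$ be the first step $t \ge t_0$ at which either $x(t) \notin K^h$ or $L(x(t)) > \frac{\mu h^2}{16}$. Define the stopped process $M_t = \ln L(x(t \wedge \tau^*)) + \frac{\eta\mu}{2}(t \wedge \tau^*)$. By \Cref{lem:1samdecrease_onestep}, as long as $x(t) \in K^h$ (which holds for $t < \tau^*$), we have $\E[\ln L(x(t+1)) \mid x(t)] \le \ln L(x(t)) - \frac{\eta\mu}{2}$, so $M_t$ is a supermartingale. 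Since $L(x(t_0)) \le \frac{\mu h^2}{32}$, we have $M_{t_0} = \ln L(x(t_0)) \le \ln(\mu h^2/32)$.

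The key step is to show that with probability $1 - O(\rho)$, the process $L(x(t))$ never exceeds $\frac{\mu h^2}{16}$ over the window of length $O(\ln(1/\eta\rho)/\eta)$, which by \Cref{lem:1sam_maintain_help} will in turn force $x(t) \in K^h$ for the entire window (the hypothesis of that lemma is precisely $\max_\tau L(x(\tau)) \le \frac{\mu h^2}{16}$). To control $\sup_t \ln L(x(t))$ I would use a maximal inequality for the supermartingale $N_t \triangleq \ln L(x(t \wedge \tau^*)) + \frac{\eta\mu}{2}(t\wedge\tau^*) - M_{t_0}$, or more directly apply a Ville-type / Doob-type bound: $\prob[\sup_{t_0 \le t \le t_0 + T} \ln L(x(t)) \ge \ln(\mu h^2/16)$ and $x(\cdot)$ stays in $K^h] $ is small because the drift is negative while the increments are bounded. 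Concretely, because each step moves $x$ by $O(\eta)$ inside $K^h$ and $L$ is smooth, $|\ln L(x(t+1)) - \ln L(x(t))|$ is bounded by $O(1)$ when $L(x(t)) = \Omega(\eta^2)$ (using the $\mu$-PL lower bound $\|\nabla L\|_2^2 \ge 2\mu L$ and the Lipschitzness of $\nabla L$), and when $L(x(t))$ is smaller the one-step increase is at most $O(\eta\rho \sqrt{L} + \eta^2\rho^2)$ from the Taylor expansion in \Cref{lem:1samdecrease_onestep}, so $L$ can never jump above a constant fraction of $\mu h^2/16$ in one step from below $\mu h^2/32$. Hence an excursion above $\frac{\mu h^2}{16}$ requires the supermartingale $\ln L(x(t)) + \frac{\eta\mu}{2} t$ to rise by $\Omega(1)$; Azuma/Freedman applied to the bounded-increment supermartingale over $T = O(\ln(1/\eta\rho)/\eta)$ steps gives failure probability $\exp(-\Omega(1/(\eta T \cdot \text{increment}^2))) $; tuning constants (and handling the regime $L = O(\eta^2\rho^2)$ separately, where we only need that $x$ stays near the manifold, which it does since $\|x - \Phi(x)\| = O(\rho)$ there) yields the $O(\rho)$ bound.

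Once the event $\{x(t) \in K^h$ for all $t_0 \le t \le t_0 + O(\ln(1/\eta\rho)/\eta)\}$ holds, the displacement bound $\|\Phi(x(t)) - \Phi(x(t_0))\| = O((\eta+\rho)\ln(1/\eta\rho))$ follows immediately from \Cref{lem:1sam_maintain_help} (its second conclusion), so that part needs no extra work. The main obstacle is the probabilistic step: getting the $O(\rho)$ rather than $O(\sqrt\eta)$ failure probability requires care in splitting into the "large loss" regime (where $\mu$-PL gives a genuine geometric-decay supermartingale with well-controlled multiplicative increments, and a union bound over $O(\ln(1/\eta\rho)/\eta)$ steps each contributing exponentially small probability suffices) and the "small loss" regime $L = O(\eta^2\rho^2)$ (where one argues directly that the iterate cannot escape $K^h$ because $\|x - \Phi(x)\|$ is forced to stay $O(\rho)$ by the invariant-set–type reasoning, deferred to the analogue of \Cref{lem:nsam_decrease_2}). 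I expect the bookkeeping of which concentration inequality gives exactly the claimed $O(\rho)$ — as opposed to a weaker polynomial-in-$\eta,\rho$ bound — to be the delicate point, and I would likely invoke a Freedman-type bound using the predictable quadratic variation, which is itself $O(\eta)$ per step, to close the gap.
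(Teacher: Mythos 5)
Your proposal follows essentially the same route as the paper: decompose the event into excursions of $L$ above the constant level $\mu h^2/32$, observe that $\ln L(x(t)) + \frac{\eta\mu}{2}t$ is a supermartingale with increments $O(\eta)$ on such excursions (because $\ln$ is Lipschitz away from $0$ and one SAM step changes $L$ by $O(\eta)$), apply Azuma over the $O(\ln(1/\eta\rho)/\eta)$-step window to rule out a rise of $\Omega(1)$, and then feed the resulting uniform loss bound into \Cref{lem:1sam_maintain_help} to get both $x(t)\in K^h$ and the displacement estimate; the paper implements the same idea with an explicit coupled process and a double union bound over excursion start/end times rather than your stopping time plus maximal inequality. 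One quantitative correction: the per-step predictable quadratic variation during an excursion is $O(\eta^2)$, not $O(\eta)$ — with $O(\eta)$ per step the total variation over the window would be $O(\ln(1/\eta\rho))$ and Freedman would only give an $\exp(-\Omega(1/\ln(1/\eta\rho)))$ failure probability, which is not $o(1)$; the $O(\eta^2)$ bound (equivalently, increments of size $O(\eta)$, which you correctly identify elsewhere as "well-controlled multiplicative increments") is what yields the $\exp(-\Omega(1/(\eta\ln(1/\eta\rho))))$ tail that comfortably beats the union bound and gives $O(\rho)$.
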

\begin{proof}[Proof of~\Cref{lem:1sam_maintain}]

By Uniform Bound and~\Cref{lem:1sam_maintain_help},
\begin{align*}
    &\prob(\exists t_0 \le t \le t_0 + O(\ln(1/\eta\rho)/\eta),  L(x(t)) \ge \frac{\mu h^2}{16} ) \\
    \le &\sum_{t = t_0}^{t_0 + O(\ln(1/\eta\rho)/\eta)} \prob(L(x(t)) \ge \frac{\mu h^2}{16} \quad \mathrm{ and } \quad L(x(\tau)) \le \frac{\mu h^2}{16}, \forall t_0 \le \tau \le t - 1) \\
    \le& \sum_{t = t_0}^{t_0 + O(\ln(1/\eta\rho)/\eta)} \prob(L(x(t)) \ge \frac{\mu h^2}{16} \quad \mathrm{ and } \quad x(\tau) \in K^h, \forall t_0 \le \tau \le t - 1)
\end{align*}

Consider each term, and applying uniform bound again,
\begin{align*}
    P(&L(x(t)) \ge \frac{\mu h^2}{16} \quad \mathrm{ and } \quad x(\tau) \in K^h, \forall t_0 \le \tau \le t - 1) 
    \\ \le \sum_{\tau = t_0}^t P(&L(x(t)) \ge \frac{\mu h^2}{16} \quad \mathrm{ and } \quad  L(x(\tau)) \le \frac{\mu h^2}{32} \\
    & \quad \mathrm{ and } \quad  \forall t - 1 \ge \tau' \ge \tau + 1, \frac{\mu h^2}{16} > L(x(\tau')) > \frac{\mu h^2}{32} \\
    &\quad \mathrm{ and } \quad \forall t - 1 \ge \tau'' \ge \tau, x(\tau'') \in K^h )\,.
\end{align*}

Then if we consider each term, we have that it is bounded by
\begin{align*}
    P(&L(x(t)) \ge \frac{\mu h^2}{16}  \quad \mathrm{ and } \quad  \forall t -1  \ge \tau' \ge \tau + 1, L(x(\tau')) > \frac{\mu h^2}{32}\\& \quad \mathrm{ and } \quad \forall t - 1 \ge \tau'' \ge \tau, x(\tau'') \in K^h 
     \mid L(x(\tau)) \le \frac{\mu h^2}{32} )\,.
\end{align*}

Define a coupled process $\tilde L(\tau + 1) = \ln L(x(\tau + 1))$ and 
\begin{align*}
    \tilde L(\tau') = \begin{cases}
     \ln L(x(\tau')), &\text{ if }\tilde L(\tau' - 1) = \ln L(x(\tau' - 1)) \ge \ln (\frac{\mu h^2}{32}), \\
    \tilde L(\tau' - 1) - \eta \mu/2, &\text{ if otherwise.}
    \end{cases}
\end{align*}

Then clearly 
\begin{align*}
    P(&L(x(t)) \ge \frac{\mu h^2}{16}  \quad \mathrm{ and } \quad  \forall t \ge \tau' \ge \tau + 1, L(x(\tau')) > \frac{\mu h^2}{32}\\& \quad \mathrm{ and } \quad \forall t \ge \tau'' \ge \tau, x(\tau'') \in K^h 
     \mid L(x(\tau)) \le \frac{\mu h^2}{32} ) \\
     \le P(&\tilde L(t) \ge \ln (\frac{\mu h^2}{16})) \,.
\end{align*}

Consider a fixed $\tau'$ satisfying $\tau + 1 \le \tau' \le t$. By~\Cref{lem:1samdecrease_onestep}, we have that
\begin{align*}
    \tilde L(x(\tau' + 1)) - \tilde L(x(\tau')) \le -\eta\mu/2.
\end{align*}
Hence $\tilde L(t) + \eta \mu t /2 $ is a super martingale.

Further it holds that if $L(x(\tau' - 1)) \ge (\frac{\mu h^2}{32})$, then 
\begin{align*}
     L(x(\tau' - 1)) - L(x(\tau')) &= O(\|x(\tau' - 1) - x(\tau') \|) = O(\eta) \,.
\end{align*}
Using the smoothness at $\log(x)$ at $\frac{\mu h^2}{32}$ which is a positive constant,
\begin{align*}
    \| \tilde L(\tau' + 1) - \tilde L(\tau')  \| \le O(\eta) \le C\eta\,.
\end{align*}
Here $C$ is a constant independent of $\eta$. This implies $\tilde L(x(\tau + 1)) \le \frac{\mu h^2}{16\sqrt{2}}$

Now by Azuma-Hoeffding bound~(\Cref{lem:azuma}), we have that 
\begin{align*}
    P(\tilde L(t) - \tilde L(\tau + 1) + (t - \tau - 1)\eta\mu/2 > a) \le 2\exp(-\frac{a^2}{8(t - \tau - 1)(C + \mu)^2\eta^2}).
\end{align*}

With $a = \ln (\frac{\mu h^2}{16\tilde L(\tau + 1)}) + (t - \tau - 1)\eta \mu/2 \ge (\ln 2  + (t - \tau - 1)\eta \mu)/2$, we have that 
\begin{align*}
    P(\tilde L(t)  > \ln (\frac{\mu h^2}{16})) &\le 2\exp(-\frac{(\ln 2  + (t - \tau - 1)\eta \mu)^2}{32(C + \mu)^2\eta^2}) \\
    &\le 2\exp(-\frac{\ln 2(t - \tau - 1) \mu}{8(C + \mu)^2\eta})
\end{align*}

Hence we have 
\begin{align*}
    &\prob(\exists t_0 \le t \le t_0 + O(\ln(1/\eta\rho)/\eta),  L(x(t)) \ge \frac{\mu h^2}{16} ) \\\le& O(2\exp(-\frac{\ln 2(t - \tau - 1) \mu}{8(C + \mu)^2\eta})\ln^2(1/\eta \rho)/ \eta^2) \le \rho.
\end{align*}

Hence with probability $1 - \rho$, $L(x(t)) \le \frac{\mu h^2}{16}, \forall t_0 \le t \le t_0 + O(\ln(1/\eta\rho)/\eta)$, combining with~\Cref{lem:1sam_maintain_help}, we have completed our proof.
\end{proof}

\begin{lemma}
\label{lem:1sam_decrease}
Under condition of~\Cref{thm:1samphase1}, assuming there exists $\tgf$ such that $L(x(\tgf)) \le \frac{\mu h^2}{32}$ and $x(\tgf) \in K^{h/4}$, then with probability $1 - O(\rho)$, there exists $\tdec = \tgf + O(\ln(1/\rho)/\eta)$, such that $x(\tdec)$ is in $O(\rho)$ neighbor of $\Gamma$, quantitatively, we have that
\begin{align*}
    \| \nabla L(x(\tdec)) \|_2 &\le 4\zeta \rho\,. 
\end{align*}
Moreover the movement of the projection of $\Phi(x(\cdot))$ on the manifold is bounded,
\begin{align*}
    \| \Phi(x(\tgf)) - \Phi(x(\tdec)) \|_2 &= O((\eta + \rho) \ln(1/\rho))\,.
\end{align*}
\end{lemma}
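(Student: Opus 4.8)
\textbf{Proof proposal for Lemma~\ref{lem:1sam_decrease}.}

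The plan is to mimic the structure of the full-batch analysis in \Cref{lem:nsam_decrease}, but carrying out a probabilistic argument since the loss decrease is now only in expectation. First I would fix the target step count $C = \lceil \ln_{1-\eta\mu/4}(\mu\rho^2/h^2)\rceil = O(\ln(1/\rho)/\eta)$ and define $\tdec$ to be the first step $t \ge \tgf$ at which $\|\nabla L(x(t))\|_2 \le 4\zeta\rho$. The core claim is that with probability $1-O(\rho)$ we have $\tdec \le \tgf + C$. The idea is the same supermartingale-type argument used in \Cref{lem:1sam_maintain}: as long as $x(t)\in K^h$ and $\|\nabla L(x(t))\|_2 > 4\zeta\rho$, \Cref{lem:1samdecrease_onestep} gives $\E[L(x(t+1))\mid x(t)] \le (1-\eta\mu/2)L(x(t))$, so $\ln L(x(t)) + \eta\mu t/2$ is a supermartingale with bounded increments of order $\eta$ (the increment bound comes from $\|x(t+1)-x(t)\|_2 = O(\eta)$ combined with smoothness of $\log$ away from zero, but one must handle the case where $L$ drops very close to zero — there $\|\nabla L\|_2 \le 4\zeta\rho$ would have already triggered the stopping time, so we only need the bound while $L$ is bounded below). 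Applying the Azuma--Hoeffding bound (\Cref{lem:azuma}) over $O(\ln(1/\rho)/\eta)$ steps shows that, conditioned on staying in $K^h$ throughout, $L(x(\tgf+C))$ drops below $\Theta(\rho^2)$ with probability $1-O(\rho)$, whence $\|\nabla L(x(\tgf+C))\|_2 \le 4\zeta\rho$ by \Cref{lem:revertbounddl}, contradicting $\tdec > \tgf+C$.

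To make the above rigorous I need to control the event that $x(t)$ stays in $K^h$ for all $t$ up to $\tdec \wedge (\tgf+C)$. This is exactly what \Cref{lem:1sam_maintain} provides: starting from $L(x(\tgf))\le \mu h^2/32$ with $x(\tgf)\in K^{h/4}$, with probability $1-O(\rho)$ all iterates for $\tgf \le t \le \tgf + O(\ln(1/\eta\rho)/\eta)$ remain in $K^h$ and moreover $\|\Phi(x(t))-\Phi(x(\tgf))\|_2 = O((\eta+\rho)\ln(1/\eta\rho))$. Since $C = O(\ln(1/\rho)/\eta)$ is within the horizon covered by \Cref{lem:1sam_maintain}, I can intersect the good event of \Cref{lem:1sam_maintain} with the Azuma event above; a union bound keeps the total failure probability $O(\rho)$. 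On this intersected event both conclusions of the lemma hold: the existence of $\tdec \le \tgf + C = \tgf + O(\ln(1/\rho)/\eta)$, and the manifold displacement bound $\|\Phi(x(\tgf))-\Phi(x(\tdec))\|_2 = O((\eta+\rho)\ln(1/\rho))$ directly from the $\Phi$-movement estimate of \Cref{lem:1sam_maintain} applied with $t = \tdec$ (noting $\ln(1/\eta\rho)$ can be absorbed into $\ln(1/\rho)$ up to the stated $O((\eta+\rho)\ln(1/\rho))$ form once we note $\eta t \le C\eta = O(\ln(1/\rho))$).

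The main obstacle I anticipate is the bookkeeping around the stopping time $\tdec$: the per-step drift estimate \Cref{lem:1samdecrease_onestep} only holds while $x(t)\in K^h$ and is only useful while $\|\nabla L(x(t))\|_2$ is large, so I must work with the stopped process $L(x(t\wedge \tdec))$ and verify it is still a supermartingale (up to the deterministic drift) with uniformly bounded increments — in particular checking that the increment of $\ln L$ stays $O(\eta)$ requires a lower bound on $L(x(t))$ before the stopping time, which follows because $\|\nabla L(x(t))\|_2 > 4\zeta\rho$ implies (via \Cref{lem:bounddl} and PL) that $L(x(t)) = \Omega(\rho^2)$, a positive quantity where $\log$ is smooth. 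A secondary but routine obstacle is propagating the $K^{h/4}\to K^h$ containment: I need $\|x(t)-\Phi(x(t))\|_2 \le h/2$ for all relevant $t$, which holds on the good event since $L(x(t)) \le \mu h^2/16$ there by \Cref{lem:bounddl}. Everything else is a direct transcription of the full-batch argument with expectations replacing deterministic decrease.
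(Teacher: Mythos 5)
Your overall skeleton matches the paper's: both arguments rest on the one-step expected decrease (\Cref{lem:1samdecrease_onestep}), the containment and $\Phi$-displacement estimate (\Cref{lem:1sam_maintain}), and the conversion between small loss and small gradient (\Cref{lem:revertbounddl}), with a union bound over the two good events; the displacement bound at $\tdec$ is read off from \Cref{lem:1sam_maintain} in both cases. But the probabilistic core is genuinely different. The paper does not use Azuma--Hoeffding in this lemma at all: it introduces the indicator $\indicatordec{t}$ of the event that the gradient has stayed above $4\zeta\rho$ on $[\tgf,t]$, observes $\E[L(x(t+1))\indicatordec{t+1}]\le(1-\eta\mu/2)\,\E[L(x(t))\indicatordec{t}]$, and after $T_1=O(\ln(1/\rho)/\eta)$ steps combines the resulting bound $\E[L(x(T_2+1))\indicatordec{T_2+1}]\le 8\mu\rho^3$ with the fact that $\indicatordec{T_2+1}=1$ forces $L(x(T_2+1))\ge 8\mu\rho^2$, so Markov's inequality gives failure probability $\le\rho$. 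This is strictly lighter than your supermartingale-plus-concentration route: no stopped process, no increment bounds, no logarithm.

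The one step of your argument that would fail as justified is the claim that $\ln L(x(t))$ has $O(\eta)$ increments before the stopping time. The lower bound available there is only $L(x(t))\ge 8\mu\rho^2$ (from $\|\nabla L(x(t))\|_2>4\zeta\rho$ and \Cref{lem:revertbounddl}), so ``smoothness of $\log$ away from zero'' combined with $|L(x(t+1))-L(x(t))|=O(\eta)$ only yields increments of order $\eta/\rho^2$. Azuma with increments $O(\eta/\rho^2)$ over $O(\ln(1/\rho)/\eta)$ steps produces a fluctuation of order $\sqrt{\eta}\ln(1/\rho)/\rho^2$ in $\ln L$, which swamps the available drift margin $\Theta(\ln(1/\rho))$ unless $\eta\lesssim\rho^4$ --- a relation the theorem does not assume. (This is precisely why \Cref{lem:1sam_maintain} can afford the $\log$ trick: there the relevant lower bound on $L$ is the constant $\mu h^2/32$.) The gap is repairable: by the expansion in the proof of \Cref{lem:1samdecrease_onestep}, the per-step change of $L$ is itself $O(\eta L+\eta\rho\sqrt{L}+\eta^2\rho^2)$, so the \emph{relative} change is $O(\eta)$ once $L=\Omega(\rho^2)$, restoring the $O(\eta)$ increment bound for $\ln L$. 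But the cleaner fix is simply to drop the logarithm and run the paper's indicator-expectation and Markov argument.
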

\begin{proof}[Proof of~\Cref{lem:1sam_decrease}]
For simplicity of writing, define $T_1 \triangleq \lceil \frac{2\ln{\frac{h^2}{256\rho^3\mu}}}{ \eta\mu} \rceil = O(\ln(1/\rho)/\eta)$.

By~\Cref{lem:1sam_maintain}, we may assume $x(t) \in K^h$ for $\tgf  \le t \le  T_1 + \tgf$.

Define indicator function as 
\begin{align*}
    \indicatordec{t} = \one[\dl{x(\tau)} \ge 4\lspectraltwo\rho, \forall t \ge \tau \ge \tgf]\,.
\end{align*}

By~\Cref{lem:1samdecrease_onestep}, we have that,
\begin{align*}
    \E[L(x(t+1)) \indicatordec{t + 1}]\le \E[L(x(t+1)) \indicatordec{t}] \le (1-\frac{  \eta\mu}{2}) \E[L(x(t)) \indicatordec{t}].
\end{align*}

We can then conclude that with $T_2 =T_1 + \tgf$, using \Cref{lem:revertbounddl},
\begin{align*}
    8 \mu \rho^2 \E\indicatordec{T_2 + 1}
    \le \E[L(x(T_2+1)) \indicatordec{T_2 + 1}] 
    \le (1-\frac{  \eta\mu}{2})^{T_1} L(x(\tgf)) 
    \le 8 \mu \rho^3.
\end{align*}

We have 
\begin{align*}
    \E\indicatordec{T_2 + 1} \le  \rho.
\end{align*}
This implies $\indicatordec{T_2 + 1}  = 0$ with probability $1 - O(\rho)$, which indicates the existence of $\tdec$. The second claim is a direct application of~\Cref{lem:1sam_maintain}.
\end{proof}

\begin{lemma}[A general version of \Cref{lem:informal_direction_stocastic_dl}]\label{lem:direction_stocastic_dl}
Under \Cref{assump:smoothness} and \Cref{cond:rank1}, for $x \in K^h$ and $p \in C, \nabla^2L_k(p) = \Lambda_k (p)w_k(p)w_k(p)^\top $, there exists $s \in \{1,-1\}$,
  \begin{align*}
      \ndli{x} &=  s w_k(p) + O(\|x - p \|_2)\,.
  \end{align*}
  
Further if $|w_k^\top (x - p)| \ge \|x - p \|_2^{3/2}$, then $s = \sign(w_k^\top (x - p))$. This implies
\begin{align*}
    \ndli{x}^\top  (x - p) &\ge s w_k^\top (x - p) - O(\|x -p \|_2^2) \\
    &\ge \|w_k^\top  (x- p) \|_2 - O(\|x - p \|_2^{3/2})\,.
\end{align*}
\end{lemma}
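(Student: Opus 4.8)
\textbf{Proof plan for \Cref{lem:direction_stocastic_dl}.}
The plan is to exploit the rank-one structure of $\nabla^2 L_k$ on the manifold $\Gamma_k$ together with the fact that for $x$ near $p=\Phi_k(x)$ the gradient $\nabla L_k(x)$ is well-approximated by its linearization $\nabla^2 L_k(p)(x-p)$. First I would invoke \Cref{cond:rank1} to write $\nabla^2 L_k(p) = \Lambda_k(p) w_k(p) w_k(p)^\top$ with $\Lambda_k(p) \ge \mu > 0$ (by item 6 of \Cref{lem:mu-pl}, after shrinking to $K^{r}$), and then Taylor-expand $\nabla L_k$ around $p$: since $\nabla L_k(p)=0$, we get $\nabla L_k(x) = \Lambda_k(p) \, (w_k(p)^\top(x-p))\, w_k(p) + O(\|x-p\|_2^2)$, where the error bound uses the locally Lipschitz third derivative from \Cref{cond:rank1} and the fact that $\overline{xp}$ lies in $K^r$ by item 4 of \Cref{lem:smallerzone}. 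This immediately shows that $\nabla L_k(x)$ is, up to an $O(\|x-p\|_2^2)$ perturbation, a scalar multiple of the fixed unit vector $w_k(p)$.

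Next I would normalize. Writing $a \triangleq \Lambda_k(p)\, w_k(p)^\top(x-p)$, we have $\nabla L_k(x) = a\, w_k(p) + e$ with $\|e\|_2 = O(\|x-p\|_2^2)$, so $\|\nabla L_k(x)\|_2 = |a| + O(\|x-p\|_2^2)$ provided $|a|$ dominates the error (and if $|a|$ is itself $O(\|x-p\|_2^2)$, the conclusion $\ndli{x} = s\, w_k(p) + O(\|x-p\|_2)$ holds trivially for either sign $s$ because both sides then differ by an $O(1)$ quantity only when $\|x-p\|_2$ is bounded — here I need to be a little careful, but the statement is only asserting an $O(\|x-p\|_2)$-level approximation, so when $\|\nabla L_k(x)\|_2$ is comparable to or smaller than $\|x-p\|_2^2 \ll \|x-p\|_2$ there is slack; alternatively one observes $\|x-p\|_2 = \Theta(\|\nabla L_k(x)\|_2)$ is false in general, so the cleaner route is to note $\ndli{x} = \frac{a\,w_k(p)+e}{\|a\,w_k(p)+e\|_2}$ and bound $\big\|\ndli{x} - \sign(a) w_k(p)\big\|_2 \le \frac{2\|e\|_2}{|a|}$ when $|a| > 2\|e\|_2$). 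Setting $s = \sign(a) = \sign(w_k(p)^\top(x-p))$ gives $\ndli{x} = s\,w_k(p) + O(\|x-p\|_2)$ in the regime $|w_k(p)^\top(x-p)| \ge \|x-p\|_2^{3/2}$, since then $|a| \ge \mu \|x-p\|_2^{3/2} \gg \|e\|_2 = O(\|x-p\|_2^2)$ and $\frac{\|e\|_2}{|a|} = O(\|x-p\|_2^{1/2}) = O(\|x-p\|_2)$ after re-examining — actually $O(\|x-p\|_2^{1/2})$, so I would either tighten the hypothesis exponent or accept the weaker $O(\|x-p\|_2^{1/2})$ bound; checking the downstream usage (in \Cref{lem:stochastic_alignment_key} type arguments) will tell me which form is actually needed, and I expect the intended reading is that the $O(\cdot)$ in the conclusion absorbs this. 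For the final displayed inequality, I would simply compute $\ndli{x}^\top(x-p) = s\,w_k(p)^\top(x-p) + O(\|x-p\|_2)\cdot\|x-p\|_2 = |w_k(p)^\top(x-p)| - O(\|x-p\|_2^{3/2})$ using $s = \sign(w_k(p)^\top(x-p))$.

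Finally, for the very first assertion $\nabla f_k(p) \neq 0$ (needed to make sense of $w_k$ and rank-one-ness), I would note that under \Cref{setting:1sam} we have $\nabla^2 L_k(p) = \frac{\partial^2 \ell}{(\partial y')^2}\big|_{y'=y_k} \nabla f_k(p)\nabla f_k(p)^\top$ with the scalar strictly positive, and $\mathrm{rank}(\nabla^2 L_k(p)) = 1$ forces $\nabla f_k(p)\neq 0$; then $w_k(p) = \pm \nabla f_k(p)/\|\nabla f_k(p)\|_2$. The open set $V$ in \Cref{lem:informal_direction_stocastic_dl} is obtained as (a component of) $K^h$ for $K$ a small compact neighborhood of $p$ in $\Gamma$, shrunk so that \Cref{lem:smallerzone} and \Cref{lem:mu-pl} apply. \textbf{The main obstacle} is bookkeeping the regime where $\nabla L_k(x)$ is small relative to $\|x-p\|_2$ — i.e. ensuring the normalization step does not blow up when $|w_k(p)^\top(x-p)|$ is tiny — and making the error exponents in the two conclusions ($O(\|x-p\|_2)$ versus $O(\|x-p\|_2^{3/2})$) consistent with the stated hypothesis $|w_k^\top(x-p)| \ge \|x-p\|_2^{3/2}$; everything else is routine Taylor expansion controlled by the uniform derivative bounds from \Cref{lem:mu-pl} and \Cref{lem:smallerzone}.
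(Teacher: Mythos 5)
There is a genuine gap: your single Taylor expansion of $\nabla L_k$ around $p$ cannot establish the first conclusion, which asserts $\ndli{x} = s\,w_k(p) + O(\|x-p\|_2)$ for \emph{some} sign $s$ with \emph{no} lower bound on $|w_k(p)^\top(x-p)|$. In your decomposition $\nabla L_k(x) = a\,w_k(p) + e$ with $a = \Lambda_k(p)\,w_k(p)^\top(x-p)$ and $\|e\|_2 = O(\|x-p\|_2^2)$, when $|a| \lesssim \|e\|_2$ (e.g.\ when $x-p$ is nearly tangent to $\Gamma_k$ at $p$) the normalized gradient is dominated by $e$ and can point in an arbitrary direction; your fallback remark that the conclusion then "holds trivially for either sign" is not right, since the claim is an $o(1)$ approximation as $\|x-p\|_2\to 0$, not an $O(1)$ one. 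This regime cannot be discarded: the first conclusion is invoked downstream (e.g.\ in the proof of \Cref{lem:direction_1_sam}) for arbitrary iterates near the manifold, with no control on $w_k^\top(x-\Phi(x))$.

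The paper closes this gap with a second, independent approximation that your plan is missing. It expands around $\Phi_k(x)$, the gradient-flow projection onto $\Gamma_k$, rather than around $p$: by \Cref{lem:directiondl} applied to $L_k$, the normalized gradient equals the normalization of $\nabla^2 L_k(\Phi_k(x))(x-\Phi_k(x))$ up to an additive $O(\|x-\Phi_k(x)\|_2)$, and since $\nabla^2 L_k(\Phi_k(x))$ is exactly rank one, that normalized leading term is exactly $\pm w_k(\Phi_k(x))$ — no small-denominator issue arises, because \Cref{lem:directiondl} already encodes the PL-type comparability $\|\nabla L_k(x)\|_2 = \Theta(\|x-\Phi_k(x)\|_2)$ (which, as you yourself note, fails with $p$ in place of $\Phi_k(x)$). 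Davis–Kahan (\Cref{thm:Davis-Kahan}) together with \Cref{lem:bounddl} then converts $w_k(\Phi_k(x))$ into $s\,w_k(p) + O(\|x-p\|_2)$. Your Taylor expansion at $p$ is used only for the second conclusion, to identify $s = \sign(w_k(p)^\top(x-p))$ under the hypothesis $|w_k^\top(x-p)| \ge \|x-p\|_2^{3/2}$; there the $O(\|x-p\|_2^{1/2})$ rate you correctly computed is harmless, because the two approximations are compared only to match signs, and the sharper $O(\|x-p\|_2)$ rate from the $\Phi_k$-based approximation is what survives into the statement. So the resolution is neither to tighten the hypothesis nor to weaken the conclusion — it is to add the $\Phi_k$-based approximation.
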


\begin{proof}[Proof of~\Cref{lem:direction_stocastic_dl}]
We will calculate the direction of $\ndli{x}$ using two different approximations and compare them to get our result.

\begin{itemize}
    \item[1.] According to \Cref{lem:directiondl},
    \begin{align*}
        \ndli{x} &= \frac{\nabla^2{L_k(\Phi_k(x))}(x - \Phi_k(x))}{\|\nabla^2{L_k(\Phi_k(x))}(x- \Phi_k(x)) \|_2} + O(\|x - \Phi_k(x)\|_2).
    \end{align*}
    
    Suppose $\nabla^2L_k({\Phi_k(x)}) = \Lambda_k(\Phi_k(x))w_k(\Phi_k(x)) w_k(\Phi_k(x))^\top$, then
    \begin{align*}
        \ndli{x} &= w_k(\Phi_k(x)) + O(\| x - \Phi_k(x)\|_2) 
    \end{align*}
    
    As $\nabla^2L_k(p) = \Lambda_k (p)w_k(p)w_k(p)^\top $, using Davis-Kahan Theorem~\ref{thm:Davis-Kahan}, we would have $\exists s \in \{-1,1\}$, such that $\|w_k(\Phi_k(x)) - sw_k(p)\|_2 \le \lspectraltwo \|\Phi_k(x) - p \|_2$.
    \begin{align*}
        \ndli{x} &= sw_k(p)  + O(\|\Phi_k(x) - p \|_2 + \|x - p \|_2).
    \end{align*}
    
    According to \Cref{lem:bounddl}, we have $\|x - \Phi_k(x)\|_2 \le \frac{\|\nabla L_k(x) \|_2}{\mu} \le \frac{\lspectraltwo \|x -p\|_2}{\mu} $.
    This implies,
    \begin{align}
    \label{stochasticway1}
        \ndli{x} = sw_k(p)  + O(\|x - p \|_2).
    \end{align}
    
    \Cref{stochasticway1} is our first statement.
    
    \item [2.] 
    By Taylor expansion at $p$,
    \begin{align*}
        \dli{x} = \Lambda_k(x)w_k(p)w_k(p)^\top(x - p) + O(\lspectralthree\|x - p\|_2^2).
    \end{align*}
    
    That being said, when $|w_k^\top (x - p)| \ge \|x - p\|_2^{3/2}$, we have 
    \begin{align*}
        &\|\dli{x}  - \Lambda_k w_kw_k^\top (x - p) \|_2  \le O(\|x - p\|_2^2)\,. \\
        &\| \dli{x} \| \ge \|\Lambda_k w_kw_k^\top (x - p) \|_2 - O(\|x - p\|_2^2)\ge \Omega(\|x - p\|_2^{3/2}).
    \end{align*}
    Concluding,
    \begin{align*}
    \| \ndli{x} - \frac{\Lambda_k w_kw_k^\top (x - p)}{\|\Lambda_k w_kw_k^\top (x - p)\|}\|_2 \le O(\|x - p \|_{2}^{1/2}) 
    \end{align*}
    
    Hence we have 
    \begin{align}
    \label{stochasticway2}
        \ndli{x} = \sign(w_k^\top (x - p))w_k + O(\|x - p\|_2^{1/2}) \,.
    \end{align}
    
    Comparing~\eqref{stochasticway1} and~\eqref{stochasticway2}, we have $s = \sign(w_k(p)^\top (x - p))$ when $|w_k^\top (x - p)| \ge \|x - p\|_2^{3/2}$. 
\end{itemize}
\end{proof}

\begin{lemma}
\label{lem:1sam_decrease_2_one_step}
Under condition of~\Cref{thm:1samphase1}, for any constant $C > 0$ independent of $\eta,\rho$, there exists constant $C_1 > C_2 > 0$ independent of $\eta,\rho$, if $x(t) \in K^h$ and $C_1 \eta \rho \le \|x(t) - \Phi(x(t))\| \le C\rho$, then we have that \begin{align*}
    \E_k[\|x(t + 1) - \Phi(x(t+1)) \|_2 \mid x(t)] \le \| x(t) - \Phi(x(t))\|_2 - C_2\eta\rho \,.
\end{align*}
\end{lemma}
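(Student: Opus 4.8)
The plan is to Taylor-expand one step of $1$-SAM around $p\triangleq\Phi(x(t))$, use that $\nabla^2 L_k(p)=\Lambda_k(p)w_k(p)w_k(p)^\top$ is rank one by \Cref{cond:rank1}, bound the displacement for each sampled index $k$, and then average. Write $\hat x\triangleq x(t)-p$, $d\triangleq\|\hat x\|_2\in[C_1\eta\rho,C\rho]$, $w_k\triangleq w_k(p)$, $\Lambda_k\triangleq\Lambda_k(p)\in[\mu,\lspectraltwo]$, and $a_k\triangleq w_k^\top\hat x$. Since $x(t)\in K^h$ and the step moves $x(t)$ by $O(\eta\rho)$ (\Cref{lem:stochasticboundedphi}), for small $\eta\rho$ we have $x(t+1)\in K^r$, and $\|\Phi(x(t+1))-p\|_2=O(\eta\rho^2)$ by \Cref{lem:stochasticboundedphi} together with $\|\nabla L(x(t))\|_2=O(d)=O(\rho)$ (\Cref{lem:revertbounddl}); hence it suffices to show $\E_k[\|x(t+1)-p\|_2\mid x(t)]\le d-2C_2\eta\rho$. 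Expanding $\nabla L_k$ around $x(t)$ and then $\nabla L_k,\nabla^2 L_k$ around $p$, and using $d\le C\rho$, one gets
\begin{align*}
x(t+1)-p=\hat x-\eta\Lambda_k\bigl(a_k+\rho\,w_k^\top\ndli{x(t)}\bigr)w_k+\eta\cdot O(\rho^2).
\end{align*}

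The argument splits on the size of $|a_k|$. By \Cref{lem:direction_stocastic_dl}, whenever $|a_k|\ge d^{3/2}$ we have $w_k^\top\ndli{x(t)}=\sign(a_k)+O(d)$; call these indices \emph{good}. For good $k$, squaring the update in the $w_k$ direction and using $\eta\lspectraltwo\le\tfrac12$ and $d\le C\rho$ gives
\begin{align*}
\bigl\|\hat x-\eta\Lambda_k(a_k+\rho\,w_k^\top\ndli{x(t)})w_k\bigr\|_2^2\le d^2-\eta\mu a_k^2-\eta\mu\rho|a_k|+O(\eta^2\rho^2),
\end{align*}
so, taking square roots, using $\sqrt{1-u}\le1-u/2$, and $\eta^2\rho^2/d\le\eta\rho/C_1$,
\begin{align*}
\|x(t+1)-p\|_2\le d-\frac{\eta\mu\rho|a_k|}{2d}+\frac{O(\eta\rho)}{C_1}+o(\eta\rho).
\end{align*}
For the \emph{bad} indices ($|a_k|<d^{3/2}$; this also covers any $k$ with $\nabla L_k(x(t))=0$, for which we use the arbitrary-unit-vector convention, since the expansion then still perturbs $\hat x$ along $w_k$ by only $O(\eta\rho)$), the update perturbs $\hat x$ by $O(\eta d^{3/2}+\eta\rho)$ along $w_k$ and by $O(\eta\rho^2)$ off $w_k$, whence $\|x(t+1)-p\|_2^2\le d^2+O(\eta d^3+\eta\rho d^{3/2}+\eta^2\rho^2)$, and therefore, again with $d\le C\rho$ and $d\ge C_1\eta\rho$, $\|x(t+1)-p\|_2\le d+\tfrac{O(\eta\rho)}{C_1}+o(\eta\rho)$.

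The averaging step uses that $\{w_k\}_{k=1}^M$ spans the $M$-dimensional normal space $N_p$ of $\Gamma$ at $p$ (because $w_k\propto\nabla f_k(p)$ and the $\nabla f_k(p)$ are linearly independent and normal to the level set $\Gamma$); hence $G\triangleq\sum_k w_kw_k^\top$ is positive definite on $N_p$ uniformly over $p$ in the relevant compact neighborhood of $\Gamma$, and since $\|\projn[p]\hat x\|_2\ge d-O(d^2)\ge d/2$ by \Cref{lem:directiondl}, we get $\sum_k a_k^2=\hat x^\top G\hat x\ge c_Gd^2$ for a constant $c_G>0$. In particular $\max_k|a_k|\ge d\sqrt{c_G/M}>d^{3/2}$ for $\rho$ small, so the maximizing index is good. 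Averaging the two displayed bounds over $k$ (dropping the negative contributions of all but the maximizing good index) yields
\begin{align*}
\E_k[\|x(t+1)-p\|_2\mid x(t)]\le d-\frac{\eta\mu\rho\max_k|a_k|}{2dM}+\frac{O(\eta\rho)}{C_1}+o(\eta\rho)\le d-\frac{\eta\mu\rho\sqrt{c_G/M}}{2M}+\frac{O(\eta\rho)}{C_1}+o(\eta\rho).
\end{align*}
Choosing $C_1$ large enough and $\rho$ small enough that the last two terms are at most half the main term, and setting $C_2\triangleq\tfrac{\mu\sqrt{c_G/M}}{8M}$, gives the claim after adding back the $O(\eta\rho^2)$ from $\|\Phi(x(t+1))-p\|_2$. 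The main difficulty is the bookkeeping that every stray error term is either $o(\eta\rho)$ (thanks to $d\le C\rho$ with $\rho\to0$) or of the form $O(\eta\rho)/C_1$ (absorbable by taking $C_1$ large); the conceptual heart is that it is the \emph{first-order-in-$\rho$} ascent contribution $\rho\Lambda_k\sign(a_k)w_k$ that produces a decrease of order $\eta\rho$ which is uniform across the whole range $d\in[C_1\eta\rho,C\rho]$, and that the spanning of $N_p$ by $\{w_k\}$ is exactly what makes this decrease survive the average over $k$ (for any individual $k$ the decrease could be as small as $O(\eta\rho d)$ or even a small increase).
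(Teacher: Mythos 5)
Your proof is correct and follows essentially the same route as the paper's: a one-step Taylor expansion around $\Phi(x(t))$ exploiting the rank-one Hessians, the sign-alignment control of \Cref{lem:direction_stocastic_dl} with the $d^{3/2}$ threshold, and the observation that the order-$\eta\rho$ ascent term $\rho\Lambda_k\sign(a_k)w_k$ survives averaging because the $w_k$ span the normal space. The only differences are cosmetic bookkeeping: the paper bounds the expected \emph{squared} norm and finishes with Jensen, and lower-bounds $\sum_k\Lambda_k|a_k|$ via $\|\nabla^2 L(\Phi(x(t)))\hat x\|_2\ge\mu\|\projn[\Phi(x(t))]\hat x\|_2$ rather than via positive-definiteness of the Gram matrix $\sum_k w_kw_k^\top$ on the normal space, whereas you take square roots per index and retain only the maximizing (good) index.
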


\begin{proof}[Proof of~\Cref{lem:1sam_decrease_2_one_step}]
By~\Cref{lem:revertbounddl}, $\|x(t) - \Phi(x(t))\| = O(\rho)$. Hence we have that by Taylor Expansion,
\begin{align*}
    x(t+1) &= x(t) -   \eta \dli{x(t) + \rho\ndli{x(t)}}\\
    &= x(t) -   \eta \dli{x(t)} -   \eta \rho \nabla^2L_k(x(t)) \ndli{x(t)} + O( \eta\rho^2)\\
    &= x(t) -  \eta \dli{x(t)} -  \eta \rho\Lambda_k w_kw_k^\top  \ndli{x(t)} + O( \eta\rho^2)\,.
\end{align*}
Here $\Lambda_k, w_k$ indicates $\Lambda_k(\Phi(x(t))), w_k(\Phi(x(t)))$. 

Notice that given $\|x(t) - \Phi(x(t))\| = O(\rho)$, by~\Cref{lem:stochasticboundedphi}, we have that 
\begin{align*}
    \| \Phi(x(t + 1)) - \Phi(x(t)) \|_2 &= O(\eta\rho^2), \\
    \| x(t + 1) - x(t) \|_2 &= O(\eta\rho).
\end{align*}
This implies $x(t + 1) \in K^r$.

Further by Taylor Expansion, $\nabla L_k(x(t)) = \Lambda_k w_kw_k^\top (x(t) - \Phi(x(t))) + O(\rho^2)$.

By \Cref{lem:direction_stocastic_dl}, we have for some $s_k(t) \in \{ -1, 1 \}$.
\begin{align*}
    w_k^\top \ndli{x(t)}  &= s_k(t) w_k + O(\|x(t) - \Phi(x(t)) \|_2)\,.
\end{align*}
We also have 
\begin{align}
\label{eq:control_sign}
    s_k(t) \neq \sign(w_k^\top (x(t) - \Phi(x(t)))) \Rightarrow \| w_k^\top (x(t) - \Phi(x(t))) \|_2 \le  \|x(t) - \Phi(x(t)) \|_2^{3/2}.
\end{align}
Concluding,
\begin{align*}
    &x(t + 1) - \Phi(x(t + 1))\\
    =&  (x(t) - \Phi(x(t))) -  \eta \Lambda_k w_kw_k^\top (x(t) - \Phi(x(t))) -  \eta \rho \Lambda_k s_k(t) w_kw_k^\top  w_k + O( \eta \rho^2).
\end{align*}

After we take square and expectation,
\begin{align*}
    &\E[\| x(t + 1) - \Phi(x(t + 1)) \|_2^2 \mid x(t)]\\
    \le& \|x(t) - \Phi(x(t))\|_2^2 +   \frac{2\eta^2}{M}   \sum_{k=1}^M  \Lambda_k^2 |w_k^\top (x(t) - \Phi(x(t))) |^2 + \frac{ 2 \eta^2 \rho^2 }{M}  \sum_{k=1}^M  \Lambda_k^2
    \\&
    -  2 \frac{\eta}{M}   \sum_{k=1}^M  \Lambda_k |w_k^\top (x(t) - \Phi(x(t))) |^2
    -   2\frac{\eta\rho}{M}  \sum_{k=1}^M  \Lambda_k s_k(t) w_k^\top (x(t) - \Phi(x(t)))\\
    &+ O(\eta\rho^2 \|x(t) - \Phi(x(t)) \| + \eta^2 \rho^3)\,.
\end{align*}

We will then carefully examine each positive term,
\begin{align*}
     \frac{2\eta^2}{M}   \sum_{k=1}^M  \Lambda_k^2 |w_k^\top (x(t) - \Phi(x(t))) |^2 &= 2M\eta^2(x(t) - \Phi(x(t)))^\top \nabla^2 L(x(t))^2(x(t) - \Phi(x(t))) \\
     &\le 2M\lspectraltwo \eta^2 \|x(t) - \Phi(x(t)) \|^2 = O(\eta^2 \rho^2)\,.\\
     \frac{ 2 \eta^2 \rho^2 }{M}  \sum_{k=1}^M  \Lambda_k^2 &\le2 \lspectraltwo^2 \eta^2 \rho^2 = O(\eta^2\rho^2)\,.
\end{align*}
This implies,
\begin{align*}
    &\E[\| x(t + 1) - \Phi(x(t + 1)) \|_2^2 \mid x(t)]\\
    \le& \|x(t) - \Phi(x(t))\|_2^2 
    -   2\frac{\eta\rho}{M}  \sum_{k=1}^M  \Lambda_k s_k(t) w_k^\top (x(t) - \Phi(x(t)))\\
    &+ O(\eta\rho^2 \|x(t) - \Phi(x(t)) \| + \eta^2 \rho^2)\,. 
\end{align*}

We will now lower bound $  \sum_{k=1}^M   \Lambda_k s_k(t)w_k^\top (x(t) - \Phi(x(t)))$. By \Cref{eq:control_sign},
\begin{align*}
     \sum_{k=1}^M  \Lambda_k s_k(t)w_k^\top (x(t) - \Phi(x(t))) &\ge    \sum_{k=1}^M  \Lambda_k \|w_k^\top (x(t) - \Phi(x(t))) \|_2  - 2     \sum_{k=1}^M  \Lambda_k \|x(t) - \Phi(x(t)) \|_2^{3/2} \\
     &\ge \sum_{k=1}^M  \Lambda_k \|w_k^\top (x(t) - \Phi(x(t))) \|_2 - O( \|x(t) - \Phi(x(t)) \|_2^{3/2})\,.
\end{align*}

For $ \sum_{k=1}^M  \Lambda_k \|w_k^\top (x(t) - \Phi(x(t))) \|_2$, by Lemma~\Cref{lem:directiondl},
\begin{align*}
     \sum_{k=1}^M  \Lambda_k \|w_k^\top (x(t) - \Phi(x(t))) \|_2 &\ge \sqrt{ \sum_{k=1}^M\Lambda_k^2 \|w_k^\top (x(t) - \Phi(x(t))) \|_2^2} \\
     &= \sqrt{(x(t) - \Phi(x(t)))^\top \nabla^2 L(\Phi(x(t)))^2 (x(t) - \Phi(x(t)))}  \\
     &= \| \nabla^2 L(\Phi(x(t))) (x(t) - \Phi(x(t)))\|_2 \\
     &\ge \mu \| \partial \Phi(\Phi(x(t))) (x(t) - \Phi(x(t)))\|_2\\
     &\ge \mu \|x(t) - \Phi(x(t)) \|_2 - O(\|x(t) - \Phi(x(t)) \|_2^2)\,.
\end{align*}

Concluding, we have that
\begin{align*}
     \sum_{k=1}^M  \Lambda_k s_k(t)w_k^\top (x(t) - \Phi(x(t))) &\ge \mu\| x(t) - \Phi(x(t)) \|_2/2\,.
\end{align*}

So
\begin{align*}
    &\E[\| x(t + 1) - \Phi(x(t + 1)) \|_2^2 \mid x(t)]\\
    \le& \|x(t) - \Phi(x(t))\|_2^2 
    -   \frac{\mu\eta\rho}{M}  \| x(t) - \Phi(x(t)) \|_2\\
    &+ O(\eta\rho^2 \|x(t) - \Phi(x(t)) \| + \eta^2 \rho^2) \\
    \le& (\|x(t) - \Phi(x(t))\|_2 - C_2 \eta\rho)^2 \,.
\end{align*}
The inequality holds if $\|x(t) - \Phi(x(t))\|_2 > C_1 \eta\rho$.

Finally by Jenson's Inequality, 
\begin{align*}
    \E[\| x(t + 1) - \Phi(x(t + 1)) \|_2 | x(t)] \le \| x(t) - \Phi(x(t))\|_2 - C_2  \eta \rho.
\end{align*}
\end{proof}

\begin{lemma}
\label{lem:1sam_bounded_change}
Under condition of~\Cref{thm:1samphase1}, for any constant $C > 0$ independent of $\eta,\rho$, there exists constant $C_3 > 0$ independent of $\eta,\rho$, if $x(t) \in K^h$ and $ \|x(t) - \Phi(x(t))\| \le C\rho$, then we have that \begin{align*}
   |\|x(t + 1) - \Phi(x(t+1)) \|_2 - \| x(t) - \Phi(x(t))\|_2| \le C_3\eta\rho \,.
\end{align*}
\end{lemma}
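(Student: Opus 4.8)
\textbf{Proof plan for \Cref{lem:1sam_bounded_change}.}

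The plan is to control $\|x(t+1)-\Phi(x(t+1))\|_2 - \|x(t)-\Phi(x(t))\|_2$ by bounding the two sources of change separately: the displacement of the iterate $x(t+1)-x(t)$, and the displacement of its gradient-flow projection $\Phi(x(t+1))-\Phi(x(t))$. Both are already controlled by \Cref{lem:stochasticboundedphi}: since $\|x(t)-\Phi(x(t))\|_2\le C\rho$, \Cref{lem:revertbounddl} gives $\|\nabla L(x(t))\|_2 = O(\rho)$, and hence \Cref{lem:stochasticboundedphi} yields $\|x(t+1)-x(t)\|_2 \le \eta\|\nabla L_{k_t}(x(t))\|_2 + \eta\lspectraltwo\rho = O(\eta\rho)$ (using that $\|\nabla L_{k_t}(x(t))\|_2 = O(\|x(t)-\Phi_{k_t}(x(t))\|_2) = O(\rho)$ via \Cref{lem:revertbounddl} applied to $L_{k_t}$) and $\|\Phi(x(t+1))-\Phi(x(t))\|_2 = O(\eta\|\nabla L(x(t))\|_2^2 + \eta\rho\|\nabla L(x(t))\|_2 + \eta\rho^2) = O(\eta\rho^2)$.

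First I would note that $x(t)\in K^h$ together with $\|x(t)-\Phi(x(t))\|_2\le C\rho$ and the bound $\|x(t+1)-x(t)\|_2 = O(\eta\rho)$ ensures $x(t+1)\in K^h$ for sufficiently small $\eta,\rho$, so that $\Phi(x(t+1))$ is well-defined and the above estimates apply. Then by the triangle inequality,
\begin{align*}
\bigl|\,\|x(t+1)-\Phi(x(t+1))\|_2 - \|x(t)-\Phi(x(t))\|_2\,\bigr|
&\le \|x(t+1)-x(t)\|_2 + \|\Phi(x(t+1))-\Phi(x(t))\|_2\\
&= O(\eta\rho) + O(\eta\rho^2) = O(\eta\rho)\,,
\end{align*}
which gives the claim with $C_3$ a suitable problem-dependent constant (absorbing $\lspectraltwo$, $\phispectraltwo$, $\lspectralthree$, $\mu$, and $C$).

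There is essentially no hard part here: the lemma is a routine consequence of the per-step displacement bounds in \Cref{lem:stochasticboundedphi} combined with \Cref{lem:revertbounddl}. The only mild subtlety is the bookkeeping to confirm $x(t+1)$ stays in a neighborhood where the $\Phi$-estimates are valid (i.e., $\overline{x(t)x(t+1)}\subset K^r$), which is exactly the first conclusion of \Cref{lem:stochasticboundedphi}, and to make sure the constant $C_3$ depends only on $C$ and the fixed geometric quantities of $K$, not on $\eta$ or $\rho$. This lemma is the ``two-sided'' companion to \Cref{lem:1sam_decrease_2_one_step}, which gives the one-sided expected decrease; together they will be used to run a supermartingale/stopping-time argument showing $x(t)$ enters and stays in the $O(\eta\rho)$-neighborhood of $\Gamma$.
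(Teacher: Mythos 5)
Your proposal is correct and follows exactly the paper's route: the paper dispatches this lemma as "a direct application of \Cref{lem:stochasticboundedphi}," and your triangle-inequality decomposition into the iterate displacement $\|x(t+1)-x(t)\|_2 = O(\eta\rho)$ and the projection displacement $\|\Phi(x(t+1))-\Phi(x(t))\|_2 = O(\eta\rho^2)$, each controlled by that lemma together with \Cref{lem:revertbounddl}, is precisely the intended argument. Nothing further is needed.
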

\begin{proof}[Proof of~\Cref{lem:1sam_bounded_change}]
This is a direct application of~\Cref{lem:stochasticboundedphi}.
\end{proof}

\begin{lemma}
\label{lem:1sam_maintain_rho}
Under condition of~\Cref{thm:1samphase1},
assuming $x(t_0) \in K^{h/2}$ and $\|x(t_0) - \Phi(x(t_0))\| \le f(\eta,\rho)$ for some fixed function $f$ and $f(\eta,\rho) \in \Omega(\eta\rho \ln^2(1/\eta\rho)) \cap O(\rho)$, then with probability $1 - O(\rho)$, for any $t$ satisfying $t_0 \le t \le t_0 + O(\ln(1/\eta\rho)/\eta)$, it holds that $\|x(t) - \Phi(x(t)\| \le 2f(\eta,\rho)$.
Moreover, we have that
\begin{align*}
    \| \Phi(x(t)) - \Phi(x(t_0)) \| = O((\eta + \rho) \ln(1/\eta\rho)).
\end{align*}
\end{lemma}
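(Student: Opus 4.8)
The plan is to mimic the argument of \Cref{lem:1sam_maintain} (for the loss $L$) but now tracking the quantity $d(t) \triangleq \|x(t) - \Phi(x(t))\|_2$ directly, using a submartingale / supermartingale drift argument. First I would record the two one-step facts I need: by \Cref{lem:1sam_bounded_change}, as long as $x(t) \in K^h$ and $d(t) \le C'\rho$ for a suitable constant $C'$, we have $|d(t+1) - d(t)| \le C_3 \eta\rho$ (bounded increments of size $O(\eta\rho)$); and by \Cref{lem:1sam_decrease_2_one_step}, whenever additionally $d(t) \ge C_1\eta\rho$, the conditional expectation satisfies $\E_k[d(t+1)\mid x(t)] \le d(t) - C_2\eta\rho$ (negative drift of order $\eta\rho$). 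I would also need the companion of \Cref{lem:1sam_maintain_help}: so long as $d(\tau) \le 2f(\eta,\rho) = O(\rho)$ for all $\tau$ in the window, each step moves $\Phi$ by $O(\eta\rho^2 + \eta^2\rho) = O(\eta\rho)$ (via \Cref{lem:stochasticboundedphi} and \Cref{lem:revertbounddl}), so over a window of $O(\ln(1/\eta\rho)/\eta)$ steps the cumulative displacement of $\Phi(x(\cdot))$ is $O((\eta+\rho)\ln(1/\eta\rho))$, and combined with the starting assumption $x(t_0) \in K^{h/2}$ this keeps $x(t) \in K^h$; this is the bootstrap that lets us invoke the two one-step lemmas.

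Next, I would set up the stopping-time decomposition exactly as in \Cref{lem:1sam_maintain}: to bound $\prob(\exists\, t_0 \le t \le t_0 + O(\ln(1/\eta\rho)/\eta): d(t) > 2f(\eta,\rho))$, union-bound over $t$ and, for each $t$, over the last time $\tau \le t$ at which $d(\tau) \le f(\eta,\rho)$, conditioning on $d(\tau) \le f(\eta,\rho)$ and $d(\tau') \in (f(\eta,\rho), 2f(\eta,\rho)]$ for $\tau < \tau' < t$ (so that the negative-drift regime $d \ge C_1\eta\rho$ applies throughout, since $f(\eta,\rho) = \omega(\eta\rho)$) and $x(\tau'') \in K^h$ for all $\tau'' \le t-1$. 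On that event, $M(s) \triangleq d(s) + C_2\eta\rho\,(s-\tau)$ for $\tau \le s \le t$ is a supermartingale with increments bounded by $(C_3 + C_2)\eta\rho$, and $d(\tau) \le f(\eta,\rho)$. By Azuma--Hoeffding (\Cref{lem:azuma}) applied to $M(s) - M(\tau)$, for $a = f(\eta,\rho) + C_2\eta\rho(t-\tau)$,
\begin{align*}
\prob\bigl(d(t) > 2f(\eta,\rho)\bigr) \le \prob\bigl(M(t) - M(\tau) > a - f(\eta,\rho)\bigr) \le 2\exp\!\Bigl(-\frac{(f(\eta,\rho) + C_2\eta\rho(t-\tau))^2}{2(t-\tau)(C_3+C_2)^2\eta^2\rho^2}\Bigr).
\end{align*}
The numerator is at least $\max\{f(\eta,\rho)^2,\ C_2^2\eta^2\rho^2(t-\tau)^2\}$, so the exponent is at most $-\Omega\bigl(\max\{f(\eta,\rho)^2/((t-\tau)\eta^2\rho^2),\ (t-\tau)\}\bigr)$. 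Summing over the $O(\ln^2(1/\eta\rho)/\eta^2)$ pairs $(t,\tau)$ and using $f(\eta,\rho) = \Omega(\eta\rho\ln^2(1/\eta\rho))$ — which forces $f(\eta,\rho)^2/(\eta^2\rho^2(t-\tau)) = \Omega(\ln^3(1/\eta\rho))$ once $(t-\tau) = O(\ln(1/\eta\rho)/\eta)$ and $\eta$ is small — the total is $O(\rho)$. Hence with probability $1-O(\rho)$, $d(t) \le 2f(\eta,\rho)$ throughout the window, and then the displacement bound on $\Phi(x(\cdot))$ from the previous paragraph gives the "moreover" conclusion $\|\Phi(x(t)) - \Phi(x(t_0))\|_2 = O((\eta+\rho)\ln(1/\eta\rho))$.

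The main obstacle I anticipate is making the stopping-time bookkeeping genuinely airtight: the event that $d$ stays in the band $(f, 2f]$ between $\tau$ and $t$ while $x$ stays in $K^h$ is exactly what licenses both one-step lemmas on that segment, but one must be careful that the "$x(\tau'')\in K^h$" clause is itself only guaranteed on an event of probability $1-O(\rho)$ (from the $\Phi$-displacement bootstrap), so the two probabilistic reductions have to be chained in the right order — first condition on the good $K^h$-confinement event, then run Azuma on the confined process. The quantitative choice of $f$'s lower bound $\Omega(\eta\rho\ln^2(1/\eta\rho))$ is precisely calibrated so that the Azuma exponent beats the $\ln^2(1/\eta\rho)/\eta^2$ union-bound factor with room to spare, giving the clean $O(\rho)$ failure probability; I would double-check that a single power of $\ln$ in $f$ would \emph{not} suffice, which is why the statement uses $\ln^2$. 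Everything else (the constants $C_1 > C_2$, $C_3$, the $K^h$ vs.\ $K^{h/2}$ margin) is routine and parallels \Cref{lem:1sam_maintain}.
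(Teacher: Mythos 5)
Your proposal follows essentially the same route as the paper's proof: a union bound over the endpoint $t$ and the last time $\tau$ at which $\|x(\tau)-\Phi(x(\tau))\|_2\le f(\eta,\rho)$, a supermartingale with drift $-C_2\eta\rho$ built from \Cref{lem:1sam_decrease_2_one_step} and increments $O(\eta\rho)$ from \Cref{lem:1sam_bounded_change}, an Azuma--Hoeffding bound, and a prior conditioning on the $K^h$-confinement event (which the paper gets directly from \Cref{lem:1sam_maintain}); your $M(s)=d(s)+C_2\eta\rho(s-\tau)$ is the paper's coupled process $\tilde y$ in different clothing. One arithmetic slip: your claim that $f(\eta,\rho)^2/(\eta^2\rho^2(t-\tau))=\Omega(\ln^3(1/\eta\rho))$ for $t-\tau=O(\ln(1/\eta\rho)/\eta)$ is off by a factor of $\eta$ (it is only $\Omega(\eta\ln^3(1/\eta\rho))$), so that term alone does not beat the union bound for large $t-\tau$; however, the max you wrote down still saves the argument, since $\max\{A^2/B,\,B\}\ge A$ with $A=f(\eta,\rho)/(\eta\rho)=\Omega(\ln^2(1/\eta\rho))$ --- which is exactly the cross-term (AM--GM) bound the paper extracts to get each term below $\eta^{10}\rho^{10}$. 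With that correction the argument closes as in the paper.
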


\begin{proof}[Proof of~\Cref{lem:1sam_maintain_rho}]
By~\Cref{lem:1sam_maintain}, we have that $x(t) \in K^h$ for any $t$ satisfying that $t_0 \le t \le t_0 + O(\ln(1/\eta\rho)/\eta)$ and with probability $1 - O(\rho)$  we will suppose this hold for the following deduction.

By Uniform Bound,
\begin{align*}
    &\prob(\exists t_0 \le t \le t_0 + O(\ln(1/\eta\rho)/\eta), \|x(t) - \Phi(x(t)\| \ge 2f(\eta,\rho)) \\
    \le& \sum_{t = t_0}^{t_0 + O(\frac{\ln(1/\eta\rho)}{\eta})} \prob( \|x(t) - \Phi(x(t))\| \ge 2f(\eta,\rho)) \quad \mathrm{and} \quad \| x(\tau) - \Phi(x(\tau))\| \le 2f(\eta,\rho), \forall t_0 \le \tau \le t -1).
\end{align*}

Consider each term and apply Uniform bound again,
\begin{align*}
     &\prob( \|x(t) - \Phi(x(t))\| \ge 2f(\eta,\rho)) \quad \mathrm{and} \quad \| x(\tau) - \Phi(x(\tau))\| \le 2f(\eta,\rho), \forall t_0 \le \tau \le t -1)\\
    \le& \sum_{\tau = t_0}^t \prob( \|x(t) - \Phi(x(t))\| \ge 2f(\eta,\rho)) \quad \mathrm{and} \quad \| x(\tau) - \Phi(x(\tau))\| \le f(\eta,\rho),\\
    &\quad \mathrm{and} \quad 
    f(\eta,\rho) \le \| x(\tau') - \Phi(x(\tau'))\| \le 2f(\eta,\rho),
    \forall \tau + 1 \le \tau' \le t -1).\\
\end{align*}
Then if we consider each term, it is bounded by
\begin{align}
\label{eq:1sam_single}
    &\prob( \|x(t) - \Phi(x(t))\| \ge 2f(\eta,\rho)) \nonumber\\
    &\quad \mathrm{and} \quad 
    f(\eta,\rho) \le \| x(\tau') - \Phi(x(\tau'))\| \le 2f(\eta,\rho),
    \forall \tau + 1 \le \tau' \le t -1 \nonumber \\&\mid \| x(\tau) - \Phi(x(\tau))\| \le f(\eta,\rho)).
\end{align}

Now let $C$ be the positive constant satisfying $2f(\eta,\rho) \le C\rho$, suppose $C_1,C_2$ are the constants corresponds to $C$ in~\Cref{lem:1sam_decrease_2_one_step} and $C_3$ is the constant correspond to $C$ in~\Cref{lem:1sam_bounded_change}. By definition $C_3 > C_2$.

Define a coupled process $\tilde y(\tau + 1) = y(\tau + 1)$ and 
\begin{align*}
    \tilde y(\tau') = \begin{cases}
    \| x(\tau') - \Phi(x(\tau')) \|_2, &\text{ if }\tilde y(\tau' - 1) =  \| x(\tau' - 1) - \Phi(x(\tau' - 1)) \|_2 > f(\eta,\rho) \\
    \tilde y(\tau' - 1) - C_2 \eta\rho, &\text{ if otherwise}.
    \end{cases}
\end{align*}

Now clearly~\Cref{eq:1sam_single} is bounded by $\prob(\tilde y(t) \ge 2f(\eta,\rho))$.

As $\E[\tilde y(\tau')] \le \tilde y(\tau' - 1) - C_2 \eta\rho$ by~\Cref{lem:1sam_decrease_2_one_step} and $\|\tilde  y(\tau') -  \tilde y(\tau' - 1)\| \le C_3 \eta\rho$ by~\Cref{lem:1sam_bounded_change}. This implies $\|\tilde y(\tau')\| - C_2 \eta\rho \tau'$ is a super martingale. By Azuma-Hoeffding bound(\Cref{lem:azuma}), we have
\begin{align*}
    P(\tilde y(t) \ge \tilde y(\tau+1) - C_2 \eta\rho (t - \tau - 1) +  h )&\le 2\exp(-\frac{h^2}{ 4(t-\tau -1 ) (C_3 + C_2)^2 \eta^2\rho^2 }).
\end{align*}

Choosing $h = C_2\eta\rho (t - \tau - 1) - \|x(\tau + 1)  - \Phi(x(\tau + 1))\| + 2f(\eta,\rho) $
\begin{align*}
    &P(\tilde y(t+1) \ge 2f(\eta,\rho))\\
    \le&2 \exp(-\frac{(C_2\eta\rho (t - \tau) - \|x(\tau + 1)  - \Phi(x(\tau + 1))\| + 2f(\eta,\rho))^2}{8(t-\tau) (C_3 + C_2)^2 \eta^2\rho^2}) \\
    \le& 2\exp(-\frac{(C_2\eta\rho (t - \tau) + f(\eta,\rho)/2)^2}{4(t-\tau) (C_3 + C_2)^2 \eta^2\rho^2}) \\
    \le& 2\exp(-\frac{C_2 f(\eta,\rho)}{2 (C_3 + C_2)^2 \eta \rho}) \le \eta^{10} \rho^{10}.
\end{align*}

We then have 
\begin{align*}
   \prob(\exists t_0 \le t \le t_0 + O(\ln(1/\eta\rho)/\eta), \|x(t) - \Phi(x(t)\| \ge 2f(\eta,\rho)) \le \rho.
\end{align*}
\end{proof}

\begin{lemma}
\label{lem:1sam_decrease_2}
Under condition of~\Cref{thm:1samphase1}, assuming there exists $\tdec$ such that $x(\tdec) \in K^{h/2}$ and $\| \nabla L(x(\tdec))\| \le 4 \zeta \rho$, then with probability $ 1 - O(\rho)$, there exists $\tdecs = \tdec + O(\ln(1/\eta \rho)/\eta)$, such that $\| x(\tdecs) - \Phi(\tdecs) \| \le O(\eta\rho)$.

Furthermore, for any $t$ satisfying $\tdecs \le t \le \tdecs + \Theta(\ln(1/\eta \rho)/\eta)$, we have that $\| \Phi(x(t)) - \Phi(x(\tdec))\| = O(\rho^2 \ln(1/\eta \rho))$.
\end{lemma}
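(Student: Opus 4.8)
# Proof Proposal for Lemma~\ref{lem:1sam_decrease_2}

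The plan is to mirror the structure of the full-batch argument in \Cref{lem:nsam_decrease_2}, but now carrying everything through in expectation and converting to high-probability statements via the martingale machinery already developed in Phase I (\Cref{lem:1sam_maintain,lem:1sam_maintain_rho,lem:azuma}). The starting point is $x(\tdec)$ with $\|\nabla L(x(\tdec))\|_2 \le 4\zeta\rho$, which by \Cref{lem:revertbounddl} gives $\|x(\tdec)-\Phi(x(\tdec))\|_2 = O(\rho)$. The goal is to descend from the $O(\rho)$ shell down to the $O(\eta\rho)$ shell around $\Gamma$ in $O(\ln(1/\eta\rho)/\eta)$ steps, and then show that once inside this shell the iterate stays within it (up to constants) for another $\Theta(\ln(1/\eta\rho)/\eta)$ steps, with the projection $\Phi(x(t))$ drifting only $O(\rho^2\ln(1/\eta\rho))$ over this window.

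First I would establish the descent phase. The key one-step estimate is \Cref{lem:1sam_decrease_2_one_step}: as long as $x(t)\in K^h$ and $C_1\eta\rho \le \|x(t)-\Phi(x(t))\|_2 \le C\rho$, the quantity $y(t) \coloneqq \|x(t)-\Phi(x(t))\|_2$ satisfies $\E_k[y(t+1)\mid x(t)] \le y(t) - C_2\eta\rho$, while \Cref{lem:1sam_bounded_change} gives the almost-sure one-step bound $|y(t+1)-y(t)| \le C_3\eta\rho$. Thus $y(t) + C_2\eta\rho \cdot t$ is a supermartingale with bounded increments on the good event (which by \Cref{lem:1sam_maintain} has probability $1-O(\rho)$), so after $T = \lceil (C\rho - C_1\eta\rho)/(C_2\eta\rho) \rceil = O(1/\eta)$ expected steps the process must reach the shell $\{y \le C_1\eta\rho\}$; concentration via \Cref{lem:azuma} upgrades this to "with probability $1-O(\rho)$ there is $\tdecs \le \tdec + O(\ln(1/\eta\rho)/\eta)$ with $y(\tdecs) = O(\eta\rho)$." Along the way \Cref{lem:stochasticboundedphi} controls $\|\Phi(x(t+1))-\Phi(x(t))\|_2 = O(\eta\rho^2)$ per step (since $y(t) = O(\rho)$ throughout), which accumulated over $O(\ln(1/\eta\rho)/\eta)$ steps yields $\|\Phi(x(\tdecs))-\Phi(x(\tdec))\|_2 = O(\rho\ln(1/\eta\rho))$, and in fact the sharper $O(\rho^2\ln(1/\eta\rho))$ once we are in the $O(\eta\rho)$ regime.

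Next I would establish the maintenance phase: once $y(\tdecs) = O(\eta\rho)$, I apply \Cref{lem:1sam_maintain_rho} with $f(\eta,\rho)$ taken to be a constant multiple of $\eta\rho\ln^2(1/\eta\rho)$ (or more carefully, one needs $y(\tdecs)$ to be within the window where \Cref{lem:1sam_maintain_rho} applies, so I may first run a few more steps to reach a shell of the required width, or invoke \Cref{lem:1sam_decrease_2_one_step} once more to confirm the process does not immediately blow past $2f$). This yields that with probability $1-O(\rho)$, $y(t) \le 2f(\eta,\rho) = O(\eta\rho\ln^2(1/\eta\rho))$ for all $\tdecs \le t \le \tdecs + O(\ln(1/\eta\rho)/\eta)$, and simultaneously $\|\Phi(x(t))-\Phi(x(\tdecs))\|_2 = O((\eta+\rho)\ln(1/\eta\rho))$; combining with the per-step bound $\|\Phi(x(t+1))-\Phi(x(t))\|_2 = O(\eta\rho^2)$ from \Cref{lem:stochasticboundedphi} (valid since $y(t) = O(\rho)$ on this window) refines the projection drift to $O(\rho^2\ln(1/\eta\rho))$, as claimed. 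A small amount of care is needed because the statement asks for $y(\tdecs) = O(\eta\rho)$ exactly, not $O(\eta\rho\ln^2(1/\eta\rho))$; this is fine because \Cref{lem:1sam_decrease_2_one_step}'s descent lands us in the $O(\eta\rho)$ shell, and it is only the subsequent fluctuation bound that picks up the logarithmic factor.

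The main obstacle I anticipate is the bookkeeping around \emph{conditioning on the good events}: the descent estimate \Cref{lem:1sam_decrease_2_one_step} is only valid on $\{x(t)\in K^h\}$, which itself is only guaranteed with high probability via \Cref{lem:1sam_maintain}, and the supermartingale/Azuma argument must be run on the stopped process that freezes once $x(t)$ exits $K^h$ or $y$ drops below $C_1\eta\rho$ — exactly the coupled-process device already used in the proofs of \Cref{lem:1sam_maintain,lem:1sam_maintain_rho}. Getting the stopping times, the union bound over $O(\ln(1/\eta\rho)/\eta)$ time steps, and the choice of $f(\eta,\rho)$ all mutually consistent (so that each failure probability is $O(\rho)$ and the increments stay $O(\eta\rho)$) is the delicate part; the lemma's own proof says "almost identical with \Cref{lem:nsam_decrease_2} replacing..." for a similar step, which is a good indication that once the coupling is set up correctly the remaining estimates are routine applications of the cited lemmas.
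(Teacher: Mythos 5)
Your proposal is correct and follows the same skeleton as the paper's proof: condition on the high-probability events from \Cref{lem:1sam_maintain} and \Cref{lem:1sam_maintain_rho} so that $x(t)\in K^h$ and $\|x(t)-\Phi(x(t))\|_2\le C\rho$ throughout the window, use the one-step drift estimate of \Cref{lem:1sam_decrease_2_one_step} to descend to the $O(\eta\rho)$ shell, and obtain the $\Phi$-drift bound by summing the per-step estimate $\|\Phi(x(t+1))-\Phi(x(t))\|_2=O(\eta\rho^2)$ from \Cref{lem:stochasticboundedphi} over $O(\ln(1/\eta\rho)/\eta)$ steps. The one genuine point of divergence is how the descent is turned into a high-probability hitting-time statement. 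The paper converts the additive drift $-C_2\eta\rho$ into a multiplicative contraction $(1-C_2\eta/C)$ using the upper bound $\|x(t)-\Phi(x(t))\|_2\le C\rho$, tracks the expectation $\E[\|x(t)-\Phi(x(t))\|_2\,\indicatordec{t}]$ through $O(\ln(1/\eta\rho)/\eta)$ steps of geometric decay, and concludes by Markov's inequality that the indicator vanishes with probability $1-O(\rho)$; no concentration inequality is needed for this step. You instead run Azuma--Hoeffding on the stopped supermartingale $\|x(t)-\Phi(x(t))\|_2+C_2\eta\rho t$, which also works (the tail $\exp(-\Omega(1/\eta))$ is $O(\rho)$ precisely because $\eta\ln(1/\rho)$ is assumed small) and even yields a slightly sharper $O(1/\eta)$ hitting time, at the cost of the stopping-time bookkeeping you correctly flag. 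One further remark: your ``maintenance phase'' with $f(\eta,\rho)=\Theta(\eta\rho\ln^2(1/\eta\rho))$ is more than the lemma requires --- the second claim only concerns the drift of $\Phi(x(t))$, which already follows from the weaker containment $\|x(t)-\Phi(x(t))\|_2\le C\rho$ supplied by \Cref{lem:1sam_maintain_rho} applied from $\tdec$ with $f=\Theta(\rho)$, exactly as the paper does; the tighter shell is only needed later, in \Cref{thm:1samphase2}.
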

\begin{proof}[Proof of~\Cref{lem:1sam_decrease_2}]
We have that $x(t) \in K^h$ (\Cref{lem:1sam_maintain}) and $\| x(t) - \Phi(x(t)) \| \le C\rho$ for some constant $C$ (\Cref{lem:1sam_maintain_rho})  for any $t$ satisfying that $\tdec \le t \le \tdec + O(\ln(1/\eta\rho)/\eta)$ with probability $1 - O(\rho)$ and we will suppose this holds for the following deduction. The second statement then follows directly from~\Cref{lem:stochasticboundedphi}.

Let $C_1,C_2$ be the constant in~\Cref{lem:1sam_decrease_2_one_step} corresponding to $C$, For simplicity of writing, define $T_1 \triangleq \lceil \frac{C\ln(\frac{C}{C_1 \eta \rho^2})}{C_2\eta} \rceil = O(\ln(1/\eta \rho)/\eta)$.
Define indicator function as 
\begin{align*}
    \indicatordec{t} = \one[\|x(t) - \Phi(x(t)) \| \ge C_1\eta\rho, \forall t \ge \tau \ge \tgf]\,.
\end{align*}

By~\Cref{lem:1sam_decrease_2_one_step}, we have that,
\begin{align*}
    \E[\|x(t + 1) - \Phi(x(t + 1)) \|  \indicatordec{t + 1}] &\le \E[\|x(t + 1) - \Phi(x(t + 1)) \|\indicatordec{t}] \\
    &\le \E[\|x(t) - \Phi(x(t)) \|\indicatordec{t}]  - C_2\eta\rho \E[\indicatordec{t}] \\
    &\le \E[\|x(t) - \Phi(x(t)) \|\indicatordec{t}](1 - \frac{C_2\eta}{C})
\end{align*}

We can then conclude that with $T_2 =T_1 + \tdec$, using \Cref{lem:revertbounddl},
\begin{align*}
    C_1 \eta\rho \E\indicatordec{T_2 + 1}
    &\le \E[\|x(T_2 + 1) - \Phi(x(T_2 + 1)) \|_2 \indicatordec{T_2 + 1}]\\
    &\le (1 - \frac{C_2\eta}{C})^{T_1}   \| x(\tdec) - \Phi(x(\tdec)) \|
    \le C_1 \eta  \rho^3.
\end{align*}
This implies $\indicatordec{T_2 + 1}  = 0$ with probability $1 - O(\rho)$, which indicates the existence of $\tdecs$. 
\end{proof}

\subsection{Phase II (Proof of Theorem~\ref{thm:1samphase2})}
\label{sec:movement}

\begin{proof}[Proof of~\Cref{thm:1samphase2}]

We will inductively prove the following induction hypothesis $\mathcal{P}(t)$ holds with probability $1 - O(\eta^3\rho^3 t)$ for $t \le T_3/\eta\rho^2 + 1$,
\begin{align*}
    x(\tau) \in K^{h/2},&\ \tau \le t \\
    \|x(\tau) - \Phi(x(\tau)) \|_2 \le 2\|x(0) - \Phi(x(0)) \|_2 = O(\eta\rho),&\ \tau \le t \\
    \| \Phi(x(\tau)) - X(\eta \rho^2 \tau) \| = \tilde O(\eta^{1/2} + \rho),&\ \tau \le t
\end{align*}

$\mathcal{P}(0)$ holds trivially. Now suppose $\mathcal{P}(t)$ holds, then $x(t + 1) \in K^h$. By~\Cref{lem:1sam_maintain_rho_for_longer}, we have that with probability $1 - O(\eta^3\rho^3)$, $\|x(t + 1) - \Phi(x(t+1)) \| \le 2\|x(0) - \Phi(x(0)) \|_2 = O(\eta\rho)$. 

Now we have 
\begin{align*}
    2\|x(0) - \Phi(x(0)) \|_2 = O(\eta\rho), \tau \le t + 1.\\
    x(\tau) \in K^h, \tau \le t + 1
\end{align*}
By~\Cref{lem:direction_1_sam}, it holds that
\begin{align*}
        \| \Phi(x(\tau+1)) - \Phi(x(\tau))  + \eta \rho^2 \projt[\Phi(x(\tau))] \nabla \lambda_1\Bigl(\nabla^2 L_{k_{\tau}}\bigl(\Phi(x(\tau))\bigr)\Bigr)/2 \| \le \tilde O(\eta \rho^3 + \eta^2 \rho^2)\,.
\end{align*}
As 
\begin{align*}
    \E_{k_t} \projt[\Phi(x(t))] \nabla \lambda_1\Bigl(\nabla^2 L_{k_t}\bigl(\Phi(x(t))\bigr)\Bigr) = \projt[\Phi(x(t))] \nabla \mathrm{Tr}(\nabla^2 L(\Phi(x(t)))).
\end{align*}
By~\Cref{thm:ode_approximation}, let $b(x) = -\partial \Phi(x) \nabla \mathrm{Tr}(\nabla^2 L(x))$, $b_k(x) = -\partial \Phi(x) \mathrm{Tr}(\nabla^2 L_{k_t}(x))$, $p = \eta\rho^2$ and $\epsilon = O(\eta + \rho)$, it holds that, with probability $1 - O(\eta^3\rho^3)$, 
\begin{align*}
    &\|\Phi(x(\tau)) - X(\eta\rho^2 \tau) \| \\=& O(\|\Phi(x(0)) - \Phi(\xinit) \| + T_3\eta\rho^2 + \sqrt{\eta\rho^2 T_3 \log(2eT_3/(\eta^2\rho^4))} + (\rho + \eta)T_3) \\=& \tilde O(\eta^{1/2} + \rho), \tau \le t + 1
\end{align*}
This implies $\|x(t+1 ) - X(\eta\rho^2(t+1)) \|_2 \le \| x(t+1) - \Phi(x(t+1)) \|_2 + \|\Phi(x(t+1)) - X(\eta\rho^2(t+1))  \|_2 = \tilde O(\eta^{1/2} + \rho) < h/2$. Hence $x(t+1) \in K^{h/2}$. Combining with $\mathcal{P}(t)$ holds with probability $1- O(\eta^3\rho^3t)$, we have that $\mathcal{P}(t+1)$ holds with probability $1 - O(\eta^3\rho^3(t+1))$. The induction is complete.

Now $\mathcal{P}(\lceil T_3/\eta\rho^2 \rceil)$ is equivalent to our theorem.
\end{proof}

\subsubsection{Convergence Near Manifold}

\begin{lemma}
\label{lem:1sam_maintain_rho_for_longer}
Under condition of~\Cref{thm:1samphase2},
assuming $x(t) \in K^{h}, \forall t_0 \le t \le t_0 + O(1/\eta\rho^2)$ and $\|x(t_0) - \Phi(x(t_0))\| \le f(\eta,\rho)$ for some fixed function $f$ and $f(\eta,\rho) \in \Omega(\eta\rho \ln^2(1/\eta\rho)) \cap O(\rho)$, then with probability $1 - O(\eta^3 \rho^3)$, for any $t$ satisfying $t_0 \le t \le t_0 + O(1/\eta\rho^2)$, it holds that $\|x(t) - \Phi(x(t))\| \le 2f(\eta,\rho)$.
\end{lemma}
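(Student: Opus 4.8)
The plan is to mimic the proof of~\Cref{lem:1sam_maintain_rho} but over the much longer horizon $O(1/\eta\rho^2)$, which forces a more careful probabilistic bookkeeping: the naive union bound over $O(1/\eta\rho^2)$ steps costs a factor that swamps the per-excursion failure probability unless we are quantitative enough. The key structural fact we will exploit is the one-step contraction toward the manifold from~\Cref{lem:1sam_decrease_2_one_step}: whenever $C_1\eta\rho \le \|x(t)-\Phi(x(t))\|_2 \le C\rho$ and $x(t)\in K^h$, the conditional expectation of $\|x(t+1)-\Phi(x(t+1))\|_2$ drops by at least $C_2\eta\rho$, while~\Cref{lem:1sam_bounded_change} guarantees the per-step change is at most $C_3\eta\rho$ deterministically. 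So the quantity $\|x(t)-\Phi(x(t))\|_2$ behaves like a random walk with a drift of $-C_2\eta\rho$ toward the manifold and bounded increments $C_3\eta\rho$, valid in the annulus $[C_1\eta\rho, C\rho]$; the hypothesis $f(\eta,\rho)\in\Omega(\eta\rho\ln^2(1/\eta\rho))\cap O(\rho)$ places the "danger zone" $[f,2f]$ inside this annulus for small $\eta,\rho$.

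The main steps, in order, are as follows. First, fix $C$ with $2f(\eta,\rho)\le C\rho$ and let $C_1,C_2$ (from~\Cref{lem:1sam_decrease_2_one_step}) and $C_3$ (from~\Cref{lem:1sam_bounded_change}) be the associated constants; since $x(t)\in K^h$ is assumed throughout, these lemmas apply at every step. Second, decompose the bad event $\{\exists t\in[t_0,t_0+O(1/\eta\rho^2)]: \|x(t)-\Phi(x(t))\|_2\ge 2f(\eta,\rho)\}$ by the \emph{last time} $\tau$ before $t$ at which $\|x(\tau)-\Phi(x(\tau))\|_2\le f(\eta,\rho)$; on the interval $(\tau,t]$ the process stays in $[f,2f]\subseteq[C_1\eta\rho,C\rho]$, so the drift-plus-bounded-increment structure holds, and the starting value $\|x(\tau+1)-\Phi(x(\tau+1))\|_2$ is at most $f+C_3\eta\rho$. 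Third, introduce the coupled supermartingale $\tilde y$ exactly as in~\Cref{lem:1sam_maintain_rho} (freeze the drift $-C_2\eta\rho$ once the process would dip below $f$) so that $\tilde y(s)+C_2\eta\rho s$ is a supermartingale with increments bounded by $(C_2+C_3)\eta\rho$, and $\Pr(\text{single excursion to }2f)\le\Pr(\tilde y(t)\ge 2f)$. Fourth, apply Azuma--Hoeffding (\Cref{lem:azuma}) with deviation parameter on the order of $C_2\eta\rho(t-\tau)+f/2$: over any window of length $n=t-\tau$ this yields $\Pr\le 2\exp\!\big(-\Omega\big((C_2\eta\rho n + f)^2/(n\eta^2\rho^2)\big)\big)\le 2\exp\!\big(-\Omega(f/(\eta\rho))\big)$, and since $f\in\Omega(\eta\rho\ln^2(1/\eta\rho))$ this is $\le (\eta\rho)^{10}$ (say), independent of the window length. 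Fifth, union-bound: there are $O(1/\eta\rho^2)$ choices of $t$ and $O(1/\eta\rho^2)$ choices of $\tau$, so the total failure probability is $O((\eta\rho)^{10}/(\eta^2\rho^4))=O((\eta\rho)^6)=O(\eta^3\rho^3)$ for small $\eta,\rho$, as claimed.

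I expect the main obstacle to be the \emph{quantitative calibration of the Azuma bound against the double union bound over the long horizon}: unlike the short-phase lemma~\Cref{lem:1sam_maintain_rho} where the horizon is $O(\ln(1/\eta\rho)/\eta)$ and almost any polynomial slack suffices, here the horizon is polynomially large in $1/(\eta\rho)$, so the per-excursion tail bound $2\exp(-\Omega(f/(\eta\rho)))$ must genuinely beat a $1/(\eta^2\rho^4)$ factor — this is exactly why the hypothesis demands the extra $\ln^2(1/\eta\rho)$ factor in $f$ rather than merely $\ln(1/\eta\rho)$, and the proof must make that dependence explicit. A secondary subtlety is ensuring the drift estimate of~\Cref{lem:1sam_decrease_2_one_step} is applicable at \emph{every} step of each excursion; this is handled because the excursion is confined to $[f,2f]\subseteq[C_1\eta\rho,C\rho]$ by construction, together with the standing hypothesis $x(t)\in K^h$, so no additional stopping-time argument for leaving $K^h$ is needed within this lemma. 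The remaining computations (choosing the Azuma deviation parameter, the supermartingale coupling, the final union bound arithmetic) are routine and parallel to~\Cref{lem:1sam_maintain_rho}.
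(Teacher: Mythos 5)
Your proposal is correct and follows essentially the same route as the paper: the paper's own proof of this lemma simply states that it is ``almost identical'' to \Cref{lem:1sam_maintain_rho}, and your write-up fleshes out exactly that argument (decomposition by the last time below $f$, the frozen-drift supermartingale coupling, Azuma--Hoeffding over each excursion, then the double union bound), correctly identifying that the per-excursion tail $2\exp(-\Omega(f/(\eta\rho)))=\exp(-\Omega(\ln^2(1/\eta\rho)))$ is superpolynomially small and hence absorbs the $O(1/\eta^2\rho^4)$ union-bound factor over the longer horizon.
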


\begin{proof}[Proof of~\Cref{lem:1sam_maintain_rho_for_longer}]
The proof is almost identical to~\Cref{lem:1sam_maintain_rho} and is omitted.
\end{proof}

\subsubsection{Tracking Riemannian Gradient Flow}

\begin{lemma}
\label{lem:direction_1_sam}
Under the condition of~\Cref{thm:1samphase2}, for any $t$ satisfying that $x(t) \in K^h$ and 
\begin{align*}
    \|x(t) - \Phi(x(t)) \| &= O(\eta\rho\ln^2(1/\eta\rho)).
\end{align*}
It holds that
\begin{align*}
    \| \Phi(x(t+1)) - \Phi(x(t))  + \eta \rho^2 \projt[\Phi(x(t))] \nabla \lambda_1\Bigl(\nabla^2 L_{k_t}\bigl(\Phi(x(t))\bigr)\Bigr)/2 \| \le \tilde O(\eta \rho^3 + \eta^2 \rho^2)\,.
\end{align*}
\end{lemma}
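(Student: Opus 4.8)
The statement to prove is Lemma~\ref{lem:direction_1_sam}, which asserts that for $x(t)$ close to the manifold, a single step of 1-SAM moves the projection $\Phi(x(t))$ in the direction of the negative Riemannian gradient of the \emph{stochastic} top eigenvalue $\lambda_1(\nabla^2 L_{k_t}(\cdot))$, up to error $\tilde O(\eta\rho^3 + \eta^2\rho^2)$. This is the stochastic analogue of Lemma~\ref{lem:direction_n_sam} from the full-batch analysis, and the proof plan is to mimic that argument closely, substituting the stochastic loss $L_{k_t}$ for $L$ and using the rank-one structure of $\nabla^2 L_{k_t}$ on $\Gamma_{k_t}$ (via \Cref{lem:direction_stocastic_dl}) to supply the Hessian-gradient alignment that in the full-batch case required the separate multi-phase alignment argument.

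The plan is as follows. First I would Taylor-expand $\Phi(x(t+1)) - \Phi(x(t))$ to first order using \Cref{lem:stochasticboundedphi}: since $\|x(t) - \Phi(x(t))\|_2 = \tilde O(\eta\rho)$ we get $\|x(t+1) - x(t)\|_2 = O(\eta\rho)$, so $\Phi(x(t+1)) - \Phi(x(t)) = \partial\Phi(x(t))(x(t+1)-x(t)) + O(\eta^2\rho^2)$. Next, substitute the 1-SAM update $x(t+1) - x(t) = -\eta\nabla L_{k_t}(x(t) + \rho\,\ndli{x(t)})$ and Taylor-expand this gradient around $x(t)$ (as in \Cref{eq:taylor_expansion_of_ascent_grad} but with $L_{k_t}$), obtaining a first-order term $\eta\nabla L_{k_t}(x(t))$, a second term $\eta\rho\nabla^2 L_{k_t}(x(t))\ndli{x(t)}$, and a third term $\tfrac{\eta\rho^2}{2}\partial^2(\nabla L_{k_t})(x(t))[\ndli{x(t)},\ndli{x(t)}]$, plus $O(\eta\rho^3)$. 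Then I would kill the first term using $\partial\Phi(x(t))\nabla L_{k_t}(x(t)) = O(\|x(t)-\Phi(x(t))\|_2^2) = \tilde O(\eta^2\rho^2)$ from \Cref{lem:stochastic_phi_lk}, and kill the second term using $\partial\Phi(x(t))\nabla^2 L_{k_t}(x(t))\ndli{x(t)} = O(\|x(t)-\Phi(x(t))\|_2) = \tilde O(\eta\rho)$ from the same lemma, so that term contributes $\eta\rho \cdot \tilde O(\eta\rho) = \tilde O(\eta^2\rho^2)$. This leaves only the third term: $\Phi(x(t+1)) - \Phi(x(t)) = -\tfrac{\eta\rho^2}{2}\partial\Phi(x(t))\,\partial^2(\nabla L_{k_t})(x(t))[\ndli{x(t)},\ndli{x(t)}] + \tilde O(\eta\rho^3 + \eta^2\rho^2)$.

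The final step is to identify this quadratic form with the Riemannian gradient of the stochastic eigenvalue. Here I would invoke \Cref{lem:direction_stocastic_dl} (equivalently \Cref{lem:informal_direction_stocastic_dl}): since $L_{k_t}$ has rank-one Hessian at $p = \Phi(x(t)) \in \Gamma_{k_t}$ with top eigenvector $w_{k_t}(p)$, we have $\ndli{x(t)} = s\, w_{k_t}(p) + O(\|x(t)-p\|_2) = s\,v_1(\nabla^2 L_{k_t}(p)) + \tilde O(\eta\rho)$ for some sign $s$. Substituting this, and using that $\partial^2(\nabla L_{k_t})(x(t))[\cdot,\cdot]$ is Lipschitz so we may also move its base point to $\Phi(x(t))$ at cost $\tilde O(\eta\rho)$ inside, the bracket becomes $\partial^2(\nabla L_{k_t})(\Phi(x(t)))[v_1,v_1]$ up to $\tilde O(\eta\rho)$ multiplicative slack (the sign $s$ squares away). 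Finally, the derivative-of-eigenvalue identity (the stochastic version of the third equation in \Cref{lem:relatelphi}, i.e.\ \Cref{lem:top_eigenvector}/\Cref{lem:relatelphi} applied to $L_{k_t}$ on $\Gamma_{k_t}$) gives $\partial\Phi(\Phi(x(t)))\,\partial^2(\nabla L_{k_t})(\Phi(x(t)))[v_1,v_1] = \projt[\Phi(x(t))]\nabla\lambda_1(\nabla^2 L_{k_t}(\Phi(x(t))))$; transferring $\partial\Phi$ from $x(t)$ to $\Phi(x(t))$ costs another $O(\|x(t)-\Phi(x(t))\|_2) = \tilde O(\eta\rho)$ multiplying a bounded vector. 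Collecting all error terms — each of the form (coefficient $\eta\rho^2$) $\times$ ($\tilde O(\eta\rho)$ or $\tilde O(\rho)$ slack), plus the genuine $O(\eta\rho^3)$ Taylor remainder and the $\tilde O(\eta^2\rho^2)$ from the first two killed terms — yields the claimed bound $\tilde O(\eta\rho^3 + \eta^2\rho^2)$.

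The main obstacle is bookkeeping the error propagation carefully, in particular tracking where the $\ln(1/\eta\rho)$ factors enter: the hypothesis is $\|x(t)-\Phi(x(t))\|_2 = O(\eta\rho\ln^2(1/\eta\rho))$ rather than $O(\eta\rho)$, so every place the full-batch proof wrote $O(\eta\rho)$ I must instead write $\tilde O(\eta\rho)$ and confirm the log factors are absorbed into the $\tilde O(\cdot)$ on the right-hand side. One subtlety requiring care is that \Cref{lem:direction_stocastic_dl} only controls $\ndli{x(t)}$ when $\nabla L_{k_t}(x(t)) \ne 0$; when the stochastic gradient vanishes the 1-SAM direction is defined arbitrarily, but by \Cref{thm:general_well_definednesa} this happens with probability zero over the trajectory, so it can be excluded. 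A second subtlety is that the alignment error in \Cref{lem:direction_stocastic_dl} is $O(\|x(t)-p\|_2)$, not $O(\rho)$, and since $\|x(t)-p\|_2 = \tilde O(\eta\rho) \ll \rho$ this is actually favorable — the error in the second-order bracket is $\tilde O(\eta\rho)$, contributing $\eta\rho^2 \cdot \tilde O(\eta\rho) = \tilde O(\eta^2\rho^3)$, comfortably within budget. Thus no delicate new estimate is needed beyond what \Cref{lem:stochastic_phi_lk,lem:stochasticboundedphi,lem:direction_stocastic_dl,lem:relatelphi} already provide; the work is purely in assembling them and verifying the final error arithmetic.
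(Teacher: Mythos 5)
Your proposal is correct and follows essentially the same route as the paper's proof: Taylor-expand the 1-SAM update to second order in $\rho$, push it through $\partial\Phi(x(t))$, annihilate the zeroth- and first-order terms via \Cref{lem:stochastic_phi_lk}, and convert the remaining second-order bracket into $\projt[\Phi(x(t))]\nabla\lambda_1(\nabla^2 L_{k_t}(\Phi(x(t))))$ using the rank-one alignment of \Cref{lem:direction_stocastic_dl} together with the eigenvalue-derivative identity. Your error bookkeeping (each slack being $\tilde O(\eta\rho)$ multiplied against an $\eta\rho^2$ coefficient, plus the $O(\eta\rho^3)$ Taylor remainder) matches the paper's, so nothing further is needed.
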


\begin{proof}[Proof of~\Cref{lem:direction_1_sam}]
We will abbreviate $k_t$ by $k$ in this proof.

By Taylor Expansion,
\begin{align*}
    x(t+1) &= x(t) - \eta \dli{x(t) + \rho \ndli{x(t)}} \\
    &= x(t) - \eta \dli{x(t)} - \eta\rho \dlti{x(t)} \ndli{x(t)}\\& - \eta \rho^2 \partial^2(\nabla L_k)[\ndli{x(t)},\ndli{x(t)}]/2 + O(\eta \rho^{3}).
\end{align*}

Now as $\|x(t) - \Phi(x(t))\|_2 = \tilde O(\eta\rho)$, by \Cref{lem:stochasticboundedphi}, it implies
\begin{align*}
    \|x(t+1) - x(t) \|_2 =  O(\eta\rho)\,.
\end{align*}

Then we have 
\begin{align*}
    \| \Phi(x(t+1)) - \Phi(x(t)) - \partial\Phi(x(t)) (x(t+1) - x(t))\|_2 \le \phispectraltwo \|x(t+1) - x(t) \|_2^2 = O(\eta^2\rho^2).
\end{align*}

Using \Cref{lem:stochastic_phi_lk}, we have
\begin{align*}
    \| \eta \partial\Phi(x(t)) \dli{x(t)} \|_2 &= O(\eta \|x(t) - \Phi(x(t)) \|_2^2) = O(\eta^3\rho^2 + \eta\rho^4), \\
    \|  \eta\rho\partial\Phi(x(t)) \dlti{x(t)} \ndli{x(t)} \|_2 &= O(\eta \rho \|x(t) - \Phi(x(t)) \|_2) = \tilde O(\eta^2 \rho^2 + \eta \rho^3).
\end{align*}

Hence
\begin{align*}
    \| \Phi(x(t+1)) - \Phi(x(t)) + \eta \rho^2 \partial \Phi(x(t)) \partial^2(\nabla L_k)[\ndli{x(t)},\ndli{x(t)}]/2\|_2 = \tilde O(\eta^2 \rho^2 + \eta \rho^3).
\end{align*}

Notice finally that by \Cref{lem:direction_stocastic_dl},
\begin{align*}
    &\partial \Phi(x(t)) \partial^2(\nabla L_k)[\ndli{x(t)},\ndli{x(t)}] \\
    =& \partial \Phi(\Phi(x(t))) \partial^2(\nabla L_k)[w_k,w_k] + O(\|x(t) - \Phi(x(t)) \|_2)
    \\=& \projt[\Phi(x(t))] \nabla (\lambda_1(\nabla^2 L_k(\Phi(x(t))))) + O(\|x(t) - \Phi(x(t)) \|_2).
\end{align*}

Hence we have 
\begin{align*}
    \Phi(x(t+1)) - \Phi(x(t))  = -\eta\rho^2 \projt[\Phi(x(t))] \nabla \lambda_1\Bigl(\nabla^2 L_{k_t}\bigl(\Phi(x(t))\bigr)\Bigr)/2 + \tilde O(\eta^2 \rho^2 + \eta \rho^3)
\end{align*}
This completes the proof.
\end{proof}

\subsection{Proof of Theorem~\ref{thm:1sam}}
\label{sec:1sam_proof}

\begin{proof}[Proof of Theorem~\ref{thm:1sam}]
    By~\Cref{thm:1samphase1}, there exists constant $T_1$ independent of $\eta,\rho$, such that there exists $\tphaseone \le T_1 \ln(1/\eta\rho)/\eta$, with probability $1 - O(\rho)$, it holds that 
    \begin{align*}
    &\|x(\tphaseone) - \Phi(x(\tphaseone)) \|_2 = O(\eta\rho). \\
    &\|\Phi(x(\tphaseone)) - \Phi(\xinit) \| = \tilde O(\eta^{1/2} + \rho)
\end{align*}

Hence by~\Cref{thm:1samphase2}, if we consider a translated process with $x'(t) = x(t + \tphaseone)$, we would have for any $T_3$ such that the solution $X$ of~\Cref{eq:traceode} is well defined, we have that for $t = \lceil \frac{T_3}{\eta\rho^2} \rceil$
\begin{align*}
    \|\Phi(x'(t)) - X(\eta\rho^2 t) \|_2 &= O(\eta  \ln(1/\rho))\,.
\end{align*}
This implies for $t$ satisfying $X(\eta\rho^2 (t - \tphaseone))$ is well-defined,
\begin{align*}
    \|\Phi(x(t)) - X(\eta\rho^2 (t - \tphaseone)) \|_2 &=  \tilde O(\eta^{1/2} + \rho).
\end{align*}
Finally, as 
\begin{align*}
    \| X(\eta\rho^2 (t - \tphaseone)) - X(\eta\rho^2 t) \|_2 
    &= O(\eta\rho^2 \tphaseone) = O(\rho \ln(1/\eta\rho)) = \tilde O(\rho).
\end{align*}

We have that
\begin{align*}
    \|\Phi(x(t)) - X(\eta\rho^2 t)\|_2 &= \tilde O(\eta^{1/2} + \rho).
\end{align*}

We also have 
\begin{align*}
    \|x(t) - \Phi(x(t)) \|_2 &= O(\eta\rho).
\end{align*}
by~\Cref{thm:1samphase2}.
\end{proof}

\subsection{Proofs of Corollaries~\ref{corr:loss_ode_1} and~\ref{corr:loss_opt_1}}

\begin{proof}[Proof of \Cref{corr:loss_ode_1}]
\label{app:1samcorr}
We will do Taylor expansion on $\stmaxloss(x)$. By Theorem~\ref{thm:1samphase1} and ~\ref{thm:1samphase2}, we have $\| x(\lceil T_3/\eta\rho^2 \rceil) - X(T_3)\|_2 =  \tilde O(\eta^{1/2} + \rho)$ and $\| \Phi(x(\lceil T_3/\eta\rho^2 \rceil)) - x(\lceil T_3/\eta\rho^2 \rceil)\|_2 =  \tilde O(\eta^{1/2} + \rho)$. For convenience, we denote $x(\lceil T_3/\eta\rho^2 \rceil)$ by $x$.
\begin{align*}
    \stmaxsharpness(x) &= \max_{\|v\|\le 1} \E_k[\rho v^\top \nabla L_k(x) + \rho^2 v^\top \nabla^2 L_k(x)v/2] + O(\rho^3)  
\end{align*}
Since $\max_{\|v\|\le 1} |v^\top  \nabla L_k(x) | = O(\| x - \Phi(x) \|)=   \tilde O(\eta^{1/2} + \rho)$, it holds that,
\begin{align*}
    \stmaxsharpness(x) &= \rho^2 \E_k[ \max_{\|v\|\le 1}  v^\top \nabla^2 L(x)v/2] +O\bigl((\eta^{1/4} + \rho^{1/4})\rho^2\bigr)\\ &= \rho^2 \E_k \max_{\|v\|\le 1}  [v^\top \nabla^2 L(X(T_3))v/2] + O\bigl((\eta^{1/4} + \rho^{1/4})\rho^2\bigr) \\
    &= \rho^2 \mathrm{Tr}(X(T_3))/2 + O\bigl((\eta^{1/4} + \rho^{1/4})\rho^2\bigr)
\end{align*}
\end{proof}

\begin{proof}[Proof of \Cref{corr:loss_opt_1}]

We choose $T_{\eps}$ such that $X(T_{\eps})$ is sufficiently close to $X(\infty)$, such that $\mathrm{Tr}(X(T_{\eps})) \le \mathrm{Tr}(X(\infty)) + \epsilon/2$. By corollary~\ref{corr:loss_ode_1} (let $T_3 = T_{\eps}$), we have for all $\rho,\eta$ such that $(\eta + \rho)\ln(1/\eta\rho)$ is sufficiently small, $\| \stmaxsharpness(x(\lceil T_{\eps} / (\eta\rho^2)\rceil)) - \rho^2 \mathrm{Tr}(X(T))/2 \|_2 \le o(1)$. This further implies $\| \stmaxsharpness(x(\lceil T_{\eps} / (\eta\rho^2)\rceil)) - \rho^2 \mathrm{Tr}(X(\infty))/2  \|_2 \le \epsilon\rho^2/2 + o(1)$. We also have $ L(x(\lceil T_{\eps} / (\eta\rho^2)\rceil)) - \inf_{x \in U'} L(x) = o(1)$. Then we can leverage \Cref{thm:generalreg,thm:stmaxsharp} to get the desired bound. 
\end{proof}

\subsection{Other Omitted Proofs for 1-SAM}
We will use $\ell'(y,y_k)$ and $\ell''(y,y_k)$ to denote $\frac{d \ell(y',y_k)}{d y'}\vert_{y' = y}$ and  $\frac{d^2\ell(y',y_k)}{d y'^2}\vert_{y' = y}$.
 
\begin{lemma}
\label{lem:rank1hessian}
Under \Cref{setting:1sam}, fix $k\in [M]$, for any $p$ satisfying $\ell(f_k(p),y_k) = 0$, we have that
\begin{align*}
    \nabla^2 L_k(p) = \ell''(f_k(p),y_k)\nabla f_k(p)(\nabla f_k(p))^\top \,.
\end{align*}
\end{lemma}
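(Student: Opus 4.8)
The plan is a direct computation via the chain rule: differentiate $L_k = \ell(f_k(\cdot), y_k)$ twice and then specialize to the point $p$, using the structural assumptions on $\ell$ from \Cref{setting:1sam} to kill the first-order term.

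First I would record the gradient. Since $f_k$ and $y'\mapsto \ell(y',y_k)$ are $\mathcal{C}^4$, the chain rule gives, for every $x\in\R^D$,
\begin{align*}
\nabla L_k(x) = \ell'(f_k(x),y_k)\,\nabla f_k(x),
\end{align*}
with $\ell'(y,y_k)$ abbreviating $\frac{d\ell(y',y_k)}{dy'}\vert_{y'=y}$ as in the statement. Next, differentiating once more and treating the right-hand side as the product of the scalar field $x\mapsto \ell'(f_k(x),y_k)$ and the vector field $\nabla f_k$, the product and chain rules yield
\begin{align*}
\nabla^2 L_k(x) = \ell''(f_k(x),y_k)\,\nabla f_k(x)(\nabla f_k(x))^\top + \ell'(f_k(x),y_k)\,\nabla^2 f_k(x).
\end{align*}

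Finally I would specialize to $p$. The hypothesis $\ell(f_k(p),y_k)=0$, together with the fact that $y'\mapsto\ell(y',y_k)$ attains its unique minimum, equal to $0$, at $y'=y_k$ (uniqueness of $\argmin_{y'}\ell(y',y_k)=y_k$ from \Cref{setting:1sam}, and $\ell(y_k,y_k)=0$ since $\Gamma_k$ is a zero-loss set, cf.\ \Cref{cond:rank1}), forces $f_k(p)$ to be that minimizer, i.e.\ $f_k(p)=y_k$. By the first-order optimality condition at the interior minimizer $y_k$ of the smooth one-dimensional function $\ell(\cdot,y_k)$, we get $\ell'(f_k(p),y_k)=\ell'(y_k,y_k)=0$. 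Substituting into the Hessian formula above annihilates the second term and leaves exactly $\nabla^2 L_k(p)=\ell''(f_k(p),y_k)\,\nabla f_k(p)(\nabla f_k(p))^\top$, as claimed.

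The computation itself is entirely routine; the only step that requires care is the vanishing of the first-order coefficient $\ell'(f_k(p),y_k)$, which is precisely where the assumption that $\ell(\cdot,y_k)$ has its unique minimizer at $y'=y_k$ is used. I would also remark that this formula immediately gives $\mathrm{rank}(\nabla^2 L_k(p))\le 1$, with equality whenever $\nabla f_k(p)\ne 0$ and $\ell''(f_k(p),y_k)>0$ — the latter being guaranteed by the assumption $\frac{\partial^2\ell(y',y_k)}{(\partial y')^2}\vert_{y'=y_k}>0$ in \Cref{setting:1sam} — which is how this lemma feeds into \Cref{thm:rank_1_manifold}.
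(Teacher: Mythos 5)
Your proposal is correct and follows essentially the same route as the paper's proof: compute $\nabla L_k = \ell'(f_k(\cdot),y_k)\nabla f_k$, differentiate once more by the product rule, and kill the $\ell'(f_k(p),y_k)\nabla^2 f_k(p)$ term because $\ell(f_k(p),y_k)=0$ forces $f_k(p)$ to be the (unique, interior) minimizer $y_k$ of $\ell(\cdot,y_k)$, whence $\ell'(f_k(p),y_k)=0$. The paper states this last implication without elaboration, whereas you spell out the optimality argument; both versions tacitly use the normalization that the minimum value of $\ell(\cdot,y_k)$ is $0$, which the paper adopts without loss of generality.
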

\begin{proof}[Proof of~\Cref{lem:rank1hessian}]
  $\ell(f_k(p),y_k) = 0$ implies $\ell'(f_k(p),y_k) = 0$. Then by Taylor Expansion,
 \begin{align*}
     \nabla^2 L_k(p) &= \nabla^2_p \ell(f_k(p), y_k) \\
     &= \partial_p [\ell'(f_k(p),y_k) \nabla f_k(p)] \\
     &=  \ell''(f_k(p),y_k) \nabla f_k(p) \nabla f_k(p)^\top  + 
     \ell'(f_k(p),y_k) \nabla^2 f_k(p) \\
     &= \ell''(f_k(p),y_k) \nabla f_k(p) \nabla f_k(p)^\top  \,.
 \end{align*}
 This concludes the proof.
\end{proof}

 \begin{proof}[Proof of \Cref{lem:informal_direction_stocastic_dl}]
 By~\Cref{lem:rank1hessian}, as $L(p) =\frac{1}{M} \sum_{k = 1}^M L_k(p)$, we have 
 \begin{align*}
     \nabla^2 L(p) = \frac{1}{M}  \sum_{k=1}^M  \frac{\partial ^2\ell(y',y_k)}{(\partial y')^2}\vert_{y'=f_k(x)} \nabla f_k(p) \nabla f_k(p)^\top \,.
 \end{align*}
 
By definition of $\Gamma$ in \Cref{setting:1sam}, we have for any $p \in \Gamma$, $\{\nabla f_k(p)\}_{k=1}^n$ are linearly independent, which implies that $\nabla f_k(p) \neq 0$  for any $p \in \Gamma$.

 For any $p \in \Gamma$, as $\frac{\nabla f_k(p)}{\| \nabla f_k(p)\|}$ is well defined and continuous at $p$, there exists a open ball $V$ containing $p$ such that for any $x \in V$, $\norm{ \nabla f_k(x)}_2\ge C_1>0$  and $\| \nabla[\frac{\nabla f_k(x)}{\| \nabla f_k(x)\|}]\|_2\le C_2$ for some constants $C_1$ and $C_2$.
 
 Suppose $\nabla L_k(x) \neq 0$, then as by Taylor Expansion,
 \begin{align*}
     \nabla L_k(x) = \ell'(f_k(x),y_k) \nabla f_k(x) \,.
 \end{align*}
 We have $\frac{ \nabla L_k(x)}{\| \nabla L_k(x) \|} = \frac{\nabla f_k(x)}{\| \nabla f_k(x) \|} = \frac{\nabla f_k(p)}{\| \nabla f_k(p) \|} + C_2\|x -p\|$, which completes the proof. 
 \end{proof}
We note that the alignment result in \Cref{lem:informal_direction_stocastic_dl} is not directly used in our proof. Instead, we use its generalized version \Cref{lem:direction_stocastic_dl} which holds under  holds under a more general condition than \Cref{setting:1sam}, namely \Cref{cond:rank1}.

\section{Technical Lemmas}

\begin{lemma}[Corollary 4.3.15 in \cite{horn2012matrix}]
\label{lem:eigendiff}
Let $\Sigma, \hat \Sigma\in R^{D \times D}$ be symmetric and non-negative with eigenvalues $\lambda_1 \ge...\ge \lambda_D$ and $\hat \lambda_1 \ge...\ge \hat \lambda_D$, then for any $i$, \begin{align*}
    |\hat \lambda_i - \lambda_i | \le \| \Sigma - \hat \Sigma \|_2
\end{align*}
\end{lemma}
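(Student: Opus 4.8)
The statement is Weyl's perturbation inequality for eigenvalues, and the plan is to derive it from the Courant--Fischer min-max characterization of the eigenvalues of a symmetric matrix. Write $E \triangleq \hat\Sigma - \Sigma$, which is again symmetric, and note $\|E\|_2$ is its spectral norm, so that $|x^\top E x| \le \|E\|_2$ for every unit vector $x$. (We will not actually use non-negativity of $\Sigma, \hat\Sigma$; only symmetry is needed.) Recall that for a symmetric matrix $A$ with eigenvalues $\mu_1 \ge \dots \ge \mu_D$ one has, for each $i$,
\begin{align*}
\mu_i = \max_{\substack{S \subseteq \R^D \\ \dim S = i}} \ \min_{\substack{x \in S \\ \|x\|_2 = 1}} x^\top A x .
\end{align*}

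First I would fix $i$ and let $S^\star$ be an $i$-dimensional subspace achieving the maximum for $\Sigma$, i.e. $\min_{x \in S^\star, \|x\|_2=1} x^\top \Sigma x = \lambda_i$. For every unit vector $x \in S^\star$ we have $x^\top \hat\Sigma x = x^\top \Sigma x + x^\top E x \le x^\top \Sigma x + \|E\|_2$; taking the minimum over unit $x \in S^\star$ on both sides gives $\min_{x \in S^\star, \|x\|_2=1} x^\top \hat\Sigma x \le \lambda_i + \|E\|_2$. Since $\hat\lambda_i$ is the \emph{maximum} over all $i$-dimensional subspaces of this inner minimum, restricting to $S^\star$ only decreases it, so $\hat\lambda_i \le \lambda_i + \|E\|_2$. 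Swapping the roles of $\Sigma$ and $\hat\Sigma$ (and using $\|\Sigma - \hat\Sigma\|_2 = \|E\|_2$) yields $\lambda_i \le \hat\lambda_i + \|E\|_2$. Combining the two inequalities gives $|\hat\lambda_i - \lambda_i| \le \|E\|_2 = \|\Sigma - \hat\Sigma\|_2$, as claimed.

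There is essentially no hard step here: the only point requiring a moment's care is the direction of the inequality in the min-max argument (restricting the outer maximum to a single subspace $S^\star$ produces a lower bound for $\hat\lambda_i$, which is why one bounds the inner minimum over that particular subspace), together with the elementary fact that $|x^\top E x| \le \|E\|_2$ for unit $x$ when $E$ is symmetric. Alternatively, one could cite this directly as Corollary 4.3.15 (Weyl) of~\cite{horn2012matrix} without reproving it, since the excerpt already invokes it as a known technical lemma; the min-max argument above is included only for completeness.
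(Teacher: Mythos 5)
The paper does not prove this lemma at all: it is stated purely as a citation to Corollary 4.3.15 of \cite{horn2012matrix} (Weyl's perturbation inequality), so your instinct that a citation suffices matches what the paper actually does, and the Courant--Fischer route is the standard way to prove it if one wants a self-contained argument. Your observation that non-negativity is not needed is also correct.

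However, as written, the key step of your one-sided bound does not follow. From the max-min characterization, restricting the outer maximum to the single subspace $S^\star$ gives a \emph{lower} bound on $\hat\lambda_i$, namely $\hat\lambda_i \ge \min_{x\in S^\star,\,\|x\|_2=1} x^\top \hat\Sigma x$ --- a fact you state correctly in your closing parenthetical. But in the main argument you derive the \emph{upper} bound $\min_{x\in S^\star} x^\top \hat\Sigma x \le \lambda_i + \|E\|_2$ and then conclude $\hat\lambda_i \le \lambda_i + \|E\|_2$; chaining $\hat\lambda_i \ge A$ with $A \le B$ yields nothing about whether $\hat\lambda_i \le B$. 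The repair is a sign flip: bound the restricted minimum from below, $x^\top\hat\Sigma x \ge x^\top\Sigma x - \|E\|_2$, to get $\hat\lambda_i \ge \lambda_i - \|E\|_2$, and then obtain the other inequality by swapping the roles of $\Sigma$ and $\hat\Sigma$ exactly as you already propose. (Alternatively, if you want the upper bound $\hat\lambda_i \le \lambda_i + \|E\|_2$ directly, use the dual min-max form $\mu_i = \min_{\dim S = D-i+1}\max_{x\in S,\|x\|_2=1} x^\top A x$ with $S^\star$ optimal for $\Sigma$.) With that one-line correction the proof is complete and correct.
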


\begin{definition}[Unitary invariant norms]
A matrix norm $\|\cdot \|_{*}$ on the space of matrices in $\mathbb{R}^{p \times d}$ is unitary invariant if for any matrix $K \in \mathbb{R}^{p \times d}$, $\norm{ U K W }_{*} = \norm{ K }_{*}$ for any unitary matrices $U \in \mathbb{R}^{p \times p}, W \in \mathbb{R}^{d \times d}.$
\end{definition}
\begin{theorem}\label{thm:Davis-Kahan} [Davis-Kahan $\sin (\theta)$ theorem \citep{davis1970rotation}]
 Let $\Sigma, \hat{\Sigma} \in \mathbb{R}^{p \times p}$ be symmetric, with eigenvalues $\lambda_{1} \geq \ldots \geq \lambda_{p}$ and $\hat{\lambda}_{1} \geq \ldots \geq \hat{\lambda}_{p}$ respectively. Fix $1 \leq r \leq s \leq p$, let $d\triangleq s-r+1$ and let $V=\left(v_{r}, v_{r+1}, \ldots, v_{s}\right) \in \mathbb{R}^{p \times d}$ and $\hat{V}=\left(\hat{v}_{r}, \hat{v}_{r+1}, \ldots, \hat{v}_{s}\right) \in \mathbb{R}^{p \times d}$ have orthonormal columns satisfying $\Sigma v_{j}=\lambda_{j} v_{j}$ and $\hat{\Sigma} \hat{v}_{j}=\hat{\lambda}_{j} \hat{v}_{j}$ for $j=r, r+1, \ldots, s .$ 
Define $\Delta\triangleq \min \left\{\max\{0, \lambda_s- \hat \lambda_{s+1}\}, \max\{0,  \hat \lambda_{r-1}-\lambda_r\}\right\}$, 
 where $\hat{\lambda}_{0}\triangleq \infty$ and $\hat{\lambda}_{p+1}\triangleq -\infty$, we have for any unitary invariant norm $\| \cdot \|_{*}$,
$$
\Delta \cdot \|\sin \Theta(\hat{V}, V)\|_{*}  \leq \|\hat{\Sigma}-\Sigma\|_{*}.
$$
Here $\Theta(\hat{V}, V) \in \mathbb{R}^{d \times d},$  
with $\Theta(\hat{V}, V)_{j, j} = \arccos \sigma_j$ for any $j \in [d]$ and $\Theta(\hat{V}, V)_{i, j} = 0$ for all $i \ne j \in [d]$. $\sigma_1 \ge \sigma_2 \ge \cdots \ge \sigma_d$ denotes the singular values of $\Hat{V}^{\top} V.$ $[\sin \Theta]_{ij}$ is defined as $\sin (\Theta_{ij})$. 
\end{theorem}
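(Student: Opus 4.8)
The plan is to turn the inequality into a perturbation estimate for a Sylvester equation and then invoke the sharp bound on the inverse of a Sylvester operator with separated Hermitian spectra. If $\Delta=0$ there is nothing to prove, so assume $\Delta>0$; then necessarily $\hat\lambda_{r-1}-\lambda_r\ge\Delta$ and $\lambda_s-\hat\lambda_{s+1}\ge\Delta$. First I would complete the two eigenbases: let $V_2\in\mathbb R^{p\times(p-d)}$ collect the eigenvectors of $\Sigma$ with indices outside $\{r,\dots,s\}$, and $\hat V_2$ likewise for $\hat\Sigma$, so that $[\,V\ V_2\,]$ and $[\,\hat V\ \hat V_2\,]$ are orthogonal matrices. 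By the CS decomposition of the pair $(V,\hat V)$, the singular values of $\hat V_2^{\top}V$ are exactly $\sin\theta_1,\dots,\sin\theta_d$ (with zeros appended when $p<2d$), so $\|\sin\Theta(\hat V,V)\|_{*}=\|\hat V_2^{\top}V\|_{*}$ for every unitarily invariant norm; hence it suffices to bound $\|\hat V_2^{\top}V\|_{*}$.

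Next I would derive the Sylvester identity. Write $\Lambda_1=\mathrm{diag}(\lambda_r,\dots,\lambda_s)$, $\hat\Lambda_2=\mathrm{diag}(\hat\lambda_i:i\notin\{r,\dots,s\})$, $E=\hat\Sigma-\Sigma$, and $Z=\hat V_2^{\top}V$. From $\Sigma V=V\Lambda_1$ and $\hat V_2^{\top}\hat\Sigma=\hat\Lambda_2\hat V_2^{\top}$,
\begin{equation*}
\hat\Lambda_2 Z-Z\Lambda_1=\hat V_2^{\top}\hat\Sigma V-\hat V_2^{\top}\Sigma V=\hat V_2^{\top}E\,V .
\end{equation*}
Since $\hat V_2$ and $V$ have orthonormal columns, $\|\hat V_2^{\top}E\,V\|_{*}\le\|E\|_{*}$, so the theorem follows once we show $\|Z\|_{*}\le\Delta^{-1}\|\hat V_2^{\top}E\,V\|_{*}$, i.e. that the Sylvester operator $\mathcal T:\,W\mapsto\hat\Lambda_2 W-W\Lambda_1$ satisfies $\|\mathcal T^{-1}\|_{*\to*}\le\Delta^{-1}$.

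For the operator bound I would first record the separation: entrywise $Z_{ij}=(\hat V_2^{\top}E V)_{ij}/(\hat\lambda_i-\lambda_j)$, and the inequalities above give $|\hat\lambda_i-\lambda_j|\ge\Delta$ for every $i\notin\{r,\dots,s\}$ and $j\in\{r,\dots,s\}$; indeed $\mathrm{spec}(\Lambda_1)\subseteq[\lambda_s,\lambda_r]$ while $\mathrm{spec}(\hat\Lambda_2)$ stays at distance $\ge\Delta$ on both sides of this interval. Then I would use a Fourier representation: choose $g:\mathbb R\to\mathbb R$ with $g(t)=1/t$ on $\{|t|\ge\Delta\}$ and $g=\widehat\mu$ for a finite measure $\mu$ of total mass $\|\mu\|\le\Delta^{-1}$; since $\hat\Lambda_2,\Lambda_1$ are real diagonal one gets $Z=\int_{\mathbb R}e^{is\hat\Lambda_2}\,(\hat V_2^{\top}E V)\,e^{-is\Lambda_1}\,d\mu(s)$, and unitarity of the exponentials together with the ideal property $\|XYW\|_{*}\le\|X\|_{\mathrm{op}}\|Y\|_{*}\|W\|_{\mathrm{op}}$ yields $\|Z\|_{*}\le\|\mu\|\,\|\hat V_2^{\top}E V\|_{*}\le\Delta^{-1}\|E\|_{*}$, which is the claim. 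An elementary alternative is to split $\hat\Lambda_2$ into its part above $\lambda_r+\Delta$ and its part below $\lambda_s-\Delta$, solve each block equation by the exponential integral $\int_0^\infty e^{-A't}C_\pm e^{B't}\,dt$ after centering the gap, and recombine the row blocks.

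The hard part is precisely the bound $\|\mathcal T^{-1}\|_{*\to*}\le\Delta^{-1}$, uniformly over all unitarily invariant norms: the naive resolvent contour-integral estimate only gives something of order $\Delta^{-2}$ times a contour length, so one must genuinely exploit that both diagonal matrices are Hermitian with real, separated spectra — equivalently, that $t\mapsto1/t$ on $\{|t|\ge\Delta\}$ extends to a function of Fourier-algebra norm $\Delta^{-1}$, which is the analytic core of the sharp Davis–Kahan constant. In the elementary route, the remaining care is in passing from the two per-block estimates to a single unitarily-invariant-norm bound on $Z$ (via pinching/triangle inequalities for such norms), which is why the Fourier route is cleaner to write out in full.
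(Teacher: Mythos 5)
The paper does not prove this statement: it is quoted verbatim as a classical result of \citet{davis1970rotation} in the technical-lemmas appendix, so there is no in-paper proof to compare against. Judged on its own, your skeleton (CS decomposition to reduce $\|\sin\Theta\|_*$ to $\|\hat V_2^\top V\|_*$, the Sylvester identity $\hat\Lambda_2 Z - Z\Lambda_1 = \hat V_2^\top E V$, and a norm bound on the inverse Sylvester operator) is the standard and correct route, and your reduction steps are fine.

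The genuine gap is in the step you yourself flag as the analytic core. The claim that $t\mapsto 1/t$ on $\{|t|\ge\Delta\}$ extends to $\widehat\mu$ for a measure with $\|\mu\|\le\Delta^{-1}$ is false: by the Sz.--Nagy extremal result used by Bhatia--Davis--McIntosh, the minimal total variation of such a measure is $\pi/(2\Delta)$. So the Fourier route, as written, only proves the theorem with an extra factor of $\pi/2$ (which is indeed the sharp constant for the \emph{general} $\Delta$-separation where spectra may interlace, but not for the configuration here). To get the constant $1$ you must use the stronger structure you already recorded: $\mathrm{spec}(\Lambda_1)\subseteq[\lambda_s,\lambda_r]$ while $\mathrm{spec}(\hat\Lambda_2)$ avoids the $\Delta$-enlargement of that interval. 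Center at $c=(\lambda_r+\lambda_s)/2$ and set $\rho=(\lambda_r-\lambda_s)/2$; then $\|(\hat\Lambda_2-cI)^{-1}\|_{\mathrm{op}}\le(\rho+\Delta)^{-1}$ and $\|\Lambda_1-cI\|_{\mathrm{op}}\le\rho$, and the Neumann series
\begin{align*}
Z=\sum_{n\ge 0}(\hat\Lambda_2-cI)^{-(n+1)}\,(\hat V_2^\top E V)\,(\Lambda_1-cI)^{n}
\end{align*}
converges and gives $\|Z\|_*\le\sum_{n\ge0}(\rho+\Delta)^{-(n+1)}\rho^{n}\|E\|_*=\Delta^{-1}\|E\|_*$ for every unitarily invariant norm, with no block splitting and hence none of the recombination difficulty you worry about in your ``elementary alternative.'' With that substitution your argument closes; as written, neither of your two routes actually delivers the stated constant.
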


\begin{lemma}[Azuma-Hoeffding Bound]
\label{lem:azuma}
Suppose $\{Z_n\}_{n\in \mathbb{N}}$ is a super-martingale, suppose $-\alpha  \le Z_{i+1} - Z_i \le \beta$, then for all $n > 0, a >0$, we have
\begin{align*}
    \prob(Z_n - Z_0 \ge a) \le 2\exp(-a^2/(2n(\alpha + \beta)^2))
\end{align*}

\end{lemma}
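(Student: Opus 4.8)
The plan is to follow the standard exponential-moment (Chernoff) argument, adapted to supermartingales. First I would set $X_i \triangleq Z_i - Z_{i-1}$ and let $\mathcal{F}_i$ denote the natural filtration of $\{Z_j\}$; the supermartingale hypothesis gives $\mathbb{E}[X_i \mid \mathcal{F}_{i-1}] \le 0$, and by assumption $X_i \in [-\alpha,\beta]$, an interval of width $\alpha+\beta$. For any $s>0$, Markov's inequality applied to $e^{s(Z_n-Z_0)}$ yields $\prob(Z_n - Z_0 \ge a) \le e^{-sa}\,\mathbb{E}[e^{s(Z_n-Z_0)}]$, so it suffices to control the moment generating function $\mathbb{E}[e^{s\sum_{i=1}^n X_i}]$.

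The key step is a conditional Hoeffding-type bound: for each $i$, $\mathbb{E}[e^{sX_i}\mid\mathcal{F}_{i-1}] \le \exp\bigl(s^2(\alpha+\beta)^2/8\bigr)$. To prove this I would write $\mu_i \triangleq \mathbb{E}[X_i\mid\mathcal{F}_{i-1}] \le 0$ and note that, since $s>0$ and $X_i \le X_i - \mu_i$, we have $e^{sX_i}\le e^{s(X_i-\mu_i)}$; the centered variable $X_i-\mu_i$ has conditional mean zero and still lies in an interval of width $\alpha+\beta$, so the classical Hoeffding lemma (obtained by bounding $e^{st}$ by its chord on the interval and a second-order estimate of the resulting cumulant) applies and gives the claim. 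Peeling off the last factor via the tower property, $\mathbb{E}[e^{s\sum_{i=1}^n X_i}] = \mathbb{E}\bigl[e^{s\sum_{i=1}^{n-1}X_i}\,\mathbb{E}[e^{sX_n}\mid\mathcal{F}_{n-1}]\bigr] \le e^{s^2(\alpha+\beta)^2/8}\,\mathbb{E}[e^{s\sum_{i=1}^{n-1}X_i}]$, and iterating $n$ times gives $\mathbb{E}[e^{s(Z_n-Z_0)}]\le \exp\bigl(ns^2(\alpha+\beta)^2/8\bigr)$.

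Combining the two bounds, $\prob(Z_n-Z_0\ge a)\le \exp\bigl(-sa + ns^2(\alpha+\beta)^2/8\bigr)$ for every $s>0$; choosing $s = 4a/\bigl(n(\alpha+\beta)^2\bigr)$ makes the exponent equal to $-2a^2/\bigl(n(\alpha+\beta)^2\bigr)$, which is at most $-a^2/\bigl(2n(\alpha+\beta)^2\bigr)$, so in particular the right-hand side is below $2\exp\bigl(-a^2/(2n(\alpha+\beta)^2)\bigr)$ (the prefactor $2$ in the stated bound being harmless slack), which is exactly the desired inequality. The only mildly delicate point is the reduction from the supermartingale case to the mean-zero Hoeffding lemma, namely the inequality $e^{sX_i}\le e^{s(X_i-\mu_i)}$ that relies on $s>0$ together with $\mu_i\le 0$; everything after that is a routine Chernoff optimization.
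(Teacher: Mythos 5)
Your proof is correct, and it is the standard Chernoff-plus-conditional-Hoeffding-lemma argument for Azuma's inequality (the reduction to the mean-zero case via $e^{sX_i}\le e^{s(X_i-\mu_i)}$ for $s>0$, $\mu_i\le 0$ is handled properly, and your optimized exponent $-2a^2/\bigl(n(\alpha+\beta)^2\bigr)$ is in fact strictly stronger than the stated $-a^2/\bigl(2n(\alpha+\beta)^2\bigr)$ with its slack prefactor of $2$). The paper states this lemma as a known technical result without giving a proof, so there is no alternative argument to compare against.
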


\begin{lemma}[Azuma-Hoeffding Bound, Vector Form, \citet{hayes2003large}]
\label{lem:azuma_vector}
Suppose $\{Z_n\}_{n\in \mathbb{N}}$ is a $\R^D$-valued martingale, suppose $ \| Z_{i+1} - Z_i\|_2 \le \sigma$, then for all $n > 0, a >0$, we have
\begin{align*}
    \prob(\norm{Z_n - Z_0}_2\ge \sigma (1+a)) \le 2\exp(1 - a^2/ 2n).
\end{align*}

In other words, for any $0<\delta<1$, with probability at least $1- \delta$, we have that
\begin{align*}
	\norm{Z_n-Z_0}_2 \le \sigma\left(1+ \sqrt{2n \log \frac{2e}{\delta}}\right)\le 2\sigma \sqrt{2n\log\frac{2e}{\delta}}.
\end{align*}
\end{lemma}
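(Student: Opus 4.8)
The statement is the dimension-free vector-valued Azuma--Hoeffding inequality of \citet{hayes2003large}, so the plan is to reproduce that argument. First I would reduce to a normalized form: replacing $Z_k$ by $(Z_k - Z_0)/\sigma$, we may assume $Z_0 = 0$ and $\sigma = 1$, so $(Z_k)$ is an $\mathbb{R}^D$-valued martingale with $\norm{Z_k - Z_{k-1}}_2 \le 1$, and it suffices to prove $\prob(\norm{Z_n}_2 \ge 1 + a) \le 2\exp(1 - a^2/(2n))$ for all $a > 0$. The ``in other words'' claims then follow by setting the right-hand side equal to $\delta$, solving for $a = \sqrt{2n\log(2e/\delta)}$, and noting $1 \le 2n\log(2e/\delta)$ whenever $n \ge 1$ and $0 < \delta < 1$.

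The main engine is a Chernoff-type bound using the rotation-invariant convex potential $\Psi_\lambda(x) \triangleq \cosh(\lambda\norm{x}_2)$ for $\lambda > 0$. The heart of the proof is a one-step drift estimate for this potential along the martingale: with $\mathcal{F}_{k-1}$ the natural filtration and $Y_k \triangleq Z_k - Z_{k-1}$ (so $\E[Y_k \mid \mathcal{F}_{k-1}] = 0$ and $\norm{Y_k}_2 \le 1$), one shows a bound of the form $\E[\Psi_\lambda(Z_{k-1} + Y_k) \mid \mathcal{F}_{k-1}] \le e^{\lambda^2/2}\,\Psi_\lambda(Z_{k-1})$ valid once $\norm{Z_{k-1}}_2$ is bounded away from zero, supplemented by the trivial deterministic bound $\Psi_\lambda(Z_{k-1} + Y_k) \le \cosh(\lambda(\norm{Z_{k-1}}_2 + 1))$ to absorb the small-norm regime. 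In dimension one the drift estimate is elementary: Hoeffding's lemma gives $\E[e^{\pm\lambda Y_k} \mid \mathcal{F}_{k-1}] \le e^{\lambda^2/2}$, and the hyperbolic addition formula $\cosh(a + b) + \cosh(a - b) = 2\cosh a\cosh b$ turns $\E[\cosh(\lambda(w + Y_k)) \mid \mathcal{F}_{k-1}]$ into at most $e^{\lambda^2/2}\cosh(\lambda w)$. Iterating the drift from $Z_0 = 0$ then yields an exponential-moment bound $\E[\Psi_\lambda(Z_n)] \le e\cdot e^{n\lambda^2/2}$, where the extra factor $e$ bookkeeps the steps made while $\norm{Z_k}_2$ is small and is exactly what produces the $e^1$ in the final inequality.

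Given the exponential-moment bound, I would conclude by Markov's inequality together with $\cosh t \ge \tfrac12 e^{t}$: since $\Psi_\lambda$ is increasing in $\norm{\cdot}_2$, for every $\lambda > 0$,
\begin{align*}
\prob\big(\norm{Z_n}_2 \ge 1 + a\big) \le \frac{\E[\Psi_\lambda(Z_n)]}{\cosh(\lambda(1 + a))} \le 2e\cdot e^{\,n\lambda^2/2 \,-\, \lambda(1 + a)},
\end{align*}
and taking $\lambda = (1 + a)/n$ bounds this by $2e\cdot e^{-(1+a)^2/(2n)} \le 2e^{\,1 - a^2/(2n)}$, using $(1 + a)^2 \ge a^2$.

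The step I expect to be the main obstacle is the vector-valued one-step drift estimate. In $\mathbb{R}^D$ the map $Y \mapsto \norm{w + Y}_2$ is not affine, so the one-dimensional addition-formula computation fails, and one genuinely needs the multidimensional convexity of $x \mapsto \cosh(\lambda\norm{x}_2)$ --- for instance via the variational identity $\cosh(\lambda\norm{v}_2) = \sup_{\norm{u}_2 \le 1}\cosh(\lambda\langle u, v\rangle)$ combined with a careful control of the interchange of supremum and conditional expectation, which is precisely the delicate part of \citet{hayes2003large}. A cruder route --- an $\varepsilon$-net over the unit sphere reducing to scalar Azuma in each fixed direction --- would work but replace the dimension-free constant by one carrying an unwanted $\mathrm{poly}(D)$ factor, so it does not suffice for the stated bound.
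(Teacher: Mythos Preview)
The paper does not prove this lemma at all: it is stated in the ``Technical Lemmas'' section and attributed directly to \citet{hayes2003large}, with no argument given. So there is no paper proof to compare your proposal against.

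Your sketch is a faithful outline of the Hayes argument and identifies the correct exponential potential $\Psi_\lambda(x) = \cosh(\lambda\norm{x}_2)$ and the correct difficulty (the one-step drift estimate in the vector case). The ``in other words'' deduction is routine and you handle it correctly. Since the paper treats this as a black-box citation, your level of detail already exceeds what the paper provides; if you wanted to match the paper, a single sentence pointing to the reference would suffice.
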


\begin{lemma}[Discrete Gronwall Inequality, \citet{borkar2009stochastic}]\label{lem:discrete_gronwall} 
Let $\{x(t)\}_{t\in\mathbb{N}}$ be a sequence of nonnegative real numbers, $\{a_n\}_{n\in\mathbb{N}}$ be a sequence of positive real numbers and $C,L>0$ scalars such that for all $n$,
\begin{align*}
	x(t)\le C+L\sum_{n=0}^{t-1} a_n x(n).
\end{align*}
Then for $T_t= \sum_{n=0}^ta_n$, it holds that $x(t+1)\le Ce^{LT_t}$.
	
\end{lemma}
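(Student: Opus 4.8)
\textbf{Proof plan for the Discrete Gronwall Inequality (\Cref{lem:discrete_gronwall}).}

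The plan is to prove the bound $x(t+1)\le Ce^{LT_t}$ by induction on $t$, exploiting the elementary inequality $1+u\le e^u$ valid for all real $u$. First I would set up the induction by strengthening the claim slightly: I claim that for every $t\ge 0$,
\begin{align*}
    x(t)\le C\prod_{n=0}^{t-1}(1+La_n),
\end{align*}
with the empty product (at $t=0$) interpreted as $1$, so that $x(0)\le C$, which is immediate from the hypothesis with $t=0$ (the sum on the right is empty). This product form is the natural discrete analogue of the integrating-factor trick for the continuous Gronwall inequality, and it is cleaner to induct on than the exponential form because it avoids ever having to undo an exponential.

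Next I would carry out the inductive step. Assume the product bound holds for all indices $\le t$. By hypothesis,
\begin{align*}
    x(t+1)\le C+L\sum_{n=0}^{t} a_n x(n).
\end{align*}
Substitute the inductive bound $x(n)\le C\prod_{m=0}^{n-1}(1+La_m)$ into each term of the sum; then the right-hand side becomes $C\bigl(1+L\sum_{n=0}^{t}a_n\prod_{m=0}^{n-1}(1+La_m)\bigr)$. The key observation is the telescoping identity
\begin{align*}
    1+L\sum_{n=0}^{t}a_n\prod_{m=0}^{n-1}(1+La_m)=\prod_{n=0}^{t}(1+La_n),
\end{align*}
which itself follows by a one-line induction on $t$ (the step $n\mapsto n+1$ adds $La_{n+1}\prod_{m=0}^{n}(1+La_m)$, turning $\prod_{m=0}^{n}(1+La_m)$ into $(1+La_{n+1})\prod_{m=0}^{n}(1+La_m)$). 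Hence $x(t+1)\le C\prod_{n=0}^{t}(1+La_n)$, completing the induction.

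Finally I would convert the product bound to the stated exponential bound: since each $a_n>0$ and $L>0$, applying $1+La_n\le e^{La_n}$ factor by factor gives
\begin{align*}
    x(t+1)\le C\prod_{n=0}^{t}(1+La_n)\le C\prod_{n=0}^{t}e^{La_n}=C\exp\Bigl(L\sum_{n=0}^{t}a_n\Bigr)=Ce^{LT_t},
\end{align*}
which is exactly the claim. There is no real obstacle here — the only mild subtlety is getting the empty-product and index bookkeeping right in the base case and in the telescoping identity, and making sure the nonnegativity of $x(n)$ (so that substituting an upper bound into $L\sum a_n x(n)$ only increases it) and the positivity of $a_n$ and $L$ (so the inequality $1+La_n\le e^{La_n}$ is applied in the right direction) are invoked explicitly. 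Everything else is a routine two-layer induction.
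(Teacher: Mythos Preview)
Your proof is correct; the product-form induction with the telescoping identity $La_nP_n=P_{n+1}-P_n$ and the bound $1+u\le e^u$ is the standard route and works exactly as you describe. The paper does not give its own proof of this lemma at all --- it simply cites \citet{borkar2009stochastic} --- so there is nothing to compare against; your argument supplies what the paper delegates to the reference. One tiny remark: in your final paragraph you attribute the legitimacy of substituting the upper bound into the sum to ``nonnegativity of $x(n)$,'' but really what makes $x(n)\le B_n\Rightarrow La_nx(n)\le La_nB_n$ valid is the positivity of $La_n$, not of $x(n)$; the nonnegativity hypothesis is not actually used anywhere in your argument.
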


\begin{lemma}[\cite{magnus1985differentiating}]\label{lem:top_eigenvector}
	Let $A:\R^D\to\R^{D\times D}$ be any $\mathcal{C}^1$ symmetric matrix function and $x^*\in\R^D$ satisfying $\lambda_1(A(x^*)) > \lambda_2(A(x^*))$ and $v_1$ be the top eigenvector of $A(x^*)$. It holds that $\nabla \lambda_1(A(x))\vert_{x=x^*} = \nabla (v_1^\top A(x)v_1)\vert_{x=x^*}$.
\end{lemma}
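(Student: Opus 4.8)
\textbf{Proof proposal for Lemma~\ref{lem:top_eigenvector}.}

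The plan is to exploit the eigenvector/eigenvalue relations and the normalization constraint, differentiating carefully at the point $x^*$ where the top eigenvalue is simple. First I would set up notation: write $\lambda_1(x) = \lambda_1(A(x))$ and let $v_1(x)$ be a $\continuous{1}$ choice of unit-norm top eigenvector of $A(x)$ near $x^*$, which exists precisely because $\lambda_1(A(x^*)) > \lambda_2(A(x^*))$ (a standard consequence of, e.g., the Davis--Kahan theorem~\Cref{thm:Davis-Kahan} together with the implicit function theorem applied to the eigenprojection). By definition $A(x) v_1(x) = \lambda_1(x) v_1(x)$ and $v_1(x)^\top v_1(x) = 1$ in a neighborhood of $x^*$.

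The key computation is to differentiate the Rayleigh-quotient identity $\lambda_1(x) = v_1(x)^\top A(x) v_1(x)$, which holds by contracting the eigenvector equation with $v_1(x)^\top$ and using the unit-norm constraint. Taking the gradient in $x$ and evaluating at $x^*$ gives
\begin{align*}
	\nabla \lambda_1(A(x))\big|_{x=x^*}
	&= \bigl(\partial v_1(x)\big|_{x^*}\bigr)^\top \bigl(A(x^*) v_1 + A(x^*)^\top v_1\bigr) + \nabla\bigl(v_1^\top A(x) v_1\bigr)\big|_{x=x^*}\\
	&= 2\lambda_1(A(x^*)) \bigl(\partial v_1(x)\big|_{x^*}\bigr)^\top v_1 + \nabla\bigl(v_1^\top A(x) v_1\bigr)\big|_{x=x^*},
\end{align*}
where in the last step I used $A(x^*) v_1 = \lambda_1(A(x^*)) v_1$ and symmetry of $A(x^*)$. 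Now differentiating the constraint $v_1(x)^\top v_1(x) \equiv 1$ at $x^*$ yields $\bigl(\partial v_1(x)\big|_{x^*}\bigr)^\top v_1 = 0$, so the first term vanishes and we are left with $\nabla \lambda_1(A(x))\big|_{x=x^*} = \nabla(v_1^\top A(x) v_1)\big|_{x=x^*}$, which is exactly the claim (here $v_1$ is the fixed vector, so the right-hand side differentiates only through the explicit $x$-dependence of $A$).

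The main obstacle — really the only subtlety — is justifying the existence of a $\continuous{1}$ local eigenvector selection $v_1(x)$; once that is in hand the rest is a two-line application of the product rule. I would handle this by noting that the spectral projection $P_1(x)$ onto the top eigenspace is $\continuous{1}$ near $x^*$ (it equals a contour integral of the resolvent, and the simple-eigenvalue gap guarantees the contour stays valid), and then setting $v_1(x) = P_1(x) v_1 / \|P_1(x) v_1\|_2$, which is $\continuous{1}$ and unit-norm with $v_1(x^*) = v_1$. An alternative, fully elementary route avoiding explicit eigenvector selection is to use the first-order perturbation formula $\partial_u \lambda_1(A(x))\big|_{x^*} = v_1^\top (\partial_u A(x)\big|_{x^*}) v_1$ directly for each direction $u$, which is itself the standard simple-eigenvalue perturbation result and immediately gives the lemma; I would cite \cite{magnus1985differentiating} for whichever form is cleanest.
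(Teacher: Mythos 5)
Your proof is correct and is essentially the standard argument from the cited reference \cite{magnus1985differentiating}, which the paper itself does not reproduce: differentiate the Rayleigh-quotient identity, use the eigenvector equation plus symmetry of $A$ and the unit-norm constraint to kill the $\partial v_1$ terms, with the local $\continuous{1}$ eigenvector selection justified by the eigengap. Nothing is missing.
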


We then present some of the technical lemmas we required to prove \Cref{lem:norminequal}.

\begin{lemma}
\label{lem:regularlambda}
If $0 < c < \frac{b-a}{b^2}, a\sqrt{\frac{a^2 + 2b^2}{2(1-cb)}} \ge \frac{a^2+b^2}{2 - ca -cb}$, then $a > \frac{1}{2}b, cb \le \frac{1}{2}$
\end{lemma}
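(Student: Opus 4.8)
\textbf{Proof plan for Lemma~\ref{lem:regularlambda}.}
The plan is to treat the two hypotheses as constraints and derive the conclusions $a > \tfrac{1}{2}b$ and $cb \le \tfrac{1}{2}$ by elementary algebraic manipulation, using the first hypothesis $0 < c < \tfrac{b-a}{b^2}$ to control the size of $cb$ and the magnitude of the denominators appearing in the second hypothesis. First I would record the immediate consequences of $0 < c < \tfrac{b-a}{b^2}$: it forces $b > a > 0$ (so $b - a > 0$), and multiplying through by $b^2 > 0$ gives $cb^2 < b - a$, hence $cb < \tfrac{b-a}{b} = 1 - \tfrac{a}{b} < 1$. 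This already shows $1 - cb > 0$, so the square root in the second hypothesis is well-defined, and it shows $2 - ca - cb > 0$ since $ca < cb < 1$. The sharper bound $cb \le \tfrac12$ will need the second hypothesis, so I would defer it.

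Next I would exploit the second hypothesis $a\sqrt{\tfrac{a^2 + 2b^2}{2(1-cb)}} \ge \tfrac{a^2+b^2}{2-ca-cb}$. The natural move is to bound the left side from above using $1 - cb \ge $ (something), but actually the cleaner route is to bound the left side from above by using that $\sqrt{a^2+2b^2} < \sqrt{a^2 + 2b^2}$ is not yet helpful; instead I would compare with a candidate threshold. Consider the contrapositive-style approach: suppose for contradiction that $a \le \tfrac12 b$. Then I would show the left-hand side is strictly smaller than the right-hand side, contradicting the hypothesis. With $a \le b/2$: on the left, $a^2 + 2b^2 \le b^2/4 + 2b^2 = \tfrac{9}{4}b^2$, and $1 - cb > 0$, but I need a \emph{lower} bound on $1-cb$ to upper-bound $\tfrac{1}{1-cb}$; since $cb < 1 - a/b$ and $a \le b/2$ gives $a/b \le 1/2$, I only get $cb < 1$, which is not enough. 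So instead I would upper-bound the left side via $a \le b/2$ directly in the factor $a$ out front and keep the square root, then lower-bound the right side: $a^2 + b^2 \ge b^2$ and $2 - ca - cb \le 2$, so RHS $\ge b^2/2$. For the LHS, $a \le b/2$ and $\sqrt{\tfrac{a^2+2b^2}{2(1-cb)}}$: here I would use the first hypothesis more carefully to bound $1-cb$ from below by something like $a/b$ when $a$ is small relative to $b$ — i.e. $cb < 1 - a/b \Rightarrow 1 - cb > a/b$, so $\tfrac{1}{1-cb} < b/a$, giving LHS $< a\sqrt{\tfrac{a^2+2b^2}{2a/b}} = \sqrt{\tfrac{ab(a^2+2b^2)}{2}} \le \sqrt{\tfrac{(b/2)b \cdot (9/4)b^2}{2}} = \tfrac{3}{4}\sqrt{\tfrac{b^4}{2 \cdot (4/9)}}$, a concrete multiple of $b^2$, which I would arrange to be $< b^2/2 \le$ RHS, the desired contradiction. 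Once $a > b/2$ is established, for $cb \le 1/2$ I would again argue by contradiction: if $cb > 1/2$ then $1 - cb < 1/2$ so $\tfrac{1}{1-cb} > 2$, but also $cb < 1 - a/b < 1/2$ (since $a > b/2$), an immediate contradiction — so in fact $a > b/2$ already forces $cb < 1/2$ via the first hypothesis alone.

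The main obstacle I anticipate is getting the numerical constants in the contradiction argument for $a > b/2$ to actually close: the crude bounds above ($\tfrac34\sqrt{\cdots}$ versus $b^2/2$) may not quite work and may need the extra slack from not throwing away the $2 - ca - cb$ denominator entirely, or from keeping $a^2$ rather than replacing it by $b^2/4$ at the wrong moment. I would therefore be prepared to redo the estimate keeping symbolic ratios $t = a/b \in (0, 1/2]$ and $s = cb \in (0, 1-t)$, writing both sides as $b^2$ times explicit functions of $t$ and $s$, and checking that $t\sqrt{\tfrac{t^2+2}{2(1-s)}} < \tfrac{t^2+1}{2-ts-s}$ holds throughout the region $0 < t \le 1/2$, $0 < s < 1-t$ — a one-variable-per-branch monotonicity check once one notes the worst case is $s \to 1-t$. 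This reduction to a compact inequality in $(t,s)$ is routine calculus and is where I would spend the care, but it is not conceptually deep.
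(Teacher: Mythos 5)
Your route is genuinely different from the paper's, and the contradiction strategy (assume $a\le b/2$ and show the hypothesis inequality must fail) can in principle be made to work, but as written it has two problems. First, as you yourself concede, the concrete estimate does not close: your bounds give LHS $<\tfrac34 b^2$ (or $\approx 0.8b^2$ with your arithmetic) against RHS $\ge \tfrac12 b^2$, which is no contradiction. Second, the fallback reduction to the two-variable inequality $t\sqrt{\tfrac{t^2+2}{2(1-s)}}<\tfrac{t^2+1}{2-ts-s}$ on $0<t\le\tfrac12$, $0<s<1-t$ is the right target (the inequality is true there), but your claim that ``the worst case is $s\to 1-t$'' is not justified and is in fact false for part of the range: \emph{both} sides are increasing in $s$, so you cannot take $s$ to the boundary on one side only. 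The correct observation is that the ratio LHS/RHS depends on $s$ only through $g(s)=\tfrac{2-s(1+t)}{\sqrt{1-s}}$, whose derivative changes sign once (interior minimum at $s=\tfrac{2t}{1+t}$), so the maximum of the ratio over $s\in[0,1-t]$ is attained at an endpoint; at $s=1-t$ the ratio equals $\sqrt{t(t^2+2)/2}\le 0.75$ and at $s=0$ it equals $\tfrac{2t}{1+t^2}\sqrt{(t^2+2)/2}\le 0.849$ for $t\le\tfrac12$ (and for $t$ near $\tfrac12$ the binding endpoint is actually $s=0$, not $s=1-t$). With that repair your plan closes, but the repair is a genuine missing step, not ``routine'' in the form you stated it.

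For comparison, the paper avoids the two-variable region entirely: it divides the hypothesis by $ca$, squares, and recognizes $\tfrac{(2-cb)^2}{1-cb}=\bigl(\sqrt{1-cb}+\tfrac{1}{\sqrt{1-cb}}\bigr)^2$, yielding $\sqrt{1-cb}+\tfrac{1}{\sqrt{1-cb}}\ge\sqrt{1+b^2/a^2}$. Since $x\mapsto x+1/x$ is decreasing on $(0,1)$ and the first hypothesis gives $1-cb>a/b$, this becomes the single-variable inequality $\sqrt{a/b}+\sqrt{b/a}\ge\sqrt{1+b^2/a^2}$, i.e.\ $t^3+t^2+t-1\ge 0$ for $t=a/b$, which forces $t>\tfrac12$. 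Both proofs then get $cb<1-a/b<\tfrac12$ identically from the first hypothesis. The paper's substitution buys a clean one-dimensional cubic; your approach buys nothing extra here and costs a two-dimensional boundary analysis.
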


\begin{proof}[Proof of~\Cref{lem:regularlambda}]
Notice that 
\begin{align*}
    c a \sqrt{\frac{ a^2 + b^2}{1 - c b}} \ge c a \sqrt{\frac{ a^2 + 2b^2}{2(1 - c b)}} \ge \frac{c b^2 + c a^2}{2 - c b -c a} \ge \frac{c b^2 + c a^2}{2 - c b}.
\end{align*}

So \begin{align*}
    \sqrt{1 - c b} + \frac 1 {\sqrt{1 - c b}} \ge \sqrt{1 + \frac{b^2}{a^2}}
\end{align*}

As $c < \frac{b - a} {b^2}$, we have $1>1 - c b> \frac{a}{b}$.

So \begin{align*}
    \sqrt{\frac{a}{b}} + \sqrt  {\frac{b}{a}} \ge \sqrt{1 + \frac{b^2}{a^2}}
\end{align*}

The above inequality implies $a \ge \frac{1}{2} b$. As $c < \frac{b - a}{b^2}$,$cb \le \frac{1}{2}$.
\end{proof}

\begin{lemma}
\label{lem:technorminequal}

When $0< a < b, 0 < c < \frac{b-a}{b^2}$, we have
\begin{align*}
    cb^2  + ca^2 (2 - cb - \frac23 ca) - (1-cb) \frac{c(a^2 +b^2)}{2-ca-cb} - ca^2(\frac{1}{2}a^2 + b^2)\frac{2-ca-cb}{(a^2+b^2)} \le \frac{cb^2}{2-cb}
\end{align*}

\end{lemma}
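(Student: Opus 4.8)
\textbf{Proof proposal for Lemma~\ref{lem:technorminequal}.}

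The plan is to reduce the claimed inequality to an algebraic statement in the single scaled variable $t \coloneqq cb \in (0,1)$ together with the ratio $s \coloneqq a/b \in (0,1)$, using homogeneity. Since every term is homogeneous of degree $1$ in $c$ and (after dividing through by $c$) degree $2$ in $(a,b)$, dividing the whole inequality by $cb^2$ leaves an expression depending only on $s$ and $t$. Concretely, writing $a = sb$, the inequality becomes, after dividing by $cb^2$,
\begin{align*}
1 + s^2\bigl(2 - t - \tfrac23 ts\bigr) - (1-t)\frac{s^2+1}{\,2 - ts - t\,} - s^2\bigl(\tfrac12 s^2 + 1\bigr)\frac{2 - ts - t}{\,s^2+1\,} \;\le\; \frac{1}{2-t}.
\end{align*}
First I would record the constraint coming from the hypothesis $0 < c < (b-a)/b^2$, which in the new variables reads $t < 1 - s$, equivalently $t + s < 1$; in particular $t < 1$ and $s < 1$, and both $2 - ts - t$ and $2 - t$ are bounded below by positive constants (indeed $2 - ts - t > 2 - t - s > 1$ and $2 - t > 1$), so no denominators vanish.

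Next I would clear denominators: multiply both sides by the positive quantity $(2-t)(2-ts-t)(s^2+1)$ to obtain a polynomial inequality $P(s,t) \le 0$ in two real variables on the triangular region $\{0 < s < 1,\ 0 < t < 1 - s\}$. The key structural observation I expect to exploit is that the inequality should be \emph{tight at the boundary} $t \to (b-a)/b^2$, i.e.\ at $t = 1 - s$ (this is the regime where \Cref{lem:norminequal} is being applied with $\|\tilde x(t)\|$ near the threshold, matching the case analysis in the proof of \Cref{lem:norminequal}). So I would first verify that $P(s, 1-s) = 0$ identically — this should fall out of direct substitution and simplification, and it tells us $(t - (1-s))$, i.e.\ $(1 - s - t)$, divides $P(s,t)$ as a polynomial in $t$. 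After factoring out $(1 - s - t)$, we are left with showing the cofactor $Q(s,t)$ has the correct sign on the region, which is one degree lower in $t$ and hence more tractable; I would again check its boundary behavior at $t = 1-s$ and at $t = 0$ (where the original inequality should also become an identity or a strict inequality that is easy to see — note at $t=0$, using $a=sb$, both sides reduce to $1 + 2s^2 - (s^2+1) - s^2(\tfrac12 s^2+1) = s^2 - \tfrac12 s^4 \le 1$, which holds since $s<1$). An alternative to the full factoring route, if the cofactor is unwieldy, is to invoke \Cref{lem:regularlambda}: that lemma shows that whenever the relevant "hard case" bound $a\sqrt{(a^2+2b^2)/(2(1-cb))} \ge (a^2+b^2)/(2-ca-cb)$ holds we automatically get $a > \tfrac12 b$ and $cb \le \tfrac12$, i.e.\ $s > \tfrac12$ and $t \le \tfrac12$, which restricts the region substantially and may make a crude term-by-term bound sufficient.

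The main obstacle I anticipate is the algebra of the factorization and sign-check step: $P(s,t)$ will be a polynomial of moderately high degree (degree $2$ in $t$ before multiplying by $(2-t)$, so degree $3$ overall, and degree up to $6$ in $s$), and confirming $P(s,1-s)\equiv 0$ plus determining the sign of the cofactor on the triangle $\{s+t<1\}$ requires careful bookkeeping rather than a clever trick. I would mitigate this by (i) substituting $t = 1 - s - u$ with $u > 0$ so that the region becomes the positive orthant in $(s,u)$ (with $s<1$) and the vanishing-at-boundary fact becomes "$u$ divides $P$", and (ii) after dividing by $u$, expanding the remaining polynomial in powers of $u$ with coefficients that are polynomials in $s$, then showing each such coefficient is nonpositive (or the relevant combination is) for $s \in (0,1)$ — a finite collection of one-variable polynomial positivity checks, each of which can be done by exhibiting it as a sum of terms with obvious sign or via its few real roots. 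This keeps the argument elementary and avoids any appeal to machinery beyond what is already in the excerpt.
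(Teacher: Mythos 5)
There is a genuine gap. Your normalization $s=a/b$, $t=cb$ with constraint $t<1-s$ is fine, but the structural claim on which the whole factoring strategy rests is false: the inequality is \emph{not} tight on the boundary $t=1-s$. Substituting $t=1-s$ gives $2-ts-t=1+s^2$, and the normalized difference (LHS minus RHS) evaluates to
\begin{align*}
1-s+\tfrac13 s^3+\tfrac16 s^4-\tfrac{1}{1+s}\;=\;\frac{s^2}{1+s}\Bigl(\tfrac16 s^3+\tfrac12 s^2+\tfrac13 s-1\Bigr),
\end{align*}
which is strictly negative for all $s\in(0,1)$ and vanishes only at the excluded corner $s=1$ (e.g.\ at $s=\tfrac12$ it is about $-0.115$). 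Hence $P(s,1-s)\not\equiv 0$, $(1-s-t)$ does not divide $P$, and the proposed factorization and the subsequent "expand in powers of $u=1-s-t$ and sign-check each coefficient" step have no foundation. Your fallback of invoking \Cref{lem:regularlambda} to restrict to $s>\tfrac12$, $t\le\tfrac12$ is also unavailable: that lemma requires the extra hypothesis $a\sqrt{(a^2+2b^2)/(2(1-cb))}\ge (a^2+b^2)/(2-ca-cb)$, which is not among the hypotheses of \Cref{lem:technorminequal} (it appears in \Cref{lem:norminequal2,lem:norminequal3}, but not here). As a minor point, your $t=0$ spot check is also miscomputed (at $t=0$ one has $2-ts-t=2$, not $1$, and the RHS is $\tfrac12$, not $1$; the endpoint inequality does hold, reducing to $s^2\le 1$, but not by the calculation you give).

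What remains of the proposal is "clear denominators and verify a two-variable polynomial inequality on the triangle $\{s,t>0,\ s+t<1\}$", which is a plan rather than a proof. The paper instead keeps $a,b,c$, rearranges the inequality, and chains elementary auxiliary bounds — $(1-cb)b>a$, $\tfrac{1-cb}{2-ca-cb}\ge\tfrac{ab}{a^2+b^2}$, and $\tfrac{1-cb}{a^2}\ge\tfrac{2-ca-cb}{a^2+b^2}$, all consequences of $c<\tfrac{b-a}{b^2}$ — to reduce the claim step by step to $(cb)^2-cb+4\ge0$, which is trivially true. If you want to salvage your normalized approach, you would need to prove the sign of $P(s,t)$ on the whole triangle directly, without the divisibility shortcut.
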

\begin{proof}[Proof of~\Cref{lem:technorminequal}]

Equivalently, we are going to prove
\begin{align*}
(1-cb)b^2\left( \frac{1}{2-ca-cb} - \frac{1}{2-cb} \right) + a^2\frac{1-cb}{2-ca-cb}+a^2(\frac{1}{2}a^2 +b^2)\frac{2-ca-cb}{(a^2+b^2)} \ge a^2 (2 - cb - \frac23 ca) 
\end{align*}
Further simplifying, we only need to prove
\begin{align*}
\frac{(1-cb)cab^2}{(2-cb)(2-ca-cb)} + a^2 \frac{1-cb}{2-ca-cb} \ge \frac{1}{3}ca^3 + \frac{a^4}{2(a^2+b^2)}(2-ca-cb)
\end{align*}

We have the following auxiliary inequalities,
\begin{align*}
    (1 - cb)b &> a \\
    \frac{1-cb}{2-ca-cb}  \ge \frac{1}{\frac{a+b}b + \frac{b-a}{(1-cb)b}} &\ge \frac{1}{\frac{a+b}b + \frac{b-a}a} = \frac{ab}{a^2+b^2} \ge \frac{a^2}{a^2+b^2}  \\
    \frac{1-cb}{a^2} \ge \frac{2-ca-cb}{a^2 +b^2}
\end{align*}

Using the above auxiliary inequalities we have 
\begin{align*}
    &\frac{(1-cb)cab^2}{(2-cb)(2-ca-cb)} + a^2 \frac{1-cb}{2-ca-cb} \ge \frac{1}{3}ca^3 + \frac{a^4}{2(a^2+b^2)}(2-ca-cb)
    \\\Leftarrow &\frac{ca^2b}{(2-cb)(2-ca-cb)} + \left(1 - \frac{1}{2}(2 - ca -cb)\right) \frac{a^2(1-cb)}{2-ca-cb} \ge \frac{1}{3}ca^3
    \\ \Leftarrow &\frac{ca^2b}{(2-cb)(2-ca-cb)} + \frac{ca^2(a+b)(1-cb)}{2(2-ca-cb)} \ge \frac{1}3 ca^3
    \\ \Leftarrow &\frac{ca^2b}{(2-cb)(2-ca-cb)} + \frac{ca^2b(1-cb)}{2(2-ca-cb)} \ge \frac{1}3 ca^2b
    \\ \Leftarrow & \frac{1}{(2-cb)^2} + \frac{1-cb}{2(2-cb)} \ge \frac{1}{3} 
    \\ \Leftarrow &3(1-cb)(2-cb) + 6 \ge 2(2-cb)^2 
    \\ \Leftarrow &(cb)^2 - cb + 4 \ge 0
\end{align*}
\end{proof}

\begin{lemma}
\label{lem:norminequal2}

When $0 < a < b, 0 < c < \frac{b-a}{b^2},  a\sqrt{\frac{a^2 + 2b^2}{2(1-cb)}} \ge \frac{a^2+b^2}{2 - ca -cb}$, we have
\begin{align*}
    cb^2  + ca^2 (2 - cb - \frac23 ca) - (1-cb)cb^2 - ca^2(\frac{1}{2}a^2 + b^2)\frac{1}{b^2} \le \frac{cb^2}{2-cb}
\end{align*}

\end{lemma}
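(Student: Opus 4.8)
\textbf{Proof proposal for \Cref{lem:norminequal2}.}

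The plan is to reduce \Cref{lem:norminequal2} to \Cref{lem:technorminequal} by showing that the hypothesis $a\sqrt{\tfrac{a^2 + 2b^2}{2(1-cb)}} \ge \tfrac{a^2+b^2}{2 - ca -cb}$, together with $0<a<b$ and $0<c<\tfrac{b-a}{b^2}$, forces the term $ca^2(\tfrac12 a^2 + b^2)\tfrac{1}{b^2}$ appearing in \Cref{lem:norminequal2} to dominate the corresponding term $ca^2(\tfrac12 a^2 + b^2)\tfrac{2-ca-cb}{a^2+b^2}$ in \Cref{lem:technorminequal}, while the term $-(1-cb)cb^2$ in \Cref{lem:norminequal2} is larger than (or equal to) $-(1-cb)\tfrac{c(a^2+b^2)}{2-ca-cb}$ from \Cref{lem:technorminequal}. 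If both comparisons hold, then the left-hand side of \Cref{lem:norminequal2} is at most the left-hand side of \Cref{lem:technorminequal}, and since the right-hand sides coincide ($\tfrac{cb^2}{2-cb}$ in both), the conclusion follows immediately from \Cref{lem:technorminequal}.

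First I would invoke \Cref{lem:regularlambda}: its hypotheses are exactly $0<c<\tfrac{b-a}{b^2}$ and $a\sqrt{\tfrac{a^2 + 2b^2}{2(1-cb)}} \ge \tfrac{a^2+b^2}{2 - ca -cb}$, so we obtain the two structural facts $a > \tfrac12 b$ and $cb \le \tfrac12$. These are the quantitative handles that make the term-by-term comparison work. Next, for the $-(1-cb)cb^2$ versus $-(1-cb)\tfrac{c(a^2+b^2)}{2-ca-cb}$ comparison, since $1-cb>0$ it suffices to check $b^2 \ge \tfrac{a^2+b^2}{2-ca-cb}$, i.e. $b^2(2-ca-cb) \ge a^2+b^2$, i.e. $b^2(1-ca-cb) \ge a^2$; using $a<b$ and the bound $c<\tfrac{b-a}{b^2}$ (equivalently $cb^2 < b-a$, so $ca+cb < \tfrac{a+b}{b^2}b \cdot \tfrac{b-a}{b} $ needs care — I would instead bound $1-ca-cb \ge 1 - 2cb \ge$ something, then compare with $a^2/b^2$), this is a routine estimate that I will not grind through here. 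For the quartic term comparison, I need $\tfrac{1}{b^2} \ge \tfrac{2-ca-cb}{a^2+b^2}$, i.e. $a^2+b^2 \ge b^2(2-ca-cb)$, i.e. $b^2(ca+cb) \ge b^2 - a^2$, i.e. $cb(a+b) \ge b-a$ — but wait, this is the \emph{reverse} of $c<\tfrac{b-a}{b^2}$, so this comparison does \emph{not} hold unconditionally; this is where the extra hypothesis of \Cref{lem:norminequal2} (the $a\sqrt{\cdots}\ge\cdots$ inequality) must be used more carefully, not merely through \Cref{lem:regularlambda}.

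The main obstacle will therefore be the quartic term: the naive term-by-term comparison fails, so I expect to need a genuine algebraic manipulation of the full inequality rather than a clean reduction. The cleanest route is probably to not split into matching terms at all, but instead to establish \Cref{lem:norminequal2} directly by the same style of argument as \Cref{lem:technorminequal}: move everything to one side, clear denominators $(2-cb)$ and $(2-ca-cb)$, and reduce to a polynomial inequality in the variables $a,b,c$ subject to the constraints $0<a<b$, $0<c<\tfrac{b-a}{b^2}$, $a>\tfrac12 b$, $cb\le\tfrac12$, and the square-root hypothesis (which, after squaring, reads $a^2(a^2+2b^2)(2-ca-cb)^2 \ge 2(1-cb)(a^2+b^2)^2$). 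I would then finish with a chain of elementary auxiliary inequalities analogous to the block at the end of the proof of \Cref{lem:technorminequal} — e.g. $(1-cb)b > a$, $\tfrac{1-cb}{2-ca-cb} \ge \tfrac{a^2}{a^2+b^2}$ — adapted to absorb the larger quartic coefficient $\tfrac{1}{b^2}$ using $a>\tfrac12 b$. The bookkeeping is tedious but each step is of the same character as calculations already carried out in the paper, so no new idea beyond \Cref{lem:regularlambda} and the squared form of the hypothesis should be required.
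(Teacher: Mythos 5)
Your first attempt (term-by-term domination by \Cref{lem:technorminequal}) is correctly diagnosed as failing — the quartic-term comparison indeed points the wrong way — and your fallback plan (clear denominators, reduce to a polynomial inequality, finish with elementary estimates plus $a>\tfrac12 b$ and $cb\le\tfrac12$ from \Cref{lem:regularlambda}) is in fact the strategy the paper uses. But for a lemma whose entire content \emph{is} the inequality verification, declaring the remaining bookkeeping "tedious but of the same character as calculations already carried out" is the gap: you have not verified the inequality, and none of the specific reductions that make it go through appear in your write-up. Concretely, the paper's proof (i) divides by $c$ and rewrites the claim as $cb^3 + a^2(1-cb-\tfrac23 ca) \le \tfrac{b^2}{2-cb} + \tfrac{a^4}{2b^2}$, (ii) replaces $\tfrac{1}{2-cb}$ by the lower bound $\tfrac12+\tfrac{cb}{4}$, which isolates the $c$-linear combination $c\bigl(\tfrac{3b^3}{4}-ba^2-\tfrac23 a^3\bigr)$ on the left and the perfect square $\tfrac{(b^2-a^2)^2}{2b^2}$ on the right, (iii) splits on the sign of $\tfrac{3b^3}{4}-ba^2-\tfrac23 a^3$, and (iv) in the nontrivial case substitutes $c<\tfrac{b-a}{b^2}$ and reduces to $(b-a)^2(b+a)\le \tfrac{b^3}{2}+\tfrac{4a^3}{3}$, which is where $a>\tfrac12 b$ from \Cref{lem:regularlambda} is finally used. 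Steps (ii)–(iv) are the actual ideas; without them the claim is unproved.

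One smaller point: you anticipate needing "the squared form of the hypothesis" beyond what \Cref{lem:regularlambda} extracts. In the paper's proof of this particular lemma the square-root hypothesis is used \emph{only} through \Cref{lem:regularlambda} (i.e.\ only via $a>\tfrac12 b$ and $cb\le\tfrac12$); no further use of it is needed once the reduction in (ii)–(iv) is set up. So your plan is not wrong in direction, but it is incomplete precisely where the lemma lives.
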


\begin{proof}[Proof of~\Cref{lem:norminequal2}]
Equivalently, we are going to prove,
\begin{align*}
    & cb^3 + a^2(2 - cb - \frac23 ca) \le \frac{b^2}{2-cb} + \frac{a^2(\frac{1}{2}a^2 + b^2)}{b^2} \\
    \iff & cb^3 + a^2(1 - cb - \frac23 ca) \le \frac{b^2}{2-cb} + \frac{a^4}{2b^2}
\end{align*}

We have the auxiliary inequality $\frac{1}{2-cb}> \frac{1}{2} + \frac{cb}{4}$.

Hence
\begin{align*}
    & cb^3 + a^2(1 - cb - \frac23 ca) \le \frac{b^2}{2-cb} + \frac{a^4}{2b^2} \\
    \Leftarrow &cb^3 + a^2(1 - cb - \frac23 ca) \le \frac{b^2}{2} + \frac{a^4}{2b^2} + \frac{cb^3}{4} \\
    \Leftarrow &c(\frac{3b^3}{4} - ba^2 - \frac23 a^3) \le \frac{b^2}{2} + \frac{a^4}{2b^2} - a^2
\end{align*}

\begin{itemize}
    \item [1] \textit{Case 1,} If $\frac{3b^3}{4} - ba^2 - \frac23 a^3 \le 0$, then 
    \begin{align*}
        c(\frac{3b^3}{4} - ba^2 - \frac23 a^3) \le 0 \le \frac{b^2}{2} + \frac{a^4}{2b^2} - a^2
    \end{align*}
    \item [2] \textit{Case 2,} If $\frac{3b^3}{4} - ba^2 - \frac23 a^3 > 0$, then 
    \begin{align*}
        &c(\frac{3b^3}{4} - ba^2 - \frac23 a^3) \le \frac{b^2}{2} + \frac{a^4}{2b^2} - a^2 \\
        \Leftarrow &\frac{b-a}{b^2}(\frac{3b^3}{4} - ba^2 - \frac23 a^3) \le \frac{(b^2 - a^2)^2}{2b^2} \\
        \Leftarrow & 2(\frac{3b^3}{4} - ba^2 - \frac23 a^3) \le (b-a)(b+a)^2 \\
        \Leftarrow& 2(b^3 - ba^2) - (b-a)(b+a)^2 \le \frac{b^3}{2} + \frac{4a^3}{3} \\
        \Leftarrow & (b-a) (2b(a+b) - (a+b)^2) \le \frac{b^3}{2} + \frac{4a^3}{3} \\
        \Leftarrow & (b-a)^2(b+a) \le \frac{b^3}{2} + \frac{4a^3}{3} 
    \end{align*}
    
    Using \Cref{lem:regularlambda},$a > \frac{b}{2}$,$(b-a)^2(b+a) = (b^2 -a^2) (b-a) \le b^2(b-a) \le \frac{b^3}{2}$
\end{itemize}

\end{proof}

\begin{lemma}
\label{lem:norminequal3}
When $0  \le a \le b, 0 \le c \le \frac{b-a}{b^2}, b^2 \ge a\sqrt{\frac{a^2 + 2b^2}{2(1-cb)}} \ge \frac{a^2+b^2}{2 - ca -cb} $, we have 
\begin{align*}
    cb^2 + ca^2(2-cb-\frac{2}{3}ca) - 2ca\sqrt{(b^2 + \frac{1}{2}a^2)(1-cb)} \le \frac{cb^2}{2-cb}
\end{align*}

\end{lemma}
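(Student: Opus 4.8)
\textbf{Proof plan for Lemma~\ref{lem:norminequal3}.}

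The claim is a purely elementary inequality in the two scalars $a,b$ (with $0\le a\le b$) and the scalar $c$ (playing the role of $\eta$), under the constraints $0\le c\le \frac{b-a}{b^2}$ and $b^2\ge a\sqrt{\frac{a^2+2b^2}{2(1-cb)}}\ge \frac{a^2+b^2}{2-ca-cb}$. The plan is to reduce the target to a polynomial inequality and then handle it by the same case analysis already used in Lemma~\ref{lem:norminequal2}, invoking Lemma~\ref{lem:regularlambda} to extract the structural facts $a>\tfrac12 b$ and $cb\le\tfrac12$ that the second chain of hypotheses encodes.

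First I would rearrange the desired inequality. Dividing through by $c$ (the case $c=0$ is trivial since then both sides vanish) and moving terms, the statement $cb^2 + ca^2(2-cb-\tfrac23 ca) - 2ca\sqrt{(b^2+\tfrac12 a^2)(1-cb)} \le \frac{cb^2}{2-cb}$ is equivalent to
\begin{align*}
 2a\sqrt{(b^2+\tfrac12 a^2)(1-cb)} \ \ge\ b^2\Bigl(1-\frac{1}{2-cb}\Bigr) + a^2\bigl(2-cb-\tfrac23 ca\bigr).
\end{align*}
Since $1-\frac{1}{2-cb} = \frac{1-cb}{2-cb}$, the right side is $\frac{(1-cb)b^2}{2-cb} + a^2(2-cb-\tfrac23 ca)$, and the left side is $2a\sqrt{(b^2+\tfrac12 a^2)(1-cb)}$. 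I would then use the hypothesis $a\sqrt{\frac{a^2+2b^2}{2(1-cb)}}\ge \frac{a^2+b^2}{2-ca-cb}$, which rewrites as $a\sqrt{(b^2+\tfrac12 a^2)(1-cb)}\ge \frac{(1-cb)(a^2+b^2)}{2-ca-cb}$ (multiplying both sides by $(1-cb)/\sqrt{2}$... more precisely, squaring and tracking factors), so that it suffices to prove the cleaner inequality
\begin{align*}
 2\,\frac{(1-cb)(a^2+b^2)}{2-ca-cb}\ \ge\ \frac{(1-cb)b^2}{2-cb} + a^2\bigl(2-cb-\tfrac23 ca\bigr).
\end{align*}
This no longer contains a square root, so it becomes a polynomial inequality after clearing the denominators $2-ca-cb$ and $2-cb$ (both positive since $cb\le\tfrac12$ and $ca\le cb$).

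The remaining step is to verify this polynomial inequality using the auxiliary facts. By Lemma~\ref{lem:regularlambda} (whose hypotheses $0<c<\frac{b-a}{b^2}$ and $a\sqrt{\frac{a^2+2b^2}{2(1-cb)}}\ge\frac{a^2+b^2}{2-ca-cb}$ are exactly part of what we are given here, modulo the boundary case $c=0$), we have $a>\tfrac12 b$ and $cb\le\tfrac12$. I would then mimic Case~1/Case~2 of the proof of Lemma~\ref{lem:norminequal2}: use the elementary bound $\frac{1}{2-cb}\ge \tfrac12+\tfrac{cb}{4}$ and $\frac{1-cb}{2-ca-cb}\ge\frac{ab}{a^2+b^2}$ (which follows from $(1-cb)b\ge a$, i.e. from $c\le\frac{b-a}{b^2}$) to bound the left side from below and the right side from above, reducing everything to a cubic inequality in $a,b$ of the shape $(b-a)^2(b+a)\le \tfrac{b^3}{2}+\tfrac{4a^3}{3}$, which holds because $a>\tfrac12 b$ forces $(b^2-a^2)(b-a)\le b^2(b-a)\le\tfrac{b^3}{2}$. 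The main obstacle is purely bookkeeping: getting the square-root hypothesis converted into the right rational lower bound without losing a constant factor, and checking that the final polynomial inequality genuinely closes under $a>\tfrac12 b$ rather than needing a sharper structural fact; I expect this to go through exactly as in Lemmas~\ref{lem:regularlambda}--\ref{lem:norminequal2}, since Lemma~\ref{lem:norminequal3} is the analogue of Lemma~\ref{lem:norminequal2} in the regime where $\sqrt{(b^2+\tfrac12 a^2)(1-cb)}$ lies between $\frac{a^2+b^2}{a(2-ca-cb)}$ and $b^2/a$.
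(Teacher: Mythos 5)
Your reduction has a genuine gap: the step where you replace the square-root term by the rational lower bound coming from the hypothesis is too lossy away from the boundary of the feasible region. Writing $x=a\sqrt{(b^2+\tfrac12a^2)(1-cb)}$ and $y=\frac{(1-cb)(a^2+b^2)}{2-ca-cb}$, the hypothesis indeed gives $x\ge y$, so it would suffice to prove $2y\ge \frac{(1-cb)b^2}{2-cb}+a^2(2-cb-\tfrac23ca)$. But this intermediate inequality is false in general under the stated constraints. Take $a=0.8$, $b=1$, $c=0.1$: all hypotheses hold ($c\le 0.2$; $a\sqrt{\tfrac{a^2+2b^2}{2(1-cb)}}\approx 0.969\in[0.901,1]$), yet $2y\approx 1.622$ while the right-hand side is $\approx 1.656$. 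The original inequality still holds there because $2x\approx 1.744$; the slack between $x$ and $y$ is genuinely needed whenever $a$ lies in the interior of the feasible interval. Your substitution recovers the correct bound only at the point where the constraint $x=y$ is active (there $2x=y+x^2/y=2y$, which is exactly the identity behind Lemma~\ref{lem:technorminequal}).

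This is precisely why the paper's proof does not argue pointwise from the constraint. It instead sets $F(a)=a^2(2-cb-\tfrac23ca)-2a\sqrt{(b^2+\tfrac12a^2)(1-cb)}$, shows $\frac{d^2F}{da^2}>0$ so that $F$ is convex in $a$ on the feasible interval, and therefore only needs to verify the bound at the two endpoints $a_{\min}$ and $a_{\max}$, where one of the three constraints is tight: equality in $x\ge y$ reduces to Lemma~\ref{lem:technorminequal} via the exact identity $2x=y+x^2/y$; equality in $b^2\ge a\sqrt{\cdots}$ reduces to Lemma~\ref{lem:norminequal2}; and $c=\frac{b-a}{b^2}$ is handled by a direct one-variable computation in $v=a/b$ using $v\in[0.5,0.9]$. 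If you want to salvage your plan, you would need to reintroduce something playing the role of this convexity-plus-boundary argument (or a sharper lower bound on $x$ than $y$ that is valid throughout the interior); the cubic inequality $(b-a)^2(b+a)\le\tfrac{b^3}{2}+\tfrac{4a^3}{3}$ you cite belongs only to one of the three endpoint cases and does not cover the others.
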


\begin{proof}[Proof of~\Cref{lem:norminequal3}]

Define
\begin{align*}
    F(a) &\triangleq  a^2(2-cb-\frac{2}{3}ca) - 2a\sqrt{(b^2 + \frac{1}{2}a^2)(1-cb)}\\
S(c,b)&\triangleq  \{a | 0 \le a \le  b, 0 < c \le \frac{b-a}{b^2}, b^2 \ge a\sqrt{\frac{a^2 + 2b^2}{2(1-cb)}} \ge \frac{a^2+b^2}{2 - ca -cb} \} \\
a_{min}(c,b) &\triangleq  \inf S(c,b) 
\\
a_{max}(c,b) &\triangleq  \sup S(c,b) \le b -cb^2
\end{align*}

Consider 
\begin{align*}
    \frac{dF(a)}{da} &= 2a(2-cb-\frac{2}{3}ca) - \frac{2}{3}ca^2 - 2\sqrt{(b^2 + \frac{1}{2}a^2)(1-cb)} - a^2 \sqrt{\frac{1-cb}{b^2 + \frac{1}{2}a^2}} \\
    \frac{d^2F(a)}{da^2} &= 2(2-cb-\frac{2}{3}ca) - \frac{4}{3}ca - \frac{4}{3}ca - a
    \sqrt{\frac{1-cb}{b^2 + \frac{1}{2}a^2}} - 2a \sqrt{\frac{1-cb}{b^2 + \frac{1}{2}a^2}} + \frac{a^3}{2(b^2 + \frac{1}{2}a^2)^{\frac{3}{2}}} \sqrt{1-cb} \\
    &\ge 4 - 2cb - 4ca - 3a \sqrt{\frac{1-cb}{b^2 + \frac{1}{2}a^2}}
\end{align*}

Define $u \triangleq  cb, v \triangleq  \frac{a}{b}$, then $u+v \le 1$.
\begin{align*}
    \frac{d^2F(a)}{da^2} &\ge 4 - 2u - 4uv - 3\sqrt{1-u} \frac{1}{\sqrt{\frac{1}{2}+\frac{1}{v^2}}} \\
    &\ge 4 - 2u - 4u(1-u) - 3\sqrt{1-u} \frac{1}{\sqrt{\frac{1}{2}+\frac{1}{(1-u)^2}}} \\
    &\ge 4u^2 - 6u + 4 - 3\sqrt{1-u} \frac{(1-u)}{\sqrt{\frac{(1-u)^2}{2}+1}} 
\end{align*}

As $\sqrt{\frac{(1-u)^2}{2}+1} \ge \sqrt{\frac{(1-u)^2 + 1}{2}} \ge \sqrt{1-u}$,we have 

\begin{align*}
    \frac{d^2F(a)}{da^2} \ge 4u^2 - 6u + 4 - 3(1-u) = 4u^2 + 1 - 3u > 0
\end{align*}

The above inequality shows that $F(a)$ is convex w.r.t to $a$ for $a_{min}(c,b) \le a \le a_{max}(c,b)$. 
Hence $F(a) \le \max\left(F(a_{min}(c,b)),F(a_{max}(c,b))  \right)$. 
Below we use $a_{min}$, $a_{max}$ as shorthands for $a_{min}(c,b)$,$a_{max}(c,b)$.

For $F(a_{min})$,    we have $a_{min}\sqrt{\frac{a_{min}^2 + 2b^2}{2(1-cb)}} = \frac{a_{min}^2+b^2}{2 - ca_{min} -cb}$. This implies 
    
    \begin{align*}
        2a_{min}\sqrt{(b^2 + \frac{1}{2}a_{min}^2)(1-cb)} = (1-cb) \frac{(a_{min}^2 +b^2)}{2-ca_{min}-cb} + a_{min}^2(\frac{1}{2}a_{min}^2 + b^2)\frac{2-ca_{min}-cb}{(a_{min}^2+b^2)}
    \end{align*}

    Hence using \Cref{lem:technorminequal},
    \begin{align*}
        F(a_{min}) &= a_{min}^2(2-cb-\frac{2}{3}ca_{min}) - (1-cb) \frac{c(a_{min}^2 +b^2)}{2-ca_{min}-cb} - ca_{min}^2(\frac{1}{2}a_{min}^2 + b^2)\frac{2-ca_{min}-cb}{(a_{min}^2+b^2)}\\
        &\le \frac{1}{c}(\frac{cb^2}{2-cb} - cb^2)
    \end{align*}

    For $F(a_{max})$, we know that $a_{max}$ must satisfy at least of the following three equalities and we discuss three cases one by one.
    
    \begin{enumerate}

        \item  $a_{max}\sqrt{\frac{a_{max}^2 + 2b^2}{2(1-cb)}} = \frac{a_{max}^2+b^2}{2 - ca_{max} -cb}$, in this case we simply redo the calculation in Part 1.
        \item   $b^2 = a_{max}\sqrt{\frac{a_{max}^2 + 2b^2}{2(1-cb)}} $. This implies
        $$
        2a_{max}\sqrt{(b^2 + \frac{1}{2}a_{max}^2)(1-cb)} = (1-cb)b^2 + a_{max}^2(\frac{1}{2}a_{max}^2 + b^2)\frac{1}{b^2}
        $$
        Hence using \Cref{lem:norminequal2},
        \begin{align*}
        F(a_{max}) &= a_{max}^2(2-cb-\frac{2}{3}ca_{max}) - (1-cb)cb^2 - ca_{max}^2(\frac{1}{2}a_{max}^2 + b^2)\frac{1}{b^2}\\
        &\le \frac{1}{c}(\frac{cb^2}{2-cb} - cb^2)
    \end{align*}
    \item $cb^2 = b - a_{max}$. Define $v \triangleq  \frac{a_{max}}{b}, cb = 1-v$.
    Note that  $1 - cb = \frac{a_{max}}{b}$ and 
$b^2 \ge a_{max} \sqrt{\frac{b(a_{max}^2 + 2b^2)}{2a_{max}}}$. These imply $a_{max}^3 + 2a_{max}b^2 - 2b^3\le 0 \Rightarrow a_{max} < 0.9 b$. This implies $v \le 0.9$. By \Cref{lem:regularlambda}, $0.5 \le v$.

As $v \in [0.5,0.9]$, it holds that
\begin{align*}
     v(1+v) + \frac{1}{1+v} \le 2\sqrt{(1+\frac{v^2}2)v}.
\end{align*}
This implies 
\begin{align*}
    v^2(2- (1-v) - \frac23 (1-v)v ) - 2v \sqrt{(1+\frac{v^2}2)v} \le \frac{-v}{1+v}  =  \frac{1}{2-cb} - 1 
\end{align*} 

Finally,
\begin{align*}
    F(a_{max}) &=  a_{max}^2(2-cb-\frac{2}{3}ca_{max}) - 2a_{max}\sqrt{(b^2 + \frac{1}{2}a_{max}^2)(1-cb)} \\
    &= b^2 \left ( v^2(2- (1-v) - \frac23 (1-v)v ) - 2v \sqrt{(1+\frac{v^2}2)v}\right) \le b^2 (\frac{1}{2-cb} - 1) \,.
\end{align*}

\end{enumerate}

In conclusion, it holds that,
\begin{align*}
    F(a) &\le \max\left(F(a_{min}(c,b)),F(a_{max}(c,b))  \right) \\
    &\le \frac{1}{c}(\frac{cb^2}{2-cb} - cb^2).
\end{align*}
\end{proof}
\section{Omitted Proofs on Continuous Approximation}\label{sec:continuous_approximation}

In this section we give a general approximation result (\Cref{thm:ode_approximation}) between a continuous-time flow~(\Cref{eq:general_continuous}) and a discrete-time (stochastic) iterates~~(\Cref{eq:general_discrete}) in some compact subset of $\R^D$, denoted by $K$. This result is used multiple times in our analysis for full-batch SAM and 1-SAM. \footnote{Though we believe this approximation result is folklore, we cannot find a reference under the exact setting as ours. For completeness, we provide a quick proof in this section.} Let $b:K\to \R^D$ is a $C_1$-lipschitz function, that is, $\forall x,x'\in K$, it holds that $\norm{b(x)-b(x')}_2 \le C_1 \norm{x-x'}_2$. Let $ {b}_k$ be mappings from $K$ to $\R^D$ for  $k\in[M]$ satisfying that $b(x) = \frac{1}{M}\sum_{k=1}^M b_k(x)$ for all $x\in K$.

We consider the continuous-time flow $X:[0,T]\to K$, which is the unique solution of
\begin{align}\label{eq:general_continuous}
\diff X(\tau) = b(X(\tau))\diff \tau.
\end{align} 
and the discrete-time iterate $\{x(t)\}_{t\in\mathbb{N}}$ which approximately satisfy
\begin{align}\label{eq:general_discrete}
x(t+1) \approx x(t) + p {b}_{k_t}(x(t)),
\end{align}
where $k_t$ is independently sampled from uniform distribution over $[M]$ for each $t\in\mathbb{N}$ and $x(t)$ is a deterministic function of $k_0,\ldots,k_{t-1}$. We use $\mathcal{F}_t$ to denote the $\sigma$-algebra generated by $k_0,\ldots,k_{t-1}$ and $\mathcal{F}_*$ to denote the filtration $(\mathcal{F}_t)_{t\in\mathbb{N}}$. Thus $x(t)$ is adapted to filtration $\mathcal{F}_*$. Note ${b}$ is  undefined outside $K$, thus in the analysis we only consider the process stopped immediately leaving $K$, that is, $x^K(t)\triangleq x(\min(t,t_K))$, where $t_K\triangleq \{t'\in\mathbb{N}\mid x(t')\notin K\}$. If $x(t)$ is  in $K$ for all $t\ge 0$, then $t_K=\infty$. It is easy to verify that $t_K$ is a stopping time with respect to the filtration $\mathcal{F}_*$. For convenience, we denote $X_K(\tau) = X(\min(\tau, pt_K))$ as the stopped continuous counterpart of $x^K$.

\begin{theorem}\label{thm:ode_approximation}
Suppose there exist constants $C_2, \eps, \eps>0$ satisfying that 
\begin{enumerate}
	\item $\norm{b_k(x)}_2\le C_2$, for any $x\in K$ and $k\in [M]$; 
        \item $\norm{b_k(x) - b(x)}_2\le C_3$, for any $x\in K$ and $k\in [M]$;
	\item $\norm{b_{k_t}(x(t)) - \frac{x(t+1) - x(t)}{p}}_2 \le \eps$, for all $t$.
\end{enumerate}
	
	Then for any integer $0\le k \le T/p$ and $0<\delta<1$, with probability at least $1-\delta$, it holds that 
\begin{align}
	&\max_{0\le t\le T/p} \norm{x^K(t) - X^K(pt)}
	\le H_{p,\delta}  e^{C_1 T}, \notag
\end{align}
where $H_{p,\delta}\triangleq \norm{x(0) - X(0)}_2 + C_1C_2Tp + 2C_3 \sqrt{p T\log \frac{2e T}{\delta p}}  + \eps T $.
\end{theorem}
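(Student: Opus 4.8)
The plan is to prove \Cref{thm:ode_approximation} by a standard Gronwall-type argument applied to the discrepancy $e(t) \triangleq \norm{x^K(t) - X^K(pt)}_2$, with the stochastic part of the discrete increment controlled by a vector-valued Azuma--Hoeffding bound. First I would fix an integer horizon $N \triangleq \lceil T/p\rceil$ and, for $0 \le t \le N$, write the telescoping identity
\begin{align*}
x^K(t) - X^K(pt) = \bigl(x(0) - X(0)\bigr) + \sum_{s=0}^{t-1}\Bigl(\bigl(x^K(s+1) - x^K(s)\bigr) - \bigl(X^K(p(s+1)) - X^K(ps)\bigr)\Bigr),
\end{align*}
where all summands vanish once $s \ge t_K$ so only the pre-stopping terms contribute. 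For $s < t_K$ I would decompose each summand into four pieces: (i) the \emph{iteration error} $x(s+1) - x(s) - p\, b_{k_s}(x(s))$, bounded by $p\eps$ by hypothesis~3; (ii) the \emph{martingale increment} $p\bigl(b_{k_s}(x(s)) - b(x(s))\bigr)$, which has conditional mean zero given $\mathcal{F}_s$ and norm at most $pC_3$ by hypothesis~2; (iii) the \emph{discretization error} $p\,b(x(s)) - \int_{ps}^{p(s+1)} b(X(r))\,dr$, which I would split further as $\int_{ps}^{p(s+1)}\bigl(b(x(s)) - b(X(ps))\bigr)dr$ plus $\int_{ps}^{p(s+1)}\bigl(b(X(ps)) - b(X(r))\bigr)dr$; and (iv) the Lipschitz-in-$X$ term from (iii), bounded using $\norm{\dot X} = \norm{b(X)} \le C_2$ (since $b = \frac1M\sum b_k$ and each $\norm{b_k}\le C_2$) so that $\norm{X(r) - X(ps)}_2 \le C_2(r - ps)$, giving a contribution $\le C_1 C_2 p^2$ per step.

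Next I would collect terms. The iteration errors sum to at most $\eps T$; the discretization errors of type (iv) sum to at most $C_1 C_2 T p$; the first part of (iii) contributes $C_1 p \sum_{s} e(s)$, which is the Gronwall feedback term; and the martingale increments sum to $p\sum_{s=0}^{t-1}\xi_s$ with $\xi_s = b_{k_s}(x^K(s)) - b(x^K(s))$ (extended by $0$ for $s \ge t_K$ so that the whole sum is a martingale adapted to $\mathcal{F}_*$ with bounded increments $\norm{p\xi_s}_2 \le pC_3$). Applying the vector form of Azuma--Hoeffding (\Cref{lem:azuma_vector}) with $\sigma = pC_3$ and $n = N$ steps, together with a union bound over the $N+1$ values of $t$ — or, more cleanly, noting that $\sup_{t\le N}\norm{\sum_{s<t} p\xi_s}_2$ is itself controlled by the maximal inequality implicit in Azuma (one can also just union-bound) — I would get that with probability at least $1-\delta$,
\begin{align*}
\sup_{0\le t\le N}\ \norm{\textstyle\sum_{s=0}^{t-1} p\xi_s}_2 \ \le\ 2 pC_3\sqrt{2N\log\tfrac{2eN}{\delta}}\ \le\ 2C_3\sqrt{pT\log\tfrac{2eT}{\delta p}},
\end{align*}
absorbing the $N \approx T/p$ substitution and harmless constants. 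On this high-probability event, the triangle inequality yields $e(t) \le H_{p,\delta} + C_1 p \sum_{s=0}^{t-1} e(s)$ for all $0 \le t \le N$, with $H_{p,\delta}$ exactly as defined in the statement.

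Finally I would invoke the discrete Gronwall inequality (\Cref{lem:discrete_gronwall}) with $C = H_{p,\delta}$, $L = C_1$, and step weights $a_n = p$ so that $T_t = pt \le T$, concluding $e(t) \le H_{p,\delta} e^{C_1 T}$ for every $0 \le t \le N$, which is the claimed bound (rephrased with $\max_{0\le t\le T/p}$ since $e$ is frozen after stopping and $N = \lceil T/p\rceil$). The main obstacle I anticipate is bookkeeping around the stopping time $t_K$: one must be careful that the martingale structure of $\sum p\xi_s$ is preserved after stopping (it is, since $t_K$ is an $\mathcal{F}_*$-stopping time and stopped martingales are martingales), that $b$ and $b_k$ are only evaluated at points of $K$ (guaranteed for $s < t_K$, and the summands are defined to vanish otherwise), and that the continuous flow $X_K$ is likewise stopped at $pt_K$ so the per-step discretization estimates only need $X$ to stay in $K$. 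Everything else is routine; no step should require genuinely new ideas beyond assembling the four error contributions and matching constants to the definition of $H_{p,\delta}$.
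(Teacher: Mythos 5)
Your proposal is correct and follows essentially the same route as the paper's proof: the same four-way decomposition of the error (iteration error $\le \eps T$, flow-discretization error $\le C_1C_2Tp$, Lipschitz feedback term $C_1 p\sum_s e(s)$, and the martingale term controlled by vector Azuma--Hoeffding plus a union bound), followed by the discrete Gronwall inequality, with the same handling of the stopping time $t_K$. No substantive differences.
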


\begin{proof}[Proof of \Cref{thm:ode_approximation}]
	Summing up \Cref{eq:general_continuous} and \Cref{eq:general_discrete}, for any $t\le t_K$, we have that 
	\begin{align}\label{eq:general_continuous_integral}
		X(pt)	- X(0) = \int_{\tau = 0}^{pt} b(X(\tau))\diff \tau, 
	\end{align}
and that 
	\begin{align}\label{eq:general_discrete_integral}
		x(t)	- x(0) = \sum_{t'=0}^{t-1} x(t' + 1) - x(t')
	\end{align}

Denote $\norm{x(t)- X(pt)}_2$ by $E_t$, we have that for $t\le t_K$,
\begin{align}
&E_t- E_0 \notag \\
\le &  \norm{ \int_{\tau = 0}^{pt} b(X(\tau))\diff \tau - \sum_{t'=0}^{t-1} (x(t'+1) - x(t'))}_2 \notag \\
	\le &  \underbrace{\norm{ \int_{\tau = 0}^{pt} b(X(\tau))\diff \tau - p\sum_{t'=0}^{t-1} b(X(pt'))}_2}_{(A)} 
	 +  \underbrace{\norm{  p\sum_{t'=0}^{t-1} b(X(pt')) - p\sum_{t'=0}^{t-1} b(x(t'))}_2}_{(B)} \notag \\
	 + & \underbrace{\norm{  p\sum_{t'=0}^{t-1} b(x(t')) - p\sum_{t'=0}^{t-1} b_{k_{t'}}(x(t'))}_2}_{(C)} 
 	 +  \underbrace{\norm{  p\sum_{t'=0}^{t-1} b_{k_{t'}}(x(t')) - \sum_{t'=0}^{t-1}(x(t'+1) - x(t')) }_2}_{(D)}. \label{eq:ode_approximation_main_decomposition_rhs}
\end{align}

Below we will proceed by bounding the four terms (A), (B), (C) and (D) in \Cref{eq:ode_approximation_main_decomposition_rhs}. 
\begin{enumerate}
	\item Note that for any $0\le \tau\le \tau' \le T$, we have that 
\begin{align*}
\norm{X(\tau)-X(\tau')}_2 = \norm{\int_{s=\tau} ^{\tau'} b(X(s))\diff s}_2 \le \int_{s=\tau} ^{\tau'} \norm{ b(X(s))}_2 \diff s \le (\tau'-\tau) C_2.	
\end{align*}
Thus, by $C_1$-lipschitzness of $b$,
\begin{align*}
(A)
= & \norm{ \int_{\tau = 0}^{pt} b(X(\tau)) - b(X(\lfloor \tau/p\rfloor p))\diff \tau }_2
\le  \int_{\tau = 0}^{pt} \norm{  b(X(\tau)) - b(X(\lfloor \tau/p\rfloor p)) }_2\diff \tau\\
 \le & C_1C_2 p^2t \le C_1C_2 pT.
\end{align*}

\item By definition of $E_t$ and $C_1$-lipschitzness of $b$, we have that $(B)\le C_1p\sum_{t'=0}^{t-1} E_{t'}$.

\item We claim that for any $0< \delta<1 $, we have that for probability at least $1-\delta$, it holds that 
\begin{align}
	(C)\le 2C_3 \sqrt{2pT\log\frac{2eT}{\delta p}}.
\end{align}

Below we prove our claim. We denote $  p\sum_{t'=0}^{\min(t,t_K)-1}  b(x(t')) -  p\sum_{t'=0}^{\min(t,t_K)-1} b_{k_{t'}}(x(t'))$ by $S_t$, which is a martingale with respect to filtration $\mathcal{F}_*$, since $t_K$ is a stopping time. Note 
\begin{align*}
\norm{S_t-S_{t+1}}_2 \le \max_{k\in [M],x\in K} \norm{b(x)-b_{k}(x)}_2 \le C_3,	
\end{align*}

by Azuma-Hoeffding's inequality (vector form, \Cref{lem:azuma_vector}), it holds that for any $0\le t\le T/p$ and $0\le \delta\le 1$, with probability at least $1-\delta$,
\begin{align*}
	\norm{S_t}_2 \le 2C_3p\sqrt{2t\log\frac{2e}{\delta}}.
\end{align*}

Applying an union bound on the above inequality over $t=0,\ldots,\lfloor T/p\rfloor-1$, we conclude that with probability at least $1-\delta$, $(C)\le 2C_3 p\sqrt{2T/p\log\frac{2eT}{\delta p}} = 2C_3\sqrt{2Tp\log\frac{2eT}{\delta p}}$.

\item We have that 
\begin{align*}
	(D) \le p\sum_{t'=0}^{t-1} \norm{b_{k_{t'}}(x(t')) -  \frac{x(t'+1) - x(t')}{p}} \le pt \eps \le \eps T.
\end{align*}
\end{enumerate}

Combining the above upper bounds for (A), (B), (C) and (D), we conclude that for any $0\le t\le \min(T/p, t_K)$,
\begin{align}\label{eq:to_apply_gronwall}
	E_t \le H_{p,\delta} + C_1p\sum_{t'=0}^{t-1}E_{t'}.
\end{align}

Applying the discrete gronwall inequality (\Cref{lem:discrete_gronwall}) on \Cref{eq:to_apply_gronwall}, we have that 
\begin{align*}
	E_t \le H_{p,\delta}	e^{C_1pt} \le H_{p,\delta}	e^{C_1T},
\end{align*}
which completes the proof.
\end{proof}

\begin{corollary}\label{cor:ode_approximation}
	If  $\min_{0\le \tau\le T}\dist(X(\tau), \R^D\setminus K) > H_{p,\delta}  e^{C_1 T}$, then with probability at least $1-\delta$,  $t_K > \lfloor T/p\rfloor $ and therefore
\begin{align}
	&\max_{0\le t\le T/p} \norm{x(t) - X(pt)}
	\le H_{p,\delta}  e^{C_1 T}. \notag
\end{align}
\end{corollary}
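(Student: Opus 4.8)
\textbf{Proof proposal for \Cref{cor:ode_approximation}.}

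The plan is to deduce the corollary directly from \Cref{thm:ode_approximation} by showing that, under the stated hypothesis on $\min_{0\le\tau\le T}\dist(X(\tau),\R^D\setminus K)$, the stopping time $t_K$ must exceed $\lfloor T/p\rfloor$ on the high-probability event provided by that theorem. The key observation is that the conclusion of \Cref{thm:ode_approximation} is stated in terms of the stopped processes $x^K$ and $X^K$, and our goal is to bootstrap this into a statement about the original (unstopped) process on the time window $[0,\lfloor T/p\rfloor]$.

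First I would invoke \Cref{thm:ode_approximation} with the given $\delta$: with probability at least $1-\delta$, we have $\max_{0\le t\le T/p}\norm{x^K(t)-X^K(pt)}_2 \le H_{p,\delta}e^{C_1 T}$. Call this event $\mathcal{E}$, and work on $\mathcal{E}$ for the rest of the argument. Now I would argue by contradiction that $t_K > \lfloor T/p\rfloor$. Suppose instead $t_K \le \lfloor T/p\rfloor$; then by definition of $t_K$ the iterate $x(t_K)$ lies outside $K$, i.e. $\dist(x(t_K),\R^D\setminus K) = 0$ (more precisely $x(t_K)\in\R^D\setminus K$). On the other hand, since $t_K\le\lfloor T/p\rfloor$, the stopped processes satisfy $x^K(t_K) = x(t_K)$ and $X^K(pt_K) = X(pt_K)$ with $pt_K\le T$, so event $\mathcal{E}$ gives $\norm{x(t_K) - X(pt_K)}_2 \le H_{p,\delta}e^{C_1 T}$. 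But then
\begin{align*}
\dist(X(pt_K),\R^D\setminus K) \le \norm{X(pt_K) - x(t_K)}_2 \le H_{p,\delta}e^{C_1 T},
\end{align*}
since $x(t_K)\in\R^D\setminus K$. Because $0\le pt_K\le T$, the left side is at least $\min_{0\le\tau\le T}\dist(X(\tau),\R^D\setminus K)$, which by hypothesis is strictly greater than $H_{p,\delta}e^{C_1 T}$ — a contradiction. Hence $t_K > \lfloor T/p\rfloor$ on $\mathcal{E}$.

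Once $t_K > \lfloor T/p\rfloor$ is established, the stopped and unstopped processes agree on the whole window: for every $0\le t\le \lfloor T/p\rfloor$ we have $\min(t,t_K)=t$, so $x^K(t)=x(t)$, and likewise $\min(pt,pt_K)=pt$ so $X^K(pt)=X(pt)$. Substituting into the bound from $\mathcal{E}$ yields $\max_{0\le t\le T/p}\norm{x(t)-X(pt)}_2 \le H_{p,\delta}e^{C_1 T}$, which is exactly the claimed conclusion. I do not anticipate a genuine obstacle here; the only point requiring a little care is the bookkeeping around the stopping time — specifically checking that on the event $t_K\le\lfloor T/p\rfloor$ the index $t_K$ is itself in the range $[0,T/p]$ so that $\mathcal{E}$ applies to it, and that $X_K(pt_K)=X(pt_K)$ by the definition $X_K(\tau)=X(\min(\tau,pt_K))$. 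Both are immediate from the definitions, so the proof is short.
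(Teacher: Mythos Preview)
Your proposal is correct and follows essentially the same approach as the paper's proof: both invoke \Cref{thm:ode_approximation} to control $\norm{x^K(t)-X^K(pt)}$ on a high-probability event, and then use the hypothesis that $X(\tau)$ stays strictly farther than $H_{p,\delta}e^{C_1T}$ from $\R^D\setminus K$ to force $t_K>\lfloor T/p\rfloor$, after which the stopped and unstopped processes coincide. The paper phrases the key step as a direct triangle-inequality bound $\dist(x^K(t),\R^D\setminus K)\ge \dist(X^K(pt),\R^D\setminus K)-\norm{x^K(t)-X^K(pt)}>0$ for all $t\le T/p$, whereas you argue by contradiction at the single index $t_K$; these are the same argument in different dress.
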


\begin{proof}[Proof of \Cref{cor:ode_approximation}]
	By \Cref{thm:ode_approximation}, we know with probability at least $1-\delta$, we have that 
	\begin{align}
	&\max_{0\le t\le T/p} \norm{x^K(t) - X^K(pt)}
	\le H_{p,\delta}  e^{C_1 T}. \notag
\end{align}

Therefore  $\dist(x^K(t),\R^D\setminus K) \ge \dist(X^K(pt),\R^D\setminus K)  - \dist (X^K(pt), x^K(t))>0$ for any $0\le t\le T/p$, which implies $x^K(t) \notin \R^D\setminus K$, or equivalently, $x^K(t)\in K$. Thus we conclude that $t_K\ge \lfloor T/p\rfloor$.
\end{proof}

\begin{corollary}
    \label{thm:deterministic_ode_approximation}
Suppose $M=1$ and there exist constants $C_2, \eps>0$ satisfying that 
\begin{enumerate}
	\item $\norm{b(x)}_2\le C_2$ for any $x\in K$;
	\item $\norm{b(x)- \frac{x(t+1) - x(t)}{p}}\le \eps$, for all $x\in K$.
\end{enumerate}
	Then for any $k\in\mathbb{N}$ such that $kp\le T$, it holds that 
\begin{align}
	&\max_{0\le t\le T/p} \norm{x^K(t) - X^K(pt)}
	\le H_{p}  e^{C_1 T}, \notag
\end{align}
where $H_{p}\triangleq \norm{x(0) - X(0)}_2 + C_1C_2Tp + \eps T $.

Therefore, similar to \Cref{cor:ode_approximation}, if  $\min_{0\le \tau\le T}\dist(X(\tau), \R^D\setminus K) > H_{p}  e^{C_1 T}$, then it holds that  $t_K > \lfloor T/p\rfloor $ and that 
\begin{align}
	&\max_{0\le t\le T/p} \norm{x(t) - X(pt)}
	\le H_{p,\delta}  e^{C_1 T}. \notag
\end{align}
\end{corollary}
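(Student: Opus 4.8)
The plan is to derive \Cref{thm:deterministic_ode_approximation} as a direct specialization of \Cref{thm:ode_approximation} to $M=1$. When $M=1$ there is only a single index, so $b_{k_t}=b_1=b$ for every $t$; in particular the iterates $\{x(t)\}$ are deterministic functions and the filtration $\mathcal{F}_*$ carries no information. Hypothesis~1 of the corollary is exactly hypothesis~1 of \Cref{thm:ode_approximation}, hypothesis~2 of the corollary is hypothesis~3 of \Cref{thm:ode_approximation} with the same $\eps$, and since $\norm{b_k(x)-b(x)}_2=0$ for all $x\in K$, hypothesis~2 of \Cref{thm:ode_approximation} holds with $C_3=0$.

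First I would invoke \Cref{thm:ode_approximation} with these constants and observe that the term $2C_3\sqrt{pT\log(2eT/(\delta p))}$ in $H_{p,\delta}$ vanishes identically, so $H_{p,\delta}$ collapses to $H_p=\norm{x(0)-X(0)}_2+C_1C_2Tp+\eps T$, independently of $\delta$. Since \Cref{thm:ode_approximation} guarantees $\max_{0\le t\le T/p}\norm{x^K(t)-X^K(pt)}\le H_p e^{C_1 T}$ with probability at least $1-\delta$ for every $\delta\in(0,1)$, and the asserted bound no longer depends on $\delta$, this event has probability $1$; being a statement about a deterministic sequence it holds outright. Equivalently, one may reread the proof of \Cref{thm:ode_approximation}: with $C_3=0$ the martingale $S_t$ bounding term $(C)$ is identically zero, so $(C)=0$ deterministically, no Azuma--Hoeffding is needed, and the discrete Gr\"onwall inequality \Cref{lem:discrete_gronwall} applied to $E_t\le H_p+C_1p\sum_{t'=0}^{t-1}E_{t'}$ yields the claim directly.

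For the second assertion I would repeat the argument of \Cref{cor:ode_approximation} verbatim with $H_{p,\delta}$ replaced by $H_p$: if $\min_{0\le\tau\le T}\dist(X(\tau),\R^D\setminus K)>H_p e^{C_1 T}$, then for every $0\le t\le T/p$ the first part gives $\dist(x^K(t),\R^D\setminus K)\ge \dist(X^K(pt),\R^D\setminus K)-\norm{x^K(t)-X^K(pt)}_2>0$, so $x^K(t)\in K$; hence $t_K>\lfloor T/p\rfloor$, which means $x^K(t)=x(t)$ and $X^K(pt)=X(pt)$ for all $0\le t\le T/p$, and the stated bound on $\max_{0\le t\le T/p}\norm{x(t)-X(pt)}$ follows.

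There is no substantive obstacle here; the statement is a clean corollary. The only point requiring care is a minor typo in the corollary's final display, which should read $H_p e^{C_1 T}$ rather than $H_{p,\delta}e^{C_1 T}$ ($\delta$ plays no role in the deterministic case), so I would write the final bound with $H_p$.
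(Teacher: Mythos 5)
Your proposal is correct and matches the paper's own proof: the paper likewise specializes \Cref{thm:ode_approximation} with $C_3=0$, notes that $H_{p,\delta}$ reduces to $H_p$, and concludes the deterministic bound by letting $\delta$ range over $(0,1]$ so the probability is exactly $1$, with the second assertion following as in \Cref{cor:ode_approximation}. Your observation that the final display should read $H_p e^{C_1 T}$ rather than $H_{p,\delta} e^{C_1 T}$ is also correct; that is a typo in the statement.
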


\begin{proof}[Proof of~\Cref{thm:deterministic_ode_approximation}]

For any $\delta\in (0,1]$, choosing $C_3 = 0$ and 
by~\Cref{thm:ode_approximation}, we have that
\begin{align*}
  \prob\left[  \max_{0\le t\le T/p} \norm{x^K(t) - X^K(pt)}
	\le H_{p}  e^{C_1 T}\right] \ge 1-\delta\,.
\end{align*}

Since $\delta$ can be any number in $(0,1]$, the above probability is exactly $1$. 
\end{proof}

We end this section with a summary of applications of~\Cref{thm:ode_approximation,thm:deterministic_ode_approximation} in our proofs (\Cref{tb:continuous_approximation}).

 \begin{table}[h]
 \centering
 \begin{tabular}{l|llll}
 Setting & $p$ & $b_k$  & $\eps$ \\
  \hline
 Full-batch SAM, Phase I (\Cref{lem:nsam_gf}) & $\eta$  & $-\nabla L(\cdot)$  & $\rho$\\
 Full-batch SAM, Phase II (\Cref{thm:nsamphase2}) & $\eta\rho^2$ & $-\partial \Phi(\cdot)\nabla \lambda_1(\nabla^2 L(\cdot))/2$ &   $\rho + \eta$ \\
 1-SAM, Phase I (\Cref{lem:1sam_gf}) & $\eta$ & $-\nabla L_k(\cdot)$   & $\rho$\\
 1-SAM, Phase II (\Cref{thm:1samphase2}) & $\eta\rho^2$ &  $-\partial \Phi(\cdot)\nabla \mathrm{Tr}(\nabla^2 L_k(\cdot))/2$   & $\rho + \eta$
 \end{tabular}
 \caption{Summary of applications of~\Cref{thm:ode_approximation,thm:deterministic_ode_approximation} in our analysis}\label{tb:continuous_approximation}
 \end{table}

\end{document}